\providecommand{\tabularnewline}{\\}
\providecommand{\algorithmname}{Algorithm}
\setlist[itemize]{leftmargin=1.5em}
\setlist[enumerate]{leftmargin=1.5em}
\DeclareMathOperator{\ind}{\mathds{1}}  
\numberwithin{equation}{section}
\definecolor{yxc}{RGB}{255,0,0}
\definecolor{yjc}{RGB}{125,0,0}
\definecolor{cm}{RGB}{0,0,200}
\definecolor{kzw}{RGB}{0,150,0}
\definecolor{byw}{RGB}{255,0,0}
\begin{document}
\theoremstyle{plain} \newtheorem{lemma}{\textbf{Lemma}} \newtheorem{proposition}{\textbf{Proposition}}\newtheorem{theorem}{\textbf{Theorem}}\setcounter{theorem}{0}
\newtheorem{corollary}{\textbf{Corollary}} \newtheorem{assumption}{\textbf{Assumption}}
\newtheorem{example}{\textbf{Example}} \newtheorem{definition}{\textbf{Definition}}
\newtheorem{condition}{\textbf{Condition}}
\newtheorem{fact}{\textbf{Fact}}\newtheorem{property}{Property}
\theoremstyle{definition}

\theoremstyle{remark}\newtheorem{remark}{\textbf{Remark}}\newtheorem{claim}{Claim}\newtheorem{conjecture}{Conjecture}
\title{Convex and Nonconvex Optimization Are Both Minimax-Optimal for Noisy
Blind Deconvolution under Random Designs\footnotetext{Author
	names are sorted alphabetically.}}
\author{Yuxin Chen\thanks{Department of Electrical and Computer Engineering, Princeton University, Princeton,
		NJ 08544, USA; Email: \texttt{yuxin.chen@princeton.edu}.} \and Jianqing Fan\thanks{Department of Operations Research and Financial Engineering, Princeton
		University, Princeton, NJ 08544, USA; Email: \texttt{\{jqfan, bingyanw,
			yulingy\}@princeton.edu}.} \and Bingyan Wang\footnotemark[2] \and Yuling Yan\footnotemark[2]}
		
\maketitle
\begin{abstract}
We investigate the effectiveness of convex relaxation and nonconvex
optimization in solving bilinear systems of equations under two different
designs (i.e.~a sort of random Fourier design and Gaussian design). Despite
the wide applicability, the theoretical understanding about these
two paradigms remains largely inadequate in the presence of random noise.
The current paper makes two contributions by demonstrating that: 
(1) a two-stage nonconvex algorithm attains minimax-optimal
accuracy within a logarithmic number of iterations.
(2) convex relaxation also achieves minimax-optimal statistical accuracy vis-à-vis
random noise.  Both results significantly improve upon the state-of-the-art theoretical guarantees.

\end{abstract}

\noindent \textbf{Keywords:} blind deconvolution, bilinear systems of equations, 
nonconvex optimization, convex relaxation, leave-one-out analysis

\tableofcontents{}

\section{Introduction and motivation}

Suppose we are interested in a pair of unknown objects $\bm{h}^{\star},\bm{x}^{\star}\in\mathbb{C}^{K}$
and are given a collection of $m$ nonlinear measurements taking the
following form
\begin{equation}
y_{j}=\boldsymbol{b}_{j}^{\mathsf{H}}\bm{h}^{\star}\boldsymbol{x}^{\star\mathsf{H}}\boldsymbol{a}_{j}+\xi_{j},\qquad1\leq j\leq m.\label{eq:measurement-BD}
\end{equation}
Here, $\bm{z}^{\mathsf{H}}$ denotes the conjugate transpose of a
vector $\bm{z}$, $\left\{ \xi_{j}\right\} $ stands for the additive
noise, whereas $\left\{ \bm{a}_{j}\right\} $ and $\left\{ \bm{b}_{j}\right\} $
are design vectors (or sampling vectors). The aim is to faithfully
reconstruct both $\bm{h}^{\star}$ and $\bm{x}^{\star}$ from the
above set of bilinear measurements.\footnote{This formulation is reminiscent of the problem of phase retrieval (or solving quadratic systems of equations). But the two problems turn out to be quite different due to the common assumptions imposed on the design vectors, as we shall elucidate in Section \ref{sec:Priorart}. } 

This problem of solving bilinear systems of equations spans multiple
domains in science and engineering, including but not limited to astronomy,
medical imaging, optics, and communication engineering 
\citep{campisi2016blind,jefferies1993restoration,wang1998blind,wunder2015sparse,tong1994blind,chan1998total}. 
Particularly worth emphasizing is the application of blind deconvolution
\citep{ahmed2013blind,kundur1996blind,ling2015self,ma2017implicit},
which involves recovering two unknown signals from their circular
convolution. As has been made apparent in the seminal work \citet{ahmed2013blind},
deconvolving two signals can be reduced to solving bilinear equations,
provided that the unknown signals lie within some \emph{a priori }known
subspaces; the interested reader is referred to \citet{ahmed2013blind}
for details. A variety of approaches have since been put forward for
blind deconvolution, most notable of which are convex relaxation and
nonconvex optimization \citep{ahmed2013blind,ling2017blind,li2019rapid,ma2017implicit,huang2018blind,ling2019regularized}.
Despite a large body of prior work tackling this problem, however,
where these algorithms stand vis-à-vis random noise remains unsettled,
which we seek to address in the current paper.

\subsection{Convex and nonconvex algorithms}

Among various algorithms that have been proposed for blind deconvolution,
two paradigms have received much attention: (1) convex relaxation
and (2) nonconvex optimization, both of which can be explained rather
simply. The starting point for both paradigms is a natural least-squares
formulation
\begin{equation}
\underset{\ensuremath{\boldsymbol{h},}\ensuremath{\boldsymbol{x}\in\mathbb{C}^{K}}}{\text{minimize}}\quad\sum_{j=1}^{m}\left|\boldsymbol{b}_{j}^{\mathsf{H}}\boldsymbol{hx}^{\mathsf{H}}\boldsymbol{a}_{j}-y_{j}\right|^{2},\label{eq:objncvx-LS}
\end{equation}
which is, unfortunately, highly nonconvex due to the bilinear structure
of the sampling mechanism. It then boils down to how to guarantee
a reliable solution despite the intrinsic nonconvexity.

\paragraph{Convex relaxation.} In order to tame nonconvexity, a
popular strategy is to lift the problem into higher dimension followed
by convex relaxation (namely, representing $\boldsymbol{hx}^{\mathsf{H}}$
by a matrix variable $\bm{Z}$ and then dropping the rank-1 constraint)
\citep{ahmed2013blind,ling2015self,ling2017blind}. More concretely,
we consider the following convex program:\footnote{As we shall see shortly, we keep a factor 2 here so as to better connect
the convex and nonconvex algorithms; it does not affect our main theoretical
guarantees at all. }
\begin{equation}
\underset{\ensuremath{\boldsymbol{Z}\in\mathbb{C}^{K\times K}}}{\mathrm{minimize}}\quad g\left(\boldsymbol{Z}\right)=\sum_{j=1}^{m}\left|\boldsymbol{b}_{j}^{\mathsf{H}}\boldsymbol{Z}\boldsymbol{a}_{j}-y_{j}\right|^{2}+2\lambda\left\Vert \boldsymbol{Z}\right\Vert _{*},\label{eq:objcvx}
\end{equation}
where $\lambda>0$ denotes the regularization parameter, and $\Vert\boldsymbol{Z}\Vert_{*}$
is the nuclear norm of $\boldsymbol{Z}$ (i.e.$\ $the sum of singular
values of $\bm{Z}$) and is known to be the convex surrogate for the
rank function. The rationale is rather simple: given that we seek
to recover a rank-1 matrix $\bm{Z}^{\star}=\bm{h}^{\star}\bm{x}^{\star\mathsf{H}}$,
it is common to enforce nuclear norm penalization to encourage the
rank-1 structure. In truth, this comes down to solving a nuclear-norm
regularized least squares problem in the matrix domain $\mathbb{C}^{K\times K}$. 

\paragraph{Nonconvex optimization.} Another popular paradigm maintains
all iterates in the original vector space (i.e.~$\mathbb{C}^{K}$)
and attempts solving the above nonconvex formulation or its variants
directly. The crucial ingredient is to ensure fast and reliable convergence
in spite of nonconvexity. While multiple variants of the nonconvex
formulation (\ref{eq:objncvx-LS}) have been studied in the literature
(e.g.~\citet{li2019rapid,ma2017implicit,charisopoulos2019composite,charisopoulos2019low,huang2018blind}),
the present paper focuses attention on the following ridge-regularized
least-squares problem:
\begin{align}
\underset{\ensuremath{\boldsymbol{h},}\ensuremath{\boldsymbol{x}\in\mathbb{C}^{K}}}{\text{minimize}}\quad f\left(\boldsymbol{h},\boldsymbol{x}\right) & =\sum_{j=1}^{m}\left|\boldsymbol{b}_{j}^{\mathsf{H}}\boldsymbol{hx}^{\mathsf{H}}\boldsymbol{a}_{j}-y_{j}\right|^{2}+\lambda\left\Vert \boldsymbol{h}\right\Vert _{2}^{2}+\lambda\left\Vert \boldsymbol{x}\right\Vert _{2}^{2},\label{eq:objncvx}
\end{align}
with $\lambda>0$ the regularization parameter. This choice of objective
function is crucial to the establishment of our main theorems as can
be seen later. Owing to the nonconvexity of (\ref{eq:objncvx}), one
needs to also specify which algorithm to employ in attempt to solve
this nonconvex problem. Our focal point is a two-stage optimization
algorithm: it starts with a rough initial guess $(\boldsymbol{h}^{0},\bm{x}^{0})$
computed by means of a spectral method, followed by Wirtinger gradient descent (GD)
that iteratively refines the estimates (to be made precise in (\ref{eq:gradient-descent-BD})).
At the end of each gradient iteration, we further rescale the sizes
of the two iterates $\boldsymbol{h}^{t}$ and $\bm{x}^{t}$, so as
to ensure that they have identical $\ell_{2}$ norm (see (\ref{eq:gradient-descent-BD-proj})). 
In truth, this balancing step helps stabilize the algorithm, while facilitating
analysis. The whole algorithm is summarized in Algorithm~\ref{alg:gd-BD-ncvx}. 

\begin{algorithm}[h]
	\caption{Nonconvex gradient descent with spectral initialization}

\label{alg:gd-BD-ncvx}\begin{algorithmic}

\STATE \textbf{{Input}}: $\left\{ y_{j}\right\} _{1\leq j\leq m}$,
$\left\{ \bm{a}_{j}\right\} _{1\leq j\leq m}$ and $\left\{ \bm{b}_{j}\right\} _{1\leq j\leq m}$.

\STATE \textbf{{Spectral initialization}}: let $\sigma_{1}\left(\bm{M}\right)$,
$\check{\bm{h}}^{0}$ and $\check{\bm{x}}^{0}$ denote respectively
the leading singular value, the leading left and the right singular
vectors of 
\begin{equation}
\bm{M}\coloneqq\sum_{j=1}^{m}y_{j}\bm{b}_{j}\bm{a}_{j}^{\mathsf{H}}.\label{eq:spectral-method-matrix}
\end{equation}
Set $\bm{h}^{0}=\sqrt{\sigma_{1}\left(\bm{M}\right)}\,\check{\bm{h}}^{0}$
and $\bm{x}^{0}=\sqrt{\sigma_{1}\left(\bm{M}\right)}\,\check{\bm{x}}^{0}$.

\STATE \textbf{{Gradient updates}}: \textbf{for }$t=0,1,\ldots,t_{0}-1$
\textbf{do}

\STATE \vspace{-1em}
 \begin{subequations}\label{subeq:gradient_update_ncvx}
\begin{align}
\left[\begin{array}{c}
\boldsymbol{h}^{t+1/2}\\
\boldsymbol{x}^{t+1/2}
\end{array}\right]= & \left[\begin{array}{c}
\boldsymbol{h}^{t}\\
\boldsymbol{x}^{t}
\end{array}\right]-\eta\left[\begin{array}{c}
\nabla_{\boldsymbol{h}}f\left(\boldsymbol{h}^{t},\boldsymbol{x}^{t}\right)\\
\nabla_{\boldsymbol{x}}f\left(\boldsymbol{h}^{t},\boldsymbol{x}^{t}\right)
\end{array}\right],\label{eq:gradient-descent-BD}\\
\left[\begin{array}{c}
\boldsymbol{h}^{t+1}\\
\boldsymbol{x}^{t+1}
\end{array}\right]= & \left[\begin{array}{c}
\sqrt{\frac{\left\Vert \bm{x}^{t+1/2}\right\Vert _{2}}{\left\Vert \bm{h}^{t+1/2}\right\Vert _{2}}}\boldsymbol{h}^{t+1/2}\\
\sqrt{\frac{\left\Vert \bm{h}^{t+1/2}\right\Vert _{2}}{\left\Vert \bm{x}^{t+1/2}\right\Vert _{2}}}\boldsymbol{x}^{t+1/2}
\end{array}\right],\label{eq:gradient-descent-BD-proj}
\end{align}
\end{subequations}where $\nabla_{\boldsymbol{h}}f(\cdot)$ and $\nabla_{\boldsymbol{x}}f(\cdot)$
represent the Wirtinger gradient (see \citet[Section 3.3]{li2019rapid}
and Appendix~\ref{subsec:Wirtinger-calculus}) of $f(\cdot)$ w.r.t.~$\bm{h}$
and $\bm{x}$, respectively. 

\end{algorithmic}
\end{algorithm}

\begin{table}[t]
\caption{Comparison of our theoretical guarantees of blind deconvolution under Fourier design
to prior theory, where we hide all logarithmic factors. Here, the
Euclidean estimation error refers to $\|\bm{Z}_{\mathsf{cvx}}-\bm{h}^{\star}\bm{x}^{\star\mathsf{H}}\|_{\mathrm{F}}$
for the convex case and $\|\bm{h}_{\mathsf{ncvx}}\bm{x}_{\mathsf{ncvx}}^{\mathsf{H}}-\bm{h}^{\star}\bm{x}^{\star\mathsf{H}}\|_{\mathrm{F}}$
for the nonconvex case, respectively.}
\label{table:comparison} \vspace{0.8em}
 \centering %
\begin{tabular}{c|c|c|c|c}
\hline 
$\vphantom{2_{2_{2_{2}}}^{2^{2^{2}}}}$ & Sample & \multirow{2}{*}{Algorithm} & Euclidean error & Computational\tabularnewline
 & complexity &  & in the noisy case & complexity\tabularnewline
\hline 
\hline 
\citet{ahmed2013blind} & $\mu^{2}K$ & convex relaxation & $\sigma\sqrt{Km}$\vphantom{$\frac{1^{7^{7^{7}}}}{1^{7^{7^{7}}}}$}\hspace{-0.4em} & ---\tabularnewline
\hline 
\citet{ling2017blind} & $\mu^{2}K$ & convex relaxation & $\sigma\sqrt{Km}$\vphantom{$\frac{1^{7^{7^{7}}}}{1^{7^{7^{7}}}}$}\hspace{-0.4em} & ---\tabularnewline
\hline 
\textbf{This paper} & $\mu^{2}K\vphantom{2_{2_{2_{2}}}^{2^{2^{2}}}}$ & convex relaxation & $\sigma\sqrt{K}$\vphantom{$\frac{1^{7^{7^{7}}}}{1^{7^{7^{7}}}}$}\hspace{-0.4em} & ---\tabularnewline
\hline 
\hline 
\citet{li2019rapid} & $\mu^{2}K\vphantom{2_{2_{2_{2}}}^{2^{2^{2}}}}$ & nonconvex regularized GD & $\sigma\sqrt{K}$\vphantom{$\frac{1^{7^{7^{7}}}}{1^{7^{7^{7}}}}$}\hspace{-0.4em} & $mK^{2}$\tabularnewline
\hline 
\citet{huang2018blind} & $\mu^{2}K\vphantom{2_{2_{2_{2}}}^{2^{2^{2}}}}$ & Riemannian steepest descent & $\sigma\sqrt{K}$\vphantom{$\frac{1^{7^{7^{7}}}}{1^{7^{7^{7}}}}$}\hspace{-0.4em} & $mK^{2}$\tabularnewline
\hline 
\citet{ma2017implicit} & $\mu^{2}K\vphantom{2_{2_{2_{2}}}^{2^{2^{2}}}}$ & nonconvex vanilla GD & --- & $mK$ (noiseless)\tabularnewline
\hline 
\multirow{2}{*}{\textbf{This paper}} & \multirow{2}{*}{$\mu^{2}K\vphantom{2_{2_{2_{2}}}^{2^{2^{2}}}}$} & nonconvex GD & \multirow{2}{*}{$\sigma\sqrt{K}$\vphantom{$\frac{1^{7^{7^{7}}}}{1^{7^{7^{7}}}}$}\hspace{-0.4em}} & \multirow{2}{*}{$mK$}\tabularnewline
 &  & (with balancing operations) &  & \tabularnewline
\hline 
\end{tabular}
\end{table}

\subsection{Inadequacy of prior theory}

The aforementioned two algorithms have found
solid theoretical support under certain randomized sampling mechanisms.
Informally, imagine that the $\bm{a}_{j}$'s and the $\bm{b}_{j}$'s
follow standard Gaussian and partial Fourier designs, respectively,
and that each noise component $\xi_{j}$ is a zero-mean sub-Gaussian
random variable with variance at most $\sigma^{2}$ (more precise
descriptions are deferred to Assumption \ref{assumptions:models}).
The following performance guarantees have been established in prior theory.  
\begin{itemize}
\item
Convex relaxation is guaranteed to return
an estimate of $\bm{h}^{\star}\bm{x}^{\star\mathsf{H}}$ with an Euclidean
estimation error bounded by $\sigma\sqrt{Km}$ (modulo some log factor)
\citep{ahmed2013blind,ling2017blind}. This, however, exceeds the
minimax lower bound (to be presented in Theorem \ref{thm:LB}) by
at least a factor of $\sqrt{m}$. 

\item In comparison, nonconvex algorithms
are capable of achieving nearly minimax optimal statistical
accuracy, with a computational complexity on the order of $mK^2$ (up
to some log factor) \citep{li2019rapid,huang2018blind}. 
Here, the computational complexity  encompasses the cost of spectral
initialization in Algorithm \ref{alg:gd-BD-ncvx} if implemented
by power methods \citep{golub2013matrix}. This computational cost, however, could be an order of $K$ times larger than the cost taken to read the data. 
\end{itemize}
See Table
\ref{table:comparison} for a more complete summary of existing theoretical results for this scenario. 

These prior results, while offering rigorous theoretical underpinnings
for the two popular algorithms, lead to several natural questions:
\begin{enumerate}
	\item (Improving statistical guarantees) \emph{Is the statistical accuracy of convex relaxation inherently suboptimal when coping with
random noise?}
\item (Improving computational complexity) \emph{Is it possible to further accelerate the nonconvex algorithm
without compromising statistical accuracy? }
\end{enumerate}
The present paper is devoted to addressing these two questions. Informally, we aim to demonstrate that (1) convex relaxation achieves minimax-optimal statistical accuracy in the face of random noise, and (2) nonconvex optimization converges to a nearly minimax-optimal solution in time proportional to that taken to read the data.

\subsection{Paper organization and notation\label{subsec:Notation}}
The outline of the paper is as follows. Section~\ref{sec:Main-results} gives
the formal statement of the model assumptions and presents our main results for
two different designs. Section~\ref{sec:Priorart} reviews previous
literature on blind deconvolution. Section~\ref{sec:Numerical-experiments}
presents numerical experiments that corroborate our theoretical results. 
We conclude the paper in Section \ref{sec:Discussion}
by pointing out several future directions. All the proof details
are deferred to the Appendix. 

Throughout the paper, we shall often use the vector notation $\bm{y}:=[y_{1},\cdots,y_{m}]^{\top}$
and $\bm{\xi}:=[\xi_{1},\cdots,\xi_{m}]^{\top}\in\mathbb{C}^{m}$.
For any vector $\bm{v}$ and any matrix $\bm{M}$, we denote by $\bm{v}^{\mathsf{H}}$
and $\bm{M}^{\mathsf{H}}$ their conjugate transpose, respectively.
The notation $\|\bm{v}\|_{2}$ represents the $\ell_{2}$ norm of
an vector $\bm{v}$, and we let $\left\Vert \bm{M}\right\Vert $,
$\left\Vert \bm{M}\right\Vert _{\mathrm{F}}$ and $\left\Vert \bm{M}\right\Vert _{*}$
represent the spectral norm, the Frobenius norm and the nuclear norm
of $\bm{M}$, respectively. For a function $f(\bm{h},\bm{x})$, we
use $\nabla_{\bm{h}}f(\bm{h},\bm{x})$ (resp.~$\nabla_{\bm{x}}f(\bm{h},\bm{x})$)
to denote its Wirtinger gradient (see \citet[Section 3.3]{li2019rapid}
for detailed introduction) of $f(\cdot)$ with respect to $\bm{h}$
(resp.~$\bm{x}$). Further, we define $\nabla f(\bm{h},\bm{x})=[\nabla_{\bm{h}}f(\bm{h},\bm{x})^{\top},\nabla_{\bm{x}}f(\bm{h},\bm{x})^{\top}]^{\top}$.
For any subspace $T$, we use $T^{\bot}$ to denote its orthogonal
complement, and $\mathcal{P}_{T}(\bm{M})$ the Euclidean projection
of a matrix $\bm{M}$ onto $T$. 
Moreover, we adopt $f_{1}(m,K)\lesssim f_{2}(m,K)$
or $f_{1}(m,K)=O(f_{2}(m,K))$ to indicate that there exists some
constant $C_{1}>0$ such that $f_{1}(m,K)\leq Cf_{2}(m,K)$ holds
for all $(m,K)$ that are sufficiently large, and use $f_{1}(m,K)\gtrsim f_{2}(m,K)$
to indicate that $f_{1}(m,K)\geq C_{2}f_{2}(m,K)$ holds for some
constant $C>0$ whenever $(m,K)$ are sufficiently large. The notation
$f_{1}(m,K)\asymp f_{2}(m,K)$ means that $f_{1}(m,K)\lesssim f_{2}(m,K)$
and $f_{1}(m,K)\gtrsim f_{2}(m,K)$ hold simultaneously.  In our proof, $C$ serves as a universal constant
whose value might change from line to line.

\section{Main results\label{sec:Main-results}}

In this section, we present our theoretical guarantees for the above
two algorithms for two types of random designs commonly studied in the blind deconvolution literature.

\subsection{Blind deconvolution under random Fourier designs \label{subsec:Fourier-design}}


\paragraph{Model and assumptions.} 

We start by introducing a sort of random Fourier designs motivated by practical engineering applications (see \citet{ahmed2013blind,li2019rapid}). 


\begin{assumption}
	\label{assumptions:models}
Let $\bm{A}:=\left[\boldsymbol{a}_{1},\boldsymbol{a}_{2},\cdots,\boldsymbol{a}_{m}\right]^{\mathsf{H}}\in\mathbb{C}^{m\times K}$ 
and $\boldsymbol{B}:=\left[\boldsymbol{b}_{1},\boldsymbol{b}_{2},\cdots,\boldsymbol{b}_{m}\right]^{\mathsf{H}}\in\mathbb{C}^{m\times K}$
be matrices obtained by concatenating the design vectors.
\begin{itemize}
\item The entries of $\bm{A}$ are independently drawn from standard complex
Gaussian distributions, namely, $\boldsymbol{a}_{j}\overset{\mathrm{i.i.d.}}{\sim}\mathcal{N}\left(\boldsymbol{0},\frac{1}{2}\bm{I}_{K}\right)+i\mathcal{N}\left(\boldsymbol{0},\frac{1}{2}\bm{I}_{K}\right)$
with $i$ the imaginary unit;
\item The design matrix $\bm{B}$ consists of the first $K$ columns of
the unitary discrete Fourier transform (DFT) matrix $\boldsymbol{F}\in\mathbb{C}^{m\times m}$
obeying $\bm{F}\bm{F}^{\mathsf{H}}=\boldsymbol{I}_{m}$;
\item The noise components $\{\xi_{i}\}$ are independent zero-mean sub-Gaussian
random variables with sub-Gaussian norm obeying $\|\xi_{i}\|_{\psi_{2}}\leq\sigma$
($1\leq i\leq m)$. See \citet[Definition 5.7]{vershynin2010introduction}
for the definition of $\|\cdot\|_{\psi_{2}}$.
\end{itemize}
\end{assumption}
\begin{remark} As can be easily verified, we have $\left\Vert \boldsymbol{b}_{j} \right\Vert _{2} = \sqrt{K/m}$ ($1\leq j\leq m$) under this model. \end{remark}
	It is worth noting that the Fourier design is largely motivated by the duality relation between convolution in the time domain and multiplication in the frequency domain, which is closely related to practical scenarios; see  \citet{ahmed2013blind} for details. In fact, the model described in Assumption \ref{assumptions:models} has been the focus of a number of recent papers including \cite{ahmed2013blind, li2019rapid, ma2017implicit, huang2018blind, ling2019regularized, ling2016simultaneous, ling2017blind}, to name a few. 

In addition, as pointed out by prior works \citet{ahmed2013blind,li2019rapid,ma2017implicit},
the following incoherence condition --- which captures the interplay
between the truth and the measurement mechanism --- plays a crucial
role in enabling tractable estimation schemes. 

\begin{definition}[Incoherence]Define the incoherence parameter
$\mu$ as the smallest number obeying
\begin{equation}
\left|\boldsymbol{b}_{j}^{\mathsf{H}}\boldsymbol{h}^{\star}\right|
\leq \frac{\mu}{\sqrt{K}} \left\Vert \boldsymbol{b}_{j} \right\Vert _{2} \left\Vert \boldsymbol{h}^{\star}\right\Vert _{2} = \frac{\mu}{\sqrt{m}}\left\Vert \boldsymbol{h}^{\star}\right\Vert _{2},\qquad1\leq j\leq m.
	\label{eq:incoherence-condition}
\end{equation}
%
\end{definition}

\begin{remark}
Comparing the Cauchy-Schwarz inequality $\left|\boldsymbol{b}_{j}^{\mathsf{H}}\boldsymbol{h}^{\star}\right|
	\leq \left\Vert \boldsymbol{b}_{j} \right\Vert _{2} \left\Vert \boldsymbol{h}^{\star}\right\Vert _{2}$ with \eqref{eq:incoherence-condition} reveals that $\mu\leq \sqrt{K}$.   
It is noteworthy that our theory does not require $\mu$ to be small constant; in fact, all of our theoretical findings allow $\mu$ to grow with the problem dimension.
\end{remark}

\noindent Informally, a small incoherence parameter indicates that
the truth is not quite aligned with the sampling basis. As a concrete example, 
when $\bm{h}^{\star}$ is randomly generated (i.e.~$\bm{h}^{\star}\sim\mathcal{N}(\bm{0},\bm{I}_{K})$),
it can be easily verified that the incoherence parameter $\mu$ is, with high probability, at most
$O(\sqrt{\log m})$.  In fact, this type of
condition is widely proposed in statistical literature on various
problem besides blind deconvolution, such as \citet{candes2009exact,ma2017implicit,chen2019noisy}
on matrix completion and \citet{candes2011robust, chandrasekaran2011rank, chen2020bridging}
on robust principal component analysis. The important role of this incoherence
parameter will also be confirmed by our numerical simulations momentarily (cf.~Figure \ref{fig:incoherence}).


\paragraph{Main theory.} We are now positioned to state our main
theory for this setting, followed by discussing the implications of our theory.
Towards this end, we begin with the statistical guarantees for the
convex formulation. Denote the minimizer of (\ref{eq:objcvx}) by
$\bm{Z}_{\mathsf{cvx}}$. Then our result is this:

\begin{theorem}[Convex relaxation]\label{theorem:cvx} Set $\lambda=C_{\lambda}\sigma\sqrt{K\log m}$
for some large enough constant $C_{\lambda}>0$. Assume

\begin{equation}
m\geq C\mu^{2}K\log^{9}m\qquad\text{and}\qquad\sigma\sqrt{K\log^{5}m}\leq c\left\Vert \boldsymbol{h}^{\star}\boldsymbol{x}^{\mathsf{\star H}}\right\Vert _{\mathrm{F}}\label{eq:assumptions-sample-noise}
\end{equation}
for some sufficiently large (resp.~small) constant $C>0$ (resp.~$c>0$).
Then under Assumption \ref{assumptions:models} and the incoherence
condition (\ref{eq:incoherence-condition}), one has with probability
exceeding $1-O\left(m^{-3}+me^{-K}\right)$ that
\begin{align}
\left\Vert \boldsymbol{Z}_{\mathsf{cvx}}-\boldsymbol{h}^{\star}\boldsymbol{x}^{\mathsf{\star H}}\right\Vert \leq\left\Vert \boldsymbol{Z}_{\mathsf{cvx}}-\boldsymbol{h}^{\star}\boldsymbol{x}^{\mathsf{\star H}}\right\Vert _{\mathrm{F}} & \lesssim\sigma\sqrt{K\log m}.\label{eq:main1}
\end{align}
In addition, the bounds in (\ref{eq:main1}) continue to hold if $\boldsymbol{Z}_{\mathsf{cvx}}$
is replaced by $\boldsymbol{Z}_{\mathsf{cvx,}1}\coloneqq\arg\min_{\boldsymbol{Z}:\mathsf{rank}\left(\boldsymbol{Z}\right)\leq1}\left\Vert \boldsymbol{Z}-\boldsymbol{Z}_{\mathsf{cvx}}\right\Vert _{\mathrm{F}}$
(i.e.~the best rank-1 approximation of $\boldsymbol{Z}_{\mathsf{cvx}}$).\end{theorem}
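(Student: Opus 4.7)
The plan is to couple $\bm{Z}_{\mathsf{cvx}}$ to the nonconvex iterate $\bm{Z}^{t} := \bm{h}^{t} \bm{x}^{t\mathsf{H}}$ produced by Algorithm~\ref{alg:gd-BD-ncvx} after $t_0 = O(\log m)$ gradient steps, using that iterate as a statistical reference point rather than $\bm{h}^\star \bm{x}^{\star\mathsf{H}}$ directly. A separate leave-one-out analysis of Algorithm~\ref{alg:gd-BD-ncvx} (carried out in parallel and stated as the companion nonconvex theorem) certifies that $\|\bm{Z}^{t} - \bm{h}^\star \bm{x}^{\star\mathsf{H}}\|_{\mathrm{F}} \lesssim \sigma \sqrt{K \log m}$ with the probability claimed. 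By the triangle inequality it then suffices to show $\|\bm{Z}_{\mathsf{cvx}} - \bm{Z}^{t}\|_{\mathrm{F}} \lesssim \sigma \sqrt{K \log m}$. This ``convex-through-nonconvex'' bridge is precisely the mechanism that avoids the loose Frobenius-level noise conversions responsible for the $\sqrt{m}$ slack in \cite{ahmed2013blind,ling2017blind}.

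The linchpin is to show that $\bm{Z}^{t}$ is an approximate global minimizer of the convex objective $g$. Define $\bm{G} := \sum_{j=1}^{m}(\bm{b}_j^{\mathsf{H}} \bm{Z}^{t} \bm{a}_j - y_j)\bm{b}_j \bm{a}_j^{\mathsf{H}}$ and let $(\bm{u}^{t}, \bm{v}^{t})$ denote the unit-norm directions of $(\bm{h}^{t}, \bm{x}^{t})$. The Wirtinger stationarity of $f$ combined with the balancing step $\|\bm{h}^{t}\|_2 = \|\bm{x}^{t}\|_2$ yields
\[
\bm{G}\bm{v}^{t} \approx -\lambda \bm{u}^{t}, \qquad \bm{G}^{\mathsf{H}}\bm{u}^{t} \approx -\lambda \bm{v}^{t},
\]
which together force $\mathcal{P}_T(\bm{G}) \approx -\lambda \bm{u}^{t} \bm{v}^{t\mathsf{H}}$, where $T$ is the tangent space of rank-one matrices at $\bm{Z}^{t}$. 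This is exactly the tangent-space component mandated by the convex KKT inclusion. Supplementing it with a spectral-norm bound $\|\mathcal{P}_{T^\perp}(\bm{G})\|_{\mathrm{op}} \lesssim \lambda$ upgrades $\bm{Z}^{t}$ to an approximate KKT point of $g$.

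To transfer approximate optimality to Frobenius closeness, I will set $\bm{\Delta} := \bm{Z}_{\mathsf{cvx}} - \bm{Z}^{t}$ and manipulate $g(\bm{Z}_{\mathsf{cvx}}) \leq g(\bm{Z}^{t})$ together with the approximate subgradient certificate. After expanding the quadratic term and invoking the spectral-nuclear duality $|\langle \mathcal{P}_{T^\perp}(\bm{G}), \bm{\Delta}\rangle| \leq \|\mathcal{P}_{T^\perp}(\bm{G})\|_{\mathrm{op}} \|\mathcal{P}_{T^\perp}(\bm{\Delta})\|_*$, one obtains an inequality of the form
\[
\sum_{j=1}^{m} |\bm{b}_j^{\mathsf{H}} \bm{\Delta} \bm{a}_j|^2 + \lambda \|\mathcal{P}_{T^\perp}(\bm{\Delta})\|_{*} \lesssim \lambda \|\mathcal{P}_T(\bm{\Delta})\|_{\mathrm{F}},
\]
pinning $\bm{\Delta}$ into a restricted cone of nearly-low-rank matrices. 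The blind-deconvolution restricted isometry on this cone, available under the sample complexity $m \gtrsim \mu^2 K \log^9 m$, then supplies $\sum_j |\bm{b}_j^{\mathsf{H}} \bm{\Delta} \bm{a}_j|^2 \gtrsim \|\bm{\Delta}\|_{\mathrm{F}}^2$, which chains to $\|\bm{\Delta}\|_{\mathrm{F}} \lesssim \lambda = C_\lambda \sigma \sqrt{K \log m}$. The operator-norm bound is immediate from $\|\cdot\|_{\mathrm{op}} \leq \|\cdot\|_{\mathrm{F}}$, and the rank-one truncation claim follows from the best-rank-one-approximation inequality $\|\bm{Z}_{\mathsf{cvx},1} - \bm{h}^\star \bm{x}^{\star\mathsf{H}}\|_{\mathrm{F}} \leq 2 \|\bm{Z}_{\mathsf{cvx}} - \bm{h}^\star \bm{x}^{\star\mathsf{H}}\|_{\mathrm{F}}$, which uses only that $\bm{h}^\star \bm{x}^{\star\mathsf{H}}$ is itself rank one.

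The hardest step is the sharp spectral bound $\|\mathcal{P}_{T^\perp}(\bm{G})\|_{\mathrm{op}} \lesssim \lambda$ at rate $\sigma \sqrt{K \log m}$. Splitting
\[
\bm{G} = -\sum_{j=1}^{m} \xi_j \bm{b}_j \bm{a}_j^{\mathsf{H}} + \sum_{j=1}^{m} \bm{b}_j^{\mathsf{H}}(\bm{Z}^{t} - \bm{h}^\star \bm{x}^{\star\mathsf{H}})\bm{a}_j \cdot \bm{b}_j \bm{a}_j^{\mathsf{H}},
\]
matrix Bernstein under the Gaussian/DFT design disposes of the noise piece at precisely the desired rate. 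The deterministic residual, however, couples the random iterate $\bm{Z}^{t}$ with all $m$ design pairs simultaneously, and a naive bound would pay the $\sqrt{m}$ factor that plagued prior analyses. Decoupling $\bm{Z}^{t}$ from each $(\bm{a}_\ell, \bm{b}_\ell)$ via leave-one-out surrogates $\bm{Z}^{t,(\ell)}$ --- and propagating this decoupling through all $O(\log m)$ gradient iterations --- is the key maneuver, and is what drives the polylogarithmic factor $\log^9 m$ in the sample complexity hypothesis.
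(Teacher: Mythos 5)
Your roadmap — bridging $\bm{Z}_{\mathsf{cvx}}$ to a nonconvex iterate, certifying approximate KKT optimality for $g$ via the Wirtinger stationarity of $f$, controlling $\|\mathcal{P}_{T^\perp}(\bm{G})\|$, and then invoking a restricted isometry to transfer closeness to the Frobenius norm — is exactly the structure of the paper's proof (Lemma~\ref{lemma:proximity-convex-ncvx} plays the KKT-transfer role, Lemma~\ref{lem:useful-3} builds the subgradient certificate, Lemma~\ref{lemma:T-uniform-mean} bounds the deterministic residual, Lemma~\ref{lemma:inj} supplies the injectivity on $T$). However, your plan has a genuine gap in the two objects you feed into this machinery, and the gap is precisely the part the paper flags as its key technical maneuver.

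First, you propose to use the spectrally initialized iterate $\bm{Z}^{t}$ of Algorithm~\ref{alg:gd-BD-ncvx} after $t_0 = O(\log m)$ steps. The sharp spectral-norm bound $\|\mathcal{P}_{T^\perp}(\bm{G})\|\lesssim\lambda$ requires, through the decomposition $\bm{h}\bm{x}^{\mathsf{H}}-\bm{h}^{\star}\bm{x}^{\star\mathsf{H}} = \bm{h}^{\star}\bm{\Delta}_{\bm{x}}^{\mathsf{H}}+\bm{\Delta}_{\bm{h}}\bm{x}^{\star\mathsf{H}}+\bm{\Delta}_{\bm{h}}\bm{\Delta}_{\bm{x}}^{\mathsf{H}}$, the \emph{coordinatewise} incoherence $\max_j|\bm{b}_j^{\mathsf{H}}(\widetilde{\bm{h}}^{t}-\bm{h}^{\star})|\lesssim\sigma$. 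Your proposal to "decouple $\bm{Z}^{t}$ from each $(\bm{a}_\ell,\bm{b}_\ell)$ via leave-one-out surrogates" does not deliver this: leave-one-out removes the statistical dependence on $\bm{a}_\ell$, but the $\bm{b}_\ell$ are deterministic Fourier rows, and what Algorithm~\ref{alg:gd-BD-ncvx} supplies is only $\max_j|\bm{b}_j^{\mathsf{H}}\widetilde{\bm{h}}^{t}|\lesssim\frac{\mu}{\sqrt m}\log^2 m+\sigma$, which, upon differencing against $|\bm{b}_j^{\mathsf{H}}\bm{h}^{\star}|\leq\frac{\mu}{\sqrt m}$, leaves an irreducible $\frac{\mu\log^2 m}{\sqrt m}$ term that dominates $\sigma$ in the low-noise regime and wrecks the $\lesssim\lambda$ bound. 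The paper gets around this by introducing a separate, \emph{truth-initialized} auxiliary trajectory (Algorithm~\ref{alg:gd-bd-ncvx-2}): because it starts at $\bm{h}^0=\bm{h}^{\star}$, the drift in each $\bm{b}_j$-direction is controllable at the $O(\sigma)$ scale for all iterations (Eq.~\eqref{eq:intro-incohc}), and the paper explicitly remarks that "the current proof does not work if the algorithm is spectrally initialized."

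Second, $O(\log m)$ iterations bring $\bm{z}^{t}$ to within $\sigma\sqrt{K\log m}$ of the truth but give no better than $\|\nabla f(\bm{z}^{t})\|\lesssim\sigma\sqrt{K\log m}$. Feeding this into the KKT transfer pins $\bm{\Delta}$ only into the cone $\|\mathcal{P}_{T^\perp}(\bm{\Delta})\|_{*}\lesssim\|\mathcal{P}_{T}(\bm{\Delta})\|_{\mathrm{F}}$. But the lower bound $\|\mathcal{A}(\bm{\Delta})\|_2\gtrsim\|\bm{\Delta}\|_{\mathrm{F}}$ you then invoke needs to beat $\|\mathcal{A}(\mathcal{P}_{T^\perp}\bm{\Delta})\|_2\lesssim\sqrt{K\log K}\,\|\mathcal{P}_{T^\perp}\bm{\Delta}\|_{\mathrm{F}}$, which is hopeless when $\|\mathcal{P}_{T^\perp}\bm{\Delta}\|_{\mathrm{F}}$ is merely comparable to $\|\mathcal{P}_{T}\bm{\Delta}\|_{\mathrm{F}}$; a genuine cone-RIP for this measurement ensemble is not available, and Lemma~\ref{lemma:inj} only gives RIP on $T$ itself. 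The paper instead runs the auxiliary trajectory for $t_0=m^{20}$ iterations and extracts an iterate with $\|\nabla f\|\lesssim\lambda/m^{10}$ via a descent-lemma telescoping argument (Claim~\ref{claim:decreasingobj}), which forces $\|\mathcal{P}_{T^\perp}(\bm{\Delta})\|_{*}\lesssim m^{-10}\|\mathcal{P}_{T}(\bm{\Delta})\|_{\mathrm{F}}$ — small enough to overwhelm the $\sqrt{K\log K}$ operator norm of $\mathcal{A}$. In short, your bridge is the right bridge, but the endpoint you chose on the nonconvex side is not a point from which it can be built.
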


\begin{remark}
In \eqref{eq:assumptions-sample-noise}, $\log^9 m$ and $\log^5 m$ appear due to our decoupling arguments. We believe it would be difficult to get rid of the logarithmic factors completely using the current analyis framework, although it might be possible to reduce the power of the logarithmic factors slightly by means of more refined analysis. 
\end{remark}
Our proof  for this theorem, whose details are postponed to  Appendix \ref{sec:Proof-outline-for-theorem-cvx},  is largely inspired by the idea of connecting convex and nonconvecx optimization as proposed by \citet{chen2019noisy, chen2020bridging} for noisy matrix completion and robust principal component analysis respectively. 
Note, however, that implementing this high-level idea requires drastically different analysis from \citet{chen2019noisy, chen2020bridging},
primarily due to the absence of randomness in the highly structured Fourier design matrix $\bm{B}$. For instance, in contrast to prior works that were built upon a ``leave-one-out'' analysis framework to decouple statistical dependency,  simply ``leaving out'' one row of $\bm{B}$ in the blind deconvolution analysis does not lead to immediate statistical benefits due to the deterministic nature of $\bm{B}$. Consequently, considerably more delicate analyses are needed in order to enable fine-grained statistical analysis. 

%

Next, we turn to theoretical guarantees for the nonconvex algorithm
described in Algorithm \ref{alg:gd-BD-ncvx}. For notational convenience,
we define
\begin{equation}
\boldsymbol{z}^{t}:=\left[\begin{array}{c}
\boldsymbol{h}^{t}\\
\boldsymbol{x}^{t}
\end{array}\right]\qquad\text{and}\qquad\boldsymbol{z}^{\star}:=\left[\begin{array}{c}
\boldsymbol{h}^{\star}\\
\boldsymbol{x}^{\star}
\end{array}\right]\label{eq:defn-zt-zstar}
\end{equation}
throughout this paper. Before presenting the results, we make note of
 an unavoidable scaling ambiguity issue underlying this model.
Given that $\boldsymbol{h}^{\star}$ and $\boldsymbol{x}^{\star}$
are only identifiable up to global scaling (meaning that one cannot
hope to distinguish $(\alpha\bm{h}^{\star},\frac{1}{\overline{\alpha}}\bm{x}^{\star})$
from $(\bm{h}^{\star},\bm{x}^{\star})$ given only bilinear measurements),
we shall measure the discrepancy between $\boldsymbol{z}^{\star}$
and any point $\boldsymbol{z}:=\footnotesize\left[\begin{array}{c}
\boldsymbol{h}\\
\boldsymbol{x}
\end{array}\right]$ through the following metric:
\begin{equation}
\mathsf{dist}\left(\boldsymbol{z},\boldsymbol{z}^{\star}\right):=\min_{\alpha\in\mathbb{C}}\sqrt{\left\Vert \frac{1}{\overline{\alpha}}\boldsymbol{h}-\boldsymbol{h}^{\star}\right\Vert _{2}^{2}+\left\Vert \alpha\boldsymbol{x}-\boldsymbol{x}^{\star}\right\Vert _{2}^{2}}.\label{eq:dist-z-zstar}
\end{equation}
In words, this metric is an extension of the $\ell_{2}$ distance
modulo global scaling. 
Our result is this: 

\begin{theorem}[Nonconvex optimization]\label{thm:nonconvex}Set
$\lambda=C_{\lambda}\sigma\sqrt{K\log m}$ for some large enough constant
$C_{\lambda}>0$. Take $\eta=c_{\eta}$ for some sufficiently small
constant $c_{\eta}>0$. Suppose that Assumption \ref{assumptions:models},
the incoherence condition (\ref{eq:incoherence-condition}) and the
condition (\ref{eq:assumptions-sample-noise}) hold. Then with probability
at least $1-O\left(m^{-5}+me^{-K}\right)$, the iterates $\left\{ \boldsymbol{h}^{t},\boldsymbol{x}^{t}\right\} _{0\leq t\leq t_{0}}$
of the spectrally initialized nonconvex algorithm (see Algorithm \ref{alg:gd-BD-ncvx}) obey\begin{subequations}
\begin{align}
\mathsf{dist}\left(\boldsymbol{z}^{0},\boldsymbol{z}^{\star}\right) & \lesssim\sqrt{\frac{\mu^{2}K\log m}{m}}\left\Vert \bm{z}^{\star}\right\Vert _{2}+\frac{\sigma\sqrt{K\log m}}{\left\Vert \boldsymbol{h}^{\star}\boldsymbol{x}^{\mathsf{\star H}}\right\Vert _{\mathrm{F}}^{1/2}},\label{eq:L2-error-ncvx}\\
\mathsf{dist}\left(\boldsymbol{z}^{t},\boldsymbol{z}^{\star}\right) & \le\rho^{t}\mathsf{dist}\left(\bm{z}^{0},\bm{z}^{\star}\right)+\frac{C_{1}\left(\lambda+\sigma\sqrt{K\log m}\right)}{c_{\rho}\left\Vert \boldsymbol{h}^{\star}\boldsymbol{x}^{\mathsf{\star H}}\right\Vert _{\mathrm{F}}^{1/2}}, \label{eq:L2-error-ncvx-t}\\
\big\|\bm{h}^{t}\big(\bm{x}^{t}\big)^{\mathsf{H}}-\boldsymbol{h}^{\star}\boldsymbol{x}^{\mathsf{\star H}}\big\|_{\mathrm{F}} & \leq2\rho^{t}\mathsf{dist}\left(\bm{z}^{0},\bm{z}^{\star}\right)\left\Vert \bm{z}^{\star}\right\Vert _{2}+\frac{2C_{1}\left(\lambda+\sigma\sqrt{K\log m}\right)}{c_{\rho}},\label{eq:Fro-error-ncvx}
\end{align}
\end{subequations}simultaneously for all $0\leq t\leq t_{0}\leq m^{20}$.
Here, we take $C_{1}>0$ to be some sufficiently large constant and
$0<\rho=1-c_{\rho}\eta<1$ for some sufficiently small constant $c_{\rho}>0$.
\end{theorem}
\begin{remark}
	It is noteworthy that the quantity $m^{-5}$ in the probability term $1-O\left(m^{-5}+me^{-K}\right)$ in this theorem can actually be replaced by $m^{-C}$ for any positive integer $C$. 
\end{remark}

	Informally, this theorem guarantees that the estimation error of the iterates
	$\left\{ \boldsymbol{h}^{t},\boldsymbol{x}^{t}\right\} _{0\leq t\leq t_{0}}$
	generated by Algorithm \ref{alg:gd-BD-ncvx} decays geometrically fast until some error floor is hit. As we shall demonstrate momentarily in Theorem \ref{thm:LB}, this error floor matches the minimax-optimal statistical error up to some logarithmic term. 
	
	Compared with one of the most relevant papers to us --- \citet{ma2017implicit} --- on blind deconvolution under Fourier designs, this theorem generalizes the noiseless case studied in \citet{ma2017implicit} to the noisy case. This generalization actually needs a lot of efforts since it calls for delicate and careful control of the noise effect, as detailed in the proof in Appendix \ref{appendix:noncvx}.

\subsection{Blind deconvolution under Gaussian designs\label{subsec:Gaussian-design}}

In addition to the above-mentioned random Fourier design, our results also extend to the scenario under Gaussian design, as formalized below. 

\paragraph{Model and assumptions.} Let us describe the model and assumptions of this scenario as follows.

\begin{assumption}\label{assumptions:models-gausssian}Let $\bm{A}:=\left[\boldsymbol{a}_{1},\boldsymbol{a}_{2},\cdots,\boldsymbol{a}_{m}\right]^{\mathsf{H}}\in\mathbb{C}^{m\times K}$
and $\boldsymbol{B}:=\left[\boldsymbol{b}_{1},\boldsymbol{b}_{2},\cdots,\boldsymbol{b}_{m}\right]^{\mathsf{H}}\in\mathbb{C}^{m\times K}$
be matrices obtained by concatenating the design vectors.
\begin{itemize}
\item The entries of $\bm{A}$ and $\bm{B}$ are independently drawn from
standard complex Gaussian distributions, namely, $\boldsymbol{a}_{j},\bm{b}_{j}\overset{\mathrm{i.i.d.}}{\sim}\mathcal{N}\left(\boldsymbol{0},\frac{1}{2}\bm{I}_{K}\right)+i\mathcal{N}\left(\boldsymbol{0},\frac{1}{2}\bm{I}_{K}\right)$
with $i$ the imaginary unit;
\item The noise components $\{\xi_{i}\}$ are independent zero-mean sub-Gaussian
random variables with sub-Gaussian norm obeying $\|\xi_{i}\|_{\psi_{2}}\leq\sigma$
($1\leq i\leq m)$. See \citet[Definition 5.7]{vershynin2010introduction}
for the definition of $\|\cdot\|_{\psi_{2}}$.
\end{itemize}
\end{assumption}

Akin to Theorems~\ref{theorem:cvx} and \ref{thm:nonconvex}, we consider the loss functions (\ref{eq:objcvx}) and (\ref{eq:objncvx}). 
The main results under the Gaussian design are summarized in the following theorems. 

\begin{theorem}[Convex relaxation]\label{theorem:gaussian-cvx}Let $\lambda=C_{\lambda}\sigma\sqrt{mK\log m}$
for some sufficiently large constant $C_{\lambda}>0$. Assume the
sample complexity and the noise level satisfy
\begin{equation}
m\geq CK\log^{6}m\qquad\text{and}\qquad\sigma\sqrt{\frac{K\log^{5}m}{m}}\leq c\left\Vert \boldsymbol{h}^{\star}\boldsymbol{x}^{\mathsf{\star H}}\right\Vert _{\mathrm{F}}
	\label{eq:gaussian-assumptions-sample-noise}
\end{equation}
for some sufficiently large (resp.~small) constant $C>0$ (resp.~$c>0$).
Then 
\begin{equation}
	\left\Vert \boldsymbol{Z}_{\mathsf{cvx}}-\boldsymbol{h}^{\star}\boldsymbol{x}^{\mathsf{\star H}}\right\Vert \leq\left\Vert \boldsymbol{Z}_{\mathsf{cvx}}-\boldsymbol{h}^{\star}\boldsymbol{x}^{\mathsf{\star H}}\right\Vert _{\mathrm{F}}  \lesssim\sigma\sqrt{\frac{K\log m}{m}} \label{eq:thm:gaussian-cvx}
\end{equation}
holds with probability at least $1-O(m^{-5}+m\exp(-c_1K))$ for some
constant $c_1>0$. In addition, the bounds in (\ref{eq:thm:gaussian-cvx}) continue to hold if $\boldsymbol{Z}_{\mathsf{cvx}}$ is replaced by $\boldsymbol{Z}_{\mathsf{cvx,}1}\coloneqq\arg\min_{\boldsymbol{Z}:\mathsf{rank}\left(\boldsymbol{Z}\right)\leq1}\left\Vert \boldsymbol{Z}-\boldsymbol{Z}_{\mathsf{cvx}}\right\Vert _{\mathrm{F}}$
(i.e.~the best rank-1 approximation of $\boldsymbol{Z}_{\mathsf{cvx}}$).
\end{theorem}
This theorem, which is in parallel to Theorem \ref{theorem:cvx} for Fourier designs, 
confirms the appealing statistical guarantees of convex relaxation under Gaussian designs. The minimax optimality of this result will be discussed in Section \ref{subsec:Insights} in detail. 

\begin{theorem}[Nonconvex optimization]\label{thm:gaussian-ncvx}Set
$\lambda=C_{\lambda}\sigma\sqrt{mK\log m}$ for some large enough constant
$C_{\lambda}>0$. Take $\eta=c_{\eta}/m$ for some sufficiently small
constant $c_{\eta}>0$. Suppose that Assumption \ref{assumptions:models-gausssian} and Condition \eqref{eq:gaussian-assumptions-sample-noise}
hold. Then with probability at least $1-O\left(m^{-5}+me^{-K}\right)$,
the iterates $\left\{ \boldsymbol{h}^{t},\boldsymbol{x}^{t}\right\} _{0\leq t\leq t_{0}}$
of Algorithm \ref{alg:gd-BD-ncvx} obey\begin{subequations}
\begin{align}
\mathsf{dist}\left(\boldsymbol{z}^{0},\boldsymbol{z}^{\star}\right) & \lesssim\sqrt{\frac{K\log^2 m}{m}}\left\Vert \bm{z}^{\star}\right\Vert _{2}+\sigma\sqrt{\frac{K\log m}{m\left\Vert \boldsymbol{h}^{\star}\boldsymbol{x}^{\mathsf{\star H}}\right\Vert _{\mathrm{F}}}},\label{eq:L2-error-ncvx-1}\\
\mathsf{dist}\left(\boldsymbol{z}^{t},\boldsymbol{z}^{\star}\right) & \le\rho^{t}\mathsf{dist}\left(\bm{z}^{0},\bm{z}^{\star}\right)+\frac{C_{11}\left(\lambda+\sigma\sqrt{mK\log m}\right)}{c_{\rho}m\left\Vert \boldsymbol{h}^{\star}\boldsymbol{x}^{\mathsf{\star H}}\right\Vert _{\mathrm{F}}^{1/2}}\\
\big\|\bm{h}^{t}\big(\bm{x}^{t}\big)^{\mathsf{H}}-\boldsymbol{h}^{\star}\boldsymbol{x}^{\mathsf{\star H}}\big\|_{\mathrm{F}} & \leq2\rho^{t}\mathsf{dist}\left(\bm{z}^{0},\bm{z}^{\star}\right)\left\Vert \bm{z}^{\star}\right\Vert _{2}+\frac{2C_{11}\left(\lambda+\sigma\sqrt{mK\log m}\right)}{c_{\rho}m}\label{eq:Fro-error-ncvx-1}
\end{align}
\end{subequations}simultaneously for all $0\leq t\leq t_{0}\leq m^{20}$.
Here, we take $C_{11}>0$ to be some sufficiently large constant and
$0<\rho=1-c_{\rho}c_{\eta}<1$ for some sufficiently small constant
$c_{\rho}>0$. \end{theorem}
Similar to the Fourier designs studied in Section~\ref{subsec:Fourier-design}, our theory asserts that the estimation error of $\left\{ \boldsymbol{h}^{t},\boldsymbol{x}^{t}\right\} _{0\leq t\leq t_{0}}$ produced by Algorithm \ref{alg:gd-BD-ncvx} decreases geometrically fast before reaching an error floor on the order of the minimax-optimal statistical limit modulo some logarithmic factor (cf.~Theorem \ref{thm:LB}).

\subsection{Insights\label{subsec:Insights}}
The above theorems 
strengthen our understanding about the performance of both convex and nonconvex algorithms in the presence
of random noise. In what follows, we elaborate on the tightness of
our results as well as other important algorithmic implications. 
\begin{itemize}
\item \textit{Minimax optimality of both convex relaxation and nonconvex
optimization. }\textit{\emph{Theorems }}\ref{theorem:cvx}-\ref{thm:nonconvex}
(resp.~Theorems \ref{theorem:gaussian-cvx}-\ref{thm:gaussian-ncvx})
reveal that both convex and nonconvex optimization estimate $\boldsymbol{h}^{\star}\boldsymbol{x}^{\mathsf{\star H}}$
to within an Euclidean error at most $\sigma\sqrt{K}$ (resp.~$\sigma\sqrt{mK}$)
up to some log factor for random Fourier design (resp.~Gaussian design), provided that the regularization parameter
is taken to be $\lambda\asymp\sigma\sqrt{K\log m}$ (resp.~$\lambda\asymp\sigma\sqrt{mK\log m}$).
This closes the gap between the statistical guarantees for convex
and nonconvex optimization, confirming that convex relaxation is no
less statistically efficient than nonconvex optimization. Further,
in order to assess the statistical optimality of our results, it is
instrumental to understand the statistical limit one can hope for.
This is provided in the following claim, whose proof is postponed to Appendix \ref{appendix:LB}.
\begin{theorem}\label{thm:LB} Suppose that the noise components obey $\xi_{j}\overset{\text{i.i.d.}}{\sim}\mathcal{N}(0,\sigma^{2}/2)+i\mathcal{N}(0,\sigma^{2}/2)$.
	Define
	\[
	\mathcal{M}^{\star}\coloneqq\left\{ \bm{Z}=\bm{h}\bm{x}^{\mathsf{H}}\,\big|\,\bm{h},\bm{x}\in\mathbb{C}^{K}\right\} .
	\]
	Then under Assumption \ref{assumptions:models}, there exists some
	universal constant $c_{\mathsf{lb}}^{(1)}>0$ such that, with probability
	exceeding $1-O(K^{-10})$,
	\begin{align}
	\inf_{\widehat{\bm{Z}}}\sup_{\bm{Z}^{\star}\in\mathcal{M}^{\star}}\mathbb{E}\left[\big\|\widehat{\bm{Z}}-\bm{Z}^{\star}\big\|_{\mathrm{F}}^{2}\mid\bm{A}\right]\geq c_{\mathsf{lb}}^{(1)}\frac{\sigma^{2}K}{\log m}, \label{eq:bd-lower-bound}
	\end{align}
	where the infimum is taken over all estimator $\widehat{\bm{Z}}$.
	Furthermore, under Assumption \ref{assumptions:models-gausssian},
	there exists another universal constant $c_{\mathsf{lb}}^{(2)}>0$
	such that
	\begin{align}
	\inf_{\widehat{\bm{Z}}}\sup_{\bm{Z}^{\star}\in\mathcal{M}^{\star}}\mathbb{E}\left[\big\|\widehat{\bm{Z}}-\bm{Z}^{\star}\big\|_{\mathrm{F}}^{2}\mid\bm{A},\bm{B}\right]\geq c_{\mathsf{lb}}^{(2)}\frac{\sigma^{2}K}{m\log m}  \label{eq:gaussian-lower-bound}
	\end{align}
	holds with probability exceeding $1-O(K^{-10})$. \end{theorem}

Encouragingly, the minimax lower bound (\ref{eq:bd-lower-bound})
(resp.~(\ref{eq:gaussian-lower-bound})) matches the statistical error
bounds in Theorems \ref{theorem:cvx}-\ref{thm:nonconvex} (resp.~Theorems \ref{theorem:gaussian-cvx}-\ref{thm:gaussian-ncvx}) up
to some logarithmic factor, thus confirming the near minimaxity of
both convex relaxation and nonconvex optimization for blind deconvolution under both designs. 
\item \textit{Fast convergence of nonconvex algorithms. }From the computational
	perspective, Theorem \ref{thm:nonconvex} guarantees linear convergence (or geometric convergence)
of the nonconvex algorithm with a contraction rate $\rho$. Given
that $1-\rho$ is a constant bounded away from 1 (as long as the stepsize
is taken to be a sufficiently small constant), the iteration complexity
of the algorithm scales at most logarithmically with the model parameters.
As a result, the total computational complexity is proportional to
the per-iteration cost $O(mK)$ (up to some log factor), which scales
nearly linearly with the time taken to read the data. Compared with
past work on nonconvex algorithms \citep{li2019rapid,huang2018blind},
our theory reveals considerably faster convergence and hence improved
computational cost, without compromising statistical efficiency. 
A key enabler of the improved theory lies in fine-grained understanding of the part of optimization lanscape visited by the nonconvex algorithm, thus allowing for the use of more aggressive constant step
sizes instead of diminishing step sizes. See Table \ref{table:comparison} for details. 
\end{itemize}
The careful reader might immediately remark that the validity of the
above results requires the assumptions (\ref{eq:assumptions-sample-noise})
on both the sample size and the noise level. Fortunately, a closer
inspection of these conditions reveals the broad applicability of
these conditions. 
\begin{itemize}
\item \textit{Sample complexity. }\textit{\emph{The sample }}size requirement
in our theory of blind deconvolution under Fourier design (resp.~Gaussian design),
as stated in Condition (\ref{eq:assumptions-sample-noise}) (resp.~Condition (\ref{eq:gaussian-assumptions-sample-noise})), scales as
\[
m\gtrsim K\mathrm{poly}\log(m),
\]
which matches the information-theoretical lower limit even in the
absence of noise (modulo some logarithmic factor) as proved in \citet{kech2017optimal}
(resp.~\citet{cai2015rop}).
\item \textit{Signal-to-noise ratio (SNR). }The noise level required for
our theory to work under Fourier design (see Condition (\ref{eq:assumptions-sample-noise}))
is given by $\sigma\sqrt{K\log^{5}m}\lesssim\left\Vert \bm{h}^{\star}\bm{x}^{\star\mathsf{H}}\right\Vert _{\mathrm{F}}$.
If we define the sample-wise signal-to-noise ratio as follows
\begin{equation}
\mathsf{SNR}:=\frac{\frac{1}{m}\sum_{k=1}^{m}\mathbb{E}\big[\big|\bm{b}_{k}^{\mathsf{H}}\bm{h}^{\star}\bm{x}^{\star\mathsf{H}}\bm{a}\big|^{2}\big]}{\sigma^{2}},\label{eq:SNR-definition}
\end{equation}
then our noise requirement can be equivalently phrased as
\[
\mathsf{SNR}=\frac{\|\bm{h}^{\star}\|_{2}^{2}\|\bm{x}^{\star}\|_{2}^{2}}{m\sigma^{2}}\gtrsim\frac{K\log^{5}m}{m},
\]
where the right-hand side of the above relation is vanishingly small in light of our sample
complexity constraint $m\gtrsim\mu^{2}K\log^{9}m$. In other words,
our theory works even in the low-SNR regime. Furthermore, for the
Gaussian design, the noise level required in our theory is $\sigma\sqrt{K\log^{5}m/m}\lesssim\left\Vert \bm{h}^{\star}\bm{x}^{\star\mathsf{H}}\right\Vert _{\mathrm{F}}$.
We can introduce the following SNR that allows us to rewrite this requirement as
\[
\mathsf{SNR}=\frac{\frac{1}{m}\sum_{k=1}^{m}\mathbb{E}\big[\big|\bm{b}_{k}^{\mathsf{H}}\bm{h}^{\star}\bm{x}^{\star\mathsf{H}}\bm{a}\big|^{2}\big]}{\sigma^{2}}=\frac{\|\bm{h}^{\star}\|_{2}^{2}\|\bm{x}^{\star}\|_{2}^{2}}{\sigma^{2}}\gtrsim\frac{K\log^{5}m}{m}, 
\]
which resembles the one for Fourier designs. 
\end{itemize}

\section{Prior art\label{sec:Priorart}}

Before embarking on our discussion on the prior art for blind deconvolution, 
it is noteworthy that the model (\ref{eq:measurement-BD}) 
might remind  readers of the famous problem of phase retrieval \citep{candes2013phaselift,shechtman2015phase,chi2019nonconvex},
which is concerned with solving random quadratic systems of equations and clearly related to the problem of solving bilinear systems.
Despite the similarity between these two problems at first glance, 
the majority of prior phase retrieval theory focuses on either i.i.d.~Gaussian designs or randomized coded diffraction patterns, 
which are drastically different from the kind of random Fourier designs commonly assumed in blind deconvolution. 
In fact, the presence of Fourier designs in blind deconvolution is a consequence of the duality relation between convolution in the time domain and multiplication in the frequency domain \citep{ahmed2013blind,li2019rapid}. 
The deterministic nature of the Fourier design matrix $\bm{B}$ under the Fourier model, however, presents a substantial challenge in the analysis of both convex and nonconvex optimization algorithms; in contrast, the Gaussian design matrix in prior phase retrieval theory is assumed to be highly random, which remarkably simplifies analysis. 



We now turn attention to the blind deconvolution literature. 
As mentioned previously, recent years have witnessed much progress
towards understanding convex and nonconvex optimization for solving
bilinear systems of equations. First, we give a brief review on previous literature of blind deconvolution under Fourier design. Regarding the convex programming approach,
\citet{ahmed2013blind} was the first to apply the lifting idea to
transform bilinear system of equations into linear measurements about
a rank-one matrix --- an idea that has proved effective in a number
of nonconvex problems \citep{candes2013phaselift,waldspurger2015phase,chen2014robust,tang2013compressed,chi2016guaranteed,chen2014near,goemans1994879,shechtman2014gespar,oymak2015simultaneously}.
Focusing on convex relaxing in the lifted domain, \citet{ahmed2013blind}
showed that exact recovery is possible from a near-optimal number
of measurements in the noiseless case, and developed the first statistical
guarantees for the noisy case (which are, as alluded to previously,
highly suboptimal). Several other works have also been devoted to understanding
convex relaxation under possibly different assumptions. Another paper
\citet{aghasi2019branchhull} proposed an effective convex algorithm
for bilinear inversion, assuming that the signs of the signals are
known \emph{a priori}. Moving beyond blind deconvolution, the convex
approach has been extended to accommodate the blind demixing problem
\citep{ling2017blind,jung2017blind}, which is more general than blind
deconvolution. 
\[
\underset{\ensuremath{\boldsymbol{Z}\in\mathbb{C}^{K\times K}}}{\mathrm{minimize}}\quad\left\Vert \boldsymbol{Z}\right\Vert _{*}\qquad\text{subject\ to}\quad\bm{y}=\mathcal{A}\left(\bm{Z}\right).
\]

Another line of works has focused on the development of fast nonconvex
algorithms \citep{li2019rapid,lee2018fast,ma2017implicit,huang2018blind,ling2019regularized,charisopoulos2019composite,charisopoulos2019low},
which was largely motivated by recent advances in efficient nonconvex
optimization for tackling statistical estimation problems \citep{candes2015phase,chen2017solving,charisopoulos2019low,keshavan2009matrix,jain2013low,zhang2016provable,chen2015fast,sun2016guaranteed,zheng2016convergence,wang2017solving,cai2019nonconvex,wang2017sparse,qu2017convolutional,duchi2019solving,ma2019optimization}
(see \citet{chi2019nonconvex} for an overview). \citet{li2019rapid}
proposed a feasible nonconvex recipe by attempting to optimize a regularized
squared loss (which includes extra penalty term to promote incoherence),
and showed that in conjunction with proper initialization, nonconvex
gradient descent converges to the ground truth in the absence of noise.
Another work \citet{huang2018blind} proposed a Riemannian steepest
descent method by exploiting the quotient structure, which is also
guaranteed to work in the noise-free setting with nearly minimal sample
complexity. Further, \citet{ling2019regularized,dong2018nonconvex}
extended the nonconvex paradigm to accommodate the blind demixing
problem, which subsumes blind deconvolution a special case. 

Going beyond algorithm designs, the past works \citet{li2016identifiability,li2015,kech2017optimal}
investigated how many samples are needed to ensure the identifiability
of blind deconvolution under the subspace model. Furthermore, it is
worth noting that another line of recent works \citet{wang2016blind,lee2016blind,zhang2017global,zhang2019structured,zhang2020symmetry,li2019multichannel,shi2020manifold,qu2019nonconvex}
studied a different yet fundamentally important model of blind deconvolution,
assuming that one of the two signals is sparse instead of lying within
a known subspace. These are, however, beyond the scope of the current
paper. 

In addition, as far as we know, previous works on blind deconvolution under Gaussian design is not as extensive as the case with Fourier designs, the latter of which is closer to practical blind deconvolution applications. Among the most relevant works: 
\citet{cai2015rop} proposed a constrained convex optimization problem
under the same setting as Assumption \ref{assumptions:models-gausssian}
and establishes that the estimation error is bounded by $\sigma\min\{K\sqrt{\log m}/m+\sqrt{K/m},1\}$, which is on the same order (up to logarithmic
factors) as our bound in Theorem \ref{theorem:gaussian-cvx} when $m\gg K\log m$ and matches the minimax optimal estimation error lower bound; \citet{zhong2015efficient} studied the noiseless case in
terms of both convex and nonconvex formulations; \citet{charisopoulos2019composite}
analyzed the nonsmooth nonconvex formulation of the problem for bilinear
measurements with corruption frequency less than $1/2$, and proved
that the subgradient algorithms proposed there  converges linearly,
while the specific prox-linear method converges quadratically albeit with
higher  per-iteration cost. Compared with these works, our paper studies the unconstrained version of convex relaxation
 and establishes an estimation error upper bound
that nearly matches the minimax lower bound. When it comes to nonconvex formulation,
the current paper is, as far as we know, the first to justify the optimality of its estimation
accuracy in the noisy setting. 

At the technical level, the pivotal idea of our paper lies in bridging
convex and nonconvex estimators, which is motivated by prior works
\citet{chen2019noisy,chen2019inference,chen2020bridging} on matrix
completion and robust principal component analysis. Such crucial connections
have been established with the assistance of the leave-one-out analysis
framework, which has already proved effective in analyzing a variety
of nonconvex statistical problems \citep{el2018impact,chen2019gradient,chen2019spectral,ding2020leave,cai2020uncertainty,dong2018nonconvex,xu2019consistent,cai2019subspace,chen2020partial,zhong2018near}.

\section{Numerical experiments\label{sec:Numerical-experiments}}

In this subsection, we carry out a series of numerical experiments
to confirm the validity of our theory. Throughout the experiments,
the signals of interest $\bm{h}^{\star}$, $\bm{x}^{\star}\in\mathbb{C}^{K}$
are drawn from $\mathcal{N}\left(\bm{0},\frac{1}{2K}\bm{I}_{K}\right)+\text{i}\mathcal{N}\left(\bm{0},\frac{1}{2K}\bm{I}_{K}\right)$
(so that they have approximately unit $\ell_{2}$ norm). Under the
Assumption \ref{assumptions:models} (resp. Assumption \ref{assumptions:models-gausssian}),
the stepsize $\eta$ is set to be $0.05$ (resp. $0.05/m$), whereas
the regularization parameter is taken to be $\lambda=5\sigma\sqrt{K\log m}$
(resp. $\lambda=5\sigma\sqrt{mK\log m}$). The convex problem is solved
by means of the proximal gradient method \citep{parikh2014proximal}. 

In the first series of experiments, we report the statistical estimation
errors of both convex and nonconvex approaches as the noise level
$\sigma$ varies from $10^{-6}$ to $10^{-3}$ for blind deconvolution under Fourier design,
while the noise level for blind deconvolution under Gaussian design is from $10^{-5}$ to
$10^{-2}$; here, we set $K=100$ and $m=10K$. Let $\bm{Z}_{\mathsf{ncvx}}=\bm{h}_{\mathsf{ncvx}}\bm{x}_{\mathsf{ncvx}}^{\mathsf{H}}$
be the nonconvex solution and $\bm{Z}_{\mathsf{cvx}}$ be the convex
solution. Figure \ref{fig:dist_cvx_ncvx} depicts the relative Euclidean
estimation errors ($\left\Vert \bm{Z}_{\mathsf{ncvx}}-\bm{Z}^{\star}\right\Vert _{\text{F}}/\left\Vert \bm{Z}^{\star}\right\Vert _{\text{F}}$
and $\left\Vert \bm{Z}_{\mathsf{cvx}}-\bm{Z}^{\star}\right\Vert _{\text{F}}/\left\Vert \bm{Z}^{\star}\right\Vert _{\text{F}}$)
vs.~the noise level, where the results are averaged from 20 independent
trials. Clearly, both approaches enjoy almost identical statistical
accuracy, thus confirming the optimality of convex relaxation as well.
Another interesting observation revealed by Figure \ref{fig:dist_cvx_ncvx}
is the closeness of the solutions of these two approaches, which,
as we shall elucidate momentarily, forms the basis of our analysis
idea. 

\begin{figure}
\hfill{}\includegraphics[width=0.4\columnwidth]{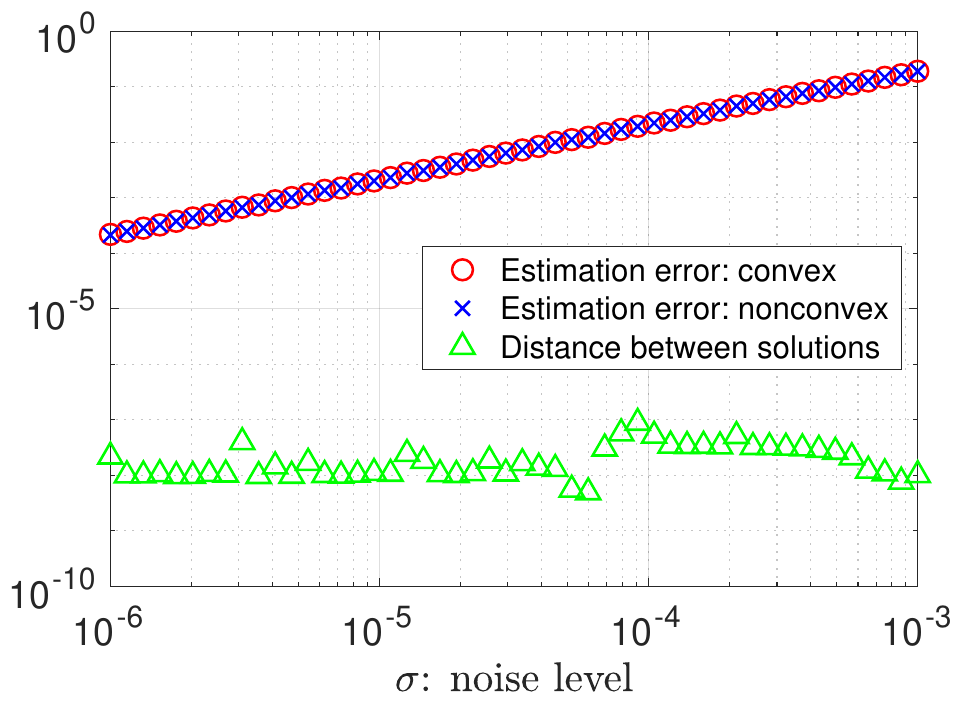}\hfill{}\includegraphics[width=0.4\columnwidth]{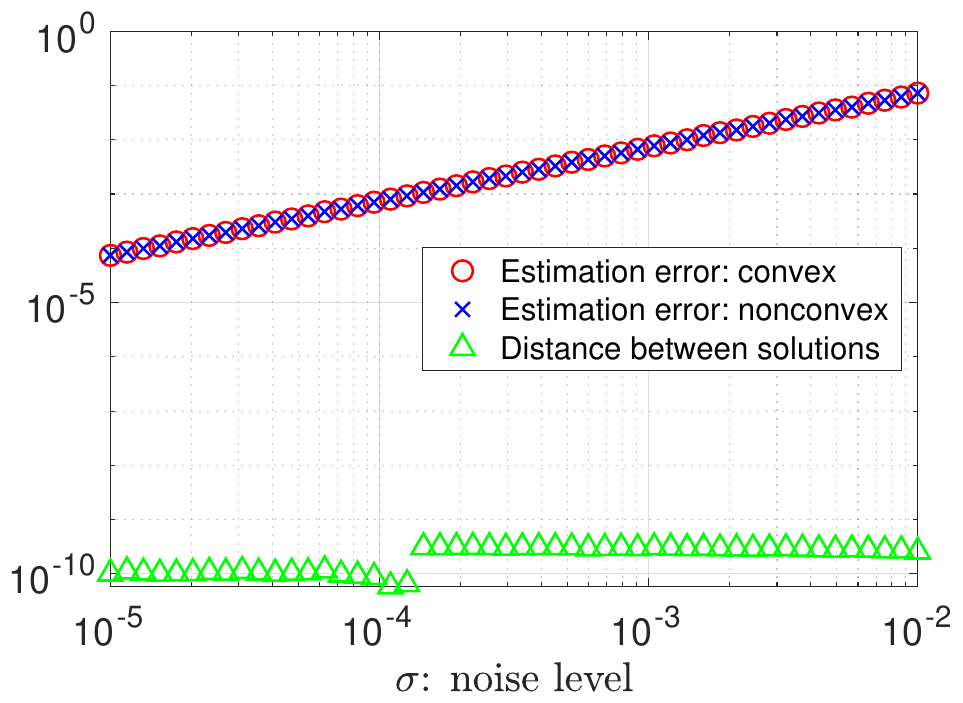}\hfill{}

\caption{Left: blind deconvolution under Fourier design. Right: blind deconvolution under Gaussian design. Relative estimation
errors of both $\bm{Z}_{\mathsf{cvx}}$ and $\bm{Z}_{\mathsf{ncvx}}$
and the relative distance between them vs. the noise level $\sigma$.
The results are averaged over 20 independent trials. \label{fig:dist_cvx_ncvx}}
\end{figure}

In the second series of experiments, we report the numerical convergence
of gradient descent (cf.~Algorithm \ref{alg:gd-BD-ncvx}). We choose
$K\in\{30,100,300,1000\}$ and let $m=10K$, with the noise level
fixed at $\sigma=10^{-4}$. Figure \ref{fig:ncvx_conv} plots the
relative Euclidean estimation error $\left\Vert \bm{h}^{t}\bm{x}^{t\mathsf{H}}-\bm{h}^{\star}\bm{x}^{\star\mathsf{H}}\right\Vert _{\text{F}}/\left\Vert \bm{h}^{\star}\bm{x}^{\star\mathsf{H}}\right\Vert _{\text{F}}$
vs.~the iteration count. As can be seen from the plots, the nonconvex
gradient algorithm studied here converges linearly (in fact, within
around 200-300 iterations) before it hits an error floor. In addition,
the relative error of blind deconvolution under Fourier design increases as the dimension $K$ increases, which is consistent with Theorem \ref{thm:nonconvex}.
While the relative error of blind deconvolution under Gaussian design remains generally
the same across different choices of $K$, this can be explained by
Theorem \ref{thm:gaussian-ncvx} since the ratio between $m$ and
$K$ is kept to be $10$. 

\begin{figure}
\hfill{}\includegraphics[width=0.4\columnwidth]{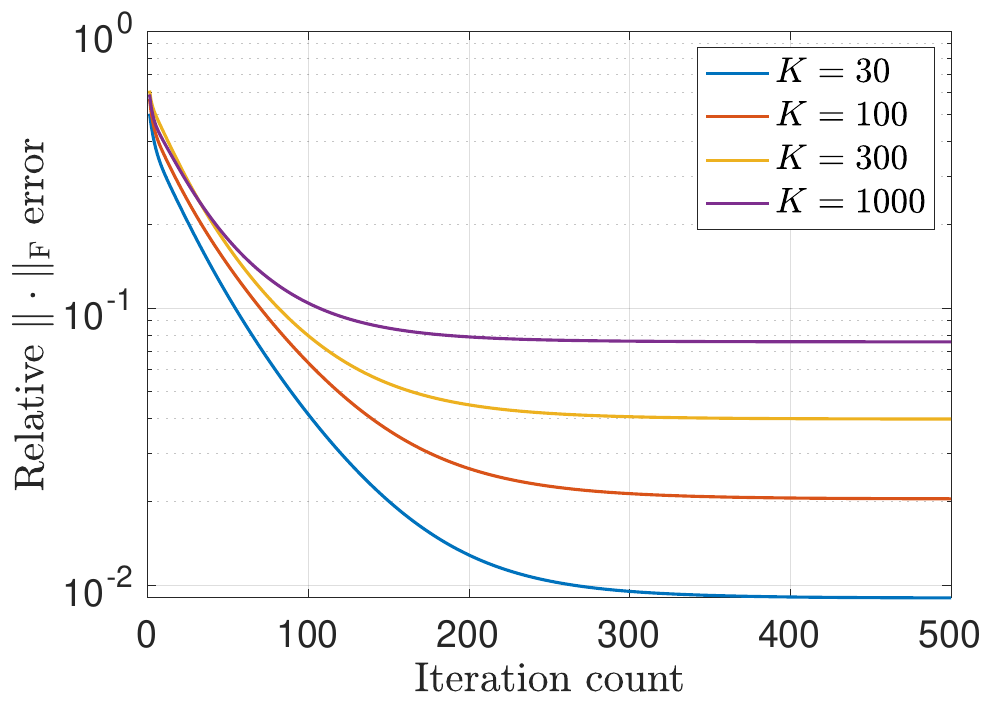}\hfill{}\includegraphics[width=0.4\columnwidth]{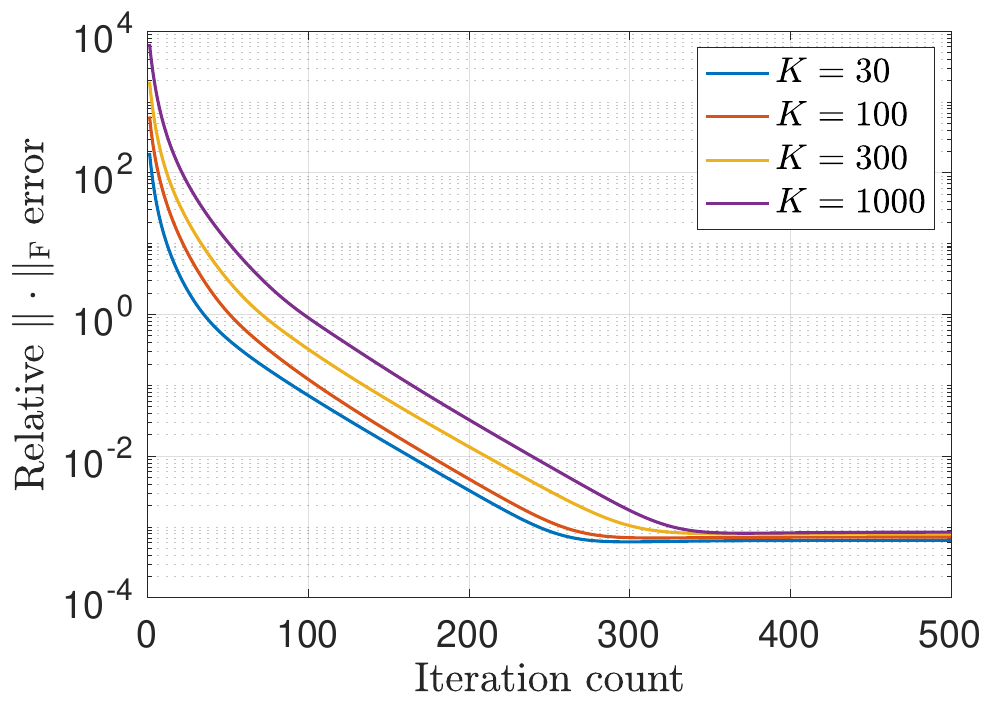}\hfill{}

\caption{Left: blind deconvolution. Right: Gaussian design. Relative Euclidean
error $\big\|\bm{h}^{t}\bm{x}^{t\mathsf{H}}-\bm{h}^{\star}\bm{x}^{\star\mathsf{H}}\big\|_{\mathrm{F}}$
vs.~iteration count. \label{fig:ncvx_conv}}
\end{figure}
In the last series of experiments, we 
examine the necessity of the incoherence condition \eqref{eq:incoherence-condition}
empirically. The experiments are conducted with $\mu^{2}$ taking
on 10 equidistant values from 3 to 30. For each choice of $\mu$,
$\bm{h}^{\star}$ is generated by first setting the first $\mu^{2}$
entries to be 1 and the others 0 , and then normalizing it to have unit
norm; $\bm{x}^{\star}$ is generated randomly from Gaussian distribution
$\mathcal{N}(\bm{0},\bm{I}_{K})$ and then normalized to have unit
norm. This way we guarantee that $\max_{1\leq j\leq m}|\boldsymbol{b}_{j}^{\mathsf{H}}\boldsymbol{h}^{\star}|=\mu/\sqrt{m}$.
We fix $K=100$ and the noise level $\sigma=10^{-4}$ throughout.
For each $\mu^{2}$ and $m$, 20 random trials are conducted. In each
trial, we run convex and nonconvex algorithms until convergence or
the maximum number of iterations is reached, and then report the
relative Euclidean error $\Vert\bm{h}^{t}\bm{x}^{t\mathsf{H}}-\bm{h}^{\star}\bm{x}^{\star\mathsf{H}}\big\|_{\mathrm{F}}$.
If the relative error is less than $0.1$, the trial is declared as
successful. The proportion of successful recovery for convex and nonconvex
problems are plotted in Figure \ref{fig:incoherence}, which suggests
that sample complexity $m$ does scale linearly with $\mu^{2}$ for
both problems and hence corroborates the theoretical results provided in Theorems
\ref{theorem:cvx} and \ref{thm:nonconvex}. 

\begin{figure}
\hfill{}\includegraphics[width=0.4\columnwidth]{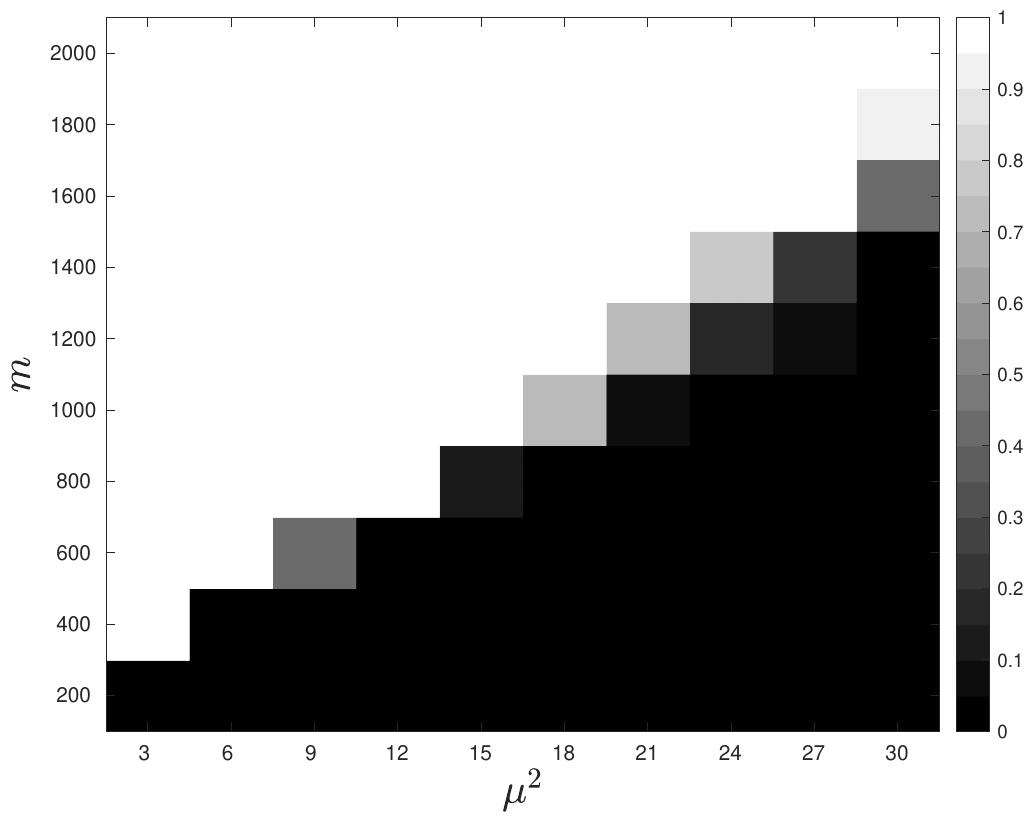}\hfill{}\includegraphics[width=0.4\columnwidth]{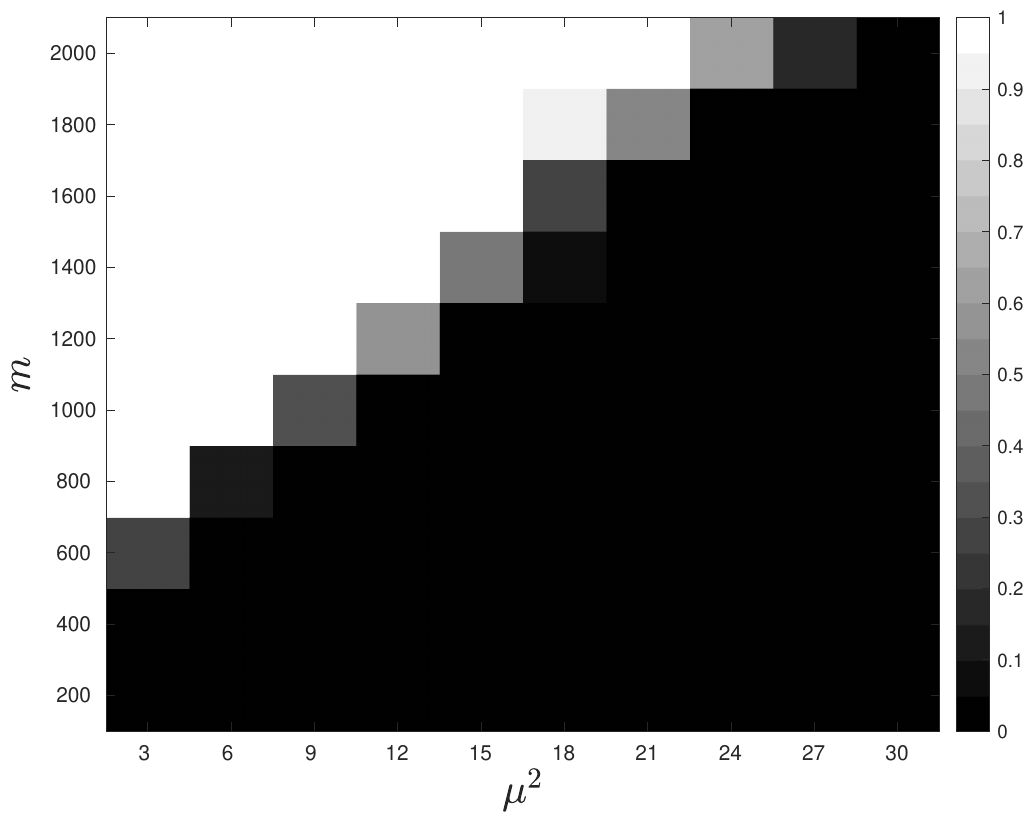}\hfill{}

\caption{Left: nonconvex problem. Right: convex problem. Sample size $m$ vs.~squared
incoherence $\mu^{2}$. The scaled colormap represents the proportion
of successful recovery out of 20 random trials. \label{fig:incoherence}}
\end{figure}

\section{Discussion\label{sec:Discussion}}

This paper has investigated the effectiveness of both convex relaxation
and nonconvex optimization in solving bilinear systems of equations
in the presence of random noise. We have demonstrated that a simple two-stage
nonconvex algorithm solves the problem to optimal statistical accuracy
within nearly linear time. Further, by establishing an intimate connection
between convex programming and nonconvex optimization, we have established
--- for the first time --- optimal statistical guarantees of convex
relaxation when applied to blind deconvolution. 
Our results are established for two different types of design mechanisms: the random Fourier design and the Gaussian design. 
%
%
Our results considerably improve upon the state-of-the-art theory for blind deconvolution, and contribute towards demystifying the
efficacy of optimization-based methods in solving this fundamental
nonconvex problem.


Moving forward, the findings of this paper suggest multiple directions
that merit further investigations. For instance, while the current
paper adopts a balancing operation in each iteration of the nonconvex
algorithm (cf.~Algorithm \ref{alg:gd-BD-ncvx}), it might not be
necessary in practice; in fact, numerical experiments suggest that
the size of the scaling parameter $\vert\alpha^{t}\vert$ stays close
to $1$ even without proper balancing. It would be interesting to
investigate whether vanilla GD without rescaling is able to achieve
comparable performance. In addition, the estimation guarantees provided
in this paper might serve as a starting point for conducting uncertainty
quantification for noisy blind deconvolution --- namely, how to use
it to construct valid and short confidence intervals for the unknowns.
Going beyond blind deconvolution, it would be of interest to extend
the current analysis to handle blind demixing --- a problem that
can be viewed as an extension of blind deconvolution beyond the rank-one
setting \citep{ling2017blind,ling2019regularized,dong2018nonconvex}.
As can be expected, existing statistical guarantees for convex programming
remain highly suboptimal for noisy blind demixing, and the analysis
developed in the current paper suggests a feasible path towards closing
the gap.

\section*{Acknowledgements}
Y.~Chen is supported in part by the AFOSR YIP award FA9550-19-1-0030,
by the ONR grant N00014-19-1-2120, by the ARO grants W911NF-20-1-0097
and W911NF-18-1-0303, by the NSF grants CCF-1907661, IIS-1900140,
IIS-2100158 and DMS-2014279, and by the Princeton SEAS innovation
award. J.~Fan is supported in part by the ONR grant N00014-19-1-2120 and the NSF grants DMS-1662139,
DMS-1712591, DMS-2052926, DMS-2053832, and the NIH grant 2R01-GM072611-15. 
B.~Wang is supported in part by Gordon Y.~S.~Wu Fellowships in Engineering.

\bibliographystyle{abbrvnat}
\bibliography{bibfile}

\appendix

\section*{Appendix structure}

Appendix \ref{appendix:noncvx} and \ref{appendix:cvx} analyze the
Fourier designs. In Appendix \ref{appendix:noncvx}, we present the analysis
of the nonconvex gradient method and the proof of Theorem \ref{thm:nonconvex}.
Appendix \ref{appendix:cvx} gives the complete proof of Theorem \ref{theorem:cvx}. 
In addition, Appendix \ref{appendix:gaussian-ncvx} and \ref{appendix:Proof-of-Theorem-gaussian-cvx}
and provide proofs for the Gaussian
designs, while Appendix \ref{appendix:gaussian-ncvx}
proves Theorem \ref{thm:gaussian-ncvx} and Appendix \ref{appendix:Proof-of-Theorem-gaussian-cvx}
proves Theorem \ref{theorem:gaussian-cvx}. Appendix \ref{appendix:LB}
justifies two minimax lower bounds in Theorem \ref{thm:LB}.
Appendix \ref{appendix:auxiliary-lemmas} lists several useful lemmas
and their proofs. 

\section{Analysis: Nonconvex gradient method under Fourier design\label{appendix:noncvx}}

Since the proof of Theorem \ref{theorem:cvx} is built upon Theorem
\ref{thm:nonconvex}, we shall first present the proof of the nonconvex
part. Without loss of generality, we assume that
\begin{equation}
\left\Vert \boldsymbol{h}^{\star}\right\Vert _{2}=\left\Vert \boldsymbol{x}^{\star}\right\Vert _{2}=1\label{eq:assumption-h-x-norm}
\end{equation}
 throughout the proof. For the sake of notational convenience, for
each iterate $(\bm{h}^{t},\bm{x}^{t})$ we define the following alignment
parameters\begin{subequations}\label{subeq:defn-alphat-align}
\begin{align}
\alpha^{t} & \coloneqq\arg\min_{\alpha\in\mathbb{C}}\left\{ \left\Vert \tfrac{1}{\overline{\alpha}}\boldsymbol{h}^{t}-\boldsymbol{h}^{\star}\right\Vert _{2}^{2}+\left\Vert \alpha\boldsymbol{x}^{t}-\boldsymbol{x}^{\star}\right\Vert _{2}^{2}\right\} ,\label{eq:defn-alignment-alphat}\\
\alpha^{t+1/2} & \coloneqq\arg\min_{\alpha\in\mathbb{C}}\left\{ \left\Vert \tfrac{1}{\overline{\alpha}}\boldsymbol{h}^{t+1/2}-\boldsymbol{h}^{\star}\right\Vert _{2}^{2}+\big\|\alpha\boldsymbol{x}^{t+1/2}-\boldsymbol{x}^{\star}\big\|_{2}^{2}\right\} ,\label{eq:defn-alignment-alphat-half}
\end{align}
\end{subequations}which lead to the following simple relations
\begin{equation}
\alpha^{t+1}=\sqrt{\frac{\left\Vert \bm{x}^{t+1/2}\right\Vert _{2}}{\left\Vert \bm{h}^{t+1/2}\right\Vert _{2}}}\alpha^{t+1/2}\qquad\text{and}\qquad\mathsf{dist}\big(\bm{z}^{t+1/2},\bm{z}^{\star}\big)=\mathsf{dist}\big(\bm{z}^{t+1},\bm{z}^{\star}\big).\label{eq:balance-relation-1}
\end{equation}
With these in place, attention should be directed to the properly
rescaled iterate\begin{subequations}
\begin{align}
\widetilde{\bm{z}}^{t+1/2}=\big(\widetilde{\bm{h}}^{t+1/2},\widetilde{\bm{x}}^{t+1/2}\big) & :=\big(\tfrac{1}{\overline{\alpha^{t+1/2}}}\boldsymbol{h}^{t+1/2},\alpha^{t+1/2}\bm{x}^{t+1/2}\big),\label{eq:defn-htilde-xtilde-1/2}\\
\widetilde{\bm{z}}^{t}=\big(\widetilde{\bm{h}}^{t},\widetilde{\bm{x}}^{t}\big) & :=\big(\tfrac{1}{\overline{\alpha^{t}}}\boldsymbol{h}^{t},\alpha^{t}\bm{x}^{t}\big).\label{eq:defn-htilde-xtilde}
\end{align}
\end{subequations}Additionally, we shall also define\begin{subequations}
\begin{align}
\widehat{\bm{z}}^{t+1/2}=(\widehat{\bm{h}}^{t+1/2},\widehat{\bm{x}}^{t+1/2}) & :=\big(\tfrac{1}{\overline{\alpha^{t}}}\bm{h}^{t+1/2},\alpha^{t}\bm{x}^{t+1/2}\big)\label{eq:defn-hhat-xhat-1/2}\\
\widehat{\bm{z}}^{t+1}=(\widehat{\bm{h}}^{t+1},\widehat{\bm{x}}^{t+1}) & :=\big(\tfrac{1}{\overline{\alpha^{t}}}\bm{h}^{t+1},\alpha^{t}\bm{x}^{t+1}\big)\label{eq:defn-hhat-xhat}
\end{align}
\end{subequations}that are rescaled in a different way, which will
appear often in the analysis. 

\subsection{Induction hypotheses}

Our analysis is inductive in nature; more concretely, we aim to justify
the following set of hypotheses by induction: \begin{subequations}\label{sec:hypotheses-ncvx}
\begin{align}
\mathsf{dist}\left(\boldsymbol{z}^{t},\boldsymbol{z}^{\star}\right) & \leq\big\|\widehat{\boldsymbol{z}}^{t-1/2}-\boldsymbol{z}^{\star}\big\|_{2}\leq\rho\mathsf{dist}\left(\bm{z}^{t-1},\bm{z}^{\star}\right)+C_{1}\eta\left(\lambda+\sigma\sqrt{K\log m}\right),\label{eq:hypothesisdist}\\
\max_{1\leq l\leq m}\left|\boldsymbol{a}_{l}^{\mathsf{H}}\left(\widetilde{\bm{x}}^{t}-\boldsymbol{x}^{\star}\right)\right| & \leq C_{3}\left(\sqrt{\frac{\mu^{2}K\log^{2}m}{m}}+\sqrt{\log m}\left(\lambda+\sigma\sqrt{K\log m}\right)\right),\label{eq:hypothesisincoherence1}\\
\max_{1\leq l\leq m}\big|\boldsymbol{b}_{l}^{\mathsf{H}}\widetilde{\bm{h}}^{t}\big| & \leq C_{4}\left(\frac{\mu\log^{2}m}{\sqrt{m}}+\sigma\right),\label{eq:hypothesisincoherence2}
\end{align}
where $\rho=1-\eta/16$ and $C_{1},C_{3},C_{4}>0$ are some universal
constants. Here, the hypothesis \eqref{eq:hypothesisdist} is made
for all $0<t\leq t_{0}$, while the hypotheses \eqref{eq:hypothesisincoherence1}
and \eqref{eq:hypothesisincoherence2} are made for all $0\leq t\leq t_{0}$.
Clearly, if the hypotheses \eqref{eq:hypothesisdist} can be established,
then simple recursion yields
\begin{align}
\mathsf{dist}\left(\boldsymbol{z}^{t},\boldsymbol{z}^{\star}\right) & \lesssim\rho^{t}\mathsf{dist}\left(\bm{z}^{0},\bm{z}^{\star}\right)+\frac{C_{1}\eta\left(\lambda+\sigma\sqrt{K\log m}\right)}{1-\rho}\nonumber \\
 & =\rho^{t}\mathsf{dist}\left(\bm{z}^{0},\bm{z}^{\star}\right)+\frac{C_{1}\left(\lambda+\sigma\sqrt{K\log m}\right)}{c_{\rho}},\qquad0\leq t\leq t_{0}\label{eq:dist-geometric}
\end{align}
as claimed. Moreover, one might naturally wonder why we are in need
of the additional hypotheses (\ref{eq:hypothesisincoherence1}) and
(\ref{eq:hypothesisincoherence2}) that might seem irrelevant at first
glance. As it turns out, these two hypotheses --- which characterize
certain incoherence conditions of the iterates w.r.t.~the design
vectors --- play a pivotal role in the analysis, as they enable some
sort of ``restricted strong convexity'' that proves crucial for
guaranteeing linear convergence.

In addition, the analysis also relies upon the following important
properties of the initialization, which we shall establish momentarily:
\begin{align}
\mathsf{dist}\left(\boldsymbol{z}^{0},\boldsymbol{z}^{\star}\right) & \lesssim\sqrt{\frac{\mu^{2}K\log m}{m}}+\sigma\sqrt{K\log m},\label{eq:initialization-z0}\\
\max_{1\leq j\leq m}\left|\boldsymbol{a}_{j}^{\mathsf{H}}\big(\widetilde{\boldsymbol{x}}^{0}-\boldsymbol{x}^{\star}\big)\right| & \lesssim\sqrt{\frac{\mu^{2}K\log^{2}m}{m}}+\sigma\sqrt{K}\log m,\label{eq:initialization-incoh1}\\
\max_{1\leq l\le m}\big|\bm{b}_{l}^{\mathsf{H}}\widetilde{\bm{h}}^{0}\big| & \lesssim\frac{\mu\log^{2}m}{\sqrt{m}}+\sigma,\label{eq:initialization-incoh2}\\
\big||\alpha^{0}|-1\big| & \leq1/4.\label{eq:initialization-alpha0}
\end{align}
\end{subequations}

\subsection{Preliminaries}

Before proceeding to the proof, we gather several preliminary facts
that will be useful throughout.

\subsubsection{Wirtinger calculus and notation\label{subsec:Wirtinger-calculus}}

Given that this problem concerns complex-valued vectors/matrices,
we find it convenient to work with Wirtinger calculus; see \citet[Section 6]{candes2015phase}
and \citet[Section D.3.1]{ma2017implicit} for a brief introduction.
Here, we shall simply record below the expressions for the Wirtinger
gradient and the Wirtinger Hessian w.r.t.~the objective function
$f(\cdot)$ defined in (\ref{eq:objncvx}):\begin{subequations}\label{eq:whessian}
\begin{align}
\nabla_{\boldsymbol{h}}f\left(\boldsymbol{h},\boldsymbol{x}\right)= & \sum_{j=1}^{m}\left(\boldsymbol{b}_{j}^{\mathsf{H}}\boldsymbol{hx}^{\mathsf{H}}\boldsymbol{a}_{j}-y_{j}\right)\boldsymbol{b}_{j}\boldsymbol{a}_{j}^{\mathsf{H}}\boldsymbol{x}+\lambda\bm{h},\\
\nabla_{\boldsymbol{x}}f\left(\boldsymbol{h},\boldsymbol{x}\right)= & \sum_{j=1}^{m}\overline{\left(\boldsymbol{b}_{j}^{\mathsf{H}}\boldsymbol{hx}^{\mathsf{H}}\boldsymbol{a}_{j}-y_{j}\right)}\boldsymbol{a}_{j}\boldsymbol{b}_{j}^{\mathsf{H}}\boldsymbol{h}+\lambda\bm{x},\\
\nabla^{2}f\left(\boldsymbol{h},\boldsymbol{x}\right)= & \left[\begin{array}{cc}
\boldsymbol{A} & \boldsymbol{B}\\
\boldsymbol{B}^{\mathsf{H}} & \overline{\boldsymbol{A}}
\end{array}\right],
\end{align}
\end{subequations}where
\begin{align*}
\boldsymbol{A}:= & \left[\begin{array}{cc}
\sum_{j=1}^{m}\left|\boldsymbol{a}_{j}^{\mathsf{H}}\boldsymbol{x}\right|{}^{2}\boldsymbol{b}_{j}\boldsymbol{b}_{j}^{\mathsf{H}}+\lambda & \sum_{j=1}^{m}\left(\boldsymbol{b}_{j}^{\mathsf{H}}\boldsymbol{hx}^{\mathsf{H}}\boldsymbol{a}_{j}-y_{j}\right)\boldsymbol{b}_{j}\boldsymbol{a}_{j}^{\mathsf{H}}\\
\sum_{j=1}^{m}\left[\left(\boldsymbol{b}_{j}^{\mathsf{H}}\boldsymbol{hx}^{\mathsf{H}}\boldsymbol{a}_{j}-y_{j}\right)\boldsymbol{b}_{j}\boldsymbol{a}_{j}^{\mathsf{H}}\right]^{\mathsf{H}} & \sum_{j=1}^{m}\left|\boldsymbol{b}_{j}^{\mathsf{H}}\boldsymbol{h}\right|{}^{2}\boldsymbol{a}_{j}\boldsymbol{a}_{j}^{\mathsf{H}}+\lambda
\end{array}\right]\in\mathbb{C}^{2K\times2K},\\
\boldsymbol{B}:= & \left[\begin{array}{cc}
\boldsymbol{0} & \sum_{j=1}^{m}\boldsymbol{b}_{j}\boldsymbol{b}_{j}^{\mathsf{H}}\boldsymbol{h}\left(\boldsymbol{a}_{j}\boldsymbol{a}_{j}^{\mathsf{H}}\boldsymbol{x}\right)^{\mathsf{H}}\\
\sum_{j=1}^{m}\boldsymbol{a}_{j}\boldsymbol{a}_{j}^{\mathsf{H}}\boldsymbol{x}\left(\boldsymbol{b}_{j}\boldsymbol{b}_{j}^{\mathsf{H}}\boldsymbol{h}\right)^{\mathsf{H}} & \boldsymbol{0}
\end{array}\right]\in\mathbb{C}^{2K\times2K}.
\end{align*}
Throughout this paper, we shall often use $f\left(\boldsymbol{h},\boldsymbol{x}\right)$
and $f\left(\boldsymbol{z}\right)$ interchangeably for any $\bm{z}=\footnotesize\left[\begin{array}{c}
\bm{h}\\
\bm{x}
\end{array}\right]$, whenever it is clear from the context.

Before proceeding, we present two useful properties of the operator $\mathcal{A}$ and
the design vectors $\{\bm{b}_{j}\}_{j=1}^{m}$.

\begin{lemma}\label{lemma:normbound}For $\mathcal{A}$ defined in
\eqref{eq:operatordef}, with probability at least $1-m^{-\gamma}$,
\[
\left\Vert \mathcal{A}\right\Vert \leq\sqrt{2K\log K+\gamma\log m}.
\]
\end{lemma}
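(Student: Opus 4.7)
The plan is to reduce the operator-norm supremum over the unit Frobenius-norm ball in $\mathbb{C}^{K \times K}$ to a scalar tail bound on $\max_{1 \le j \le m} \|\bm{a}_j\|_2^2$ via one clean Cauchy--Schwarz step. The reduction uses only the orthonormality relation $\bm{B}^{\mathsf{H}} \bm{B} = \bm{I}_K$ supplied by the partial-Fourier design, so the $\{\bm{b}_j\}$'s effectively drop out of the problem and one is left with an elementary concentration question about a single chi-squared-type random variable.

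Concretely, starting from the identity
\[
\|\mathcal{A}\|^2 \;=\; \sup_{\|\bm{Z}\|_{\mathrm{F}} = 1} \sum_{j=1}^m \big|\bm{b}_j^{\mathsf{H}} \bm{Z} \bm{a}_j\big|^2,
\]
I would apply Cauchy--Schwarz to each summand to obtain $|\bm{b}_j^{\mathsf{H}} \bm{Z} \bm{a}_j|^2 \le \|\bm{Z}^{\mathsf{H}} \bm{b}_j\|_2^2 \cdot \|\bm{a}_j\|_2^2$, then pull the maximum out in front and sum in $j$:
\[
\sum_{j=1}^m |\bm{b}_j^{\mathsf{H}} \bm{Z} \bm{a}_j|^2 \;\le\; \Big(\max_{1 \le j \le m} \|\bm{a}_j\|_2^2\Big) \sum_{j=1}^m \|\bm{Z}^{\mathsf{H}} \bm{b}_j\|_2^2 \;=\; \Big(\max_{1 \le j \le m} \|\bm{a}_j\|_2^2\Big) \mathrm{tr}\Big(\bm{Z}^{\mathsf{H}} \Big(\sum_{j=1}^m \bm{b}_j \bm{b}_j^{\mathsf{H}}\Big) \bm{Z}\Big).
\]
The inner sum collapses to $\bm{B}^{\mathsf{H}} \bm{B} = \bm{I}_K$, so the trace equals $\|\bm{Z}\|_{\mathrm{F}}^2 = 1$. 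Taking the supremum in $\bm{Z}$ therefore yields the deterministic reduction $\|\mathcal{A}\|^2 \le \max_{1 \le j \le m} \|\bm{a}_j\|_2^2$.

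For the remaining tail bound, note that each $|a_{j,k}|^2$ is independently $\mathrm{Exp}(1)$-distributed, so the moment generating function of $\|\bm{a}_j\|_2^2$ is $(1-s)^{-K}$ for $s \in (0,1)$, and a standard Chernoff calculation (optimizing at $s \approx 1 - K/t$) yields $\mathbb{P}(\|\bm{a}_j\|_2^2 \ge t) \le (t/K)^K e^{K - t}$. Plugging in $t = 2K \log K + \gamma \log m$ makes each tail at most $m^{-1-\gamma}$ (the $e^{-t}$ factor contributes roughly $K^{-2K} m^{-\gamma}$, which comfortably dominates the polynomial-in-$\log K$ prefactor $(t/K)^K$), and a union bound over $1 \le j \le m$ produces the claimed probability at least $1 - m^{-\gamma}$. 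The proof is essentially routine; the only piece requiring any care is tuning the Chernoff parameter to absorb the $2^K$-type prefactor cleanly into the $2K \log K$ term so that the stated form of the bound emerges without extraneous constants.
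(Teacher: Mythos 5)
The paper's ``proof'' of this lemma is a single-line citation to \cite[Lemma 5.12]{li2019rapid}, so there is no in-paper argument to compare against; I can only assess your proposal on its own terms.

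Your deterministic reduction is clean and correct: Cauchy--Schwarz gives $|\bm{b}_j^{\mathsf{H}}\bm{Z}\bm{a}_j|^2\le\|\bm{Z}^{\mathsf{H}}\bm{b}_j\|_2^2\,\|\bm{a}_j\|_2^2$, and since $\sum_j\bm{b}_j\bm{b}_j^{\mathsf{H}}=\bm{B}^{\mathsf{H}}\bm{B}=\bm{I}_K$ the trace collapses, yielding $\|\mathcal{A}\|^2\le\max_j\|\bm{a}_j\|_2^2$. The identification $\|\bm{a}_j\|_2^2\sim\mathrm{Gamma}(K,1)$ and the Chernoff bound $\mathbb{P}(\|\bm{a}_j\|_2^2\ge t)\le(t/K)^K e^{K-t}$ (valid for $t\ge K$) are also correct.

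The gap is precisely in the step you flag as ``essentially routine.'' Writing the Chernoff bound as $(t/K)^K e^{K-t}=(et/K^3)^K\cdot K^{-2K}e^{-t}$ and substituting $t=2K\log K+\gamma\log m$ gives $e^{-t}=K^{-2K}m^{-\gamma}$, so the per-index tail is $(et/K^3)^K\cdot m^{-\gamma}$. The union bound over $j$ then requires $(et/K^3)^K\le m^{-1}$, equivalently $K\log\bigl(K^3/(et)\bigr)\ge\log m$ --- an extra constraint that is \emph{not} automatic and is in fact violated when $K$ is small relative to $\log m$. A concrete counterexample: with $K=2$, $m=1000$, $\gamma=2$ you have $t\approx16.6$, the exact Gamma$(2,1)$ tail is $(1+t)e^{-t}\approx1.1\times10^{-6}$, and after the union bound over $m=1000$ indices you get $\approx1.1\times10^{-3}$, which is three orders of magnitude larger than the claimed $m^{-\gamma}=10^{-6}$. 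Your parenthetical assertion that the $e^{-t}$ factor ``comfortably dominates'' the prefactor is therefore wrong: it yields $m^{-\gamma}$, not the $m^{-\gamma-1}$ needed to absorb the union bound.

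There is also a second, more structural issue worth naming: the Cauchy--Schwarz reduction is genuinely lossy in the same regime. For $K=1$ one has $\|\mathcal{A}\|^2=\frac{1}{m}\sum_j|a_j|^2\approx1$ (here $|b_j|^2=1/m$), while $\max_j|a_j|^2\approx\log m$, so $\|\mathcal{A}\|^2$ can be far smaller than $\max_j\|\bm{a}_j\|_2^2$. Consequently, even a sharp tail bound on $\max_j\|\bm{a}_j\|_2^2$ cannot recover the lemma's stated threshold $2K\log K+\gamma\log m$ for all $K,m,\gamma$ via this route; one either needs a less wasteful bound on $\|\mathcal{A}\|$ (e.g.~a matrix-concentration argument exploiting $\sum_j\bm{b}_j\bm{b}_j^{\mathsf{H}}=\bm{I}_K$ more finely), or one needs to restrict to the regime $K\log K\gtrsim\log m$, which is implicit in the paper's sample-complexity setting but not in the statement of the lemma. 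Either way, the probability calculation as you wrote it needs to be carried out explicitly rather than waved through.
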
\begin{proof}See \citet[Lemma 5.12]{li2019rapid}.\end{proof}

\begin{lemma}\label{lemma:dftbound}For any $m\geq3$ and any $1\leq l\leq m$,
we have 
\[
\sum_{j=1}^{m}\left|\bm{b}_{l}^{\mathsf{H}}\bm{b}_{j}\right|\leq4\log m.
\]
\end{lemma}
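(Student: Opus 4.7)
The plan is to exploit the explicit DFT structure of the matrix $\bm{B}$ in order to write each inner product $\bm{b}_l^{\mathsf{H}}\bm{b}_j$ as a truncated geometric sum of roots of unity, and then bound the sum over $j$ using a harmonic-series estimate. Concretely, since $\bm{B}$ consists of the first $K$ columns of the unitary DFT matrix $\bm{F}\in\mathbb{C}^{m\times m}$, each vector $\bm{b}_j\in\mathbb{C}^K$ has entries $(\bm{b}_j)_k = \frac{1}{\sqrt{m}}e^{2\pi\mathrm{i}(j-1)(k-1)/m}$ (up to conjugation conventions), so
\[
\bm{b}_l^{\mathsf{H}}\bm{b}_j \;=\; \frac{1}{m}\sum_{k=0}^{K-1} e^{2\pi\mathrm{i}(j-l)k/m}.
\]
Summing this geometric series, for $j=l$ we obtain $K/m\le 1$, while for $j\ne l$ we have
\[
\bigl|\bm{b}_l^{\mathsf{H}}\bm{b}_j\bigr|\;=\;\frac{1}{m}\,\biggl|\frac{\sin(\pi(j-l)K/m)}{\sin(\pi(j-l)/m)}\biggr|.
\]

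Next I would reduce the index $j-l$ modulo $m$ to an integer $d$ with $|d|\le m/2$, bound the numerator trivially by $1$, and use the elementary inequality $|\sin(\pi d/m)|\ge 2|d|/m$ (valid for $|d|\le m/2$) in the denominator. This yields $|\bm{b}_l^{\mathsf{H}}\bm{b}_j|\le 1/(2|d|)$ for $j\ne l$. Since the shifts $j-l\pmod m$ run through each nonzero residue exactly once as $j$ varies over $\{1,\dots,m\}\setminus\{l\}$, the total is bounded by
\[
\sum_{j=1}^m |\bm{b}_l^{\mathsf{H}}\bm{b}_j|\;\le\; 1 + \sum_{d=1}^{\lfloor m/2\rfloor}\frac{1}{d} \;\le\; 1 + \bigl(1+\log(m/2)\bigr)\;\le\; 2+\log m.
\]

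Finally I would check that $2+\log m\le 4\log m$ whenever $m\ge 3$ (equivalent to $\log m\ge 2/3$, which holds since $\log 3>1$), which yields the claimed bound. The proof is essentially a direct computation, and I do not anticipate any real obstacle; the only minor subtlety is the symmetrization step that replaces $j-l\in\{1,\dots,m-1\}$ by the centered residue $d\in\{-\lfloor m/2\rfloor,\dots,\lfloor m/2\rfloor\}\setminus\{0\}$ so that the lower bound $|\sin(\pi d/m)|\ge 2|d|/m$ is applicable — a standard manipulation but worth writing out carefully for odd versus even $m$.
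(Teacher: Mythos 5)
Your proof is correct and follows essentially the same route as the source the paper cites (Lemma~48 of ma2017implicit), which the paper invokes without reproducing the argument: write $\bm{b}_l^{\mathsf{H}}\bm{b}_j$ as a truncated geometric sum, bound it via the Dirichlet-kernel identity together with Jordan's inequality $\sin(\pi d/m)\ge 2d/m$, and finish with a harmonic-sum estimate. The only (minor) point worth writing out, as you note, is the even-versus-odd-$m$ bookkeeping when recentering $j-l$ to $|d|\le m/2$; all inequalities check out and the final numeric verification $2+\log(m/2)\le 4\log m$ for $m\ge 3$ is valid.
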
\begin{proof}See \citet[Lemma 48]{ma2017implicit}.\end{proof}

\subsubsection{Leave-one-out auxiliary sequences}

The key to establishing the incoherence hypotheses (\ref{eq:hypothesisincoherence1})
and (\ref{eq:hypothesisincoherence2}) is to introduce a collection
of auxiliary leave-one-out sequences --- an approach first introduced
by \cite{ma2017implicit}. Specifically, for each $1\leq l\leq m$,
define the leave-one-out loss function as follows
\[
f^{(l)}\left(\boldsymbol{h},\boldsymbol{x}\right)\coloneqq\sum_{j:j\neq l}\left|\boldsymbol{b}_{j}^{\mathsf{H}}\boldsymbol{h}\boldsymbol{x}^{\mathsf{H}}\boldsymbol{a}_{j}-y_{j}\right|^{2}+\lambda\left\Vert \bm{h}\right\Vert _{2}^{2}+\lambda\left\Vert \bm{x}\right\Vert _{2}^{2},
\]
which is obtained by discarding the $l$th sample. We then generate
the auxiliary sequence $\{\bm{h}^{(t),l},\bm{x}^{(t),l}\}_{t\geq0}$
by running the same nonconvex algorithm w.r.t.~$f^{(l)}(\cdot,\cdot)$,
as summarized in Algorithm \ref{alg:gd-mc-ncvx-LOO}. In a nutshell,
the resulting leave-one-out sequence $\{\bm{h}^{(t),l},\bm{x}^{(t),l}\}_{t\geq0}$
is statistically independent from the design vector $\bm{a}_{l}$
and is expected to stay exceedingly close to the original sequence
(given that only a single sample is dropped), which in turn facilitate
the analysis of the correlation of $\bm{a}_{l}$ and $\bm{x}^{t}$
as claimed in (\ref{eq:hypothesisincoherence1}). In the mean time,
this strategy also proves useful in controlling the correlation of
$\bm{b}_{l}$ and $\bm{h}^{t}$ as in (\ref{eq:hypothesisincoherence2}),
albeit with more delicate arguments.

\begin{algorithm}[h]
\caption{The $l$th leave-one-out sequence for nonconvex blind deconvolution}

\label{alg:gd-mc-ncvx-LOO}\begin{algorithmic}

\STATE \textbf{{Input}}: $\left\{ \bm{a}_{j}\right\} _{1\leq j\leq m,j\neq l}$,
$\left\{ \bm{b}_{j}\right\} _{1\leq j\leq m,j\neq l}$ and $\left\{ y_{j}\right\} _{1\leq j\leq m,j\neq l}$.

\STATE \textbf{{Spectral initialization}}: let $\sigma_{1}\left(\bm{M}^{\left(l\right)}\right)$,
$\check{\bm{h}}^{0,\left(l\right)}$ and $\check{\bm{x}}^{0,\left(l\right)}$
be the leading singular value, the leading left and right singular
vectors of 
\begin{equation}
\bm{M}^{\left(l\right)}\coloneqq\sum_{j:j\neq l}y_{j}\bm{b}_{j}\bm{a}_{j}^{\mathsf{H}},\label{eq:defn-Ml}
\end{equation}
respectively. Set $\bm{h}^{0,\left(l\right)}=\sqrt{\sigma_{1}\left(\bm{M}^{\left(l\right)}\right)}\,\check{\bm{h}}^{0,\left(l\right)}$
and $\bm{x}^{0,\left(l\right)}=\sqrt{\sigma_{1}\left(\bm{M}^{\left(l\right)}\right)}\,\check{\bm{x}}^{0,\left(l\right)}$.

\STATE \textbf{{Gradient updates}}: \textbf{for }$t=0,1,\ldots,t_{0}-1$
\textbf{do}

\STATE \vspace{-1em}
 \begin{subequations}\label{subeq:gradient_update_ncvx-1} 
\begin{align}
\begin{aligned}\left[\begin{array}{c}
\boldsymbol{h}^{t+1/2,\left(l\right)}\\
\boldsymbol{x}^{t+1/2,\left(l\right)}
\end{array}\right]= & \left[\begin{array}{c}
\boldsymbol{h}^{t,\left(l\right)}\\
\boldsymbol{x}^{t,\left(l\right)}
\end{array}\right]-\eta\left[\begin{array}{c}
\nabla_{\boldsymbol{h}}f^{\left(l\right)}\left(\boldsymbol{h}^{t},\boldsymbol{x}^{t}\right)\\
\nabla_{\boldsymbol{x}}f^{\left(l\right)}\left(\boldsymbol{h}^{t},\boldsymbol{x}^{t}\right)
\end{array}\right],\\
\left[\begin{array}{c}
\boldsymbol{h}^{t+1,\left(l\right)}\\
\boldsymbol{x}^{t+1,\left(l\right)}
\end{array}\right]= & \left[\begin{array}{c}
\sqrt{\frac{\left\Vert \bm{x}^{t+1/2,\left(l\right)}\right\Vert _{2}}{\left\Vert \bm{h}^{t+1/2,\left(l\right)}\right\Vert _{2}}}\boldsymbol{h}^{t+1/2,\left(l\right)}\\
\sqrt{\frac{\left\Vert \bm{h}^{t+1/2,\left(l\right)}\right\Vert _{2}}{\left\Vert \bm{x}^{t+1/2,\left(l\right)}\right\Vert _{2}}}\boldsymbol{x}^{t+1/2,\left(l\right)}
\end{array}\right].
\end{aligned}
\label{eq:defn-WF-LOO-BD}
\end{align}
\end{subequations}

\end{algorithmic} 
\end{algorithm}

Similar to the notation adopted for the original sequence, we shall
define the alignment parameter for the leave-one-out sequence as follows\begin{subequations}
\begin{align}
\alpha^{t,(l)} & \coloneqq\arg\min_{\alpha\in\mathbb{C}}\left\{ \left\Vert \tfrac{1}{\overline{\alpha}}\boldsymbol{h}^{t,(l)}-\boldsymbol{h}^{\star}\right\Vert _{2}^{2}+\big\|\alpha\boldsymbol{x}^{t,(l)}-\boldsymbol{x}^{\star}\big\|_{2}^{2}\right\} ,\label{eq:alignment-LOO}\\
\alpha^{t+1/2,(l)} & \coloneqq\arg\min_{\alpha\in\mathbb{C}}\left\{ \left\Vert \tfrac{1}{\overline{\alpha}}\boldsymbol{h}^{t+1/2,(l)}-\boldsymbol{h}^{\star}\right\Vert _{2}^{2}+\big\|\alpha\boldsymbol{x}^{t+1/2,(l)}-\boldsymbol{x}^{\star}\big\|_{2}^{2}\right\} ,
\end{align}
\end{subequations}along with the properly rescaled iterates\begin{subequations}
\begin{align}
\widetilde{\boldsymbol{z}}^{t,\left(l\right)}=\left[\begin{array}{c}
\widetilde{\boldsymbol{h}}^{t,(l)}\\
\widetilde{\boldsymbol{x}}^{t,(l)}
\end{array}\right] & :=\left[\begin{array}{c}
\frac{1}{\overline{\alpha^{t,\left(l\right)}}}\boldsymbol{h}^{t,(l)}\\
\alpha^{t,(l)}\boldsymbol{x}^{t,(l)}
\end{array}\right],\label{eq:LOO-zt}\\
\widetilde{\boldsymbol{z}}^{t+1/2,\left(l\right)}=\left[\begin{array}{c}
\widetilde{\boldsymbol{h}}^{t+1/2,(l)}\\
\widetilde{\boldsymbol{x}}^{t+1/2,(l)}
\end{array}\right] & :=\left[\begin{array}{c}
\frac{1}{\overline{\alpha^{t+1/2,\left(l\right)}}}\boldsymbol{h}^{t+1/2,(l)}\\
\alpha^{t+1/2,(l)}\boldsymbol{x}^{t+1/2,(l)}
\end{array}\right].
\end{align}
\end{subequations}Further we define the alignment parameter between
$\boldsymbol{z}^{t,\left(l\right)}$ and $\widetilde{\bm{z}}^{t}$
as\begin{subequations}
\begin{align}
\alpha_{\text{\text{mutual}}}^{t,\left(l\right)} & \coloneqq\arg\min_{\alpha\in\mathbb{C}}\left\{ \left\Vert \tfrac{1}{\overline{\alpha}}\boldsymbol{h}^{t,\left(l\right)}-\tfrac{1}{\overline{\alpha^{t}}}\boldsymbol{h}^{t}\right\Vert _{2}^{2}+\left\Vert \alpha\boldsymbol{x}^{t,\left(l\right)}-\alpha^{t}\boldsymbol{x}^{t}\right\Vert _{2}^{2}\right\} ,\label{eq:defn-alpha-mutual-loo}\\
\alpha_{\text{\text{mutual}}}^{t+1/2,\left(l\right)} & \coloneqq\arg\min_{\alpha\in\mathbb{C}}\left\{ \left\Vert \tfrac{1}{\overline{\alpha}}\boldsymbol{h}^{t+1/2,\left(l\right)}-\tfrac{1}{\overline{\alpha^{t+1/2}}}\boldsymbol{h}^{t+1/2}\right\Vert _{2}^{2}+\left\Vert \alpha\boldsymbol{x}^{t+1/2,\left(l\right)}-\alpha^{t+1/2}\boldsymbol{x}^{t+1/2}\right\Vert _{2}^{2}\right\} .
\end{align}
\end{subequations}Hereafter, we shall also denote\begin{subequations}
\begin{align}
\widehat{\boldsymbol{z}}^{t,\left(l\right)}:=\left[\begin{array}{c}
\widehat{\boldsymbol{h}}^{t,\left(l\right)}\\
\widehat{\boldsymbol{x}}^{t,\left(l\right)}
\end{array}\right] & =\left[\begin{array}{c}
\tfrac{1}{\overline{\alpha_{\text{mutual}}^{t,\left(l\right)}}}\boldsymbol{h}^{t,\left(l\right)}\\
\alpha_{\text{mutual}}^{t,\left(l\right)}\boldsymbol{x}^{t,\left(l\right)}
\end{array}\right],\label{eq:defn-zt-l-mutual}\\
\widehat{\boldsymbol{z}}^{t+1/2,\left(l\right)}:=\left[\begin{array}{c}
\widehat{\boldsymbol{h}}^{t+1/2,\left(l\right)}\\
\widehat{\boldsymbol{x}}^{t+1/2,\left(l\right)}
\end{array}\right] & =\left[\begin{array}{c}
\tfrac{1}{\overline{\alpha_{\text{mutual}}^{t+1/2,\left(l\right)}}}\boldsymbol{h}^{t+1/2,\left(l\right)}\\
\alpha_{\text{mutual}}^{t+1/2,\left(l\right)}\boldsymbol{x}^{t+1/2,\left(l\right)}
\end{array}\right].
\end{align}
\end{subequations}

\subsubsection{Additional induction hypotheses}

In addition to the set of induction hypotheses already listed in \eqref{sec:hypotheses-ncvx},
we find it convenient to include the following hypotheses concerning
the leave-one-out sequences. Specifically, for any $0<t\leq t_{0}$
and any $1\leq l\leq m$, the hypotheses claim that\begin{subequations}\label{subeq:Additional-induction-hypotheses}
\begin{align}
\mathsf{dist}\big(\boldsymbol{z}^{t,\left(l\right)},\widetilde{\bm{z}}^{t}\big) & \leq C_{2}\left(\frac{\mu}{\sqrt{m}}\sqrt{\frac{\mu^{2}K\log^{9}m}{m}}+\frac{\sigma}{\log^{2}m}\right)\label{eq:hypothesisloo}\\
\big\|\widetilde{\bm{z}}^{t,\left(l\right)}-\widetilde{\bm{z}}^{t}\big\|_{2} & \lesssim C_{2}\left(\frac{\mu}{\sqrt{m}}\sqrt{\frac{\mu^{2}K\log^{9}m}{m}}+\frac{\sigma}{\log^{2}m}\right)\label{eq:ztilde-l-ztilde-t-L2}\\
\mathsf{dist}\big(\boldsymbol{z}^{0,(l)},\boldsymbol{z}^{\star}\big) & \lesssim\sqrt{\frac{\mu^{2}K\log m}{m}}+\sigma\sqrt{K\log m}\label{eq:hypothesis-loo-dist-truth}\\
\mathsf{dist}\big(\bm{z}^{0,\left(l\right)},\widetilde{\bm{z}}^{0}\big) & \lesssim\frac{\mu}{\sqrt{m}}\sqrt{\frac{\mu^{2}K\log^{5}m}{m}}+\frac{\sigma}{\log^{2}m}\label{eq:hypothesis-loo-dist-mutual}
\end{align}
\end{subequations}for some constant $C_{2}\gg C_{4}^{2}$. Furthermore,
there are several immediate consequences of the hypotheses \eqref{sec:hypotheses-ncvx}
and \eqref{subeq:Additional-induction-hypotheses} that are also useful
in the analysis, which we gather as follows. Note that the notation
$(\widetilde{\bm{h}}^{t},\widetilde{\bm{x}}^{t})$, $(\widehat{\bm{h}}^{t},\widehat{\bm{x}}^{t})$,
$(\widehat{\bm{h}}^{t,(l)},\widehat{\bm{x}}^{t,(l)})$ and $\alpha^{t}$
has been defined in \eqref{eq:defn-htilde-xtilde}, \eqref{eq:defn-hhat-xhat},
\eqref{eq:defn-zt-l-mutual} and \eqref{eq:defn-alignment-alphat},
respectively. 

\begin{lemma}\label{lem:consequence}Instate the notation and assumptions
in Theorem \ref{thm:nonconvex}. For $t\geq0$, suppose that the hypotheses
\eqref{sec:hypotheses-ncvx} and \eqref{subeq:Additional-induction-hypotheses}
hold in the first $t$ iterations. Then there exist some constants
$C_{1},C>0$ such that for any $1\le l\leq m$, \begin{subequations}
\begin{align}
\mathsf{dist}\left(\boldsymbol{z}^{t},\boldsymbol{z}^{\star}\right) & \leq C_{1}\left(\sqrt{\frac{\mu^{2}K\log m}{m}}+\lambda+\sigma\sqrt{K\log m}\right),\label{eq:dist-bound}\\
\left\Vert \bm{h}^{t}\big(\bm{x}^{t}\big)^{\mathsf{H}}-\boldsymbol{h}^{\star}\boldsymbol{x}^{\mathsf{\star H}}\right\Vert  & \leq C\left(\sqrt{\frac{\mu^{2}K\log m}{m}}+\lambda+\sigma\sqrt{K\log m}\right),\label{eq:hx-normbound}\\
\big\|\widetilde{\boldsymbol{z}}^{t,\left(l\right)}-\boldsymbol{z}^{\star}\big\|_{2} & \leq2C_{1}\left(\sqrt{\frac{\mu^{2}K\log m}{m}}+\lambda+\sigma\sqrt{K\log m}\right),\label{eq:conseq-2}\\
\frac{1}{2}\leq\left\Vert \widetilde{\bm{x}}^{t}\right\Vert _{2}\leq\frac{3}{2}, & \qquad\frac{1}{2}\le\big\|\widetilde{\bm{h}}^{t}\big\|_{2}\leq\frac{3}{2},\label{eq:tilde-hx}\\
\frac{1}{2}\le\big\|\widetilde{\bm{x}}^{t,\left(l\right)}\big\|_{2}\leq\frac{3}{2}, & \qquad\frac{1}{2}\le\big\|\widetilde{\bm{h}}^{t,\left(l\right)}\big\|_{2}\leq\frac{3}{2},\label{eq:tilde-hx-loo}\\
\frac{1}{2}\le\big\|\widehat{\bm{x}}^{t,\left(l\right)}\big\|_{2}\leq\frac{3}{2}, & \qquad\frac{1}{2}\le\big\|\widehat{\bm{h}}^{t,\left(l\right)}\big\|_{2}\leq\frac{3}{2}.\label{eq:hat-hx-loo}
\end{align}
In addition, if $t>0$, then one also has
\begin{align}
\big\|\widehat{\bm{z}}^{t-1/2}-\boldsymbol{z}^{\star}\big\|_{2} & \leq C\left(\sqrt{\frac{\mu^{2}K\log m}{m}}+\lambda+\sigma\sqrt{K\log m}\right).\label{eq:conseq-3}
\end{align}
\end{subequations}\end{lemma}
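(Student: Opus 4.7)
The plan is to derive all the bounds in Lemma \ref{lem:consequence} as direct consequences of unrolling the linear recursion in \eqref{eq:hypothesisdist}, invoking the initialization bound \eqref{eq:initialization-z0}, and then chaining everything together with triangle inequalities (both the usual one and the one appropriate for the scaling-invariant metric $\mathsf{dist}$). Throughout, I would use the convention $\|\bm{h}^{\star}\|_{2}=\|\bm{x}^{\star}\|_{2}=1$ along with the standing conditions $m\gtrsim\mu^{2}K\log^{9}m$ and $\sigma\sqrt{K\log^{5}m}\lesssim 1$ (combined with $\lambda\asymp\sigma\sqrt{K\log m}$) to certify that the right-hand side of \eqref{eq:dist-bound} is bounded by some small absolute constant, say $1/4$, which is exactly what makes the norm bounds \eqref{eq:tilde-hx}--\eqref{eq:hat-hx-loo} collapse to the clean $1/2$--$3/2$ form.

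First, to establish \eqref{eq:dist-bound}, I would unroll the one-step contraction in \eqref{eq:hypothesisdist} exactly as in \eqref{eq:dist-geometric}, discard the factor $\rho^{t}\le 1$, and substitute the initialization bound \eqref{eq:initialization-z0} for $\mathsf{dist}(\bm{z}^{0},\bm{z}^{\star})$; after absorbing constants this is precisely \eqref{eq:dist-bound}. For the operator-norm bound \eqref{eq:hx-normbound}, I would observe that the $\alpha^{t}$ scalings cancel in the rank-one product, so $\bm{h}^{t}(\bm{x}^{t})^{\mathsf{H}}=\widetilde{\bm{h}}^{t}(\widetilde{\bm{x}}^{t})^{\mathsf{H}}$, and then split
\begin{align*}
\bigl\|\widetilde{\bm{h}}^{t}(\widetilde{\bm{x}}^{t})^{\mathsf{H}}-\bm{h}^{\star}\bm{x}^{\star\mathsf{H}}\bigr\|\le\bigl\|\widetilde{\bm{h}}^{t}-\bm{h}^{\star}\bigr\|_{2}\bigl\|\widetilde{\bm{x}}^{t}\bigr\|_{2}+\bigl\|\bm{h}^{\star}\bigr\|_{2}\bigl\|\widetilde{\bm{x}}^{t}-\bm{x}^{\star}\bigr\|_{2},
\end{align*}
bounding the two differences by $\mathsf{dist}(\bm{z}^{t},\bm{z}^{\star})$ via the definition \eqref{eq:defn-htilde-xtilde} and using $\|\widetilde{\bm{x}}^{t}\|_{2}\le 3/2$ from \eqref{eq:tilde-hx}.

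For \eqref{eq:tilde-hx}, the reverse triangle inequality gives $\bigl|\|\widetilde{\bm{x}}^{t}\|_{2}-1\bigr|=\bigl|\|\widetilde{\bm{x}}^{t}\|_{2}-\|\bm{x}^{\star}\|_{2}\bigr|\le\|\widetilde{\bm{x}}^{t}-\bm{x}^{\star}\|_{2}\le\mathsf{dist}(\bm{z}^{t},\bm{z}^{\star})$, and the standing assumptions force the right-hand side to be at most $1/2$; the $\widetilde{\bm{h}}^{t}$ side is identical. For the leave-one-out bound \eqref{eq:conseq-2}, I would use
\begin{align*}
\bigl\|\widetilde{\bm{z}}^{t,(l)}-\bm{z}^{\star}\bigr\|_{2}=\mathsf{dist}\bigl(\bm{z}^{t,(l)},\bm{z}^{\star}\bigr)\le\mathsf{dist}\bigl(\bm{z}^{t,(l)},\widetilde{\bm{z}}^{t}\bigr)+\mathsf{dist}(\bm{z}^{t},\bm{z}^{\star}),
\end{align*}
where the triangle inequality for $\mathsf{dist}$ follows by composing the scalings achieving the two individual minima; the first term is dominated by \eqref{eq:hypothesisloo} and the second by \eqref{eq:dist-bound}, which yields the factor of $2C_{1}$. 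The norm bound \eqref{eq:tilde-hx-loo} then follows from the same reverse-triangle argument applied to $\widetilde{\bm{h}}^{t,(l)}$ and $\widetilde{\bm{x}}^{t,(l)}$, and \eqref{eq:hat-hx-loo} from the chain
\begin{align*}
\bigl\|\widehat{\bm{h}}^{t,(l)}-\bm{h}^{\star}\bigr\|_{2}\le\bigl\|\widehat{\bm{h}}^{t,(l)}-\widetilde{\bm{h}}^{t}\bigr\|_{2}+\bigl\|\widetilde{\bm{h}}^{t}-\bm{h}^{\star}\bigr\|_{2}\le\mathsf{dist}\bigl(\bm{z}^{t,(l)},\widetilde{\bm{z}}^{t}\bigr)+\mathsf{dist}(\bm{z}^{t},\bm{z}^{\star}),
\end{align*}
where the first inequality in the last step invokes the definition \eqref{eq:defn-alpha-mutual-loo} of $\alpha_{\mathrm{mutual}}^{t,(l)}$ as the minimizer.

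Finally, \eqref{eq:conseq-3} is read off directly from the rightmost inequality in \eqref{eq:hypothesisdist} by bounding $\rho\,\mathsf{dist}(\bm{z}^{t-1},\bm{z}^{\star})$ via \eqref{eq:dist-bound} applied at index $t-1$, which is legal because the hypotheses are assumed to hold for all indices through $t$. The whole argument is essentially bookkeeping; the only mild subtlety I foresee is checking the triangle inequality for the scaling-invariant metric $\mathsf{dist}$ (composing individually optimal scalings provides a feasible candidate and hence an upper bound on the infimum) and verifying that the quantitative assumptions on $(m,\sigma)$ really do force the error in \eqref{eq:dist-bound} below $1/4$, so that all the reverse-triangle-inequality steps yield the clean constants $1/2$ and $3/2$ without additional slack.
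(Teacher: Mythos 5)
Your proof is correct and follows essentially the same bookkeeping as the paper's, with two mild deviations that are worth flagging.

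For \eqref{eq:conseq-2}, the paper invokes the $\ell_2$ hypothesis \eqref{eq:ztilde-l-ztilde-t-L2} directly and applies the plain triangle inequality for $\|\cdot\|_2$. You instead invoke \eqref{eq:hypothesisloo} and a ``triangle inequality for $\mathsf{dist}$.'' The inequality you actually wrote, $\mathsf{dist}(\bm{z}^{t,(l)},\bm{z}^{\star})\le\mathsf{dist}(\bm{z}^{t,(l)},\widetilde{\bm{z}}^{t})+\mathsf{dist}(\bm{z}^{t},\bm{z}^{\star})$, is valid, but note that the right-hand term equals $\|\widetilde{\bm{z}}^{t}-\bm{z}^{\star}\|_2$, so what you are really using is the mixed bound $\mathsf{dist}(\bm{z}_1,\bm{z}_3)\le\mathsf{dist}(\bm{z}_1,\bm{z}_2)+\|\bm{z}_2-\bm{z}_3\|_2$ (an immediate consequence of $\mathsf{dist}(\bm{z}_1,\cdot)$ being $1$-Lipschitz, i.e.\ $\mathsf{dist}(\bm{z}_1,\bm{z}_3)\le\|T_{\alpha^{\ast}}(\bm{z}_1)-\bm{z}_2\|_2+\|\bm{z}_2-\bm{z}_3\|_2$). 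Your stated justification via ``composing the scalings achieving the two individual minima'' is not generically correct for this pseudometric, since the map $(\bm{h},\bm{x})\mapsto(\bm{h}/\overline{\beta},\beta\bm{x})$ is an isometry only when $|\beta|=1$; it happens to go through here because the middle point $\widetilde{\bm{z}}^{t}$ is already optimally aligned with $\bm{z}^{\star}$ (so the second minimizing scaling is trivial). Either route gives the stated bound, with the $C_2$ term being lower order under $m\gtrsim\mu^2K\log^9 m$. For \eqref{eq:conseq-3}, you apply the one-step inequality in \eqref{eq:hypothesisdist} and bound $\mathsf{dist}(\bm{z}^{t-1},\bm{z}^{\star})$ by \eqref{eq:dist-bound} at index $t-1$, whereas the paper re-unrolls the full geometric sum; your version is slightly more economical and equally valid. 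Everything else --- the unrolling of \eqref{eq:hypothesisdist} with \eqref{eq:initialization-z0} for \eqref{eq:dist-bound}, cancellation of $\alpha^{t}$ in the rank-one product for \eqref{eq:hx-normbound}, reverse triangle inequalities for the norm brackets, and the $\alpha_{\mathrm{mutual}}^{t,(l)}$ chain for \eqref{eq:hat-hx-loo} --- matches the paper's argument.
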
\begin{proof}See Appendix \ref{subsec:Proof-of-Lemmaconsequence}.\end{proof}

\subsection{Inductive analysis}

In this subsection, we carry out the analysis by induction. 

\subsubsection{Step 1: Characterizing local geometry}

Similar to \citet[Lemma 14]{ma2017implicit}, local linear convergence
is made possible when some sort of restricted strong convexity and
smoothness are present simultaneously. To be specific, define the
following squared loss that excludes the regularization term
\begin{equation}
f_{\mathsf{reg}\text{-}\mathsf{free}}\left(\bm{z}\right)=f_{\mathsf{reg}\text{-}\mathsf{free}}\left(\bm{h},\boldsymbol{x}\right):=\sum_{j=1}^{m}\big|\boldsymbol{b}_{j}^{\mathsf{H}}\boldsymbol{hx}^{\mathsf{H}}\boldsymbol{a}_{j}-y_{j}\big|^{2}.\label{eq:reg-free-loss}
\end{equation}
Our result is this:

\begin{lemma}\label{lemma:geometry}Let $\delta:=c/\log^{2}m$ for
some sufficiently small constant $c>0$. Suppose that $m\geq C\mu^{2}K\log^{9}m$
for some sufficiently large constant $C>0$ and that $\sigma\sqrt{K\log^{5}m}\leq c_{1}$
for some sufficiently small constant $c_{1}>0$. Then with probability
$1-O\left(m^{-10}+e^{-K}\log m\right)$, one has
\begin{align*}
\boldsymbol{u}^{\mathsf{H}}\left[\bm{D}\nabla^{2}f\left(\boldsymbol{z}\right)+\nabla^{2}f\left(\boldsymbol{z}\right)\bm{D}\right]\boldsymbol{u} & \geq\left\Vert \boldsymbol{u}\right\Vert _{2}^{2}/8\quad\text{and}\\
\left\Vert \nabla^{2}f\left(\boldsymbol{z}\right)\right\Vert  & \leq4
\end{align*}
simultaneously for all points
\[
\boldsymbol{z}=\left[\begin{array}{c}
\boldsymbol{h}\\
\boldsymbol{x}
\end{array}\right],\quad\boldsymbol{u}=\left[\begin{array}{c}
\boldsymbol{h}_{1}-\boldsymbol{h}_{2}\\
\boldsymbol{x}_{1}-\boldsymbol{x}_{2}\\
\overline{\boldsymbol{h}_{1}-\boldsymbol{h}_{2}}\\
\overline{\boldsymbol{x}_{1}-\boldsymbol{x}_{2}}
\end{array}\right]\quad\text{and}\quad\bm{D}=\left[\begin{array}{cccc}
\gamma_{1}\bm{I}_{K}\\
 & \gamma_{2}\bm{I}_{K}\\
 &  & \gamma_{1}\bm{I}_{K}\\
 &  &  & \gamma_{2}\bm{I}_{K}
\end{array}\right]
\]
obeying the following properties:
\begin{itemize}
\item $\boldsymbol{z}$ satisfies
\begin{align*}
\max\left\{ \left\Vert \boldsymbol{h}-\boldsymbol{h}^{\star}\right\Vert _{2},\left\Vert \boldsymbol{x}-\boldsymbol{x}^{\star}\right\Vert _{2}\right\}  & \leq\delta,\\
\max_{1\leq j\leq m}\left|\boldsymbol{a}_{j}^{\mathsf{H}}\left(\boldsymbol{x}-\boldsymbol{x}^{\star}\right)\right| & \leq2C_{3}\tfrac{1}{\log^{3/2}m},\\
\max_{1\leq j\leq m}\left|\boldsymbol{b}_{j}^{\mathsf{H}}\boldsymbol{h}\right| & \leq2C_{4}\Big(\tfrac{\mu\log^{2}m}{\sqrt{m}}+\sigma\Big);
\end{align*}
\item $\bm{z}_{1}:=\left(\boldsymbol{h}_{1},\boldsymbol{x}_{1}\right)$
is aligned with $\bm{z}_{2}:=\left(\boldsymbol{h}_{2},\boldsymbol{x}_{2}\right)$
in the sense that $\|\bm{z}_{1}-\bm{z}_{2}\|_{2}=\mathsf{dist}(\bm{z}_{1},\bm{z}_{2})$;
in addition, they satisfy 
\[
\max\left\{ \left\Vert \boldsymbol{h}_{1}-\boldsymbol{h}^{\star}\right\Vert _{2},\left\Vert \boldsymbol{h}_{2}-\boldsymbol{h}^{\star}\right\Vert _{2},\left\Vert \boldsymbol{x}_{1}-\boldsymbol{x}^{\star}\right\Vert _{2},\left\Vert \boldsymbol{x}_{2}-\boldsymbol{x}^{\star}\right\Vert _{2}\right\} \leq\delta;
\]
\item $\gamma_{1},\gamma_{2}\in\mathbb{R}$ and obey
\[
\max\left\{ \left|\gamma_{1}-1\right|,\left|\gamma_{2}-1\right|\right\} \leq\delta.
\]
\end{itemize}
\end{lemma}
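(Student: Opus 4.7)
My plan is to reduce both bounds to concentration estimates for the sample operator $\mathcal{T}$ defined in \eqref{eq:operatordef}, combined with a direct algebraic exploitation of the conjugate structure of $\bm{u}$ and the alignment of $\bm{z}_{1}$ with $\bm{z}_{2}$. Writing $\bm{\delta}_{h}=\bm{h}_{1}-\bm{h}_{2}$ and $\bm{\delta}_{x}=\bm{x}_{1}-\bm{x}_{2}$, and using the block form of $\nabla^{2}f$ in \eqref{eq:whessian}, a direct computation yields
\begin{align*}
\bm{u}^{\mathsf{H}}\nabla^{2}f(\bm{z})\bm{u}
&=2\sum_{j=1}^{m}\Bigl(|\bm{a}_{j}^{\mathsf{H}}\bm{x}|^{2}|\bm{b}_{j}^{\mathsf{H}}\bm{\delta}_{h}|^{2}+|\bm{b}_{j}^{\mathsf{H}}\bm{h}|^{2}|\bm{a}_{j}^{\mathsf{H}}\bm{\delta}_{x}|^{2}\Bigr)+2\lambda\bigl(\|\bm{\delta}_{h}\|_{2}^{2}+\|\bm{\delta}_{x}\|_{2}^{2}\bigr)\\
&\quad+4\,\mathrm{Re}\sum_{j=1}^{m}(\bm{b}_{j}^{\mathsf{H}}\bm{h}\bm{x}^{\mathsf{H}}\bm{a}_{j}-y_{j})(\bm{\delta}_{h}^{\mathsf{H}}\bm{b}_{j})(\bm{a}_{j}^{\mathsf{H}}\bm{\delta}_{x})+4\,\mathrm{Re}\sum_{j=1}^{m}(\bm{b}_{j}^{\mathsf{H}}\bm{\delta}_{h})(\bm{\delta}_{x}^{\mathsf{H}}\bm{a}_{j})(\bm{a}_{j}^{\mathsf{H}}\bm{x})(\bm{b}_{j}^{\mathsf{H}}\bm{h})^{*},
\end{align*}
together with an analogous expression for $\bm{u}^{\mathsf{H}}\bm{D}\nabla^{2}f(\bm{z})\bm{u}$ that simply multiplies the two diagonal parts by $2\gamma_{1}$ and $2\gamma_{2}$ and the cross terms by $\gamma_{1}+\gamma_{2}$. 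Since $|\gamma_{i}-1|\le\delta\ll 1$, the scaled Hessian differs from $2\nabla^{2}f(\bm{z})$ only by a $\delta$-order perturbation, so it suffices to prove $\bm{u}^{\mathsf{H}}\nabla^{2}f(\bm{z})\bm{u}\ge\tfrac14\|\bm{u}\|_{2}^{2}$ with slack absorbing that perturbation.

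The first step is to replace the two ``diagonal'' sample sums by their expectations. Using the identity $\mathbb{E}[|\bm{a}_{j}^{\mathsf{H}}\bm{x}|^{2}\bm{b}_{j}\bm{b}_{j}^{\mathsf{H}}]=\|\bm{x}\|_{2}^{2}\bm{b}_{j}\bm{b}_{j}^{\mathsf{H}}$ and Lemma~\ref{lemma:normbound} applied to a standard $\varepsilon$-net argument, together with Lemma~\ref{lemma:dftbound} to bound one-sample magnitudes, I will show that on the high-probability event
\[
\Bigl|\sum_{j=1}^{m}|\bm{a}_{j}^{\mathsf{H}}\bm{x}|^{2}|\bm{b}_{j}^{\mathsf{H}}\bm{\delta}_{h}|^{2}-\|\bm{x}\|_{2}^{2}\|\bm{\delta}_{h}\|_{2}^{2}\Bigr|\lesssim\delta\,\|\bm{x}\|_{2}^{2}\|\bm{\delta}_{h}\|_{2}^{2},
\]
and symmetrically for the $\bm{\delta}_{x}$ term. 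The point here is that the conditions $\|\bm{x}-\bm{x}^{\star}\|_{2}\le\delta$, $\max_{j}|\bm{a}_{j}^{\mathsf{H}}(\bm{x}-\bm{x}^{\star})|\lesssim 1/\log^{3/2}m$ and $\max_{j}|\bm{b}_{j}^{\mathsf{H}}\bm{h}|\lesssim(\mu\log^{2}m)/\sqrt{m}+\sigma$ are precisely what is needed to truncate the heavy tails and invoke Bernstein-type bounds, reproducing the chain $\|\bm{x}\|_{2}^{2}\|\bm{\delta}_{h}\|_{2}^{2}\approx\|\bm{\delta}_{h}\|_{2}^{2}$.

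The main obstacle is controlling the two cross terms, which is where the \emph{alignment} of $\bm{z}_{1}$ with $\bm{z}_{2}$ enters. Writing $\bm{b}_{j}^{\mathsf{H}}\bm{h}\bm{x}^{\mathsf{H}}\bm{a}_{j}-y_{j}=\bm{b}_{j}^{\mathsf{H}}(\bm{h}-\bm{h}^{\star})\bm{x}^{\mathsf{H}}\bm{a}_{j}+\bm{b}_{j}^{\mathsf{H}}\bm{h}^{\star}(\bm{x}-\bm{x}^{\star})^{\mathsf{H}}\bm{a}_{j}-\xi_{j}$ and applying Cauchy--Schwarz, each factor is absorbed by the pointwise incoherence hypotheses on $\bm{z}$, by the noise bound $|\xi_{j}|\lesssim\sigma\sqrt{\log m}$ (valid uniformly under Assumption~\ref{assumptions:models}), and by the sub-Gaussian bounds on $|\bm{a}_{j}^{\mathsf{H}}\bm{\delta}_{x}|$ and $|\bm{b}_{j}^{\mathsf{H}}\bm{\delta}_{h}|$. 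The second cross term is treated analogously by concentrating $\sum_{j}\bm{b}_{j}\bm{b}_{j}^{\mathsf{H}}\bm{h}\bm{x}^{\mathsf{H}}\bm{a}_{j}\bm{a}_{j}^{\mathsf{H}}$ around $\bm{h}\bm{x}^{\mathsf{H}}$ via $\|\mathcal{T}^{\mathsf{debias}}\|\lesssim 1/\log^{2}m$ on the tangent space, and using alignment $\|\bm{z}_{1}-\bm{z}_{2}\|_{2}=\mathsf{dist}(\bm{z}_{1},\bm{z}_{2})$ to rule out the ambiguity direction $(\bm{h},-\bm{x})$ along which the cross contribution would otherwise cancel the diagonal contribution. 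Putting the two ingredients together, the diagonal part delivers $\tfrac12(\|\bm{\delta}_{h}\|_{2}^{2}+\|\bm{\delta}_{x}\|_{2}^{2})=\tfrac14\|\bm{u}\|_{2}^{2}$ to leading order, while all error terms are bounded by $O(\delta+1/\log^{2}m)\|\bm{u}\|_{2}^{2}$, leaving the claimed $\tfrac{1}{8}\|\bm{u}\|_{2}^{2}$ after absorbing the $\bm{D}$-perturbation.

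Finally, the upper bound $\|\nabla^{2}f(\bm{z})\|\le 4$ follows by the same concentration estimates applied without alignment: the diagonal operator norms are $(1+o(1))\|\bm{x}\|_{2}^{2}\le 2$ and $(1+o(1))\|\bm{h}\|_{2}^{2}\le 2$, the regularization contributes $\lambda\ll 1$, and the off-diagonal block $\bm{B}$ together with the cross term in $\bm{A}$ is bounded in operator norm by $O(\delta+\sigma\sqrt{K\log m})\ll 1$ by exactly the truncation-plus-Bernstein argument sketched above. Combining the block bounds via the triangle inequality yields the stated spectral bound.
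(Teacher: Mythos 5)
Your proposal takes a genuinely different route from the paper. The paper's proof does not re-derive restricted strong convexity from scratch; it writes $\nabla^2 f_{\mathsf{reg}\text{-}\mathsf{free}} = \nabla^2 f_{\mathsf{clean}} + \bm{M}$, where $f_{\mathsf{clean}}$ is the noiseless, regularization-free loss and $\bm{M}$ is the linear-in-noise perturbation, then \emph{cites} \cite[Lemma 14]{ma2017implicit} for $\bm{u}^{\mathsf{H}}[\bm{D}\nabla^2 f_{\mathsf{clean}}\bm{u} + \nabla^2 f_{\mathsf{clean}}\bm{D}]\bm{u} \geq \tfrac14\|\bm{u}\|_2^2$ and $\|\nabla^2 f_{\mathsf{clean}}\| \leq 3$, bounds the noise perturbation by $\mathcal{E}_{\mathsf{res}} \lesssim \sigma\sqrt{K\log m}\|\bm{u}\|_2^2$ via Lemma~\ref{lemma:noise}, and finally adds $\lambda \bm{I}$ for the ridge term. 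Your plan is to rebuild the clean-loss estimates directly via Bernstein-plus-truncation and nets.

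That replacement is where the gaps are. First, the concentration claim
\[
\Bigl|\sum_{j}|\bm{a}_{j}^{\mathsf{H}}\bm{x}|^{2}|\bm{b}_{j}^{\mathsf{H}}\bm{\delta}_{h}|^{2}-\|\bm{x}\|_{2}^{2}\|\bm{\delta}_{h}\|_{2}^{2}\Bigr|\lesssim\delta\,\|\bm{x}\|_{2}^{2}\|\bm{\delta}_{h}\|_{2}^{2}
\]
with $\delta = c/\log^2 m$ is both more than you need (a small-constant, not $1/\log^2 m$, relative error already yields the stated $1/8$) and not established: the union bound is over an $\epsilon$-net of all admissible $\bm{\delta}_h$ and $\bm{x}$, and after truncation the achievable accuracy under $m \gtrsim K\log^9 m$ is a fixed small constant, not $O(1/\log^2 m)$. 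Asserting the stronger form as if it were a direct consequence of Lemmas~\ref{lemma:normbound} and~\ref{lemma:dftbound} is a gap. Second, and more seriously, the control of the cross terms is the mathematical heart of restricted strong convexity for this problem, and your sketch hand-waves it. The expectation of the second cross term is $4\,\mathrm{Re}\bigl[(\bm{\delta}_x^{\mathsf{H}}\bm{x})(\bm{h}^{\mathsf{H}}\bm{\delta}_h)\bigr]$, which is not \emph{a priori} dominated by the diagonal contribution; the cancellation is avoided only after decomposing $\bm{\delta}_h$ and $\bm{\delta}_x$ into components along $\bm{h}^{\star}$, $\bm{x}^{\star}$ and their orthogonal complements, and using the alignment identity $\|\bm{z}_1 - \bm{z}_2\|_2 = \mathsf{dist}(\bm{z}_1, \bm{z}_2)$ to derive an \emph{explicit} algebraic constraint on the dangerous projection coefficients. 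Saying that alignment ``rules out the ambiguity direction'' is the right intuition, but it does not substitute for that decomposition argument, which is precisely the content of \cite[Lemma 14]{ma2017implicit} that the paper invokes. Finally, your noise treatment (truncating $|\xi_j|\lesssim\sigma\sqrt{\log m}$ inside the cross terms) is workable but noisier than the paper's clean split, which isolates the noise into the single matrix $\sum_j \xi_j \bm{b}_j\bm{a}_j^{\mathsf{H}}$ whose operator norm is already controlled by Lemma~\ref{lemma:noise}.
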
\begin{proof}See Appendix \ref{subsec:proofgeometry}.\end{proof}In
words, the function $f(\cdot)$ resembles a strongly convex and smooth
function when we restrict attention to (i) a highly restricted set
of points $\bm{z}$ and (ii) a highly special set of directions $\bm{u}$.

\subsubsection{Step 2: $\ell_{2}$ error contraction}

Next, we demonstrate that under the hypotheses \eqref{sec:hypotheses-ncvx}
for the $t$th iteration, the next iterate will undergo $\ell_{2}$
error contraction, as long as the stepsize is properly chosen. The
proof is largely based on the restricted strong convexity and smoothness
established in Lemma \ref{lemma:geometry}.

\begin{lemma}\label{lemma:distance}Set $\lambda=C_{\lambda}\sigma\sqrt{K\log m}$
for some large constant $C_{\lambda}>0$. The stepsize parameter $\eta>0$
in Algorithm \ref{alg:gd-mc-ncvx-LOO} is taken to be some sufficiently
small constant. There exists some constant $C>0$ such that with probability
at least $1-O\left(m^{-100}+e^{-CK}\log m\right)$, if the hypotheses
\eqref{sec:hypotheses-ncvx} hold true at the $t$th iteration, then
\begin{align}
\mathsf{dist}\left(\boldsymbol{z}^{t+1},\boldsymbol{z}^{\star}\right) & \leq\big\|\widehat{\boldsymbol{z}}^{t+1/2}-\boldsymbol{z}^{\star}\big\|_{2}\leq\rho\mathsf{dist}\left(\bm{z}^{t},\bm{z}^{\star}\right)+C_{1}\eta\left(\lambda+\sigma\sqrt{K\log m}\right)\label{eq:lem-dist-1}
\end{align}
for some constants $\rho=1-\eta/16$ and $C_{1}>0$.\end{lemma}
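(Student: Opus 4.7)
The plan is to establish contraction in the aligned coordinates $\widetilde{\bm{z}}^{t}=(\tfrac{1}{\overline{\alpha^{t}}}\bm{h}^{t},\alpha^{t}\bm{x}^{t})$, which by the definition \eqref{eq:defn-alignment-alphat} of $\alpha^{t}$ satisfies $\|\widetilde{\bm{z}}^{t}-\bm{z}^{\star}\|_{2}=\mathsf{dist}(\bm{z}^{t},\bm{z}^{\star})$ and displays the ``conjugate-stacked'' structure required by Lemma~\ref{lemma:geometry}. Since the rescaling step \eqref{eq:gradient-descent-BD-proj} preserves the outer product $\bm{h}\bm{x}^{\mathsf{H}}$ and hence the distance functional, one has $\mathsf{dist}(\bm{z}^{t+1},\bm{z}^{\star})=\mathsf{dist}(\bm{z}^{t+1/2},\bm{z}^{\star})\leq\|\widehat{\bm{z}}^{t+1/2}-\bm{z}^{\star}\|_{2}$, so it suffices to bound the right-hand side.

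The first step is a bookkeeping calculation: using $\widetilde{\bm{h}}\widetilde{\bm{x}}^{\mathsf{H}}=\bm{h}\bm{x}^{\mathsf{H}}$ and the transformation law of the Wirtinger gradient under $\bm{h}\mapsto\overline{\alpha}\widetilde{\bm{h}}$, $\bm{x}\mapsto\alpha^{-1}\widetilde{\bm{x}}$, the update \eqref{eq:gradient-descent-BD} can be rewritten in aligned coordinates as
$$\widehat{\bm{z}}^{t+1/2}-\bm{z}^{\star}=(\widetilde{\bm{z}}^{t}-\bm{z}^{\star})-\eta\bm{D}_{t}\nabla f_{\mathsf{reg}\text{-}\mathsf{free}}(\widetilde{\bm{z}}^{t})-\eta\lambda\widetilde{\bm{z}}^{t},$$
where $\bm{D}_{t}=\diag(|\alpha^{t}|^{-2}\bm{I}_{K},|\alpha^{t}|^{2}\bm{I}_{K})$ is a nearly-identity diagonal matrix (since \eqref{eq:initialization-alpha0} together with the induction hypothesis \eqref{eq:hypothesisdist} and Lemma~\ref{lem:consequence} keep $|\alpha^{t}|$ close to $1$). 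I then split the data-fitting gradient as $[\nabla f_{\mathsf{reg}\text{-}\mathsf{free}}(\widetilde{\bm{z}}^{t})-\nabla f_{\mathsf{reg}\text{-}\mathsf{free}}(\bm{z}^{\star})]+\nabla f_{\mathsf{reg}\text{-}\mathsf{free}}(\bm{z}^{\star})$, with the ``noise gradient'' $\nabla f_{\mathsf{reg}\text{-}\mathsf{free}}(\bm{z}^{\star})=-\mathcal{A}^{*}(\bm{\xi})\bm{x}^{\star}$ (and its $\bm{h}$-block analog), and apply the fundamental theorem of calculus to rewrite the bracketed difference as an integrated Hessian acting on $\widetilde{\bm{z}}^{t}-\bm{z}^{\star}$.

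Next I invoke Lemma~\ref{lemma:geometry} pointwise along the segment joining $\bm{z}^{\star}$ to $\widetilde{\bm{z}}^{t}$, with $\bm{D}=\bm{D}_{t}$. Lemma~\ref{lem:consequence} supplies the $\ell_{2}$-closeness (cf.\ \eqref{eq:dist-bound}); the incoherence hypotheses \eqref{eq:hypothesisincoherence1}--\eqref{eq:hypothesisincoherence2} together with \eqref{eq:incoherence-condition} are preserved under convex combination by the triangle inequality; and the aligned direction $\bm{u}$ built from $\widetilde{\bm{z}}^{t}-\bm{z}^{\star}$ has the conjugate-stacked form the lemma requires. Plugging the resulting restricted strong convexity $\bm{u}^{\mathsf{H}}(\bm{D}\nabla^{2}f+\nabla^{2}f\bm{D})\bm{u}\geq\|\bm{u}\|_{2}^{2}/8$ and smoothness $\|\nabla^{2}f\|\leq4$ into the standard one-step descent identity yields, for sufficiently small $\eta$,
$$\bigl\|(\widetilde{\bm{z}}^{t}-\bm{z}^{\star})-\eta\bm{D}_{t}\bigl[\nabla f_{\mathsf{reg}\text{-}\mathsf{free}}(\widetilde{\bm{z}}^{t})-\nabla f_{\mathsf{reg}\text{-}\mathsf{free}}(\bm{z}^{\star})\bigr]\bigr\|_{2}\leq(1-\eta/16)\,\|\widetilde{\bm{z}}^{t}-\bm{z}^{\star}\|_{2}.$$
The residual $\eta\bm{D}_{t}\nabla f_{\mathsf{reg}\text{-}\mathsf{free}}(\bm{z}^{\star})+\eta\lambda\widetilde{\bm{z}}^{t}$ is then controlled via $\|\mathcal{A}^{*}(\bm{\xi})\|\lesssim\sigma\sqrt{K\log m}$ (a standard matrix-Bernstein bound on the sub-Gaussian-weighted sum of rank-one matrices) together with $\|\widetilde{\bm{z}}^{t}\|_{2}=O(1)$ from \eqref{eq:tilde-hx}, delivering the additive $C_{1}\eta(\lambda+\sigma\sqrt{K\log m})$ term and completing the bound.

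The main obstacle is the alignment bookkeeping: verifying (i) that the displacement $\widetilde{\bm{z}}^{t}-\bm{z}^{\star}$ genuinely lies in the restricted class of directions that Lemma~\ref{lemma:geometry} tolerates---this is exactly what the alignment optimization $\alpha^{t}$ secures---and (ii) that the sample-wise incoherence conditions hold \emph{uniformly} along the entire segment from $\widetilde{\bm{z}}^{t}$ to $\bm{z}^{\star}$, not merely at the endpoints. The Wirtinger calculus makes the complex-conjugation bookkeeping subtle, so one must track the factors $\overline{\alpha^{t}}^{\pm1}$ carefully so that the diagonal scaling $\bm{D}_{t}$ emerges in precisely the form permitted by Lemma~\ref{lemma:geometry}, and one must certify that the slight distortion $|\alpha^{t}|\neq 1$ does not erode the contraction margin $1/16$ afforded by restricted strong convexity.
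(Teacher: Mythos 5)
Your proposal is correct and mirrors the paper's own argument: both rewrite the half-step update in the $\alpha^{t}$-aligned coordinates so that the diagonal scaling $\bm{D}_{t}=\mathrm{diag}(|\alpha^{t}|^{-2},|\alpha^{t}|^{2})$ appears, split $-\eta\bm{D}_{t}\nabla f_{\mathsf{reg}\text{-}\mathsf{free}}(\widetilde{\bm{z}}^{t})-\eta\lambda\widetilde{\bm{z}}^{t}$ into a contraction piece $[\nabla f_{\mathsf{reg}\text{-}\mathsf{free}}(\widetilde{\bm{z}}^{t})-\nabla f_{\mathsf{reg}\text{-}\mathsf{free}}(\bm{z}^{\star})]$ handled by the restricted strong convexity/smoothness of Lemma~\ref{lemma:geometry} (applied pointwise along the segment, with the conjugate-stacked direction), plus a noise piece $\nabla f_{\mathsf{reg}\text{-}\mathsf{free}}(\bm{z}^{\star})$ controlled by $\|\mathcal{A}^{*}(\bm{\xi})\|\lesssim\sigma\sqrt{K\log m}$, plus the regularization drift $\eta\lambda\widetilde{\bm{z}}^{t}$ bounded via $\|\widetilde{\bm{z}}^{t}\|_{2}=O(1)$. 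The paper organizes these as $\beta_{1},\beta_{2},\beta_{3}$ and applies $\sqrt{1-\eta/8}\le 1-\eta/16$; your write-up phrases the contraction directly as a factor $1-\eta/16$, which is the same bound.
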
\begin{proof}See
Appendix \ref{subsec:Proof-of-Lemmadistance}.\end{proof}

To establish this lemma and many other results, we need to ensure
that the alignment parameters and the sizes of the iterates do not
change much, as stated below. 

\begin{corollary}\label{corollary:alpha}Instate the notation and
assumptions in Theorem \ref{thm:nonconvex}. For an integer $t>0$,
suppose that the hypotheses \eqref{sec:hypotheses-ncvx} and \eqref{subeq:Additional-induction-hypotheses}
hold in the first $t-1$ iterations. Then there exists some constant
$C>0$ such that for any $1\le l\leq m$, one has\begin{subequations}
\begin{align}
\left|\left|\alpha^{t}\right|-1\right| & \lesssim\mathsf{dist}\left(\widetilde{\bm{z}}^{t},\bm{z}^{\star}\right)\lesssim\sqrt{\frac{\mu^{2}K\log m}{m}}+\lambda+\sigma\sqrt{K\log m},\label{eq:alpha-asymp1}\\
\left|\frac{\alpha^{t-1/2}}{\alpha^{t-1}}-1\right| & \lesssim\eta\left(\sqrt{\frac{\mu^{2}K\log m}{m}}+\lambda+\sigma\sqrt{K\log m}\right),\label{eq:alpharatio-round1}\\
\left|\left|\alpha_{\mathrm{mutual}}^{t,\left(l\right)}\right|-1\right| & \lesssim\big\|\widehat{\bm{z}}^{t,\left(l\right)}-\bm{z}^{\star}\big\|_{2}\lesssim\sqrt{\frac{\mu^{2}K\log m}{m}}+\lambda+\sigma\sqrt{K\log m},\label{eq:alphaloo-asymp1}\\
\frac{1}{2}\leq\left\Vert \bm{x}^{t}\right\Vert _{2}\leq\frac{3}{2}, & \qquad\frac{1}{2}\leq\left\Vert \bm{h}^{t}\right\Vert _{2}\leq\frac{3}{2},\label{eq:hx-asymp1}\\
\frac{1}{2}\leq\big\|\bm{x}^{t,\left(l\right)}\big\|_{2}\leq\frac{3}{2}, & \qquad\frac{1}{2}\leq\big\|\bm{h}^{t,\left(l\right)}\big\|_{2}\leq\frac{3}{2}\label{eq:hxloo-asymp1}
\end{align}
\end{subequations}with probability at least $1-O\left(m^{-100}+e^{-CK}\log m\right)$.
\end{corollary}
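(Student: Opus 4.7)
The plan is to exploit two structural features: the rescaling step of Algorithm~\ref{alg:gd-BD-ncvx} enforces the exact balance $\|\bm{h}^{t}\|_{2} = \|\bm{x}^{t}\|_{2}$ (and likewise $\|\bm{h}^{t,(l)}\|_{2} = \|\bm{x}^{t,(l)}\|_{2}$ for the leave-one-out sequence), and by Lemma~\ref{lemma:distance} applied at step $t$ together with the induction hypotheses through step $t-1$, all relevant aligned iterates lie within $\delta \coloneqq \sqrt{\mu^{2}K\log m/m} + \lambda + \sigma\sqrt{K\log m}$ of $\bm{z}^{\star}$. I will convert each claimed inequality into an elementary identity that isolates a ratio of norms, and then control those norms using proximity to the ground truth.

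For \eqref{eq:alpha-asymp1} and \eqref{eq:hx-asymp1}, the definitions $\widetilde{\bm{h}}^{t} = \bm{h}^{t}/\overline{\alpha^{t}}$ and $\widetilde{\bm{x}}^{t} = \alpha^{t}\bm{x}^{t}$ combined with the balance condition yield the closed-form identity $|\alpha^{t}|^{2} = \|\widetilde{\bm{x}}^{t}\|_{2}/\|\widetilde{\bm{h}}^{t}\|_{2}$. Because the defining optimality of $\alpha^{t}$ gives $\|\widetilde{\bm{h}}^{t} - \bm{h}^{\star}\|_{2},\|\widetilde{\bm{x}}^{t} - \bm{x}^{\star}\|_{2} \leq \mathsf{dist}(\bm{z}^{t},\bm{z}^{\star})$ while $\|\bm{h}^{\star}\|_{2} = \|\bm{x}^{\star}\|_{2} = 1$, the triangle inequality pins both $\|\widetilde{\bm{h}}^{t}\|_{2}$ and $\|\widetilde{\bm{x}}^{t}\|_{2}$ within $\mathsf{dist}(\bm{z}^{t},\bm{z}^{\star})$ of $1$. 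Elementary algebra applied to the ratio identity then gives $\bigl||\alpha^{t}|-1\bigr| \lesssim \mathsf{dist}(\bm{z}^{t},\bm{z}^{\star}) \lesssim \delta$, and \eqref{eq:hx-asymp1} follows from $\|\bm{x}^{t}\|_{2} = \|\widetilde{\bm{x}}^{t}\|_{2}/|\alpha^{t}|$ and the analogous identity for $\bm{h}^{t}$. The argument for \eqref{eq:alphaloo-asymp1} and \eqref{eq:hxloo-asymp1} is essentially identical, with $\bm{z}^{\star}$ replaced by $\widetilde{\bm{z}}^{t}$: the LOO balance and the definitions \eqref{eq:defn-zt-l-mutual} yield $|\alpha_{\mathrm{mutual}}^{t,(l)}|^{2} = \|\widehat{\bm{x}}^{t,(l)}\|_{2}/\|\widehat{\bm{h}}^{t,(l)}\|_{2}$, while hypothesis \eqref{eq:hypothesisloo} bounds $\|\widehat{\bm{z}}^{t,(l)} - \widetilde{\bm{z}}^{t}\|_{2}$; combining with the near-unit norms of $\widetilde{\bm{h}}^{t},\widetilde{\bm{x}}^{t}$ established above closes the argument.

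For the ratio bound \eqref{eq:alpharatio-round1}, the change of variable $\beta = \alpha/\alpha^{t-1}$ reveals that $\alpha^{t-1/2}/\alpha^{t-1}$ is precisely the alignment parameter of $\widehat{\bm{z}}^{t-1/2}$ (cf.~\eqref{eq:defn-hhat-xhat-1/2}) to $\bm{z}^{\star}$. The gradient update gives $\widehat{\bm{h}}^{t-1/2} = \widetilde{\bm{h}}^{t-1} - \eta\overline{\alpha^{t-1}}^{-1}\nabla_{\bm{h}}f(\bm{z}^{t-1})$ and similarly for $\widehat{\bm{x}}^{t-1/2}$, so $\|\widehat{\bm{z}}^{t-1/2} - \widetilde{\bm{z}}^{t-1}\|_{2} \lesssim \eta\|\nabla f(\bm{z}^{t-1})\|_{2}$. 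Since the alignment parameter of $\widetilde{\bm{z}}^{t-1}$ to $\bm{z}^{\star}$ is exactly $1$ by definition and $\widetilde{\bm{z}}^{t-1}$ has both blocks within $O(\delta)$ of the unit-norm targets, the one-dimensional alignment objective $F(\alpha) = \|\tfrac{1}{\overline{\alpha}}\bm{h}-\bm{h}^{\star}\|_{2}^{2}+\|\alpha\bm{x}-\bm{x}^{\star}\|_{2}^{2}$ is strongly convex in a neighborhood of $\alpha = 1$; an implicit-function/perturbation estimate then yields $|\alpha^{t-1/2}/\alpha^{t-1}-1| \lesssim \|\widehat{\bm{z}}^{t-1/2} - \widetilde{\bm{z}}^{t-1}\|_{2}$. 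Expanding $\nabla_{\bm{h}}f(\bm{z}^{t-1}) = \mathcal{A}^{*}\mathcal{A}\bigl(\bm{h}^{t-1}(\bm{x}^{t-1})^{\mathsf{H}} - \bm{h}^{\star}\bm{x}^{\star\mathsf{H}}\bigr)\bm{x}^{t-1} - \mathcal{A}^{*}(\bm{\xi})\bm{x}^{t-1} + \lambda\bm{h}^{t-1}$ and combining Lemma~\ref{lemma:normbound}, a standard bound on $\|\mathcal{A}^{*}(\bm{\xi})\|$, and the Frobenius-error estimate from Lemma~\ref{lem:consequence} controls $\|\nabla f(\bm{z}^{t-1})\|_{2}$ by $\lesssim \delta$, delivering \eqref{eq:alpharatio-round1}.

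The main obstacle is the $\eta$-scaling in \eqref{eq:alpharatio-round1}: invoking the distance bound from Lemma~\ref{lem:consequence} directly at $\widehat{\bm{z}}^{t-1/2}$ would only yield an $O(\delta)$-scale estimate that is blind to $\eta$. The essential trick is to measure the alignment-parameter drift relative to the perfectly aligned reference $\widetilde{\bm{z}}^{t-1}$, which in turn requires the local Lipschitz/strong-convexity estimate for the alignment functional $F(\alpha)$; because $\alpha$ is complex, this step requires a careful Wirtinger-calculus verification that the relevant Hessian is bounded below whenever the iterate sits near the aligned configuration.
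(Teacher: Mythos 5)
Your proposal is correct and follows essentially the same route as the paper. For \eqref{eq:alpha-asymp1}, \eqref{eq:alphaloo-asymp1}, \eqref{eq:hx-asymp1}, and \eqref{eq:hxloo-asymp1}, your exploitation of the balance identity $|\alpha^{t}|^{2}=\|\widetilde{\bm{x}}^{t}\|_{2}/\|\widetilde{\bm{h}}^{t}\|_{2}$ (and its LOO analog) together with the near-unit norms of $\widetilde{\bm{h}}^{t},\widetilde{\bm{x}}^{t}$ is exactly what the paper does. For \eqref{eq:alpharatio-round1}, you correctly identify both the key observation — that $\alpha^{t-1/2}/\alpha^{t-1}$ is the alignment parameter of $\widehat{\bm{z}}^{t-1/2}$ relative to $\bm{z}^{\star}$, so that the drift should be measured against the perfectly aligned reference $\widetilde{\bm{z}}^{t-1}$ rather than against $\bm{z}^{\star}$ — and the need for a Lipschitz estimate for the alignment-parameter map. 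The one place you propose a genuinely different (and somewhat heavier) route is this last step: you would re-derive the Lipschitz estimate for the alignment map via a Wirtinger-calculus strong-convexity/implicit-function argument, whereas the paper simply cites an existing lemma (ma et al., Lemma 54) that already packages this Lipschitz statement. Similarly, the paper decomposes the gradient-step increment into a gradient difference (controlled by the Hessian smoothness of Lemma~\ref{lemma:geometry}), a $\lambda$ regularization term, and the gradient at $\bm{z}^{\star}$, whereas you directly bound $\|\nabla f(\bm{z}^{t-1})\|_{2}$; these amount to the same thing after noting that the mean-value decomposition is the way to make your direct bound rigorous. Neither variation is a gap; both lead to the advertised $\eta\delta$ scaling.
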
\begin{proof}See Appendix \ref{subsec:Proof-of-Lemmaalpha}.\end{proof}

\subsubsection{Step 3: Leave-one-out proximity}

We then move on to justifying the close proximity of the leave-one-out
sequences and the original sequences, as stated in the hypothesis
\eqref{eq:hypothesisloo}. 

\begin{lemma}\label{lemma:proximity}Suppose the sample complexity
obeys $m\geq C\mu^{2}K\log^{9}m$ for some sufficiently large constant
$C>0$. If the hypotheses \eqref{eq:hypothesisdist}-\eqref{eq:hypothesisincoherence2}
hold for the $t$th iteration, then with probability at least $1-O\left(m^{-100}+me^{-cK}\right)$
for some constant $c>0$, one has\begin{subequations}
\begin{align}
\max_{1\leq l\leq m}\mathsf{dist}\big(\boldsymbol{z}^{t+1,\left(l\right)},\widetilde{\boldsymbol{z}}^{t+1}\big) & \leq C_{2}\left(\frac{\mu}{\sqrt{m}}\sqrt{\frac{\mu^{2}K\log^{9}m}{m}}+\frac{\sigma}{\log^{2}m}\right)\label{eq:lemproximity-claim1}\\
\text{and}\qquad\max_{1\leq l\leq m}\big\|\widetilde{\bm{z}}^{t+1,\left(l\right)}-\widetilde{\bm{z}}^{t+1}\big\|_{2} & \lesssim C_{2}\left(\frac{\mu}{\sqrt{m}}\sqrt{\frac{\mu^{2}K\log^{9}m}{m}}+\frac{\sigma}{\log^{2}m}\right),\label{eq:lemproximity-claim2}
\end{align}
\end{subequations}provided that the stepsize $\eta>0$ is some sufficiently
small constant.\end{lemma}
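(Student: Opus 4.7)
\textbf{Plan for the proof of Lemma \ref{lemma:proximity}.} The plan is to adapt the contraction-plus-perturbation template already used for Lemma~\ref{lemma:distance} to the leave-one-out pair $(\bm{z}^{t+1},\bm{z}^{t+1,(l)})$. Define the ``common-alignment'' rescaling $\widehat{\bm{z}}^{t+1/2,(l)}$ via the mutual alignment parameter $\alpha_{\mathsf{mutual}}^{t,(l)}$ introduced in \eqref{eq:defn-alpha-mutual-loo}, and recall that rescaling preserves distance and balancing preserves distance as in \eqref{eq:balance-relation-1}. Then $\mathsf{dist}(\bm{z}^{t+1,(l)},\widetilde{\bm{z}}^{t+1})\le \|\widehat{\bm{z}}^{t+1/2,(l)}-\widehat{\bm{z}}^{t+1/2}\|_2$, so it suffices to bound the half-iterate $\ell_2$ gap after applying one gradient step to each sequence in the common frame.

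In that common frame, a standard calculation using the fundamental theorem of calculus for Wirtinger gradients writes
\[
\widehat{\bm{z}}^{t+1/2,(l)}-\widehat{\bm{z}}^{t+1/2}=\bigl(\bm{I}-\eta\,\bm{H}\bigr)\bigl(\widehat{\bm{z}}^{t,(l)}-\widetilde{\bm{z}}^{t}\bigr)\;+\;\eta\,\bigl[\nabla f(\widehat{\bm{z}}^{t,(l)})-\nabla f^{(l)}(\widehat{\bm{z}}^{t,(l)})\bigr],
\]
where $\bm{H}$ is a (suitably diagonally-scaled) integrated Hessian of $f$ along the segment joining $\widetilde{\bm{z}}^{t}$ and $\widehat{\bm{z}}^{t,(l)}$. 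The first step is to check that this segment lies inside the region covered by Lemma~\ref{lemma:geometry}: Lemma~\ref{lem:consequence} combined with the hypotheses \eqref{eq:hypothesisincoherence1}--\eqref{eq:hypothesisincoherence2} and \eqref{eq:hypothesisloo} guarantees the $\ell_2$, $\max_j|\bm{a}_j^{\mathsf{H}}(\bm{x}-\bm{x}^\star)|$ and $\max_j|\bm{b}_j^{\mathsf{H}}\bm{h}|$ conditions along the segment (with only a constant-factor inflation). Then Lemma~\ref{lemma:geometry} together with the argument of Lemma~\ref{lemma:distance} yields the contraction
\[
\bigl\|(\bm{I}-\eta\bm{H})(\widehat{\bm{z}}^{t,(l)}-\widetilde{\bm{z}}^{t})\bigr\|_2\;\le\;\rho\,\big\|\widehat{\bm{z}}^{t,(l)}-\widetilde{\bm{z}}^{t}\big\|_2\;=\;\rho\,\mathsf{dist}(\bm{z}^{t,(l)},\widetilde{\bm{z}}^{t}),
\qquad \rho=1-\eta/16.
\]

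The second step, and the main obstacle, is to bound the single-sample perturbation $\nabla f(\widehat{\bm{z}}^{t,(l)})-\nabla f^{(l)}(\widehat{\bm{z}}^{t,(l)})$, which from \eqref{eq:whessian} equals
\[
\Bigl(\bm{b}_{l}^{\mathsf{H}}\widehat{\bm{h}}^{t,(l)}(\widehat{\bm{x}}^{t,(l)})^{\mathsf{H}}\bm{a}_{l}-y_{l}\Bigr)\begin{bmatrix}\bm{b}_l\bm{a}_l^{\mathsf{H}}\widehat{\bm{x}}^{t,(l)}\\ \overline{(\cdots)}\,\bm{a}_l\bm{b}_l^{\mathsf{H}}\widehat{\bm{h}}^{t,(l)}\end{bmatrix}.
\]
Writing $y_l=\bm{b}_l^{\mathsf{H}}\bm{h}^\star\bm{x}^{\star\mathsf{H}}\bm{a}_l+\xi_l$ and using $\widehat{\bm{z}}^{t,(l)}\approx \bm{z}^\star$ (Lemma~\ref{lem:consequence}), the scalar prefactor decomposes into a deterministic piece controlled by $|\bm{b}_l^{\mathsf{H}}(\widehat{\bm{h}}^{t,(l)}-\bm{h}^\star)|\cdot|\bm{a}_l^{\mathsf{H}}\widehat{\bm{x}}^{t,(l)}|+|\bm{b}_l^{\mathsf{H}}\bm{h}^\star|\cdot|\bm{a}_l^{\mathsf{H}}(\widehat{\bm{x}}^{t,(l)}-\bm{x}^\star)|$ and a noise piece $|\xi_l|\le O(\sigma\sqrt{\log m})$. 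The key point is that $\widehat{\bm{z}}^{t,(l)}$ is statistically independent of $(\bm{a}_l,\bm{b}_l)$, so standard concentration yields $\|\bm{a}_l\|_2\lesssim \sqrt{K}$, $\|\bm{b}_l\|_2\lesssim \sqrt{K/m}$, $|\bm{a}_l^{\mathsf{H}}\widehat{\bm{x}}^{t,(l)}|\lesssim \sqrt{\log m}$, and $|\bm{b}_l^{\mathsf{H}}(\widehat{\bm{h}}^{t,(l)}-\bm{h}^\star)|\lesssim (\mu/\sqrt m)\|\widehat{\bm{h}}^{t,(l)}-\bm{h}^\star\|_2$ (and analogous bounds for $\widehat{\bm{x}}^{t,(l)}$), using the Gaussianity of $\bm{a}_l$ and the incoherence of $\bm{b}_l$ via Lemma~\ref{lemma:dftbound}. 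Putting these together produces
\[
\eta\bigl\|\nabla f(\widehat{\bm{z}}^{t,(l)})-\nabla f^{(l)}(\widehat{\bm{z}}^{t,(l)})\bigr\|_2 \;\lesssim\; \eta\Bigl(\tfrac{\mu}{\sqrt{m}}\sqrt{\tfrac{\mu^2K\log^9 m}{m}}+\tfrac{\sigma}{\log^{2}m}\Bigr),
\]
after plugging in the incoherence hypothesis \eqref{eq:hypothesisincoherence2} and the $\|\cdot\|_2$ consequence \eqref{eq:conseq-2} of Lemma~\ref{lem:consequence}. Combining with the contraction above and the inductive hypothesis \eqref{eq:hypothesisloo}, the constant $C_2$ can be chosen large enough (relative to the universal constants in the perturbation bound and to $1/(1-\rho)$) so that the bound \eqref{eq:lemproximity-claim1} at iteration $t+1$ follows by geometric-series summation, exactly as in the derivation of \eqref{eq:dist-geometric}.

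Finally, to upgrade \eqref{eq:lemproximity-claim1} to the $\ell_2$ bound \eqref{eq:lemproximity-claim2}, I would relate the two alignment parameters $\alpha^{t+1,(l)}$ and $\alpha_{\mathsf{mutual}}^{t+1,(l)}$ (viewing $\widetilde{\bm{z}}^{t+1}$ as a perturbation of $\bm{z}^\star$). Since both $\widetilde{\bm{z}}^{t+1}$ and $\widetilde{\bm{z}}^{t+1,(l)}$ are $O(\sqrt{\mu^2K\log m/m}+\sigma\sqrt{K\log m})$-close to $\bm{z}^\star$ by Lemma~\ref{lem:consequence}, a short perturbation argument (using that $\|\bm{h}^\star\|_2=\|\bm{x}^\star\|_2=1$ is well-conditioned in the alignment problem) yields $|\alpha^{t+1,(l)}/\alpha_{\mathsf{mutual}}^{t+1,(l)}-1|\lesssim \mathsf{dist}(\widetilde{\bm{z}}^{t+1},\bm{z}^\star)$, and hence
\[
\big\|\widetilde{\bm{z}}^{t+1,(l)}-\widetilde{\bm{z}}^{t+1}\big\|_{2}\;\lesssim\;\mathsf{dist}\big(\bm{z}^{t+1,(l)},\widetilde{\bm{z}}^{t+1}\big)+\big|\tfrac{\alpha^{t+1,(l)}}{\alpha_{\mathsf{mutual}}^{t+1,(l)}}-1\big|\cdot\big(\|\widetilde{\bm{h}}^{t+1,(l)}\|_2+\|\widetilde{\bm{x}}^{t+1,(l)}\|_2\big),
\]
and the second term is dominated by the first under the SNR and sample-size assumptions \eqref{eq:assumptions-sample-noise}. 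The hardest part of the whole proof is the independence-based control of the single-sample gradient discrepancy above, because it must simultaneously use all three incoherence hypotheses \eqref{eq:hypothesisincoherence1}--\eqref{eq:hypothesisincoherence2} together with the tight $\|\widehat{\bm{z}}^{t,(l)}-\bm{z}^\star\|_2$ bound, and getting a factor of $1/\log^2 m$ (rather than merely a log power) inside the noise term is what forces the choice $m\gtrsim \mu^2K\log^9 m$.
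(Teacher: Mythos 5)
Your high-level plan — decompose into a contraction part bounded via the integrated Hessian of Lemma~\ref{lemma:geometry}, plus a single-sample perturbation $\nabla f - \nabla f^{(l)}$ that is controlled by independence of the leave-one-out iterate from $\bm{a}_l$, then close the recursion with \eqref{eq:hypothesisloo} — is exactly the structure of the paper's proof (where those pieces appear as $\bm{\nu}_1$ and $\bm{\nu}_3$ in \eqref{eq:proximity-ncvx-decom}, with $\bm{\nu}_2,\bm{\nu}_4$ being the alignment-mismatch and regularization terms that you implicitly merge into the contraction). The final upgrade from $\mathsf{dist}$ to the $\ell_2$ bound via a perturbation of alignment parameters also matches the paper's appeal to \cite[Lemma 55]{ma2017implicit}.

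There is, however, one genuine error in your argument for the single-sample discrepancy. You claim that ``standard concentration'' (together with independence of $\widehat{\bm{z}}^{t,(l)}$ from $(\bm{a}_l,\bm{b}_l)$) yields $\big|\bm{b}_l^{\mathsf{H}}\big(\widehat{\bm{h}}^{t,(l)}-\bm{h}^\star\big)\big|\lesssim\frac{\mu}{\sqrt m}\big\|\widehat{\bm{h}}^{t,(l)}-\bm{h}^\star\big\|_2$. But $\bm{b}_l$ is a fixed row of a partial DFT matrix — there is no randomness to concentrate, and ``independence from $\bm{b}_l$'' is vacuous. The only unconditional bound available is Cauchy–Schwarz, $\big|\bm{b}_l^{\mathsf{H}}\bm{v}\big|\le\sqrt{K/m}\,\|\bm{v}\|_2$, which applied to $\bm{v}=\widehat{\bm{h}}^{t,(l)}-\bm{h}^\star$ (whose norm is of order $\sqrt{\mu^2K\log m/m}+\sigma\sqrt{K\log m}$) produces a factor $\sqrt{K/m}\cdot\sigma\sqrt{K\log m}\asymp \sigma K\sqrt{\log m}/\sqrt m$ that is far too large to give $\sigma/\log^2 m$ after multiplication by $\|\bm{a}_l\|_2\lesssim\sqrt K$ under the sample size $m\gtrsim\mu^2K\log^9 m$. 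The incoherence parameter $\mu$ controls $|\bm{b}_l^{\mathsf{H}}\bm{h}^\star|$ only; it says nothing about $\bm{b}_l^{\mathsf{H}}$ applied to the difference $\widehat{\bm{h}}^{t,(l)}-\bm{h}^\star$. The correct route (this is exactly \eqref{eq:incohb-loo} in the paper) is a triangle inequality
\[
\big|\bm{b}_l^{\mathsf{H}}\big(\widehat{\bm{h}}^{t,(l)}-\bm{h}^\star\big)\big|
\le \sqrt{K/m}\,\big\|\widehat{\bm{h}}^{t,(l)}-\widetilde{\bm{h}}^t\big\|_2
+\big|\bm{b}_l^{\mathsf{H}}\widetilde{\bm{h}}^t\big|
+\big|\bm{b}_l^{\mathsf{H}}\bm{h}^\star\big|,
\]
where the first term is small because of the leave-one-out proximity \eqref{eq:hypothesisloo}, the second is controlled by the incoherence induction hypothesis \eqref{eq:hypothesisincoherence2}, and the third by the ground-truth incoherence \eqref{eq:incoherence-condition}. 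This gives $\big|\bm{b}_l^{\mathsf{H}}\big(\widehat{\bm{h}}^{t,(l)}-\bm{h}^\star\big)\big|\lesssim \mu\log^2 m/\sqrt m+\sigma$, which is what is actually needed to close the bound at the target scale $\sigma/\log^2 m$. Without this three-part split your perturbation estimate does not go through.
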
\begin{proof}See Appendix \ref{subsec:Proof-of-Lemmaproximity}.\end{proof}

\subsubsection{Step 4: Establishing incoherence}

The next step is to establish the hypotheses concerning incoherence,
namely, \eqref{eq:hypothesisincoherence1} and \eqref{eq:hypothesisincoherence2}
for the $(t+1)$-th iteration. 

We start with the incoherence of $\bm{a}_{l}$ and $\bm{x}^{t+1}$,
which is much easier to handle. The standard Gaussian concentration
inequality gives
\begin{equation}
\max_{1\leq l\leq m}\left|\boldsymbol{a}_{l}^{\mathsf{H}}\big(\widetilde{\boldsymbol{x}}^{t+1,\left(l\right)}-\boldsymbol{x}^{\star}\big)\right|\leq20\sqrt{\log m}\max_{1\leq l\leq m}\big\|\widetilde{\boldsymbol{x}}^{t+1,\left(l\right)}-\boldsymbol{x}^{\star}\big\|_{2}\label{eq:step4incoh-1}
\end{equation}
with probability exceeding $1-O\left(m^{-100}\right)$. Then the triangle
inequality and Cauchy-Schwarz inequality yield

\begin{align}
\left|\boldsymbol{a}_{l}^{\mathsf{H}}\big(\widetilde{\boldsymbol{x}}^{t+1}-\boldsymbol{x}^{\star}\big)\right| & \leq\left|\boldsymbol{a}_{l}^{\mathsf{H}}\big(\widetilde{\boldsymbol{x}}^{t+1}-\widetilde{\boldsymbol{x}}^{t+1,\left(l\right)}\big)\right|+\left|\boldsymbol{a}_{l}^{\mathsf{H}}\big(\widetilde{\boldsymbol{x}}^{t+1,\left(l\right)}-\boldsymbol{x}^{\star}\big)\right|\nonumber \\
 & \leq\left\Vert \boldsymbol{a}_{l}\right\Vert _{2}\big\|\widetilde{\boldsymbol{x}}^{t+1}-\widetilde{\boldsymbol{x}}^{t+1,\left(l\right)}\big\|_{2}+\left|\boldsymbol{a}_{l}^{\mathsf{H}}\big(\widetilde{\boldsymbol{x}}^{t+1,\left(l\right)}-\boldsymbol{x}^{\star}\big)\right|\nonumber \\
 & \leq10\sqrt{K}C_{2}\left(\frac{\mu}{\sqrt{m}}\sqrt{\frac{\mu^{2}K\log^{9}m}{m}}+\frac{\sigma}{\log^{2}m}\right)\nonumber \\
 & \quad+20\sqrt{\log m}\cdot2C_{1}\left(\sqrt{\frac{\mu^{2}K\log m}{m}}+\lambda+\sigma\sqrt{K\log m}\right)\nonumber \\
 & \leq C_{3}\left(\sqrt{\frac{\mu^{2}K\log^{2}m}{m}}+\lambda+\sigma\sqrt{K}\log m\right),\label{eq:incoherencea-proof}
\end{align}
where $C_{3}\gg C_{1}$, the penultimate inequality follows from \eqref{eq:useful2},
\eqref{eq:lemproximity-claim2}, \eqref{eq:step4incoh-1} and \eqref{eq:conseq-2}.
This establishes the hypothesis \eqref{eq:hypothesisincoherence1}
for the $(t+1)$-th iteration. 

Regarding the incoherence of $\bm{b}_{l}$ and $\bm{h}^{t+1}$ (as
stated in the hypothesis \eqref{eq:hypothesisincoherence2}), we have
the following lemma. 

\begin{lemma}\label{lemma:incoherenceb}Suppose the sample complexity
obeys $m\geq C\mu^{2}K\log^{9}m$ for some sufficiently large constant
$C>0$ and $\lambda=C_{\lambda}\sigma\sqrt{K\log m}$ for some absolute
constant $C_{\lambda}>0$. If the hypotheses \eqref{eq:hypothesisdist}-\eqref{eq:hypothesisincoherence2}
hold for the $t$th iteration, then with probability exceeding $1-O\left(m^{-100}+me^{-CK}\right)$
for some constant $C>0$, one has 
\[
\max_{1\leq l\leq m}\big|\boldsymbol{b}_{l}^{\mathsf{H}}\widetilde{\boldsymbol{h}}^{t+1}\big|\le C_{4}\left(\frac{\mu}{\sqrt{m}}\log^{2}m+\sigma\right),
\]
as long as $C_{4}>0$ is some sufficiently large constant and $\eta>0$
is taken to be some sufficiently small constant.\end{lemma}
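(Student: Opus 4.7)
The plan is to prove the claim by induction on $t$, using the gradient update for $\bm{h}^{t+1/2}$ as the starting point. Concretely, I would first rescale: since $\widetilde{\bm{h}}^t (\widetilde{\bm{x}}^t)^{\mathsf H}=\bm{h}^t(\bm{x}^t)^{\mathsf H}$, a direct manipulation converts the update into a formula of the form
\[
\widehat{\bm{h}}^{t+1/2} \;=\; \bigl(1-\tfrac{\eta\lambda}{|\alpha^t|^2}\bigr)\widetilde{\bm{h}}^t \;-\; \tfrac{\eta}{|\alpha^t|^2}\,\mathcal{T}\bigl(\widetilde{\bm Z}^t-\bm Z^\star\bigr)\,\widetilde{\bm{x}}^t \;+\; \tfrac{\eta}{|\alpha^t|^2}\,\mathcal{A}^\ast(\bm\xi)\,\widetilde{\bm{x}}^t,
\]
where $\widetilde{\bm Z}^t := \widetilde{\bm{h}}^t(\widetilde{\bm{x}}^t)^{\mathsf H}$ and $\bm Z^\star := \bm h^\star \bm x^{\star\mathsf H}$. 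Taking inner product with $\bm b_l$ and applying Corollary~\ref{corollary:alpha} (so $|\alpha^t|\approx 1$), the leading factor gives a strict contraction $\bigl(1-\tfrac{\eta\lambda}{2}\bigr)|\bm b_l^{\mathsf H}\widetilde{\bm h}^t|\le (1-\tfrac{\eta\lambda}{2})\,U$ with $U:=C_4(\mu\log^2 m/\sqrt m+\sigma)$ by the induction hypothesis~\eqref{eq:hypothesisincoherence2}. Finally, since the rescaling $(\widehat{\bm h}^{t+1/2},\widehat{\bm x}^{t+1/2})\mapsto(\widetilde{\bm h}^{t+1},\widetilde{\bm x}^{t+1})$ multiplies $\widehat{\bm h}^{t+1/2}$ by a scalar of magnitude $1+O\bigl(\eta(\lambda+\sigma\sqrt{K\log m})\bigr)$ (again by Corollary~\ref{corollary:alpha} and~\eqref{eq:balance-relation-1}), the bound for $|\bm b_l^{\mathsf H}\widehat{\bm h}^{t+1/2}|$ transfers to $|\bm b_l^{\mathsf H}\widetilde{\bm h}^{t+1}|$ with only a harmless multiplicative perturbation.

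It then remains to show that the signal gradient piece and the noise piece contribute at most $\tfrac{\eta\lambda}{4}\,U$ (or similar) in absolute value. For the signal piece, decompose $\widetilde{\bm Z}^t-\bm Z^\star=(\widetilde{\bm h}^t-\bm h^\star)(\widetilde{\bm x}^t)^{\mathsf H}+\bm h^\star(\widetilde{\bm x}^t-\bm x^\star)^{\mathsf H}$ and expand
\[
\bm b_l^{\mathsf H}\,\mathcal T(\bm u\bm v^{\mathsf H})\,\widetilde{\bm x}^t=\sum_{j=1}^{m}(\bm b_l^{\mathsf H}\bm b_j)(\bm b_j^{\mathsf H}\bm u)(\bm v^{\mathsf H}\bm a_j)(\bm a_j^{\mathsf H}\widetilde{\bm x}^t).
\]
Each of the four resulting sums is bounded by combining Lemma~\ref{lemma:dftbound} ($\sum_j|\bm b_l^{\mathsf H}\bm b_j|\le 4\log m$), the induction hypotheses for $|\bm b_j^{\mathsf H}\widetilde{\bm h}^t|$ and $|\bm a_j^{\mathsf H}(\widetilde{\bm x}^t-\bm x^\star)|$, the baseline incoherence $|\bm b_j^{\mathsf H}\bm h^\star|\le\mu/\sqrt m$, and standard Gaussian maxima $\max_j|\bm a_j^{\mathsf H}\bm x^\star|,\;\max_j|\bm a_j^{\mathsf H}\widetilde{\bm x}^t|\lesssim\sqrt{\log m}$. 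The cumulative estimate is of order $\log m\cdot U\cdot\sqrt{\log m}\lesssim\sqrt{K\log m}\cdot U\cdot\text{(polylog)}$, which is controlled by $\lambda\asymp\sigma\sqrt{K\log m}$ times $U$ and therefore absorbed by the contraction factor under the sample complexity $m\ge C\mu^2K\log^9 m$.

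For the noise piece $\bm b_l^{\mathsf H}\mathcal{A}^\ast(\bm\xi)\widetilde{\bm x}^t$, I would split further as $\bm b_l^{\mathsf H}\mathcal{A}^\ast(\bm\xi)\bm x^\star+\bm b_l^{\mathsf H}\mathcal{A}^\ast(\bm\xi)(\widetilde{\bm x}^t-\bm x^\star)$. The first term has the form $\sum_j\xi_j(\bm b_l^{\mathsf H}\bm b_j)(\bm a_j^{\mathsf H}\bm x^\star)$, a sum of independent sub-Gaussians conditional on $\{\bm a_j\}$ whose variance proxy $\sigma^2\sum_j|\bm b_l^{\mathsf H}\bm b_j|^2|\bm a_j^{\mathsf H}\bm x^\star|^2\lesssim\sigma^2\log^2 m$ (by Lemma~\ref{lemma:dftbound} and a standard $\chi^2$ tail bound), giving a Hoeffding-type bound of order $\sigma\log^{3/2}m$, which is far smaller than $U$. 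The second term poses the statistical coupling difficulty: $\widetilde{\bm x}^t$ depends on $\bm\xi$, so I would swap $\widetilde{\bm x}^t$ with its leave-one-out counterpart $\widetilde{\bm x}^{t,(l)}$ (independent of $\xi_l$ and $\bm a_l$), bound $\|\bm b_l^{\mathsf H}\mathcal{A}^\ast(\bm\xi)\|_2\lesssim\sigma\sqrt{\log m}$ via concentration, bound $\|\widetilde{\bm x}^t-\widetilde{\bm x}^{t,(l)}\|_2$ via Lemma~\ref{lemma:proximity}, and bound $\|\widetilde{\bm x}^{t,(l)}-\bm x^\star\|_2$ via~\eqref{eq:conseq-2}, then handle the isolated $l$-th term $\xi_l(\bm b_l^{\mathsf H}\bm b_l)(\bm a_l^{\mathsf H}\widetilde{\bm x}^t)$ directly using $\|\bm b_l\|_2^2=K/m$ and $|\xi_l|\lesssim\sigma\sqrt{\log m}$.

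The main obstacle is the noise contribution, because $\bm b_l$ is deterministic (partial Fourier) and so there is no randomness to exploit for $\bm b_l$ itself; the argument instead relies essentially on the decay property $\sum_j|\bm b_l^{\mathsf H}\bm b_j|\lesssim\log m$ of the Fourier design, coupled with the delicate leave-one-out decoupling to handle the correlation between $\widetilde{\bm x}^t$ and the noise $\xi_l$. A secondary subtlety is making sure the coefficient of $U$ after combining the contraction factor and all perturbations is strictly less than $1$: this forces the sample complexity to be $m\gtrsim\mu^2K\log^9 m$ and the noise regime $\sigma\sqrt{K\log^5 m}\ll 1$, precisely the conditions stated in Theorem~\ref{thm:nonconvex}.
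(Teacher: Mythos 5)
Your proposal has a fatal gap in the contraction argument. You claim the leading factor is $1-\tfrac{\eta\lambda}{2}$, coming only from the ridge term, and then argue that perturbations of size roughly $\eta\log^{3/2}m\cdot U$ are ``absorbed by the contraction factor.'' This cannot work: a damping rate of order $\eta\lambda$ can only absorb perturbations of order $\eta\lambda U$, not $\eta\,\mathrm{polylog}(m)\cdot U$, and $\lambda\asymp\sigma\sqrt{K\log m}$ can be arbitrarily small in the low-noise regime the lemma must cover. The actual proof hinges on an $O(1)$ contraction that your decomposition misses: after expanding
$\sum_{j}\boldsymbol{b}_{j}\boldsymbol{b}_{j}^{\mathsf{H}}\widetilde{\boldsymbol{h}}^{t}\big|\boldsymbol{a}_{j}^{\mathsf{H}}\widetilde{\boldsymbol{x}}^{t}\big|^{2}$
one must add and subtract $\|\bm{x}^\star\|_2^2\sum_j\boldsymbol{b}_{j}\boldsymbol{b}_{j}^{\mathsf{H}}\widetilde{\boldsymbol{h}}^{t}=\|\bm{x}^\star\|_2^2\widetilde{\boldsymbol{h}}^{t}$ (using $\sum_j\boldsymbol{b}_{j}\boldsymbol{b}_{j}^{\mathsf{H}}=\bm{I}$), which moves the mean of the quadratic loss into the leading term and produces the coefficient $1-\eta\lambda-\tfrac{\eta}{|\alpha^t|^2}\|\bm{x}^\star\|_2^2\approx 1-\eta$. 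Your splitting $(\widetilde{\bm h}^t-\bm h^\star)(\widetilde{\bm x}^t)^{\mathsf H}+\bm h^\star(\widetilde{\bm x}^t-\bm x^\star)^{\mathsf H}$ obscures this cancellation, and it also requires a bound on $\max_j|\bm b_j^{\mathsf{H}}(\widetilde{\bm h}^t-\bm h^\star)|$, which the induction hypotheses \eqref{eq:hypothesisdist}--\eqref{eq:hypothesisincoherence2} for the spectrally initialized algorithm do not supply (only $|\bm b_j^{\mathsf{H}}\widetilde{\bm h}^t|$ is controlled; the incoherence of the difference is an hypothesis reserved for the auxiliary algorithm initialized at the truth).

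The second gap is in how you bound the remaining perturbation. Once the $O(1)$ contraction is in place, what is needed is that the $\widetilde{\bm h}^t$-dependent pieces contribute with a numerical coefficient strictly below $\eta$, of the form $0.3\,\eta\max_j|\bm b_j^{\mathsf{H}}\widetilde{\bm h}^t|$. A crude triangle inequality using $\sum_j|\bm b_l^{\mathsf{H}}\bm b_j|\lesssim\log m$ together with the Gaussian maxima $\max_j|\bm a_j^{\mathsf{H}}\widetilde{\bm x}^t|\lesssim\sqrt{\log m}$ overshoots by at least a $\log^2 m$ factor and cannot yield those constants. The paper achieves the sharp constants by splitting $\sum_j\bm b_j\bm b_j^{\mathsf{H}}\widetilde{\bm h}^t(|\bm a_j^{\mathsf{H}}\widetilde{\bm x}^t|^2-\|\bm x^\star\|_2^2)$ into the two pieces $\bm\nu_1$ and $\bm\nu_2$ appearing in \eqref{eq:lem-incohb-nu1}--\eqref{eq:lem-incohb-nu2}, and, crucially, the bound on $\bm\nu_2$ requires controlling the auxiliary quantity $\max_{1\le j\le\tau}\big|(\bm b_j-\bm b_1)^{\mathsf{H}}\widetilde{\bm h}^t\big|$ (Claim \ref{claim:224}) by a separate induction --- a device absent from your proposal. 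Your treatment of the noise term $\bm b_l^{\mathsf{H}}\mathcal{A}^\ast(\bm\xi)\widetilde{\bm x}^t$ is closer to the mark, but even there the paper does not need the leave-one-out swap you invoke: the cross term $\sum_j\xi_j(\bm b_l^{\mathsf{H}}\bm b_j)\bm a_j^{\mathsf{H}}(\widetilde{\bm x}^t-\bm x^\star)$ is handled directly via $\max_j|\xi_j|\lesssim\sigma\sqrt{\log m}$, $\sum_j|\bm b_l^{\mathsf{H}}\bm b_j|\lesssim\log m$, and the deterministic incoherence hypothesis \eqref{eq:hypothesisincoherence1}.
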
\begin{proof}See
Appendix \ref{subsec:Proof-of-Lemmaincoherenceb}.\end{proof}

\subsubsection{The base case: Spectral initialization}

To finish the induction analysis, it remains to justify the induction
hypotheses for the base case. Recall that $\sigma\left(\bm{M}\right),\check{\bm{h}}^{0}$
and $\check{\bm{x}}^{0}$ denote respectively the leading singular
value, the left and the right singular vectors of 
\[
\bm{M}\coloneqq\sum_{j=1}^{m}y_{j}\bm{b}_{j}\bm{a}_{j}^{\mathsf{H}}.
\]
The spectral initialization procedure sets $\bm{h}^{0}=\sqrt{\sigma_{1}\left(\bm{M}\right)}\check{\bm{h}}^{0}$
and $\bm{x}^{0}=\sqrt{\sigma_{1}\left(\bm{M}\right)}\check{\bm{x}}^{0}$. 

To begin with, the following lemma guarantees that $\left(\bm{h}^{0},\bm{x}^{0}\right)$
satisfies the desired conditions \eqref{eq:initialization-z0} and
\eqref{eq:initialization-alpha0}.

\begin{lemma}\label{lemma:specinit-1}Suppose the sample size obeys
$m\geq C\mu^{2}K\log^{4}m$ for some sufficiently large constant $C>0$.
Then with probability at least $1-O\left(m^{-100}\right)$, we have
\begin{align*}
\min_{\alpha\in\mathbb{C},\left|\alpha\right|=1}\left\{ \left\Vert \alpha\bm{h}^{0}-\bm{h}^{\star}\right\Vert _{2}+\left\Vert \alpha\bm{x}^{0}-\bm{x}^{\star}\right\Vert _{2}\right\}  & \lesssim\sqrt{\frac{\mu^{2}K\log m}{m}}+\sigma\sqrt{K\log m}
\end{align*}
and $\left|\left|\alpha^{0}\right|-1\right|\leq1/4$.\end{lemma}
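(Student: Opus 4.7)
The plan is to analyze the spectral initialization through a signal-plus-noise decomposition of $\bm{M}$. Substituting $y_j = \bm{b}_j^{\mathsf{H}}\bm{h}^{\star}\bm{x}^{\star\mathsf{H}}\bm{a}_j + \xi_j$ and using the notation of \eqref{eq:operatordef} gives
\[
\bm{M} \;=\; \mathcal{T}\bigl(\bm{h}^{\star}\bm{x}^{\star\mathsf{H}}\bigr) + \mathcal{A}^{*}(\bm{\xi}) \;=\; \bm{h}^{\star}\bm{x}^{\star\mathsf{H}} + \underbrace{\mathcal{T}^{\mathsf{debias}}\bigl(\bm{h}^{\star}\bm{x}^{\star\mathsf{H}}\bigr) + \mathcal{A}^{*}(\bm{\xi})}_{=:\,\bm{E}},
\]
so $\bm{M}$ is a rank-one signal of spectral norm $\|\bm{h}^{\star}\bm{x}^{\star\mathsf{H}}\|=1$ plus a perturbation $\bm{E}$. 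The whole proof reduces to showing $\|\bm{E}\|\le\epsilon$ for some $\epsilon = o(1)$ matching the target rate, and then invoking standard matrix perturbation theory (Weyl plus Wedin's $\sin\Theta$ theorem) on the rank-one truth, whose spectral gap equals $1$.

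Next, I would bound the two pieces of $\bm{E}$ separately. For the ``deterministic'' piece $\mathcal{T}^{\mathsf{debias}}(\bm{h}^{\star}\bm{x}^{\star\mathsf{H}}) = \sum_j (\bm{b}_j\bm{b}_j^{\mathsf{H}}\bm{h}^{\star}\bm{x}^{\star\mathsf{H}}\bm{a}_j\bm{a}_j^{\mathsf{H}} - \bm{h}^{\star}\bm{x}^{\star\mathsf{H}}/m)\cdot m/m$, a truncated matrix Bernstein argument (where the incoherence condition \eqref{eq:incoherence-condition} controls $|\bm{b}_j^{\mathsf{H}}\bm{h}^{\star}|$ and the tail of $\bm{a}_j$ is handled by Gaussian truncation) yields
\[
\bigl\|\mathcal{T}^{\mathsf{debias}}(\bm{h}^{\star}\bm{x}^{\star\mathsf{H}})\bigr\| \;\lesssim\; \sqrt{\mu^{2}K\log m/m}
\]
with probability at least $1-O(m^{-100})$, provided $m \gtrsim \mu^{2}K\log^{4}m$; this is essentially the bound derived in the noiseless analyses of \cite{ahmed2013blind,li2019rapid}. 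For the noise piece, a further matrix Bernstein step exploiting the sub-Gaussianity of $\xi_j$, $\|\bm{b}_j\|_2 = \sqrt{K/m}$, and high-probability control of $\|\bm{a}_j\|_2 \lesssim \sqrt{K}$ gives $\|\mathcal{A}^{*}(\bm{\xi})\| \lesssim \sigma\sqrt{K\log m}$. Combining the two, $\|\bm{E}\| \lesssim \epsilon := \sqrt{\mu^{2}K\log m/m} + \sigma\sqrt{K\log m}$, which is bounded by a small constant under our assumptions.

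With $\|\bm{E}\|\le\epsilon\ll 1$ in hand, Weyl's inequality gives $|\sigma_{1}(\bm{M})-1|\le\epsilon$, hence $\|\bm{h}^{0}\|_2 = \|\bm{x}^{0}\|_2 = \sqrt{\sigma_{1}(\bm{M})} = 1 + O(\epsilon)$. Wedin's $\sin\Theta$ theorem, applied to the rank-one truth with spectral gap $1$, implies that for a properly chosen joint phase $\alpha\in\mathbb{C}$ with $|\alpha|=1$ one has $\|\alpha\check{\bm{h}}^{0}-\bm{h}^{\star}\|_{2}\lesssim\epsilon$ and $\|\bar{\alpha}^{-1}\check{\bm{x}}^{0}-\bm{x}^{\star}\|_{2}\lesssim\epsilon$ simultaneously (the same $\alpha$ works because $\check{\bm{h}}^{0}\check{\bm{x}}^{0\mathsf{H}}$ is only defined up to a joint phase). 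Multiplying through by $\sqrt{\sigma_{1}(\bm{M})}=1+O(\epsilon)$ converts these into the claimed bound on $\|\alpha\bm{h}^{0}-\bm{h}^{\star}\|_{2}+\|\alpha\bm{x}^{0}-\bm{x}^{\star}\|_{2}$.

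Finally, to establish $||\alpha^{0}|-1|\le 1/4$ (with $\alpha^{0}$ the \emph{unconstrained} alignment in \eqref{subeq:defn-alphat-align}), I would argue by contradiction using the objective in \eqref{eq:defn-alignment-alphat}: the constrained $|\alpha|=1$ optimum already attains a value at most $O(\epsilon^2)$ by the previous step, whereas any $\alpha$ with $||\alpha|-1|>1/4$ would force $\|\alpha\bm{x}^{0}-\bm{x}^{\star}\|_{2}^{2}+\|(1/\bar{\alpha})\bm{h}^{0}-\bm{h}^{\star}\|_{2}^{2}$ to exceed a positive constant (since $\|\bm{x}^{0}\|_{2}$ and $\|\bm{h}^{0}\|_{2}$ are within $O(\epsilon)$ of $1$, and $\|\bm{h}^{\star}\|_2=\|\bm{x}^{\star}\|_2=1$). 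This yields the desired bound on $|\alpha^{0}|$. The main obstacle is the operator-norm concentration of $\mathcal{T}^{\mathsf{debias}}$: the summands $\bm{b}_j\bm{b}_j^{\mathsf{H}}\bm{h}^{\star}\bm{x}^{\star\mathsf{H}}\bm{a}_j\bm{a}_j^{\mathsf{H}}$ are heavy-tailed through the Gaussian factors $\bm{a}_j\bm{a}_j^{\mathsf{H}}$ and anisotropic through $\bm{b}_j\bm{b}_j^{\mathsf{H}}$, so the truncation threshold, the variance proxy, and the interaction with the incoherence parameter $\mu$ all have to be handled carefully to obtain the near-optimal $\sqrt{\mu^{2}K\log m/m}$ rate rather than a weaker $\sqrt{\mu^{2}K\log^{2}m/m}$ bound.
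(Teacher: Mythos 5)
Your proposal is correct and takes essentially the same route the paper takes: the paper does not spell out a proof of this lemma at all, but instead states that it follows by ``straightforward modifications'' of \cite[Lemma 19]{ma2017implicit}, and that noiseless lemma proceeds exactly via the decomposition $\bm{M}=\bm{h}^{\star}\bm{x}^{\star\mathsf{H}}+\mathcal{T}^{\mathsf{debias}}(\bm{h}^{\star}\bm{x}^{\star\mathsf{H}})$, a matrix-Bernstein bound on the perturbation, and Weyl/Wedin; the ``modification'' required here is precisely your addition of the noise term $\mathcal{A}^{*}(\bm{\xi})$, controlled via Lemma~\ref{lemma:noise}, and your contradiction argument for $\big||\alpha^{0}|-1\big|\le 1/4$ is also the standard one. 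One minor point: you worry in the last paragraph about landing on $\sqrt{\mu^{2}K\log m/m}$ rather than an extra $\log$ factor, but this is not a real obstruction -- the variance proxy for $\mathcal{T}^{\mathsf{debias}}(\bm{h}^{\star}\bm{x}^{\star\mathsf{H}})$ is of order $\sqrt{K/m}$ (no $\mu$ at all, since $\sum_j\|\bm{b}_j\|_2^2|\bm{b}_j^{\mathsf{H}}\bm{h}^{\star}|^2=K/m$), while the truncated almost-sure bound contributes $\mu K\log^{3/2}m/m$, which is subdominant once $m\gtrsim\mu^{2}K\log^{4}m$; so the claimed rate is comfortably attainable under the stated sample-size assumption.
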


In view of the definition of $\mathsf{dist}\left(\cdot,\cdot\right)$,
we can invoke Lemma \ref{lemma:specinit-1} to reach
\begin{align}
\mathsf{dist}\big(\bm{z}^{0},\bm{z}^{\star}\big) & =\min_{\alpha\in\mathbb{C}}\sqrt{\left\Vert \tfrac{1}{\overline{\alpha}}\bm{h}^{0}-\bm{h}^{\star}\right\Vert _{2}^{2}+\left\Vert \alpha\bm{x}^{0}-\bm{x}^{\star}\right\Vert _{2}^{2}}\le\min_{\alpha\in\mathbb{C}}\left\{ \left\Vert \tfrac{1}{\overline{\alpha}}\bm{h}^{0}-\bm{h}^{\star}\right\Vert _{2}+\left\Vert \alpha\bm{x}^{0}-\bm{x}^{\star}\right\Vert _{2}\right\} \nonumber \\
 & \leq\min_{\alpha\in\mathbb{C},\left|\alpha\right|=1}\left\{ \left\Vert \alpha\bm{h}^{0}-\bm{h}^{\star}\right\Vert _{2}+\left\Vert \alpha\bm{x}^{0}-\bm{x}^{\star}\right\Vert _{2}\right\} \leq C_{1}\left(\sqrt{\frac{\mu^{2}K\log m}{m}}+\sigma\sqrt{K\log m}\right).\label{eq:lemma19cor1}
\end{align}
Repeating the same arguments yields that, with probability exceeding
$1-O(m^{-20})$,
\begin{equation}
\mathsf{dist}\big(\bm{z}^{0,\left(l\right)},\bm{z}^{\star}\big)\leq C_{1}\left(\sqrt{\frac{\mu^{2}K\log m}{m}}+\sigma\sqrt{K\log m}\right),\qquad1\leq l\leq m,\label{eq:lemma19cor2}
\end{equation}
and $\left|\left|\alpha^{0,\left(l\right)}\right|-1\right|\leq1/4$,
as asserted in the hypothesis \eqref{eq:hypothesis-loo-dist-truth}. 

The following lemma justifies \eqref{eq:hypothesis-loo-dist-mutual}
as well as \eqref{eq:hypothesisincoherence2} for the base case.

\begin{lemma}\label{lemma:specinit-2}Suppose the sample size obeys
$m\geq C\mu^{2}K\log^{9}m$ for some sufficiently large constant $C>0$
and the noise satisfies $\sigma\sqrt{K\log m}\leq c/\log^{2}m$ for
some sufficiently small constant $c>0$. Let $\tau=C_{\tau}\log^{4}m$
for some sufficiently large constant $C_{\tau}>0$ such that $\tau$
is an integer. Then with probability at least $1-O\left(m^{-100}+me^{-cK}\right)$
for some constant $c>0$, we have\begin{subequations}
\begin{align}
\max_{1\leq l\leq m}\mathsf{dist}\big(\bm{z}^{0,\left(l\right)},\widetilde{\bm{z}}^{0}\big) & \lesssim\frac{\mu}{\sqrt{m}}\sqrt{\frac{\mu^{2}K\log^{5}m}{m}}+\frac{\sigma}{\log^{2}m},\label{eq:specinit-2-1}\\
\max_{1\leq l\le m}\left|\bm{b}_{l}^{\mathsf{H}}\widetilde{\bm{h}}^{0}\right| & \lesssim\frac{\mu\log^{2}m}{\sqrt{m}}+\sigma,\label{eq:specinit-2-2}\\
\max_{1\le j\leq\tau}\big|\big(\bm{b}_{j}-\bm{b}_{1}\big)^{\mathsf{H}}\widetilde{\bm{h}}^{0}\big| & \lesssim\frac{\mu}{\sqrt{m}}\frac{1}{\log m}+\frac{\sigma}{\log m}.\label{eq:specinit-2-3}
\end{align}
\end{subequations}\end{lemma}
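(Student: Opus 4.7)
The plan is to analyze the spectral matrix $\bm{M}$ via the decomposition
$$\bm{M} = \mathcal{A}^*\mathcal{A}(\bm{h}^\star\bm{x}^{\star\mathsf{H}}) + \mathcal{A}^*(\bm{\xi}).$$
Under the sample size condition $m\gtrsim\mu^2K\log^9 m$, standard concentration yields $\|\mathcal{A}^*\mathcal{A}(\bm{h}^\star\bm{x}^{\star\mathsf{H}})-\bm{h}^\star\bm{x}^{\star\mathsf{H}}\|\lesssim\sqrt{\mu^2K\log m/m}$, and matrix Bernstein applied to $\mathcal{A}^*(\bm{\xi})=\sum_j\xi_j\bm{b}_j\bm{a}_j^\mathsf{H}$ gives $\|\mathcal{A}^*(\bm{\xi})\|\lesssim\sigma\sqrt{K\log m}$. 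Hence $\bm{M}$ is a small perturbation of the rank-one $\bm{h}^\star\bm{x}^{\star\mathsf{H}}$ whose spectral gap equals $1$. Wedin's $\sin\Theta$ theorem then yields the bound \eqref{eq:initialization-z0}, forces $\sigma_1(\bm{M})$ to remain bounded away from $\sigma_2(\bm{M})$ by a positive constant, and the same reasoning applied to $\bm{M}^{(l)}$ yields \eqref{eq:hypothesis-loo-dist-truth}.

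For claim \eqref{eq:specinit-2-1}, I would apply Wedin's theorem to the pair $(\bm{M},\bm{M}^{(l)})$: since $\bm{M}-\bm{M}^{(l)}=y_l\bm{b}_l\bm{a}_l^\mathsf{H}$ is rank-one, its operator norm equals $|y_l|\cdot\|\bm{b}_l\|_2\|\bm{a}_l\|_2$. Using $\|\bm{b}_l\|_2=\sqrt{K/m}$ from the DFT structure, $\|\bm{a}_l\|_2\lesssim\sqrt{K}$ by Gaussian concentration, and $|y_l|\lesssim(\mu/\sqrt{m}+\sigma)\sqrt{\log m}$ via the incoherence condition plus sub-Gaussian tails, this perturbation has norm $\lesssim K\sqrt{\log m}(\mu+\sigma\sqrt{m})/m$. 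Dividing by the constant spectral gap and optimizing over the alignment parameter between $\bm{z}^{0,(l)}$ and $\widetilde{\bm{z}}^0$ then yields the claimed rate, which is of the desired order under the assumed sample size.

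For claim \eqref{eq:specinit-2-2}, I would use the eigenvector identity $\check{\bm{h}}^0=\bm{M}\check{\bm{x}}^0/\sigma_1(\bm{M})$ to expand
$$\bm{b}_l^\mathsf{H}\check{\bm{h}}^0 = \frac{1}{\sigma_1(\bm{M})}\sum_{j=1}^m y_j\,(\bm{b}_l^\mathsf{H}\bm{b}_j)(\bm{a}_j^\mathsf{H}\check{\bm{x}}^0),$$
splitting the right-hand side into a signal part (in which $y_j$ is replaced by $\bm{b}_j^\mathsf{H}\bm{h}^\star\bm{x}^{\star\mathsf{H}}\bm{a}_j$) and a noise part (in which $y_j$ is replaced by $\xi_j$). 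The signal part is controlled using the incoherence $|\bm{b}_j^\mathsf{H}\bm{h}^\star|\leq\mu/\sqrt{m}$ together with Lemma~\ref{lemma:dftbound}, which gives $\sum_j|\bm{b}_l^\mathsf{H}\bm{b}_j|\leq 4\log m$, plus standard concentration of $\bm{a}_j^\mathsf{H}\check{\bm{x}}^0$. The noise part is controlled by replacing $\check{\bm{x}}^0$ with its leave-one-out surrogate $\check{\bm{x}}^{0,(l)}$, which is statistically independent of $(\bm{a}_l,\xi_l)$, and then applying a weighted sub-Gaussian concentration inequality; the substitution cost is absorbed via claim \eqref{eq:specinit-2-1}. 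Claim \eqref{eq:specinit-2-3} is then established by the same identity with $\bm{b}_l$ replaced by $\bm{b}_j-\bm{b}_1$, but using a sharper bound on $\sum_k|(\bm{b}_j-\bm{b}_1)^\mathsf{H}\bm{b}_k|$ that exploits the fact that consecutive DFT rows differ coordinatewise by $O(\tau K/m)$ for $j\leq\tau=C_\tau\log^4 m$, yielding the extra polylogarithmic savings.

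The main obstacle is the incoherence in \eqref{eq:specinit-2-2} and \eqref{eq:specinit-2-3}. The leave-one-out trick only decouples dependence on the random $\bm{a}_l$ and $\xi_l$; since $\bm{b}_l$ is deterministic, one cannot use independence to handle it. The sharp $\mu\log^2 m/\sqrt{m}$ rate must instead come from combining the near-orthogonality of the DFT rows with the fact that $\check{\bm{x}}^0$ inherits regularity from $\bm{x}^\star$, and a careful power-iteration style expansion of $\check{\bm{h}}^0$ around $\bm{h}^\star$ propagates the incoherence of the ground truth to the spectral estimate. Getting the improved $1/\log m$ prefactor in \eqref{eq:specinit-2-3} is especially delicate and requires quantifying the Lipschitz continuity of $j\mapsto\bm{b}_j^\mathsf{H}\bm{b}_k$ uniformly in $j\leq\tau$ and $k\leq m$.
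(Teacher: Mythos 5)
Your Wedin-based argument for \eqref{eq:specinit-2-1} has a genuine quantitative gap. The operator norm of the rank-one perturbation is
\[
\|\bm{M}-\bm{M}^{(l)}\| = |y_l|\,\|\bm{b}_l\|_2\|\bm{a}_l\|_2 \lesssim \Big(\tfrac{\mu}{\sqrt m}+\sigma\Big)\sqrt{\log m}\cdot\sqrt{\tfrac{K}{m}}\cdot\sqrt{K}
= \frac{\mu K\sqrt{\log m}}{m}+\frac{\sigma K\sqrt{\log m}}{\sqrt m},
\]
but the target bound is $\frac{\mu^2\sqrt{K}\log^{2.5}m}{m}+\frac{\sigma}{\log^2 m}$. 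Their ratio is $\frac{\sqrt K}{\mu\log^2 m}$ for \emph{both} the signal and noise parts, and this is unbounded whenever $K\gg\mu^2\log^4 m$ — a regime fully compatible with the assumption $m\gtrsim\mu^2 K\log^9 m$. So dividing the crude perturbation norm by a constant spectral gap does \emph{not} yield the claimed rate; your final sentence of that paragraph is asserted but false. What is actually needed (and what the paper, by delegating to \cite[Lemma 20]{ma2017implicit}, has in mind) is a refined singular-vector perturbation argument that exploits the structure of the rank-one perturbation: the quantity that enters the bound is not $\|\bm{a}_l\|_2\approx\sqrt K$ but rather $|\bm{a}_l^{\mathsf H}\check{\bm{x}}^{0,(l)}|\lesssim\sqrt{\log m}$, which is small because $\bm{a}_l$ is independent of the leave-one-out singular vector $\check{\bm{x}}^{0,(l)}$. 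In other words, the perturbation is nearly orthogonal to the relevant singular subspace of $\bm{M}^{(l)}$, and a black-box Wedin application throws that away.

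A secondary concern in your sketch of \eqref{eq:specinit-2-2}: in the noise part $\sum_j\xi_j(\bm{b}_l^{\mathsf H}\bm{b}_j)(\bm{a}_j^{\mathsf H}\check{\bm{x}}^0)$, replacing $\check{\bm{x}}^0$ by $\check{\bm{x}}^{0,(l)}$ (with the fixed $l$ indexing the $\bm{b}_l$ you are testing against) does not decouple the sum: $\check{\bm{x}}^{0,(l)}$ still depends on every $\xi_j$ with $j\neq l$. Decoupling requires the per-index substitution $\check{\bm{x}}^0\mapsto\check{\bm{x}}^{0,(j)}$ inside the sum, together with a uniform bound on the substitution error supplied by \eqref{eq:specinit-2-1}, which makes the argument more delicate than written. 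Since this step is in turn charged against \eqref{eq:specinit-2-1}, fixing the first gap is the prerequisite for the rest of your argument to go through.
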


Finally, we establish the hypothesis \eqref{eq:hypothesisincoherence1}
for the base case, which concerns the incoherence of $\bm{x}^{0}$
with respect to the design vectors $\left\{ \bm{a}_{l}\right\} $.

\begin{lemma}\label{lemma:specinit-3}Suppose the sample size obeys
$m\geq C\mu^{2}K\log^{6}m$ for some sufficiently large constant $C>0$
and $\sigma\sqrt{K\log^{5}m}\leq c$ for some small constant $c>0$.
Then with probability at least $1-O\left(m^{-100+}me^{-c_{2}K}\right)$
for some constant $c_{2}>0$, we have
\begin{align*}
\max_{1\leq j\leq m}\left|\boldsymbol{a}_{j}^{\mathsf{H}}\big(\widetilde{\boldsymbol{x}}^{0}-\boldsymbol{x}^{\star}\big)\right| & \lesssim\sqrt{\frac{\mu^{2}K\log^{2}m}{m}}+\sigma\sqrt{K}\log m.
\end{align*}
\end{lemma}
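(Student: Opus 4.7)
The central challenge is that $\widetilde{\bm{x}}^{0}$ depends on all of $\bm{a}_1,\ldots,\bm{a}_m$ through the spectral method applied to $\bm{M}=\sum_j y_j\bm{b}_j\bm{a}_j^{\mathsf{H}}$, so a direct Gaussian concentration bound would only yield $\|\widetilde{\bm{x}}^0-\bm{x}^\star\|_2\sqrt{K}$, which is too large. The plan is to decouple $\bm{a}_j$ from the spectral iterate via the leave-one-out sequence $\bm{x}^{0,(j)}$, which is statistically independent of $\bm{a}_j$. Concretely, recalling the rescaled iterates $\widetilde{\bm{x}}^{0,(j)}=\alpha^{0,(j)}\bm{x}^{0,(j)}$ and $\widehat{\bm{x}}^{0,(j)}=\alpha_{\mathrm{mutual}}^{0,(j)}\bm{x}^{0,(j)}$ introduced in \eqref{eq:LOO-zt} and \eqref{eq:defn-zt-l-mutual}, the triangle inequality gives
\[
\left|\bm{a}_j^{\mathsf{H}}\bigl(\widetilde{\bm{x}}^0-\bm{x}^\star\bigr)\right|
\leq \underbrace{\left|\bm{a}_j^{\mathsf{H}}\bigl(\widetilde{\bm{x}}^0-\widehat{\bm{x}}^{0,(j)}\bigr)\right|}_{(\mathrm{I})}
+\underbrace{\left|\bm{a}_j^{\mathsf{H}}\bigl(\widehat{\bm{x}}^{0,(j)}-\widetilde{\bm{x}}^{0,(j)}\bigr)\right|}_{(\mathrm{II})}
+\underbrace{\left|\bm{a}_j^{\mathsf{H}}\bigl(\widetilde{\bm{x}}^{0,(j)}-\bm{x}^\star\bigr)\right|}_{(\mathrm{III})}.
\]

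For term $(\mathrm{III})$, the key point is that $\widetilde{\bm{x}}^{0,(j)}$ is a deterministic function of $\{\bm{a}_k,\bm{b}_k\}_{k\neq j}$ and the noise components $\{\xi_k\}_{k\neq j}$, hence independent of $\bm{a}_j$. Conditioning on this information and applying the standard sub-Gaussian tail bound for $\bm{a}_j^{\mathsf{H}}\bm{v}$ with deterministic $\bm{v}$, together with Lemma~\ref{lemma:specinit-1} applied to the leave-one-out sequence, I would get $(\mathrm{III})\lesssim\sqrt{\log m}\,\|\widetilde{\bm{x}}^{0,(j)}-\bm{x}^\star\|_2\lesssim\sqrt{\tfrac{\mu^2K\log^2 m}{m}}+\sigma\sqrt{K}\log m$, followed by a union bound over $j$.

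For term $(\mathrm{I})$, a simple Cauchy--Schwarz gives $(\mathrm{I})\leq\|\bm{a}_j\|_2\,\|\widetilde{\bm{x}}^0-\widehat{\bm{x}}^{0,(j)}\|_2$, where $\|\bm{a}_j\|_2\lesssim\sqrt{K}$ with probability $1-O(me^{-K})$ and $\|\widetilde{\bm{x}}^0-\widehat{\bm{x}}^{0,(j)}\|_2\leq\mathsf{dist}(\bm{z}^{0,(j)},\widetilde{\bm{z}}^0)$ by definition of $\alpha_{\mathrm{mutual}}^{0,(j)}$. Plugging in the LOO-proximity bound \eqref{eq:specinit-2-1} of Lemma~\ref{lemma:specinit-2} gives
\[
(\mathrm{I})\lesssim\sqrt{K}\left(\frac{\mu}{\sqrt{m}}\sqrt{\frac{\mu^2K\log^5 m}{m}}+\frac{\sigma}{\log^2 m}\right)=\frac{\mu^2K\log^{5/2}m}{m}+\frac{\sigma\sqrt{K}}{\log^2 m},
\]
and under the sample complexity $m\gtrsim\mu^2K\log^6 m$ this is absorbed into the target bound $\sqrt{\mu^2K\log^2m/m}+\sigma\sqrt{K}\log m$.

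The trickiest piece will be term $(\mathrm{II})$, which I expect to be the main obstacle since it requires quantifying the discrepancy between the two alignment parameters $\alpha^{0,(j)}$ and $\alpha_{\mathrm{mutual}}^{0,(j)}$. Since $\widehat{\bm{x}}^{0,(j)}-\widetilde{\bm{x}}^{0,(j)}=\bigl(\alpha_{\mathrm{mutual}}^{0,(j)}-\alpha^{0,(j)}\bigr)\bm{x}^{0,(j)}$, I would bound $|\alpha_{\mathrm{mutual}}^{0,(j)}-\alpha^{0,(j)}|$ by combining $\|\widehat{\bm{x}}^{0,(j)}-\widetilde{\bm{x}}^{0,(j)}\|_2\leq\|\widehat{\bm{x}}^{0,(j)}-\widetilde{\bm{x}}^0\|_2+\|\widetilde{\bm{x}}^0-\bm{x}^\star\|_2+\|\bm{x}^\star-\widetilde{\bm{x}}^{0,(j)}\|_2$ with the lower bound $\|\bm{x}^{0,(j)}\|_2\gtrsim 1$ from \eqref{eq:hxloo-asymp1}, which gives $|\alpha_{\mathrm{mutual}}^{0,(j)}-\alpha^{0,(j)}|\lesssim\sqrt{\mu^2K\log m/m}+\sigma\sqrt{K\log m}$. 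Meanwhile $|\bm{a}_j^{\mathsf{H}}\bm{x}^{0,(j)}|=|\alpha^{0,(j)}|^{-1}|\bm{a}_j^{\mathsf{H}}\widetilde{\bm{x}}^{0,(j)}|\lesssim|\bm{a}_j^{\mathsf{H}}\bm{x}^\star|+(\mathrm{III})\lesssim\sqrt{\log m}$, again using independence and the fact that $|\alpha^{0,(j)}|\asymp 1$ from the LOO version of Lemma~\ref{lemma:specinit-1}. Multiplying these two bounds yields $(\mathrm{II})\lesssim\sqrt{\mu^2K\log^2m/m}+\sigma\sqrt{K}\log m$, matching the target. Combining $(\mathrm{I}),(\mathrm{II}),(\mathrm{III})$ and taking a union bound over $1\leq j\leq m$ completes the proof.
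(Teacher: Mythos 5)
Your proof is correct and rests on the same core idea as the paper: decouple $\bm{a}_j$ from the spectral iterate via the leave-one-out sequence $\bm{x}^{0,(j)}$ and then apply Gaussian concentration to the independent piece. The paper itself omits a written proof of this lemma (it defers to \cite[Lemma 21]{ma2017implicit}), so the natural comparison is with the analogous inductive step in \eqref{eq:incoherencea-proof}, which uses a two-term decomposition $|\bm{a}_l^{\mathsf{H}}(\widetilde{\bm{x}}^{t+1}-\widetilde{\bm{x}}^{t+1,(l)})|+|\bm{a}_l^{\mathsf{H}}(\widetilde{\bm{x}}^{t+1,(l)}-\bm{x}^{\star})|$. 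That two-term split requires a bound on $\|\widetilde{\bm{x}}^{t+1}-\widetilde{\bm{x}}^{t+1,(l)}\|_2$ (the version where both iterates are aligned to the \emph{truth}), which the paper obtains from the dist-to-mutual bound by invoking the alignment-stability \cite[Lemma 55]{ma2017implicit} inside Lemma~\ref{lemma:proximity}. At the base case, Lemma~\ref{lemma:specinit-2} only furnishes $\mathsf{dist}(\bm{z}^{0,(l)},\widetilde{\bm{z}}^0)$, i.e.\ the mutual-alignment (hat) version, so one has to bridge that gap somehow. You do so with a third middle term $(\mathrm{II})=|\bm{a}_j^{\mathsf{H}}(\widehat{\bm{x}}^{0,(j)}-\widetilde{\bm{x}}^{0,(j)})|$ that isolates the alignment-parameter discrepancy $|\alpha_{\mathrm{mutual}}^{0,(j)}-\alpha^{0,(j)}|$. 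Your triangle-inequality bound on this discrepancy is weaker than what \cite[Lemma 55]{ma2017implicit} would give (you only get $O(\sqrt{\mu^2K\log m/m}+\sigma\sqrt{K\log m})$ rather than the much smaller LOO-proximity scale), but you compensate by pairing it with the sharp concentration estimate $|\bm{a}_j^{\mathsf{H}}\bm{x}^{0,(j)}|\lesssim\sqrt{\log m}$ rather than the Cauchy--Schwarz factor $\sqrt{K}$, so the product still lands in the target bound. In short: same leave-one-out strategy, slightly different bookkeeping for the alignment parameters, both valid.
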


The proof of these three lemmas can be easily obtained via straightforward
modifications to \citet[Lemmas 19,20,21]{ma2017implicit}; we omit
the details here for the sake of brevity.

\subsubsection{Proof of Theorem \ref{thm:nonconvex}}

With the above results in place, it is straightforward to prove Theorem
\ref{thm:nonconvex}. The first two claims follows respectively from
\eqref{eq:lemma19cor1} and \eqref{eq:dist-geometric}. Regarding
\eqref{eq:Fro-error-ncvx}, it follows that
\begin{align*}
\left\Vert \bm{h}^{t}\big(\bm{x}^{t}\big)^{\mathsf{H}}-\boldsymbol{h}^{\star}\boldsymbol{x}^{\mathsf{\star H}}\right\Vert _{\mathrm{F}} & \leq\left\Vert \bm{h}^{t}\big(\bm{x}^{t}\big)^{\mathsf{H}}-\bm{h}^{\star}\big(\bm{x}^{t}\big)^{\mathsf{H}}\right\Vert _{\mathrm{F}}+\left\Vert \bm{h}^{\star}\big(\bm{x}^{t}\big)^{\mathsf{H}}-\boldsymbol{h}^{\star}\boldsymbol{x}^{\mathsf{\star H}}\right\Vert _{\mathrm{F}}\\
 & \leq\left\Vert \bm{h}^{t}-\bm{h}^{\star}\right\Vert _{2}\left\Vert \bm{x}^{t}\right\Vert _{2}+\left\Vert \bm{h}^{\star}\right\Vert _{2}\left\Vert \bm{x}^{t}-\boldsymbol{x}^{\mathsf{\star}}\right\Vert _{2}\\
 & \leq2\left\Vert \bm{z}^{\star}\right\Vert _{2}\left(\rho^{t}\mathsf{dist}\left(\bm{z}^{0},\bm{z}^{\star}\right)+\frac{C_{1}\left(\lambda+\sigma\sqrt{K\log m}\right)}{c_{\rho}\left\Vert \bm{z}^{\star}\right\Vert _{2}}\right)
\end{align*}
where the last inequality follows from \eqref{eq:dist-geometric}
and the fact that
\[
\left\Vert \bm{x}^{t}\right\Vert _{2}\leq\left\Vert \bm{x}^{\star}\right\Vert _{2}+\left\Vert \bm{x}^{t}-\bm{x}^{\star}\right\Vert _{2}\leq\left\Vert \bm{z}^{\star}\right\Vert _{2}+\rho^{t}\mathsf{dist}\left(\bm{z}^{0},\bm{z}^{\star}\right)+\frac{C_{1}\left(\lambda+\sigma\sqrt{K\log m}\right)}{c_{\rho}\left\Vert \bm{z}^{\star}\right\Vert _{2}}\leq2\left\Vert \bm{z}^{\star}\right\Vert _{2}.
\]
This concludes the proof. 

\subsection{Proof of Lemma \ref{lem:consequence}\label{subsec:Proof-of-Lemmaconsequence}}
\begin{enumerate}
\item Condition \eqref{eq:dist-bound} follows directly from the $\ell_{2}$
contraction \eqref{eq:hypothesisdist} and the bound \eqref{eq:initialization-z0}
for the base case. 
\item \eqref{eq:hx-normbound} is direct consequence of \eqref{eq:dist-bound}
and triangle inequality. We have
\begin{align*}
\left\Vert \bm{h}^{t}\bm{x}^{t\mathsf{H}}-\bm{h}^{\star}\bm{x}^{\star\mathsf{H}}\right\Vert _{\text{F}} & =\left\Vert \widetilde{\bm{h}}^{t}\widetilde{\bm{x}}^{t\mathsf{H}}-\bm{h}^{\star}\bm{x}^{\star\mathsf{H}}\right\Vert _{\text{F}}\\
 & \leq\left\Vert \widetilde{\bm{h}}^{t}\widetilde{\bm{x}}^{t\mathsf{H}}-\widetilde{\bm{h}}^{t}\bm{x}^{\star\mathsf{H}}\right\Vert _{\text{F}}+\left\Vert \widetilde{\bm{h}}^{t}\bm{x}^{\star\mathsf{H}}-\bm{h}^{\star}\bm{x}^{\star\mathsf{H}}\right\Vert _{\text{F}}\\
 & \leq\left\Vert \widetilde{\bm{h}}^{t}\right\Vert _{2}\left\Vert \widetilde{\bm{x}}^{t}-\bm{x}^{\star}\right\Vert _{2}+\left\Vert \widetilde{\bm{h}}^{t}-\bm{h}^{\star}\right\Vert _{2}\left\Vert \bm{x}^{\star}\right\Vert _{2}\\
 & \leq\left(1+\mathsf{dist}\left(\bm{z}^{t},\bm{z}^{\star}\right)\right)\mathsf{dist}\left(\bm{z}^{t},\bm{z}^{\star}\right)+\mathsf{dist}\left(\bm{z}^{t},\bm{z}^{\star}\right)\\
 & \leq C\left(\sqrt{\frac{\mu^{2}K\log m}{m}}+\lambda+\sigma\sqrt{K\log m}\right),
\end{align*}
where the first equality follows from the definitions of $\widetilde{\bm{h}}^{t}$
and $\widetilde{\bm{x}}^{t}$ (cf. \eqref{eq:defn-htilde-xtilde})
and $C>0$ is some sufficiently large constant.
\item Regarding \eqref{eq:conseq-2}, it follows from the triangle inequality
that 
\begin{align*}
\max_{1\leq l\leq m}\big\|\widetilde{\boldsymbol{z}}^{t,\left(l\right)}-\boldsymbol{z}^{\star}\big\|_{2} & \leq\max_{1\leq l\leq m}\Big\{\big\|\widetilde{\boldsymbol{z}}^{t,\left(l\right)}-\widetilde{\boldsymbol{z}}^{t}\big\|_{2}+\big\|\widetilde{\boldsymbol{z}}^{t}-\boldsymbol{z}^{\star}\big\|_{2}\Big\}\\
 & \leq\widetilde{C}C_{2}\left(\frac{\mu}{\sqrt{m}}\sqrt{\frac{\mu^{2}K\log^{9}m}{m}}+\frac{\sigma}{\log^{2}m}\right)+C_{1}\left(\sqrt{\frac{\mu^{2}K\log m}{m}}+\lambda+\sigma\sqrt{K\log m}\right)\\
 & \leq2C_{1}\left(\sqrt{\frac{\mu^{2}K\log m}{m}}+\lambda+\sigma\sqrt{K\log m}\right)
\end{align*}
for $t>0$. Here, the penultimate inequality follows from the distance
bounds \eqref{eq:ztilde-l-ztilde-t-L2} and \eqref{eq:dist-bound},
while the last inequality holds as long as $m\geq C\mu^{2}\log^{8}m$
for some sufficiently large constant $C>0$. The base case follows
from \eqref{eq:hypothesis-loo-dist-truth}.
\item Condition \eqref{eq:tilde-hx} immediately results from \eqref{eq:dist-bound},
the assumption $\|\bm{x}^{\star}\|_{2}=\|\bm{h}^{\star}\|_{2}=1$,
the definition of $\mathsf{dist}\left(\cdot,\cdot\right)$, and the
triangle inequality.
\item With regards to \eqref{eq:tilde-hx-loo} and \eqref{eq:hat-hx-loo},
we shall only provide the proof for the result concerning $\bm{h}$;
the result concerning $\bm{x}$ can be derived analogously. In terms
of \eqref{eq:hat-hx-loo}, one has 
\begin{align*}
\big\|\widehat{\bm{h}}^{t,\left(l\right)}\big\|_{2} & \leq\big\|\widetilde{\bm{h}}^{t}\big\|_{2}+\big\|\widehat{\bm{h}}^{t,\left(l\right)}-\widetilde{\bm{h}}^{t}\big\|_{2}=\big\|\widetilde{\bm{h}}^{t}\big\|_{2}+\mathsf{dist}\big(\boldsymbol{h}^{t,\left(l\right)},\widetilde{\bm{h}}^{t}\big)\\
 & \lesssim1+C_{2}\left(\sqrt{\frac{\mu^{4}K\log^{9}m}{m^{2}}}+\frac{\sigma}{\log^{2}m}\right)\asymp1.
\end{align*}
Here, the first line comes from triangle inequality as well as the
definitions of $\widehat{\bm{h}}^{t,\left(l\right)}$ and $\widetilde{\bm{h}}^{t}$,
whereas the last inequality comes from \eqref{eq:hypothesisloo}.
A lower bound can be derived in a similar manner:
\begin{align*}
\big\|\widehat{\bm{h}}^{t,\left(l\right)}\big\|_{2} & \geq\big\|\widetilde{\bm{h}}^{t}\big\|_{2}-\big\|\widehat{\bm{h}}^{t,\left(l\right)}-\widetilde{\bm{h}}^{t}\big\|_{2}\gtrsim1-C_{2}\left(\sqrt{\frac{\mu^{4}K\log^{9}m}{m^{2}}}+\frac{\sigma}{\log^{2}m}\right)\asymp1.
\end{align*}
Regarding \eqref{eq:tilde-hx-loo}, apply \eqref{eq:ztilde-l-ztilde-t-L2}
and \eqref{eq:tilde-hx} to obtain
\begin{align*}
\big\|\widetilde{\bm{h}}^{t,\left(l\right)}\big\|_{2} & \leq\big\|\widetilde{\bm{h}}^{t,\left(l\right)}-\widetilde{\bm{h}}^{t}\big\|_{2}+\big\|\widetilde{\bm{h}}^{t}\big\|_{2}\lesssim C_{2}\left(\frac{\mu}{\sqrt{m}}\sqrt{\frac{\mu^{2}K\log^{9}m}{m}}+\frac{\sigma}{\log^{2}m}\right)+1\asymp1
\end{align*}
and, similarly,
\begin{align*}
\big\|\widetilde{\bm{h}}^{t,\left(l\right)}\big\|_{2} & \geq\big\|\widetilde{\bm{h}}^{t}\big\|_{2}-\big\|\widetilde{\bm{h}}^{t,\left(l\right)}-\widetilde{\bm{h}}^{t}\big\|_{2}\gtrsim1-C_{2}\left(\frac{\mu}{\sqrt{m}}\sqrt{\frac{\mu^{2}K\log^{9}m}{m}}+\frac{\sigma}{\log^{2}m}\right)\asymp1.
\end{align*}
The base case follows from similar deduction using \eqref{eq:hypothesis-loo-dist-mutual},
\eqref{eq:tilde-hx} and triangle inequality.
\item When it comes to Condition \eqref{eq:conseq-3}, it is seen from \eqref{eq:hypothesisdist}
and the choice $\rho=1-c_{\rho}\eta$ that
\begin{align*}
\left\Vert \widehat{\boldsymbol{z}}^{t-1/2}-\boldsymbol{z}^{\star}\right\Vert _{2} & \leq\rho^{t}\mathsf{dist}\left(\boldsymbol{z}^{0},\boldsymbol{z}^{\star}\right)+\frac{C_{1}}{1-\rho}\eta\left(\lambda+\sigma\sqrt{K\log m}\right)\\
 & =\rho^{t}\mathsf{dist}\left(\boldsymbol{z}^{0},\boldsymbol{z}^{\star}\right)+\frac{C_{1}}{c_{\rho}}\left(\lambda+\sigma\sqrt{K\log m}\right).
\end{align*}
Combining this with \eqref{eq:initialization-z0} guarantees the existence
of some sufficiently large constant $\widetilde{C}>0$ such that 
\begin{align*}
\left\Vert \widehat{\boldsymbol{z}}^{t-1/2}-\boldsymbol{z}^{\star}\right\Vert _{2} & \leq\rho^{t}\cdot\widetilde{C}\left(\sqrt{\frac{\mu^{2}K\log m}{m}}+\sigma\sqrt{K\log m}\right)+\frac{C_{1}}{c_{\rho}}\left(\lambda+\sigma\sqrt{K\log m}\right)\\
 & \leq C\left(\sqrt{\frac{\mu^{2}K\log m}{m}}+\lambda+\sigma\sqrt{K\log m}\right),
\end{align*}
provided that the constant $C>0$ is large enough. 
\end{enumerate}

\subsection{Proof of Corollary \ref{corollary:alpha}\label{subsec:Proof-of-Lemmaalpha}}
\begin{enumerate}
\item To establish \eqref{eq:alpha-asymp1}, we recall that the balancing
operation \eqref{eq:gradient-descent-BD-proj} guarantees $\left\Vert \bm{h}^{t}\right\Vert _{2}=\left\Vert \bm{x}^{t}\right\Vert _{2}$.
Hence, in view of the definitions of $\widetilde{\bm{h}}^{t}$ and
$\widetilde{\bm{x}}^{t}$ in \eqref{eq:defn-htilde-xtilde}, we have
\[
0=\left\Vert \bm{h}^{t}\right\Vert _{2}^{2}-\left\Vert \bm{x}^{t}\right\Vert _{2}^{2}=\left|\alpha^{t}\right|^{2}\big\|\widetilde{\bm{h}}^{t}\big\|_{2}^{2}-\frac{1}{\left|\alpha^{t}\right|^{2}}\left\Vert \widetilde{\bm{x}}^{t}\right\Vert _{2}^{2}.
\]
It then follows from the triangle inequality and the assumption $\|\bm{x}^{\star}\|_{2}=\|\bm{h}^{\star}\|_{2}$
that
\begin{align*}
0 & =\left|\alpha^{t}\right|^{2}\big\|\widetilde{\bm{h}}^{t}\big\|_{2}^{2}-\frac{1}{\left|\alpha^{t}\right|^{2}}\left\Vert \widetilde{\bm{x}}^{t}\right\Vert _{2}^{2}\leq\left|\alpha^{t}\right|^{2}\left(1+\big\|\widetilde{\bm{h}}^{t}-\bm{h}^{\star}\big\|_{2}\right)^{2}-\frac{\left(1-\left\Vert \widetilde{\bm{x}}^{t}-\bm{x}^{\star}\right\Vert _{2}\right)^{2}}{\left|\alpha^{t}\right|^{2}};\\
0 & =\left|\alpha^{t}\right|^{2}\big\|\widetilde{\bm{h}}^{t}\big\|_{2}^{2}-\frac{1}{\left|\alpha^{t}\right|^{2}}\left\Vert \widetilde{\bm{x}}^{t}\right\Vert _{2}^{2}\geq\left|\alpha^{t}\right|^{2}\left(1-\big\|\widetilde{\bm{h}}^{t}-\bm{h}^{\star}\big\|_{2}\right)^{2}-\frac{\left(1+\left\Vert \widetilde{\bm{x}}^{t}-\bm{x}^{\star}\right\Vert _{2}\right)^{2}}{\left|\alpha^{t}\right|^{2}}.
\end{align*}
Rearranging terms, we are left with
\[
\sqrt{\frac{1-\left\Vert \widetilde{\bm{x}}^{t}-\bm{x}^{\star}\right\Vert _{2}}{1+\big\|\widetilde{\bm{h}}^{t}-\bm{h}^{\star}\big\|_{2}}}\leq\left|\alpha^{t}\right|\leq\sqrt{\frac{1+\left\Vert \widetilde{\bm{x}}^{t}-\bm{x}^{\star}\right\Vert _{2}}{1-\big\|\widetilde{\bm{h}}^{t}-\bm{h}^{\star}\big\|_{2}}}.
\]
Combining this with \eqref{eq:dist-bound}, we arrive at
\[
\left|\left|\alpha^{t}\right|-1\right|\lesssim\big\|\widetilde{\bm{x}}^{t}-\bm{x}^{\star}\big\|_{2}+\big\|\widetilde{\bm{h}}^{t}-\bm{h}^{\star}\big\|_{2}\lesssim\mathsf{dist}\big(\widetilde{\bm{z}}^{t},\bm{z}^{\star}\big)\leq C_{1}\left(\sqrt{\frac{\mu^{2}K\log m}{m}}+\lambda+\sigma\sqrt{K\log m}\right).
\]
\item Regarding \eqref{eq:alpha-asymp1}, take $\bm{x}_{1}=\alpha^{t-1}\bm{x}^{t-1/2}$,
$\bm{h}_{1}=\bm{h}^{t-1/2}/\overline{\alpha^{t-1}}$, $\bm{x}_{2}=\alpha^{t-1}\bm{x}^{t-1}$
and $\bm{h}_{2}=\bm{h}^{t-1}/\overline{\alpha^{t-1}}$. Then we check
that these vectors satisfy the conditions of \citet[Lemma 54]{ma2017implicit}.
Towards this, observe that
\begin{align*}
 & \max\left\{ \left\Vert \bm{x}_{1}-\bm{x}^{\star}\right\Vert _{2},\left\Vert \bm{h}_{1}-\bm{h}^{\star}\right\Vert _{2},\left\Vert \bm{x}_{2}-\bm{x}^{\star}\right\Vert _{2},\left\Vert \bm{h}_{2}-\bm{h}^{\star}\right\Vert _{2}\right\} \\
 & \quad\leq\max\left\{ \left\Vert \widehat{\bm{z}}^{t-1/2}-\bm{z}^{\star}\right\Vert _{2},\mathsf{dist}\left(\bm{z}^{t-1},\bm{z}^{\star}\right)\right\} \\
 & \quad\lesssim\sqrt{\frac{\mu^{2}K\log m}{m}}+\lambda+\sigma\sqrt{K\log m}
\end{align*}
holds with probability over $1-O(m^{-100}+e^{-CK}\log m)$ for some
constant $C>0$. Here, the first inequality comes from the definitions
of $\widehat{\bm{z}}^{t-1/2}$ (cf.~\eqref{eq:defn-hhat-xhat-1/2}),
and the last inequality follows from \eqref{eq:dist-bound} and \eqref{eq:lem-dist-1}.
Hence, the condition of \citet[Lemma 54]{ma2017implicit} is satisfied.
Note that the statement of \citet[Lemma 54]{ma2017implicit} involves
two quantities $\alpha_{1}$ and $\alpha_{2}$, which in our case
are given by $\alpha_{1}=\alpha^{t-1/2}/\alpha^{t-1}$ and $\alpha_{2}=1$.
\citet[Lemma 54]{ma2017implicit} tells us that
\[
\left|\alpha_{1}-\alpha_{2}\right|=\left|\frac{\alpha^{t-1/2}}{\alpha^{t-1}}-1\right|\lesssim\left\Vert \alpha^{t-1}\bm{x}^{t-1/2}-\alpha^{t-1}\bm{x}^{t-1}\right\Vert _{2}+\left\Vert \frac{\bm{h}^{t-1/2}}{\overline{\alpha^{t-1}}}-\frac{\bm{h}^{t-1}}{\overline{\alpha^{t-1}}}\right\Vert _{2}.
\]
Additionally, the gradient update rule \eqref{eq:gradient-descent-BD}
reveals that 
\begin{align*}
 & \left\Vert \left[\begin{array}{c}
\frac{\boldsymbol{h}^{t-1/2}}{\overline{\alpha^{t-1}}}-\frac{\bm{h}^{t-1}}{\overline{\alpha^{t-1}}}\\
\alpha^{t-1}\boldsymbol{x}^{t-1/2}-\alpha^{t-1}\bm{x}^{t-1}
\end{array}\right]\right\Vert _{2}\\
 & =\left\Vert \left[\begin{array}{c}
-\frac{\eta}{\left|\alpha^{t-1}\right|^{2}}\nabla_{\boldsymbol{h}}f_{\mathsf{reg}\text{-}\mathsf{free}}\big(\widetilde{\bm{z}}^{t-1}\big)-\eta\lambda\widetilde{\bm{h}}^{t-1}\\
-\eta\left|\alpha^{t-1}\right|^{2}\nabla_{\boldsymbol{x}}f_{\mathsf{reg}\text{-}\mathsf{free}}\big(\widetilde{\bm{z}}^{t-1}\big)-\eta\lambda\widetilde{\bm{x}}^{t-1}
\end{array}\right]\right\Vert _{2}\\
 & =\left\Vert \left[\begin{array}{c}
-\frac{\eta}{\left|\alpha^{t-1}\right|^{2}}\left(\nabla_{\boldsymbol{h}}f_{\mathsf{reg}\text{-}\mathsf{free}}\big(\widetilde{\bm{z}}^{t-1}\big)-\nabla_{\boldsymbol{h}}f_{\mathsf{reg}\text{-}\mathsf{free}}\left(\bm{z}^{\star}\right)\right)-\eta\lambda\widetilde{\bm{h}}^{t-1}-\frac{\eta}{\left|\alpha^{t-1}\right|^{2}}\nabla_{\boldsymbol{h}}f_{\mathsf{reg}\text{-}\mathsf{free}}\left(\bm{z}^{\star}\right)\\
-\eta\left|\alpha^{t-1}\right|^{2}\left(\nabla_{\boldsymbol{x}}f_{\mathsf{reg}\text{-}\mathsf{free}}\big(\widetilde{\bm{z}}^{t-1}\big)-\nabla_{\boldsymbol{x}}f_{\mathsf{reg}\text{-}\mathsf{free}}\left(\bm{z}^{\star}\right)\right)-\eta\lambda\widetilde{\bm{x}}^{t-1}-\eta\left|\alpha^{t-1}\right|^{2}\nabla_{\boldsymbol{x}}f_{\mathsf{reg}\text{-}\mathsf{free}}\left(\bm{z}^{\star}\right)
\end{array}\right]\right\Vert _{2}\\
 & \le\left\Vert \left[\begin{array}{c}
\frac{\eta}{\left|\alpha^{t-1}\right|^{2}}\left(\nabla_{\boldsymbol{h}}f_{\mathsf{reg}\text{-}\mathsf{free}}\big(\widetilde{\bm{z}}^{t-1}\big)-\nabla_{\boldsymbol{h}}f_{\mathsf{reg}\text{-}\mathsf{free}}\left(\bm{z}^{\star}\right)\right)\\
\eta\left|\alpha^{t-1}\right|^{2}\left(\nabla_{\boldsymbol{x}}f_{\mathsf{reg}\text{-}\mathsf{free}}\big(\widetilde{\bm{z}}^{t-1}\big)-\nabla_{\boldsymbol{x}}f_{\mathsf{reg}\text{-}\mathsf{free}}\left(\bm{z}^{\star}\right)\right)
\end{array}\right]\right\Vert _{2}+\left\Vert \left[\begin{array}{c}
\eta\lambda\widetilde{\bm{h}}^{t-1}\\
\eta\lambda\widetilde{\bm{x}}^{t-1}
\end{array}\right]\right\Vert _{2}\\
\text{\ensuremath{\quad}} & \quad+\left\Vert \left[\begin{array}{c}
\frac{\eta}{\left|\alpha^{t-1}\right|^{2}}\nabla_{\boldsymbol{h}}f_{\mathsf{reg}\text{-}\mathsf{free}}\left(\bm{z}^{\star}\right)\\
\eta\left|\alpha^{t-1}\right|^{2}\nabla_{\boldsymbol{x}}f_{\mathsf{reg}\text{-}\mathsf{free}}\left(\bm{z}^{\star}\right)
\end{array}\right]\right\Vert _{2}\\
 & \le4\eta\left\Vert \nabla f_{\mathsf{reg-free}}\left(\widetilde{\bm{z}}^{t-1}\right)-\nabla f_{\mathsf{reg-free}}\left(\bm{z}^{\star}\right)\right\Vert _{2}+\eta\lambda\left\Vert \widetilde{\bm{z}}^{t-1}\right\Vert _{2}+4\eta\left\Vert \nabla f_{\mathsf{reg-free}}\left(\bm{z}^{\star}\right)\right\Vert _{2},
\end{align*}
where the last inequality utilizes the consequence of \eqref{eq:alpha-asymp1}
that 
\begin{align*}
\frac{1}{2}\leq1-\left|\left|\alpha^{t-1}\right|-1\right|\leq\left|\alpha^{t-1}\right| & \leq1+\left|\left|\alpha^{t-1}\right|-1\right|\leq2.
\end{align*}
Then, one has
\[
\left[\begin{array}{c}
\nabla f_{\mathsf{reg}\text{-}\mathsf{free}}\left(\widetilde{\bm{z}}^{t-1}\right)-\nabla f_{\mathsf{reg}\text{-}\mathsf{free}}\left(\bm{z}^{\star}\right)\\
\overline{\nabla f_{\mathsf{reg}\text{-}\mathsf{free}}\left(\widetilde{\bm{z}}^{t-1}\right)-\nabla f_{\mathsf{reg}\text{-}\mathsf{free}}\left(\bm{z}^{\star}\right)}
\end{array}\right]=\int_{0}^{1}\nabla^{2}f_{\mathsf{reg}\text{-}\mathsf{free}}\left(\bm{z}\left(s\right)\right)\mathrm{d}s\left[\begin{array}{c}
\widetilde{\bm{z}}^{t}-\bm{z}^{\star}\\
\overline{\widetilde{\bm{z}}^{t}-\bm{z}^{\star}}
\end{array}\right],
\]
where $\bm{z}\left(s\right)=\bm{z}^{\star}+s\left(\widetilde{\bm{z}}^{t}-\bm{z}^{\star}\right)$.
Therefore, for all $0\leq s\leq1$ we have 
\begin{align*}
\max\left\{ \left\Vert \boldsymbol{h}\left(s\right)-\boldsymbol{h}^{\star}\right\Vert _{2},\left\Vert \boldsymbol{x}\left(s\right)-\boldsymbol{x}^{\star}\right\Vert _{2}\right\}  & \leq\frac{c}{\log^{2}m},\\
\max_{1\leq j\leq m}\left|\boldsymbol{a}_{j}^{\mathsf{H}}\left(\boldsymbol{x}\left(s\right)-\boldsymbol{x}^{\star}\right)\right| & \leq2C_{3}\tfrac{1}{\log^{3/2}m},\\
\max_{1\leq j\leq m}\left|\boldsymbol{b}_{j}^{\mathsf{H}}\boldsymbol{h}\left(s\right)\right| & \leq2C_{4}\Big(\tfrac{\mu\log^{2}m}{\sqrt{m}}+\sigma\Big),
\end{align*}
which are guaranteed by the induction hypotheses \eqref{sec:hypotheses-ncvx}.
The conditions of Lemma \eqref{lemma:geometry} are satisfied, allowing
us to obtain 
\[
\left\Vert \int_{0}^{1}\nabla^{2}f_{\mathsf{reg}\text{-}\mathsf{free}}\left(\bm{z}\left(s\right)\right)\mathrm{d}s\right\Vert \leq\left\Vert \int_{0}^{1}\nabla^{2}f\left(\bm{z}\left(s\right)\right)\mathrm{d}s\right\Vert +\lambda\leq4+\lambda\leq5.
\]
Consequently, it follows that
\begin{align*}
\left\Vert \left[\begin{array}{c}
\frac{\boldsymbol{h}^{t-1/2}}{\overline{\alpha^{t-1}}}-\frac{\bm{h}^{t-1}}{\overline{\alpha^{t-1}}}\\
\alpha^{t-1}\boldsymbol{x}^{t-1/2}-\alpha^{t-1}\bm{x}^{t-1}
\end{array}\right]\right\Vert _{2} & \leq20\eta\left\Vert \widetilde{\bm{z}}^{t-1}-\bm{z}^{\star}\right\Vert _{2}+\eta\lambda\left\Vert \widetilde{\bm{z}}^{t-1}\right\Vert _{2}+4\eta\left\Vert \nabla f_{\mathsf{reg}\text{-}\mathsf{free}}\left(\bm{z}^{\star}\right)\right\Vert _{2}\\
 & \leq C\eta\left(\sqrt{\frac{\mu^{2}K\log m}{m}}+\lambda+\sigma\sqrt{K\log m}\right),
\end{align*}
where the last inequality results from \eqref{eq:dist-bound}, \eqref{eq:tilde-hx},
and \eqref{eq:gradientstar}. Hence, we arrive at
\[
\left|\frac{\alpha^{t-1/2}}{\alpha^{t-1}}-1\right|\lesssim\eta\left(\sqrt{\frac{\mu^{2}K\log m}{m}}+\lambda+\sigma\sqrt{K\log m}\right).
\]
\item Similarly, the balancing step \eqref{eq:defn-WF-LOO-BD} implies $\big\|\bm{h}^{t,\left(l\right)}\big\|_{2}^{2}=\big\|\bm{x}^{t,\left(l\right)}\big\|_{2}^{2}$.
From the definitions of $\alpha_{\text{mutual}}^{t,\left(l\right)}$
(cf.~\eqref{eq:defn-alpha-mutual-loo}), $\widehat{\bm{h}}^{t,\left(l\right)}$
and $\widehat{\bm{x}}^{t,\left(l\right)}$ (cf.~\eqref{eq:defn-zt-l-mutual}),
we have
\[
0=\big\|\bm{h}^{t,\left(l\right)}\big\|_{2}^{2}-\big\|\bm{x}^{t,\left(l\right)}\big\|_{2}^{2}=\big|\alpha_{\text{mutual}}^{t,\left(l\right)}\big|^{2}\big\|\widehat{\bm{h}}^{t,\left(l\right)}\big\|_{2}^{2}-\big|\alpha_{\text{mutual}}^{t,\left(l\right)}\big|^{-2}\big\|\widehat{\bm{x}}^{t,\left(l\right)}\big\|_{2}^{2}.
\]
Then the triangle inequality together with the assumption $\|\bm{x}^{\star}\|_{2}=\|\bm{h}^{\star}\|_{2}$
gives 
\begin{align*}
0 & =\big|\alpha_{\text{mutual}}^{t,\left(l\right)}\big|^{2}\big\|\widehat{\bm{h}}^{t,\left(l\right)}\big\|_{2}^{2}-\frac{1}{\big|\alpha_{\text{mutual}}^{t,\left(l\right)}\big|^{2}}\big\|\widehat{\bm{x}}^{t,\left(l\right)}\big\|_{2}^{2}\leq\big|\alpha_{\text{mutual}}^{t,\left(l\right)}\big|^{2}\left(1+\big\|\widehat{\bm{h}}^{t,\left(l\right)}-\bm{h}^{\star}\big\|_{2}\right)^{2}-\frac{\left(1-\left\Vert \widehat{\bm{x}}^{t,\left(l\right)}-\bm{x}^{\star}\right\Vert _{2}\right)^{2}}{\big|\alpha_{\text{mutual}}^{t,\left(l\right)}\big|^{2}},\\
0 & =\big|\alpha_{\text{mutual}}^{t,\left(l\right)}\big|^{2}\big\|\widehat{\bm{h}}^{t,\left(l\right)}\big\|_{2}^{2}-\frac{1}{\big|\alpha_{\text{mutual}}^{t,\left(l\right)}\big|^{2}}\big\|\widehat{\bm{x}}^{t,\left(l\right)}\big\|_{2}^{2}\geq\big|\alpha_{\text{mutual}}^{t,\left(l\right)}\big|^{2}\left(1-\big\|\widehat{\bm{h}}^{t,\left(l\right)}-\bm{h}^{\star}\big\|_{2}\right)^{2}-\frac{\left(1+\left\Vert \widehat{\bm{x}}^{t,\left(l\right)}-\bm{x}^{\star}\right\Vert _{2}\right)^{2}}{\big|\alpha_{\text{mutual}}^{t,\left(l\right)}\big|^{2}},
\end{align*}
which in turn lead to
\[
\sqrt{\frac{1-\left\Vert \widehat{\bm{x}}^{t,\left(l\right)}-\bm{x}^{\star}\right\Vert _{2}}{1+\big\|\widehat{\bm{h}}^{t,\left(l\right)}-\bm{h}^{\star}\big\|_{2}}}\leq\big|\alpha_{\text{mutual}}^{t,\left(l\right)}\big|\leq\sqrt{\frac{1+\left\Vert \widehat{\bm{x}}^{t,\left(l\right)}-\bm{x}^{\star}\right\Vert _{2}}{1-\big\|\widehat{\bm{h}}^{t,\left(l\right)}-\bm{h}^{\star}\big\|_{2}}}.
\]
Taking this together with \eqref{eq:hypothesisloo} and \eqref{eq:dist-bound},
we reach
\begin{align*}
\left|\big|\alpha_{\text{mutual}}^{t,\left(l\right)}\big|-1\right| & \lesssim\big\|\widehat{\bm{z}}^{t,\left(l\right)}-\bm{z}^{\star}\big\|_{2}\leq\big\|\widehat{\bm{z}}^{t,\left(l\right)}-\widetilde{\bm{z}}^{t}\big\|_{2}+\big\|\widetilde{\bm{z}}^{t}-\bm{z}^{\star}\big\|_{2}\\
 & \leq C_{2}\left(\sqrt{\frac{\mu^{4}K\log^{9}m}{m^{2}}}+\frac{\sigma}{\log^{2}m}\right)+C_{1}\left(\sqrt{\frac{\mu^{2}K\log m}{m}}+\lambda+\sigma\sqrt{K\log m}\right)\\
 & \leq\left(C_{1}+C_{2}\right)\left(\sqrt{\frac{\mu^{2}K\log m}{m}}+\lambda+\sigma\sqrt{K\log m}\right),
\end{align*}
where the second line follows from the distance bounds \eqref{eq:hypothesisloo}
and \eqref{eq:dist-bound}, and the last line holds with the proviso
that $m\geq\mu^{2}K\log^{8}m$. This establishes the claim \eqref{eq:alphaloo-asymp1}. 
\item Finally, \eqref{eq:hx-asymp1} and \eqref{eq:hxloo-asymp1} are direct
consequences of \eqref{eq:alpha-asymp1}, \eqref{eq:alphaloo-asymp1}
as well as the fact that $\left\Vert \bm{h}^{\star}\right\Vert _{2}=\left\Vert \bm{x}^{\star}\right\Vert _{2}=1$.
We omit the details for the sake of brevity. 
\end{enumerate}

\subsection{Proof of Lemma \ref{lemma:geometry}\label{subsec:proofgeometry}}

Define another loss function as follows
\begin{align*}
f_{\mathsf{clean}}\left(\boldsymbol{z}\right) & :=\sum_{j=1}^{m}\big|\boldsymbol{b}_{j}^{\mathsf{H}}\boldsymbol{hx}^{\mathsf{H}}\boldsymbol{a}_{j}-\boldsymbol{b}_{j}^{\mathsf{H}}\boldsymbol{h}^{\star}\boldsymbol{x}^{\star\mathsf{H}}\boldsymbol{a}_{j}\big|^{2},
\end{align*}
which excludes both the noise $\bm{\xi}$ and the regularization term
from consideration when compared with the original loss $f(\cdot)$.
By virtue of \eqref{eq:whessian}, it is easily seen that
\begin{align}
\nabla^{2}f_{\mathsf{reg}\text{-}\mathsf{free}}\left(\bm{z}\right)=\nabla^{2}f_{\mathsf{clean}}\left(\boldsymbol{z}\right)+ & \left[\begin{array}{cc}
\boldsymbol{M} & \boldsymbol{0}\\
0 & \overline{\boldsymbol{M}}
\end{array}\right],
\end{align}
where
\begin{align*}
\boldsymbol{M}:= & \left[\begin{array}{cc}
\boldsymbol{0} & -\sum_{j=1}^{m}\xi_{j}\bm{b}_{j}\bm{a}_{j}^{\mathsf{H}}\\
-\left(\sum_{j=1}^{m}\xi_{j}\bm{b}_{j}\bm{a}_{j}^{\mathsf{H}}\right)^{\mathsf{H}} & \boldsymbol{0}
\end{array}\right]\in\mathbb{C}^{2K\times2K}.
\end{align*}
By setting
\[
\boldsymbol{u}=\left[\begin{array}{c}
\boldsymbol{h}_{1}-\boldsymbol{h}_{2}\\
\boldsymbol{x}_{1}-\boldsymbol{x}_{2}\\
\overline{\boldsymbol{h}_{1}-\boldsymbol{h}_{2}}\\
\overline{\boldsymbol{x}_{1}-\boldsymbol{x}_{2}}
\end{array}\right]\eqqcolon\left[\begin{array}{c}
\bm{u}_{\bm{h}}\\
\bm{u}_{\bm{x}}\\
\overline{\bm{u}_{\bm{h}}}\\
\overline{\bm{u}_{\bm{x}}}
\end{array}\right]
\]
and recalling the definitions of $\bm{D}$, $\gamma_{1}$, $\gamma_{2}$
in the statement of Lemma \ref{lemma:geometry}, we arrive at
\begin{align*}
 & \boldsymbol{u}^{\mathsf{H}}\left[\bm{D}\nabla^{2}f_{\mathsf{reg}\text{-}\mathsf{free}}\left(\boldsymbol{z}\right)+\nabla^{2}f_{\mathsf{reg}\text{-}\mathsf{free}}\left(\boldsymbol{z}\right)\bm{D}\right]\boldsymbol{u}\\
 & =\boldsymbol{u}^{\mathsf{H}}\left[\bm{D}\nabla^{2}f_{\mathsf{clean}}\left(\boldsymbol{z}\right)+\nabla^{2}f_{\mathsf{clean}}\left(\boldsymbol{z}\right)\bm{D}\right]\boldsymbol{u}-\ensuremath{2\left(\gamma_{1}+\gamma_{2}\right)}\text{Re}\left(\boldsymbol{u_{h}}^{\mathsf{H}}\sum_{j=1}^{m}\xi_{j}\boldsymbol{b}_{j}\boldsymbol{a}_{j}^{\mathsf{H}}\boldsymbol{u_{x}}\right)\\
 & \quad-\ensuremath{2\left(\gamma_{1}+\gamma_{2}\right)}\text{Re}\left(\overline{\boldsymbol{u_{h}}}^{\mathsf{H}}\overline{\sum_{j=1}^{m}\xi_{j}\boldsymbol{b}_{j}\boldsymbol{a}_{j}^{\mathsf{H}}}\overline{\boldsymbol{u_{x}}}\right)\\
 & =\boldsymbol{u}^{\mathsf{H}}\left[\bm{D}\nabla^{2}f_{\mathsf{clean}}\left(\boldsymbol{z}\right)+\nabla^{2}f_{\mathsf{clean}}\left(\boldsymbol{z}\right)\bm{D}\right]\boldsymbol{u}-\ensuremath{4\left(\gamma_{1}+\gamma_{2}\right)}\text{Re}\left(\boldsymbol{u_{h}}^{\mathsf{H}}\sum_{j=1}^{m}\xi_{j}\boldsymbol{b}_{j}\boldsymbol{a}_{j}^{\mathsf{H}}\boldsymbol{u_{x}}\right).
\end{align*}
Consequently, with high probability one has
\begin{align}
 & \left|\boldsymbol{u}^{\mathsf{H}}\left[\bm{D}\nabla^{2}f_{\mathsf{reg}\text{-}\mathsf{free}}\left(\boldsymbol{z}\right)+\nabla^{2}f_{\mathsf{reg}\text{-}\mathsf{free}}\left(\boldsymbol{z}\right)\bm{D}\right]\boldsymbol{u}-\boldsymbol{u}^{\mathsf{H}}\left[\bm{D}\nabla^{2}f_{\mathsf{clean}}\left(\boldsymbol{z}\right)+\nabla^{2}f_{\mathsf{clean}}\left(\boldsymbol{z}\right)\bm{D}\right]\boldsymbol{u}\right|\nonumber \\
 & \quad\leq\ensuremath{4\left(\gamma_{1}+\gamma_{2}\right)}\left|\text{Re}\left(\boldsymbol{u_{h}}^{\mathsf{H}}\sum_{j=1}^{m}\xi_{j}\boldsymbol{b}_{j}\boldsymbol{a}_{j}^{\mathsf{H}}\boldsymbol{u_{x}}\right)\right|\leq\ensuremath{4\left(\gamma_{1}+\gamma_{2}\right)}\left\Vert \sum_{j=1}^{m}\xi_{j}\boldsymbol{b}_{j}\boldsymbol{a}_{j}^{\mathsf{H}}\right\Vert \left\Vert \boldsymbol{u}\right\Vert _{2}^{2}\nonumber \\
 & \quad\lesssim\sigma\sqrt{K\log m}\left\Vert \boldsymbol{u}\right\Vert _{2}^{2}=:\mathcal{E}_{\mathsf{res}}\label{eq:defn-E-res-bound}
\end{align}
for any vector $\bm{u}$, where the last inequality follows from Lemma
\ref{lemma:useful} as well as the assumptions $\gamma_{1},\gamma_{2}\asymp1$. 

The above bound allows us to turn attention to $\nabla^{2}f_{\mathsf{clean}}$,
which has been studied in \cite{ma2017implicit}. In particular, it
has been shown in \cite{ma2017implicit} that
\[
\boldsymbol{u}^{\mathsf{H}}\left[\bm{D}\nabla^{2}f_{\mathsf{clean}}\left(\boldsymbol{z}\right)+\nabla^{2}f_{\mathsf{clean}}\left(\boldsymbol{z}\right)\bm{D}\right]\boldsymbol{u}\geq\left(1/4\right)\cdot\left\Vert \boldsymbol{u}\right\Vert _{2}^{2}\quad\text{and}\ensuremath{\quad}\ensuremath{\left\Vert \nabla^{2}f_{\mathsf{clean}}\left(\boldsymbol{z}\right)\right\Vert \leq}3
\]
under the assumptions stated in the lemma. These bounds together with
\eqref{eq:defn-E-res-bound} yield\begin{subequations}\label{eq:Hessian-reg-free-UB1}
\begin{align}
\boldsymbol{u}^{\mathsf{H}}\left[\bm{D}\nabla^{2}f_{\mathsf{reg}\text{-}\mathsf{free}}\left(\boldsymbol{z}\right)+\nabla^{2}f_{\mathsf{reg}\text{-}\mathsf{free}}\left(\boldsymbol{z}\right)\bm{D}\right]\boldsymbol{u} & \geq\left(1/4\right)\cdot\left\Vert \boldsymbol{u}\right\Vert _{2}^{2}-\mathcal{E}_{\mathsf{res}}\geq\left(1/8\right)\cdot\left\Vert \boldsymbol{u}\right\Vert _{2}^{2},\\
\quad\text{and}\quad\left\Vert \nabla^{2}f_{\mathsf{reg}\text{-}\mathsf{free}}\left(\boldsymbol{z}\right)\right\Vert  & \leq\ensuremath{\left\Vert \nabla^{2}f_{\mathsf{clean}}\left(\boldsymbol{z}\right)\right\Vert }+\sup_{\bm{u}\neq\bm{0}}\frac{\mathcal{E}_{\mathsf{res}}}{\|\bm{u}\|_{2}^{2}}\leq7/2,
\end{align}
\end{subequations}provided that $\sigma\sqrt{K\log m}\leq0.5$. To
finish up, we recall that
\[
\nabla^{2}f\left(\bm{z}\right)=\nabla^{2}f_{\mathsf{reg}\text{-}\mathsf{free}}\left(\boldsymbol{z}\right)+\lambda\bm{I},
\]
which combined with \eqref{eq:Hessian-reg-free-UB1} and the assumption
$\lambda\leq C_{\lambda}\sigma\sqrt{K\log m}\leq C_{\lambda}c_{1}/\log^{2}m\ll1$
yields
\begin{align*}
\boldsymbol{u}^{\mathsf{H}}\left[\bm{D}\nabla^{2}f\left(\boldsymbol{z}\right)+\nabla^{2}f\left(\boldsymbol{z}\right)\bm{D}\right]\boldsymbol{u} & =\boldsymbol{u}^{\mathsf{H}}\left[\bm{D}\nabla^{2}f_{\mathsf{reg}\text{-}\mathsf{free}}\left(\boldsymbol{z}\right)+\nabla^{2}f_{\mathsf{reg}\text{-}\mathsf{free}}\left(\boldsymbol{z}\right)\bm{D}\right]\boldsymbol{u}+2\lambda\boldsymbol{u}^{\mathsf{H}}\bm{D}\bm{u}\\
 & \geq\boldsymbol{u}^{\mathsf{H}}\left[\bm{D}\nabla^{2}f_{\mathsf{reg}\text{-}\mathsf{free}}\left(\boldsymbol{z}\right)+\nabla^{2}f_{\mathsf{reg}\text{-}\mathsf{free}}\left(\boldsymbol{z}\right)\bm{D}\right]\boldsymbol{u}\\
 & \geq\left\Vert \boldsymbol{u}\right\Vert _{2}^{2}/8
\end{align*}
and
\begin{align*}
\left\Vert \nabla^{2}f\left(\bm{z}\right)\right\Vert  & \leq\left\Vert \nabla^{2}f_{\mathsf{reg}\text{-}\mathsf{free}}\left(\boldsymbol{z}\right)\right\Vert +\lambda\leq4.
\end{align*}

\subsection{Proof of Lemma \ref{lemma:distance}\label{subsec:Proof-of-Lemmadistance}}

Recognizing that
\[
f_{\mathsf{reg}\text{-}\mathsf{free}}\left(\bm{h},\bm{x}\right)=f_{\mathsf{reg}\text{-}\mathsf{free}}\left(\frac{1}{\overline{\alpha}}\bm{h},\alpha\bm{x}\right)\quad\text{and}\quad\nabla f_{\mathsf{reg}\text{-}\mathsf{free}}\left(\bm{h},\bm{x}\right)=\left[\begin{array}{c}
\frac{1}{\alpha}\nabla_{\bm{h}}f_{\mathsf{reg}\text{-}\mathsf{free}}\left(\frac{1}{\overline{\alpha}}\bm{h},\alpha\bm{x}\right)\\
\overline{\alpha}\nabla_{\bm{x}}f_{\mathsf{reg}\text{-}\mathsf{free}}\left(\frac{1}{\overline{\alpha}}\bm{h},\alpha\bm{x}\right)
\end{array}\right]
\]
and recalling the definitions of $\big(\widetilde{\bm{h}}^{t},\widetilde{\bm{x}}^{t}\big):=\big(\tfrac{1}{\overline{\alpha^{t}}}\boldsymbol{h}^{t},\alpha^{t}\bm{x}^{t}\big)$,
we can deduce that
\begin{align}
 & \mathsf{dist}\left(\boldsymbol{z}^{t+1},\boldsymbol{z}^{\star}\right)=\mathsf{dist}\big(\boldsymbol{z}^{t+1/2},\boldsymbol{z}^{\star}\big)\leq\left\Vert \left[\begin{array}{c}
\frac{1}{\overline{\alpha^{t}}}\boldsymbol{h}^{t+1/2}-\boldsymbol{h}^{\star}\\
\alpha^{t}\boldsymbol{x}^{t+1/2}-\boldsymbol{x}^{\star}
\end{array}\right]\right\Vert _{2}\\
 & \quad=\left\Vert \left[\begin{array}{c}
\widetilde{\bm{h}}^{t}-\frac{\eta}{\left|\alpha^{t}\right|^{2}}\nabla_{\boldsymbol{h}}f_{\mathsf{reg}\text{-}\mathsf{free}}\big(\widetilde{\bm{z}}^{t}\big)-\eta\lambda\widetilde{\bm{h}}^{t}-\left(\boldsymbol{h}^{\star}-\frac{\eta}{\left|\alpha^{t}\right|^{2}}\nabla_{\boldsymbol{h}}f_{\mathsf{reg}\text{-}\mathsf{free}}\left(\bm{z}^{\star}\right)\right)-\frac{\eta}{\left|\alpha^{t}\right|^{2}}\nabla_{\boldsymbol{h}}f_{\mathsf{reg}\text{-}\mathsf{free}}\left(\bm{z}^{\star}\right)\\
\widetilde{\bm{x}}^{t}-\eta\left|\alpha^{t}\right|^{2}\nabla_{\boldsymbol{x}}f_{\mathsf{reg}\text{-}\mathsf{free}}\big(\widetilde{\bm{z}}^{t}\big)-\eta\lambda\widetilde{\bm{x}}^{t}-\left(\boldsymbol{x}^{\star}-\eta\left|\alpha^{t}\right|^{2}\nabla_{\boldsymbol{x}}f_{\mathsf{reg}\text{-}\mathsf{free}}\left(\bm{z}^{\star}\right)\right)-\eta\left|\alpha^{t}\right|^{2}\nabla_{\boldsymbol{x}}f_{\mathsf{reg}\text{-}\mathsf{free}}\left(\bm{z}^{\star}\right)
\end{array}\right]\right\Vert _{2}\nonumber \\
 & \quad\leq\underbrace{\left\Vert \left[\begin{array}{c}
\widetilde{\bm{h}}^{t}-\frac{\eta}{\left|\alpha^{t}\right|^{2}}\nabla_{\boldsymbol{h}}f_{\mathsf{reg}\text{-}\mathsf{free}}\big(\widetilde{\bm{z}}^{t}\big)-\left(\boldsymbol{h}^{\star}-\frac{\eta}{\left|\alpha^{t}\right|^{2}}\nabla_{\boldsymbol{h}}f_{\mathsf{reg}\text{-}\mathsf{free}}\left(\bm{z}^{\star}\right)\right)\\
\widetilde{\bm{x}}^{t}-\eta\left|\alpha^{t}\right|^{2}\nabla_{\boldsymbol{x}}f_{\mathsf{reg}\text{-}\mathsf{free}}\big(\widetilde{\bm{z}}^{t}\big)-\left(\boldsymbol{x}^{\star}-\eta\left|\alpha^{t}\right|^{2}\nabla_{\boldsymbol{x}}f_{\mathsf{reg}\text{-}\mathsf{free}}\left(\bm{z}^{\star}\right)\right)
\end{array}\right]\right\Vert _{2}}_{\eqqcolon\beta_{1}}\nonumber \\
 & \qquad\qquad+\underbrace{\left\Vert \left[\begin{array}{c}
\frac{\eta}{\left|\alpha^{t}\right|^{2}}\nabla_{\boldsymbol{h}}f_{\mathsf{reg}\text{-}\mathsf{free}}\left(\bm{z}^{\star}\right)\\
\eta\left|\alpha^{t}\right|^{2}\nabla_{\boldsymbol{x}}f_{\mathsf{reg}\text{-}\mathsf{free}}\left(\bm{z}^{\star}\right)
\end{array}\right]\right\Vert _{2}}_{\eqqcolon\beta_{2}}+\underbrace{\eta\lambda\left\Vert \left[\begin{array}{c}
\widetilde{\bm{h}}^{t}\\
\widetilde{\bm{x}}^{t}
\end{array}\right]\right\Vert _{2}}_{\eqqcolon\beta_{3}}.\label{eq:lem-dist-decom}
\end{align}

Using an argument similar to the proof idea of \citet[Equation (210)]{ma2017implicit},
we can obtain 
\begin{align}
\beta_{1}^{2} & =\left\Vert \widetilde{\bm{h}}^{t}-\frac{\eta}{\left|\alpha^{t}\right|^{2}}\nabla_{\boldsymbol{h}}f_{\mathsf{reg}\text{-}\mathsf{free}}\big(\widetilde{\bm{z}}^{t}\big)-\left(\boldsymbol{h}^{\star}-\frac{\eta}{\left|\alpha^{t}\right|^{2}}\nabla_{\boldsymbol{h}}f_{\mathsf{reg}\text{-}\mathsf{free}}\left(\bm{z}^{\star}\right)\right)\right\Vert _{2}^{2}\nonumber \\
 & \quad\quad+\left\Vert \widetilde{\bm{x}}^{t}-\eta\left|\alpha^{t}\right|^{2}\nabla_{\boldsymbol{x}}f_{\mathsf{reg}\text{-}\mathsf{free}}\big(\widetilde{\bm{z}}^{t}\big)-\left(\boldsymbol{x}^{\star}-\eta\left|\alpha^{t}\right|^{2}\nabla_{\boldsymbol{x}}f_{\mathsf{reg}\text{-}\mathsf{free}}\left(\bm{z}^{\star}\right)\right)\right\Vert _{2}^{2}\nonumber \\
 & \leq\left(1-\frac{\eta}{8}\right)\left\Vert \widetilde{\bm{z}}^{t}-\bm{z}^{\star}\right\Vert _{2}^{2}.\label{eq:lem-dist-beta1}
\end{align}
Regarding $\beta_{2}$, we first invoke Lemma \ref{lemma:noise} and
the fact $\nabla f_{\mathsf{clean}}\left(\boldsymbol{z}^{\star}\right)=\bm{0}$
to derive 
\begin{align}
\left\Vert \nabla f_{\mathsf{reg}\text{-}\mathsf{free}}\left(\bm{z}^{\star}\right)\right\Vert _{\text{2}} & \leq\|\nabla f_{\mathsf{clean}}\left(\boldsymbol{z}^{\star}\right)\|_{2}+\left\Vert \mathcal{A}^{*}\left(\bm{\xi}\right)\right\Vert \left\Vert \boldsymbol{h}^{\star}\right\Vert _{2}+\left\Vert \mathcal{A}^{*}\left(\bm{\xi}\right)\right\Vert \left\Vert \boldsymbol{x}^{\star}\right\Vert _{2}\nonumber \\
 & \lesssim\sigma\sqrt{K\log m}.\label{eq:gradientstar}
\end{align}
A little algebra then yields
\begin{align*}
\beta_{2}^{2} & =\left\Vert \frac{\eta}{|\alpha^{t}|^{2}}\nabla_{\boldsymbol{h}}f_{\mathsf{reg}\text{-}\mathsf{free}}\left(\bm{z}^{\star}\right)\right\Vert _{2}^{2}+\left\Vert \eta\left|\alpha^{t}\right|^{2}\nabla_{\boldsymbol{x}}f_{\mathsf{reg}\text{-}\mathsf{free}}\left(\bm{z}^{\star}\right)\right\Vert _{2}^{2}\\
 & \leq\Big(\frac{\eta^{2}}{\left|\alpha^{t}\right|^{4}}+\eta^{2}\left|\alpha^{t}\right|^{4}\Big)\left\Vert \nabla f_{\mathsf{reg}\text{-}\mathsf{free}}\left(\bm{z}^{\star}\right)\right\Vert _{2}^{2}\\
 & \lesssim\eta^{2}\left(\sigma\sqrt{K\log m}\right)^{2},
\end{align*}
which relies on the observation that $|\alpha^{t}|\asymp1$ (see Corollary
\ref{corollary:alpha}). Finally, when it comes to $\beta_{3}$, we
have 
\[
\beta_{3}^{2}=\eta^{2}\lambda^{2}\big\|\widetilde{\bm{h}}^{t}\big\|_{2}^{2}+\eta^{2}\lambda^{2}\big\|\widetilde{\bm{x}}^{t}\big\|_{2}^{2}\leq8\eta^{2}\lambda^{2},
\]
using the fact that $\big\|\widetilde{\bm{x}}^{t}\big\|_{2}\asymp\big\|\widetilde{\bm{h}}^{t}\big\|_{2}\asymp1$
(see Lemma \ref{lem:consequence}). 

As a result, as long as $\eta>0$ is taken to be some constant small
enough, combining \eqref{eq:lem-dist-decom} and the above bounds
on $\beta_{1},\beta_{2}$ gives 
\begin{align*}
\mathsf{dist}\left(\boldsymbol{z}^{t+1},\boldsymbol{z}^{\star}\right) & \leq\left\Vert \widehat{\boldsymbol{z}}^{t+1/2}-\boldsymbol{z}^{\star}\right\Vert _{2}^{2}\leq\sqrt{\left(1-\eta/8\right)}\left\Vert \widetilde{\boldsymbol{z}}^{t}-\boldsymbol{z}^{\star}\right\Vert _{2}+C_{1}\eta\left(\lambda+\sigma\sqrt{K\log m}\right),
\end{align*}
which together with the elementary fact $\sqrt{1-x}\leq1-x/2$ leads
to
\begin{align*}
\mathsf{dist}\left(\boldsymbol{z}^{t+1},\boldsymbol{z}^{\star}\right) & \leq\left\Vert \widehat{\boldsymbol{z}}^{t+1/2}-\boldsymbol{z}^{\star}\right\Vert _{2}\leq\left(1-\eta/16\right)\big\|\widetilde{\boldsymbol{z}}^{t}-\boldsymbol{z}^{\star}\big\|_{2}+C_{1}\eta\left(\lambda+\sigma\sqrt{K\log m}\right)\\
 & =\left(1-\eta/16\right)\mathsf{dist}\left(\boldsymbol{z}^{t},\boldsymbol{z}^{\star}\right)+C_{1}\eta\left(\lambda+\sigma\sqrt{K\log m}\right).
\end{align*}
The advertised claim then follows, provided that $C_{1}$ is large
enough. 

\subsection{Proof of Lemma \ref{lemma:proximity}\label{subsec:Proof-of-Lemmaproximity}}

The lemma can be established in a similar manner as \citet[Lemma 17]{ma2017implicit}.
We have
\begin{align}
\mathsf{dist}\big(\boldsymbol{z}^{t+1,\left(l\right)},\widetilde{\bm{z}}^{t+1}\big) & =\mathsf{dist}\big(\boldsymbol{z}^{t+1/2,\left(l\right)},\widetilde{\bm{z}}^{t+1/2}\big)\nonumber \\
 & \leq\max\left\{ \left|\frac{\alpha^{t+1/2}}{\alpha^{t}}\right|,\left|\frac{\alpha^{t}}{\alpha^{t+1/2}}\right|\right\} \left\Vert \left[\begin{array}{c}
\frac{1}{\overline{\alpha_{\text{mutual}}^{t,\left(l\right)}}}\boldsymbol{h}^{t+1/2,\left(l\right)}-\frac{1}{\overline{\alpha^{t}}}\boldsymbol{h}^{t+1/2}\\
\alpha_{\text{mutual}}^{t,\left(l\right)}\boldsymbol{x}^{t+1,\left(l\right)}-\alpha^{t}\boldsymbol{x}^{t+1/2}
\end{array}\right]\right\Vert _{2},\label{eq:lemproximity-1}
\end{align}
where the second line comes from the same calculation as \citet[Eqn.~(212)]{ma2017implicit}.
Repeating the analysis in \citet[Appendix C.3]{ma2017implicit} and
using the gradient update rule, we obtain
\begin{align}
 & \left[\begin{array}{c}
\frac{1}{\overline{\alpha_{\text{mutual}}^{t,\left(l\right)}}}\boldsymbol{h}^{t+1/2,\left(l\right)}-\frac{1}{\overline{\alpha^{t}}}\boldsymbol{h}^{t+1/2}\\
\alpha_{\text{mutual}}^{t,\left(l\right)}\boldsymbol{x}^{t+1,\left(l\right)}-\alpha^{t}\boldsymbol{x}^{t+1/2}
\end{array}\right]\nonumber \\
 & =\underbrace{\left[\begin{array}{c}
\widehat{\bm{h}}^{t,\left(l\right)}-\frac{\eta}{\big|\alpha_{\text{mutual}}^{t,\left(l\right)}\big|^{2}}\nabla_{\boldsymbol{h}}f_{\mathsf{reg}\text{-}\mathsf{free}}\big(\widehat{\bm{z}}^{t,\left(l\right)}\big)-\left(\widetilde{\bm{h}}^{t}-\frac{\eta}{\big|\alpha_{\text{mutual}}^{t,\left(l\right)}\big|^{2}}\nabla_{\boldsymbol{h}}f_{\mathsf{reg}\text{-}\mathsf{free}}\big(\widetilde{\bm{z}}^{t}\big)\right)\\
\widehat{\bm{x}}^{t,\left(l\right)}-\eta\big|\alpha_{\text{mutual}}^{t,\left(l\right)}\big|^{2}\nabla_{\boldsymbol{x}}f_{\mathsf{reg}\text{-}\mathsf{free}}\big(\widehat{\bm{z}}^{t,\left(l\right)}\big)-\left(\widetilde{\bm{x}}^{t}-\eta\big|\alpha_{\text{mutual}}^{t,\left(l\right)}\big|^{2}\nabla_{\boldsymbol{x}}f_{\mathsf{reg}\text{-}\mathsf{free}}\big(\widetilde{\bm{z}}^{t}\big)\right)
\end{array}\right]}_{\eqqcolon\bm{\nu}_{1}}\nonumber \\
 & \quad+\eta\underbrace{\left[\begin{array}{c}
\Big(\frac{1}{\left|\alpha^{t}\right|^{2}}-\frac{1}{\big|\alpha_{\text{mutual}}^{t,\left(l\right)}\big|^{2}}\Big)\nabla_{\boldsymbol{h}}f_{\mathsf{reg}\text{-}\mathsf{free}}\big(\widehat{\bm{z}}^{t}\big)\\
\left(\left|\alpha^{t}\right|^{2}-\big|\alpha_{\text{mutual}}^{t,\left(l\right)}\big|^{2}\right)\nabla_{\boldsymbol{x}}f_{\mathsf{reg}\text{-}\mathsf{free}}\big(\widehat{\bm{z}}^{t}\big)
\end{array}\right]}_{\eqqcolon\bm{\nu}_{2}}-\eta\underbrace{\left[\begin{array}{c}
\frac{1}{\big|\alpha_{\text{mutual}}^{t,\left(l\right)}\big|^{2}}\left(\bm{b}_{l}^{\mathsf{H}}\widehat{\bm{h}}^{t,\left(l\right)}\widehat{\bm{x}}^{t,\left(l\right)\mathsf{H}}\bm{a}_{l}-y_{l}\right)\bm{b}_{l}\bm{a}_{l}^{\mathsf{H}}\widehat{\bm{x}}^{t,\left(l\right)}\\
\big|\alpha_{\text{mutual}}^{t,\left(l\right)}\big|^{2}\overline{\left(\bm{b}_{l}^{\mathsf{H}}\widehat{\bm{h}}^{t,\left(l\right)}\widehat{\bm{x}}^{t,\left(l\right)\mathsf{H}}\bm{a}_{l}-y_{l}\right)}\bm{a}_{l}\bm{b}_{l}^{\mathsf{H}}\widehat{\bm{h}}^{t,\left(l\right)}
\end{array}\right]}_{\eqqcolon\bm{\nu}_{3}}\nonumber \\
 & \quad+\eta\lambda\underbrace{\left[\begin{array}{c}
\widehat{\bm{h}}^{t,\left(l\right)}-\widetilde{\bm{h}}^{t}\\
\widehat{\bm{x}}^{t,\left(l\right)}-\widetilde{\bm{x}}^{t}
\end{array}\right]}_{\eqqcolon\bm{\nu}_{4}}.\label{eq:proximity-ncvx-decom}
\end{align}
In what follows, we shall look at $\bm{\nu}_{1}$, $\bm{\nu}_{2}$,
$\bm{\nu}_{3}$ and $\bm{\nu}_{4}$ separately. 
\begin{itemize}
\item It has been shown in \citet[Lemma 17]{ma2017implicit} that
\begin{align}
\left\Vert \boldsymbol{\nu}_{1}\right\Vert _{2} & \leq\left(1-\eta/16\right)\big\|\widehat{\boldsymbol{z}}^{t,\left(l\right)}-\widetilde{\boldsymbol{z}}^{t}\big\|_{2};\qquad\left\Vert \boldsymbol{\nu}_{2}\right\Vert _{2}\lesssim C_{1}\frac{1}{\log^{2}m}\big\|\widehat{\boldsymbol{z}}^{t,\left(l\right)}-\widetilde{\boldsymbol{z}}^{t}\big\|_{2}.\label{eq:prox-nu12}
\end{align}
\item Regarding $\boldsymbol{\nu}_{3}$, we have \begin{subequations}\label{eq:lem-loo-ncvx}
\begin{align}
\left\Vert \boldsymbol{\nu}_{3}\right\Vert _{2} & =\sqrt{\frac{1}{\big|\alpha_{\text{mutual}}^{t,\left(l\right)}\big|^{4}}\left\Vert \left(\bm{b}_{l}^{\mathsf{H}}\widehat{\bm{h}}^{t,\left(l\right)}\widehat{\bm{x}}^{t,\left(l\right)\mathsf{H}}\bm{a}_{l}-y_{l}\right)\bm{b}_{l}\bm{a}_{l}^{\mathsf{H}}\widehat{\bm{x}}^{t,\left(l\right)}\right\Vert _{2}^{2}+\big|\alpha_{\text{mutual}}^{t,\left(l\right)}\big|^{4}\left\Vert \overline{\left(\bm{b}_{l}^{\mathsf{H}}\widehat{\bm{h}}^{t,\left(l\right)}\widehat{\bm{x}}^{t,\left(l\right)\mathsf{H}}\bm{a}_{l}-y_{l}\right)}\bm{a}_{l}\bm{b}_{l}^{\mathsf{H}}\widehat{\bm{h}}^{t,\left(l\right)}\right\Vert _{2}^{2}}\nonumber \\
 & \leq\frac{1}{\big|\alpha_{\text{mutual}}^{t,\left(l\right)}\big|^{2}}\left\Vert \left(\bm{b}_{l}^{\mathsf{H}}\widehat{\bm{h}}^{t,\left(l\right)}\widehat{\bm{x}}^{t,\left(l\right)\mathsf{H}}\bm{a}_{l}-y_{l}\right)\bm{b}_{l}\bm{a}_{l}^{\mathsf{H}}\widehat{\bm{x}}^{t,\left(l\right)}\right\Vert _{2}+\big|\alpha_{\text{mutual}}^{t,\left(l\right)}\big|^{2}\left\Vert \overline{\left(\bm{b}_{l}^{\mathsf{H}}\widehat{\bm{h}}^{t,\left(l\right)}\widehat{\bm{x}}^{t,\left(l\right)\mathsf{H}}\bm{a}_{l}-y_{l}\right)}\bm{a}_{l}\bm{b}_{l}^{\mathsf{H}}\widehat{\bm{h}}^{t,\left(l\right)}\right\Vert _{2}\nonumber \\
 & \leq\frac{1}{\big|\alpha_{\text{mutual}}^{t,\left(l\right)}\big|^{2}}\underbrace{\left\Vert \boldsymbol{b}_{l}^{\mathsf{H}}\left(\widehat{\boldsymbol{h}}^{t,(l)}\widehat{\boldsymbol{x}}^{t,(l)\mathsf{H}}-\boldsymbol{h}^{\star}\boldsymbol{x}^{\star\mathsf{H}}\right)\boldsymbol{a}_{l}\bm{b}_{l}\bm{a}_{l}^{\mathsf{H}}\widehat{\bm{x}}^{t,\left(l\right)}\right\Vert _{2}}_{\eqqcolon\nu_{31}}\nonumber \\
 & \quad+\big|\alpha_{\text{mutual}}^{t,\left(l\right)}\big|^{2}\underbrace{\left\Vert \overline{\boldsymbol{b}_{l}^{\mathsf{H}}\left(\widehat{\boldsymbol{h}}^{t,(l)}\widehat{\boldsymbol{x}}^{t,(l)\mathsf{H}}-\boldsymbol{h}^{\star}\boldsymbol{x}^{\star\mathsf{H}}\right)\boldsymbol{a}_{l}}\bm{a}_{l}\bm{b}_{l}^{\mathsf{H}}\widehat{\bm{h}}^{t,\left(l\right)}\right\Vert _{2}}_{\eqqcolon\nu_{32}}\nonumber \\
 & \quad+\frac{1}{\big|\alpha_{\text{mutual}}^{t,\left(l\right)}\big|^{2}}\underbrace{\left\Vert \xi_{l}\boldsymbol{b}_{l}\boldsymbol{a}_{l}^{\mathsf{H}}\widehat{\boldsymbol{x}}^{t,(l)}\right\Vert _{2}}_{\eqqcolon\nu_{33}}+\big|\alpha_{\text{mutual}}^{t,\left(l\right)}\big|^{2}\underbrace{\left\Vert \overline{\xi_{l}}\bm{a}_{l}\bm{b}_{l}^{\mathsf{H}}\widehat{\bm{h}}^{t,\left(l\right)}\right\Vert _{2}}_{\eqqcolon\nu_{34}},\label{eq:lemma-proximity-nu3-decompose}
\end{align}
where the first inequality comes from the elementary inequality $\sqrt{a+b}\leq\sqrt{a}+\sqrt{b}$
for $a,b\geq0$, and the second inequality follows from the triangle
inequality. The bounds of $\nu_{31}$ and $\nu_{32}$ follow from
the same derivation as \citet[Equation (217)]{ma2017implicit} and
are thus omitted here for simplicity. The quantity $\nu_{31}$ can
be upper bounded by
\begin{align}
\nu_{31} & \leq\left|\boldsymbol{b}_{l}^{\mathsf{H}}\widehat{\boldsymbol{h}}^{t,(l)}\widehat{\boldsymbol{x}}^{t,(l)\mathsf{H}}\boldsymbol{a}_{l}-\boldsymbol{b}_{l}^{\mathsf{H}}\boldsymbol{h}^{\star}\boldsymbol{x}^{\star\mathsf{H}}\boldsymbol{a}_{l}\right|\left\Vert \boldsymbol{b}_{l}\right\Vert _{2}\left|\boldsymbol{a}_{l}^{\mathsf{H}}\widehat{\boldsymbol{x}}^{t,(l)}\right|\nonumber \\
 & \leq\left|\boldsymbol{b}_{l}^{\mathsf{H}}\widehat{\boldsymbol{h}}^{t,(l)}\widehat{\boldsymbol{x}}^{t,(l)\mathsf{H}}\boldsymbol{a}_{l}-\boldsymbol{b}_{l}^{\mathsf{H}}\boldsymbol{h}^{\star}\boldsymbol{x}^{\star\mathsf{H}}\boldsymbol{a}_{l}\right|\cdot\sqrt{\frac{K}{m}}\cdot20\sqrt{\log m}\cdot\big\|\widehat{\boldsymbol{x}}^{t,(l)}\big\|_{2}\nonumber \\
 & \leq40\sqrt{\frac{K\log m}{m}}\left|\boldsymbol{b}_{l}^{\mathsf{H}}\widehat{\boldsymbol{h}}^{t,(l)}\widehat{\boldsymbol{x}}^{t,(l)\mathsf{H}}\boldsymbol{a}_{l}-\boldsymbol{b}_{l}^{\mathsf{H}}\boldsymbol{h}^{\star}\boldsymbol{x}^{\star\mathsf{H}}\boldsymbol{a}_{l}\right|,\label{eq:nu31}
\end{align}
where the penultimate inequality follows from the fact that $\left\Vert \boldsymbol{b}_{l}\right\Vert _{2}=\sqrt{K/m}$
and \eqref{eq:useful1}, and the last line makes use of \eqref{eq:hat-hx-loo}.
Regarding $\nu_{32}$, one has
\begin{align}
\nu_{32} & \leq\left|\boldsymbol{b}_{l}^{\mathsf{H}}\widehat{\boldsymbol{h}}^{t,(l)}\widehat{\boldsymbol{x}}^{t,(l)\mathsf{H}}\boldsymbol{a}_{l}-\boldsymbol{b}_{l}^{\mathsf{H}}\boldsymbol{h}^{\star}\boldsymbol{x}^{\star\mathsf{H}}\boldsymbol{a}_{l}\right|\left\Vert \bm{a}_{l}\right\Vert _{2}\left|\bm{b}_{l}^{\mathsf{H}}\widehat{\bm{h}}^{t,\left(l\right)}\right|\nonumber \\
 & \leq\left|\boldsymbol{b}_{l}^{\mathsf{H}}\widehat{\boldsymbol{h}}^{t,(l)}\widehat{\boldsymbol{x}}^{t,(l)\mathsf{H}}\boldsymbol{a}_{l}-\boldsymbol{b}_{l}^{\mathsf{H}}\boldsymbol{h}^{\star}\boldsymbol{x}^{\star\mathsf{H}}\boldsymbol{a}_{l}\right|\cdot10\sqrt{K}\cdot\left(\sqrt{\frac{K}{m}}\left\Vert \widehat{\bm{h}}^{t,(l)}-\widetilde{\bm{h}}^{t}\right\Vert _{2}+\left|\bm{b}_{l}^{\mathsf{H}}\widetilde{\bm{h}}^{t}\right|\right)\nonumber \\
 & \leq\left|\boldsymbol{b}_{l}^{\mathsf{H}}\widehat{\boldsymbol{h}}^{t,(l)}\widehat{\boldsymbol{x}}^{t,(l)\mathsf{H}}\boldsymbol{a}_{l}-\boldsymbol{b}_{l}^{\mathsf{H}}\boldsymbol{h}^{\star}\boldsymbol{x}^{\star\mathsf{H}}\boldsymbol{a}_{l}\right|\cdot10\sqrt{K}\cdot\sqrt{\frac{K}{m}}C_{2}\left(\frac{\mu}{\sqrt{m}}\sqrt{\frac{\mu^{2}K\log^{9}m}{m}}+\frac{\sigma}{\log^{2}m}\right)\nonumber \\
 & \quad+\left|\boldsymbol{b}_{l}^{\mathsf{H}}\widehat{\boldsymbol{h}}^{t,(l)}\widehat{\boldsymbol{x}}^{t,(l)\mathsf{H}}\boldsymbol{a}_{l}-\boldsymbol{b}_{l}^{\mathsf{H}}\boldsymbol{h}^{\star}\boldsymbol{x}^{\star\mathsf{H}}\boldsymbol{a}_{l}\right|\cdot10\sqrt{K}\cdot C_{4}\left(\frac{\mu}{\sqrt{m}}\log^{2}m+\sigma\right)\nonumber \\
 & \leq20C_{4}\left(\frac{\mu\sqrt{K}}{\sqrt{m}}\log^{2}m+\sigma\sqrt{K}\right)\left|\boldsymbol{b}_{l}^{\mathsf{H}}\widehat{\boldsymbol{h}}^{t,(l)}\widehat{\boldsymbol{x}}^{t,(l)\mathsf{H}}\boldsymbol{a}_{l}-\boldsymbol{b}_{l}^{\mathsf{H}}\boldsymbol{h}^{\star}\boldsymbol{x}^{\star\mathsf{H}}\boldsymbol{a}_{l}\right|,\label{eq:nu32}
\end{align}
where the second line follows from \eqref{eq:useful2}, triangle inequality
and the fact that $\left\Vert \boldsymbol{b}_{l}\right\Vert _{2}=\sqrt{K/m}$;
the penultimate inequality follows from \eqref{eq:hypothesisloo}
and \eqref{eq:hypothesisincoherence2}; the last line holds as long
as $m\gg\mu^{2}K\log^{3}m$. Further we have
\begin{align}
\left|\bm{b}_{l}^{\mathsf{H}}\big(\widehat{\boldsymbol{h}}^{t,(l)}-\bm{h}^{\star}\big)\right| & \leq\left|\bm{b}_{l}^{\mathsf{H}}\left(\widehat{\bm{h}}^{t,(l)}-\widetilde{\bm{h}}^{t}\right)\right|+\left|\bm{b}_{l}^{\mathsf{H}}\widetilde{\bm{h}}^{t}\right|+\left|\bm{b}_{l}^{\mathsf{H}}\bm{h}^{\star}\right|\nonumber \\
 & \leq\sqrt{\frac{K}{m}}\left\Vert \widehat{\bm{h}}^{t,(l)}-\widetilde{\bm{h}}^{t}\right\Vert _{2}+\left|\bm{b}_{l}^{\mathsf{H}}\widetilde{\bm{h}}^{t}\right|+\left|\bm{b}_{l}^{\mathsf{H}}\bm{h}^{\star}\right|\nonumber \\
 & \leq\sqrt{\frac{K}{m}}C_{2}\left(\frac{\mu}{\sqrt{m}}\sqrt{\frac{\mu^{2}K\log^{9}m}{m}}+\frac{\sigma}{\log^{2}m}\right)+C_{4}\left(\frac{\mu}{\sqrt{m}}\log^{2}m+\sigma\right)+\frac{\mu}{\sqrt{m}}\nonumber \\
 & \leq2C_{4}\left(\frac{\mu}{\sqrt{m}}\log^{2}m+\sigma\right),\label{eq:incohb-loo}
\end{align}
where the second line follows from the fact that $\left\Vert \boldsymbol{b}_{l}\right\Vert _{2}=\sqrt{K/m}$;
the penultimate inequality follows from \eqref{eq:hypothesisloo},
\eqref{eq:hypothesisincoherence2} and \eqref{eq:incoherence-condition};
the last line holds as long as $m\gg\mu^{2}K\log^{3}m$. Therefore,
\begin{align}
 & \left|\boldsymbol{b}_{l}^{\mathsf{H}}\widehat{\boldsymbol{h}}^{t,(l)}\widehat{\boldsymbol{x}}^{t,(l)\mathsf{H}}\boldsymbol{a}_{l}-\boldsymbol{b}_{l}^{\mathsf{H}}\boldsymbol{h}^{\star}\boldsymbol{x}^{\star\mathsf{H}}\boldsymbol{a}_{l}\right|\nonumber \\
 & \leq\left|\boldsymbol{b}_{l}^{\mathsf{H}}\widehat{\boldsymbol{h}}^{t,(l)}\left(\widehat{\boldsymbol{x}}^{t,(l)}-\bm{x}^{\star}\right)^{\mathsf{H}}\boldsymbol{a}_{l}\right|+\left|\boldsymbol{b}_{l}^{\mathsf{H}}\big(\widehat{\boldsymbol{h}}^{t,(l)}-\bm{h}^{\star}\big)\bm{x}^{\star\mathsf{H}}\boldsymbol{a}_{l}\right|\nonumber \\
 & \leq\left(\left|\boldsymbol{b}_{l}^{\mathsf{H}}\big(\widehat{\boldsymbol{h}}^{t,(l)}-\bm{h}^{\star}\big)\right|+\left|\boldsymbol{b}_{l}^{\mathsf{H}}\bm{h}^{\star}\right|\right)\cdot20\sqrt{\log m}\left(\left\Vert \widehat{\boldsymbol{x}}^{t,(l)}-\widetilde{\bm{x}}^{t}\right\Vert _{2}+\left\Vert \widetilde{\bm{x}}^{t}-\bm{x}^{\star}\right\Vert _{2}\right)+\left|\boldsymbol{b}_{l}^{\mathsf{H}}\big(\widehat{\boldsymbol{h}}^{t,(l)}-\bm{h}^{\star}\big)\right|\cdot\left|\bm{x}^{\star\mathsf{H}}\boldsymbol{a}_{l}\right|\nonumber \\
 & \le2C_{4}\left(\frac{\mu}{\sqrt{m}}\log^{2}m+\sigma\right)\cdot20\sqrt{\log m}\cdot C_{2}\left(\frac{\mu}{\sqrt{m}}\sqrt{\frac{\mu^{2}K\log^{9}m}{m}}+\frac{\sigma}{\log^{2}m}\right)\nonumber \\
 & \quad+2C_{4}\left(\frac{\mu}{\sqrt{m}}\log^{2}m+\sigma\right)\cdot C_{1}\left(\sqrt{\frac{\mu^{2}K\log m}{m}}+\lambda+\sigma\sqrt{K\log m}\right)\cdot20\sqrt{\log m}\nonumber \\
 & \quad+2C_{4}\left(\frac{\mu}{\sqrt{m}}\log^{2}m+\sigma\right)\cdot20\sqrt{\log m}\nonumber \\
 & \lesssim C_{4}\left(\frac{\mu}{\sqrt{m}}\log^{2.5}m+\sigma\sqrt{\log m}\right),\label{eq:diff-loo}
\end{align}
where the second inequality follows from triangle inequality and \eqref{eq:useful1};
the penultimate inequality follows from \eqref{eq:incohb-loo}, \eqref{eq:hypothesisloo},
\eqref{eq:dist-bound} and \eqref{eq:useful1}; the last line holds
as long as $m\gg\mu^{2}K\log m$. Substituting \eqref{eq:diff-loo}
into \eqref{eq:nu31} and \eqref{eq:nu32}, we reach
\begin{align}
\nu_{31}+\nu_{32} & \lesssim\left(40\sqrt{\frac{K\log m}{m}}+20C_{4}\left(\frac{\mu\sqrt{K}}{\sqrt{m}}\log^{2}m+\sigma\sqrt{K}\right)\right)C_{4}\left(\frac{\mu}{\sqrt{m}}\log^{2.5}m+\sigma\sqrt{\log m}\right)\nonumber \\
 & \leq\left(C_{4}\right)^{2}\frac{\mu}{\sqrt{m}}\sqrt{\frac{\mu^{2}K\log^{9}m}{m}}+C_{4}\frac{\sigma}{\log^{2}m},\label{eq:lem-loo-2}
\end{align}
as long as $m\gg\mu^{2}K\log^{9}m$. Regarding $\nu_{33}$ and $\nu_{34}$,
it is seen that
\begin{align}
\big\|\xi_{l}\boldsymbol{b}_{l}\boldsymbol{a}_{l}^{\mathsf{H}}\widehat{\boldsymbol{x}}^{t,(l)}\big\|_{2} & \leq\left|\xi_{l}\right|\left\Vert \boldsymbol{b}_{l}\right\Vert _{2}\left|\boldsymbol{a}_{l}^{\mathsf{H}}\widehat{\boldsymbol{x}}^{t,(l)}\right|\overset{\text{(i)}}{\lesssim}\sigma\sqrt{\frac{K}{m}}\big\|\widehat{\boldsymbol{x}}^{t,(l)}\big\|_{2}\log m\overset{\text{(ii)}}{\leq}2\sigma\sqrt{\frac{K}{m}}\log m,\label{eq:lem-loo-3}\\
\big\|\overline{\xi_{l}}\bm{a}_{l}\bm{b}_{l}^{\mathsf{H}}\widehat{\bm{h}}^{t,\left(l\right)}\big\|_{2} & \leq\left|\xi_{l}\right|\left\Vert \bm{a}_{l}\right\Vert _{2}\left|\bm{b}_{l}^{\mathsf{H}}\widehat{\bm{h}}^{t,\left(l\right)}\right|\overset{\text{(iii)}}{\lesssim}\sigma\sqrt{K}\left(\left|\bm{b}_{l}^{\mathsf{H}}\left(\widehat{\bm{h}}^{t,(l)}-\bm{h}^{\star}\right)\right|+\left|\bm{b}_{l}^{\mathsf{H}}\bm{h}^{\star}\right|\right)\nonumber \\
 & \overset{(\text{iv})}{\lesssim}\sigma\sqrt{K}\left(2C_{4}\left(\frac{\mu}{\sqrt{m}}\log^{2}m+\sigma\right)+\frac{\mu}{\sqrt{m}}\right)\nonumber \\
 & \lesssim C_{4}\frac{\sigma}{\log^{2.5}m}+C_{4}\sigma\sqrt{\frac{\mu^{2}K\log^{4}m}{m}},\label{eq:lem-loo-4}
\end{align}
\end{subequations}where (i) holds by the property of sub-Gaussian
variables (cf.~\citet[Proposition 2.5.2]{vershynin2018high}) and
the independence between $\xi_{l},\bm{a}_{l}$ and $\widehat{\boldsymbol{x}}^{t,(l)}$,
(ii) holds by \eqref{eq:hat-hx-loo}, (iii) is due to Lemma \eqref{lemma:useful},
the triangle inequality and \eqref{eq:incoherence-condition}, and
(iv) follows from \eqref{eq:incohb-loo} and \eqref{eq:incoherence-condition}.
Consequently, by \eqref{eq:lem-loo-2}-\eqref{eq:lem-loo-4} we have
\begin{equation}
\left\Vert \boldsymbol{\nu}_{3}\right\Vert _{2}\lesssim\left(C_{4}\right)^{2}\frac{\mu}{\sqrt{m}}\sqrt{\frac{\mu^{2}K\log^{9}m}{m}}+C_{4}\frac{\sigma}{\log^{2}m}.\label{eq:prox-nu3-1-1}
\end{equation}
\item Finally, in terms of $\bm{\nu}_{4}$ one has 
\begin{align}
\left\Vert \boldsymbol{\nu}_{4}\right\Vert _{2} & =\left\Vert \left[\begin{array}{c}
\widehat{\bm{h}}^{t,\left(l\right)}-\widetilde{\bm{h}}^{t}\\
\widehat{\bm{x}}^{t,\left(l\right)}-\widetilde{\bm{x}}^{t}
\end{array}\right]\right\Vert _{2}=\left\Vert \widehat{\bm{z}}^{t,\left(l\right)}-\widetilde{\bm{z}}^{t}\right\Vert _{2}.\label{eq:prox-nu4}
\end{align}
\end{itemize}
With the above bounds in place, we can demonstrate that
\begin{align}
\mathsf{dist}\big(\boldsymbol{z}^{t+1,\left(l\right)},\widetilde{\boldsymbol{z}}^{t+1}\big) & \leq\max\left\{ \left|\frac{\alpha^{t+1/2}}{\alpha^{t}}\right|,\left|\frac{\alpha^{t}}{\alpha^{t+1/2}}\right|\right\} \left\Vert \left[\begin{array}{c}
\frac{1}{\overline{\alpha_{\text{mutual}}^{t,\left(l\right)}}}\boldsymbol{h}^{t+1/2,\left(l\right)}-\frac{1}{\overline{\alpha^{t}}}\boldsymbol{h}^{t+1/2}\\
\alpha_{\text{mutual}}^{t,\left(l\right)}\boldsymbol{x}^{t+1/2,\left(l\right)}-\alpha^{t}\boldsymbol{x}^{t+1/2}
\end{array}\right]\right\Vert _{2}\nonumber \\
 & \overset{(\text{i})}{\leq}\frac{1-\eta/32}{1-\eta/16}\left(\left\Vert \boldsymbol{\nu}_{1}\right\Vert _{2}+\left\Vert \boldsymbol{\nu}_{2}\right\Vert _{2}+\left\Vert \boldsymbol{\nu}_{3}\right\Vert _{2}+\left\Vert \boldsymbol{\nu}_{4}\right\Vert _{2}\right)\nonumber \\
 & \overset{(\text{ii})}{\leq}\left(1-\eta/32\right)\big\|\widehat{\boldsymbol{z}}^{t,\left(l\right)}-\widetilde{\boldsymbol{z}}^{t}\big\|_{2}+\frac{1-\eta/32}{1-\eta/16}C\eta\times C_{1}\frac{1}{\log^{2}m}\big\|\widehat{\boldsymbol{z}}^{t,\left(l\right)}-\widetilde{\boldsymbol{z}}^{t}\big\|_{2}\nonumber \\
 & \quad+\frac{1-\eta/32}{1-\eta/16}C\eta\left(\left(C_{4}\right)^{2}\frac{\mu}{\sqrt{m}}\sqrt{\frac{\mu^{2}K\log^{9}m}{m}}+C_{4}\frac{\sigma}{\log^{2}m}\right)+\frac{1-\eta/32}{1-\eta/16}\eta\lambda\big\|\widehat{\boldsymbol{z}}^{t,\left(l\right)}-\widetilde{\boldsymbol{z}}^{t}\big\|_{2}\nonumber \\
 & \leq\left(1-\eta/32+\frac{1-\eta/32}{1-\eta/16}\eta\lambda+\frac{1-\eta/32}{1-\eta/16}CC_{1}\frac{\eta}{\log^{2}m}\right)\big\|\widehat{\boldsymbol{z}}^{t,\left(l\right)}-\widetilde{\boldsymbol{z}}^{t}\big\|_{2}\nonumber \\
 & \quad\quad+\frac{1-\eta/32}{1-\eta/16}C\eta\left(\left(C_{4}\right)^{2}\frac{\mu}{\sqrt{m}}\sqrt{\frac{\mu^{2}K\log^{9}m}{m}}+C_{4}\frac{\sigma}{\log^{2}m}\right)\nonumber \\
 & \leq\left(1-\frac{\eta}{64}\right)\mathsf{dist}\big(\boldsymbol{z}^{t,\left(l\right)},\widetilde{\boldsymbol{z}}^{t}\big)+\eta C\left(C_{4}\right)^{2}\frac{\mu}{\sqrt{m}}\sqrt{\frac{\mu^{2}K\log^{9}m}{m}}+\eta CC_{4}\frac{\sigma}{\log^{2}m}\nonumber \\
 & \leq C_{2}\left(\frac{\mu}{\sqrt{m}}\sqrt{\frac{\mu^{2}K\log^{9}m}{m}}+\frac{\sigma}{\log^{2}m}\right),\label{eq:lemmaproximity-3}
\end{align}
provided that $\eta>0$ is some sufficiently small constant and $C_{2}\gg C_{4}^{2}$.
To see why (i) holds, we observe that
\[
\left|\left|\frac{\alpha^{t+1/2}}{\alpha^{t}}\right|-1\right|\le\left|\frac{\alpha^{t+1/2}}{\alpha^{t}}-1\right|\leq C\left(\sqrt{\frac{\mu^{2}K\log m}{m}}+\lambda+\sigma\sqrt{K\log m}\right)
\]
as shown in Corollary \ref{corollary:alpha}, which implies that
\[
\left|\frac{\alpha^{t+1/2}}{\alpha^{t}}\right|\le1+\frac{\eta/32}{1-\eta/16}=\frac{1-\eta/32}{1-\eta/16}
\]
as long as $m\gg\mu^{2}K\log m$ and $\sigma\sqrt{K\log m}\ll1$;
a similar argument also reveals that
\[
\left|\frac{\alpha^{t}}{\alpha^{t+1/2}}\right|\leq\frac{1-\eta/32}{1-\eta/16}.
\]
In addition, (ii) follows from \eqref{eq:prox-nu12}, \eqref{eq:prox-nu3-1-1}
and \eqref{eq:prox-nu4}, whereas the last inequality of \eqref{eq:lemmaproximity-3}
relies on the hypothesis \eqref{eq:hypothesisloo}. 

Next, we turn to the second inequality claimed in the lemma. In view
of \eqref{eq:dist-bound} in Lemma \ref{lem:consequence}, we have
\[
\left\Vert \widetilde{\bm{z}}^{t+1}-\bm{z}^{\star}\right\Vert _{2}\leq C_{1}\left(\sqrt{\frac{\mu^{2}K\log m}{m}}+\lambda+\sigma\sqrt{K\log m}\right),
\]
which together with the triangle inequality and \eqref{eq:lemmaproximity-3}
yields
\begin{align}
\big\|\widehat{\bm{z}}^{t+1,\left(l\right)}-\bm{z}^{\star}\big\|_{2} & \leq\big\|\widehat{\bm{z}}^{t+1,\left(l\right)}-\widetilde{\bm{z}}^{t+1}\big\|_{2}+\|\widetilde{\bm{z}}^{t+1}-\bm{z}^{\star}\|_{2}\nonumber \\
 & \leq C_{2}\left(\frac{\mu}{\sqrt{m}}\sqrt{\frac{\mu^{2}K\log^{9}m}{m}}+\frac{\sigma}{\log^{2}m}\right)+C_{1}\left(\sqrt{\frac{\mu^{2}K\log m}{m}}+\lambda+\sigma\sqrt{K\log m}\right)\nonumber \\
 & \lesssim\sqrt{\frac{\mu^{2}K\log m}{m}}+\sigma\sqrt{K\log m}+\lambda.\label{eq:lemmaproximity-4}
\end{align}
In other words, both $\widetilde{\bm{z}}^{t+1}$ and $\widehat{\bm{z}}^{t+1,(l)}$
are sufficiently close to the truth $\bm{z}^{\star}$. Consequently,
we are ready to invoke \citet[Lemma 55]{ma2017implicit}. Taking $\bm{h}_{1}=\widetilde{\bm{h}}^{t+1}$,
$\bm{x}_{1}=\widetilde{\bm{x}}^{t+1}$, $\bm{h}_{2}=\widehat{\bm{h}}^{t+1,\left(l\right)}$
and $\bm{x}_{2}=\widehat{\bm{x}}^{t+1,\left(l\right)}$ in \citet[Lemma 55]{ma2017implicit}
yields
\begin{equation}
\big\|\widetilde{\bm{z}}^{t+1,\left(l\right)}-\widetilde{\bm{z}}^{t+1}\big\|_{2}\lesssim\big\|\widehat{\bm{z}}^{t+1,\left(l\right)}-\widetilde{\bm{z}}^{t+1}\big\|_{2}\leq C_{2}\left(\frac{\mu}{\sqrt{m}}\sqrt{\frac{\mu^{2}K\log^{9}m}{m}}+\frac{\sigma}{\log^{2}m}\right),\label{eq:lemproxlast}
\end{equation}
where the last inequality follows from \eqref{eq:lemmaproximity-4}. 

\subsection{Proof of Lemma \ref{lemma:incoherenceb}\label{subsec:Proof-of-Lemmaincoherenceb}}

Recall from Corollary \ref{corollary:alpha} that there exist some
constant $C>0$ such that 
\begin{equation}
\left|\frac{\alpha^{t+1/2}}{\alpha^{t}}-1\right|\leq C\eta\left(\sqrt{\frac{\mu^{2}K\log m}{m}}+\lambda+\sigma\sqrt{K\log m}\right)\eqqcolon\delta,\label{eq:alpharatio-delta}
\end{equation}
with $\delta\ll1$, thus indicating that
\begin{align*}
\max_{1\leq l\leq m}\left|\boldsymbol{b}_{l}^{\mathsf{H}}\frac{1}{\overline{\alpha^{t+1}}}\boldsymbol{h}^{t+1}\right| & =\max_{1\leq l\leq m}\left|\boldsymbol{b}_{l}^{\mathsf{H}}\frac{1}{\overline{\alpha^{t+1/2}}}\boldsymbol{h}^{t+1/2}\right|\leq\left|\frac{\alpha^{t}}{\alpha^{t+1/2}}\right|\max_{1\leq l\leq m}\left|\boldsymbol{b}_{l}^{\mathsf{H}}\frac{1}{\overline{\alpha^{t}}}\boldsymbol{h}^{t+1/2}\right|\\
 & \leq\left(1+\delta\right)\max_{1\leq l\leq m}\left|\boldsymbol{b}_{l}^{\mathsf{H}}\frac{1}{\overline{\alpha^{t}}}\boldsymbol{h}^{t+1/2}\right|.
\end{align*}
The gradient update rule regarding $\boldsymbol{h}^{t+1}$ then leads
to
\[
\frac{1}{\overline{\alpha^{t}}}\boldsymbol{h}^{t+1/2}=\widetilde{\boldsymbol{h}}^{t}-\frac{\eta}{\left|\alpha^{t}\right|^{2}}\sum_{j=1}^{m}\left(\boldsymbol{b}_{j}^{\mathsf{H}}\widetilde{\boldsymbol{h}}^{t}\widetilde{\boldsymbol{x}}^{t\mathsf{H}}\boldsymbol{a}_{j}-y_{j}\right)\boldsymbol{b}_{j}\boldsymbol{a}_{j}^{\mathsf{H}}\widetilde{\boldsymbol{x}}^{t}-\eta\lambda\widetilde{\bm{h}}^{t},
\]
where we recall that $\widetilde{\boldsymbol{h}}^{t}=\boldsymbol{h}^{t}/\overline{\alpha^{t}}$
and $\widetilde{\boldsymbol{x}}^{t}=\alpha^{t}\boldsymbol{x}^{t}$.
Expanding terms further and using the assumption $\sum_{j=1}^{m}\boldsymbol{b}_{j}\boldsymbol{b}_{j}^{\mathsf{H}}=\bm{I}$
give
\begin{align}
\frac{1}{\overline{\alpha^{t}}}\boldsymbol{h}^{t+1/2} & =\widetilde{\boldsymbol{h}}^{t}-\frac{\eta}{\left|\alpha^{t}\right|^{2}}\sum_{j=1}^{m}\boldsymbol{b}_{j}\boldsymbol{b}_{j}^{\mathsf{H}}\left(\widetilde{\boldsymbol{h}}^{t}\widetilde{\boldsymbol{x}}^{t\mathsf{H}}-\boldsymbol{h}^{\star}\boldsymbol{x}^{\star\mathsf{H}}\right)\boldsymbol{a}_{j}\boldsymbol{a}_{j}^{\mathsf{H}}\widetilde{\boldsymbol{x}}^{t}+\frac{\eta}{\left|\alpha^{t}\right|^{2}}\sum_{j=1}^{m}\xi_{j}\boldsymbol{b}_{j}\boldsymbol{a}_{j}^{\mathsf{H}}\widetilde{\boldsymbol{x}}^{t}-\eta\lambda\widetilde{\boldsymbol{h}}^{t}\nonumber \\
 & =\underbrace{\left(1-\eta\lambda-\frac{\eta}{\left|\alpha^{t}\right|^{2}}\left\Vert \boldsymbol{x}^{\star}\right\Vert _{2}^{2}\right)\widetilde{\boldsymbol{h}}^{t}}_{\eqqcolon\bm{\nu}_{0}}-\frac{\eta}{\left|\alpha^{t}\right|^{2}}\underbrace{\sum_{j=1}^{m}\boldsymbol{b}_{j}\boldsymbol{b}_{j}^{\mathsf{H}}\widetilde{\boldsymbol{h}}^{t}\left(\left|\boldsymbol{a}_{j}^{\mathsf{H}}\widetilde{\boldsymbol{x}}^{t}\right|^{2}-\left|\boldsymbol{a}_{j}^{\mathsf{H}}\boldsymbol{x}^{\star}\right|^{2}\right)}_{\eqqcolon\bm{\nu}_{1}}\nonumber \\
 & \ -\frac{\eta}{\left|\alpha^{t}\right|^{2}}\underbrace{\sum_{j=1}^{m}\boldsymbol{b}_{j}\boldsymbol{b}_{j}^{\mathsf{H}}\widetilde{\boldsymbol{h}}^{t}\left(\left|\boldsymbol{a}_{j}^{\mathsf{H}}\boldsymbol{x}^{\star}\right|^{2}-\left\Vert \boldsymbol{x}^{\star}\right\Vert _{2}^{2}\right)}_{\eqqcolon\bm{\nu}_{2}}+\frac{\eta}{\left|\alpha^{t}\right|^{2}}\underbrace{\sum_{j=1}^{m}\boldsymbol{b}_{j}\boldsymbol{b}_{j}^{\mathsf{H}}\boldsymbol{h}^{\star}\boldsymbol{x}^{\star\mathsf{H}}\boldsymbol{a}_{j}\boldsymbol{a}_{j}^{\mathsf{H}}\widetilde{\boldsymbol{x}}^{t}}_{\eqqcolon\bm{\nu}_{3}}+\frac{\eta}{\left|\alpha^{t}\right|^{2}}\underbrace{\sum_{j=1}^{m}\xi_{j}\boldsymbol{b}_{j}\boldsymbol{a}_{j}^{\mathsf{H}}\widetilde{\boldsymbol{x}}^{t}}_{\eqqcolon\bm{\nu}_{4}}.\label{eq:lem-incohb-decomp}
\end{align}
The first three terms can be controlled via the same arguments as
\citet[Appendix C.4]{ma2017implicit}, which are built upon the induction
hypotheses \eqref{eq:hypothesisdist}-\eqref{eq:hypothesisincoherence2}
at the $t$th iteration as well as the following claim (which is the
counterpart of \citet[Claim 224]{ma2017implicit}). 

\begin{claim}\label{claim:224} Suppose that $m\gg\tau K\log^{4}m$.
For some sufficiently small constant $c>0$, it holds that 
\[
\max_{1\leq j\leq\tau}\left|\left(\boldsymbol{b}_{j}-\boldsymbol{b}_{1}\right)^{\mathsf{H}}\widetilde{\boldsymbol{h}}^{t}\right|\leq cC_{4}\left(\frac{\mu}{\sqrt{m}}\log m+\frac{\sigma}{\log m}\right).
\]
\end{claim}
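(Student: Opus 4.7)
The plan is to prove the inequality by induction on the iteration index $t$, in parallel with the induction that establishes \eqref{sec:hypotheses-ncvx} and \eqref{subeq:Additional-induction-hypotheses}. The base case $t=0$ is exactly \eqref{eq:specinit-2-3} in Lemma \ref{lemma:specinit-2}. For the inductive step, my starting point is the decomposition \eqref{eq:lem-incohb-decomp} applied to $\widetilde{\boldsymbol{h}}^{t+1}$, together with the ratio bound $\big|\alpha^{t}/\alpha^{t+1/2}\big|\leq 1+O(\eta\delta)$ coming from \eqref{eq:alpharatio-delta}; this essentially reduces the problem to bounding $(\boldsymbol{b}_{j}-\boldsymbol{b}_{1})^{\mathsf{H}}\tfrac{1}{\overline{\alpha^{t}}}\boldsymbol{h}^{t+1/2}$ term by term.

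Concretely, I would project $(\boldsymbol{b}_{j}-\boldsymbol{b}_{1})^{\mathsf{H}}$ onto each of $\bm{\nu}_{0},\ldots,\bm{\nu}_{4}$. The leading piece $\bm{\nu}_{0}=(1-\eta\lambda-\eta|\alpha^{t}|^{-2}\|\boldsymbol{x}^{\star}\|_{2}^{2})\widetilde{\boldsymbol{h}}^{t}$ contributes a strict contraction factor of the form $1-\Theta(\eta)$ (using $\|\boldsymbol{x}^{\star}\|_{2}=1$ and $|\alpha^{t}|\asymp 1$ from Corollary~\ref{corollary:alpha}), and applied to $(\boldsymbol{b}_{j}-\boldsymbol{b}_{1})^{\mathsf{H}}\widetilde{\boldsymbol{h}}^{t}$ it inherits the inductive hypothesis with a contraction. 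For $\bm{\nu}_{1}=\sum_{j}\boldsymbol{b}_{j}\boldsymbol{b}_{j}^{\mathsf{H}}\widetilde{\boldsymbol{h}}^{t}(|\boldsymbol{a}_{j}^{\mathsf{H}}\widetilde{\boldsymbol{x}}^{t}|^{2}-|\boldsymbol{a}_{j}^{\mathsf{H}}\boldsymbol{x}^{\star}|^{2})$, I would factor the scalar factor as $\overline{\boldsymbol{a}_{j}^{\mathsf{H}}(\widetilde{\boldsymbol{x}}^{t}-\boldsymbol{x}^{\star})}\cdot\boldsymbol{a}_{j}^{\mathsf{H}}(\widetilde{\boldsymbol{x}}^{t}+\boldsymbol{x}^{\star})$, control $|\boldsymbol{a}_{j}^{\mathsf{H}}(\widetilde{\boldsymbol{x}}^{t}-\boldsymbol{x}^{\star})|$ by \eqref{eq:hypothesisincoherence1}, control $|\boldsymbol{a}_{j}^{\mathsf{H}}(\widetilde{\boldsymbol{x}}^{t}+\boldsymbol{x}^{\star})|\lesssim\sqrt{\log m}$ by Gaussian concentration, invoke the inductive hypothesis for $|\boldsymbol{b}_{j}^{\mathsf{H}}\widetilde{\boldsymbol{h}}^{t}|$ from \eqref{eq:hypothesisincoherence2}, and finally use Lemma~\ref{lemma:dftbound} to bound $\sum_{j=1}^{m}|(\boldsymbol{b}_{j}-\boldsymbol{b}_{1})^{\mathsf{H}}\boldsymbol{b}_{j}|\lesssim\log m$. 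The pieces $\bm{\nu}_{2}$ and $\bm{\nu}_{3}$ are standard $\mathcal{T}^{\mathsf{debias}}$-type terms: I would rewrite them as $(\mathcal{A}^{*}\mathcal{A}-\mathcal{I})$ applied to $\widetilde{\boldsymbol{h}}^{t}\boldsymbol{x}^{\star\mathsf{H}}$ and to $\boldsymbol{h}^{\star}(\widetilde{\boldsymbol{x}}^{t}-\boldsymbol{x}^{\star})^{\mathsf{H}}$ respectively, tested against $(\boldsymbol{b}_{j}-\boldsymbol{b}_{1})$ and $\boldsymbol{x}^{\star}$/$\widetilde{\boldsymbol{x}}^{t}$; applying a concentration bound of the same flavor as \cite[Lemma 47]{ma2017implicit} and using $\|\boldsymbol{b}_{j}-\boldsymbol{b}_{1}\|_{2}\leq 2\sqrt{K/m}$ yields the target scale $\mu/(\sqrt{m}\log m)+\sigma/\log m$. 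The noise term $\bm{\nu}_{4}=\sum_{j}\xi_{j}\boldsymbol{b}_{j}\boldsymbol{a}_{j}^{\mathsf{H}}\widetilde{\boldsymbol{x}}^{t}$ can be treated by the same leave-one-out style trick used elsewhere, replacing $\widetilde{\boldsymbol{x}}^{t}$ by $\boldsymbol{x}^{\star}$ plus an error controlled by \eqref{eq:hypothesisincoherence1}, and then applying a sub-Gaussian tail bound to the scalar $(\boldsymbol{b}_{j}-\boldsymbol{b}_{1})^{\mathsf{H}}\sum_{k}\xi_{k}\boldsymbol{b}_{k}\boldsymbol{a}_{k}^{\mathsf{H}}\boldsymbol{x}^{\star}$, whose variance is small because $\|\boldsymbol{b}_{j}-\boldsymbol{b}_{1}\|_{2}$ is small.

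Collecting these contributions, I expect the induction step to take the form
\[
\max_{1\leq j\leq\tau}\big|(\boldsymbol{b}_{j}-\boldsymbol{b}_{1})^{\mathsf{H}}\widetilde{\boldsymbol{h}}^{t+1}\big|\leq(1-c_{1}\eta)\max_{1\leq j\leq\tau}\big|(\boldsymbol{b}_{j}-\boldsymbol{b}_{1})^{\mathsf{H}}\widetilde{\boldsymbol{h}}^{t}\big|+C'\eta\Big(\tfrac{\mu}{\sqrt{m}\log m}+\tfrac{\sigma}{\log m}\Big),
\]
which on recursion closes back into the target with a sufficiently small $c$, provided $m\gg\tau K\log^{4}m=C_{\tau}K\log^{8}m$. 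The main obstacle is that the desired bound is tighter by a factor $\log^{2}m$ than the coarse incoherence hypothesis \eqref{eq:hypothesisincoherence2}; in particular, the naive bound $|(\boldsymbol{b}_{j}-\boldsymbol{b}_{1})^{\mathsf{H}}\widetilde{\boldsymbol{h}}^{t}|\leq|\boldsymbol{b}_{j}^{\mathsf{H}}\widetilde{\boldsymbol{h}}^{t}|+|\boldsymbol{b}_{1}^{\mathsf{H}}\widetilde{\boldsymbol{h}}^{t}|$ loses the whole gain. Therefore one must exploit the smallness of $\|\boldsymbol{b}_{j}-\boldsymbol{b}_{1}\|_{2}^{2}\lesssim (j/m)^{2}K$ for $j\leq\tau$ (a DFT geometry computation) directly in each variance/covering bound, and keep the absolute constants small enough that the residual $C'$ above can be absorbed into $cC_{4}$. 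Every other step is routine once this tighter covariance control is installed.
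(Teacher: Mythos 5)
Your global plan matches the paper's in spirit: an induction in $t$ with base case supplied by \eqref{eq:specinit-2-3}, unfolding one gradient step via the balancing ratio and bounding the resulting pieces against the inductive hypothesis. The paper's decomposition \eqref{eq:proof-claim-decomp} is slightly different from the \eqref{eq:lem-incohb-decomp} split you reuse (it lumps your $\bm{\nu}_1$ and $\bm{\nu}_2$ into a single $\mathcal{T}^{\mathsf{debias}}$-type term $\beta_2$), but that grouping is cosmetic.

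The genuine gap is in the key sum bound. For the $\bm{\nu}_1$ and $\bm{\nu}_2$ terms you write
\[
\big|(\boldsymbol{b}_j-\boldsymbol{b}_1)^{\mathsf{H}}\bm{\nu}_i\big| \leq \Big(\textstyle\sum_{l}|(\boldsymbol{b}_j-\boldsymbol{b}_1)^{\mathsf{H}}\boldsymbol{b}_l|\Big)\cdot\max_{l}|\boldsymbol{b}_l^{\mathsf{H}}\widetilde{\boldsymbol{h}}^{t}|\cdot\max_{l}|\text{scalar}_l|,
\]
and then invoke Lemma~\ref{lemma:dftbound} to control the first factor by $O(\log m)$. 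That bound is too weak by a factor of $\log^3 m$: Lemma~\ref{lemma:dftbound} controls $\sum_l|\boldsymbol{b}_j^{\mathsf{H}}\boldsymbol{b}_l|$, and the triangle inequality applied to $\boldsymbol{b}_j-\boldsymbol{b}_1$ loses exactly the structure that makes the claim true. What the argument actually needs is $\sum_{l=1}^m|(\boldsymbol{b}_j-\boldsymbol{b}_1)^{\mathsf{H}}\boldsymbol{b}_l| \leq c/\log^2 m$ for all $j\leq\tau$, which is \cite[Lemma 50]{ma2017implicit} and is the only place the hypotheses $j\leq\tau$ and $m\gg\tau K\log^4 m$ are really used. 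Feeding the coarse $\log m$ bound into the recursion produces per-step error on the order of $C_4(\mu\log^2 m/\sqrt m + \sigma)$, which is $\Theta(\log m)$ larger than the equilibrium target $cC_4(\mu\log m/\sqrt m + \sigma/\log m)$, so the induction does not close.

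Your hedge at the end — ``one must exploit the smallness of $\|\boldsymbol{b}_j-\boldsymbol{b}_1\|_2$ for $j\leq\tau$'' — identifies the right phenomenon but not quite the right quantity. Bounding $(\boldsymbol{b}_j-\boldsymbol{b}_1)^{\mathsf{H}}\bm{\nu}_i$ by $\|\boldsymbol{b}_j-\boldsymbol{b}_1\|_2\|\bm{\nu}_i\|_2$ via Cauchy--Schwarz does not obviously deliver the target rate either, because $\|\bm{\nu}_i\|_2$ is only controlled at the coarse $1/\mathrm{poly}\log m$ scale. The specific form of DFT geometry that the paper uses is the $\ell_1$ inner-product bound $\sum_l|(\boldsymbol{b}_j-\boldsymbol{b}_1)^{\mathsf{H}}\boldsymbol{b}_l|\lesssim 1/\log^2 m$ (coming from \cite[Lemmas 48--50]{ma2017implicit}), not the $\ell_2$ norm $\|\boldsymbol{b}_j-\boldsymbol{b}_1\|_2$. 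Once you replace Lemma~\ref{lemma:dftbound} with that, the noise term $\bm{\nu}_4$ also needs the same $\ell_1$ bound to drive the variance down to $\sigma^2/\log^4 m$ scale; otherwise your sketch is sound.
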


\noindent The corresponding bounds obtained from \citet[Appendix C.4]{ma2017implicit}
are listed below:\begin{subequations}\label{eq:lem-incohb-nu}
\begin{align}
\left|\boldsymbol{b}_{l}^{\mathsf{H}}\bm{\nu}_{1}\right| & \leq0.1\max_{1\leq j\leq m}\big|\boldsymbol{b}_{j}^{\mathsf{H}}\widetilde{\bm{h}}^{t}\big|,\label{eq:lem-incohb-nu1}\\
\left|\boldsymbol{b}_{l}^{\mathsf{H}}\bm{\nu}_{2}\right| & \leq0.2\max_{1\leq j\leq m}\big|\boldsymbol{b}_{j}^{\mathsf{H}}\widetilde{\bm{h}}^{t}\big|+\max_{1\leq j\leq\tau}\left|\left(\boldsymbol{b}_{j}-\boldsymbol{b}_{1}\right)^{\mathsf{H}}\widetilde{\boldsymbol{h}}^{t}\right|\log m,\label{eq:lem-incohb-nu2}\\
\left|\boldsymbol{b}_{l}^{\mathsf{H}}\bm{\nu}_{3}\right| & \lesssim\frac{\mu}{\sqrt{m}}+\frac{\mu}{\sqrt{m}}\log^{3/2}m\max_{1\leq j\leq m}\left|\bm{a}_{j}^{\mathsf{H}}(\widetilde{\bm{x}}^{t}-\bm{x}^{\star})\right|.\label{eq:lem-incohb-nu3}
\end{align}
\end{subequations}

When it comes to the last term of \eqref{eq:lem-incohb-decomp} concerning
$\bm{\nu}_{4}$, it is seen that
\begin{align*}
\left|\bm{b}_{l}^{\mathsf{H}}\bm{\nu}_{4}\right| & \leq\underbrace{\Big|\sum_{j=1}^{m}\xi_{j}\bm{b}_{l}^{\mathsf{H}}\bm{b}_{j}\boldsymbol{a}_{j}^{\mathsf{H}}(\widetilde{\bm{x}}^{t}-\bm{x}^{\star})\Big|}_{\eqqcolon\varsigma_{1}}+\underbrace{\Big|\sum_{j=1}^{m}\xi_{j}\bm{b}_{l}^{\mathsf{H}}\boldsymbol{b}_{j}\boldsymbol{a}_{j}^{\mathsf{H}}\bm{x}^{\star}\Big|}_{\eqqcolon\varsigma_{2}},
\end{align*}
leaving us with two terms to control. 
\begin{itemize}
\item With regards to $\varsigma_{1}$, we have
\begin{align*}
\ensuremath{\varsigma_{1}} & \leq\sum_{j=1}^{m}\left|\bm{b}_{l}^{\mathsf{H}}\bm{b}_{j}\right|\cdot\max_{1\leq j\leq m}\left|\xi_{j}\right|\cdot\max_{1\leq j\leq m}\left|\boldsymbol{a}_{j}^{\mathsf{H}}(\widetilde{\bm{x}}^{t}-\bm{x}^{\star})\right|\\
 & \lesssim(4\log m)\cdot\sigma\sqrt{\log m}\cdot\max_{1\leq j\leq m}\left|\boldsymbol{a}_{j}^{\mathsf{H}}(\widetilde{\bm{x}}^{t}-\bm{x}^{\star})\right|\\
 & \asymp\sigma\log^{1.5}m\max_{1\leq j\leq m}\left|\boldsymbol{a}_{j}^{\mathsf{H}}(\widetilde{\bm{x}}^{t}-\bm{x}^{\star})\right|,
\end{align*}
where the second inequality follows from \citet[Lemma 48]{ma2017implicit}
and standard sub-Gaussian concentration inequalities.
\item Regarding $\varsigma_{2}$, since $\left\{ \boldsymbol{a}_{j}^{\mathsf{H}}\bm{x}^{\star}\right\} _{j=1}^{m}$
are i.i.d.~Gaussian variables with variance $\left\Vert \bm{x}^{\star}\right\Vert _{2}=1$,
we see that 
\[
\Big\|\xi_{j}\boldsymbol{a}_{j}^{\mathsf{H}}\bm{x}^{\star}\Big\|_{\psi_{1}}\leq\left\Vert \xi_{j}\right\Vert _{\psi_{2}}\Big\|\boldsymbol{a}_{j}^{\mathsf{H}}\bm{x}^{\star}\Big\|_{\psi_{2}}\leq\sigma,
\]
where $\|\cdot\|_{\psi_{1}}$ and $\|\cdot\|_{\psi_{2}}$ denote the
sub-exponential norm and the sub-Gaussian norm, respectively. In view
of the Bernstein inequality \citet[Theorem 2.8.2]{vershynin2018high},
we have
\begin{align}
\mathbb{P}\left\{ \left|\sum_{j=1}^{m}\xi_{j}\bm{b}_{l}^{\mathsf{H}}\boldsymbol{b}_{j}\boldsymbol{a}_{j}^{\mathsf{H}}\bm{x}^{\star}\right|\geq t\right\}  & \leq2\exp\left(-c\min\left(\frac{\tau^{2}}{\sigma^{2}\sum_{j=1}^{m}\left|\bm{b}_{l}^{\mathsf{H}}\boldsymbol{b}_{j}\right|^{2}},\frac{\tau}{\sigma\max_{1\leq j\leq m}\left|\bm{b}_{l}^{\mathsf{H}}\boldsymbol{b}_{j}\right|}\right)\right)\label{eq:lem-incohb-sigma2}
\end{align}
for any $\tau>0$. Recognizing that 
\[
\sum_{j=1}^{m}\left|\bm{b}_{l}^{\mathsf{H}}\boldsymbol{b}_{j}\right|^{2}=\bm{b}_{l}^{\mathsf{H}}\Big(\sum_{j=1}^{m}\boldsymbol{b}_{j}\bm{b}_{j}^{\mathsf{H}}\Big)\bm{b}_{l}=\frac{K}{m}\qquad\text{and}\qquad\max_{1\leq j\leq m}\left|\bm{b}_{l}^{\mathsf{H}}\boldsymbol{b}_{j}\right|\leq\max_{1\leq j\leq m}\left\Vert \bm{b}_{l}\right\Vert _{2}\left\Vert \bm{b}_{j}\right\Vert _{2}=\frac{K}{m}
\]
and setting $\tau=C\sigma\sqrt{\frac{K}{m}\log m}$ for some large
enough constant $C>0$, one obtains
\[
\mathbb{P}\left\{ \varsigma_{2}\geq C\sigma\sqrt{\frac{K}{m}\log m}\right\} \leq2\exp\left(-c\min\left(C^{2}\log m,C\sqrt{\frac{m\log m}{K}}\right)\right)\lesssim m^{-100},
\]
provided that $m\gg K\log m$. 
\item Combining the above two pieces implies that, with probability exceeding
$1-O\left(m^{-100}\right)$, 
\begin{align}
\left|\bm{b}_{l}^{\mathsf{H}}\bm{\nu}_{4}\right| & \lesssim\sigma\log^{1.5}m\max_{1\leq j\leq m}\left|\boldsymbol{a}_{j}^{\mathsf{H}}(\widetilde{\bm{x}}^{t}-\bm{x}^{\star})\right|+\sigma\sqrt{\frac{K}{m}\log m},\label{eq:lem-incohb-nu4}\\
 & \leq\sigma\log^{1.5}m\cdot C_{3}\left(\sqrt{\frac{\mu^{2}K\log^{2}m}{m}}+\sqrt{\log m}\left(\lambda+\sigma\sqrt{K\log m}\right)\right)+\sigma\sqrt{\frac{K}{m}\log m}\nonumber \\
 & \lesssim C_{3}\sigma.
\end{align}
where the penultimate inequality follows from the hypothesis \eqref{eq:hypothesisincoherence1},
and the last line holds as long as $m\gg\mu^{2}K\log^{5}m$, $\sigma\sqrt{K\log^{5}m}\ll1$.
\end{itemize}
Combining the bounds \eqref{eq:lem-incohb-nu} with \eqref{eq:lem-incohb-decomp}
and \eqref{eq:lem-incohb-nu4}, we arrive at
\begin{align*}
\left|\boldsymbol{b}_{l}^{\mathsf{H}}\widetilde{\boldsymbol{h}}^{t+1}\right| & \leq\left(1+\delta\right)\left(1-\eta\lambda-\frac{\eta}{\left|\alpha^{t}\right|^{2}}\left\Vert \boldsymbol{x}^{\star}\right\Vert _{2}^{2}\right)\left|\boldsymbol{b}_{l}^{\mathsf{H}}\widetilde{\boldsymbol{h}}^{t}\right|+\left(1+\delta\right)0.3\frac{\eta}{\left|\alpha^{t}\right|^{2}}\max_{1\leq j\leq m}\left|\bm{b}_{j}^{\mathsf{H}}\widetilde{\bm{h}}^{t}\right|\\
 & \quad+\left(1+\delta\right)\frac{\eta}{\left|\alpha^{t}\right|^{2}}\times C\left(\frac{\mu}{\sqrt{m}}+\frac{\mu}{\sqrt{m}}\log^{3/2}m\max_{1\leq j\leq m}\left|\bm{a}_{j}^{\mathsf{H}}\left(\widetilde{\bm{x}}^{t}-\bm{x}^{\star}\right)\right|\right)\\
 & \quad+\left(1+\delta\right)\frac{\eta}{\left|\alpha^{t}\right|^{2}}\max_{1\leq j\leq\tau}\left|\left(\boldsymbol{b}_{j}-\boldsymbol{b}_{1}\right)^{\mathsf{H}}\widetilde{\boldsymbol{h}}^{t}\right|\log m+\left(1+\delta\right)\frac{\eta}{\left|\alpha^{t}\right|^{2}}\left|\bm{b}_{l}^{\mathsf{H}}\bm{\nu}_{4}\right|\\
 & \leq C_{4}\left(\frac{\mu}{\sqrt{m}}\log^{2}m+\sigma\right),
\end{align*}
as long as $m\gg\mu^{2}K\log^{9}m$ for some large enough constant
$C_{4}\gg C_{3}$. Here, the last inequality invokes the induction
hypotheses \eqref{sec:hypotheses-ncvx} at the $t$th iteration, Claim
\ref{claim:224}, as well as the fact $\left|\alpha^{t}\right|\asymp1$
(cf.~Corollary \ref{corollary:alpha}).

\subsubsection{Proof of Claim \ref{claim:224}\label{subsec:Proof-of-Claim224}}

To begin with, we make the observation that
\begin{align*}
\left|\left(\boldsymbol{b}_{j}-\boldsymbol{b}_{1}\right)^{\mathsf{H}}\widetilde{\boldsymbol{h}}^{t}\right| & =\left|\left(\boldsymbol{b}_{j}-\boldsymbol{b}_{1}\right)^{\mathsf{H}}\widetilde{\boldsymbol{h}}^{t-1/2}\right|=\left|\frac{\alpha^{t-1}}{\alpha^{t-1/2}}\right|\left|\left(\boldsymbol{b}_{j}-\boldsymbol{b}_{1}\right)^{\mathsf{H}}\frac{\boldsymbol{h}^{t-1/2}}{\overline{\alpha^{t-1}}}\right|\\
 & \leq\left(1+\delta\right)\left|\left(\boldsymbol{b}_{j}-\boldsymbol{b}_{1}\right)^{\mathsf{H}}\frac{\boldsymbol{h}^{t-1/2}}{\overline{\alpha^{t-1}}}\right|,
\end{align*}
with $\delta\ll1$ defined in \eqref{eq:alpharatio-delta}. This inequality
allows us to turn attention to $\frac{1}{\overline{\alpha^{t-1}}}\left(\boldsymbol{b}_{j}-\boldsymbol{b}_{1}\right)^{\mathsf{H}}\boldsymbol{h}^{t-1/2}$
instead.

Use the gradient update rule with respect to $\boldsymbol{h}^{t}$,
we obtain 
\[
\frac{1}{\overline{\alpha^{t-1}}}\boldsymbol{h}^{t-1/2}=\frac{1}{\overline{\alpha^{t-1}}}\left(\boldsymbol{h}^{t-1}-\eta\left(\sum_{l=1}^{m}\boldsymbol{b}_{l}\boldsymbol{b}_{l}^{\mathsf{H}}\left(\boldsymbol{h}^{t-1}\boldsymbol{x}^{t-1\mathsf{H}}-\boldsymbol{h}^{\star}\boldsymbol{x}^{\star\mathsf{H}}\right)\boldsymbol{a}_{l}\boldsymbol{a}_{l}^{\mathsf{H}}\boldsymbol{x}^{t-1}-\sum_{l=1}^{m}\xi_{l}\boldsymbol{b}_{l}\boldsymbol{a}_{l}^{\mathsf{H}}\boldsymbol{x}^{t-1}+\lambda\bm{h}^{t-1}\right)\right).
\]
Therefore, one can decompose
\begin{align}
\left(\boldsymbol{b}_{j}-\boldsymbol{b}_{1}\right)^{\mathsf{H}}\frac{1}{\overline{\alpha^{t-1}}}\boldsymbol{h}^{t} & =\left(1-\eta\lambda-\frac{\eta}{\left|\alpha^{t}\right|^{2}}\left\Vert \widetilde{\boldsymbol{x}}^{t-1}\right\Vert _{2}^{2}\right)\left(\boldsymbol{b}_{j}-\boldsymbol{b}_{1}\right)^{\mathsf{H}}\widetilde{\boldsymbol{h}}^{t-1}+\frac{\eta}{\left|\alpha^{t}\right|^{2}}\underbrace{\left(\boldsymbol{b}_{j}-\boldsymbol{b}_{1}\right)\boldsymbol{h}^{\star}\boldsymbol{x}^{\star\mathsf{H}}\widetilde{\boldsymbol{x}}^{t-1}}_{\eqqcolon\beta_{1}}\nonumber \\
 & \quad-\frac{\eta}{\left|\alpha^{t}\right|^{2}}\underbrace{\left(\boldsymbol{b}_{j}-\boldsymbol{b}_{1}\right)^{\mathsf{H}}\sum_{l=1}^{m}\boldsymbol{b}_{l}\boldsymbol{b}_{l}^{\mathsf{H}}\left(\widetilde{\boldsymbol{h}}^{t-1}\widetilde{\boldsymbol{x}}^{t-1\mathsf{H}}-\boldsymbol{h}^{\star}\boldsymbol{x}^{\star\mathsf{H}}\right)\left(\boldsymbol{a}_{l}\boldsymbol{a}_{l}^{\mathsf{H}}-\bm{I}_{k}\right)\widetilde{\boldsymbol{x}}^{t-1}}_{\eqqcolon\beta_{2}}\nonumber \\
 & \quad+\frac{\eta}{\left|\alpha^{t}\right|^{2}}\underbrace{\left(\boldsymbol{b}_{j}-\boldsymbol{b}_{1}\right)^{\mathsf{H}}\sum_{l=1}^{m}\xi_{l}\boldsymbol{b}_{l}\boldsymbol{a}_{l}^{\mathsf{H}}\widetilde{\boldsymbol{x}}^{t-1}}_{\eqqcolon\beta_{3}}.\label{eq:proof-claim-decomp}
\end{align}
Except $\beta_{3}$, the bounds of the other terms can be obtained
by the same arguments as in \citet[Appendix C.4.3]{ma2017implicit};
we thus omit the detailed proof but only list the results below:
\begin{align*}
\left|\beta_{1}\right| & \leq4\frac{\mu}{\sqrt{m}}\\
\left|\beta_{2}\right| & \leq\frac{c}{\log m}\left(\max_{1\leq l\leq m}\left|\bm{b}_{l}^{\mathsf{H}}\widetilde{\bm{h}}^{t-1}\right|+\frac{\mu}{\sqrt{m}}\right)
\end{align*}
with $c$ some small constant $c>0$, as long as $m\gg K\log^{8}m$.
When it comes to the remaining term $\beta_{3}$, the triangle inequality
yields
\begin{align*}
\left|\beta_{3}\right| & \leq\underbrace{\left|\sum_{l=1}^{m}\xi_{l}\left(\boldsymbol{b}_{j}-\boldsymbol{b}_{1}\right)^{\mathsf{H}}\boldsymbol{b}_{l}\boldsymbol{a}_{l}^{\mathsf{H}}\left(\widetilde{\boldsymbol{x}}^{t-1}-\bm{x}^{\star}\right)\right|}_{\eqqcolon\omega_{1}}+\underbrace{\left|\sum_{l=1}^{m}\xi_{l}\left(\boldsymbol{b}_{j}-\boldsymbol{b}_{1}\right)^{\mathsf{H}}\boldsymbol{b}_{l}\boldsymbol{a}_{l}^{\mathsf{H}}\bm{x}^{\star}\right|}_{\eqqcolon\omega_{2}}.
\end{align*}

\begin{itemize}
\item Regarding $\omega_{1}$, we have
\begin{align*}
\ensuremath{\omega_{1}} & \leq\sum_{j=1}^{m}\left|(\boldsymbol{b}_{j}-\boldsymbol{b}_{1})^{\mathsf{H}}\boldsymbol{b}_{l}\right|\cdot\max_{1\leq j\leq m}\left|\xi_{j}\right|\cdot\max_{1\leq j\leq m}\left|\boldsymbol{a}_{j}^{\mathsf{H}}(\widetilde{\bm{x}}^{t}-\bm{x}^{\star})\right|\\
 & \lesssim\frac{1}{\log^{2}m}\cdot\sigma\sqrt{\log m}\cdot\max_{1\leq j\leq m}\left|\boldsymbol{a}_{j}^{\mathsf{H}}(\widetilde{\bm{x}}^{t}-\bm{x}^{\star})\right|\\
 & \lesssim\frac{\sigma}{\log^{1.5}m}\max_{1\leq j\leq m}\left|\boldsymbol{a}_{j}^{\mathsf{H}}(\widetilde{\bm{x}}^{t}-\bm{x}^{\star})\right|,
\end{align*}
where the second inequality follows from \citet[Lemma 50]{ma2017implicit}
and standard sub-Gaussian concentration inequalities.
\item For $\omega_{2}$, similar to \eqref{eq:lem-incohb-sigma2}, we can
invoke the Bernstein inequality \citet[Theorem 2.8.2]{vershynin2018high}
to reach
\begin{align}
 & \mathbb{P}\left\{ \left|\sum_{l=1}^{m}\xi_{l}\left(\boldsymbol{b}_{j}-\boldsymbol{b}_{1}\right)^{\mathsf{H}}\boldsymbol{b}_{l}\boldsymbol{a}_{l}^{\mathsf{H}}\bm{x}^{\star}\right|\geq\tau\right\} \nonumber \\
 & \qquad\leq2\exp\left(-c\min\left(\frac{\tau^{2}}{\sigma^{2}\sum_{l=1}^{m}\left|\left(\boldsymbol{b}_{j}-\boldsymbol{b}_{1}\right)^{\mathsf{H}}\boldsymbol{b}_{l}\right|^{2}},\frac{\tau}{\sigma\max_{1\leq j\leq m}\left|\left(\boldsymbol{b}_{j}-\boldsymbol{b}_{1}\right)^{\mathsf{H}}\boldsymbol{b}_{l}\right|}\right)\right)\label{eq:Bernstein-1234}
\end{align}
for any $\tau\geq0$. In addition, observe that
\begin{align*}
\sum_{j=1}^{m}\left|\left(\boldsymbol{b}_{j}-\boldsymbol{b}_{1}\right)^{\mathsf{H}}\boldsymbol{b}_{l}\right|^{2} & \leq\left\{ \max_{1\leq j\leq m}\left|\left(\boldsymbol{b}_{j}-\boldsymbol{b}_{1}\right)^{\mathsf{H}}\boldsymbol{b}_{l}\right|\right\} \cdot\sum_{j=1}^{m}\left|\left(\boldsymbol{b}_{j}-\boldsymbol{b}_{1}\right)^{\mathsf{H}}\boldsymbol{b}_{l}\right|\\
 & \leq2\frac{K}{m}\cdot\frac{c}{\log^{2}m},
\end{align*}
 where the last inequality follows from \citet[Lemma 48, 49]{ma2017implicit}.
Taking $\tau=C\sigma\sqrt{K\log^{2}m/m}$ in \eqref{eq:Bernstein-1234}
for some large enough constant $C>0$, one arrives at
\[
\mathbb{P}\left\{ \omega_{2}\geq C\sigma\sqrt{\frac{K\log m}{m}}\right\} \leq2\exp\left(-c\min\left(C^{2}\log^{3}m,C\frac{m}{K}\sqrt{\log m}\right)\right)\lesssim m^{-100}.
\]
\item The above bounds taken collectively imply that: with probability exceeding
$1-O\left(m^{-100}\right)$,
\begin{align}
\left|\beta_{3}\right| & \lesssim\frac{\sigma}{\log^{1.5}m}\max_{1\leq j\leq m}\left|\boldsymbol{a}_{j}^{\mathsf{H}}(\widetilde{\bm{x}}^{t}-\bm{x}^{\star})\right|+\sigma\sqrt{\frac{K\log m}{m}}\nonumber \\
 & \lesssim C_{3}\frac{\sigma}{\log^{1.5}m}\left(\sqrt{\frac{\mu^{2}K\log^{2}m}{m}}+\sqrt{\log m}\left(\lambda+\sigma\sqrt{K\log m}\right)\right)+\sigma\sqrt{\frac{K\log m}{m}}\nonumber \\
 & \lesssim\frac{\sigma}{\log^{3}m}.\label{eq:proof-claim-beta3}
\end{align}
\end{itemize}
Putting together the above results, we demonstrate that
\begin{align*}
\left|\left(\boldsymbol{b}_{j}-\boldsymbol{b}_{1}\right)^{\mathsf{H}}\widetilde{\boldsymbol{h}}^{t}\right| & \leq\left(1+\delta\right)\left(1-\eta\lambda-\frac{\eta}{\left|\alpha^{t}\right|^{2}}\left\Vert \widetilde{\boldsymbol{x}}^{t-1}\right\Vert _{2}^{2}\right)\left|\left(\boldsymbol{b}_{j}-\boldsymbol{b}_{1}\right)^{\mathsf{H}}\widetilde{\boldsymbol{h}}^{t-1}\right|+4\left(1+\delta\right)\frac{\eta}{\left|\alpha^{t}\right|^{2}}\frac{\mu}{\sqrt{m}}\\
 & \quad+c\left(1+\delta\right)\frac{\eta}{\left|\alpha^{t}\right|^{2}}\frac{1}{\log m}\left[\max_{1\leq l\leq m}\left|\bm{b}_{l}^{\mathsf{H}}\widetilde{\bm{h}}^{t-1}\right|+\frac{\mu}{\sqrt{m}}\right]+\left(1+\delta\right)\frac{\eta}{\left|\alpha^{t}\right|^{2}}\frac{\sigma}{\log^{3}m}\\
 & \leq cC_{4}\left(\frac{\mu}{\sqrt{m}}\log m+\frac{\sigma}{\log m}\right)
\end{align*}
if $\eta>0$ is sufficiently small, where the last inequality utilizes
$\|\widetilde{\bm{x}}^{t-1}\|_{2}\asymp1$ and $|\alpha^{t}|\asymp1$
in Lemma \ref{lem:consequence}.

\section{Analysis under Fourier design: connections between convex and nonconvex
solutions\label{appendix:cvx}}

\subsection{Proof outline for Theorem \ref{theorem:cvx}\label{sec:Proof-outline-for-theorem-cvx}}

As the empirical evidence (cf.~Figure \ref{fig:dist_cvx_ncvx})
suggests, an approximate nonconvex optimizer produced by a simple
gradient-type algorithm is exceedingly close to the convex minimizer
of (\ref{eq:objcvx}). In what follows, we shall start by introducing
an auxiliary nonconvex gradient method, and formalize its connection
to the convex program. Without loss of generality, we assume that $\Vert\bm{h}^{\star}\Vert_{2}=\Vert\bm{x}^{\star}\Vert_{2}=1$ throughout the proof. 

\paragraph{An auxiliary nonconvex algorithm.} Let us consider the
iterates obtained by running a variant of (Wirtinger) gradient descent,
as summarized in Algorithm \ref{alg:gd-bd-ncvx-2}. A crucial difference
from Algorithm \ref{alg:gd-BD-ncvx} lies in the initialization stage
--- namely, Algorithm \ref{alg:gd-bd-ncvx-2} initializes the algorithm
from the ground truth $(\bm{h}^{\star},\bm{x}^{\star})$ rather than
a spectral estimate as adopted in Algorithm \ref{alg:gd-BD-ncvx}.
While initialization at the truth is not practically implementable,
it is introduced here solely for analytical purpose, namely, it creates a sequence of ancillary random variables that approximate our estimators and are close to the ground truth. This is how we establish the convergence rate of our estimators.

\begin{algorithm}[h]
	\caption{Auxiliary gradient descent for blind deconvolution (for analysis purpose only)}
	
	\label{alg:gd-bd-ncvx-2}\begin{algorithmic}
		
		\STATE \textbf{{Input}}: $\left\{ \bm{a}_{j}\right\} _{1\leq j\leq m}$,
		$\left\{ \bm{b}_{j}\right\} _{1\leq j\leq m}$, $\left\{ y_{j}\right\} _{1\leq j\leq m}$,
		$\bm{h}^{\star}$ and $\bm{x}^{\star}$.
		
		\STATE \textbf{{Initialization}}: $\bm{h}^{0}=\bm{h}^{\star}$
		and $\bm{x}^{0}=\bm{x}^{\star}$.
		
		\STATE \textbf{{Gradient updates}}: \textbf{for }$t=0,1,\ldots,t_{0}-1$
		\textbf{do}
		
		\STATE \vspace{-1em}
		\begin{subequations}\label{subeq:gradient_update_ncvx-2} 
			\begin{align}
			\left[\begin{array}{c}
			\boldsymbol{h}^{t+1/2}\\
			\boldsymbol{x}^{t+1/2}
			\end{array}\right]= & \left[\begin{array}{c}
			\boldsymbol{h}^{t}\\
			\boldsymbol{x}^{t}
			\end{array}\right]-\eta\left[\begin{array}{c}
			\nabla_{\boldsymbol{h}}f\left(\boldsymbol{h}^{t},\boldsymbol{x}^{t}\right)\\
			\nabla_{\boldsymbol{x}}f\left(\boldsymbol{h}^{t},\boldsymbol{x}^{t}\right)
			\end{array}\right],\\
			\left[\begin{array}{c}
			\boldsymbol{h}^{t+1}\\
			\boldsymbol{x}^{t+1}
			\end{array}\right]= & \left[\begin{array}{c}
			\sqrt{\frac{\left\Vert \bm{x}^{t+1/2}\right\Vert _{2}}{\left\Vert \bm{h}^{t+1/2}\right\Vert _{2}}}\boldsymbol{h}^{t+1/2}\\
			\sqrt{\frac{\left\Vert \bm{h}^{t+1/2}\right\Vert _{2}}{\left\Vert \bm{x}^{t+1/2}\right\Vert _{2}}}\boldsymbol{x}^{t+1/2}
			\end{array}\right],
			\end{align}
		\end{subequations}where $\nabla_{\boldsymbol{h}}f(\cdot)$ and $\nabla_{\boldsymbol{x}}f(\cdot)$
		represent the Wirtinger gradient (see \cite[Section 3.3]{li2019rapid}
		and Appendix~\ref{subsec:Wirtinger-calculus}) of $f(\cdot)$ w.r.t.~$\bm{h}$
		and $\bm{x}$, respectively.
		
	\end{algorithmic}
\end{algorithm}

\paragraph{Properties of the auxiliary nonconvex algorithm.} The
trajectory of this auxiliary nonconvex algorithm enjoys several important
properties. In the following lemma, the results are stated for the
properly rescaled iterate
\[
\widetilde{\bm{z}}^{t}=\left(\widetilde{\bm{h}}^{t},\widetilde{\bm{x}}^{t}\right)\coloneqq\left(\frac{1}{\overline{\alpha^{t}}}\bm{h}^{t},\alpha^{t}\bm{x}^{t}\right),
\]
with alignment parameter defined by 
\[
\alpha^{t}\coloneqq\arg\min_{\alpha\in\mathbb{C}}\left\{ \left\Vert \frac{1}{\overline{\alpha}}\bm{h}^{t}-\bm{h}^{\star}\right\Vert _{2}^{2}+\left\Vert \alpha\bm{x}^{t}-\bm{x}^{\star}\right\Vert _{2}^{2}\right\} .
\]

\begin{lemma}\label{lemma:cvx-noncvx}Take $\lambda=C_{\lambda}\sigma\sqrt{K\log m}$
	for some large enough constant $C_{\lambda}>0$. Assume the number
	of measurements obeys $m\geq C\mu^{2}K\log^{9}m$ for some sufficiently
	large constant $C>0$, and the noise satisfies $\sigma\sqrt{K\log m}\leq c/\log^{2}m$
	for some sufficiently small constant $c>0$. Then, with probability
	at least $1-O\left(m^{-100}+me^{-cK}\right)$ for some constant $c>0$,
	the iterates $\left\{ \boldsymbol{h}^{t},\boldsymbol{x}^{t}\right\} _{0<t\leq t_{0}}$
	of Algorithm (\ref{alg:gd-bd-ncvx-2}) satisfy \begin{subequations}\label{eq:intro-induction}
		\begin{align}
		\mathsf{dist}\left(\boldsymbol{z}^{t},\boldsymbol{z}^{\star}\right) & \leq\rho\mathsf{dist}\left(\boldsymbol{z}^{t-1},\boldsymbol{z}^{\star}\right)+C_{5}\eta\left(\lambda+\sigma\sqrt{K\log m}\right)\label{eq:induction1}\\
		\max_{1\leq j\leq m}\left|\boldsymbol{a}_{j}^{\mathsf{H}}\big(\widetilde{\boldsymbol{x}}^{t}-\boldsymbol{x}^{\star}\big)\right| & \leq C_{7}\sqrt{\log m}\left(\lambda+\sigma\sqrt{K\log m}\right)\label{eq:induction3}\\
		\max_{1\leq j\leq m}\left|\boldsymbol{b}_{j}^{\mathsf{H}}\widetilde{\boldsymbol{h}}^{t}\right| & \leq C_{8}\left(\frac{\mu}{\sqrt{m}}\log m+\sigma\right)\label{eq:induction4}\\
		\max_{1\leq j\leq m}\left|\boldsymbol{b}_{j}^{\mathsf{H}}\big(\widetilde{\bm{h}}^{t}-\bm{h}^{\star}\big)\right| & \leq C_{9}\sigma\label{eq:incohc}
		\end{align}
		for any $0<t\leq t_{0}$, where $\rho=1-c_{\rho}\eta\in\left(0,1\right)$
		for some small constant $c_{\rho}>0$, and we take $t_{0}=m^{20}$.
		Here, $C_{5}$, $\ldots$, $C_{9}$ are constants obeying $C_{7}\gg C_{5}$.
		In addition, we have 
		\begin{equation}
		\min_{0\leq t\leq t_{0}}\left\Vert \nabla f\left(\boldsymbol{h}^{t},\boldsymbol{x}^{t}\right)\right\Vert _{2}\leq\frac{\lambda}{m^{10}}.\label{eq:smallgradient}
		\end{equation}
\end{subequations}\end{lemma}

Most of the inequalities of this lemma (as well as their proofs) resemble
the ones derived for Algorithm \ref{alg:gd-BD-ncvx} in Appendix \ref{appendix:noncvx}.
It is worth emphasizing, however, that the establishment of the inequality
(\ref{eq:incohc}) relies heavily on the idealized initialization
$(\bm{h}^{0},\bm{x}^{0})=(\bm{h}^{\star},\bm{x}^{\star})$, and the
current proof does not work if the algorithm is spectrally initialized.
The proof of this lemma is deferred to Appendix \ref{subsec:Proof-of-Lemmacvx-ncvx}. 

\paragraph{Connection between the approximate nonconvex minimizer and the convex solution.}

As it turns out, the above type of features of the nonconvex iterates
together with the first-order optimality of the convex program allows
us to control the proximity of the convex minimizer and the approximate
nonconvex optimizer. Before proceeding to develop this idea formally, we first introduce the following operators for notational convenience. For any $\bm{z}=[z_{j}]_{1\leq j\leq m}$ and any $\bm{Z}\in\mathbb{C}^{K\times K}$, we define
\begin{align}
\mathcal{A}\left(\bm{Z}\right) & \coloneqq\left\{ \bm{b}_{j}^{\mathsf{H}}\bm{Z}\bm{a}_{j}\right\} _{j=1}^{m},\qquad\mathcal{A}^{*}\left(\bm{z}\right)=\sum_{j=1}^{m}z_{j}\bm{b}_{j}\bm{a}_{j}^{\mathsf{H}},\nonumber \\
\mathcal{T}\left(\boldsymbol{Z}\right) & \coloneqq\mathcal{A}^{*}\mathcal{A}\left(\bm{Z}\right)=\sum_{j=1}^{m}\boldsymbol{b}_{j}\boldsymbol{b}_{j}^{\mathsf{H}}\bm{Z}\boldsymbol{a}_{j}\boldsymbol{a}_{j}^{\mathsf{H}}.\label{eq:operatordef}
\end{align}
Below are several key conditions on these operators concerned with the interplay between the noise size, the estimation
accuracy of the nonconvex estimate $(\bm{h}, \bm{x})$ and the regularization parameters $\lambda$. 

\begin{condition}\label{condition:fourier-lambda}
	The regularization parameter $\lambda$ satisfies
	\begin{enumerate}
		\item $
		\| \mathcal{T}\left(\boldsymbol{h}\boldsymbol{x}^{\mathsf{H}}-\boldsymbol{h}^{\star}\boldsymbol{x}^{\star\mathsf{H}}\right)-\left(\boldsymbol{h}\boldsymbol{x}^{\mathsf{H}}-\boldsymbol{h}^{\star}\boldsymbol{x}^{\star\mathsf{H}}\right)\| <\lambda/8.
		$
		\item $
		\left\Vert \mathcal{A}^{*}\left(\bm{\xi}\right)\right\Vert =\|\sum_{j=1}^{m}\xi_{j}\boldsymbol{b}_{j}\boldsymbol{a}_{j}^{\mathsf{H}}\|\leq c\lambda,
		$ for some small constant $c>0$. 
	\end{enumerate}
\end{condition}
Condition \ref{condition:fourier-lambda} requires that the regularization parameter $\lambda$ dominate the norm of the deviation of $\mathcal{T}(\boldsymbol{h}\boldsymbol{x}^{\mathsf{H}}-\boldsymbol{h}^{\star}\boldsymbol{x}^{\star\mathsf{H}})$ from its mean $\boldsymbol{h}\boldsymbol{x}^{\mathsf{H}}-\boldsymbol{h}^{\star}\boldsymbol{x}^{\star\mathsf{H}}$, and also the norm of the noise operated on by $\mathcal{A}^*$. As can be seen shortly, these two conditions can be met with high probability when $(\boldsymbol{h}, \boldsymbol{x})$ is sufficiently close to $(\boldsymbol{h}^{\star}, \boldsymbol{x}^{\star})$. 

Another critical condition is the following injectivity condition on $\mathcal{A}$. 
\begin{condition}\label{condition:fourier-inj}
	Let $T$ be the tangent space of $\boldsymbol{h}\boldsymbol{x}^{\mathsf{H}}$. Then for all $\boldsymbol{Z}\in T$, one has
	\[
	\left\Vert \mathcal{A}\left(\boldsymbol{Z}\right)\right\Vert _{2}^{2}\geq\frac{1}{16}\left\Vert \boldsymbol{Z}\right\Vert _{\mathrm{F}}^{2}.
	\]
\end{condition}

When these two conditions hold, the aforementioned intimate connection between approximate nonconvex minimizer and the convex solution can be formalized in the following crucial lemma. 

\begin{lemma}\label{lemma:proximity-convex-ncvx}
	Suppose that $\left(\boldsymbol{h},\boldsymbol{x}\right)$ obeys\begin{subequations}\label{eq:hx-properties-all}
		\begin{align}
		\left\Vert \nabla f\left(\boldsymbol{h},\boldsymbol{x}\right)\right\Vert _{2}\leq C\frac{\lambda}{m^{10}},\label{eq:small-gradient}
		\end{align}
	\end{subequations}for some constants $C>0$. Then under Conditions \ref{condition:fourier-lambda} and \ref{condition:fourier-inj},
	any minimizer $\boldsymbol{Z}_{\mathsf{cvx}}$ of the convex problem
	(\ref{eq:objcvx}) satisfies
	\[
	\left\Vert \boldsymbol{h}\boldsymbol{x}^{\mathsf{H}}-\boldsymbol{Z}_{\mathsf{cvx}}\right\Vert _{\mathrm{F}}\lesssim\left\Vert \nabla f\left(\boldsymbol{h},\boldsymbol{x}\right)\right\Vert _{2}.
	\]
\end{lemma}\begin{proof}See Appendix \ref{subsec:Proof-of-Lemma2}.\end{proof}
In words, if we can find a point $(\bm{h},\bm{x})$ that has vanishingly
small gradient (cf.~(\ref{eq:small-gradient})) and that satisfies the additional Conditions \ref{condition:fourier-lambda} and \ref{condition:fourier-inj}, 
then the matrix $\bm{h}\bm{x}^{\mathsf{H}}$ is guaranteed to be exceedingly
close to the solution of the convex program. Encouragingly, Lemma
\ref{lemma:cvx-noncvx} hints at the existence of a point along the
trajectory of Algorithm (\ref{alg:gd-bd-ncvx-2}) satisfying these
conditions (\ref{eq:hx-properties-all}); if this were true, then
one could transfer the properties of the approximate nonconvex optimizer
to the convex solution, as a means to certify the statistical efficiency
of convex programming. As we will see soon, this is indeed the case that with Assumption \ref{assumptions:models}, we can prove that under some mild sample size and noise level conditions, Conditions \ref{condition:fourier-lambda} and \ref{condition:fourier-inj} would hold with high probability. To begin with, the following lemma corresponds to the first point in Condition \ref{condition:fourier-lambda}.

\begin{lemma}\label{lemma:T-uniform-mean}Suppose that the sample
	complexity satisfies $m\geq C\mu^{2}K\log^{4}m$ for some sufficiently
	large constant $C>0$. Take $\lambda=C_{\lambda}\sigma\sqrt{K\log m}$
	for some large enough constant $C_{\lambda}>0$. Then with probability
	at least $1-O\left(m^{-10}+me^{-CK}\right)$, we have
	\[
	\left\Vert \mathcal{T}\left(\boldsymbol{h}\boldsymbol{x}^{\mathsf{H}}-\boldsymbol{h}^{\star}\boldsymbol{x}^{\star\mathsf{H}}\right)-\left(\boldsymbol{h}\boldsymbol{x}^{\mathsf{H}}-\boldsymbol{h}^{\star}\boldsymbol{x}^{\star\mathsf{H}}\right)\right\Vert <\lambda/8,
	\]
	simultaneously for any $\left(\boldsymbol{h},\boldsymbol{x}\right)$
	obeying \begin{subequations}\label{eq:hx-properties-all}
		\begin{align}
		\left\Vert \bm{h}\right\Vert _{2}=\left\Vert \bm{x}\right\Vert _{2},\quad\left\Vert \boldsymbol{h}-\boldsymbol{h}^{\star}\right\Vert _{2} & \leq\frac{C_{5}}{1-\rho}\eta\left(\lambda+\sigma\sqrt{K\log m}\right),\quad\left\Vert \boldsymbol{x}-\boldsymbol{x}^{\star}\right\Vert _{2}\leq\frac{C_{5}}{1-\rho}\eta\left(\lambda+\sigma\sqrt{K\log m}\right),\label{eq:hx-properties}\\
		\max_{1\leq j\leq m}\left|\boldsymbol{b}_{j}^{\mathsf{H}}\left(\bm{h}-\bm{h}^{\star}\right)\right| & \leq C_{9}\sigma\quad\text{and}\quad\max_{1\leq j\leq m}\left|\boldsymbol{a}_{j}^{\mathsf{H}}\left(\bm{x}-\boldsymbol{x}^{\star}\right)\right|\leq C_{7}\sqrt{\log m}\left(\lambda+\sigma\sqrt{K\log m}\right),\label{eq:hx-properties2}
		\end{align}
	\end{subequations}for some constants $C_{5},C_{7},C_{9}>0$. 
\end{lemma}\begin{proof}See Appendix \ref{subsec:Proof-of-Lemma8innoisy}.\end{proof}
Recall the definition of operator $\mathcal{T}$ in \eqref{eq:operatordef}.
The lemma above states that for all $(\bm{h},\bm{x})$ sufficiently
close to $(\bm{h}^{\star},\bm{x}^{\star})$, the matrix $\mathcal{T}\left(\boldsymbol{h}\boldsymbol{x}^{\mathsf{H}}-\boldsymbol{h}^{\star}\boldsymbol{x}^{\star\mathsf{H}}\right)$
is close to the expectation $\boldsymbol{h}\boldsymbol{x}^{\mathsf{H}}-\boldsymbol{h}^{\star}\boldsymbol{x}^{\star\mathsf{H}}$. 

Next we turn to the second point in Condition \ref{condition:fourier-lambda}. 
\begin{lemma}\label{lemma:noise}
	Suppose that Asumption \ref{assumptions:models} holds and $m\gtrsim K\log^{3}m$. With probability at least $1-O\left(m^{-100}\right)$,
	one has 
	\[
	\left\Vert \mathcal{A}^{*}\left(\bm{\xi}\right)\right\Vert =\Bigg\|\sum_{j=1}^{m}\xi_{j}\boldsymbol{b}_{j}\boldsymbol{a}_{j}^{\mathsf{H}}\Bigg\|\lesssim\sigma\sqrt{K\log m}.
	\]
\end{lemma}\begin{proof}See Appendix \ref{subsec:Proof-oflemmanoise}.\end{proof}

Regarding Condition \ref{condition:fourier-inj}, we have the following lemma.
\begin{lemma}\label{lemma:inj}Suppose that the sample complexity
	satisfies $m\geq C\mu^{2}K\log m$ for some sufficiently large constant
	$C>0$. Then with probability at least $1-O\left(m^{-10}\right)$,
	\[
	\left\Vert \mathcal{A}\left(\boldsymbol{Z}\right)\right\Vert _{2}^{2}\geq\frac{1}{16}\left\Vert \boldsymbol{Z}\right\Vert _{\mathrm{F}}^{2},\quad\forall\boldsymbol{Z}\in T
	\]
	holds simultaneously for all $T$ for which the associated point $\left(\boldsymbol{h},\boldsymbol{x}\right)$
	obeys \eqref{eq:hx-properties} and \eqref{eq:hx-properties2}. Here,
	$T$ denotes the tangent space of $\boldsymbol{h}\boldsymbol{x}^{\mathsf{H}}$.\end{lemma}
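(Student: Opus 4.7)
The plan is to view the inequality as a restricted isometry–type statement on $T$. Writing $\mathcal{A}^{\ast}\mathcal{A} = \mathcal{I} + \mathcal{T}^{\mathsf{debias}}$ (cf.~\eqref{eq:operatordef}), for any $\bm{Z} \in T$ one has
\[
\|\mathcal{A}(\bm{Z})\|_2^2 = \|\bm{Z}\|_{\mathrm{F}}^2 + \langle \bm{Z}, \mathcal{T}^{\mathsf{debias}}(\bm{Z})\rangle \ge \bigl(1 - \|\mathcal{P}_T\mathcal{T}^{\mathsf{debias}}\mathcal{P}_T\|\bigr)\,\|\bm{Z}\|_{\mathrm{F}}^2,
\]
so the target $\tfrac{1}{16}$ reduces to showing $\|\mathcal{P}_T\mathcal{T}^{\mathsf{debias}}\mathcal{P}_T\| \le \tfrac{15}{16}$ uniformly over all admissible $T$.

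I would first establish this for the ground-truth tangent space $T^\star$ at $\bm{h}^\star\bm{x}^{\star\mathsf{H}}$. This is the setting treated by \cite{ahmed2013blind,ling2015self}: under the incoherence $|\bm{b}_j^{\mathsf{H}}\bm{h}^\star| \le \mu/\sqrt{m}$ and $m \gtrsim \mu^2 K \log m$, a matrix Bernstein argument applied to the sum of independent operators $\bm{b}_j\bm{b}_j^{\mathsf{H}}\,\mathcal{P}_{T^\star}(\cdot)\,\bm{a}_j\bm{a}_j^{\mathsf{H}} - \mathcal{P}_{T^\star}(\cdot)$, combined with an $\varepsilon$-net over the low-dimensional space $T^\star$, yields $\|\mathcal{P}_{T^\star}\mathcal{T}^{\mathsf{debias}}\mathcal{P}_{T^\star}\| \le \tfrac{1}{32}$ with probability $1 - O(m^{-10})$.

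To transfer this to a general admissible $T$, parametrise $\bm{Z}\in T$ as $\bm{Z} = \bm{h}\bm{v}^{\mathsf{H}} + \bm{u}\bm{x}^{\mathsf{H}}$; expanding $\langle \bm{Z}, \mathcal{T}^{\mathsf{debias}}(\bm{Z})\rangle$ produces four quadratic forms in $(\bm{u},\bm{v})$ whose coefficient matrices are sums of independent random matrices depending on $\{\bm{a}_j\},\{\bm{b}_j\},\bm{h},\bm{x}$. The assumption \eqref{eq:hx-properties2} together with the base incoherence \eqref{eq:incoherence-condition} delivers $|\bm{b}_j^{\mathsf{H}}\bm{h}| \le \mu/\sqrt{m} + O(\sigma) \lesssim \mu/\sqrt{m}$, and similarly \eqref{eq:hx-properties2} combined with the standard Gaussian tail bound on $|\bm{a}_j^{\mathsf{H}}\bm{x}^\star|$ gives $|\bm{a}_j^{\mathsf{H}}\bm{x}| \lesssim \sqrt{\log m}$, once $\sigma\sqrt{K\log^{5}m}\lesssim 1$ and $\lambda\asymp\sigma\sqrt{K\log m}$. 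These are precisely the quantities that control the variance and truncation parameters in the matrix Bernstein inequality used for $T^\star$, so the same calculation yields an identical $O(1)$ constant — now for $T$ in place of $T^\star$.

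The main obstacle is uniformity across the continuum of admissible tangent spaces. Rather than discretising the pair $(\bm{h},\bm{x})$ — which interacts awkwardly with the Gaussian $\bm{a}_j$'s — I would absorb all $(\bm{h},\bm{x})$-dependence into two scalar incoherence parameters. Concretely, the plan is to prove on a single high-probability event (depending only on $\{\bm{a}_j\},\{\bm{b}_j\}$) the uniform statement
\[
\sup_{\bm{p},\bm{q}\,:\,\max_j|\bm{b}_j^{\mathsf{H}}\bm{p}|\le 2\mu/\sqrt{m},\ \max_j|\bm{a}_j^{\mathsf{H}}\bm{q}|\le C\sqrt{\log m}} \bigl\|\mathcal{P}_{T(\bm{p},\bm{q})}\,\mathcal{T}^{\mathsf{debias}}\,\mathcal{P}_{T(\bm{p},\bm{q})}\bigr\| \le \tfrac{15}{16},
\]
where $T(\bm{p},\bm{q}) := \{\bm{p}\bm{v}^{\mathsf{H}} + \bm{u}\bm{q}^{\mathsf{H}}\}$. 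The hard part is handling the supremum over $(\bm{p},\bm{q})$; I would do this by conditioning on the event that the incoherence constraints hold (a purely algebraic constraint on $\bm{p},\bm{q}$), then carrying out the Bernstein estimate and $\varepsilon$-net argument simultaneously over $(\bm{p},\bm{q},\bm{u},\bm{v})$ in a low-dimensional ball, using a payload function of $\bm{p},\bm{q}$ that is Lipschitz (so a polynomial-sized net suffices). Plugging in $\bm{p}=\bm{h}/\|\bm{h}\|_2$ and $\bm{q}=\bm{x}/\|\bm{x}\|_2$ for any admissible $(\bm{h},\bm{x})$ then completes the proof, since $\mathcal{P}_{T(\bm{p},\bm{q})} = \mathcal{P}_T$ by the definition \eqref{eq:defn-PT-projection}.
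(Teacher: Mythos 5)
Your high-level route is genuinely different from the paper's: you try to establish a restricted-isometry bound $\|\mathcal{P}_T\mathcal{T}^{\mathsf{debias}}\mathcal{P}_T\|\le 15/16$ directly and uniformly over all admissible tangent spaces $T(\bm{p},\bm{q})$, whereas the paper never bounds that operator norm for a general $T$. Instead the paper expands $\bm{Z}=\bm{h}\bm{u}^{\mathsf{H}}+\bm{v}\bm{x}^{\mathsf{H}}$ around $(\bm{h}^\star,\bm{x}^\star)$, decomposes $\|\mathcal{A}(\bm{Z})\|_2^2$ into (a) a main term living on the \emph{fixed} tangent space $T^\star$, handled by the single-event RIP of Lemma~\ref{lem:strengthen}, plus (b) small cross terms $\gamma_2,\gamma_3$ in $\bm{\Delta_h},\bm{\Delta_x}$, which are controlled by scalar Bernstein plus $\varepsilon$-nets; the size of $\bm{\Delta_h},\bm{\Delta_x}$, together with the deterministic incoherence from \eqref{eq:hx-properties2} and \eqref{eq:incoherence-condition}, makes those cross terms $O(1/100)\,(\|\bm{u}\|_2^2+\|\bm{v}\|_2^2)$, and the lower bound follows from combining with $\|\bm{Z}\|_{\mathrm{F}}^2\le 8(\|\bm{u}\|_2^2+\|\bm{v}\|_2^2)$, which in turn relies on the special balanced parametrisation \eqref{eq:special-choice-h-x} (not mentioned in your sketch).

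The gap in your approach is quantitative and concrete: proving $\|\mathcal{P}_{T(\bm{p},\bm{q})}\mathcal{T}^{\mathsf{debias}}\mathcal{P}_{T(\bm{p},\bm{q})}\|\le 15/16$ uniformly over a continuum of $(\bm{p},\bm{q})$ requires a net over the $\Theta(K)$-dimensional space of $(\bm{p},\bm{q})$ — of size $(1/\varepsilon)^{\Theta(K)}$ — and hence a per-$(\bm{p},\bm{q})$ failure probability on the order of $\exp(-cK\log m)$. But matrix Bernstein at the operator-norm level only delivers a failure probability like $\exp\!\big(-c\,m/(\mu^{2}K\log m)\big)$; under the stated sample complexity $m\gtrsim\mu^{2}K\log m$ this is only $m^{-O(1)}$, not $m^{-\Theta(K)}$. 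Closing that gap would require $m\gtrsim\mu^{2}K^{2}\log m$, i.e.\ an extra factor of $K$ over what the lemma claims. You cannot evade this by switching to scalar Bernstein on $\langle\bm{Z},\mathcal{T}^{\mathsf{debias}}(\bm{Z})\rangle$ for fixed $(\bm{p},\bm{q},\bm{u},\bm{v})$ either: the term $\sum_{j}|\bm{b}_{j}^{\mathsf{H}}\bm{u}|^{2}|\bm{q}^{\mathsf{H}}\bm{a}_{j}|^{2}$ has per-summand sub-exponential norm up to $\|\bm{b}_{j}\|_{2}^{2}=K/m$ (the free directions $\bm{u},\bm{v}$ in $T$ carry no incoherence), which again forces $m\gtrsim K^{2}$. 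Matrix Bernstein can exploit $\sum_{j}\bm{b}_{j}\bm{b}_{j}^{\mathsf{H}}=\bm{I}_{K}$ precisely because it works with the whole subspace at once, which is why the paper applies it only once, to the fixed $T^\star$, and handles the $(\bm{h},\bm{x})$-dependence through the small perturbations $\bm{\Delta_h},\bm{\Delta_x}$ rather than through a net over tangent spaces. Relatedly, the phrase ``conditioning on the event that the incoherence constraints hold'' is misleading: the set $\{\bm{q}:\max_{j}|\bm{a}_{j}^{\mathsf{H}}\bm{q}|\le C\sqrt{\log m}\}$ is a \emph{random} set depending on the design $\{\bm{a}_{j}\}$, so it cannot serve as a deterministic index set for a net; one must cover a deterministic superset and then transfer the incoherence to nearby net points (as the paper does by comparing $|\bm{\Delta}_{\bm{x}_{0}}^{\mathsf{H}}\bm{a}_{j}|$ to $|\bm{\Delta}_{\bm{x}}^{\mathsf{H}}\bm{a}_{j}|$ via $\|\bm{a}_{j}\|_{2}\lesssim\sqrt{K}$), a step your sketch omits.
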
\begin{proof}See
	Appendix \ref{subsec:Proof-of-Lemma-inj}.\end{proof}
Basically, this lemma
reveals that when $(\bm{h},\bm{x})$ is sufficiently close to $(\bm{h}^{\star},\bm{x}^{\star})$,
the operator $\mathcal{A}(\cdot)$ --- restricted to the tangent
space $T$ of $\bm{h}\bm{x}^{\mathsf{H}}$ --- is injective. 


Now we are ready to present the proof of Theorem \ref{theorem:cvx}.

\paragraph{Proof of Theorem \ref{theorem:cvx}.} Armed with this
result and the properties about the nonconvex trajectory, we are ready
to establish Theorem \ref{theorem:cvx} as follows. Let $\overline{t}\coloneqq\arg\min_{0\leq t\leq t_{0}}\left\Vert \nabla f\left(\boldsymbol{h}^{t},\boldsymbol{x}^{t}\right)\right\Vert _{\text{F}}$,
and take $\left(\boldsymbol{h}_{\mathsf{ncvx}},\boldsymbol{x}_{\mathsf{ncvx}}\right)=\left(\frac{1}{\overline{\alpha^{\overline{t}}}}\boldsymbol{h}^{\overline{t}},\alpha^{\overline{t}}\boldsymbol{x}^{\overline{t}}\right)$.
By virtue of Lemma \ref{lemma:cvx-noncvx}, we see that $\left(\boldsymbol{h}_{\mathsf{ncvx}},\boldsymbol{x}_{\mathsf{ncvx}}\right)$
satisfies --- with high probability --- the small gradient property
\eqref{eq:smallgradient} as well as all conditions required to invoke
Lemma \ref{lemma:proximity-convex-ncvx}. As a consequence, invoke
Lemma \ref{lemma:proximity-convex-ncvx} to obtain
\begin{equation}
\left\Vert \boldsymbol{Z}_{\mathsf{cvx}}-\boldsymbol{h}_{\mathsf{ncvx}}\boldsymbol{x}_{\mathsf{ncvx}}^{\mathsf{H}}\right\Vert _{\text{F}}\lesssim\frac{1}{c_{\text{inj}}}\left\Vert \nabla f\left(\boldsymbol{h}_{\mathsf{ncvx}},\boldsymbol{x}_{\mathsf{ncvx}}\right)\right\Vert _{\text{F}}\lesssim\text{\ensuremath{\frac{\lambda}{m^{10}}}}.\label{eq:proof-thm-cvx-1}
\end{equation}
Further, it is seen that
\begin{align}
\left\Vert \bm{h}_{\mathsf{ncvx}}\big(\bm{x}_{\mathsf{ncvx}}\big)^{\mathsf{H}}-\boldsymbol{h}^{\star}\boldsymbol{x}^{\mathsf{\star H}}\right\Vert _{\mathrm{F}} & \leq\left\Vert \bm{h}_{\mathsf{ncvx}}\big(\bm{x}_{\mathsf{ncvx}}\big)^{\mathsf{H}}-\bm{h}^{\star}\big(\bm{x}_{\mathsf{ncvx}}\big)^{\mathsf{H}}\right\Vert _{\mathrm{F}}+\left\Vert \bm{h}^{\star}\big(\bm{x}_{\mathsf{ncvx}}\big)^{\mathsf{H}}-\boldsymbol{h}^{\star}\boldsymbol{x}^{\mathsf{\star H}}\right\Vert _{\mathrm{F}}\nonumber \\
& \leq\left\Vert \bm{h}_{\mathsf{ncvx}}-\bm{h}^{\star}\right\Vert _{2}\left\Vert \bm{x}_{\mathsf{ncvx}}\right\Vert _{2}+\left\Vert \bm{h}^{\star}\right\Vert _{2}\left\Vert \bm{x}_{\mathsf{ncvx}}-\boldsymbol{x}^{\mathsf{\star}}\right\Vert _{2}\nonumber \\
& \leq2\left\Vert \bm{z}^{\star}\right\Vert _{2}\cdot\frac{C_{5}\eta}{\left(1-\rho\right)\left\Vert \bm{z}^{\star}\right\Vert _{2}}\left(\lambda+\sigma\sqrt{K\log m}\right)\nonumber \\
& =\frac{2C_{5}}{c_{\rho}}\left(\lambda+\sigma\sqrt{K\log m}\right),\label{eq:proof-thm-cvx-2}
\end{align}
where the penultimate line follows from (\ref{eq:dist-bound-1}) and
the inequality
\[
\left\Vert \bm{x}_{\mathsf{ncvx}}\right\Vert _{2}\leq\left\Vert \bm{x}^{\star}\right\Vert _{2}+\left\Vert \bm{x}_{\mathsf{ncvx}}-\bm{x}^{\star}\right\Vert _{2}\leq\left\Vert \bm{z}^{\star}\right\Vert _{2}+\frac{C_{5}\eta}{\left(1-\rho\right)\left\Vert \bm{z}^{\star}\right\Vert _{2}}\left(\lambda+\sigma\sqrt{K\log m}\right)\leq2\left\Vert \bm{z}^{\star}\right\Vert _{2}.
\]
Taking (\ref{eq:proof-thm-cvx-1}) and (\ref{eq:proof-thm-cvx-2})
collectively yields
\begin{align*}
\left\Vert \boldsymbol{Z}_{\mathsf{cvx}}-\boldsymbol{h}^{\star}\boldsymbol{x}^{\mathsf{\star H}}\right\Vert _{\text{F}} & \leq\left\Vert \boldsymbol{Z}_{\mathsf{cvx}}-\boldsymbol{h}_{\mathsf{ncvx}}\boldsymbol{x}_{\mathsf{ncvx}}^{\mathsf{H}}\right\Vert _{\text{F}}+\left\Vert \boldsymbol{h}_{\mathsf{ncvx}}\boldsymbol{x}_{\mathsf{ncvx}}^{\mathsf{H}}-\boldsymbol{h}^{\star}\boldsymbol{x}^{\mathsf{\star H}}\right\Vert _{\text{F}}\\
& \lesssim\frac{\lambda}{m^{10}}+\lambda+\sigma\sqrt{K\log m}\\
& \lesssim\lambda+\sigma\sqrt{K\log m}.
\end{align*}
This together with the elementary bound $\left\Vert \boldsymbol{Z}_{\mathsf{cvx}}-\boldsymbol{h}^{\star}\boldsymbol{x}^{\mathsf{\star H}}\right\Vert \leq\left\Vert \boldsymbol{Z}_{\mathsf{cvx}}-\boldsymbol{h}^{\star}\boldsymbol{x}^{\mathsf{\star H}}\right\Vert _{\text{F}}$
concludes the proof, as long as the above key lemmas can be justified. 

To prove the results also holds for $\boldsymbol{Z}_{\mathsf{cvx,}1}$, we recall that $\boldsymbol{Z}_{\mathsf{cvx,}1}$ is the best rank-1 approximation of $\boldsymbol{Z}_{\mathsf{cvx}}$ and this implies that,
\[
\left\Vert \boldsymbol{Z}_{\mathsf{cvx}}-\boldsymbol{Z}_{\mathsf{cvx,}1}\right\Vert _{\text{F}} \leq \left\Vert \boldsymbol{Z}_{\mathsf{cvx}}-\boldsymbol{h}_{\mathsf{ncvx}}\boldsymbol{x}_{\mathsf{ncvx}}^{\mathsf{H}}\right\Vert _{\text{F}}\lesssim\text{\ensuremath{\frac{\lambda}{m^{10}}}}.
\]
Hence, repeating the above calculations for $\boldsymbol{Z}_{\mathsf{cvx,}1}$ reveals that \eqref{eq:thm:gaussian-cvx} continues to holds if $\boldsymbol{Z}_{\mathsf{cvx}}$ is replaced by $\boldsymbol{Z}_{\mathsf{cvx,}1}$.

In what follows, we establish the key lemmas stated above. 

\subsection{Preliminary facts}

Before proceeding, there are a couple of immediate consequences of
Lemma \ref{lemma:cvx-noncvx} that will prove useful, which we summarize
as follows. 

\begin{lemma}\label{lem:consequence-cvx}Instate the notation and
assumptions in Theorem \ref{thm:nonconvex}. For $t\geq0$, suppose
that the hypotheses \eqref{eq:induction} hold in the first $t$ iterations.
Then there exist some constants $C_{5}>0$ such that for any $1\le l\leq m$,
\begin{subequations}
\begin{align}
\mathsf{dist}\left(\boldsymbol{z}^{t},\boldsymbol{z}^{\star}\right) & \leq\frac{C_{5}}{c_{\rho}}\left(\lambda+\sigma\sqrt{K\log m}\right),\label{eq:dist-bound-1}\\
\big\|\widetilde{\boldsymbol{z}}^{t,\left(l\right)}-\boldsymbol{z}^{\star}\big\|_{2} & \leq2\frac{C_{5}}{c_{\rho}}\left(\lambda+\sigma\sqrt{K\log m}\right),\label{eq:conseq-2-1}\\
\frac{1}{2}\leq\left\Vert \widetilde{\bm{x}}^{t}\right\Vert _{2}\leq\frac{3}{2}, & \qquad\frac{1}{2}\le\big\|\widetilde{\bm{h}}^{t}\big\|_{2}\leq\frac{3}{2},\label{eq:tilde-hx-1}\\
\frac{1}{2}\le\big\|\widetilde{\bm{x}}^{t,\left(l\right)}\big\|_{2}\leq\frac{3}{2}, & \qquad\frac{1}{2}\le\big\|\widetilde{\bm{h}}^{t,\left(l\right)}\big\|_{2}\leq\frac{3}{2},\label{eq:tilde-hx-loo-1}\\
\frac{1}{2}\le\big\|\widehat{\bm{x}}^{t,\left(l\right)}\big\|_{2}\leq\frac{3}{2}, & \qquad\frac{1}{2}\le\big\|\widehat{\bm{h}}^{t,\left(l\right)}\big\|_{2}\leq\frac{3}{2},\label{eq:hat-hx-loo-1}\\
\left\Vert \bm{h}^{t}\right\Vert _{2}^{2}=\left\Vert \bm{x}^{t}\right\Vert _{2}^{2}=\left\Vert \bm{h}^{t}\right\Vert _{2}\left\Vert \bm{x}^{t}\right\Vert _{2} & =\big\|\widetilde{\bm{h}}^{t-1/2}\big\|_{2}\big\|\widetilde{\bm{x}}^{t-1/2}\big\|_{2}=\big\|\widetilde{\bm{h}}^{t}\big\|_{2}\big\|\widetilde{\bm{x}}^{t}\big\|_{2}.\label{eq:balance}
\end{align}
In addition, for an integer $t>0$, suppose that the hypotheses \eqref{eq:induction}
hold in the first $t-1$ iterations. Then there exists some constant
$C>0$ such that with probability at least $1-O\left(m^{-100}+e^{-CK}\log m\right)$,
there holds
\begin{align}
\big\|\widehat{\bm{z}}^{t}-\boldsymbol{z}^{\star}\big\|_{2} & \leq\frac{C_{5}}{c_{\rho}}\left(\lambda+\sigma\sqrt{K\log m}\right),\label{eq:conseq-3-1}\\
\left|\left|\alpha^{t}\right|-1\right| & \lesssim\frac{C_{5}}{c_{\rho}}\left(\lambda+\sigma\sqrt{K\log m}\right),\label{eq:alpha-asymp1-1}\\
\left|\frac{\alpha^{t-1/2}}{\alpha^{t-1}}-1\right| & \lesssim\eta\frac{C_{5}}{c_{\rho}}\left(\lambda+\sigma\sqrt{K\log m}\right),\label{eq:alpharatio-round1-1}\\
\left|\alpha^{t-1/2}-\alpha^{t-1}\right| & \lesssim\eta\frac{C_{5}}{c_{\rho}}\left(\lambda+\sigma\sqrt{K\log m}\right),\label{eq:alphadiff-1}\\
\frac{1}{2} & \leq\left|\frac{\alpha^{t-1}}{\alpha^{t-1/2}}\right|\le\frac{3}{2},\label{eq:alpha-round1-1}\\
\frac{1}{2} & \leq\left|\alpha^{t}\right|\le\frac{3}{2}.\label{eq:alpha-bound}
\end{align}
\end{subequations} \end{lemma}\begin{proof}The proof follows from
the same argument as in the proof of Lemma \ref{lem:consequence}
and Corollary \ref{corollary:alpha}, and is thus omitted here for
brevity.\end{proof}

\subsection{Proof of Lemma \ref{lemma:cvx-noncvx}\label{subsec:Proof-of-Lemmacvx-ncvx}}

After the introduction of the proof idea in Appendix \ref{appendix:noncvx},
we state a more complete version of Lemma \ref{lemma:cvx-noncvx}
here.

\begin{lemma}\label{lemma:cvx-noncvx-1}Take $\lambda=C_{\lambda}\sigma\sqrt{K\log m}$
for some large enough constant $C_{\lambda}>0$. Assume the number
of measurements obeys $m\geq C\mu^{2}K\log^{9}m$ for some sufficiently
large constant $C>0$, and the noise satisfies $\sigma\sqrt{K\log m}\leq c/\log^{2}m$
for some sufficiently small constant $c>0$. Then, with probability
at least $1-O\left(m^{-100}+me^{-cK}\right)$ for some constant $c>0$,
the iterates $\left\{ \boldsymbol{h}^{t},\boldsymbol{x}^{t}\right\} _{0<t\leq t_{0}}$
of Algorithm \eqref{alg:gd-bd-ncvx-2} satisfy \begin{subequations}\label{eq:induction}
\begin{align}
\mathsf{dist}\left(\boldsymbol{z}^{t},\boldsymbol{z}^{\star}\right) & \leq\rho\mathsf{dist}\left(\boldsymbol{z}^{t-1},\boldsymbol{z}^{\star}\right)+C_{5}\eta\left(\lambda+\sigma\sqrt{K\log m}\right)\label{eq:induction1-1}\\
\max_{1\leq l\leq m}\mathsf{dist}\big(\boldsymbol{z}^{t,\left(l\right)},\widetilde{\boldsymbol{z}}^{t}\big) & \leq C_{6}\frac{\sigma}{\log^{2}m}\label{eq:induction2-1}\\
\max_{1\leq l\leq m}\big\|\widetilde{\bm{z}}^{t,\left(l\right)}-\widetilde{\bm{z}}^{t}\big\|_{2} & \lesssim C_{6}\frac{\sigma}{\log^{2}m}\label{eq:induction-2-1}\\
\max_{1\leq j\leq m}\left|\boldsymbol{a}_{j}^{\mathsf{H}}\big(\widetilde{\boldsymbol{x}}^{t}-\boldsymbol{x}^{\star}\big)\right| & \leq C_{7}\sqrt{\log m}\left(\lambda+\sigma\sqrt{K\log m}\right)\label{eq:induction3-1}\\
\max_{1\leq j\leq m}\left|\boldsymbol{b}_{j}^{\mathsf{H}}\widetilde{\boldsymbol{h}}^{t}\right| & \leq C_{8}\left(\frac{\mu}{\sqrt{m}}\log m+\sigma\right)\label{eq:induction4-1}\\
\max_{1\leq j\leq m}\left|\boldsymbol{b}_{j}^{\mathsf{H}}\big(\widetilde{\bm{h}}^{t}-\bm{h}^{\star}\big)\right| & \leq C_{9}\sigma\label{eq:incohc-1}
\end{align}
for any $0<t\leq t_{0}$, where $\rho=1-c_{\rho}\eta\in\left(0,1\right)$
for some small constant $c_{\rho}>0$, and we take $t_{0}=m^{20}$.
Here, $C_{5}$, $\ldots$, $C_{9}$ are constants obeying $C_{7}\gg C_{5}$.
In addition, we have 
\begin{equation}
\min_{0\leq t\leq t_{0}}\left\Vert \nabla f\left(\boldsymbol{h}^{t},\boldsymbol{x}^{t}\right)\right\Vert _{2}\leq\frac{\lambda}{m^{10}}.\label{eq:smallgradient-1}
\end{equation}
\end{subequations}\end{lemma}

The claims \eqref{eq:induction1-1}-\eqref{eq:induction4-1} are direct
consequences of Lemma \ref{lemma:distance}, Lemma \ref{lemma:proximity},
the relation \eqref{eq:incoherencea-proof}, and Lemma \ref{lemma:incoherenceb}.
As a result, the remaining steps lie in proving \eqref{eq:incohc}
and \eqref{eq:smallgradient}.

\subsubsection{Proof of the claim \eqref{eq:incohc}\label{subsec:Proof-of-Lemmaincoherencec}}

Recall the definition $\widetilde{\bm{h}}^{t}\coloneqq\boldsymbol{h}^{t}/\overline{\alpha^{t}}.$
We aim to prove inductively that
\begin{equation}
\max_{1\leq j\leq m}\left|\bm{b}_{j}^{\mathsf{H}}\big(\widetilde{\bm{h}}^{t}-\bm{h}^{\star}\big)\right|\leq C_{9}\sigma\label{eq:induction-5}
\end{equation}
holds for some constant $C_{9}>0$, provided that the algorithm is
initialized at the truth. 

It is self-evident that \eqref{eq:induction-5} holds for the base
case (i.e.~$t=0$) when $\bm{h}^{0}=\bm{h}^{\star}$. Assume for
the moment that \eqref{eq:induction-5} holds true at the $t$th iteration.
In view of the simple relation between $\alpha^{t+1}$ and $\alpha^{t+1/2}$
in \eqref{eq:balance-relation-1} and the balancing step \eqref{subeq:gradient_update_ncvx-2},
one has
\[
\alpha^{t+1}=\sqrt{\frac{\left\Vert \bm{x}^{t+1/2}\right\Vert _{2}}{\left\Vert \bm{h}^{t+1/2}\right\Vert _{2}}}\,\alpha^{t+1/2},\qquad\text{and}\qquad\bm{h}^{t+1}=\sqrt{\frac{\left\Vert \bm{x}^{t+1/2}\right\Vert _{2}}{\left\Vert \bm{h}^{t+1/2}\right\Vert _{2}}}\,\boldsymbol{h}^{t+1/2}.
\]
It then follows that $\bm{h}^{t+1}/\overline{\alpha^{t+1}}=\bm{h}^{t+1/2}/\overline{\alpha^{t+1/2}}$
and, therefore,
\begin{align}
 & \frac{\overline{\alpha^{t+1/2}}}{\overline{\alpha^{t}}}\left(\frac{\bm{h}^{t+1}}{\overline{\alpha^{t+1}}}-\bm{h}^{\star}\right)=\frac{\overline{\alpha^{t+1/2}}}{\overline{\alpha^{t}}}\left(\frac{\bm{h}^{t+1/2}}{\overline{\alpha^{t+1/2}}}-\bm{h}^{\star}\right)\nonumber \\
 & \overset{(\text{i})}{=}\frac{\overline{\alpha^{t+1/2}}}{\overline{\alpha^{t}}}\left(\frac{1}{\overline{\alpha^{t+1/2}}}\left(\bm{h}^{t}-\eta\nabla_{\boldsymbol{h}}f\left(\boldsymbol{h}^{t},\boldsymbol{x}^{t}\right)\right)-\bm{h}^{\star}\right)\nonumber \\
 & =\widetilde{\bm{h}}^{t}-\frac{\eta}{\left|\alpha^{t}\right|^{2}}\nabla_{\boldsymbol{h}}f\big(\widetilde{\boldsymbol{h}}^{t},\widetilde{\boldsymbol{x}}^{t}\big)-\frac{\overline{\alpha^{t+1/2}}}{\overline{\alpha^{t}}}\bm{h}^{\star}\nonumber \\
 & \overset{(\text{ii})}{=}\left(1-\eta\lambda-\frac{\overline{\alpha^{t+1/2}}}{\overline{\alpha^{t}}}\right)\bm{h}^{\star}+\left(1-\eta\lambda\right)\left(\widetilde{\bm{h}}^{t}-\bm{h}^{\star}\right)-\frac{\eta}{\left|\alpha^{t}\right|^{2}}\sum_{j=1}^{m}\boldsymbol{b}_{j}\boldsymbol{b}_{j}^{\mathsf{H}}\left(\widetilde{\bm{h}}^{t}\widetilde{\boldsymbol{x}}^{t\mathsf{H}}-\boldsymbol{h}^{\star}\boldsymbol{x}^{\star\mathsf{H}}\right)\boldsymbol{a}_{j}\boldsymbol{a}_{j}^{\mathsf{H}}\widetilde{\boldsymbol{x}}^{t}+\frac{\eta}{\left|\alpha^{t}\right|^{2}}\sum_{j=1}^{m}\xi_{j}\boldsymbol{b}_{j}\boldsymbol{a}_{j}^{\mathsf{H}}\widetilde{\boldsymbol{x}}^{t}\nonumber \\
 & =\left(1-\eta\lambda-\frac{\overline{\alpha^{t+1/2}}}{\overline{\alpha^{t}}}\right)\bm{h}^{\star}+\left(1-\eta\lambda\right)\left(\widetilde{\bm{h}}^{t}-\bm{h}^{\star}\right)-\frac{\eta}{\left|\alpha^{t}\right|^{2}}\sum_{j=1}^{m}\boldsymbol{b}_{j}\boldsymbol{b}_{j}^{\mathsf{H}}\left(\widetilde{\bm{h}}^{t}-\bm{h}^{\star}\right)\widetilde{\boldsymbol{x}}^{t\mathsf{H}}\boldsymbol{a}_{j}\boldsymbol{a}_{j}^{\mathsf{H}}\widetilde{\boldsymbol{x}}^{t}\nonumber \\
 & \quad\quad-\frac{\eta}{\left|\alpha^{t}\right|^{2}}\sum_{j=1}^{m}\boldsymbol{b}_{j}\boldsymbol{b}_{j}^{\mathsf{H}}\bm{h}^{\star}\left(\widetilde{\boldsymbol{x}}^{t}-\bm{x}^{\star}\right)^{\mathsf{H}}\boldsymbol{a}_{j}\boldsymbol{a}_{j}^{\mathsf{H}}\widetilde{\boldsymbol{x}}^{t}+\frac{\eta}{\left|\alpha^{t}\right|^{2}}\sum_{j=1}^{m}\xi_{j}\boldsymbol{b}_{j}\boldsymbol{a}_{j}^{\mathsf{H}}\widetilde{\boldsymbol{x}}^{t}\nonumber \\
 & =\left(1-\eta\lambda-\frac{\overline{\alpha^{t+1/2}}}{\overline{\alpha^{t}}}\right)\bm{h}^{\star}+\left(1-\eta\lambda-\frac{\eta}{\left|\alpha^{t}\right|^{2}}\right)\left(\widetilde{\bm{h}}^{t}-\bm{h}^{\star}\right)-\frac{\eta}{\left|\alpha^{t}\right|^{2}}\underbrace{\sum_{j=1}^{m}\boldsymbol{b}_{j}\boldsymbol{b}_{j}^{\mathsf{H}}\left(\widetilde{\bm{h}}^{t}-\bm{h}^{\star}\right)\left(\left|\boldsymbol{a}_{j}^{\mathsf{H}}\widetilde{\boldsymbol{x}}^{t}\right|^{2}-\left|\boldsymbol{a}_{j}^{\mathsf{H}}\bm{x}^{\star}\right|^{2}\right)}_{=:\bm{\nu}_{1}}\nonumber \\
 & \quad-\frac{\eta}{\left|\alpha^{t}\right|^{2}}\underbrace{\sum_{j=1}^{m}\boldsymbol{b}_{j}\boldsymbol{b}_{j}^{\mathsf{H}}\left(\widetilde{\bm{h}}^{t}-\bm{h}^{\star}\right)\left(\left|\boldsymbol{a}_{j}^{\mathsf{H}}\bm{x}^{\star}\right|^{2}-\left\Vert \bm{x}^{\star}\right\Vert _{2}^{2}\right)}_{=:\bm{\nu}_{2}}-\frac{\eta}{\left|\alpha^{t}\right|^{2}}\underbrace{\sum_{j=1}^{m}\boldsymbol{b}_{j}\boldsymbol{b}_{j}^{\mathsf{H}}\bm{h}^{\star}\left(\widetilde{\boldsymbol{x}}^{t}-\bm{x}^{\star}\right)^{\mathsf{H}}\boldsymbol{a}_{j}\boldsymbol{a}_{j}^{\mathsf{H}}\widetilde{\boldsymbol{x}}^{t}}_{=:\bm{\nu}_{3}}+\frac{\eta}{\left|\alpha^{t}\right|^{2}}\underbrace{\sum_{j=1}^{m}\xi_{j}\boldsymbol{b}_{j}\boldsymbol{a}_{j}^{\mathsf{H}}\widetilde{\boldsymbol{x}}^{t}}_{=:\bm{\nu}_{4}},\label{eq:incohc-decomp}
\end{align}
where (i) comes from the gradient update rule \eqref{subeq:gradient_update_ncvx-2}
and (ii) is due to the expression \eqref{eq:whessian}.
\begin{itemize}
\item Applying a similar argument as for \citet[Equation (219)]{ma2017implicit}
yields
\[
\left|\bm{b}_{l}^{\mathsf{H}}\bm{\nu}_{1}\right|\leq0.1\max_{1\leq j\leq m}\left|\bm{b}_{j}^{\mathsf{H}}\big(\widetilde{\bm{h}}^{t}-\bm{h}^{\star}\big)\right|.
\]
\item The $\bm{\nu}_{2}$ can be controlled as follows
\begin{align*}
\left|\bm{b}_{l}^{\mathsf{H}}\bm{\nu}_{2}\right| & \leq0.2\max_{1\leq j\leq m}\left|\bm{b}_{j}^{\mathsf{H}}\big(\widetilde{\bm{h}}^{t}-\bm{h}^{\star}\big)\right|+C\log m\max_{0\leq l\leq m-\tau,1\leq j\leq\tau}\left|\left(\bm{b}_{l+j}-\bm{b}_{l+1}\right)^{\mathsf{H}}\big(\widetilde{\bm{h}}^{t}-\bm{h}^{\star}\big)\right|\\
 & \leq0.2\max_{1\leq j\leq m}\left|\bm{b}_{j}^{\mathsf{H}}\big(\widetilde{\bm{h}}^{t}-\bm{h}^{\star}\big)\right|+\left(C\log m\right)C_{11}\frac{\sigma}{\log^{3}m}.
\end{align*}
The first inequality can be derived via a similar argument as in \citet[Equation (221)]{ma2017implicit}
(the detailed proof is omitted here for the sake of simplicity), whereas
the second inequality results from the following claim.
\end{itemize}
\begin{claim}\label{claim:new}For some constant $C_{11}\gg C_{7}$,
we have
\[
\max_{0\leq l\leq m-\tau,1\leq j\leq\tau}\left|\left(\bm{b}_{l+j}-\bm{b}_{l+1}\right)^{\mathsf{H}}\big(\widetilde{\bm{h}}^{t}-\bm{h}^{\star}\big)\right|\leq C_{11}\frac{\sigma}{\log^{3}m}.
\]
\end{claim}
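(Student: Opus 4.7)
The plan is to mimic the proof of Claim \ref{claim:224} but applied to the centered iterate $\widetilde{\bm{h}}^{t}-\bm{h}^{\star}$, exploiting the key feature of Algorithm \ref{alg:gd-bd-ncvx-2} that the base case $t=0$ is exact, i.e.\ $\widetilde{\bm{h}}^{0}-\bm{h}^{\star}=\bm{0}$, so the quantity of interest starts at zero and will only accumulate errors of the right order. I will prove the claim by induction on $t$, with the base case trivial.

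For the induction step, I would take the decomposition \eqref{eq:incohc-decomp} and inner-product both sides with $\bm{b}_{l+j}-\bm{b}_{l+1}$. This yields a one-step recursion
\[
\tfrac{\overline{\alpha^{t+1/2}}}{\overline{\alpha^{t}}}(\bm{b}_{l+j}-\bm{b}_{l+1})^{\mathsf{H}}\bigl(\widetilde{\bm{h}}^{t+1}-\bm{h}^{\star}\bigr)
= \underbrace{\Bigl(1-\eta\lambda-\tfrac{\eta}{|\alpha^{t}|^{2}}\Bigr)(\bm{b}_{l+j}-\bm{b}_{l+1})^{\mathsf{H}}(\widetilde{\bm{h}}^{t}-\bm{h}^{\star})}_{\text{contracting main term}} + \text{(five residual terms)}.
\]
Corollary \ref{corollary:alpha} gives $|\alpha^{t}|\asymp 1$, which together with $\lambda\ll 1$ makes the leading coefficient at most $1-c\eta$. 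The residual terms I need to control are (i) the $\bm{h}^{\star}$ drift $(1-\eta\lambda-\overline{\alpha^{t+1/2}}/\overline{\alpha^{t}})(\bm{b}_{l+j}-\bm{b}_{l+1})^{\mathsf{H}}\bm{h}^{\star}$, (ii) the three smoothing terms $\bm{\nu}_{1},\bm{\nu}_{2},\bm{\nu}_{3}$, and (iii) the noise term $\bm{\nu}_{4}$.

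For (i), \eqref{eq:alpharatio-round1-1} gives $|1-\eta\lambda-\overline{\alpha^{t+1/2}}/\overline{\alpha^{t}}|\lesssim\eta\lambda+\eta\sigma\sqrt{K\log m}\lesssim\eta\lambda$, and the crude incoherence bound $|(\bm{b}_{l+j}-\bm{b}_{l+1})^{\mathsf{H}}\bm{h}^{\star}|\le 2\mu/\sqrt{m}$ from \eqref{eq:incoherence-condition} yields a contribution of order $\eta\lambda\mu/\sqrt{m}\lesssim\eta\sigma/\log^{3}m$ under the sample complexity $m\gg\mu^{2}K\log^{9}m$. For (ii), I would recycle the calculations from \cite[Appendix~C.4.3]{ma2017implicit} (which already handle expressions of the form $(\bm{b}_{l+j}-\bm{b}_{l+1})^{\mathsf{H}}\bm{\nu}_{i}$): after invoking the induction hypothesis for $\max_{l,j}|(\bm{b}_{l+j}-\bm{b}_{l+1})^{\mathsf{H}}(\widetilde{\bm{h}}^{t}-\bm{h}^{\star})|$ together with the already-established incoherence bounds \eqref{eq:induction3-1}, \eqref{eq:induction4-1}, and the distance bound \eqref{eq:dist-bound-1}, each of these three contributions can be made $\lesssim\eta\sigma/\log^{3}m$.

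The main obstacle, as in Appendix \ref{subsec:Proof-of-Claim224}, is (iii): controlling the stochastic term $(\bm{b}_{l+j}-\bm{b}_{l+1})^{\mathsf{H}}\bm{\nu}_{4}=\sum_{s=1}^{m}\xi_{s}(\bm{b}_{l+j}-\bm{b}_{l+1})^{\mathsf{H}}\bm{b}_{s}\bm{a}_{s}^{\mathsf{H}}\widetilde{\bm{x}}^{t}$ at the sharp scale $\sigma/\log^{3}m$. I would split it as
\[
\Bigl|\sum_{s}\xi_{s}(\bm{b}_{l+j}-\bm{b}_{l+1})^{\mathsf{H}}\bm{b}_{s}\bm{a}_{s}^{\mathsf{H}}(\widetilde{\bm{x}}^{t}-\bm{x}^{\star})\Bigr|+\Bigl|\sum_{s}\xi_{s}(\bm{b}_{l+j}-\bm{b}_{l+1})^{\mathsf{H}}\bm{b}_{s}\bm{a}_{s}^{\mathsf{H}}\bm{x}^{\star}\Bigr|,
\]
bounding the first piece deterministically via Lemmas \ref{lemma:dftbound} and \cite[Lemma 50]{ma2017implicit} (which yields $\sum_{s}|(\bm{b}_{l+j}-\bm{b}_{l+1})^{\mathsf{H}}\bm{b}_{s}|\lesssim1/\log^{2}m$) combined with the incoherence hypothesis \eqref{eq:induction3-1} and sub-Gaussian maximum $\max_{s}|\xi_{s}|\lesssim\sigma\sqrt{\log m}$. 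The second piece is truly random and I would handle it by Bernstein's inequality applied to the sub-exponential summands $\xi_{s}(\bm{b}_{l+j}-\bm{b}_{l+1})^{\mathsf{H}}\bm{b}_{s}\bm{a}_{s}^{\mathsf{H}}\bm{x}^{\star}$, using the crucial variance estimate $\sum_{s}|(\bm{b}_{l+j}-\bm{b}_{l+1})^{\mathsf{H}}\bm{b}_{s}|^{2}\lesssim K/(m\log^{2}m)$ (via $\max_{s}|(\bm{b}_{l+j}-\bm{b}_{l+1})^{\mathsf{H}}\bm{b}_{s}|\lesssim K/m$ together with the $\ell_{1}$ bound), followed by a union bound over $O(m\tau)=O(m\log^{4}m)$ index pairs. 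Putting all five residual bounds together and unrolling the contraction gives
\[
\max_{l,j}\Bigl|(\bm{b}_{l+j}-\bm{b}_{l+1})^{\mathsf{H}}(\widetilde{\bm{h}}^{t+1}-\bm{h}^{\star})\Bigr|\le\Bigl(1-\tfrac{c\eta}{2}\Bigr)\max_{l,j}\Bigl|(\bm{b}_{l+j}-\bm{b}_{l+1})^{\mathsf{H}}(\widetilde{\bm{h}}^{t}-\bm{h}^{\star})\Bigr|+\tilde{C}\eta\tfrac{\sigma}{\log^{3}m},
\]
from which the claim with a sufficiently large constant $C_{11}\gg C_{7}$ follows by induction starting from $0$.
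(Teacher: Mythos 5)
Your proposal follows the same strategy as the paper's proof: an induction on $t$, a one-step expansion of $\widetilde{\bm{h}}^{t+1}-\bm{h}^{\star}$ through the gradient update, bounding the $\bm{h}^{\star}$-drift, the deterministic ``smoothing'' residuals, and the stochastic piece separately, and then contracting from the exact base case $\widetilde{\bm{h}}^{0}=\bm{h}^{\star}$. Your treatment of the stochastic piece $\bm{\nu}_{4}$ --- the $\omega_{1}+\omega_{2}$ split, the $\ell_{1}$ and $\ell_{2}$ bounds on $(\bm{b}_{l+j}-\bm{b}_{l+1})^{\mathsf H}\bm{b}_{s}$ via \cite[Lemmas 48--50]{ma2017implicit}, and Bernstein plus a union bound over $O(m\tau)$ pairs for $\omega_{2}$ --- matches the paper's treatment of $\beta_{3}$ in Appendix~\ref{subsec:Proof-of-Claim224}, which Appendix~\ref{subsec:Proof-of-Claimnew} explicitly reuses.

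Two concrete gaps. First, you list \eqref{eq:induction3-1}, \eqref{eq:induction4-1} and \eqref{eq:dist-bound-1} as the ingredients for the smoothing terms, but omit \eqref{eq:incohc}, the simultaneously maintained hypothesis $\max_{j}\big|\bm{b}_{j}^{\mathsf H}(\widetilde{\bm{h}}^{t}-\bm{h}^{\star})\big|\le C_{9}\sigma$. Every deterministic residual in \eqref{eq:incohc-decomp} carries the factor $\bm{b}_{s}^{\mathsf H}(\widetilde{\bm{h}}^{t}-\bm{h}^{\star})$, and this factor must be $O(\sigma)$; the uncentered bound \eqref{eq:induction4-1} only gives $\lesssim \mu\log m/\sqrt{m}+\sigma$, which is not $O(\sigma)$ when $\sigma$ is small, so \eqref{eq:incohc} is indispensable here (and is why Claim~\ref{claim:new} is tied into the same outer induction as \eqref{eq:incohc}). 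Second, your assertion that each of the three smoothing terms is $\lesssim\eta\sigma/\log^{3}m$ glosses over the hardest one, $\bm{\nu}_{2}=\sum_{s}\bm{b}_{s}\bm{b}_{s}^{\mathsf H}(\widetilde{\bm{h}}^{t}-\bm{h}^{\star})\big(|\bm{a}_{s}^{\mathsf H}\bm{x}^{\star}|^{2}-1\big)$: a H\"older bound using the stated $\ell_{1}$ estimate $\sum_{s}|(\bm{b}_{l+j}-\bm{b}_{l+1})^{\mathsf H}\bm{b}_{s}|\lesssim 1/\log^{2}m$ together with $\max_{s}\big|\,|\bm{a}_{s}^{\mathsf H}\bm{x}^{\star}|^{2}-1\big|\lesssim\log m$ and $C_{9}\sigma$ gives only $\eta\sigma/\log m$, which is $\log^{2}m$ too large to close the induction. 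Making this term fit the budget requires either the tighter $\ell_{1}$ bound $\lesssim K\tau\log m/m\lesssim 1/\log^{4}m$ available under the sample complexity assumption, or a blockwise grouping argument exploiting concentration of $\sum_{k}(|\bm{a}_{k}^{\mathsf H}\bm{x}^{\star}|^{2}-1)$ over windows of length $\tau$, in the spirit of \cite[Appendix~C.4.3]{ma2017implicit}; your sketch treats this as automatic, and this is the one spot where a careful reader would want the argument spelled out.
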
\begin{proof}See Appendix \ref{subsec:Proof-of-Claimnew}.\end{proof}
\begin{itemize}
\item When it comes to the term $\bm{\nu}_{3}$, we observe that
\begin{align*}
\left|\bm{b}_{l}^{\mathsf{H}}\bm{\nu}_{3}\right| & \leq\left|\sum_{j=1}^{m}\bm{b}_{l}^{\mathsf{H}}\boldsymbol{b}_{j}\boldsymbol{b}_{j}^{\mathsf{H}}\bm{h}^{\star}\left(\widetilde{\bm{x}}^{t}-\bm{x}^{\star}\right)^{\mathsf{H}}\boldsymbol{a}_{j}\boldsymbol{a}_{j}^{\mathsf{H}}\left(\widetilde{\bm{x}}^{t}-\bm{x}^{\star}\right)\right|+\left|\sum_{j=1}^{m}\bm{b}_{l}^{\mathsf{H}}\boldsymbol{b}_{j}\boldsymbol{b}_{j}^{\mathsf{H}}\bm{h}^{\star}\left(\widetilde{\bm{x}}^{t}-\bm{x}^{\star}\right)^{\mathsf{H}}\boldsymbol{a}_{j}\boldsymbol{a}_{j}^{\mathsf{H}}\bm{x}^{\star}\right|\\
 & \leq\sum_{j=1}^{m}\left|\bm{b}_{l}^{\mathsf{H}}\boldsymbol{b}_{j}\right|\max_{1\leq j\leq m}\left|\boldsymbol{b}_{j}^{\mathsf{H}}\bm{h}^{\star}\right|\max_{1\leq j\leq m}\left|\boldsymbol{a}_{j}^{\mathsf{H}}\left(\widetilde{\bm{x}}^{t}-\bm{x}^{\star}\right)\right|^{2}+\sum_{j=1}^{m}\left|\bm{b}_{l}^{\mathsf{H}}\boldsymbol{b}_{j}\right|\left|\boldsymbol{b}_{j}^{\mathsf{H}}\bm{h}^{\star}\right|\max_{1\leq j\leq m}\left|\boldsymbol{a}_{j}^{\mathsf{H}}\left(\widetilde{\bm{x}}^{t}-\bm{x}^{\star}\right)\right|\max_{1\leq j\leq m}\left|\boldsymbol{a}_{j}^{\mathsf{H}}\bm{x}^{\star}\right|\\
 & \leq(4\log m)\frac{\mu}{\sqrt{m}}\left(\max_{1\leq j\leq m}\left|\boldsymbol{a}_{j}^{\mathsf{H}}\left(\widetilde{\bm{x}}^{t}-\bm{x}^{\star}\right)\right|\right)^{2}+(4\log m)\frac{\mu}{\sqrt{m}}\max_{1\leq j\leq m}\left|\boldsymbol{a}_{j}^{\mathsf{H}}\left(\widetilde{\bm{x}}^{t}-\bm{x}^{\star}\right)\right|\max_{1\leq j\leq m}\left|\boldsymbol{a}_{j}^{\mathsf{H}}\bm{x}^{\star}\right|\\
 & \lesssim C_{7}\frac{\mu}{\sqrt{m}}\log^{2}m\left(\lambda+\sigma\sqrt{K\log m}\right).
\end{align*}
Here, the penultimate inequality follows from the incoherence condition
\eqref{eq:induction3-1} and Lemma \ref{lemma:dftbound}, whereas
the last inequality follows from the induction hypothesis \eqref{eq:induction3-1}.
\item Finally, we turn to the term $\bm{\nu}_{4}$. Clearly, it is of the
same form as $\bm{\nu}_{4}$ in \eqref{eq:lem-incohb-decomp}; therefore,
via the same line of analysis, one can deduce the following bound
(similar to \eqref{eq:lem-incohb-nu4})
\begin{align*}
\left|\bm{b}_{l}^{\mathsf{H}}\bm{\nu}_{4}\right| & \lesssim(\sigma\log^{1.5}m)\max_{1\leq j\leq m}\left|\boldsymbol{a}_{j}^{\mathsf{H}}(\widetilde{\bm{x}}^{t}-\bm{x}^{\star})\right|+\sigma\sqrt{\frac{K}{m}\log m}\\
 & \lesssim\sigma\log^{1.5}m\left(C_{7}\sqrt{\log m}\left(\lambda+\sigma\sqrt{K\log m}\right)\right)+\sigma\sqrt{\frac{K}{m}\log m},
\end{align*}
where the last inequality invokes \eqref{eq:induction3-1}. 
\end{itemize}
With all the preceding results in place, we can combine them to demonstrate
that
\begin{align*}
 & \left|\frac{\overline{\alpha^{t+1/2}}}{\overline{\alpha^{t}}}\right|\max_{1\leq j\leq m}\left|\boldsymbol{b}_{j}^{\mathsf{H}}\left(\widetilde{\bm{h}}^{t+1}-\bm{h}^{\star}\right)\right|\\
\leq & \left(1-\eta\lambda-\frac{\overline{\alpha^{t+1/2}}}{\overline{\alpha^{t}}}\right)\max_{1\leq j\leq m}\left|\boldsymbol{b}_{j}^{\mathsf{H}}\bm{h}^{\star}\right|+\left(1-\eta\lambda-\frac{\eta}{\left|\alpha^{t}\right|^{2}}\right)\max_{1\leq j\leq m}\left|\boldsymbol{b}_{j}^{\mathsf{H}}\left(\widetilde{\bm{h}}^{t}-\bm{h}^{\star}\right)\right|\\
\quad & \quad+\frac{\eta}{\left|\alpha^{t}\right|^{2}}\left(0.3\max_{1\leq j\leq m}\left|\boldsymbol{b}_{j}^{\mathsf{H}}\left(\widetilde{\bm{h}}^{t}-\bm{h}^{\star}\right)\right|+\log m\times C_{11}\frac{\sigma}{\log^{3}m}\right)\\
\quad & \quad+\frac{\eta}{\left|\alpha^{t}\right|^{2}}CC_{7}\frac{\mu}{\sqrt{m}}\log^{2}m\left(\lambda+\sigma\sqrt{K\log m}\right)+\frac{\eta C}{\left|\alpha^{t}\right|^{2}}\left(\sigma\log^{1.5}m\left(C_{7}\sqrt{\log m}\left(\lambda+\sigma\sqrt{K\log m}\right)\right)+\sigma\sqrt{\frac{K}{m}\log m}\right)\\
\overset{(\text{i})}{\leq} & \left(1-\frac{7\eta}{40}\right)\max_{1\leq j\leq m}\left|\boldsymbol{b}_{j}^{\mathsf{H}}\left(\widetilde{\bm{h}}^{t}-\bm{h}^{\star}\right)\right|+\left(\eta\lambda+\left|1-\frac{\alpha^{t+1/2}}{\alpha^{t}}\right|\right)\frac{\mu}{\sqrt{m}}+\frac{4C_{11}\eta\sigma}{\log^{2}m}+CC_{7}\frac{\mu}{\sqrt{m}}\log^{2}m\left(\lambda+\sigma\sqrt{K\log m}\right)\\
\quad & \quad+4\eta C\left[\sigma\log^{1.5}m\left(C_{7}\sqrt{\log m}\left(\lambda+\sigma\sqrt{K\log m}\right)\right)+\sigma\sqrt{\frac{K}{m}\log m}\right]\\
\leq & \left(1-\frac{7\eta}{40}\right)C_{9}\sigma+c\eta\sigma,
\end{align*}
for some constant $C>0$ and sufficiently small constant $c>0$. Here
(i) uses triangle inequality and \eqref{eq:alpha-bound} and the proviso
that $m\gg\mu^{2}K\log^{5}m$ and $\sigma\sqrt{K\log^{4}m}\ll1$. 

Finally, making use of \eqref{eq:alpharatio-round1-1} we obtain
\begin{align*}
\max_{1\leq j\leq m}\left|\boldsymbol{b}_{j}^{\mathsf{H}}\left(\widetilde{\bm{h}}^{t+1}-\bm{h}^{\star}\right)\right| & \leq\frac{\left(1-\frac{7\eta}{40}\right)C_{9}\sigma+c\eta\sigma}{\left|\frac{\overline{\alpha^{t+1/2}}}{\overline{\alpha^{t}}}\right|}\leq\frac{\left(1-\frac{7\eta}{40}\right)C_{9}\sigma+c\eta\sigma}{1-\left|\overline{\frac{\alpha^{t+1/2}}{\alpha^{t}}}-1\right|}\\
 & \leq\frac{\left(1-\frac{7\eta}{40}\right)C_{9}\sigma+c\eta\sigma}{1-\eta\frac{CC_{5}}{c_{\rho}}\left(\lambda+\sigma\sqrt{K\log m}\right)}\\
 & \leq C_{9}\sigma,
\end{align*}
where $C>0$ is some constant and the last inequality holds since
$c$ is sufficiently small. 

\subsubsection{Proof of the claim \eqref{eq:smallgradient}\label{subsec:Proof-of-the-small-gradient}}

To prove \eqref{eq:smallgradient}, we need to show that the objective
value decreases as the algorithm progresses.

\begin{claim}\label{claim:decreasingobj}If the iterates satisfy
the induction hypotheses \eqref{eq:induction1-1}-\eqref{eq:induction4-1}
in the $t$th iteration, then with probability exceeding $1-O\left(m^{-100}+e^{-CK}\log m\right),$
\begin{equation}
f\left(\boldsymbol{h}^{t+1},\boldsymbol{x}^{t+1}\right)\leq f\left(\boldsymbol{h}^{t},\boldsymbol{x}^{t}\right)-\frac{\eta}{2}\left\Vert \nabla f\left(\boldsymbol{h}^{t},\boldsymbol{x}^{t}\right)\right\Vert _{2}^{2}.\label{eq:objvaldecrease}
\end{equation}
\end{claim}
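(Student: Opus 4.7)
The proof decomposes naturally according to the two stages of one iteration in Algorithm \ref{alg:gd-bd-ncvx-2}: the Wirtinger gradient step $\bm{z}^{t}\to\bm{z}^{t+1/2}$ and the balancing step $\bm{z}^{t+1/2}\to\bm{z}^{t+1}$. My goal is to establish $f(\bm{z}^{t+1/2})\leq f(\bm{z}^{t})-\tfrac{\eta}{2}\|\nabla f(\bm{z}^{t})\|_{2}^{2}$ in the first stage and $f(\bm{z}^{t+1})\leq f(\bm{z}^{t+1/2})$ in the second, and then combine.

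\textbf{Stage 1 (gradient step).} Parameterize the gradient path by $\bm{z}(s):=\bm{z}^{t}-s\eta\nabla f(\bm{z}^{t})$ for $s\in[0,1]$, so that $\bm{z}(0)=\bm{z}^{t}$ and $\bm{z}(1)=\bm{z}^{t+1/2}$. Writing out the Wirtinger-calculus Taylor expansion of $f$ about $\bm{z}^{t}$ along this path, and using $\nabla f$ as the conjugate Wirtinger gradient, one obtains
\begin{equation*}
f(\bm{z}^{t+1/2})-f(\bm{z}^{t})\;\le\;-2\eta\,\|\nabla f(\bm{z}^{t})\|_{2}^{2}\;+\;\eta^{2}\sup_{s\in[0,1]}\left\Vert \nabla^{2}f(\bm{z}(s))\right\Vert \cdot\|\nabla f(\bm{z}^{t})\|_{2}^{2}.
\end{equation*}
I then invoke Lemma~\ref{lemma:geometry} to obtain the smoothness bound $\|\nabla^{2}f(\bm{z}(s))\|\leq4$ for every $s\in[0,1]$. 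Picking $\eta$ a sufficiently small constant (say $\eta\le 3/8$) yields the advertised sufficient-decrease $f(\bm{z}^{t+1/2})\leq f(\bm{z}^{t})-\tfrac{\eta}{2}\|\nabla f(\bm{z}^{t})\|_{2}^{2}$.

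\textbf{Stage 2 (balancing step).} A direct calculation from the rescaling formula shows that
\begin{equation*}
\bm{h}^{t+1}(\bm{x}^{t+1})^{\mathsf{H}}=\bm{h}^{t+1/2}(\bm{x}^{t+1/2})^{\mathsf{H}},
\end{equation*}
so the data-fidelity portion of $f$ is identical at $\bm{z}^{t+1}$ and $\bm{z}^{t+1/2}$. Furthermore,
\begin{equation*}
\|\bm{h}^{t+1}\|_{2}^{2}=\|\bm{x}^{t+1}\|_{2}^{2}=\|\bm{h}^{t+1/2}\|_{2}\|\bm{x}^{t+1/2}\|_{2},
\end{equation*}
and the AM--GM inequality gives $\|\bm{h}^{t+1}\|_{2}^{2}+\|\bm{x}^{t+1}\|_{2}^{2}=2\|\bm{h}^{t+1/2}\|_{2}\|\bm{x}^{t+1/2}\|_{2}\leq\|\bm{h}^{t+1/2}\|_{2}^{2}+\|\bm{x}^{t+1/2}\|_{2}^{2}$. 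Hence the ridge regularizer can only decrease, giving $f(\bm{z}^{t+1})\leq f(\bm{z}^{t+1/2})$. Chaining the two stages proves \eqref{eq:objvaldecrease}.

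\textbf{Main obstacle.} The delicate step is verifying that \emph{every} point $\bm{z}(s)$ along the gradient segment (not merely the endpoints) satisfies the hypotheses of Lemma~\ref{lemma:geometry}---namely the $\delta$-closeness to $\bm{z}^{\star}$ as well as the two incoherence bounds $\max_{j}|\bm{a}_{j}^{\mathsf{H}}(\bm{x}(s)-\bm{x}^{\star})|\leq 2C_{3}/\log^{3/2}m$ and $\max_{j}|\bm{b}_{j}^{\mathsf{H}}\bm{h}(s)|\leq 2C_{4}(\mu\log^{2}m/\sqrt{m}+\sigma)$. For the endpoint $\bm{z}^{t}$ these follow directly from the induction hypotheses \eqref{eq:induction1-1}--\eqref{eq:induction4-1} (after accounting for the alignment parameter $\alpha^{t}$ via Lemma~\ref{lem:consequence-cvx}). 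For general $s$, I would control the perturbation $s\eta\nabla f(\bm{z}^{t})$ by bounding $\|\nabla f(\bm{z}^{t})\|_{2}\lesssim\lambda+\sigma\sqrt{K\log m}$ (via \eqref{eq:gradientstar} combined with the Hessian bound and $\mathsf{dist}(\bm{z}^{t},\bm{z}^{\star})\lesssim\lambda+\sigma\sqrt{K\log m}$), and by bounding its projections $|\bm{a}_{j}^{\mathsf{H}}\nabla_{\bm{x}}f|$ and $|\bm{b}_{j}^{\mathsf{H}}\nabla_{\bm{h}}f|$ using $\|\bm{a}_{j}\|_{2}\lesssim\sqrt{K}$ and $\|\bm{b}_{j}\|_{2}=\sqrt{K/m}$. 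Since $\eta$ is a small constant and the sample complexity forces $\sqrt{K/m}\ll 1$, these perturbations do not spoil the conditions by more than a constant factor, which can be absorbed into the constants $2C_{3}$ and $2C_{4}$ in the hypotheses of Lemma~\ref{lemma:geometry}.
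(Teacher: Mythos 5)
The high-level structure of your proposal matches the paper's proof: sufficient decrease of $f$ across the Wirtinger gradient step via a Taylor expansion plus a Hessian bound from Lemma~\ref{lemma:geometry}, then non-increase across the balancing step using $\bm{h}^{t+1}(\bm{x}^{t+1})^{\mathsf{H}}=\bm{h}^{t+1/2}(\bm{x}^{t+1/2})^{\mathsf{H}}$ and AM--GM on the ridge terms. Your Stage~2 is exactly what the paper does and is correct. There are, however, two genuine gaps in Stage~1.

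First, you parameterize the Taylor path as $\bm{z}(s)=\bm{z}^t-s\eta\nabla f(\bm{z}^t)$ and want to invoke Lemma~\ref{lemma:geometry} at these points, asserting that the conditions ``follow directly from the induction hypotheses (after accounting for $\alpha^t$).'' But Lemma~\ref{lemma:geometry} needs $\|\bm{h}(s)-\bm{h}^\star\|_2\leq\delta$, whereas the hypotheses and Lemma~\ref{lem:consequence-cvx} only control $\mathsf{dist}(\bm{z}^t,\bm{z}^\star)=\|\widetilde{\bm{z}}^t-\bm{z}^\star\|_2$ with $\widetilde{\bm{h}}^t=\bm{h}^t/\overline{\alpha^t}$. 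Only $\big||\alpha^t|-1\big|$ is controlled; the phase of $\alpha^t$ can drift over the $m^{20}$ iterations, so $\|\bm{h}^t-\bm{h}^\star\|_2$ need not be small. The paper's fix is to first apply the unit-modulus rotation $\beta^t=\alpha^t/|\alpha^t|$, observe that $f$ and $\|\nabla f\|_2$ are invariant under it (eqs.~\eqref{eq:gradient-scale-relation}--\eqref{eq:gradient-scale}), and Taylor-expand about $\overline{\bm{z}}^t=(\bm{h}^t/\overline{\beta^t},\beta^t\bm{x}^t)$, which does satisfy $\|\overline{\bm{h}}^t-\bm{h}^\star\|_2\leq\delta$ (eq.~\eqref{eq:barz-dist}). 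Without performing this rotation (or an explicit argument that $\|\nabla^2 f\|$ is phase-rotation-invariant and transferring the bound from the rotated point), this step does not go through.

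Second, and more fundamentally, your plan to bound $\max_j|\bm{a}_j^{\mathsf{H}}\nabla_{\bm{x}}f|$ by Cauchy--Schwarz $\|\bm{a}_j\|_2\|\nabla_{\bm{x}}f\|_2\lesssim\sqrt{K}\,(\lambda+\sigma\sqrt{K\log m})$ does not meet the $2C_3/\log^{3/2}m$ threshold in Lemma~\ref{lemma:geometry}. Under the noise assumption $\sigma\sqrt{K\log m}\leq c/\log^2 m$ this Cauchy--Schwarz bound is $\lesssim\sqrt{K}/\log^2 m$, which exceeds the threshold $1/\log^{3/2}m$ by a factor $\sqrt{K/\log m}$, arbitrarily large. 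The paper instead bounds $\eta\max_j|\bm{a}_j^{\mathsf{H}}\nabla_{\bm{x}}f(\overline{\bm{z}}^t)|$ via the leave-one-out decomposition (eq.~\eqref{eq:lem-obj-incoha}): splitting off $\widetilde{\bm{z}}^{t,(l)}$, using the LOO-proximity bound \eqref{eq:induction-2-2} for the near term, and exploiting independence of $\widetilde{\bm{z}}^{t,(l)}$ from $\bm{a}_l$ for the far term. The $\sqrt{K}$ loss in your estimate is exactly what the leave-one-out machinery is designed to eliminate, so this step cannot be replaced by a naive operator-norm bound.
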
\begin{proof}See Appendix \ref{subsec:Proof-of-decreasingobj}.\end{proof}

When summed over $t$, the inequality in Lemma \ref{claim:decreasingobj}
leads to the following telescopic sum
\[
f\left(\bm{z}^{t_{0}}\right)\leq f\left(\boldsymbol{z}^{0}\right)-\frac{\eta}{2}\sum_{t=0}^{t_{0}-1}\left\Vert \nabla f\left(\boldsymbol{z}^{t}\right)\right\Vert _{2}^{2}.
\]
This further gives
\begin{equation}
\min_{0\leq t<t_{0}}\left\Vert \nabla f\left(\boldsymbol{z}^{t}\right)\right\Vert _{2}\leq\left\{ \frac{1}{t_{0}}\sum_{t=0}^{t_{0}-1}\left\Vert \nabla f\left(\boldsymbol{z}^{t}\right)\right\Vert _{2}^{2}\right\} ^{1/2}\leq\left\{ \frac{2}{\eta t_{0}}\left[f\left(\boldsymbol{z}^{\star}\right)-f\left(\boldsymbol{z}^{t_{0}}\right)\right]\right\} ^{1/2},\label{eq:sg1-1}
\end{equation}
where we have assumed that $\boldsymbol{z}^{0}=\boldsymbol{z}^{\star}$. 

We then proceed to control $f\left(\boldsymbol{z}^{\star}\right)-f\left(\boldsymbol{z}^{t_{0}}\right)$.
From the mean value theorem (cf.~\citet[Appendix D.3.1]{ma2017implicit}),
we can write
\begin{align*}
f\left(\boldsymbol{z}^{t_{0}}\right) & =f\left(\frac{\bm{h}^{t_{0}}}{\overline{\alpha^{t_{0}}}/\left|\alpha^{t_{0}}\right|},\frac{\alpha^{t_{0}}}{\left|\alpha^{t_{0}}\right|}\bm{x}^{t_{0}}\right)\\
 & =f\left(\boldsymbol{z}^{\star}\right)+\left[\begin{array}{c}
\nabla f\left(\boldsymbol{z}^{\star}\right)\\
\overline{\nabla f\left(\boldsymbol{z}^{\star}\right)}
\end{array}\right]^{\mathsf{H}}\left[\begin{array}{c}
\overline{\bm{z}}^{t_{0}}-\boldsymbol{z}^{\star}\\
\overline{\overline{\bm{z}}^{t_{0}}-\boldsymbol{z}^{\star}}
\end{array}\right]+\frac{1}{2}\left[\begin{array}{c}
\overline{\bm{z}}^{t_{0}}-\boldsymbol{z}^{\star}\\
\overline{\overline{\bm{z}}^{t_{0}}-\boldsymbol{z}^{\star}}
\end{array}\right]^{\mathsf{H}}\nabla^{2}f\left(\widehat{\bm{z}}\right)\left[\begin{array}{c}
\overline{\bm{z}}^{t_{0}}-\boldsymbol{z}^{\star}\\
\overline{\overline{\bm{z}}^{t_{0}}-\boldsymbol{z}^{\star}}
\end{array}\right]
\end{align*}
for some $\widehat{\bm{z}}$ lying between $\left(\frac{\bm{h}^{t_{0}}}{\overline{\alpha^{t_{0}}}/\left|\alpha^{t_{0}}\right|},\frac{\alpha^{t_{0}}}{\left|\alpha^{t_{0}}\right|}\bm{x}^{t_{0}}\right)$
and $\boldsymbol{z}^{\star}$. Then one has
\[
f\left(\boldsymbol{z}^{\star}\right)-f\left(\bm{z}^{t_{0}}\right)\leq2\left\Vert \nabla f\left(\boldsymbol{z}^{\star}\right)\right\Vert _{2}\left\Vert \overline{\bm{z}}^{t_{0}}-\boldsymbol{z}^{\star}\right\Vert _{2}+4\left\Vert \overline{\bm{z}}^{t_{0}}-\boldsymbol{z}^{\star}\right\Vert _{2}^{2}.
\]
The last inequality in the above formula invokes Lemma \ref{lemma:geometry},
whose assumptions are verified in the proof of Claim \ref{claim:decreasingobj}
(see Appendix \eqref{subsec:Proof-of-decreasingobj}). Further, the
relations \eqref{eq:lem-obj-gradient-true} and \eqref{eq:barz-dist}
in the proof of Claim \ref{claim:decreasingobj} lead to
\begin{equation}
f\left(\boldsymbol{z}^{\star}\right)-f\left(\boldsymbol{z}^{t_{0}}\right)\lesssim\left(\lambda+\sigma\sqrt{K\log m}\right)^{2}.\label{eq:sg2-1}
\end{equation}
It then follows from \eqref{eq:sg1-1} and \eqref{eq:sg2-1} that
\[
\min_{0\leq t<t_{0}}\left\Vert \nabla f\left(\boldsymbol{z}^{t}\right)\right\Vert _{2}\lesssim\sqrt{\frac{2}{\eta t_{0}}}\left(\lambda+\sigma\sqrt{K\log m}\right)\leq\frac{\lambda}{m^{10}}.
\]

\subsubsection{Proof of Claim \ref{claim:new}\label{subsec:Proof-of-Claimnew}}

We aim to prove by induction that there exists some constant $C_{11}>0$
such that 
\begin{equation}
\max_{0\leq l\leq m-\tau,\,1\leq j\leq\tau}\left|\left(\bm{b}_{l+j}-\bm{b}_{l+1}\right)^{\mathsf{H}}\left(\widetilde{\bm{h}}^{t}-\bm{h}^{\star}\right)\right|\leq C_{11}\frac{\sigma}{\log^{3}m}.\label{eq:induction-5-1}
\end{equation}
Apparently, \eqref{eq:induction-5-1} holds when $t=0$ given that
$\bm{h}^{0}=\bm{h}^{\star}$. In what follows, we shall assume that
\eqref{eq:induction-5-1} holds true at the $t$th iteration, and
examine this condition for the $(t+1)$th iteration.

Similar to the derivation of \eqref{eq:incohc-decomp}, we have the
following decomposition
\begin{align*}
\frac{\overline{\alpha^{t+1/2}}}{\overline{\alpha^{t}}}\left(\frac{1}{\overline{\alpha^{t+1}}}\boldsymbol{h}^{t+1}-\bm{h}^{\star}\right) & =\frac{\overline{\alpha^{t+1/2}}}{\overline{\alpha^{t}}}\left(\frac{1}{\overline{\alpha^{t+1/2}}}\boldsymbol{h}^{t+1/2}-\bm{h}^{\star}\right)\\
 & =\left(1-\eta\lambda-\frac{\overline{\alpha^{t+1/2}}}{\overline{\alpha^{t}}}\right)\bm{h}^{\star}+\left(1-\eta\lambda-\eta\left\Vert \bm{x}^{t}\right\Vert _{2}^{2}\right)\left(\widetilde{\bm{h}}^{t}-\bm{h}^{\star}\right)\\
 & \quad-\frac{\eta}{\left|\alpha^{t}\right|^{2}}\underbrace{\sum_{j=1}^{m}\boldsymbol{b}_{j}\boldsymbol{b}_{j}^{\mathsf{H}}\left(\widetilde{\boldsymbol{h}^{t}}-\bm{h}^{\star}\right)\widetilde{\bm{x}}^{t\mathsf{H}}\left(\bm{a}_{j}\bm{a}_{j}^{\mathsf{H}}-\bm{I}_{k}\right)\widetilde{\bm{x}}^{t}}_{=:\bm{\nu}_{1}}\\
 & \quad-\frac{\eta}{\left|\alpha^{t}\right|^{2}}\underbrace{\sum_{j=1}^{m}\boldsymbol{b}_{j}\boldsymbol{b}_{j}^{\mathsf{H}}\bm{h}^{\star}\left(\widetilde{\bm{x}}^{t}-\bm{x}^{\star}\right)^{\mathsf{H}}\bm{a}_{j}\bm{a}_{j}^{\mathsf{H}}\widetilde{\bm{x}}^{t}}_{=:\bm{\nu}_{2}}+\frac{\eta}{\left|\alpha^{t}\right|^{2}}\underbrace{\sum_{j=1}^{m}\xi_{j}\boldsymbol{b}_{j}\boldsymbol{a}_{j}^{\mathsf{H}}\widetilde{\bm{x}}^{t}}_{=:\bm{\nu}_{3}},
\end{align*}
leaving us with several terms to control. 
\begin{itemize}
\item For $\bm{\nu}_{1}$, we have that 
\begin{align*}
\left|\left(\bm{b}_{l}-\bm{b}_{1}\right)^{\mathsf{H}}\bm{\nu}_{1}\right| & \leq\sum_{j=1}^{m}\left|\left(\bm{b}_{l}-\bm{b}_{1}\right)^{\mathsf{H}}\boldsymbol{b}_{j}\right|\max_{1\leq j\leq m}\left|\boldsymbol{b}_{j}^{\mathsf{H}}\left(\widetilde{\boldsymbol{h}^{t}}-\bm{h}^{\star}\right)\widetilde{\bm{x}}^{t\mathsf{H}}\left(\bm{a}_{j}\bm{a}_{j}^{\mathsf{H}}-\bm{I}_{k}\right)\widetilde{\bm{x}}^{t}\right|\\
 & \leq\frac{c}{\log^{2}m}\max_{1\leq j\leq m}\left|\bm{b}_{j}^{\mathsf{H}}\left(\widetilde{\bm{h}}^{t}-\bm{h}^{\star}\right)\right|\max_{1\leq j\leq m}\left|\widetilde{\bm{x}}^{t\mathsf{H}}\left(\bm{a}_{j}\bm{a}_{j}^{\mathsf{H}}-\bm{I}_{k}\right)\widetilde{\bm{x}}^{t}\right|\\
 & \leq\frac{c}{\log^{2}m}\max_{1\leq j\leq m}\left|\bm{b}_{j}^{\mathsf{H}}\left(\widetilde{\bm{h}}^{t}-\bm{h}^{\star}\right)\right|\max_{1\leq j\leq m}\left(\left\Vert \bm{a}_{j}^{\mathsf{H}}\widetilde{\bm{x}}^{t}\right\Vert _{2}^{2}+\left\Vert \widetilde{\bm{x}}^{t}\right\Vert _{2}^{2}\right)\\
 & \lesssim\frac{c}{\log m}\max_{1\leq j\leq m}\left|\bm{b}_{j}^{\mathsf{H}}\left(\widetilde{\bm{h}}^{t}-\bm{h}^{\star}\right)\right|,
\end{align*}
where the second inequality follows from \citet[Lemma 50]{ma2017implicit}
and the last inequality utilizes the following consequence of \eqref{eq:induction3-1}
and Lemma \ref{lemma:useful}:
\[
\max_{1\leq j\leq m}\left(\left\Vert \bm{a}_{j}^{\mathsf{H}}\widetilde{\bm{x}}^{t}\right\Vert _{2}^{2}+\left\Vert \widetilde{\bm{x}}^{t}\right\Vert _{2}^{2}\right)\lesssim\max_{1\leq j\leq m}\left(2\left\Vert \bm{a}_{j}^{\mathsf{H}}\left(\widetilde{\bm{x}}^{t}-\bm{x}^{\star}\right)\right\Vert _{2}^{2}+2\left\Vert \bm{a}_{j}^{\mathsf{H}}\bm{x}^{\star}\right\Vert _{2}^{2}+\left\Vert \widetilde{\bm{x}}^{t}\right\Vert _{2}^{2}\right)\lesssim\log m.
\]
\item With regards to $\bm{\nu}_{2}$, we invoke the induction hypothesis
\eqref{eq:induction3-1} at the $t$th iteration to obtain
\begin{align*}
\left|\left(\bm{b}_{l}-\bm{b}_{1}\right)^{\mathsf{H}}\bm{\nu}_{2}\right| & \leq\sum_{j=1}^{m}\left|\left(\bm{b}_{l}-\bm{b}_{1}\right)^{\mathsf{H}}\boldsymbol{b}_{j}\right|\max_{1\leq j\leq m}\left|\boldsymbol{b}_{j}^{\mathsf{H}}\bm{h}^{\star}\right|\max_{1\leq j\leq m}\left|\left(\widetilde{\bm{x}}^{t}-\bm{x}^{\star}\right)^{\mathsf{H}}\bm{a}_{j}\bm{a}_{j}^{\mathsf{H}}\widetilde{\bm{x}}^{t}\right|\\
 & \leq\frac{c}{\log^{2}m}\frac{\mu}{\sqrt{m}}\left(\max_{1\leq j\leq m}\left|\left(\widetilde{\bm{x}}^{t}-\bm{x}^{\star}\right)^{\mathsf{H}}\bm{a}_{j}\right|^{2}+\max_{1\leq j\leq m}\left|\left(\widetilde{\bm{x}}^{t}-\bm{x}^{\star}\right)^{\mathsf{H}}\bm{a}_{j}\right|\max_{1\leq j\leq m}\left|\bm{a}_{j}^{\mathsf{H}}\bm{x}^{\star}\right|\right)\\
 & \lesssim C_{8}\frac{\mu}{\log m\sqrt{m}}\left(\lambda+\sigma\sqrt{\log m}\right),
\end{align*}
where the second inequality applies \citet[Lemma 50]{ma2017implicit}
and \eqref{eq:incoherence-condition}, and the last inequality results
from \eqref{eq:induction3-1} and \eqref{eq:useful1}. 
\item Finally, since $\left(\bm{b}_{l}-\bm{b}_{1}\right)^{\mathsf{H}}\bm{\nu}_{3}$
is of the same form as the quantity $\beta_{3}$ in \eqref{eq:proof-claim-decomp},
we can apply the analysis leading to \eqref{eq:proof-claim-beta3}
to derive
\begin{align*}
\left|\left(\bm{b}_{l}-\bm{b}_{1}\right)^{\mathsf{H}}\bm{\nu}_{3}\right| & \lesssim\frac{\sigma}{\log^{1.5}m}\max_{1\leq j\leq m}\left|\boldsymbol{a}_{j}^{\mathsf{H}}(\widetilde{\bm{x}}^{t}-\bm{x}^{\star})\right|+\sigma\sqrt{\frac{K\log^{2}m}{m}}\\
 & \lesssim\frac{\sigma}{\log^{1.5}m}\left(C_{7}\sqrt{\log m}\left(\lambda+\sigma\sqrt{K\log m}\right)\right)+\sigma\sqrt{\frac{K\log^{2}m}{m}}
\end{align*}
\end{itemize}
With the preceding results in hand, we have
\begin{align*}
 & \left|\frac{\overline{\alpha^{t+1/2}}}{\overline{\alpha^{t}}}\right|\max_{0\leq l\leq m-\tau,1\leq j\leq\tau}\left|\left(\bm{b}_{l+j}-\bm{b}_{l+1}\right)^{\mathsf{H}}\left(\widetilde{\bm{h}}^{t+1}-\bm{h}^{\star}\right)\right|\\
 & \quad\leq\left|1-\eta\lambda-\frac{\overline{\alpha^{t+1/2}}}{\overline{\alpha^{t}}}\right|\max_{0\leq l\leq m-\tau,1\leq j\leq\tau}\left|\left(\bm{b}_{l+j}-\bm{b}_{l+1}\right)^{\mathsf{H}}\bm{h}^{\star}\right|\\
 & \quad\quad\quad+\left(1-\eta\lambda-\eta\left\Vert \bm{x}^{t}\right\Vert _{2}^{2}\right)\max_{0\leq l\leq m-\tau,1\leq j\leq\tau}\left|\left(\bm{b}_{l+j}-\bm{b}_{l+1}\right)^{\mathsf{H}}\left(\widetilde{\bm{h}}^{t}-\bm{h}^{\star}\right)\right|\\
 & \quad\quad\quad+\frac{\eta CC_{9}}{\left|\alpha^{t}\right|^{2}}\frac{\mu\log m}{\sqrt{m}}\left(\lambda+\sigma\sqrt{K\log m}\right)+\frac{\eta CC_{8}}{\left|\alpha^{t}\right|^{2}}\left(\frac{\mu}{\log m\sqrt{m}}\left(\lambda+\sigma\sqrt{K\log m}\right)\right)\\
 & \quad\quad\quad+\frac{\eta C}{\left|\alpha^{t}\right|^{2}}\left[\frac{\sigma}{\log^{1.5}m}\left(C_{7}\sqrt{\log m}\left(\lambda+\sigma\sqrt{K\log m}\right)\right)+\sigma\sqrt{\frac{K\log^{2}m}{m}}\right]\\
 & \quad\overset{(\text{i})}{\leq}\left(\eta\lambda+\left|1-\frac{\alpha^{t+1/2}}{\alpha^{t}}\right|\right)\frac{2\mu}{\sqrt{m}}+\left(1-\frac{\eta}{16}\right)C_{11}\frac{\sigma}{\log^{3}m}\\
 & \quad\quad\quad+4\eta CC_{9}\frac{\mu\log m}{\sqrt{m}}\left(\lambda+\sigma\sqrt{K\log m}\right)+4\eta CC_{8}\left(\frac{\mu}{\log m\sqrt{m}}\left(\lambda+\sigma\sqrt{K\log m}\right)\right)\\
 & \quad\quad\quad+4\eta C\left[\frac{\sigma}{\log^{1.5}m}\left(C_{7}\sqrt{\log m}\left(\lambda+\sigma\sqrt{K\log m}\right)\right)+\sigma\sqrt{\frac{K\log^{2}m}{m}}\right]\\
 & \quad\overset{(\text{ii})}{\leq}\left(1-\frac{\eta}{16}\right)\frac{C_{11}\sigma}{\log^{3}m}+c\frac{\eta\sigma}{\log^{3}m}
\end{align*}
for some constant $C>0$ and some sufficiently small constant $c>0$.
Here, the relation (i) comes from the triangle inequality, \eqref{eq:alpha-bound},
as well as the consequence of \eqref{eq:tilde-hx-1} and \eqref{eq:alpha-bound}
\[
\left\Vert \bm{x}^{t}\right\Vert _{2}=\frac{\left\Vert \widetilde{\bm{x}}^{t}\right\Vert _{2}}{\left|\alpha^{t}\right|}\geq\frac{1/2}{2}=\frac{1}{4};
\]
the inequality (ii) invokes \eqref{eq:alpharatio-round1-1} and holds
with the proviso that $m\gg\mu^{2}K\log^{8}m$ and $\sigma\sqrt{K\log^{5}m}\ll1$. 

Finally, by \eqref{eq:alpharatio-round1-1} we obtain
\begin{align*}
\max_{0\leq l\leq m-\tau,1\leq j\leq\tau}\left|\left(\bm{b}_{l+j}-\bm{b}_{l+1}\right)^{\mathsf{H}}\left(\widetilde{\bm{h}}^{t+1}-\bm{h}^{\star}\right)\right| & \leq\frac{\left(1-\frac{\eta}{16}\right)C_{11}\frac{\sigma}{\log^{3}m}+c\frac{\eta\sigma}{\log^{3}m}}{\left|\frac{\overline{\alpha^{t+1/2}}}{\overline{\alpha^{t}}}\right|}\\
 & \leq\frac{\left(1-\frac{\eta}{16}\right)C_{11}\frac{\sigma}{\log^{3}m}+c\frac{\eta\sigma}{\log^{3}m}}{1-\left|\frac{\alpha^{t+1/2}}{\alpha^{t}}-1\right|}\\
 & \leq\frac{\left(1-\frac{\eta}{16}\right)C_{11}\frac{\sigma}{\log^{3}m}+c\frac{\eta\sigma}{\log^{3}m}}{1-\eta\frac{CC_{5}}{c_{\rho}}\left(\lambda+\sigma\sqrt{K\log m}\right)}\\
 & \leq C_{11}\frac{\sigma}{\log^{3}m},
\end{align*}
where $C>0$ is some constant. Here, the last inequality holds as
long as $c$ is sufficiently small. 

\subsubsection{Proof of Claim \ref{claim:decreasingobj}\label{subsec:Proof-of-decreasingobj}}

Before proceeding, we note that
\[
\nabla f\left(\bm{z}\right)=\nabla f_{\mathsf{reg}\text{-}\mathsf{free}}\left(\bm{z}\right)+\lambda\bm{z},
\]
and 
\begin{equation}
\left[\begin{array}{c}
\nabla_{\bm{h}}f\left(\frac{\bm{h}}{\overline{\alpha}},\alpha\bm{x}\right)\\
\nabla_{\bm{x}}f\left(\frac{\bm{h}}{\overline{\alpha}},\alpha\bm{x}\right)
\end{array}\right]=\left[\begin{array}{c}
\alpha\nabla_{\bm{h}}f_{\mathsf{reg}\text{-}\mathsf{free}}\left(\bm{h},\bm{x}\right)\\
\frac{1}{\overline{\alpha}}\nabla_{\bm{x}}f_{\mathsf{reg}\text{-}\mathsf{free}}\left(\bm{h},\bm{x}\right)
\end{array}\right]+\lambda\left[\begin{array}{c}
\frac{\bm{h}}{\overline{\alpha}}\\
\alpha\bm{x}
\end{array}\right].\label{eq:gradient-scale-relation}
\end{equation}
Another fact of use is that 
\[
\nabla^{2}f\left(\bm{h},\bm{x}\right)=\nabla^{2}f_{\mathsf{reg}\text{-}\mathsf{free}}\left(\bm{h},\bm{x}\right)+\lambda\bm{I}_{4K}.
\]
Letting 
\[
\beta^{t}=\frac{\alpha^{t}}{\left|\alpha^{t}\right|},\quad\quad\overline{\bm{h}}^{t}=\tfrac{1}{\overline{\beta^{t}}}\boldsymbol{h}^{t},\qquad\text{and}\qquad\overline{\bm{x}}^{t}=\beta^{t}\bm{x}^{t},
\]
we can write
\begin{align}
\left\Vert \nabla f\big(\overline{\bm{h}}^{t},\overline{\bm{x}}^{t}\big)\right\Vert _{2} & =\left\Vert \left[\begin{array}{c}
\beta^{t}\nabla_{\bm{h}}f_{\mathsf{reg}\text{-}\mathsf{free}}\left(\bm{h}^{t},\bm{x}^{t}\right)\\
\frac{1}{\overline{\beta^{t}}}\nabla_{\bm{x}}f_{\mathsf{reg}\text{-}\mathsf{free}}\left(\bm{h}^{t},\bm{x}^{t}\right)
\end{array}\right]+\lambda\left[\begin{array}{c}
\frac{\bm{h}^{t}}{\overline{\beta^{t}}}\\
\beta^{t}\bm{x}^{t}
\end{array}\right]\right\Vert _{2}\nonumber \\
 & =\left\Vert \left[\begin{array}{c}
\nabla_{\bm{h}}f_{\mathsf{reg}\text{-}\mathsf{free}}\left(\bm{h}^{t},\bm{x}^{t}\right)\\
\nabla_{\bm{x}}f_{\mathsf{reg}\text{-}\mathsf{free}}\left(\bm{h}^{t},\bm{x}^{t}\right)
\end{array}\right]+\lambda\left[\begin{array}{c}
\bm{h}^{t}\\
\bm{x}^{t}
\end{array}\right]\right\Vert _{2}\nonumber \\
 & =\left\Vert \nabla f\left(\bm{h}^{t},\bm{x}^{t}\right)\right\Vert _{2},\label{eq:gradient-scale}
\end{align}
where the first inequality is due to \eqref{eq:gradient-scale-relation},
and the second inequality comes from the simple fact that $\beta^{t}\overline{\beta^{t}}=1$
(by definition of $\beta^{t}$).

To begin with, we show that $f\left(\boldsymbol{h}^{t+1},\boldsymbol{x}^{t+1}\right)$
is upper bounded by $f\left(\bm{h}^{t+1/2},\bm{x}^{t+1/2}\right)$,
that is, 
\begin{align*}
f\left(\boldsymbol{h}^{t+1},\boldsymbol{x}^{t+1}\right) & =\sum_{j=1}^{m}\left|\boldsymbol{b}_{j}^{\mathsf{H}}\bm{h}^{t+1}\big(\bm{x}^{t+1}\big)^{\mathsf{H}}\boldsymbol{a}_{j}-y_{j}\right|^{2}+\lambda\left\Vert \boldsymbol{h}^{t+1}\right\Vert _{2}^{2}+\lambda\left\Vert \boldsymbol{x}^{t+1}\right\Vert _{2}^{2}\\
 & \overset{(\text{i})}{=}\sum_{j=1}^{m}\left|\boldsymbol{b}_{j}^{\mathsf{H}}\bm{h}^{t+1/2}\big(\bm{x}^{t+1/2}\big)^{\mathsf{H}}\boldsymbol{a}_{j}-y_{j}\right|^{2}+2\lambda\left\Vert \boldsymbol{h}^{t+1}\right\Vert _{2}\left\Vert \boldsymbol{x}^{t+1}\right\Vert _{2}\\
 & \overset{(\text{ii})}{=}\sum_{j=1}^{m}\left|\boldsymbol{b}_{j}^{\mathsf{H}}\bm{h}^{t+1/2}\big(\bm{x}^{t+1/2}\big)^{\mathsf{H}}\boldsymbol{a}_{j}-y_{j}\right|^{2}+2\lambda\left\Vert \bm{h}^{t+1/2}\right\Vert _{2}\left\Vert \bm{x}^{t+1/2}\right\Vert _{2}\\
 & \overset{(\text{iii})}{\leq}\sum_{j=1}^{m}\left|\boldsymbol{b}_{j}^{\mathsf{H}}\bm{h}^{t+1/2}\big(\bm{x}^{t+1/2}\big)^{\mathsf{H}}\boldsymbol{a}_{j}-y_{j}\right|^{2}+\lambda\big\|\bm{h}^{t+1/2}\big\|_{2}^{2}+\lambda\big\|\bm{x}^{t+1/2}\big\|_{2}^{2}\\
 & =f\left(\bm{h}^{t+1/2},\bm{x}^{t+1/2}\right),
\end{align*}
where (i) and (ii) come from \eqref{eq:balance}, and (iii) is due
to the elementary inequality $2ab\leq a^{2}+b^{2}$. In order to control
$f\left(\bm{h}^{t+1/2},\bm{x}^{t+1/2}\right)$, one observes that
\begin{align*}
f\left(\bm{h}^{t+1/2},\bm{x}^{t+1/2}\right) & =f\left(\frac{\bm{h}^{t+1/2}}{\overline{\beta^{t}}},\beta^{t}\bm{x}^{t+1/2}\right)\\
 & \overset{(\text{i})}{=}f\left(\overline{\bm{h}}^{t}-\frac{\eta}{\overline{\beta^{t}}}\left(\nabla_{\bm{h}}f_{\mathsf{reg}\text{-}\mathsf{free}}\left(\bm{z}^{t}\right)+\lambda\bm{h}^{t}\right),\overline{\bm{x}}^{t}-\eta\beta^{t}\left(\nabla_{\bm{x}}f_{\mathsf{reg}\text{-}\mathsf{free}}\left(\bm{z}^{t}\right)+\lambda\bm{x}^{t}\right)\right)\\
 & \overset{(\text{ii})}{=}f\left(\overline{\bm{h}}^{t}-\eta\nabla_{\bm{h}}f\left(\overline{\bm{z}}^{t}\right),\overline{\bm{x}}^{t}-\eta\nabla_{\bm{x}}f\left(\overline{\bm{z}}^{t}\right)\right)\\
 & \overset{(\text{iii})}{=}f\left(\overline{\bm{h}}^{t},\overline{\bm{x}}^{t}\right)-\eta\left[\begin{array}{c}
\nabla_{\bm{h}}f\left(\overline{\bm{h}}^{t},\overline{\bm{x}}^{t}\right)\\
\nabla_{\bm{x}}f\left(\overline{\bm{h}}^{t},\overline{\bm{x}}^{t}\right)\\
\overline{\nabla_{\bm{h}}f\left(\overline{\bm{h}}^{t},\overline{\bm{x}}^{t}\right)}\\
\overline{\nabla_{\bm{x}}f\left(\overline{\bm{h}}^{t},\overline{\bm{x}}^{t}\right)}
\end{array}\right]^{\mathsf{H}}\left[\begin{array}{c}
\nabla_{\bm{h}}f\left(\overline{\bm{h}}^{t},\overline{\bm{x}}^{t}\right)\\
\nabla_{\bm{x}}f\left(\overline{\bm{h}}^{t},\overline{\bm{x}}^{t}\right)\\
\overline{\nabla_{\bm{h}}f\left(\overline{\bm{h}}^{t},\overline{\bm{x}}^{t}\right)}\\
\overline{\nabla_{\bm{x}}f\left(\overline{\bm{h}}^{t},\overline{\bm{x}}^{t}\right)}
\end{array}\right]\\
 & \quad\quad\quad+\frac{\eta^{2}}{2}\left[\begin{array}{c}
\nabla_{\bm{h}}f\left(\overline{\bm{h}}^{t},\overline{\bm{x}}^{t}\right)\\
\nabla_{\bm{x}}f\left(\overline{\bm{h}}^{t},\overline{\bm{x}}^{t}\right)\\
\overline{\nabla_{\bm{h}}f\left(\overline{\bm{h}}^{t},\overline{\bm{x}}^{t}\right)}\\
\overline{\nabla_{\bm{x}}f\left(\overline{\bm{h}}^{t},\overline{\bm{x}}^{t}\right)}
\end{array}\right]^{\mathsf{H}}\nabla^{2}f\left(\widehat{\bm{z}}\right)\left[\begin{array}{c}
\nabla_{\bm{h}}f\left(\overline{\bm{h}}^{t},\overline{\bm{x}}^{t}\right)\\
\nabla_{\bm{x}}f\left(\overline{\bm{h}}^{t},\overline{\bm{x}}^{t}\right)\\
\overline{\nabla_{\bm{h}}f\left(\overline{\bm{h}}^{t},\overline{\bm{x}}^{t}\right)}\\
\overline{\nabla_{\bm{x}}f\left(\overline{\bm{h}}^{t},\overline{\bm{x}}^{t}\right)}
\end{array}\right]\\
 & \overset{\text{(iv)}}{\leq}f\left(\overline{\bm{h}}^{t},\overline{\bm{x}}^{t}\right)-2\eta\left\Vert \nabla_{\bm{h}}f\left(\overline{\bm{h}}^{t},\overline{\bm{x}}^{t}\right)\right\Vert _{2}^{2}-2\eta\left\Vert \nabla_{\bm{x}}f\left(\overline{\bm{h}}^{t},\overline{\bm{x}}^{t}\right)\right\Vert _{2}^{2}\\
 & \quad\quad+\frac{\eta^{2}}{2}\cdot4\left[2\left\Vert \nabla_{\bm{h}}f\left(\overline{\bm{h}}^{t},\overline{\bm{x}}^{t}\right)\right\Vert _{2}^{2}+2\left\Vert \nabla_{\bm{x}}f\left(\overline{\bm{h}}^{t},\overline{\bm{x}}^{t}\right)\right\Vert _{2}^{2}\right]\\
 & \overset{\text{(v)}}{\leq}f\left(\overline{\bm{h}}^{t},\overline{\bm{x}}^{t}\right)-\frac{\eta}{2}\left\Vert \nabla_{\bm{h}}f\left(\overline{\bm{h}}^{t},\overline{\bm{x}}^{t}\right)\right\Vert _{2}^{2}-\frac{\eta}{2}\left\Vert \nabla_{\bm{x}}f\left(\overline{\bm{h}}^{t},\overline{\bm{x}}^{t}\right)\right\Vert _{2}^{2}\\
 & =f\left(\bm{h}^{t},\bm{x}^{t}\right)-\frac{\eta}{2}\left\Vert \nabla f\left(\bm{h}^{t},\bm{x}^{t}\right)\right\Vert _{2}^{2},
\end{align*}
where $\widehat{\bm{z}}$ is a point lying between $\overline{\bm{z}}^{t}-\eta\nabla f\left(\overline{\bm{z}}^{t}\right)$
and $\overline{\bm{z}}^{t}$. Here, (i) resorts to the gradient update
rule \eqref{subeq:gradient_update_ncvx-2}; (ii) utilizes the relation
\eqref{eq:gradient-scale-relation}; (iii) comes from the mean value
theorem \citet[Appendix D.3.1]{ma2017implicit}; (iv) follows from
Lemma \ref{lemma:geometry} (which we shall verify shortly); (v) holds
true for sufficiently small $\eta>0$; and the last equality follows
from the identity \eqref{eq:gradient-scale}. Therefore, it only remains
to verify the conditions required to invoke Lemma \ref{lemma:geometry}
in Step (iv). In particular, we would need to justify that both $\overline{\bm{z}}^{t}$
and $\overline{\bm{z}}^{t}-\eta\nabla f\left(\overline{\bm{z}}^{t}\right)$
satisfy the conditions of Lemma \ref{lemma:geometry}.
\begin{itemize}
\item We first show that $\overline{\bm{z}}^{t}$ satisfies the conditions
of Lemma \ref{lemma:geometry}. Towards this, it is first seen that
\begin{align}
\left\Vert \overline{\bm{h}}^{t}-\bm{h}^{\star}\right\Vert _{2}^{2}+\left\Vert \overline{\bm{x}}^{t}-\bm{x}^{\star}\right\Vert _{2}^{2} & =\left\Vert \frac{\bm{h}^{t}}{\overline{\alpha^{t}/\left|\alpha^{t}\right|}}-\bm{h}^{\star}\right\Vert _{2}^{2}+\left\Vert \frac{\alpha^{t}}{\left|\alpha^{t}\right|}\bm{x}^{t}-\bm{x}^{\star}\right\Vert _{2}^{2}\nonumber \\
 & \leq\left(\left\Vert \frac{\bm{h}^{t}}{\overline{\alpha^{t}/\left|\alpha^{t}\right|}}-\frac{\bm{h}^{t}}{\overline{\alpha^{t}}}\right\Vert _{2}+\left\Vert \frac{\bm{h}^{t}}{\overline{\alpha^{t}}}-\bm{h}^{\star}\right\Vert _{2}\right)^{2}+\left(\left\Vert \frac{\alpha^{t}}{\left|\alpha^{t}\right|}\bm{x}^{t}-\alpha^{t}\bm{x}^{t}\right\Vert _{2}+\left\Vert \alpha^{t}\bm{x}^{t}-\bm{x}^{\star}\right\Vert _{2}\right)^{2}\nonumber \\
 & =\left(\left|\left|\alpha^{t}\right|-1\right|\left\Vert \widetilde{\bm{h}}^{t}\right\Vert _{2}+\left\Vert \widetilde{\bm{h}}^{t}-\bm{h}^{\star}\right\Vert _{2}\right)^{2}+\left(\left|\frac{\left|\alpha^{t}\right|-1}{\left|\alpha^{t}\right|}\right|\left\Vert \widetilde{\bm{x}}^{t}\right\Vert _{2}+\left\Vert \widetilde{\bm{x}}^{t}-\bm{x}^{\star}\right\Vert _{2}\right)^{2}\nonumber \\
 & \lesssim\left(\frac{C_{5}}{c_{\rho}}\left(\lambda+\sigma\sqrt{K\log m}\right)\right)^{2},\label{eq:barz-dist}
\end{align}
where the last inequality comes from \eqref{eq:dist-bound-1} and
\eqref{eq:alpha-asymp1-1}. Further,
\begin{align}
\max_{1\leq j\leq m}\left|\boldsymbol{a}_{j}^{\mathsf{H}}\left(\overline{\bm{x}}^{t}-\bm{x}^{\star}\right)\right| & \leq\max_{1\leq j\leq m}\left|\boldsymbol{a}_{j}^{\mathsf{H}}\left(\frac{\alpha^{t}}{\left|\alpha^{t}\right|}\bm{x}^{t}-\alpha^{t}\bm{x}^{t}\right)\right|+\max_{1\leq j\leq m}\left|\boldsymbol{a}_{j}^{\mathsf{H}}\left(\widetilde{\bm{x}}^{t}-\bm{x}^{\star}\right)\right|\nonumber \\
 & \leq\left|\frac{\left|\alpha^{t}\right|-1}{\left|\alpha^{t}\right|}\right|\max_{1\leq j\leq m}\left|\boldsymbol{a}_{j}^{\mathsf{H}}\widetilde{\bm{x}}^{t}\right|+\max_{1\leq j\leq m}\left|\boldsymbol{a}_{j}^{\mathsf{H}}\left(\widetilde{\bm{x}}^{t}-\bm{x}^{\star}\right)\right|\nonumber \\
 & \leq\left|\frac{\left|\alpha^{t}\right|-1}{\left|\alpha^{t}\right|}\right|\left(\max_{1\leq j\leq m}\left|\boldsymbol{a}_{j}^{\mathsf{H}}\left(\widetilde{\bm{x}}^{t}-\bm{x}^{\star}\right)\right|+\max_{1\leq j\leq m}\left|\boldsymbol{a}_{j}^{\mathsf{H}}\bm{x}^{\star}\right|\right)+\max_{1\leq j\leq m}\left|\boldsymbol{a}_{j}^{\mathsf{H}}\left(\widetilde{\bm{x}}^{t}-\bm{x}^{\star}\right)\right|\nonumber \\
 & \lesssim\left(\lambda+\sigma\sqrt{K\log m}\right)\sqrt{\log m},\label{eq:barx-incoha}
\end{align}
where the last inequality follows from \eqref{eq:alpha-asymp1-1},
\eqref{eq:induction3-1} and Lemma \ref{lemma:useful}. Similarly,
one has 
\begin{align}
\max_{1\leq j\leq m}\left|\boldsymbol{b}_{j}^{\mathsf{H}}\overline{\bm{h}}^{t}\right| & \leq\max_{1\leq j\leq m}\left|\boldsymbol{b}_{j}^{\mathsf{H}}\left(\frac{\bm{h}^{t}}{\overline{\alpha^{t}/\left|\alpha^{t}\right|}}-\frac{\bm{h}^{t}}{\overline{\alpha^{t}}}\right)\right|+\max_{1\leq j\leq m}\left|\boldsymbol{b}_{j}^{\mathsf{H}}\widetilde{\bm{h}}^{t}\right|\nonumber \\
 & \leq\left|\left|\alpha^{t}\right|-1\right|\max_{1\leq j\leq m}\left|\boldsymbol{b}_{j}^{\mathsf{H}}\frac{\bm{h}^{t}}{\overline{\alpha^{t}}}\right|+\max_{1\leq j\leq m}\left|\boldsymbol{b}_{j}^{\mathsf{H}}\widetilde{\bm{h}}^{t}\right|\nonumber \\
 & \leq2\max_{1\leq j\leq m}\left|\boldsymbol{b}_{j}^{\mathsf{H}}\widetilde{\bm{h}}^{t}\right|\label{eq:barh-incohb}\\
 & \lesssim\frac{\mu}{\sqrt{m}}\log m+\sigma,
\end{align}
where the last inequality comes from \eqref{eq:induction4-1}. Given
that $\overline{\bm{z}}^{t}$ satisfies the conditions in Lemma \ref{lemma:geometry},
we can invoke Lemma \ref{lemma:geometry} to demonstrate that
\begin{equation}
\left\Vert \nabla_{\bm{h}}f\left(\overline{\bm{z}}^{t}\right)-\nabla_{\bm{h}}f\left(\bm{z}^{\star}\right)\right\Vert _{2}\leq4\left\Vert \overline{\bm{z}}^{t}-\bm{z}^{\star}\right\Vert _{2}.\label{eq:lem-obj-gradient}
\end{equation}
\item Next, we move on to show that $\overline{\bm{z}}^{t}-\eta\nabla f\left(\overline{\bm{z}}^{t}\right)$
also satisfies the conditions of Lemma \ref{lemma:geometry}. To begin
with,
\begin{align}
\left\Vert \overline{\bm{z}}^{t}-\eta\nabla f\left(\overline{\bm{z}}^{t}\right)-\bm{z}^{\star}\right\Vert _{2}\leq\left\Vert \overline{\bm{z}}^{t}-\bm{z}^{\star}\right\Vert _{2}+\eta\left\Vert \nabla f\left(\overline{\bm{z}}^{t}\right)-\nabla f\left(\bm{z}^{\star}\right)\right\Vert _{2}+\eta\left\Vert \nabla f\left(\bm{z}^{\star}\right)\right\Vert _{2}.\label{eq:bargradient-decomp}
\end{align}
We observe that
\begin{align}
\left\Vert \nabla f\left(\boldsymbol{z}^{\star}\right)\right\Vert _{\text{2}} & \leq\|\nabla f_{\mathsf{clean}}\left(\boldsymbol{z}^{\star}\right)\|_{2}+\left\Vert \mathcal{A}^{*}\left(\bm{\xi}\right)\boldsymbol{h}^{\star}\right\Vert _{2}+\left\Vert \mathcal{A}^{*}\left(\bm{\xi}\right)\boldsymbol{x}^{\star}\right\Vert _{2}+\lambda\left\Vert \bm{h}^{\star}\right\Vert _{2}+\lambda\left\Vert \bm{z}^{\star}\right\Vert _{2}\nonumber \\
 & \lesssim\lambda+\sigma\sqrt{K\log m}.\label{eq:lem-obj-gradient-true}
\end{align}
Taking \eqref{eq:lem-obj-gradient-true}, \eqref{eq:lem-obj-gradient},
\eqref{eq:barz-dist} and \eqref{eq:bargradient-decomp} collectively,
one arrives at
\begin{align*}
\left\Vert \overline{\bm{z}}^{t}-\eta\nabla f\left(\overline{\bm{z}}^{t}\right)-\bm{z}^{\star}\right\Vert _{2} & \lesssim\lambda+\sigma\sqrt{K\log m}.
\end{align*}
With regards to the incoherence condition w.r.t.~$\bm{a}_{j}$, we
have
\begin{align}
 & \max_{1\leq j\leq m}\left|\bm{a}_{j}^{\mathsf{H}}\left(\overline{\bm{x}}^{t}-\eta\nabla_{\bm{x}}f\left(\overline{\bm{z}}^{t}\right)-\bm{x}^{\star}\right)\right|\nonumber \\
 & \leq\max_{1\leq j\leq m}\left|\bm{a}_{j}^{\mathsf{H}}\left(\overline{\bm{x}}^{t}-\bm{x}^{\star}\right)\right|+\eta\max_{1\leq j\leq m}\left|\bm{a}_{j}^{\mathsf{H}}\nabla_{\bm{x}}f\left(\overline{\bm{z}}^{t}\right)\right|\nonumber \\
 & \leq\max_{1\leq j\leq m}\left|\bm{a}_{j}^{\mathsf{H}}\left(\overline{\bm{x}}^{t}-\bm{x}^{\star}\right)\right|+\eta\left(\max_{1\leq j\leq m}\left|\bm{a}_{j}^{\mathsf{H}}\nabla_{\bm{x}}f\left(\overline{\bm{z}}^{t}-\widetilde{\boldsymbol{z}}^{t,\left(l\right)}\right)\right|+\max_{1\leq j\leq m}\left|\bm{a}_{j}^{\mathsf{H}}\nabla_{\bm{x}}f\left(\widetilde{\boldsymbol{z}}^{t,\left(l\right)}\right)\right|\right)\nonumber \\
 & \leq C\sqrt{\log m}\left(\lambda+\sigma\sqrt{K\log m}\right)+4\eta\left(10\sqrt{K}\times4\max_{1\leq j\leq m}\left\Vert \widetilde{\bm{z}}^{t}-\widetilde{\boldsymbol{z}}^{t,\left(l\right)}\right\Vert _{2}+20\sqrt{\log m}\max_{1\leq j\leq m}\left\Vert \nabla_{\bm{x}}f\left(\widetilde{\boldsymbol{z}}^{t,\left(l\right)}\right)\right\Vert _{2}\right),\label{eq:lem-obj-incoha}
\end{align}
where the last inequality follows from \eqref{eq:barx-incoha} for
some constant $C>0$, \eqref{eq:lem-obj-gradient} and Lemma \ref{lemma:useful}.
Further, it is self-evident that $\widetilde{\boldsymbol{z}}^{t,\left(l\right)}$
satisfies the conditions of Lemma \ref{lemma:geometry}, so that we
have
\begin{align*}
\left\Vert \nabla_{\bm{x}}f\left(\widetilde{\boldsymbol{z}}^{t,\left(l\right)}\right)\right\Vert _{2} & \leq\left\Vert \nabla_{\bm{x}}f\left(\widetilde{\boldsymbol{z}}^{t,\left(l\right)}\right)-\nabla_{\bm{x}}f\left(\bm{z}^{\star}\right)\right\Vert _{2}+\left\Vert \nabla_{\bm{x}}f\left(\bm{z}^{\star}\right)\right\Vert _{2}\\
 & \leq4\left\Vert \widetilde{\boldsymbol{z}}^{t,\left(l\right)}-\bm{z}^{\star}\right\Vert _{2}+C\left(\lambda+\sigma\sqrt{K\log m}\right)\\
 & \leq4\left(\left\Vert \widetilde{\boldsymbol{z}}^{t,\left(l\right)}-\widetilde{\bm{z}}^{t}\right\Vert _{2}+\left\Vert \widetilde{\bm{z}}^{t}-\bm{z}^{\star}\right\Vert _{2}\right)+C\left(\lambda+\sigma\sqrt{K\log m}\right),
\end{align*}
where the second inequality invokes Lemma \ref{lemma:geometry} and
\eqref{eq:lem-obj-gradient-true}. This together with \eqref{eq:lem-obj-incoha}
and \eqref{eq:induction} gives
\[
\max_{1\leq j\leq m}\left|\bm{a}_{j}^{\mathsf{H}}\left(\overline{\bm{x}}^{t}-\eta\nabla_{\bm{x}}f\left(\overline{\bm{z}}^{t}\right)-\bm{x}^{\star}\right)\right|\lesssim\sqrt{\log m}\left(\lambda+\sigma\sqrt{K\log m}\right).
\]
For the other incoherence condition w.r.t.~$\bm{b}_{j}$, we can
invoke similar argument to show that
\begin{align}
 & \max_{1\leq j\leq m}\left|\bm{b}_{j}^{\mathsf{H}}\left(\overline{\bm{h}}^{t}-\eta\nabla_{\bm{h}}f\left(\overline{\bm{z}}^{t}\right)-\bm{h}^{\star}\right)\right|\nonumber \\
 & \quad\leq\max_{1\leq j\leq m}\left|\bm{b}_{j}^{\mathsf{H}}\left(\overline{\bm{h}}^{t}-\bm{h}^{\star}\right)\right|+\eta\max_{1\leq j\leq m}\left|\bm{b}_{j}^{\mathsf{H}}\nabla_{\bm{h}}f\left(\overline{\bm{z}}^{t}\right)\right|\nonumber \\
 & \quad\leq\max_{1\leq j\leq m}\left|\bm{b}_{j}^{\mathsf{H}}\left(\overline{\bm{h}}^{t}-\bm{h}^{\star}\right)\right|+\eta\max_{1\leq j\leq m}\left|\bm{b}_{j}^{\mathsf{H}}\left(\sum_{l=1}^{m}\left(\boldsymbol{b}_{l}^{\mathsf{H}}\widetilde{\bm{h}}^{t}\widetilde{\bm{x}}^{t,\mathsf{H}}\boldsymbol{a}_{l}-y_{l}\right)\boldsymbol{b}_{l}\boldsymbol{a}_{l}^{\mathsf{H}}\overline{\bm{x}}^{t}+\lambda\overline{\bm{h}}^{t}\right)\right|\nonumber \\
 & \quad\leq\max_{1\leq j\leq m}\left|\boldsymbol{b}_{j}^{\mathsf{H}}\left(\frac{\bm{h}^{t}}{\overline{\alpha^{t}/\left|\alpha^{t}\right|}}-\frac{\bm{h}^{t}}{\overline{\alpha^{t}}}\right)\right|+\max_{1\leq j\leq m}\left|\boldsymbol{b}_{j}^{\mathsf{H}}\left(\widetilde{\bm{h}}^{t}-\bm{h}^{\star}\right)\right|\nonumber \\
 & \qquad+\eta\left(\lambda\left|\alpha^{t}\right|\max_{1\leq j\leq m}\left|\bm{b}_{j}^{\mathsf{H}}\widetilde{\bm{h}}^{t}\right|+\left|\alpha^{t}\right|^{-1}\underbrace{\max_{1\leq j\leq m}\left|\bm{b}_{j}^{\mathsf{H}}\left(\sum_{l=1}^{m}\left(\boldsymbol{b}_{l}^{\mathsf{H}}\widetilde{\bm{h}}^{t}\widetilde{\bm{x}}^{t,\mathsf{H}}\boldsymbol{a}_{l}-y_{l}\right)\boldsymbol{b}_{l}\boldsymbol{a}_{l}^{\mathsf{H}}\widetilde{\bm{x}}^{t}\right)\right|}_{\eqqcolon\tau}\right)\nonumber \\
 & \quad\leq\left|\left|\alpha^{t}\right|-1\right|\max_{1\leq j\leq m}\left|\boldsymbol{b}_{j}^{\mathsf{H}}\widetilde{\bm{h}}^{t}\right|+\max_{1\leq j\leq m}\left|\boldsymbol{b}_{j}^{\mathsf{H}}\left(\widetilde{\bm{h}}^{t}-\bm{h}^{\star}\right)\right|+\eta\left(2\lambda\max_{1\leq j\leq m}\left|\bm{b}_{j}^{\mathsf{H}}\widetilde{\bm{h}}^{t}\right|+2\tau\right).\label{eq:UB-15678}
\end{align}
Here, the last inequality utilizes the fact $\left\Vert \bm{b}_{j}\right\Vert _{2}=\sqrt{K/m}$
and \eqref{eq:alpha-asymp1-1}. The quantity $\tau$ can be controlled
by using the same analysis as Appendix \ref{subsec:Proof-of-Lemmaincoherenceb}.
Specifically,
\begin{align*}
\tau & =\max_{1\leq j\leq m}\left|\bm{b}_{j}^{\mathsf{H}}\nabla_{\bm{h}}f_{\mathsf{reg}\text{-}\mathsf{free}}\left(\widetilde{\bm{z}}^{t}\right)\right|\\
 & \leq\max_{1\leq j\leq m}\left(\left|\bm{b}_{j}^{\mathsf{H}}\bm{\nu}_{1}\right|+\left|\bm{b}_{j}^{\mathsf{H}}\bm{\nu}_{2}\right|+\left|\bm{b}_{j}^{\mathsf{H}}\bm{\nu}_{3}\right|+\left|\bm{b}_{j}^{\mathsf{H}}\bm{\nu}_{4}\right|+\left\Vert \bm{x}^{\star}\right\Vert _{2}^{2}\left|\bm{b}_{j}^{\mathsf{H}}\widetilde{\bm{h}}^{t}\right|\right)\\
 & \lesssim\frac{\mu}{\sqrt{m}}\log m+\sigma,
\end{align*}
where $\{\bm{\nu}_{i}\}_{i=1}^{4}$ are defined in \eqref{eq:lem-incohb-decomp},
and the last inequality is a direct consequence of Appendix \ref{subsec:Proof-of-Lemmaincoherenceb}.
Finally, continue the bound \eqref{eq:UB-15678} to demonstrate that
\begin{align*}
 & \max_{1\leq j\leq m}\left|\bm{b}_{j}^{\mathsf{H}}\left(\overline{\bm{h}}^{t}-\eta\nabla_{\bm{h}}f\left(\overline{\bm{z}}^{t}\right)-\bm{h}^{\star}\right)\right|\\
 & \lesssim\frac{C_{5}}{c_{\rho}}\left(\lambda+\sigma\sqrt{K\log m}\right)C_{8}\left(\frac{\mu}{\sqrt{m}}\log m+\sigma\right)+C_{9}\sigma+\eta\left(2C_{8}\lambda\left(\frac{\mu}{\sqrt{m}}\log m+\sigma\right)+2\left(\frac{\mu}{\sqrt{m}}\log m+\sigma\right)\right)\\
 & \lesssim\frac{\mu}{\sqrt{m}}\log m+\sigma,
\end{align*}
where the penultimate inequality is due to \eqref{eq:alpha-asymp1-1},
\eqref{eq:induction4-1} and \eqref{eq:incohc}.
\end{itemize}

\subsection{Proof of Lemma \ref{lemma:proximity-convex-ncvx}\label{subsec:Proof-of-Lemma2}}

Before proceeding, let us introduce some additional convenient notation.
Define 
\begin{equation}
\boldsymbol{Z}:=\boldsymbol{h}\boldsymbol{x}^{\mathsf{H}},\label{eq:defn-Z-hx}
\end{equation}
and denote by $T$ the tangent space of $\bm{Z}$, namely, 
\begin{equation}
T\coloneqq\left\{ \boldsymbol{X}:\boldsymbol{X}=\bm{h}\boldsymbol{v}^{\mathsf{H}}+\boldsymbol{u}\bm{x}^{\mathsf{H}},\boldsymbol{v}\in\mathbb{C}^{K},\boldsymbol{u}\in\mathbb{C}^{K}\right\} .\label{eq:defn-tangent-space}
\end{equation}
Further, define two associated projection operators as follows\begin{subequations}
\begin{align}
\mathcal{P}_{T}\left(\boldsymbol{X}\right) & :=\frac{1}{\|\bm{h}\|_{2}^{2}}\bm{h}\bm{h}^{\mathsf{H}}\boldsymbol{X}+\frac{1}{\|\bm{x}\|_{2}^{2}}\boldsymbol{X}\bm{x}\bm{x}^{\mathsf{H}}-\frac{1}{\|\bm{h}\|_{2}^{2}\|\bm{x}\|_{2}^{2}}\bm{h}\bm{h}^{\mathsf{H}}\bm{X}\bm{x}\bm{x}^{\mathsf{H}},\label{eq:defn-PT-projection}\\
\mathcal{P}_{T^{\perp}}\left(\boldsymbol{X}\right) & :=\left(\boldsymbol{I}-\frac{1}{\|\bm{h}\|_{2}^{2}}\bm{h}\bm{h}^{\mathsf{H}}\right)\boldsymbol{X}\left(\boldsymbol{I}-\frac{1}{\|\bm{x}\|_{2}^{2}}\bm{x}\bm{x}^{\mathsf{H}}\right).\label{eq:defn-PTperp-projection}
\end{align}
\end{subequations}

We further introduce a key lemma below. It proves useful in connecting the first order optimality
conditions of convex and nonconvex formulation. 

\begin{lemma}\label{lem:useful-3}Under the assumptions of Lemma
\ref{lemma:proximity-convex-ncvx}, one has 
\[
\text{\ensuremath{\mathcal{T}}\ensuremath{\left(\boldsymbol{h}\boldsymbol{x}^{\mathsf{H}}-\boldsymbol{h}^{\star}\boldsymbol{x}^{\mathsf{\star H}}\right)}}-\mathcal{A}^{*}\left(\boldsymbol{\xi}\right)=-\frac{\lambda}{\left\Vert \bm{h}\right\Vert _{2}\left\Vert \bm{x}\right\Vert _{2}}\boldsymbol{h}\boldsymbol{x}^{\mathsf{H}}+\boldsymbol{R},
\]
where $\boldsymbol{R}\in\mathbb{C}^{K\times K}$ is some residual
matrix satisfying
\[
\left\Vert \mathcal{P}_{T}\left(\boldsymbol{R}\right)\right\Vert _{\mathrm{F}}\leq2\left\Vert \nabla f\left(\boldsymbol{h},\boldsymbol{x}\right)\right\Vert _{2}\quad\quad\text{and\ensuremath{\quad}\quad\ensuremath{\left\Vert \mathcal{P}_{T^{\bot}}\left(\boldsymbol{R}\right)\right\Vert \leq}\ensuremath{\ensuremath{\lambda}/2}.}
\]
\end{lemma}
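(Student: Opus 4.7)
The plan is to start from the Wirtinger gradient formulas \eqref{eq:whessian} and rewrite them in terms of $\mathcal{T}$ and the noise. Using $y_{j} = \bm{b}_{j}^{\mathsf{H}}\bm{h}^{\star}\bm{x}^{\star\mathsf{H}}\bm{a}_{j}+\xi_{j}$, a direct calculation shows that
\[
\nabla_{\bm{h}}f(\bm{h},\bm{x}) - \lambda\bm{h} \;=\; \bm{G}\bm{x}, \qquad \nabla_{\bm{x}}f(\bm{h},\bm{x}) - \lambda\bm{x} \;=\; \bm{G}^{\mathsf{H}}\bm{h},
\]
where $\bm{G} := \mathcal{T}(\bm{h}\bm{x}^{\mathsf{H}}-\bm{h}^{\star}\bm{x}^{\star\mathsf{H}})-\mathcal{A}^{*}(\bm{\xi})$. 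This dictates the definition $\bm{R} := \bm{G} + \tfrac{\lambda}{\|\bm{h}\|_{2}\|\bm{x}\|_{2}}\bm{h}\bm{x}^{\mathsf{H}}$, which yields the claimed decomposition automatically. It only remains to control $\mathcal{P}_{T}(\bm{R})$ and $\mathcal{P}_{T^{\perp}}(\bm{R})$.

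For the tangent component I would substitute $\bm{G}\bm{x}$ and $\bm{G}^{\mathsf{H}}\bm{h}$ into the formula \eqref{eq:defn-PT-projection} for $\mathcal{P}_{T}$. The key is that the two rank-one terms $-\lambda\bm{h}$ and $-\lambda\bm{x}$ generate regularization contributions in $\mathcal{P}_{T}(\bm{G})$ equal to $-\tfrac{\lambda}{\|\bm{h}\|_{2}^{2}}\bm{h}\bm{x}^{\mathsf{H}} - \tfrac{\lambda}{\|\bm{x}\|_{2}^{2}}\bm{h}\bm{x}^{\mathsf{H}} + \tfrac{\lambda}{\|\bm{h}\|_{2}^{2}}\bm{h}\bm{x}^{\mathsf{H}}$ (the third from the ``cross'' term of $\mathcal{P}_{T}$). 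Under the balancing $\|\bm{h}\|_{2}=\|\bm{x}\|_{2}$ these collapse to exactly $-\tfrac{\lambda}{\|\bm{h}\|_{2}\|\bm{x}\|_{2}}\bm{h}\bm{x}^{\mathsf{H}}$, which is annihilated by adding the correction term in $\bm{R}$. What remains depends only on $\nabla_{\bm{h}}f$ and $\nabla_{\bm{x}}f$, and a triangle-inequality bound combined with the observation that \eqref{eq:hx-properties} forces $\|\bm{h}\|_{2}=\|\bm{x}\|_{2}\geq 1-O(\lambda+\sigma\sqrt{K\log m})\geq 1/\sqrt{2}$ delivers $\|\mathcal{P}_{T}(\bm{R})\|_{\mathrm{F}}\leq 2\|\nabla f(\bm{h},\bm{x})\|_{2}$.

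For the orthogonal component, note $\mathcal{P}_{T^{\perp}}(\bm{R})=\mathcal{P}_{T^{\perp}}(\bm{G})$ because the correction term lies in $T$. Since $\mathcal{P}_{T^{\perp}}(\bm{h}\bm{x}^{\mathsf{H}})=\bm{0}$ and $\mathcal{P}_{T^{\perp}}$ is an operator-norm contraction, I split
\[
\|\mathcal{P}_{T^{\perp}}(\bm{G})\| \leq \|\mathcal{P}_{T^{\perp}}(\bm{h}^{\star}\bm{x}^{\star\mathsf{H}})\| + \|\mathcal{T}(\bm{h}\bm{x}^{\mathsf{H}}-\bm{h}^{\star}\bm{x}^{\star\mathsf{H}})-(\bm{h}\bm{x}^{\mathsf{H}}-\bm{h}^{\star}\bm{x}^{\star\mathsf{H}})\| + \|\mathcal{A}^{*}(\bm{\xi})\|.
\]
Lemma \ref{lemma:T-uniform-mean} bounds the middle term by $\lambda/8$; a standard spectral bound gives $\|\mathcal{A}^{*}(\bm{\xi})\|\lesssim\sigma\sqrt{K\log m}\leq\lambda/8$ provided $C_{\lambda}$ is large enough; and the first term factors as
\[
\|\mathcal{P}_{T^{\perp}}(\bm{h}^{\star}\bm{x}^{\star\mathsf{H}})\| \leq \Big\|\Big(\bm{I}-\tfrac{\bm{h}\bm{h}^{\mathsf{H}}}{\|\bm{h}\|_{2}^{2}}\Big)\bm{h}^{\star}\Big\|_{2}\cdot\Big\|\Big(\bm{I}-\tfrac{\bm{x}\bm{x}^{\mathsf{H}}}{\|\bm{x}\|_{2}^{2}}\Big)\bm{x}^{\star}\Big\|_{2} \leq \|\bm{h}-\bm{h}^{\star}\|_{2}\,\|\bm{x}-\bm{x}^{\star}\|_{2},
\]
which by \eqref{eq:hx-properties} is $O((\lambda+\sigma\sqrt{K\log m})^{2})=o(\lambda)$. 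Summing these three contributions gives $\|\mathcal{P}_{T^{\perp}}(\bm{R})\|\leq\lambda/2$.

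The main obstacle is conceptual rather than technical: identifying that $-\tfrac{\lambda}{\|\bm{h}\|_{2}\|\bm{x}\|_{2}}\bm{h}\bm{x}^{\mathsf{H}}$ is precisely the tangent-space correction that cancels the regularization contributions in $\mathcal{P}_{T}(\bm{G})$ under the balancing condition $\|\bm{h}\|_{2}=\|\bm{x}\|_{2}$. Once this correction is in place, $\|\mathcal{P}_{T}(\bm{R})\|_{\mathrm{F}}$ reduces to an explicit sum of gradient-valued rank-one matrices, while $\|\mathcal{P}_{T^{\perp}}(\bm{R})\|$ reduces to three small pieces: the uniform operator bound of Lemma \ref{lemma:T-uniform-mean}, a standard noise concentration bound absorbed by choosing $C_{\lambda}$ large, and a quadratic-in-error proximity bound that is negligible under the hypotheses \eqref{eq:hx-properties}. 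No new concentration argument is needed.
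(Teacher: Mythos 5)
Your proof is correct. The decomposition and the bound on $\mathcal{P}_T(\bm{R})$ follow essentially the same route as the paper: both hinge on the gradient identities $\nabla_{\bm h}f=\bm G\bm x+\lambda\bm h$, $\nabla_{\bm x}f=\bm G^{\mathsf{H}}\bm h+\lambda\bm x$ together with the balancing $\|\bm h\|_2=\|\bm x\|_2$, yielding $\bm R\bm x=\nabla_{\bm h}f$ and $\bm R^{\mathsf{H}}\bm h=\nabla_{\bm x}f$ and hence $\|\mathcal{P}_T(\bm R)\|_{\mathrm{F}}\le(\|\nabla_{\bm h}f\|_2+\|\nabla_{\bm x}f\|_2)/\|\bm h\|_2\le 2\|\nabla f\|_2$ once $\|\bm h\|_2\gtrsim 1$ is noted.

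Where you genuinely diverge from the paper is the orthogonal component. The paper writes the identity $\bm h^\star\bm x^{\star\mathsf{H}}+\mathcal{T}^{\mathsf{debias}}(\bm h^\star\bm x^{\star\mathsf{H}}-\bm h\bm x^{\mathsf{H}})+\mathcal{A}^*(\bm\xi)-\mathcal{P}_T(\bm R)=\bm p(\|\bm h\|_2\|\bm x\|_2+\lambda\|\bm h\|_2/\|\bm x\|_2)\bm q^{\mathsf{H}}+\mathcal{P}_{T^\perp}(\bm R)$, bounds the operator norm of the middle quantity by $\lambda/2$, and then invokes Weyl's inequality to argue that all but the top singular value of the right-hand side is below $\lambda/2$; since the left/right singular vectors of $\mathcal{P}_{T^\perp}(\bm R)$ are orthogonal to $\bm p$ and $\bm q$, those residual singular values are exactly the singular values of $\mathcal{P}_{T^\perp}(\bm R)$. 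You bypass the singular-value bookkeeping entirely: after observing $\mathcal{P}_{T^\perp}(\bm h\bm x^{\mathsf{H}})=\bm 0$ and that $\mathcal{P}_{T^\perp}$ is an operator-norm contraction, you split $\mathcal{P}_{T^\perp}(\bm G)$ into three pieces bounded respectively by $\|\bm h-\bm h^\star\|_2\|\bm x-\bm x^\star\|_2$ (quadratic, hence $o(\lambda)$), $\lambda/8$ via Lemma \ref{lemma:T-uniform-mean}, and $\lambda/8$ via Lemma \ref{lemma:noise}. This is cleaner and more direct; it also makes explicit which hypothesis controls which term. The two arguments use the same three ingredients (the debiasing lemma, the noise concentration, and the proximity estimates), so the savings are in presentation rather than in assumptions.

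One small thing to keep precise in a final write-up: the bound $\|\mathcal{P}_T(\bm R)\|_{\mathrm{F}}\le 2\|\nabla f\|_2$ uses both Cauchy--Schwarz (to get $\|\nabla_{\bm h}f\|_2+\|\nabla_{\bm x}f\|_2\le\sqrt 2\,\|\nabla f\|_2$) and a lower bound on $\|\bm h\|_2$; you should state the threshold $\|\bm h\|_2\ge 1/\sqrt 2$ explicitly (as you do) since otherwise the factor of $2$ could slip.
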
\begin{proof}See Appendix \ref{subsec:Proof-of-useful-3}.\end{proof}

With these supporting lemmas in hand, we are ready to prove Lemma
\ref{lemma:proximity-convex-ncvx}. Suppose $\boldsymbol{Z}_{\mathsf{cvx}}$
is the minimizer of \eqref{eq:objcvx}.
\begin{enumerate}
\item Let $\boldsymbol{\Delta}\coloneqq\boldsymbol{Z}_{\mathsf{cvx}}-\boldsymbol{h}\boldsymbol{x}^{\mathsf{H}}$.
The optimality of $\bm{Z}_{\mathsf{cvx}}$ yields that 
\[
\left\Vert \mathcal{A}\left(\boldsymbol{h}\boldsymbol{x}^{\mathsf{H}}+\boldsymbol{\Delta}-\boldsymbol{h}^{\star}\boldsymbol{x}^{\star\mathsf{H}}\right)-\bm{\xi}\right\Vert _{2}^{2}+2\lambda\left\Vert \boldsymbol{h}\boldsymbol{x}^{\mathsf{H}}+\boldsymbol{\Delta}\right\Vert _{*}\leq\left\Vert \mathcal{A}\left(\boldsymbol{h}\boldsymbol{x}^{\mathsf{H}}-\boldsymbol{h}^{\star}\boldsymbol{x}^{\star\mathsf{H}}\right)-\bm{\xi}\right\Vert _{2}^{2}+2\lambda\left\Vert \boldsymbol{h}\boldsymbol{x}^{\mathsf{H}}\right\Vert _{*}.
\]
By simple calculation, it leads to
\[
\left\Vert \mathcal{A}\left(\boldsymbol{\Delta}\right)\right\Vert _{2}^{2}\leq-\left\langle \mathcal{T}\left(\boldsymbol{h}\boldsymbol{x}^{\mathsf{H}}-\boldsymbol{h}^{\star}\boldsymbol{x}^{\star\mathsf{H}}\right)-\mathcal{A}^{*}\left(\bm{\xi}\right),\boldsymbol{\Delta}\right\rangle +2\lambda\left\Vert \boldsymbol{h}\boldsymbol{x}^{\mathsf{H}}\right\Vert _{*}-2\lambda\left\Vert \boldsymbol{h}\boldsymbol{x}^{\mathsf{H}}+\boldsymbol{\Delta}\right\Vert _{*}.
\]
The convexity of the nuclear norm gives that for any $\boldsymbol{W}\in\boldsymbol{T}^{\bot}$
with $\left\Vert \boldsymbol{W}\right\Vert \leq1$, there holds
\[
\left\Vert \boldsymbol{h}\boldsymbol{x}^{\mathsf{H}}+\boldsymbol{\Delta}\right\Vert _{*}\geq\left\Vert \boldsymbol{h}\boldsymbol{x}^{\mathsf{H}}\right\Vert _{*}+\left\langle \boldsymbol{p}\boldsymbol{q}^{\mathsf{H}}+\boldsymbol{W},\boldsymbol{\Delta}\right\rangle ,
\]
where we denote by $\boldsymbol{p}\coloneqq\boldsymbol{h}/\left\Vert \boldsymbol{h}\right\Vert _{2}$
and $\boldsymbol{q}\coloneqq\boldsymbol{x}/\left\Vert \boldsymbol{x}\right\Vert _{2}$.
We choose $\boldsymbol{W}$ such that $\left\langle \boldsymbol{W},\boldsymbol{\Delta}\right\rangle =\left\Vert \mathcal{P}_{\boldsymbol{T}^{\bot}}\left(\boldsymbol{\Delta}\right)\right\Vert _{*}$.
Then, combining the above two equations gives rise to 
\begin{align}
0\leq\left\Vert \mathcal{A}\left(\boldsymbol{\Delta}\right)\right\Vert _{2}^{2} & \leq-\left\langle \mathcal{T}\left(\boldsymbol{h}\boldsymbol{x}^{\mathsf{H}}-\boldsymbol{h}^{\star}\boldsymbol{x}^{\star\mathsf{H}}\right)-\mathcal{A}^{*}\left(\bm{\xi}\right),\boldsymbol{\Delta}\right\rangle -2\lambda\left\langle \boldsymbol{p}\boldsymbol{q}^{\mathsf{H}}+\boldsymbol{W},\boldsymbol{\Delta}\right\rangle \nonumber \\
 & =-\left\langle \mathcal{T}\left(\boldsymbol{h}\boldsymbol{x}^{\mathsf{H}}-\boldsymbol{h}^{\star}\boldsymbol{x}^{\star\mathsf{H}}\right)-\mathcal{A}^{*}\left(\bm{\xi}\right),\boldsymbol{\Delta}\right\rangle -2\lambda\left\langle \boldsymbol{p}\boldsymbol{q}^{\mathsf{H}},\boldsymbol{\Delta}\right\rangle -2\lambda\left\Vert \mathcal{P}_{\boldsymbol{T}^{\bot}}\left(\boldsymbol{\Delta}\right)\right\Vert _{*}\nonumber \\
 & \overset{\left(\text{i}\right)}{=}-\left\langle \boldsymbol{R},\boldsymbol{\Delta}\right\rangle -2\lambda\left\Vert \mathcal{P}_{\boldsymbol{T}^{\bot}}\left(\boldsymbol{\Delta}\right)\right\Vert _{*}\nonumber \\
 & =-\left\langle \mathcal{P}_{\boldsymbol{T}}\left(\boldsymbol{R}\right),\boldsymbol{\Delta}\right\rangle -\left\langle \mathcal{P}_{\boldsymbol{T}^{\bot}}\left(\boldsymbol{R}\right),\boldsymbol{\Delta}\right\rangle -2\lambda\left\Vert \mathcal{P}_{\boldsymbol{T}^{\bot}}\left(\boldsymbol{\Delta}\right)\right\Vert _{*},\label{eq:lem-connect-1}
\end{align}
where $\bm{R}$ in (i) is defined in Lemma \ref{lem:useful-3}. Hence,
\begin{align*}
 & -\left\Vert \mathcal{P}_{\boldsymbol{T}}\left(\boldsymbol{R}\right)\right\Vert _{\text{F}}\left\Vert \mathcal{P}_{\boldsymbol{T}}\left(\boldsymbol{\Delta}\right)\right\Vert _{\text{F}}-\left\Vert \mathcal{P}_{\boldsymbol{T}^{\bot}}\left(\boldsymbol{R}\right)\right\Vert \left\Vert \mathcal{P}_{\boldsymbol{T}^{\bot}}\left(\boldsymbol{\Delta}\right)\right\Vert _{*}+2\lambda\left\Vert \mathcal{P}_{\boldsymbol{T}^{\bot}}\left(\boldsymbol{\Delta}\right)\right\Vert _{*}\\
\leq & \left\langle \mathcal{P}_{\boldsymbol{T}}\left(\boldsymbol{R}\right),\boldsymbol{\Delta}\right\rangle +\left\langle \mathcal{P}_{\boldsymbol{T}^{\bot}}\left(\boldsymbol{R}\right),\boldsymbol{\Delta}\right\rangle +2\lambda\left\Vert \mathcal{P}_{\boldsymbol{T}^{\bot}}\left(\boldsymbol{\Delta}\right)\right\Vert _{*}\leq0.
\end{align*}
Lemma \ref{lem:useful-3} gives $\left\Vert \mathcal{P}_{\boldsymbol{T}^{\bot}}\left(\boldsymbol{R}\right)\right\Vert \leq\lambda/2$,
then we have
\[
\left\Vert \mathcal{P}_{\boldsymbol{T}}\left(\boldsymbol{R}\right)\right\Vert _{\text{F}}\left\Vert \mathcal{P}_{\boldsymbol{T}}\left(\boldsymbol{\Delta}\right)\right\Vert _{\text{F}}\geq-\left\Vert \mathcal{P}_{\boldsymbol{T}^{\bot}}\left(\boldsymbol{R}\right)\right\Vert \left\Vert \mathcal{P}_{\boldsymbol{T}^{\bot}}\left(\boldsymbol{\Delta}\right)\right\Vert _{*}+2\lambda\left\Vert \mathcal{P}_{\boldsymbol{T}^{\bot}}\left(\boldsymbol{\Delta}\right)\right\Vert _{*}\geq\frac{3\lambda}{2}\left\Vert \mathcal{P}_{\boldsymbol{T}^{\bot}}\left(\boldsymbol{\Delta}\right)\right\Vert _{*},
\]
and it immediately reveals that
\begin{align*}
\left\Vert \mathcal{P}_{\boldsymbol{T}^{\bot}}\left(\boldsymbol{\Delta}\right)\right\Vert _{*} & \leq\frac{2}{3\lambda}\left\Vert \mathcal{P}_{\boldsymbol{T}}\left(\boldsymbol{R}\right)\right\Vert _{\text{F}}\left\Vert \mathcal{P}_{\boldsymbol{T}}\left(\boldsymbol{\Delta}\right)\right\Vert _{\text{F}}\\
 & \leq\frac{4}{3\lambda}\left\Vert \nabla f\left(\boldsymbol{h},\boldsymbol{x}\right)\right\Vert _{2}\left\Vert \mathcal{P}_{\boldsymbol{T}}\left(\boldsymbol{\Delta}\right)\right\Vert _{\text{F}}\\
 & \leq C\frac{4}{3m^{10}}\left\Vert \mathcal{P}_{\boldsymbol{T}}\left(\boldsymbol{\Delta}\right)\right\Vert _{\text{F}},
\end{align*}
where the second inequality invokes Lemma \ref{lem:useful-3}. We
then arrive at
\begin{equation}
\left\Vert \mathcal{P}_{\boldsymbol{T}^{\bot}}\left(\boldsymbol{\Delta}\right)\right\Vert _{\text{F}}\leq\left\Vert \mathcal{P}_{\boldsymbol{T}^{\bot}}\left(\boldsymbol{\Delta}\right)\right\Vert _{*}\leq C\frac{4}{3m^{10}}\left\Vert \mathcal{P}_{\boldsymbol{T}}\left(\boldsymbol{\Delta}\right)\right\Vert _{\text{F}}\leq\left\Vert \mathcal{P}_{\boldsymbol{T}}\left(\boldsymbol{\Delta}\right)\right\Vert _{\text{F}}.\label{eq:lemma-connect-step1}
\end{equation}
\item Next, we return to \eqref{eq:lem-connect-1} to deduce that 
\begin{align}
\left\Vert \mathcal{A}\left(\boldsymbol{\Delta}\right)\right\Vert _{2}^{2} & \leq-\left\langle \mathcal{P}_{\boldsymbol{T}}\left(\boldsymbol{R}\right),\boldsymbol{\Delta}\right\rangle -\left\langle \mathcal{P}_{\boldsymbol{T}^{\bot}}\left(\boldsymbol{R}\right),\boldsymbol{\Delta}\right\rangle -2\lambda\left\Vert \mathcal{P}_{\boldsymbol{T}^{\bot}}\left(\boldsymbol{\Delta}\right)\right\Vert _{*}\nonumber \\
 & \leq\left\Vert \mathcal{P}_{\boldsymbol{T}}\left(\boldsymbol{R}\right)\right\Vert _{\text{F}}\left\Vert \mathcal{P}_{\boldsymbol{T}}\left(\boldsymbol{\Delta}\right)\right\Vert _{\text{F}}+\left\Vert \mathcal{P}_{\boldsymbol{T}^{\bot}}\left(\boldsymbol{R}\right)\right\Vert \left\Vert \mathcal{P}_{\boldsymbol{T}^{\bot}}\left(\boldsymbol{\Delta}\right)\right\Vert _{*}-2\lambda\left\Vert \mathcal{P}_{\boldsymbol{T}^{\bot}}\left(\boldsymbol{\Delta}\right)\right\Vert _{*}\\
 & \overset{(\text{i})}{\leq}\left\Vert \mathcal{P}_{\boldsymbol{T}}\left(\boldsymbol{R}\right)\right\Vert _{\text{F}}\left\Vert \mathcal{P}_{\boldsymbol{T}}\left(\boldsymbol{\Delta}\right)\right\Vert _{\text{F}}-\frac{3\lambda}{2}\left\Vert \mathcal{P}_{\boldsymbol{T}^{\bot}}\left(\boldsymbol{\Delta}\right)\right\Vert _{*}\nonumber \\
 & \leq\left\Vert \mathcal{P}_{\boldsymbol{T}}\left(\boldsymbol{R}\right)\right\Vert _{\text{F}}\left\Vert \mathcal{P}_{\boldsymbol{T}}\left(\boldsymbol{\Delta}\right)\right\Vert _{\text{F}}\label{eq:lemma2step2}\\
 & \overset{(\text{ii})}{\leq}2\left\Vert \nabla f\left(\boldsymbol{h},\boldsymbol{x}\right)\right\Vert _{2}\left\Vert \boldsymbol{\Delta}\right\Vert _{\text{F}},
\end{align}
where (i) and (ii) come from Lemma \ref{lem:useful-3}.
\item For the final step, we turn to lower bound $\left\Vert \mathcal{A}\left(\boldsymbol{\Delta}\right)\right\Vert _{\text{F}}$.
One has
\begin{align}
\left\Vert \mathcal{A}\left(\boldsymbol{\Delta}\right)\right\Vert _{2} & =\left\Vert \mathcal{A}\left(\mathcal{P}_{\boldsymbol{T}}\left(\boldsymbol{\Delta}\right)\right)+\mathcal{A}\left(\mathcal{P}_{\boldsymbol{T}^{\bot}}\left(\boldsymbol{\Delta}\right)\right)\right\Vert _{2}\nonumber \\
 & \geq\left\Vert \mathcal{A}\left(\mathcal{P}_{\boldsymbol{T}}\left(\boldsymbol{\Delta}\right)\right)\right\Vert _{2}-\left\Vert \mathcal{A}\left(\mathcal{P}_{\boldsymbol{T}^{\bot}}\left(\boldsymbol{\Delta}\right)\right)\right\Vert _{2}\nonumber \\
 & \geq\left\Vert \mathcal{P}_{\boldsymbol{T}}\left(\boldsymbol{\Delta}\right)\right\Vert _{\text{F}}/4-\sqrt{2K\log K+\gamma\log m}\left\Vert \mathcal{P}_{\boldsymbol{T}^{\bot}}\left(\boldsymbol{\Delta}\right)\right\Vert _{\text{F}},\label{eq:lemma-connect-step3}
\end{align}
where the last inequality comes from Lemma \ref{lemma:inj} and Lemma
\ref{lemma:normbound}. Since \eqref{eq:lemma-connect-step1} gives
\[
\sqrt{2K\log K+\gamma\log m}\left\Vert \mathcal{P}_{\boldsymbol{T}^{\bot}}\left(\boldsymbol{\Delta}\right)\right\Vert _{\text{F}}\leq\sqrt{2K\log K+\gamma\log m}\times C\frac{4}{3m^{10}}\left\Vert \mathcal{P}_{\boldsymbol{T}}\left(\boldsymbol{\Delta}\right)\right\Vert _{\text{F}}\leq\frac{1}{8}\left\Vert \mathcal{P}_{\boldsymbol{T}}\left(\boldsymbol{\Delta}\right)\right\Vert _{\text{F}},
\]
as long as $m\gg K$, \eqref{eq:lemma-connect-step3} yields
\[
\left\Vert \mathcal{A}\left(\boldsymbol{\Delta}\right)\right\Vert _{2}\geq\frac{1}{8}\left\Vert \mathcal{P}_{\boldsymbol{T}}\left(\boldsymbol{\Delta}\right)\right\Vert _{\text{F}}.
\]
In addition, \eqref{eq:lemma-connect-step1} implies
\[
\left\Vert \boldsymbol{\Delta}\right\Vert _{\text{F}}\leq\left\Vert \mathcal{P}_{\boldsymbol{T}}\left(\boldsymbol{\Delta}\right)\right\Vert _{\text{F}}+\left\Vert \mathcal{P}_{\boldsymbol{T}^{\bot}}\left(\boldsymbol{\Delta}\right)\right\Vert _{\text{F}}\leq2\left\Vert \mathcal{P}_{\boldsymbol{T}}\left(\boldsymbol{\Delta}\right)\right\Vert _{\text{F}}.
\]
Consequently,
\begin{equation}
\left\Vert \mathcal{A}\left(\boldsymbol{\Delta}\right)\right\Vert _{2}\geq\frac{1}{8}\left\Vert \mathcal{P}_{\boldsymbol{T}}\left(\boldsymbol{\Delta}\right)\right\Vert _{\text{F}}\geq\frac{1}{16}\left\Vert \boldsymbol{\Delta}\right\Vert _{\text{F}}.\label{eq:lemma2step3}
\end{equation}
\end{enumerate}
Combining \eqref{eq:lemma2step2} and \eqref{eq:lemma2step3}, we
have 
\[
\frac{1}{256}\left\Vert \boldsymbol{\Delta}\right\Vert _{\text{F}}^{2}\leq\left\Vert \mathcal{A}\left(\boldsymbol{\Delta}\right)\right\Vert _{2}^{2}\leq2\left\Vert \nabla f\left(\boldsymbol{h},\boldsymbol{x}\right)\right\Vert _{2}\left\Vert \boldsymbol{\Delta}\right\Vert _{\text{F}},
\]
and therefore
\[
\left\Vert \boldsymbol{\Delta}\right\Vert _{\text{F}}\lesssim\left\Vert \nabla f\left(\boldsymbol{h},\boldsymbol{x}\right)\right\Vert _{2}.
\]

\subsubsection{Proof of Lemma \ref{lem:useful-3}\label{subsec:Proof-of-useful-3}}

Recall the definition of $\mathcal{T}^{\mathsf{debias}}$ in \eqref{eq:operatordef}.
Letting
\begin{equation}
\boldsymbol{p}=\frac{1}{\left\Vert \boldsymbol{h}\right\Vert _{2}}\boldsymbol{h}\qquad\text{and}\qquad\boldsymbol{q}=\frac{1}{\left\Vert \boldsymbol{x}\right\Vert _{2}}\boldsymbol{x}\label{eq:defn-p-q-unit}
\end{equation}
and rearranging terms, we can write
\begin{align}
\boldsymbol{h}^{\star}\boldsymbol{x}^{\star\mathsf{H}}+\mathcal{T}^{\mathsf{debias}}\left(\boldsymbol{h}^{\star}\boldsymbol{x}^{\star\mathsf{H}}-\boldsymbol{hx}^{\mathsf{H}}\right)+\mathcal{A}^{*}\left(\boldsymbol{\xi}\right) & =\boldsymbol{hx}^{\mathsf{H}}+\lambda\boldsymbol{p}\boldsymbol{q}^{\mathsf{H}}+\boldsymbol{R}\label{eq:lemma-useful-31}
\end{align}
for some matrix $\boldsymbol{R}$. In addition, in view of the small
gradient assumption \eqref{eq:small-gradient}, one has\begin{subequations}\label{eq:lemma-useful-3-defn-r}
	\begin{align}
	\left[\boldsymbol{h}^{\star}\boldsymbol{x}^{\star\mathsf{H}}+\mathcal{T}^{\mathsf{debias}}\left(\boldsymbol{h}^{\star}\boldsymbol{x}^{\star\mathsf{H}}-\boldsymbol{hx}^{\mathsf{H}}\right)+\mathcal{A}^{*}\left(\boldsymbol{\xi}\right)\right]\boldsymbol{x} & =\boldsymbol{hx}^{\mathsf{H}}\boldsymbol{x}+\lambda\boldsymbol{h}-\boldsymbol{r}_{1}\\
	\left[\boldsymbol{h}^{\star}\boldsymbol{x}^{\star\mathsf{H}}+\mathcal{T}^{\mathsf{debias}}\left(\boldsymbol{h}^{\star}\boldsymbol{x}^{\star\mathsf{H}}-\boldsymbol{hx}^{\mathsf{H}}\right)+\mathcal{A}^{*}\left(\boldsymbol{\xi}\right)\right]^{\mathsf{H}}\boldsymbol{h} & =\boldsymbol{x}\boldsymbol{h}^{\mathsf{H}}\boldsymbol{h}+\lambda\boldsymbol{x}-\boldsymbol{r}_{2}
	\end{align}
\end{subequations}for some vectors $\bm{r}_{1},\bm{r}_{2}\in\mathbb{C}^{K}$
obeying\begin{subequations}\label{subequation:small-r1-r2}
	\begin{align}
	\left\Vert \bm{r}_{1}\right\Vert _{2}=\left\Vert \lambda\boldsymbol{h}-\left(\mathcal{T}\left(\boldsymbol{h}^{\star}\boldsymbol{x}^{\star\mathsf{H}}-\boldsymbol{hx}^{\mathsf{H}}\right)+\mathcal{A}^{*}\left(\boldsymbol{\xi}\right)\right)\boldsymbol{x}\right\Vert _{2} & \leq\left\Vert \nabla f\left(\boldsymbol{h},\boldsymbol{x}\right)\right\Vert _{2}\leq C\frac{\lambda}{m^{10}},\label{eq:smallr1}\\
	\left\Vert \bm{r}_{2}\right\Vert _{2}=\left\Vert \lambda\boldsymbol{x}-\left(\mathcal{T}\left(\boldsymbol{h}^{\star}\boldsymbol{x}^{\star\mathsf{H}}-\boldsymbol{hx}^{\mathsf{H}}\right)+\mathcal{A}^{*}\left(\boldsymbol{\xi}\right)\right)^{\mathsf{H}}\boldsymbol{h}\right\Vert _{2} & \leq\left\Vert \nabla f\left(\boldsymbol{h},\boldsymbol{x}\right)\right\Vert _{2}\leq C\frac{\lambda}{m^{10}}.\label{eq:smallr2}
	\end{align}
\end{subequations}In what follows, we make of these properties to
control the size of $\bm{R}$. 
\begin{enumerate}
	\item We start by upper bounding $\left\Vert \mathcal{P}_{T}\left(\bm{R}\right)\right\Vert _{\text{F}}$
	as follows
	\begin{align*}
	\left\Vert \mathcal{P}_{T}\left(\bm{R}\right)\right\Vert _{\text{F}} & =\left\Vert \boldsymbol{p}\boldsymbol{p}^{\mathsf{H}}\bm{R}\left(\boldsymbol{I}_{K}-\boldsymbol{q}\boldsymbol{q}^{\mathsf{H}}\right)+\bm{R}\boldsymbol{q}\boldsymbol{q}^{\mathsf{H}}\right\Vert _{\text{F}}\\
	& \leq\left\Vert \boldsymbol{p}\right\Vert _{2}\left\Vert \boldsymbol{p}^{\mathsf{H}}\bm{R}\right\Vert _{2}\left\Vert \boldsymbol{I}_{K}-\boldsymbol{q}\boldsymbol{q}^{\mathsf{H}}\right\Vert +\left\Vert \bm{R}\boldsymbol{q}\right\Vert _{2}\left\Vert \boldsymbol{q}\right\Vert _{2}\\
	& \leq\left\Vert \boldsymbol{p}^{\mathsf{H}}\bm{R}\right\Vert _{2}+\left\Vert \bm{R}\boldsymbol{q}\right\Vert _{2},
	\end{align*}
	where $\bm{p}$ and $\bm{q}$ are unit vectors defined in \eqref{eq:defn-p-q-unit}.
	Recognizing that $\left\Vert \bm{h}\right\Vert _{2}=\left\Vert \bm{x}\right\Vert _{2}$
	(cf.~\eqref{eq:hx-properties}), we can use \eqref{eq:lemma-useful-31}
	and \eqref{eq:lemma-useful-3-defn-r} to obtain
	\begin{align*}
	\bm{R}^{\mathsf{H}}\boldsymbol{p} & =-\frac{\boldsymbol{r}_{2}}{\left\Vert \boldsymbol{h}\right\Vert _{2}}+\lambda\frac{\left\Vert \boldsymbol{x}\right\Vert _{2}}{\left\Vert \boldsymbol{h}\right\Vert _{2}}\bm{q}-\lambda\frac{\left\Vert \boldsymbol{h}\right\Vert _{2}}{\left\Vert \boldsymbol{x}\right\Vert _{2}}\bm{q}=-\frac{\boldsymbol{r}_{2}}{\left\Vert \boldsymbol{h}\right\Vert _{2}}\quad\text{and\ensuremath{\quad\bm{R}\boldsymbol{q}=-\frac{\boldsymbol{r}_{1}}{\left\Vert \boldsymbol{x}\right\Vert _{2}}}}.
	\end{align*}
	These together with \eqref{subequation:small-r1-r2} yield
	\begin{equation}
	\left\Vert \mathcal{P}_{T}\left(\bm{R}\right)\right\Vert _{\text{F}}\leq\left\Vert \boldsymbol{p}^{\mathsf{H}}\bm{R}\right\Vert _{2}+\left\Vert \bm{R}\boldsymbol{q}\right\Vert _{2}\leq2\left\Vert \nabla f\left(\boldsymbol{h},\boldsymbol{x}\right)\right\Vert _{2}\leq2C\frac{\lambda}{m^{10}}.\label{eq:UB_PTR_123}
	\end{equation}
	
	\item We them move on to control $\mathcal{P}_{T^{\bot}}\left(\boldsymbol{R}\right)$.
	Continue the relation \eqref{eq:lemma-useful-31} to derive
	\begin{equation}
	\boldsymbol{h}^{\star}\boldsymbol{x}^{\star\mathsf{H}}+\mathcal{T}^{\mathsf{debias}}\left(\boldsymbol{h}^{\star}\boldsymbol{x}^{\star\mathsf{H}}-\boldsymbol{hx}^{\mathsf{H}}\right)+\mathcal{A}^{*}\left(\boldsymbol{\xi}\right)-\mathcal{P}_{T}\left(\bm{R}\right)=\boldsymbol{p}\left(\left\Vert \boldsymbol{h}\right\Vert _{2}\left\Vert \boldsymbol{x}\right\Vert _{2}+\lambda\frac{\left\Vert \boldsymbol{h}\right\Vert _{2}}{\left\Vert \boldsymbol{x}\right\Vert _{2}}\right)\boldsymbol{q}^{\mathsf{H}}+\mathcal{P}_{T^{\bot}}\left(\bm{R}\right),\label{eq:lemma-useful-32}
	\end{equation}
	where we have used the assumption $\left\Vert \bm{h}\right\Vert _{2}/\left\Vert \bm{x}\right\Vert _{2}=1$
	(cf.~\eqref{eq:hx-properties}). Combine this with Lemma \ref{lemma:T-uniform-mean},
	Lemma \ref{lemma:noise} and \eqref{eq:UB_PTR_123} to derive
	\begin{align*}
	\left\Vert \mathcal{T}^{\mathsf{debias}}\left(\boldsymbol{h}^{\star}\boldsymbol{x}^{\star\mathsf{H}}-\boldsymbol{hx}^{\mathsf{H}}\right)+\mathcal{A}^{*}\left(\boldsymbol{\xi}\right)-\mathcal{P}_{T}\left(\bm{R}\right)\right\Vert  & \le\left\Vert \mathcal{T}^{\mathsf{debias}}\left(\boldsymbol{h}^{\star}\boldsymbol{x}^{\star\mathsf{H}}-\boldsymbol{hx}^{\mathsf{H}}\right)\right\Vert +\left\Vert \mathcal{A}^{*}\left(\boldsymbol{\xi}\right)\right\Vert +\left\Vert \mathcal{P}_{T}\left(\bm{R}\right)\right\Vert _{\text{F}}\\
	& \leq\frac{\lambda}{8}+\frac{\lambda}{8}+2C\frac{\lambda}{m^{10}}\\
	& <\frac{\lambda}{2},
	\end{align*}
	where the last inequality invokes the assumption \eqref{eq:smallgradient}.
	Invoking \eqref{eq:lemma-useful-32} and Weyl's inequality give
	\begin{align*}
	\sigma_{i}\left[\boldsymbol{p}\left(\left\Vert \boldsymbol{h}\right\Vert _{2}\left\Vert \boldsymbol{x}\right\Vert _{2}+\lambda\frac{\left\Vert \boldsymbol{h}\right\Vert _{2}}{\left\Vert \boldsymbol{x}\right\Vert _{2}}\right)\bm{q}^{\mathsf{H}}+\mathcal{P}_{T^{\bot}}\left(\bm{R}\right)\right] & \leq\sigma_{i}\left(\boldsymbol{h}^{\star}\boldsymbol{x}^{\star\mathsf{H}}\right)+\left\Vert \mathcal{T}^{\mathsf{debias}}\left(\boldsymbol{h}^{\star}\boldsymbol{x}^{\star\mathsf{H}}-\boldsymbol{hx}^{\mathsf{H}}\right)+\mathcal{A}^{*}\left(\boldsymbol{\xi}\right)-\mathcal{P}_{T}\left(\bm{R}\right)\right\Vert \\
	& <\lambda/2,
	\end{align*}
	for $K\geq i\geq2$. Additionally, when $i=1$, we have 
	\[
	\sigma_{1}\left[\boldsymbol{p}\left(\left\Vert \boldsymbol{h}\right\Vert _{2}\left\Vert \boldsymbol{x}\right\Vert _{2}+\lambda\frac{\left\Vert \boldsymbol{h}\right\Vert _{2}}{\left\Vert \boldsymbol{x}\right\Vert _{2}}\right)\bm{q}^{\mathsf{H}}\right]=\left\Vert \boldsymbol{h}\right\Vert _{2}\left\Vert \boldsymbol{x}\right\Vert _{2}+\lambda\frac{\left\Vert \boldsymbol{h}\right\Vert _{2}}{\left\Vert \boldsymbol{x}\right\Vert _{2}}\geq\lambda/2.
	\]
	This indicates that at least $K-1$ singular values of $\boldsymbol{p}\left(\left\Vert \boldsymbol{h}\right\Vert _{2}\left\Vert \boldsymbol{x}\right\Vert _{2}+\lambda\left\Vert \boldsymbol{h}\right\Vert _{2}/\left\Vert \boldsymbol{x}\right\Vert _{2}\right)\bm{q}^{\mathsf{H}}+\mathcal{P}_{T^{\bot}}\left(\bm{R}\right)$
	are no larger than $\lambda/2$, and these singular values cannot
	correspond to the direction of $\bm{p}\bm{q}^{\mathsf{H}}$. As a
	consequence, we conclude that
	\[
	\left\Vert \mathcal{P}_{T^{\bot}}\left(\boldsymbol{R}\right)\right\Vert \le\lambda/2.
	\]
\end{enumerate}

\subsection{Proof of Lemma \ref{lemma:T-uniform-mean}\label{subsec:Proof-of-Lemma8innoisy}}
For notational convenience, we define $\mathcal{T}^{\mathsf{debias}}$ by subtracting the expectation from $\mathcal{T}$ as follows:
\begin{align}
\mathcal{T}^{\mathsf{debias}}\left(\boldsymbol{Z}\right) & \coloneqq\mathcal{T}\left(\boldsymbol{Z}\right)-\bm{Z}=\left(\mathcal{A}^{*}\mathcal{A}-\mathcal{I}\right)\left(\bm{Z}\right)=\sum_{j=1}^{m}\boldsymbol{b}_{j}\boldsymbol{b}_{j}^{\mathsf{H}}\bm{Z}\boldsymbol{a}_{j}\boldsymbol{a}_{j}^{\mathsf{H}}-\boldsymbol{Z}.\nonumber 
\end{align}

For any
fixed vectors $\bm{h}$ and $\bm{x}$, we make note of the following
decomposition 
\begin{align*}
\bm{h}\bm{x}^{\mathsf{H}}-\bm{h}^{\star}\bm{x}^{\star\mathsf{H}} & =\left(\bm{\Delta}_{\bm{h}}+\bm{h}^{\star}\right)\left(\bm{\Delta}_{\bm{x}}+\bm{x}^{\star}\right)^{\mathsf{H}}-\bm{h}^{\star}\bm{x}^{\star\mathsf{H}}\\
& =\bm{h}^{\star}\bm{\Delta}_{\bm{x}}^{\mathsf{H}}+\bm{\Delta}_{\bm{h}}\bm{x}^{\star\mathsf{H}}+\bm{\Delta}_{\bm{h}}\bm{\Delta}_{\bm{x}}^{\mathsf{H}},
\end{align*}
which together with the triangle inequality gives
\begin{align*}
\left\Vert \mathcal{T}^{\mathsf{debias}}\left(\bm{h}\bm{x}^{\mathsf{H}}-\bm{h}^{\star}\bm{x}^{\star\mathsf{H}}\right)\right\Vert  & \leq\underbrace{\left\Vert \mathcal{T}^{\mathsf{debias}}\left(\bm{h}^{\star}\bm{\Delta}_{\bm{x}}^{\mathsf{H}}\right)\right\Vert }_{\eqqcolon\beta_{1}}+\underbrace{\left\Vert \mathcal{T}^{\mathsf{debias}}\left(\bm{\Delta}_{\bm{h}}\bm{x}^{\star}{}^{\mathsf{H}}\right)\right\Vert }_{\eqqcolon\beta_{2}}+\underbrace{\left\Vert \mathcal{T}^{\mathsf{debias}}\left(\bm{\Delta}_{\bm{h}}\bm{\Delta}_{\bm{x}}^{\mathsf{H}}\right)\right\Vert }_{\eqqcolon\beta_{3}}\text{.}
\end{align*}
In what follows, we shall upper bound $\beta_{1}$, $\beta_{2}$ and
$\beta_{3}$ separately. 
\begin{enumerate}
	\item For any fixed $\bm{x}$, the quantity $\beta_{1}$ is concerned with
	a matrix that can be written explicitly as follows
	\[
	\mathcal{T}^{\mathsf{debias}}\left(\bm{h}^{\star}\bm{\Delta}_{\bm{x}}^{\mathsf{H}}\right)=\sum_{j=1}^{m}\bm{b}_{j}\bm{b}_{j}^{\mathsf{H}}\bm{h}^{\star}\bm{\Delta}_{\bm{x}}^{\mathsf{H}}\left(\bm{a}_{j}\bm{a}_{j}^{\mathsf{H}}-\bm{I}_{K}\right).
	\]
	Consequently, for any fixed unit vectors $\bm{u}$, $\bm{v}\in\mathbb{C}^{K}$
	one has 
	\[
	\bm{u}^{\mathsf{H}}\mathcal{T}^{\mathsf{debias}}\left(\bm{h}^{\star}\bm{\Delta}_{\bm{x}}^{\mathsf{H}}\right)\bm{v}=\sum_{j=1}^{m}\left(\bm{u}^{\mathsf{H}}\bm{b}_{j}\bm{b}_{j}^{\mathsf{H}}\bm{h}^{\star}\bm{\Delta}_{\bm{x}}^{\mathsf{H}}\bm{a}_{j}\bm{a}_{j}^{\mathsf{H}}\bm{v}-\bm{u}^{\mathsf{H}}\bm{b}_{j}\bm{b}_{j}^{\mathsf{H}}\bm{h}^{\star}\bm{\Delta}_{\bm{x}}^{\mathsf{H}}\bm{v}\right),
	\]
	which is essentially a sum of independent variables. Letting $r\coloneqq\lambda+\sigma\sqrt{K\log m}$
	and $C_{4}\coloneqq10\max\left\{ C_{1},C_{3},1\right\} $, we can
	deduce that
	\begin{align*}
	& \sum_{j=1}^{m}\Bigg(\underbrace{\bm{u}^{\mathsf{H}}\bm{b}_{j}\bm{b}_{j}^{\mathsf{H}}\bm{h}^{\star}\bm{\Delta}_{\bm{x}}^{\mathsf{H}}\bm{a}_{j}\bm{a}_{j}^{\mathsf{H}}\bm{v}\ind_{\left\{ \left|\bm{\Delta}_{\bm{x}}^{\mathsf{H}}\bm{a}_{j}\right|\leq C_{4}r\sqrt{\log m}\right\} }}_{\eqqcolon z_{j}}-\bm{u}^{\mathsf{H}}\bm{b}_{j}\bm{b}_{j}^{\mathsf{H}}\bm{h}^{\star}\bm{\Delta}_{\bm{x}}^{\mathsf{H}}\bm{v}\Bigg)\\
	& \qquad=\sum_{j=1}^{m}\left(z_{j}-\mathbb{E}\left[z_{j}\right]\right)+\sum_{j=1}^{m}\left(\mathbb{E}\left[\bm{u}^{\mathsf{H}}\bm{b}_{j}\bm{b}_{j}^{\mathsf{H}}\bm{h}^{\star}\bm{\Delta}_{\bm{x}}^{\mathsf{H}}\bm{a}_{j}\bm{a}_{j}^{\mathsf{H}}\bm{v}\ind_{\left\{ \left|\bm{\Delta}_{\bm{x}}^{\mathsf{H}}\bm{a}_{j}\right|\leq C_{4}r\sqrt{\log m}\right\} }\right]-\bm{u}^{\mathsf{H}}\bm{b}_{j}\bm{b}_{j}^{\mathsf{H}}\bm{h}^{\star}\bm{\Delta}_{\bm{x}}^{\mathsf{H}}\bm{v}\right)\\
	& \qquad=\sum_{j=1}^{m}\left(z_{j}-\mathbb{E}\left[z_{j}\right]\right)+\sum_{j=1}^{m}\left(\mathbb{E}\left[\bm{u}^{\mathsf{H}}\bm{b}_{j}\bm{b}_{j}^{\mathsf{H}}\bm{h}^{\star}\bm{\Delta}_{\bm{x}}^{\mathsf{H}}\bm{a}_{j}\bm{a}_{j}^{\mathsf{H}}\bm{v}\ind_{\left\{ \left|\bm{\Delta}_{\bm{x}}^{\mathsf{H}}\bm{a}_{j}\right|\leq C_{4}r\sqrt{\log m}\right\} }\right]-\mathbb{E}\left[\bm{u}^{\mathsf{H}}\bm{b}_{j}\bm{b}_{j}^{\mathsf{H}}\bm{h}^{\star}\bm{\Delta}_{\bm{x}}^{\mathsf{H}}\bm{a}_{j}\bm{a}_{j}^{\mathsf{H}}\bm{v}\right]\right)\\
	& \qquad=\underbrace{\sum_{j=1}^{m}\left(z_{j}-\mathbb{E}\left[z_{j}\right]\right)}_{\eqqcolon\omega_{1}}-\underbrace{\sum_{j=1}^{m}\mathbb{E}\left[\bm{u}^{\mathsf{H}}\bm{b}_{j}\bm{b}_{j}^{\mathsf{H}}\bm{h}^{\star}\bm{\Delta}_{\bm{x}}^{\mathsf{H}}\bm{a}_{j}\bm{a}_{j}^{\mathsf{H}}\bm{v}\ind_{\left\{ \left|\bm{\Delta}_{\bm{x}}^{\mathsf{H}}\bm{a}_{j}\right|>C_{4}r\sqrt{\log m}\right\} }\right]}_{\eqqcolon\omega_{2}}.
	\end{align*}
	
	\begin{itemize}
		\item The term $\omega_{2}$ can be controlled by Cauchy-Schwarz as follows
		\begin{align}
		\left|\omega_{2}\right|= & \Bigg|\sum_{j=1}^{m}\mathbb{E}\left[\bm{u}^{\mathsf{H}}\bm{b}_{j}\bm{b}_{j}^{\mathsf{H}}\bm{h}^{\star}\bm{\Delta}_{\bm{x}}^{\mathsf{H}}\bm{a}_{j}\bm{a}_{j}^{\mathsf{H}}\bm{v}\ind_{\left\{ \left|\bm{\Delta}_{\bm{x}}^{\mathsf{H}}\bm{a}_{j}\right|>C_{4}r\sqrt{\log m}\right\} }\right]\Bigg|\nonumber \\
		\overset{\text{(i)}}{\leq} & \sum_{j=1}^{m}\sqrt{\mathbb{E}\left[\left|\bm{u}^{\mathsf{H}}\bm{b}_{j}\bm{b}_{j}^{\mathsf{H}}\bm{h}^{\star}\bm{\Delta}_{\bm{x}}^{\mathsf{H}}\bm{a}_{j}\bm{a}_{j}^{\mathsf{H}}\bm{v}\right|^{2}\right]\mathbb{P}\left[\left|\bm{\Delta}_{\bm{x}}^{\mathsf{H}}\bm{a}_{j}\right|>C_{4}r\sqrt{\log m}\right]}\nonumber \\
		\overset{\text{(ii)}}{\leq} & \sum_{j=1}^{m}\left|\bm{u}^{\mathsf{H}}\bm{b}_{j}\bm{b}_{j}^{\mathsf{H}}\bm{h}^{\star}\right|\sqrt{\left(2\left|\bm{\Delta}_{\bm{x}}^{\mathsf{H}}\bm{v}\right|^{2}+\left\Vert \bm{\Delta}_{\bm{x}}\right\Vert _{2}^{2}\left\Vert \bm{v}\right\Vert _{2}^{2}\right)2\exp\left(-\frac{C_{4}^{2}r^{2}\log m}{2\left\Vert \bm{\Delta}_{\bm{x}}\right\Vert _{2}^{2}}\right)}\nonumber \\
		\leq & \sum_{j=1}^{m}\left|\bm{u}^{\mathsf{H}}\bm{b}_{j}\bm{b}_{j}^{\mathsf{H}}\bm{h}^{\star}\right|\sqrt{6\left\Vert \bm{\Delta}_{\bm{x}}\right\Vert _{2}^{2}\exp\left(-50\log m\right)}\nonumber \\
		\overset{\text{(iii)}}{\leq} & \sum_{j=1}^{m}\left(\left|\bm{u}^{\mathsf{H}}\bm{b}_{j}\right|^{2}+\left|\bm{b}_{j}^{\mathsf{H}}\bm{h}^{\star}\right|^{2}\right)\frac{\sqrt{6}\left\Vert \bm{\Delta}_{\bm{x}}\right\Vert _{2}}{2m^{25}}\label{eq:lemma8omega2}\\
		\overset{(\text{iv})}{\leq} & \left(1+\mu^{2}\right)\frac{\sqrt{6}\left\Vert \bm{\Delta}_{\bm{x}}\right\Vert _{2}}{2m^{25}}\nonumber \\
		\overset{(\text{v})}{\leq} & \frac{\left\Vert \bm{\Delta}_{\bm{x}}\right\Vert _{2}}{m^{24}}.\nonumber 
		\end{align}
		Here, (i) follows from the Cauchy-Schwarz inequality, and (ii) comes
		from the property of sub-Gaussian variable $\bm{\Delta}_{\bm{x}}^{\mathsf{H}}\bm{a}_{j}$
		and
		\begin{align}
		\mathbb{E}\left[\left|\bm{u}^{\mathsf{H}}\bm{b}_{j}\bm{b}_{j}^{\mathsf{H}}\bm{h}^{\star}\bm{\Delta}_{\bm{x}}^{\mathsf{H}}\bm{a}_{j}\bm{a}_{j}^{\mathsf{H}}\bm{v}\right|^{2}\right]= & \left|\bm{u}^{\mathsf{H}}\bm{b}_{j}\bm{b}_{j}^{\mathsf{H}}\bm{h}^{\star}\right|^{2}\mathbb{E}\left[\left|\bm{\Delta}_{\bm{x}}^{\mathsf{H}}\bm{a}_{j}\bm{a}_{j}^{\mathsf{H}}\bm{v}\right|^{2}\right]\nonumber \\
		= & \left|\bm{u}^{\mathsf{H}}\bm{b}_{j}\bm{b}_{j}^{\mathsf{H}}\bm{h}^{\star}\right|\left(2\left|\bm{\Delta}_{\bm{x}}^{\mathsf{H}}\bm{v}\right|^{2}+\left\Vert \bm{\Delta}_{\bm{x}}\right\Vert _{2}^{2}\left\Vert \bm{v}\right\Vert _{2}^{2}\right),\label{eq:lem-T-unif-omega2}
		\end{align}
		where the last line is due to the property of Gaussian distributions.
		In addition, (iii) is a consequence of the elementary inequality $|ab|\leq(|a|^{2}+|b|^{2})/2$,
		(iv) comes from the incoherence condition \eqref{eq:incoherence-condition}
		and $\sum_{j=1}^{m}\left|\bm{u}^{\mathsf{H}}\bm{b}_{j}\right|^{2}=\left\Vert \bm{u}\right\Vert _{2}^{2}$,
		whereas (v) holds true as long as $m\gg\mu^{2}$. 
		\item Regarding $\omega_{1}$, note that $z_{j}$ is a sub-Gaussian random
		variable obeying
		\[
		\left\Vert z_{j}-\mathbb{E}\left[z_{j}\right]\right\Vert _{\psi_{2}}\lesssim\left|C_{4}r\sqrt{\log m}\left(\bm{u}^{\mathsf{H}}\bm{b}_{j}\right)\left(\bm{b}_{j}^{\mathsf{H}}\bm{h}^{\star}\right)\right|\leq C_{4}\frac{\mu\sqrt{\log m}}{\sqrt{m}}r\left|\bm{u}^{\mathsf{H}}\bm{b}_{j}\right|.
		\]
		Therefore, by invoking Hoeffding's inequality (cf.~\citet[Theorem 2.6.2]{vershynin2018high})
		we reach
		\[
		\mathbb{P}\left(\Bigg|\sum_{j=1}^{m}z_{j}-\mathbb{E}\left[z_{j}\right]\Bigg|\geq t\right)\leq2\exp\left(-\frac{ct^{2}}{\frac{C_{4}^{2}\mu^{2}r^{2}\log m}{m}\sum_{j=1}^{m}\left|\bm{u}^{\mathsf{H}}\bm{b}_{j}\right|^{2}}\right)=2\exp\left(-\frac{ct^{2}}{\frac{C_{4}^{2}\mu^{2}r^{2}\log m}{m}}\right)
		\]
		for any $t\geq0$. Setting $t=\frac{C\mu r\sqrt{K}\log m}{\sqrt{m}}$
		for some sufficiently large constant $C>0$ yields 
		\begin{equation}
		\mathbb{P}\left(\Bigg|\sum_{j=1}^{m}z_{j}-\mathbb{E}\left[z_{j}\right]\Bigg|\geq\frac{C\mu r\sqrt{K}\log m}{\sqrt{m}}\right)\leq2\exp\left(-10K\log m\right).\label{eq:cover-2-1}
		\end{equation}
		Next, we define $\mathcal{N}_{\bm{x}}$ to be an $\varepsilon_{1}$-net
		of $\mathcal{B}_{\bm{x}}\left(\frac{C_{5}}{1-\rho}\eta r\right)\coloneqq\left\{ \bm{x}:\left\Vert \bm{x}-\bm{x}^{\star}\right\Vert \leq\frac{C_{5}}{1-\rho}\eta r\right\} $,
		and $\mathcal{N}_{0}$ an $\varepsilon_{2}$-net of the unit sphere
		$\mathcal{S}^{K-1}=\left\{ \bm{u}\in\mathbb{C}^{K}:\left\Vert \bm{u}\right\Vert _{2}=1\right\} $,
		where we take $\varepsilon_{1}=r/\left(m\log m\right)$ and $\varepsilon_{2}=1/\left(m\log m\right)$.
		In view of \citet[Corollary 4.2.13]{vershynin2018high}, one can ensure
		that
		\[
		\left|\mathcal{N}_{\bm{x}}\right|\leq\left(1+\frac{2C_{5}\eta r}{\left(1-\rho\right)\varepsilon_{1}}\right)^{2K}\quad\mathrm{and}\quad\left|\mathcal{N}_{0}\right|\leq\left(1+\frac{2}{\varepsilon_{2}}\right)^{2K}.
		\]
		This together with the union bound leads to
		\[
		\Bigg|\sum_{j=1}^{m}z_{j}-\mathbb{E}\left[z_{j}\right]\Bigg|\geq\frac{C\mu r\sqrt{K}\log m}{\sqrt{m}},
		\]
		which holds uniformly for any $\bm{x}\in\mathcal{N}_{\bm{x}}$, $\bm{u},\bm{v}\in\mathcal{N}_{0}$
		and holds with probability at least 
		\[
		1-\left(1+\frac{2C_{5}\eta r}{\left(1-\rho\right)\varepsilon_{1}}\right)^{2K}\left(1+\frac{2}{\varepsilon_{2}}\right)^{4K}\cdot2e^{-10K\log m}\geq1-O\left(m^{-100}\right).
		\]
		As a result, with probability exceeding $1-O\left(m^{-10}+me^{-CK}\right)$
		there holds
		\begin{align}
		& \Bigg|\sum_{j=1}^{m}\left(\bm{u}^{\mathsf{H}}\bm{b}_{j}\bm{b}_{j}^{\mathsf{H}}\bm{h}^{\star}\bm{\Delta}_{\bm{x}}^{\mathsf{H}}\bm{a}_{j}\bm{a}_{j}^{\mathsf{H}}\bm{v}\ind_{\left\{ \left|\bm{\Delta}_{\bm{x}}^{\mathsf{H}}\bm{a}_{j}\right|\leq C_{4}r\sqrt{\log m}\right\} }-\bm{u}^{\mathsf{H}}\bm{b}_{j}\bm{b}_{j}^{\mathsf{H}}\bm{h}^{\star}\bm{\Delta}_{\bm{x}}^{\mathsf{H}}\bm{v}\right)\Bigg|\nonumber \\
		& \qquad\leq\Bigg|\sum_{j=1}^{m}\left(z_{j}-\mathbb{E}\left[z_{j}\right]\right)\Bigg|+\Bigg|\sum_{j=1}^{m}\mathbb{E}\left[\bm{u}^{\mathsf{H}}\bm{b}_{j}\bm{b}_{j}^{\mathsf{H}}\bm{h}^{\star}\bm{\Delta}_{\bm{x}}^{\mathsf{H}}\bm{a}_{j}\bm{a}_{j}^{\mathsf{H}}\bm{v}\ind_{\left\{ \left|\bm{\Delta}_{\bm{x}}^{\mathsf{H}}\bm{a}_{j}\right|\leq C_{4}r\sqrt{\log m}\right\} }\right]\Bigg|\nonumber \\
		& \qquad\leq\frac{C\mu r\sqrt{K}\log m}{\sqrt{m}}+\frac{\left\Vert \bm{\Delta}_{\bm{x}}\right\Vert _{2}}{m^{24}}\nonumber \\
		& \qquad\leq\frac{\lambda}{100}\label{eq:lemma85}
		\end{align}
		uniformly for any $\bm{x}\in\mathcal{N}_{\bm{x}}$, $\bm{u},\bm{v}\in\mathcal{N}_{0}$.
		Here, the penultimate inequality comes from \eqref{eq:lemma8omega2}
		and \eqref{eq:cover-2-1}. For any $\bm{x}$ obeying the assumption
		$\max_{j}\big|\left(\bm{x}-\bm{x}^{\star}\right)^{\mathsf{H}}\bm{a}_{j}\big|\le C_{3}r\sqrt{\log m}$
		and any $\bm{u}$, $\bm{v}\in\mathcal{S}^{K-1}$, we can find $\bm{x}_{0}\in\mathcal{N}_{\bm{x}}$,
		$\bm{u}_{0}\in\mathcal{N}_{0}$ and $\bm{v}_{0}\in\mathcal{N}_{0}$
		satisfying $\left\Vert \bm{x}-\bm{x}_{0}\right\Vert _{2}\leq\varepsilon_{1}$
		and $\max\left\{ \left\Vert \bm{u}-\bm{u}_{0}\right\Vert _{2},\left\Vert \bm{v}-\bm{v}_{0}\right\Vert _{2}\right\} \leq\varepsilon_{2}$.
		Given that $\max_{j}\left\Vert \bm{a}_{j}\right\Vert _{2}\leq10\sqrt{K}$
		with probability $1-me^{-CK}$ for some constant $C>0$, this yields
		that 
		\[
		\left|\bm{\Delta}_{\bm{x}_{0}}^{\mathsf{H}}\bm{a}_{j}\right|\leq\left|\bm{\Delta}_{\bm{x}}^{\mathsf{H}}\bm{a}_{j}\right|+10\varepsilon_{1}\sqrt{K}\leq2C_{3}\left(\lambda+\sigma\sqrt{K\log m}\right)\sqrt{\log m}.
		\]
		Recalling $C_{4}\geq10C_{3}$, we have 
		\[
		\left|\bm{\Delta}_{\bm{x}_{0}}^{\mathsf{H}}\bm{a}_{j}\right|\leq C_{4}\left(\lambda+\sigma\sqrt{K\log m}\right)\sqrt{\log m}=C_{4}r\sqrt{\log m},
		\]
		and hence $\ind_{\left\{ \left|\bm{\Delta}_{\bm{x}_{0}}^{\mathsf{H}}\bm{a}_{j}\right|\leq C_{4}r\sqrt{\log m}\right\} }=1$,
		$\forall j$. Therefore, if we let 
		\[
		f\left(\bm{x},\bm{u},\bm{v}\right)\coloneqq\sum_{j=1}^{m}\left(\bm{u}^{\mathsf{H}}\bm{b}_{j}\bm{b}_{j}^{\mathsf{H}}\bm{h}^{\star}\left(\bm{x}-\bm{x}^{\star}\right)^{\mathsf{H}}\bm{a}_{j}\bm{a}_{j}^{\mathsf{H}}\bm{v}\ind_{\left\{ \left|\left(\bm{x}-\bm{x}^{\star}\right)^{\mathsf{H}}\bm{a}_{j}\right|\leq C_{4}r\sqrt{\log m}\right\} }-\bm{u}^{\mathsf{H}}\bm{b}_{j}\bm{b}_{j}^{\mathsf{H}}\bm{h}^{\star}\left(\bm{x}-\bm{x}^{\star}\right)^{\mathsf{H}}\bm{v}\right),
		\]
		then we can demonstrate that
		\begin{align*}
		& \left|f\left(\bm{x},\bm{u},\bm{v}\right)-f\left(\bm{x}_{0},\bm{u}_{0},\bm{v}_{0}\right)\right|\\
		& \quad\leq\Bigg|\sum_{j=1}^{m}\bm{u}^{\mathsf{H}}\bm{b}_{j}\bm{b}_{j}^{\mathsf{H}}\bm{h}^{\star}\left(\bm{x}-\bm{x}_{0}\right)^{\mathsf{H}}\bm{a}_{j}\bm{a}_{j}^{\mathsf{H}}\bm{v}\Bigg|+\Bigg|\sum_{j=1}^{m}\bm{u}^{\mathsf{H}}\bm{b}_{j}\bm{b}_{j}^{\mathsf{H}}\bm{h}^{\star}\left(\bm{x}-\bm{x}_{0}\right)^{\mathsf{H}}\bm{v}\Bigg|\\
		& \qquad+\Bigg|\sum_{j=1}^{m}\left(\bm{u}-\bm{u}_{0}\right)^{\mathsf{H}}\bm{b}_{j}\bm{b}_{j}^{\mathsf{H}}\bm{h}^{\star}\left(\bm{x}_{0}-\bm{x}^{\star}\right)^{\mathsf{H}}\bm{a}_{j}\bm{a}_{j}^{\mathsf{H}}\bm{v}\Bigg|+\Bigg|\sum_{j=1}^{m}\left(\bm{u}-\bm{u}_{0}\right)^{\mathsf{H}}\bm{b}_{j}\bm{b}_{j}^{\mathsf{H}}\bm{h}^{\star}\left(\bm{x}_{0}-\bm{x}^{\star}\right)^{\mathsf{H}}\bm{v}\Bigg|\\
		& \qquad+\Bigg|\sum_{j=1}^{m}\bm{u}_{0}^{\mathsf{H}}\bm{b}_{j}\bm{b}_{j}^{\mathsf{H}}\bm{h}^{\star}\left(\bm{x}_{0}-\bm{x}^{\star}\right)^{\mathsf{H}}\bm{a}_{j}\bm{a}_{j}^{\mathsf{H}}\left(\bm{v}-\bm{v}_{0}\right)\Bigg|+\Bigg|\sum_{j=1}^{m}\bm{u}_{0}^{\mathsf{H}}\bm{b}_{j}\bm{b}_{j}^{\mathsf{H}}\bm{h}^{\star}\left(\bm{x}_{0}-\bm{x}^{\star}\right)^{\mathsf{H}}\left(\bm{v}-\bm{v}_{0}\right)\Bigg|\\
		& \quad\leq\left(\left\Vert \mathcal{A}\right\Vert ^{2}+1\right)\left(\left\Vert \bm{h}^{\star}\right\Vert _{2}\left\Vert \bm{x}-\bm{x}_{0}\right\Vert _{2}+\left\Vert \bm{x}_{0}-\bm{x}^{\star}\right\Vert _{2}\left\Vert \bm{u}-\bm{u}_{0}\right\Vert _{2}+\left\Vert \bm{x}_{0}-\bm{x}^{\star}\right\Vert _{2}\left\Vert \bm{v}-\bm{v}_{0}\right\Vert _{2}\right)\\
		& \quad\leq\left(2K\log K+10\log m+1\right)\left(\varepsilon_{1}+2C_{1}r\varepsilon_{2}\right),
		\end{align*}
		where the last inequality arises from \eqref{eq:a-normbound}. Consequently,
		\begin{align*}
		& \left|\bm{u}^{\mathsf{H}}\mathcal{T}^{\mathsf{debias}}\left(\bm{h}^{\star}\left(\bm{x}-\bm{x}^{\star}\right)^{\mathsf{H}}\right)\bm{v}\right|\\
		& \qquad=\Bigg|\sum_{j=1}^{m}\left(\bm{u}^{\mathsf{H}}\bm{b}_{j}\bm{b}_{j}^{\mathsf{H}}\bm{h}^{\star}\left(\bm{x}-\bm{x}^{\star}\right)^{\mathsf{H}}\bm{a}_{j}\bm{a}_{j}^{\mathsf{H}}\bm{v}\ind_{\left\{ \left|\left(\bm{x}-\bm{x}^{\star}\right)^{\mathsf{H}}\bm{a}_{j}\right|\leq C_{4}r\sqrt{\log m}\right\} }-\bm{u}^{\mathsf{H}}\bm{b}_{j}\bm{b}_{j}^{\mathsf{H}}\bm{h}^{\star}\left(\bm{x}-\bm{x}^{\star}\right)^{\mathsf{H}}\bm{v}\right)\Bigg|\\
		& \qquad\leq\left|f\left(\bm{x},\bm{u},\bm{v}\right)-f\left(\bm{x}_{0},\bm{u}_{0},\bm{v}_{0}\right)\right|+\left|f\left(\bm{x}_{0},\bm{u}_{0},\bm{v}_{0}\right)\right|\\
		& \qquad\leq\left(2K\log K+10\log m+1\right)\left(\varepsilon_{1}+2C_{1}r\varepsilon_{2}\right)+\frac{\lambda}{100}\\
		& \qquad\leq\frac{\lambda}{50},
		\end{align*}
		where the last inequality is due to the definitions $r=\lambda+\sigma\sqrt{K\log m}$,
		$\varepsilon_{1}=r/\left(m\log m\right)$, $\varepsilon_{2}=1/\left(m\log m\right)$
		and $m\gg K$. Therefore, for any $\left(\bm{h},\bm{x}\right)$ satisfying
		\eqref{eq:hx-properties-all}, there holds 
		\begin{align}
		\left\Vert \mathcal{T}^{\mathsf{debias}}\left(\bm{h}^{\star}\bm{\Delta}_{\bm{x}}^{\mathsf{H}}\right)\right\Vert  & =\sup_{\bm{u},\bm{v}\in\mathcal{S}^{K-1}}\bm{u}^{\mathsf{H}}\mathcal{T}^{\mathsf{debias}}\left(\bm{h}^{\star}\left(\bm{x}-\bm{x}^{\star}\right)^{\mathsf{H}}\right)\bm{v}\leq\frac{1}{50}\lambda\label{eq:lemma8bound1}
		\end{align}
		with probability exceeding $1-O\left(m^{-10}+me^{-CK}\right)$.
	\end{itemize}
	\item We now move on to $\beta_{2}$, for which we have a similar decomposition
	as follows
	\begin{align*}
	& \bm{u}^{\mathsf{H}}\mathcal{T}^{\mathsf{debias}}\left(\bm{\Delta}_{\bm{h}}\bm{x}^{\star}{}^{\mathsf{H}}\right)\bm{v}\\
	& \quad=\sum_{j=1}^{m}\left(\bm{u}^{\mathsf{H}}\bm{b}_{j}\bm{b}_{j}^{\mathsf{H}}\bm{\Delta}_{\bm{h}}\bm{x}^{\star\mathsf{H}}\bm{a}_{j}\bm{a}_{j}^{\mathsf{H}}\bm{v}-\bm{u}^{\mathsf{H}}\bm{b}_{j}\bm{b}_{j}^{\mathsf{H}}\bm{\Delta}_{\bm{h}}\bm{x}^{\star\mathsf{H}}\bm{v}\right)\\
	& \quad=\sum_{j=1}^{m}\Bigg(\underbrace{\bm{u}^{\mathsf{H}}\bm{b}_{j}\bm{b}_{j}^{\mathsf{H}}\bm{\Delta}_{\bm{h}}\bm{x}^{\star\mathsf{H}}\bm{a}_{j}\bm{a}_{j}^{\mathsf{H}}\bm{v}\ind_{\left\{ \left|\bm{x}^{\star\mathsf{H}}\bm{a}_{j}\right|\leq20\sqrt{\log m}\right\} }}_{\eqqcolon y_{j}}-\mathbb{E}\left[y_{j}\right]\Bigg)\\
	& \qquad-\underbrace{\sum_{j=1}^{m}\mathbb{E}\left[\bm{u}^{\mathsf{H}}\bm{b}_{j}\bm{b}_{j}^{\mathsf{H}}\bm{\Delta}_{\bm{h}}\bm{x}^{\star\mathsf{H}}\bm{a}_{j}\bm{a}_{j}^{\mathsf{H}}\bm{v}\ind_{\left\{ \left|\bm{x}^{\star\mathsf{H}}\bm{a}_{j}\right|>20\sqrt{\log m}\right\} }\right]}_{\eqqcolon\omega_{4}}+\sum_{j=1}^{m}\bm{u}^{\mathsf{H}}\bm{b}_{j}\bm{b}_{j}^{\mathsf{H}}\bm{\Delta}_{\bm{h}}\bm{x}^{\star\mathsf{H}}\bm{a}_{j}\bm{a}_{j}^{\mathsf{H}}\bm{v}\ind_{\left\{ \left|\bm{x}^{\star\mathsf{H}}\bm{a}_{j}\right|>20\sqrt{\log m}\right\} }.
	\end{align*}
	
	\begin{itemize}
		\item For $\omega_{4}$, similar to \eqref{eq:lemma8omega2} we have
		\begin{align}
		\left|\omega_{4}\right|= & \Bigg|\sum_{j=1}^{m}\mathbb{E}\left[\bm{u}^{\mathsf{H}}\bm{b}_{j}\bm{b}_{j}^{\mathsf{H}}\bm{\Delta}_{\bm{h}}\bm{x}^{\star\mathsf{H}}\bm{a}_{j}\bm{a}_{j}^{\mathsf{H}}\bm{v}\ind_{\left\{ \left|\bm{x}^{\star\mathsf{H}}\bm{a}_{j}\right|>20\sqrt{\log m}\right\} }\right]\Bigg|\nonumber \\
		\overset{(\text{i})}{\leq} & \sum_{j=1}^{m}\sqrt{\mathbb{E}\left[\left|\bm{u}^{\mathsf{H}}\bm{b}_{j}\bm{b}_{j}^{\mathsf{H}}\bm{\Delta}_{\bm{h}}\bm{x}^{\star\mathsf{H}}\bm{a}_{j}\bm{a}_{j}^{\mathsf{H}}\bm{v}\right|^{2}\right]\mathbb{P}\left(\left|\bm{x}^{\star\mathsf{H}}\bm{a}_{j}\right|>20\sqrt{\log m}\right)}\nonumber \\
		\overset{(\text{ii})}{\leq} & \sum_{j=1}^{m}\left|\bm{u}^{\mathsf{H}}\bm{b}_{j}\bm{b}_{j}^{\mathsf{H}}\bm{\Delta}_{\bm{h}}\right|\sqrt{\left(2\left|\bm{x}^{\star\mathsf{H}}\bm{v}\right|^{2}+\left\Vert \bm{x}^{\star}\right\Vert _{2}^{2}\left\Vert \bm{v}\right\Vert _{2}^{2}\right)2\exp\left(-200\log m\right)}\nonumber \\
		\leq & \sum_{j=1}^{m}\left|\bm{u}^{\mathsf{H}}\bm{b}_{j}\bm{b}_{j}^{\mathsf{H}}\bm{\Delta}_{\bm{h}}\right|\frac{4}{m^{100}}\nonumber \\
		\overset{\text{(iii)}}{\leq} & \sum_{j=1}^{m}\left\Vert \bm{b}_{j}\right\Vert _{2}\times C_{9}\sigma\times\frac{4}{m^{100}}\\
		\overset{(\text{iv})}{\leq} & \sqrt{\frac{K}{m}}\times m\times C_{9}\sigma\times\frac{4}{m^{100}}\nonumber \\
		\leq & \frac{\lambda}{m^{99}},\label{eq:lem-T-unif-4}
		\end{align}
		where (i) follows from Cauchy-Schwarz inequality, (ii) comes from
		the property of sub-Gaussian variable $\left|\bm{x}^{\star\mathsf{H}}\bm{a}_{j}\right|$
		and \eqref{eq:lem-T-unif-omega2}, (iii) is due to the assumption
		\eqref{eq:hx-properties2}, and (iv) comes from the fact $\left\Vert \bm{b}_{j}\right\Vert _{2}=\sqrt{K/m}$.
		\item Regarding the term $\omega_{3}\coloneqq\sum_{j=1}^{m}\left(y_{j}-\mathbb{E}\left[y_{j}\right]\right)$,
		we note that 
		\[
		\left\Vert \bm{u}^{\mathsf{H}}\bm{b}_{j}\bm{b}_{j}^{\mathsf{H}}\bm{\Delta}_{\bm{h}}\bm{x}^{\star\mathsf{H}}\bm{a}_{j}\bm{a}_{j}^{\mathsf{H}}\bm{v}\ind_{\left\{ \left|\bm{x}^{\star\mathsf{H}}\bm{a}_{j}\right|\leq20\sqrt{\log m}\right\} }\right\Vert _{\psi_{2}}\leq\frac{\mu\lambda}{\sqrt{m}}\log^{2}m\times20\sqrt{\log m}\left|\bm{u}^{\mathsf{H}}\bm{b}_{j}\right|.
		\]
		Hoeffding's inequality \citet[Theorem 2.6.3]{vershynin2018high} tells
		us that
		\[
		\mathbb{P}\left(\Bigg|\sum_{j=1}^{m}\left(y_{j}-\mathbb{E}\left[y_{j}\right]\right)\Bigg|\geq t\right)\leq2\exp\left(-\frac{ct^{2}}{400\frac{\mu^{2}\lambda^{2}}{m}\log^{5}m\sum_{j=1}^{m}\left|\bm{u}^{\mathsf{H}}\bm{b}_{j}\right|^{2}}\right)=2\exp\left(-\frac{ct^{2}}{400\frac{\mu^{2}\lambda^{2}}{m}\log^{5}m}\right)
		\]
		for any $t\geq0$. Setting $t=\frac{C\mu\lambda\sqrt{K}}{\sqrt{m}}\log^{3}m$
		for some sufficiently large constant $C>0$ yields 
		\begin{equation}
		\mathbb{P}\left(\Bigg|\sum_{j=1}^{m}\left(y_{j}-\mathbb{E}\left[y_{j}\right]\right)\Bigg|\geq C\frac{\mu\lambda\sqrt{K}}{\sqrt{m}}\log^{3}m\right)\leq2\exp\left(-10K\log m\right).\label{eq:lem-T-unif-5}
		\end{equation}
		Invoking a similar covering argument, we know that with probability
		exceeding $1-O\left(m^{-10}\right)$, 
		\[
		\Bigg|\sum_{j=1}^{m}\left(y_{j}-\mathbb{E}\left[y_{j}\right]\right)\Bigg|\geq C\frac{\mu\lambda\sqrt{K}}{\sqrt{m}}\log^{3}m
		\]
		holds uniformly for any $\bm{h}$ over the $\varepsilon_{1}$-net
		$\mathcal{N}_{\bm{h}}$ of $\mathcal{B}_{\bm{h}}\left(\frac{C_{5}}{1-\rho}\eta r\right)\coloneqq\left\{ \bm{h}:\left\Vert \bm{h}-\bm{h}^{\star}\right\Vert _{2}\leq\frac{C_{5}}{1-\rho}\eta r\right\} $
		and any $\bm{u}$, $\bm{v}$ over the $\varepsilon_{2}$-net $\mathcal{N}_{0}$
		of the unit sphere $\mathcal{S}^{K-1}$. As a result, one has
		\begin{align}
		& \Bigg|\sum_{j=1}^{m}\left(\bm{u}^{\mathsf{H}}\bm{b}_{j}\bm{b}_{j}^{\mathsf{H}}\bm{\Delta}_{\bm{h}}\bm{x}^{\star\mathsf{H}}\bm{a}_{j}\bm{a}_{j}^{\mathsf{H}}\bm{v}\ind_{\left\{ \left|\bm{x}^{\star\mathsf{H}}\bm{a}_{j}\right|\leq20\sqrt{\log m}\right\} }-\bm{u}^{\mathsf{H}}\bm{b}_{j}\bm{b}_{j}^{\mathsf{H}}\bm{\Delta}_{\bm{h}}\bm{x}^{\star\mathsf{H}}\bm{v}\right)\Bigg|\nonumber \\
		& \qquad\leq\Bigg|\sum_{j=1}^{m}\left(y_{j}-\mathbb{E}\left[y_{j}\right]\right)\Bigg|+\Bigg|\sum_{j=1}^{m}\mathbb{E}\left[\bm{u}^{\mathsf{H}}\bm{b}_{j}\bm{b}_{j}^{\mathsf{H}}\bm{\Delta}_{\bm{h}}\bm{x}^{\star\mathsf{H}}\bm{a}_{j}\bm{a}_{j}^{\mathsf{H}}\bm{v}\ind_{\left\{ \left|\bm{x}^{\star\mathsf{H}}\bm{a}_{j}\right|>20\sqrt{\log m}\right\} }\right]\Bigg|\nonumber \\
		& \qquad\leq C\frac{\mu\lambda\sqrt{K}}{\sqrt{m}}\log^{3}m+\frac{\lambda}{m^{99}}\nonumber \\
		& \qquad\leq\frac{\lambda}{100},\label{eq:lemma84}
		\end{align}
		where the penultimate inequality comes from \eqref{eq:lem-T-unif-4}
		and \eqref{eq:lem-T-unif-5}. Next, let us define
		\[
		g\left(\bm{h},\bm{u},\bm{v}\right)\coloneqq\sum_{j=1}^{m}\left(\bm{u}^{\mathsf{H}}\bm{b}_{j}\bm{b}_{j}^{\mathsf{H}}\left(\bm{h}-\bm{h}^{\star}\right)\bm{x}^{\star\mathsf{H}}\bm{a}_{j}\bm{a}_{j}^{\mathsf{H}}\bm{v}\ind_{\left\{ \left|\bm{x}^{\star\mathsf{H}}\bm{a}_{j}\right|\leq20\sqrt{\log m}\right\} }-\bm{u}^{\mathsf{H}}\bm{b}_{j}\bm{b}_{j}^{\mathsf{H}}\left(\bm{h}-\bm{h}^{\star}\right)\bm{x}^{\star\mathsf{H}}\bm{v}\right).
		\]
		Since we can always find some $\bm{x}_{0}\in\mathcal{N}_{\bm{x}}$,
		$\bm{u}_{0}$, $\bm{v}_{0}\in\mathcal{N}_{0}$ such that $\left\Vert \bm{h}-\bm{h}_{0}\right\Vert _{2}\leq\varepsilon_{1}$
		and $\max\left\{ \left\Vert \bm{u}-\bm{u}_{0}\right\Vert _{2},\left\Vert \bm{v}-\bm{v}_{0}\right\Vert _{2}\right\} \leq\varepsilon_{2}$,
		this guarantees that
		\begin{align*}
		& \left|g\left(\bm{h},\bm{u},\bm{v}\right)-g\left(\bm{h}_{0},\bm{u}_{0},\bm{v}_{0}\right)\right|\\
		& \quad\leq\Bigg|\sum_{j=1}^{m}\bm{u}^{\mathsf{H}}\bm{b}_{j}\bm{b}_{j}^{\mathsf{H}}\left(\bm{h}-\bm{h}_{0}\right)\bm{x}^{\star\mathsf{H}}\bm{a}_{j}\bm{a}_{j}^{\mathsf{H}}\bm{v}\ind_{\left\{ \left|\bm{x}^{\star\mathsf{H}}\bm{a}_{j}\right|\leq20\sqrt{\log m}\right\} }\Bigg|+\Bigg|\sum_{j=1}^{m}\bm{u}^{\mathsf{H}}\bm{b}_{j}\bm{b}_{j}^{\mathsf{H}}\left(\bm{h}-\bm{h}_{0}\right)\bm{x}^{\star\mathsf{H}}\bm{v}\Bigg|\\
		& \qquad+\Bigg|\sum_{j=1}^{m}\left(\left(\bm{u}-\bm{u}_{0}\right)^{\mathsf{H}}\bm{b}_{j}\bm{b}_{j}^{\mathsf{H}}\left(\bm{h}-\bm{h}^{\star}\right)\bm{x}^{\star\mathsf{H}}\bm{a}_{j}\bm{a}_{j}^{\mathsf{H}}\bm{v}\ind_{\left\{ \left|\bm{x}^{\star\mathsf{H}}\bm{a}_{j}\right|\leq20\sqrt{\log m}\right\} }-\left(\bm{u}-\bm{u}_{0}\right)^{\mathsf{H}}\bm{b}_{j}\bm{b}_{j}^{\mathsf{H}}\left(\bm{h}-\bm{h}^{\star}\right)\bm{x}^{\star\mathsf{H}}\bm{v}\right)\Bigg|\\
		& \qquad+\Bigg|\sum_{j=1}^{m}\left(\bm{u}_{0}^{\mathsf{H}}\bm{b}_{j}\bm{b}_{j}^{\mathsf{H}}\left(\bm{h}-\bm{h}^{\star}\right)\bm{x}^{\star\mathsf{H}}\bm{a}_{j}\bm{a}_{j}^{\mathsf{H}}\left(\bm{v}-\bm{v}_{0}\right)\ind_{\left\{ \left|\bm{x}^{\star\mathsf{H}}\bm{a}_{j}\right|\leq20\sqrt{\log m}\right\} }-\bm{u}_{0}^{\mathsf{H}}\bm{b}_{j}\bm{b}_{j}^{\mathsf{H}}\left(\bm{h}-\bm{h}^{\star}\right)\bm{x}^{\star\mathsf{H}}\left(\bm{v}-\bm{v}_{0}\right)\right)\Bigg|\\
		& \quad\leq\left(\left\Vert \mathcal{A}\right\Vert ^{2}+1\right)\left(\left\Vert \bm{x}^{\star}\right\Vert _{2}\left\Vert \bm{h}-\bm{h}_{0}\right\Vert _{2}+\left\Vert \left(\bm{h}-\bm{h}^{\star}\right)\right\Vert _{2}\left\Vert \bm{u}-\bm{u}_{0}\right\Vert _{2}+\left\Vert \bm{h}-\bm{h}^{\star}\right\Vert _{2}\left\Vert \bm{v}-\bm{v}_{0}\right\Vert _{2}\right)\\
		& \quad\leq\left(2K\log K+10\log m+1\right)\left(\varepsilon_{1}+2C_{1}r\varepsilon_{2}\right),
		\end{align*}
		where the last inequality comes from \eqref{eq:a-normbound}. Since
		$\mathbb{P}\left(\left|\bm{x}^{\star\mathsf{H}}\bm{a}_{j}\right|>20\sqrt{\log m}\right)\le O\left(m^{-100}\right)$
		(in view of \eqref{eq:useful1}), we have, with probability exceeding
		$1-O\left(m^{-10}\right)$, that
		\begin{align}
		\left\Vert \mathcal{T}^{\mathsf{debias}}\left(\bm{\Delta}_{\bm{h}}\bm{x}^{\star}{}^{\mathsf{H}}\right)\right\Vert  & =\sup_{\bm{u},\bm{v}\in\mathcal{S}^{K-1}}\Bigg|\sum_{j=1}^{m}\left(\bm{u}^{\mathsf{H}}\bm{b}_{j}\bm{b}_{j}^{\mathsf{H}}\bm{\Delta}_{\bm{h}}\bm{x}^{\star\mathsf{H}}\bm{a}_{j}\bm{a}_{j}^{\mathsf{H}}\bm{v}\ind_{\left\{ \left|\bm{x}^{\star\mathsf{H}}\bm{a}_{j}\right|\leq20\sqrt{\log m}\right\} }-\bm{u}^{\mathsf{H}}\bm{b}_{j}\bm{b}_{j}^{\mathsf{H}}\bm{\Delta}_{\bm{h}}\bm{x}^{\star\mathsf{H}}\bm{v}\right)\Bigg|\nonumber \\
		& \leq\sup_{\bm{u},\bm{v}\in\mathcal{S}^{K-1}}\left|g\left(\bm{h},\bm{u},\bm{v}\right)-g\left(\bm{h}_{0},\bm{u}_{0},\bm{v}_{0}\right)\right|+\left|g\left(\bm{h}_{0},\bm{u}_{0},\bm{v}_{0}\right)\right|\nonumber \\
		& \leq\left(2K\log K+10\log m+1\right)\left(\varepsilon_{1}+2C_{1}r\varepsilon_{2}\right)+\frac{\lambda}{100}\nonumber \\
		& \leq\frac{\lambda}{50}\label{eq:lemma8bound2}
		\end{align}
		holds uniformly over $\bm{h}\in\mathcal{B}_{\bm{h}}\left(C_{1}r\right)$,
		where the last inequality is due to the choices $\varepsilon_{1}=r/\left(m\log m\right)$,
		$\varepsilon_{2}=1/\left(m\log m\right)$ and $r=\lambda+\sigma\sqrt{K\log m}$.
	\end{itemize}
	\item Finally, we turn attention to $\beta_{3}$. Observe that for any fixed
	$\bm{h}$ and $\bm{x}$, one has
	\[
	\mathcal{T}^{\mathsf{debias}}\left(\bm{\Delta}_{\bm{h}}\bm{\Delta}_{\bm{x}}^{\mathsf{H}}\right)=\sum_{j=1}^{m}\bm{b}_{j}\bm{b}_{j}^{\mathsf{H}}\bm{\Delta}_{\bm{h}}\bm{\Delta}_{\bm{x}}^{\mathsf{H}}\left(\bm{a}_{j}\bm{a}_{j}^{\mathsf{H}}-\bm{I}_{K}\right).
	\]
	This indicates that for any fixed unit vectors $\bm{u}$, $\bm{v}\in\mathbb{C}^{K}$
	we have
	\[
	\bm{u}^{\mathsf{H}}\mathcal{T}^{\mathsf{debias}}\left(\bm{\Delta}_{\bm{h}}\bm{\Delta}_{\bm{x}}^{\mathsf{H}}\right)\bm{v}=\sum_{j=1}^{m}\left(\bm{u}^{\mathsf{H}}\bm{b}_{j}\bm{b}_{j}^{\mathsf{H}}\bm{\Delta}_{\bm{h}}\bm{\Delta}_{\bm{x}}^{\mathsf{H}}\bm{a}_{j}\bm{a}_{j}^{\mathsf{H}}\bm{v}-\bm{u}^{\mathsf{H}}\bm{b}_{j}\bm{b}_{j}^{\mathsf{H}}\bm{\Delta}_{\bm{h}}\bm{\Delta}_{\bm{x}}^{\mathsf{H}}\bm{v}\right),
	\]
	which is a sum of independent variables. Letting $r\coloneqq\lambda+\sigma\sqrt{K\log m}$
	and $C_{4}\coloneqq10\max\left\{ C_{1},C_{3},1\right\} $, we can
	demonstrate that
	\begin{align*}
	& \sum_{j=1}^{m}\Bigg(\underbrace{\bm{u}^{\mathsf{H}}\bm{b}_{j}\bm{b}_{j}^{\mathsf{H}}\bm{\Delta}_{\bm{h}}\bm{\Delta}_{\bm{x}}^{\mathsf{H}}\bm{a}_{j}\bm{a}_{j}^{\mathsf{H}}\bm{v}\ind_{\left\{ \left|\bm{\Delta}_{\bm{x}}^{\mathsf{H}}\bm{a}_{j}\right|\leq C_{4}r\sqrt{\log m}\right\} }}_{\eqqcolon s_{j}}-\bm{u}^{\mathsf{H}}\bm{b}_{j}\bm{b}_{j}^{\mathsf{H}}\bm{\Delta}_{\bm{h}}\bm{\Delta}_{\bm{x}}^{\mathsf{H}}\bm{v}\Bigg)\\
	& \qquad=\sum_{j=1}^{m}\left(s_{j}-\mathbb{E}\left[s_{j}\right]\right)+\sum_{j=1}^{m}\left(\mathbb{E}\left[\bm{u}^{\mathsf{H}}\bm{b}_{j}\bm{b}_{j}^{\mathsf{H}}\bm{\Delta}_{\bm{h}}\bm{\Delta}_{\bm{x}}^{\mathsf{H}}\bm{a}_{j}\bm{a}_{j}^{\mathsf{H}}\bm{v}\ind_{\left\{ \left|\bm{\Delta}_{\bm{x}}^{\mathsf{H}}\bm{a}_{j}\right|\leq C_{4}r\sqrt{\log m}\right\} }\right]-\bm{u}^{\mathsf{H}}\bm{b}_{j}\bm{b}_{j}^{\mathsf{H}}\bm{h}^{\star}\bm{\Delta}_{\bm{x}}^{\mathsf{H}}\bm{v}\right)\\
	& \qquad=\underbrace{\sum_{j=1}^{m}\left(s_{j}-\mathbb{E}\left[s_{j}\right]\right)}_{\eqqcolon\omega_{5}}-\underbrace{\sum_{j=1}^{m}\mathbb{E}\left[\bm{u}^{\mathsf{H}}\bm{b}_{j}\bm{b}_{j}^{\mathsf{H}}\bm{\Delta}_{\bm{h}}\bm{\Delta}_{\bm{x}}^{\mathsf{H}}\bm{a}_{j}\bm{a}_{j}^{\mathsf{H}}\bm{v}\ind_{\left\{ \left|\bm{\Delta}_{\bm{x}}^{\mathsf{H}}\bm{a}_{j}\right|>C_{4}r\sqrt{\log m}\right\} }\right]}_{\eqqcolon\omega_{6}}.
	\end{align*}
	
	\begin{itemize}
		\item With regards to $\omega_{6}$, similar to \eqref{eq:lemma8omega2}
		we have 
		\begin{align*}
		\left|\omega_{6}\right|= & \left|\sum_{j=1}^{m}\mathbb{E}\left[\bm{u}^{\mathsf{H}}\bm{b}_{j}\bm{b}_{j}^{\mathsf{H}}\bm{\Delta}_{\bm{h}}\bm{\Delta}_{\bm{x}}^{\mathsf{H}}\bm{a}_{j}\bm{a}_{j}^{\mathsf{H}}\bm{v}\ind_{\left\{ \left|\bm{\Delta}_{\bm{x}}^{\mathsf{H}}\bm{a}_{j}\right|>C_{4}r\sqrt{\log m}\right\} }\right]\right|\\
		\overset{(\text{i})}{\leq} & \sum_{j=1}^{m}\sqrt{\mathbb{E}\left[\left|\bm{u}^{\mathsf{H}}\bm{b}_{j}\bm{b}_{j}^{\mathsf{H}}\bm{\Delta}_{\bm{h}}\bm{\Delta}_{\bm{x}}^{\mathsf{H}}\bm{a}_{j}\bm{a}_{j}^{\mathsf{H}}\bm{v}\right|^{2}\right]\mathbb{P}\left[\ind_{\left\{ \left|\bm{\Delta}_{\bm{x}}^{\mathsf{H}}\bm{a}_{j}\right|>C_{4}r\sqrt{\log m}\right\} }\right]}\\
		\overset{(\text{ii})}{\leq} & \sum_{j=1}^{m}\left|\bm{u}^{\mathsf{H}}\bm{b}_{j}\bm{b}_{j}^{\mathsf{H}}\bm{\Delta}_{\bm{h}}\right|\sqrt{\left(2\left|\bm{\Delta}_{\bm{x}}^{\mathsf{H}}\bm{v}\right|^{2}+\left\Vert \bm{\Delta}_{\bm{x}}\right\Vert _{2}^{2}\left\Vert \bm{v}\right\Vert _{2}^{2}\right)2\exp\left(-\frac{C_{4}^{2}r^{2}\log m}{2\left\Vert \bm{\Delta}_{\bm{x}}\right\Vert _{2}^{2}}\right)}\\
		\leq & \sum_{j=1}^{m}\left|\bm{u}^{\mathsf{H}}\bm{b}_{j}\bm{b}_{j}^{\mathsf{H}}\bm{\Delta}_{\bm{h}}\right|\sqrt{6\left\Vert \bm{\Delta}_{\bm{x}}\right\Vert _{2}^{2}\exp\left(-50\log m\right)}\\
		\leq & \sum_{j=1}^{m}\left\Vert \bm{b}_{j}\right\Vert _{2}\left|\bm{b}_{j}^{\mathsf{H}}\bm{\Delta}_{\bm{h}}\right|\frac{\sqrt{6}\left\Vert \bm{\Delta}_{\bm{x}}\right\Vert _{2}}{m^{25}}\\
		\overset{(\text{iii})}{\leq} & \frac{\lambda\left\Vert \bm{\Delta}_{\bm{x}}\right\Vert _{2}}{m^{24}},
		\end{align*}
		where (i) follows from Cauchy-Schwarz inequality, (ii) comes from
		the property of sub-Gaussian variable $\left|\bm{\Delta}_{\bm{x}}^{\mathsf{H}}\bm{a}_{j}\right|$
		and \eqref{eq:lem-T-unif-omega2}, and (iii) is due to the fact $\left\Vert \bm{b}_{j}\right\Vert _{2}=\sqrt{K/m}$
		and the assumption \eqref{eq:hx-properties2}.
		\item Regarding $\omega_{5}$, we note that $s_{j}$ is a sub-Gaussian random
		variable satisfying
		\[
		\left\Vert s_{j}-\mathbb{E}\left[s_{j}\right]\right\Vert _{\psi_{2}}\lesssim C_{4}r\sqrt{\log m}\left|\left(\bm{u}^{\mathsf{H}}\bm{b}_{j}\right)\left(\bm{b}_{j}^{\mathsf{H}}\bm{\Delta}_{\bm{h}}\right)\right|\leq C_{4}\frac{\mu\sqrt{\log^{5}m}}{\sqrt{m}}r\left|\bm{u}^{\mathsf{H}}\bm{b}_{j}\right|.
		\]
		Therefore, invoking Hoeffding's inequality (cf.~\citet[Theorem 2.6.3]{vershynin2018high})
		reveals that
		\[
		\mathbb{P}\left(\left|\sum_{j=1}^{m}s_{j}-\mathbb{E}\left[s_{j}\right]\right|\geq t\right)\leq2\exp\left(-\frac{ct^{2}}{\frac{C_{4}^{2}\mu^{2}r^{2}\log^{5}m}{m}\sum_{j=1}^{m}\left|\bm{u}^{\mathsf{H}}\bm{b}_{j}\right|^{2}}\right)=2\exp\left(-\frac{ct^{2}}{\frac{C_{4}^{2}\mu^{2}r^{2}\log^{5}m}{m}}\right)
		\]
		for any $t\geq0$. Setting $t=\frac{C\mu r\sqrt{K}\log^{3}m}{\sqrt{m}}$
		for some sufficiently large constant $C>0$, we obtain
		\begin{equation}
		\mathbb{P}\left(\left|\sum_{j=1}^{m}s_{j}-\mathbb{E}\left[s_{j}\right]\right|\geq\frac{C\mu r\sqrt{K}\log^{3}m}{\sqrt{m}}\right)\leq2\exp\left(-10K\log m\right).\label{eq:cover-2-1-1}
		\end{equation}
		Let $\varepsilon_{1}=r/\left(m\log m\right)$ and $\varepsilon_{2}=1/\left(m\log m\right)$,
		and set $\mathcal{N}_{\bm{h}}$ to be an $\varepsilon_{1}$-net of
		$\mathcal{B}_{\bm{h}}\left(\frac{C_{5}}{1-\rho}\eta r\right)\coloneqq\left\{ \bm{h}:\left\Vert \bm{h}-\bm{h}^{\star}\right\Vert _{2}\leq\frac{C_{5}}{1-\rho}\eta r\right\} $,
		$\mathcal{N}_{\bm{x}}$ an $\varepsilon_{1}$-net of $\mathcal{B}_{\bm{x}}\left(\frac{C_{5}}{1-\rho}\eta r\right)\coloneqq\left\{ \bm{x}:\left\Vert \bm{x}-\bm{x}^{\star}\right\Vert _{2}\leq\frac{C_{5}}{1-\rho}\eta r\right\} $,
		and $\mathcal{N}_{0}$ an $\varepsilon_{2}$-net of the unit sphere
		$\mathcal{S}^{K-1}=\left\{ \bm{u}\in\mathbb{C}^{K}:\left\Vert \bm{u}\right\Vert _{2}=1\right\} $.
		In view of \citet[Corollary 4.2.13]{vershynin2018high}, these epsilon
		nets can be chosen to satisfy the following cardinality bounds
		\[
		\left|\mathcal{N}_{\bm{h}}\right|\leq\left(1+\frac{2C_{5}\eta r}{\left(1-\rho\right)\varepsilon_{1}}\right)^{2K},\quad\left|\mathcal{N}_{\bm{x}}\right|\leq\left(1+\frac{2C_{5}\eta r}{\left(1-\rho\right)\varepsilon_{1}}\right)^{2K}\quad\mathrm{and}\quad\left|\mathcal{N}_{0}\right|\leq\left(1+\frac{2}{\varepsilon_{2}}\right)^{2K}.
		\]
		By taking the union bound, we show that with probability at least
		\[
		1-\left(1+\frac{2C_{5}\eta r}{\left(1-\rho\right)\varepsilon_{1}}\right)^{4K}\left(1+\frac{2}{\varepsilon_{2}}\right)^{4K}e^{-10K\log m}\geq1-O\left(m^{-100}\right),
		\]
		the following bound
		\[
		\left|\sum_{j=1}^{m}s_{j}-\mathbb{E}\left[s_{j}\right]\right|\geq\frac{C\mu r\sqrt{K}\log^{3}m}{\sqrt{m}}
		\]
		holds uniformly for any $\bm{h}$ over $\mathcal{N}_{\bm{h}}$, any
		$\bm{x}$ over $\mathcal{N}_{\bm{x}}$, and any $\bm{u}$, $\bm{v}$
		over $\mathcal{N}_{0}$. Consequently, with probability exceeding
		$1-O(m^{-100})$, the inequality
		\begin{align}
		& \left|\sum_{j=1}^{m}\left(\underbrace{\bm{u}^{\mathsf{H}}\bm{b}_{j}\bm{b}_{j}^{\mathsf{H}}\bm{\Delta}_{\bm{h}}\bm{\Delta}_{\bm{x}}^{\mathsf{H}}\bm{a}_{j}\bm{a}_{j}^{\mathsf{H}}\bm{v}\ind_{\left\{ \left|\bm{\Delta}_{\bm{x}}^{\mathsf{H}}\bm{a}_{j}\right|\leq C_{4}r\sqrt{\log m}\right\} }}_{\eqqcolon s_{j}}-\bm{u}^{\mathsf{H}}\bm{b}_{j}\bm{b}_{j}^{\mathsf{H}}\bm{\Delta}_{\bm{h}}\bm{\Delta}_{\bm{x}}^{\mathsf{H}}\bm{v}\right)\right|\label{eq:lemma8rcovering}\\
		& \qquad\leq\frac{C\mu r\sqrt{K}\log^{3}m}{\sqrt{m}}+\frac{\lambda\left\Vert \bm{\Delta}_{\bm{x}}\right\Vert _{2}}{m^{24}}\leq\frac{\lambda}{100}\nonumber 
		\end{align}
		holds simultaneously for any $\bm{h}$ over $\mathcal{N}_{\bm{h}}$,
		any $\bm{x}$ over $\mathcal{N}_{\bm{x}}$, and any $\bm{u}$, $\bm{v}$
		over $\mathcal{N}_{0}$. Additionally, for any $\bm{x}$ obeying $\max_{1\leq j\leq m}\left|\left(\bm{x}-\bm{x}^{\star}\right)^{\mathsf{H}}\bm{a}_{j}\right|\le C_{3}r\sqrt{\log m}$
		and any $\bm{u}$, $\bm{v}\in\mathcal{S}^{K-1}$, we can find $\bm{h}_{0}\in\mathcal{N}_{\bm{h}}$,
		$\bm{x}_{0}\in\mathcal{N}_{\bm{x}}$, $\bm{u}_{0}\in\mathcal{N}_{0}$
		and $\bm{v}_{0}\in\mathcal{N}_{0}$ satisfying $\max\left\{ \left\Vert \bm{h}-\bm{h}_{0}\right\Vert _{2},\left\Vert \bm{x}-\bm{x}_{0}\right\Vert _{2}\right\} \leq\varepsilon_{1}$
		and $\max\left\{ \left\Vert \bm{u}-\bm{u}_{0}\right\Vert _{2},\left\Vert \bm{v}-\bm{v}_{0}\right\Vert _{2}\right\} \leq\varepsilon_{2}$.
		Recognizing that $\left\Vert \bm{a}_{j}\right\Vert _{2}\leq10\sqrt{K}$
		with probability $1-O\left(me^{-CK}\right)$ for some constant $C>0$
		(see \eqref{eq:useful2}), we can guarantee that 
		\[
		\left|\bm{\Delta}_{\bm{x}_{0}}^{\mathsf{H}}\bm{a}_{j}\right|\leq\left|\bm{\Delta}_{\bm{x}}^{\mathsf{H}}\bm{a}_{j}\right|+10\varepsilon_{1}\sqrt{K}\leq2C_{3}\left(\lambda+\sigma\sqrt{K\log m}\right)\sqrt{\log m}.
		\]
		Recalling that $C_{4}\geq10C_{3}$, we have 
		\[
		\left|\bm{\Delta}_{\bm{x}_{0}}^{\mathsf{H}}\bm{a}_{j}\right|\leq C_{4}\left(\lambda+\sigma\sqrt{K\log m}\right)\sqrt{\log m}=C_{4}r\sqrt{\log m},
		\]
		and hence $\ind_{\left\{ \left|\bm{\Delta}_{\bm{x}_{0}}^{\mathsf{H}}\bm{a}_{j}\right|\leq C_{4}r\sqrt{\log m}\right\} }=1$
		for all $1\leq j\leq m$. Therefore, if we take
		\[
		r\left(\bm{h},\bm{x},\bm{u},\bm{v}\right)\coloneqq\sum_{j=1}^{m}\left(\bm{u}^{\mathsf{H}}\bm{b}_{j}\bm{b}_{j}^{\mathsf{H}}\left(\bm{h}-\bm{h}^{\star}\right)\left(\bm{x}-\bm{x}^{\star}\right)^{\mathsf{H}}\bm{a}_{j}\bm{a}_{j}^{\mathsf{H}}\bm{v}\ind_{\left\{ \left|\bm{\Delta}_{\bm{x}}^{\mathsf{H}}\bm{a}_{j}\right|\leq C_{4}r\sqrt{\log m}\right\} }-\bm{u}^{\mathsf{H}}\bm{b}_{j}\bm{b}_{j}^{\mathsf{H}}\left(\bm{h}-\bm{h}^{\star}\right)\left(\bm{x}-\bm{x}^{\star}\right)^{\mathsf{H}}\bm{v}\right),
		\]
		then it follows that
		\begin{align}
		& \left|r\left(\bm{h},\bm{x},\bm{u},\bm{v}\right)-r\left(\bm{h}_{0},\bm{x}_{0},\bm{u}_{0},\bm{v}_{0}\right)\right|\nonumber \\
		& \quad\leq\left|\sum_{j=1}^{m}\bm{u}^{\mathsf{H}}\bm{b}_{j}\bm{b}_{j}^{\mathsf{H}}\left(\bm{h}-\bm{h}_{0}\right)\left(\bm{x}-\bm{x}^{\star}\right)^{\mathsf{H}}\bm{a}_{j}\bm{a}_{j}^{\mathsf{H}}\bm{v}\right|+\left|\sum_{j=1}^{m}\bm{u}^{\mathsf{H}}\bm{b}_{j}\bm{b}_{j}^{\mathsf{H}}\left(\bm{h}-\bm{h}_{0}\right)\left(\bm{x}-\bm{x}_{0}\right)^{\mathsf{H}}\bm{v}\right|\nonumber \\
		& \qquad+\left|\sum_{j=1}^{m}\bm{u}^{\mathsf{H}}\bm{b}_{j}\bm{b}_{j}^{\mathsf{H}}\left(\bm{h}_{0}-\bm{h}^{\star}\right)\left(\bm{x}-\bm{x}_{0}\right)^{\mathsf{H}}\bm{a}_{j}\bm{a}_{j}^{\mathsf{H}}\bm{v}\right|+\left|\sum_{j=1}^{m}\bm{u}^{\mathsf{H}}\bm{b}_{j}\bm{b}_{j}^{\mathsf{H}}\left(\bm{h}_{0}-\bm{h}^{\star}\right)\left(\bm{x}-\bm{x}_{0}\right)^{\mathsf{H}}\bm{v}\right|\nonumber \\
		& \qquad+\left|\sum_{j=1}^{m}\left(\left(\bm{u}-\bm{u}_{0}\right)^{\mathsf{H}}\bm{b}_{j}\bm{b}_{j}^{\mathsf{H}}\left(\bm{h}_{0}-\bm{h}^{\star}\right)\left(\bm{x}_{0}-\bm{x}^{\star}\right)^{\mathsf{H}}\bm{a}_{j}\bm{a}_{j}^{\mathsf{H}}\bm{v}-\left(\bm{u}-\bm{u}_{0}\right)^{\mathsf{H}}\bm{b}_{j}\bm{b}_{j}^{\mathsf{H}}\left(\bm{h}_{0}-\bm{h}^{\star}\right)\left(\bm{x}_{0}-\bm{x}^{\star}\right)^{\mathsf{H}}\bm{v}\right)\right|\nonumber \\
		& \qquad+\left|\sum_{j=1}^{m}\left(\bm{u}_{0}^{\mathsf{H}}\bm{b}_{j}\bm{b}_{j}^{\mathsf{H}}\left(\bm{h}_{0}-\bm{h}^{\star}\right)\left(\bm{x}_{0}-\bm{x}^{\star}\right)^{\mathsf{H}}\bm{a}_{j}\bm{a}_{j}^{\mathsf{H}}\left(\bm{v}-\bm{v}_{0}\right)-\bm{u}_{0}^{\mathsf{H}}\bm{b}_{j}\bm{b}_{j}^{\mathsf{H}}\left(\bm{h}_{0}-\bm{h}^{\star}\right)\left(\bm{x}_{0}-\bm{x}^{\star}\right)^{\mathsf{H}}\left(\bm{v}-\bm{v}_{0}\right)\right)\right|\nonumber \\
		& \quad\leq\left(\left\Vert \mathcal{A}\right\Vert ^{2}+1\right)\left\Vert \bm{h}-\bm{h}_{0}\right\Vert _{2}\left\Vert \bm{x}-\bm{x}_{0}\right\Vert _{2}+\left(\left\Vert \mathcal{A}\right\Vert ^{2}+1\right)\left\Vert \bm{h}_{0}-\bm{h}^{\star}\right\Vert _{2}\left\Vert \bm{x}-\bm{x}_{0}\right\Vert _{2}\nonumber \\
		& \qquad+\left(\left\Vert \mathcal{A}\right\Vert ^{2}+1\right)\left\Vert \bm{x}_{0}-\bm{x}^{\star}\right\Vert _{2}\left\Vert \bm{u}-\bm{u}_{0}\right\Vert _{2}+\left(\left\Vert \mathcal{A}\right\Vert ^{2}+1\right)\left\Vert \bm{x}_{0}-\bm{x}^{\star}\right\Vert _{2}\left\Vert \bm{v}-\bm{v}_{0}\right\Vert _{2}\nonumber \\
		& \quad\leq\left(2K\log K+10\log m+1\right)\left(2\left(\varepsilon_{1}\right)^{2}+2C_{1}r\varepsilon_{2}\right),\label{eq:lemma8r}
		\end{align}
		where the last inequality arises from \eqref{eq:a-normbound}. This
		further leads to
		\begin{align*}
		& \left|\bm{u}^{\mathsf{H}}\mathcal{T}^{\mathsf{debias}}\left(\left(\bm{h}-\bm{h}^{\star}\right)\left(\bm{x}-\bm{x}^{\star}\right)^{\mathsf{H}}\right)\bm{v}\right|\\
		& \qquad=\left|\sum_{j=1}^{m}\left(\bm{u}^{\mathsf{H}}\bm{b}_{j}\bm{b}_{j}^{\mathsf{H}}\left(\bm{h}-\bm{h}^{\star}\right)\left(\bm{x}-\bm{x}^{\star}\right)^{\mathsf{H}}\bm{a}_{j}\bm{a}_{j}^{\mathsf{H}}\bm{v}\ind_{\left\{ \left|\bm{\Delta}_{\bm{x}}^{\mathsf{H}}\bm{a}_{j}\right|\leq C_{4}r\sqrt{\log m}\right\} }-\bm{u}^{\mathsf{H}}\bm{b}_{j}\bm{b}_{j}^{\mathsf{H}}\left(\bm{h}-\bm{h}^{\star}\right)\left(\bm{x}-\bm{x}^{\star}\right)^{\mathsf{H}}\bm{v}\right)\right|\\
		& \qquad=\left|r\left(\bm{h},\bm{x},\bm{u},\bm{v}\right)-r\left(\bm{h}_{0},\bm{x}_{0},\bm{u}_{0},\bm{v}_{0}\right)\right|+\left|r\left(\bm{h}_{0},\bm{x}_{0},\bm{u}_{0},\bm{v}_{0}\right)\right|\\
		& \qquad\leq\left(2K\log K+10\log m+1\right)\left(2\left(\varepsilon_{1}\right)^{2}+2C_{1}r\varepsilon_{2}\right)+\frac{\lambda}{100}\\
		& \qquad\leq\frac{\lambda}{50},
		\end{align*}
		where the last inequality follows from \eqref{eq:lemma8rcovering}
		and \eqref{eq:lemma8r}. As a consequence, for any point $\left(\bm{h},\bm{x}\right)$
		satisfying \eqref{eq:hx-properties-all}, we have, with probability
		exceeding $1-O\left(m^{-10}+me^{-CK}\right)$, that
		\begin{align}
		& \quad\left\Vert \mathcal{T}^{\mathsf{debias}}\left(\left(\bm{h}-\bm{h}^{\star}\right)\left(\bm{x}-\bm{x}^{\star}\right)^{\mathsf{H}}\right)\right\Vert =\sup_{\bm{u},\bm{v}\in\mathcal{S}^{K-1}}\bm{u}^{\mathsf{H}}\mathcal{T}^{\mathsf{debias}}\left(\left(\bm{h}-\bm{h}^{\star}\right)\left(\bm{x}-\bm{x}^{\star}\right)^{\mathsf{H}}\right)\bm{v}\leq\frac{1}{50}\lambda.\label{eq:lemma8bound3}
		\end{align}
	\end{itemize}
\end{enumerate}
To finish up, combining the bounds obtained in \eqref{eq:lemma8bound1},
\eqref{eq:lemma8bound2} and \eqref{eq:lemma8bound3}, we arrive at
\[
\left\Vert \mathcal{T}^{\mathsf{debias}}\left(\bm{h}\bm{x}^{\mathsf{H}}-\bm{h}^{\star}\bm{x}^{\star\mathsf{H}}\right)\right\Vert \leq\frac{\lambda}{50}+\frac{\lambda}{50}+\frac{\lambda}{50}<\frac{\lambda}{8}.
\]

\subsection{Proof of Lemma \ref{lemma:noise}\label{subsec:Proof-oflemmanoise}}

We intend to invoke \citet[Proposition 2]{koltchinskii2011nuclear}
to bound the spectral norm of the random matrix of interest. Set $\boldsymbol{Z}_{i}=\xi_{i}\bm{b}_{i}\bm{a}_{i}^{\mathsf{H}}$.
Letting $\|\cdot\|_{\psi_{1}}$ (resp.~$\|\cdot\|_{\psi_{2}}$) denoting
the sub-exponential norm of a random variable \citet[Chapter 2]{vershynin2018high},
we have
\[
B_{\bm{Z}}:=\Big\|\big\|\xi_{j}\boldsymbol{b}_{j}\boldsymbol{a}_{j}^{\mathsf{H}}\big\|\Big\|_{\psi_{1}}=\Big\|\xi_{j}\|\boldsymbol{b}_{j}\|_{2}\|\boldsymbol{a}_{j}\|_{2}\Big\|_{\psi_{1}}\leq\left\Vert \xi_{j}\right\Vert _{\psi_{2}}\Big\|\|\boldsymbol{a}_{j}\|_{2}\Big\|_{\psi_{2}}\sqrt{\frac{K}{m}}\lesssim\sigma\frac{K}{\sqrt{m}}.
\]
Here, we have used the assumption that $\Vert\xi_{j}\Vert_{\psi_{2}}\lesssim\sigma$,
as well as the simple facts that $\Vert\bm{b}_{j}\Vert_{2}=\sqrt{K/m}$
and $\big\|\Vert\bm{a}_{j}\Vert_{2}\big\|_{\psi_{2}}\lesssim\sqrt{K}$
(cf.~\citet[Theorem 3.1.1]{vershynin2018high}). In addition, simple
calculation yields 
\begin{align*}
\left\Vert \sum_{j=1}^{m}\mathbb{\mathbb{E}}\big[\bm{Z}_{j}\bm{Z}_{j}^{\mathsf{H}}\big]\right\Vert  & =\left\Vert \sum_{j=1}^{m}\mathbb{\mathbb{E}}\Big[\left|\xi_{j}\right|^{2}\bm{b}_{i}\bm{a}_{i}^{\mathsf{H}}\bm{a}_{i}\bm{b}_{i}^{\mathsf{H}}\Big]\right\Vert =\left\Vert \sum_{j=1}^{m}\mathbb{E}\big[|\xi_{j}|^{2}\big]\mathbb{E}\big[\|\bm{a}_{j}\|_{2}^{2}\big]\bm{b}_{j}\bm{b}_{j}^{\mathsf{H}}\right\Vert \asymp\ensuremath{K\sigma^{2}},\\
\left\Vert \sum_{j=1}^{m}\mathbb{\mathbb{E}}\big[\bm{Z}_{j}^{\mathsf{H}}\bm{Z}_{j}\big]\right\Vert  & =\left\Vert \sum_{j=1}^{m}\mathbb{\mathbb{E}}\Big[\left|\xi_{j}\right|^{2}\bm{a}_{j}\bm{b}_{j}^{\mathsf{H}}\bm{b}_{j}\bm{a}_{j}^{\mathsf{H}}\Big]\right\Vert =\left\Vert \sum_{j=1}^{m}\mathbb{E}\big[|\xi_{j}|^{2}\big]\left\Vert \bm{b}_{j}\right\Vert _{2}^{2}\mathbb{E}\big[\bm{a}_{j}\bm{a}_{j}^{\mathsf{H}}\big]\right\Vert \asymp\ensuremath{K\sigma^{2}},
\end{align*}
which rely on the facts that $\mathbb{E}\big[|\xi_{j}|^{2}\big]\asymp\sigma^{2}$,
$\Vert\bm{b}_{j}\Vert_{2}=\sqrt{K/m}$, $\sum_{j=1}^{m}\bm{b}_{j}\bm{b}_{j}^{\mathsf{H}}=\bm{I}_{k}$
and $\mathbb{E}\big[\bm{a}_{j}\bm{a}_{j}^{\mathsf{H}}\big]=\bm{I}_{k}$.
As a result, by setting
\[
\sigma_{\bm{Z}}:=\max\left\{ \left\Vert \sum\nolimits _{j=1}^{m}\mathbb{\mathbb{E}}\big[\bm{Z}_{j}\bm{Z}_{j}^{\mathsf{H}}\big]\right\Vert ^{1/2},\left\Vert \sum\nolimits _{j=1}^{m}\mathbb{\mathbb{E}}\big[\bm{Z}_{j}^{\mathsf{H}}\bm{Z}_{j}\big]\right\Vert ^{1/2}\right\} \asymp\sigma\sqrt{K},
\]
we can apply the matrix Bernstein inequality \citet[Proposition 2]{koltchinskii2011nuclear}
to derive
\begin{equation}
\left\Vert \sum_{j=1}^{m}\xi_{j}\boldsymbol{b}_{j}\boldsymbol{a}_{j}^{\mathsf{H}}\right\Vert \lesssim\sigma_{\bm{Z}}\sqrt{\log m}+B_{\bm{Z}}\log\left(\frac{B_{\bm{Z}}}{\sigma_{\bm{Z}}}\right)\log m\lesssim\sigma\sqrt{K\log m}\label{eq:lemmanoise1}
\end{equation}
with probability exceeding $1-O(m^{-20})$, where the last inequality
holds as long as $m\gtrsim K\log^{3}m$.

\subsection{Proof of Lemma \ref{lemma:inj}\label{subsec:Proof-of-Lemma-inj}}

By the definition of $T$ (cf.~\eqref{eq:defn-tangent-space}), any
$\boldsymbol{Z}\in T$ takes the following form
\[
\boldsymbol{Z}=\boldsymbol{h}\boldsymbol{u}^{\mathsf{H}}+\boldsymbol{v}\boldsymbol{x}^{\mathsf{H}}
\]
for some $\boldsymbol{u},\boldsymbol{v}\in\mathbb{C}^{K}$. Since
this is an underdetermined system of equations, there might exist
more than one possibilities of $\left(\boldsymbol{h},\boldsymbol{x}\right)$
that enable and are compatible with this decomposition. Here, we shall
take a specific choice among them as follows 
\begin{align}
\left(\boldsymbol{h},\boldsymbol{x}\right) & \coloneqq\arg\min_{\left(\widetilde{\boldsymbol{h}},\widetilde{\boldsymbol{x}}\right)}\left\{ \frac{1}{2}\big\|\widetilde{\boldsymbol{h}}\big\|_{2}^{2}+\frac{1}{2}\left\Vert \widetilde{\boldsymbol{x}}\right\Vert _{2}^{2}\mid\boldsymbol{Z}=\widetilde{\boldsymbol{h}}\boldsymbol{u}^{\mathsf{H}}+\boldsymbol{v}\widetilde{\boldsymbol{x}}^{\mathsf{H}}\text{ for some }\bm{u}\text{ and }\bm{v}\right\} .\label{eq:special-choice-h-x}
\end{align}
As can be straightforwardly verified, this special choice enjoys the
following property
\[
\boldsymbol{h}^{\mathsf{H}}\boldsymbol{v}=\boldsymbol{u}^{\mathsf{H}}\boldsymbol{x},
\]
which plays a crucial role in the proof. 

The proof consists of two steps: (1) showing that
\begin{equation}
\left\Vert \boldsymbol{Z}\right\Vert _{\text{F}}^{2}\leq8\left(\left\Vert \boldsymbol{u}\right\Vert _{2}^{2}+\left\Vert \boldsymbol{v}\right\Vert _{2}^{2}\right),\label{eq:7.1}
\end{equation}
and (2) demonstrating that
\begin{align}
\left\Vert \mathcal{A}\left(\boldsymbol{Z}\right)\right\Vert _{2}^{2} & \geq\frac{1}{2}\left(\left\Vert \boldsymbol{u}\right\Vert _{2}^{2}+\left\Vert \boldsymbol{v}\right\Vert _{2}^{2}\right).\label{eq:7.2}
\end{align}
The first claim \eqref{eq:7.1} can be justified in the same way as
\citet[Equation (81)]{chen2019noisy}; we thus omit this part here
for brevity.

It then boils down to justifying the second claim \eqref{eq:7.2},
towards which we first decompose
\begin{align}
\left\Vert \mathcal{A}\left(\boldsymbol{Z}\right)\right\Vert _{2}^{2} & =\underbrace{\left\Vert \mathcal{A}\left(\boldsymbol{Z}\right)\right\Vert _{2}^{2}-\left\Vert \boldsymbol{Z}\right\Vert _{2}^{2}}_{\eqqcolon\alpha_{1}}+\underbrace{\left\Vert \boldsymbol{Z}\right\Vert _{2}^{2}}_{\eqqcolon\alpha_{2}}.\label{eq:AZ-alpha12}
\end{align}
By repeating the same argument as in \citet[Appendix C.3.1, 2(a)]{chen2019noisy},
we can lower bound $\alpha_{2}$ by
\[
\alpha_{2}\geq\left\Vert \boldsymbol{h}^{\star}\boldsymbol{u}^{\mathsf{H}}\right\Vert _{\text{F}}^{2}+\left\Vert \boldsymbol{v}\boldsymbol{x}^{\mathsf{\star H}}\right\Vert _{\text{F}}^{2}-\frac{1}{50}\left(\left\Vert \boldsymbol{u}\right\Vert _{2}^{2}+\left\Vert \boldsymbol{v}\right\Vert _{2}^{2}\right).
\]
We then turn attention to controlling $\alpha_{1}$. Letting $\boldsymbol{\Delta_{h}}=\bm{h}-\boldsymbol{h}^{\star}$
and $\boldsymbol{\Delta_{x}}=\boldsymbol{x}-\boldsymbol{x}^{\star}$,
we can write
\begin{align*}
\boldsymbol{h}\boldsymbol{u}^{\mathsf{H}}+\boldsymbol{v}\boldsymbol{x}^{\mathsf{H}} & =\left(\text{\ensuremath{\bm{h}^{\star}+\bm{\Delta_{h}}}}\right)\bm{u}^{\mathsf{H}}+\bm{v}\left(\bm{x}^{\star}+\bm{\Delta_{x}}\right)^{\mathsf{H}}\\
 & =\text{\ensuremath{\bm{h}^{\star}\bm{u}^{\mathsf{H}}+\bm{\Delta_{h}}}}\bm{u}^{\mathsf{H}}+\bm{v}\bm{x}^{\star\mathsf{H}}+\bm{v}\bm{\Delta_{x}}^{\mathsf{H}}.
\end{align*}
This implies that $\alpha_{1}$ can be expanded as follows
\begin{align*}
\alpha_{1} & =\underbrace{\left\Vert \mathcal{A}\left(\text{\ensuremath{\text{\ensuremath{\bm{h}^{\star}\bm{u}^{\mathsf{H}}}}+\bm{v}\bm{x}^{\star\mathsf{H}}}}\right)\right\Vert _{2}^{2}-\left\Vert \ensuremath{\text{\ensuremath{\bm{h}^{\star}\bm{u}^{\mathsf{H}}}}+\bm{v}\bm{x}^{\star\mathsf{H}}}\right\Vert _{\text{F}}^{2}}_{\eqqcolon\gamma_{1}}+\underbrace{\left\Vert \mathcal{A}\left(\ensuremath{\text{\ensuremath{\bm{\Delta_{h}}}}\bm{u}^{\mathsf{H}}}+\text{\ensuremath{\bm{v}\bm{\Delta_{x}}^{\mathsf{H}}}}\right)\right\Vert _{2}^{2}-\left\Vert \text{\ensuremath{\text{\ensuremath{\bm{\Delta_{h}}}}\bm{u}^{\mathsf{H}}}+\text{\ensuremath{\bm{v}\bm{\Delta_{x}}^{\mathsf{H}}}}}\right\Vert _{\text{F}}^{2}}_{\eqqcolon\gamma_{2}}\\
 & \quad+2\underbrace{\left\langle \mathcal{A}\left(\text{\ensuremath{\text{\ensuremath{\text{\ensuremath{\bm{h}^{\star}\bm{u}^{\mathsf{H}}}}+\bm{v}\bm{x}^{\star\mathsf{H}}}}}}\right),\mathcal{A}\left(\ensuremath{\text{\ensuremath{\bm{\Delta_{h}}}}\bm{u}^{\mathsf{H}}}+\text{\ensuremath{\bm{v}\bm{\Delta_{x}}^{\mathsf{H}}}}\right)\right\rangle -\left\langle \ensuremath{\text{\ensuremath{\text{\ensuremath{\bm{h}^{\star}\bm{u}^{\mathsf{H}}}}+\bm{v}\bm{x}^{\star\mathsf{H}}}}},\ensuremath{\text{\ensuremath{\bm{\Delta_{h}}}}\bm{u}^{\mathsf{H}}}+\text{\ensuremath{\bm{v}\bm{\Delta_{x}}^{\mathsf{H}}}}\right\rangle }_{\eqqcolon\gamma_{3}},
\end{align*}
thereby motivating us to cope with these terms separately. 
\begin{itemize}
\item Regarding $\gamma_{1}$, it is easily seen that
\[
\left|\gamma_{1}\right|\leq\left\Vert \mathcal{P}_{T}\mathcal{A}^{*}\mathcal{A}\mathcal{P}_{T}-\mathcal{P}_{T}\right\Vert \cdot\left\Vert \text{\ensuremath{\bm{h}^{\star}\bm{u}^{\mathsf{H}}}}+\bm{v}\bm{x}^{\star\mathsf{H}}\right\Vert _{\mathrm{F}}^{2}\leq\frac{1}{100}\left(\left\Vert \bm{u}\right\Vert ^{2}+\left\Vert \bm{v}\right\Vert ^{2}\right),
\]
where the last inequality is obtained by invoking \citet[Lemma 5.12]{li2019rapid}.
\item When it comes to $\gamma_{2}$, we observe that
\[
\gamma_{2}\geq-\left\Vert \text{\ensuremath{\text{\ensuremath{\bm{\Delta_{h}}}}\bm{u}^{\mathsf{H}}}+\text{\ensuremath{\bm{v}\bm{\Delta_{x}}^{\mathsf{H}}}}}\right\Vert _{\text{F}}^{2}\geq-\frac{1}{100}\left(\left\Vert \bm{u}\right\Vert _{2}^{2}+\left\Vert \bm{v}\right\Vert _{2}^{2}\right)
\]
under our constraints on the sizes of $\bm{\Delta_{h}}$ and $\bm{\Delta_{x}}$. 
\item The term $\gamma_{3}$ can be further decomposed into four terms,
which we control separately. 
\begin{enumerate}
\item First of all, observe that\textcolor{black}{
\begin{align}
 & \left|\left\langle \mathcal{A}\left(\text{\ensuremath{\text{\ensuremath{\bm{v}\bm{x}^{\star\mathsf{H}}}}}}\right),\mathcal{A}\left(\text{\ensuremath{\bm{v}\bm{\Delta_{x}}^{\mathsf{H}}}}\right)\right\rangle -\left\langle \ensuremath{\text{\ensuremath{\bm{v}\bm{x}^{\star\mathsf{H}}}}},\text{\ensuremath{\bm{v}\bm{\Delta_{x}}^{\mathsf{H}}}}\right\rangle \right|\nonumber \\
 & \quad\quad\leq\left|\left\langle \mathcal{A}\left(\text{\ensuremath{\text{\ensuremath{\bm{v}\bm{x}^{\star\mathsf{H}}}}}}\right),\mathcal{A}\left(\text{\ensuremath{\bm{v}\bm{\Delta_{x}}^{\mathsf{H}}}}\right)\right\rangle \right|+\left|\left\langle \ensuremath{\text{\ensuremath{\bm{v}\bm{x}^{\star\mathsf{H}}}}},\text{\ensuremath{\bm{v}\bm{\Delta_{x}}^{\mathsf{H}}}}\right\rangle \right|\nonumber \\
 & \quad\quad\overset{\text{(i)}}{\leq}\left\Vert \mathcal{A}\left(\bm{v}\bm{x}^{\star\mathsf{H}}\right)\right\Vert _{2}\left\Vert \mathcal{A}\left(\text{\ensuremath{\bm{v}\bm{\Delta_{x}}^{\mathsf{H}}}}\right)\right\Vert _{2}+\left\Vert \bm{x}^{\star}\right\Vert _{2}\left\Vert \bm{\Delta}_{\bm{x}}^{\mathsf{H}}\right\Vert _{2}\left\Vert \bm{v}\right\Vert _{2}^{2}\nonumber \\
 & \quad\quad\overset{\text{(ii)}}{\leq}\sqrt{\sum_{j=1}^{m}\left|\bm{b}_{j}^{\mathsf{H}}\bm{v}\right|^{2}\left|\bm{x}^{\star\mathsf{H}}\bm{a}_{j}\right|^{2}}\sqrt{\sum_{j=1}^{m}\left|\bm{b}_{j}^{\mathsf{H}}\bm{v}\right|^{2}\left|\bm{\Delta_{x}}^{\mathsf{H}}\bm{a}_{j}\right|^{2}}+\frac{1}{200}\left\Vert \bm{v}\right\Vert _{2}^{2}\nonumber \\
 & \quad\quad\overset{(\text{iii})}{\leq}\sqrt{\left\Vert \bm{v}\right\Vert _{2}^{2}\max_{1\leq j\leq m}\left|\bm{x}^{\star\mathsf{H}}\bm{a}_{j}\right|^{2}}\cdot\sqrt{\left\Vert \bm{v}\right\Vert _{2}^{2}\max_{1\leq j\leq m}\left|\bm{\Delta_{x}}^{\mathsf{H}}\bm{a}_{j}\right|^{2}}+\frac{1}{200}\left\Vert \bm{v}\right\Vert _{2}^{2}\nonumber \\
 & \quad\quad\overset{\text{(iv)}}{\leq}20\sqrt{\log m}\cdot C\sqrt{\log m}\left(\lambda+\sigma\sqrt{K\log m}\right)\left\Vert \bm{v}\right\Vert _{2}^{2}+\frac{1}{200}\left\Vert \bm{v}\right\Vert _{2}^{2}\nonumber \\
 & \quad\quad\leq\frac{1}{100}\left\Vert \bm{v}\right\Vert _{2}^{2},\label{eq:lem-inj-gamma3}
\end{align}
where the (i) and (ii) follow from the Cauchy-Schwarz inequality and
\eqref{eq:hx-properties} that $\left\Vert \bm{\Delta}_{\bm{x}}^{\mathsf{H}}\right\Vert _{2}\lesssim\lambda+\sigma\sqrt{K\log m}\leq1/200$;
(iii) comes from the fact that $\sum_{j=1}^{m}\bm{b}_{j}\bm{b}_{j}^{\mathsf{H}}=\bm{I}_{K}$
and thus $\sum_{j=1}^{m}\left|\bm{b}_{j}^{\mathsf{H}}\bm{v}\right|^{2}=\sum_{j=1}^{m}\bm{v}^{\mathsf{H}}\bm{b}_{j}\bm{b}_{j}^{\mathsf{H}}\bm{v}=\bm{v}^{\mathsf{H}}\bm{v}=\Vert\bm{v}\Vert_{2}^{2}$;
(iv) is due to Lemma \ref{lemma:useful} and \eqref{eq:hx-properties2};
and the last inequality holds true as long as $\sigma\sqrt{K\log^{3}m}\ll1$.}
\item \textcolor{black}{Similarly, we can demonstrate that
\begin{align*}
 & \left|\left\langle \mathcal{A}\left(\text{\ensuremath{\text{\ensuremath{\text{\ensuremath{\bm{h}^{\star}\bm{u}^{\mathsf{H}}}}}}}}\right),\mathcal{A}\left(\text{\ensuremath{\bm{v}\bm{\Delta_{x}}^{\mathsf{H}}}}\right)\right\rangle -\left\langle \ensuremath{\text{\ensuremath{\text{\ensuremath{\bm{h}^{\star}\bm{u}^{\mathsf{H}}}}}}},\text{\ensuremath{\bm{v}\bm{\Delta_{x}}^{\mathsf{H}}}}\right\rangle \right|\\
 & \quad\quad\overset{(\text{i})}{\leq}\sqrt{\sum_{j=1}^{m}\left|\bm{b}_{j}^{\mathsf{H}}\bm{h}^{\star}\right|^{2}\left|\bm{u}^{\mathsf{H}}\bm{a}_{j}\right|^{2}}\sqrt{\sum_{j=1}^{m}\left|\bm{b}_{j}^{\mathsf{H}}\bm{v}\right|^{2}\left|\bm{\Delta_{x}}^{\mathsf{H}}\bm{a}_{j}\right|^{2}}+\frac{1}{200}\left\Vert \bm{u}\right\Vert _{2}\left\Vert \bm{v}\right\Vert _{2}\\
 & \quad\quad\overset{(\text{ii})}{\leq}\sqrt{\sum_{j=1}^{m}\left|\bm{b}_{j}^{\mathsf{H}}\bm{h}^{\star}\right|^{2}\left|\bm{u}^{\mathsf{H}}\bm{a}_{j}\right|^{2}}\cdot C\sqrt{\log m}\left(\lambda+\sigma\sqrt{K\log m}\right)\left\Vert \bm{v}\right\Vert _{2}+\frac{1}{200}\left\Vert \bm{u}\right\Vert _{2}\left\Vert \bm{v}\right\Vert _{2}\\
 & \quad\quad\leq\frac{1}{100}\left\Vert \bm{u}\right\Vert _{2}\left\Vert \bm{v}\right\Vert _{2},
\end{align*}
where (i) holds for the same reason as Step (ii) in \eqref{eq:lem-inj-gamma3};
(ii) arises due to the identity $\sum_{j=1}^{m}\left|\bm{b}_{j}^{\mathsf{H}}\bm{v}\right|^{2}=\Vert\bm{v}\Vert_{2}^{2}$
and \eqref{eq:hx-properties2}; and the last inequality relies on
the following claim.}\begin{claim}\label{claim:lemma-inj-hoeffding}With
probability exceeding $1-O\left(m^{-100}\right)$, the following inequality
\begin{equation}
\left|\sum_{j=1}^{m}\left|\bm{b}_{j}^{\mathsf{H}}\bm{h}^{\star}\right|^{2}\left|\bm{u}^{\mathsf{H}}\bm{a}_{j}\right|^{2}-\left\Vert \bm{u}\right\Vert _{2}^{2}\right|\lesssim\sqrt{\frac{\mu^{2}K\log m}{m}}\left\Vert \bm{u}\right\Vert _{2}^{2}\label{eq:lemma7hoeffding}
\end{equation}
holds uniformly for any $\bm{u}$. \end{claim}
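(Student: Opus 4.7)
The plan is to recognize the left-hand side as a Hermitian quadratic form in $\bm{u}$, reduce the uniform bound to controlling the spectral norm of a centered random matrix, and then combine Bernstein's inequality with a standard covering argument.

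Define the Hermitian matrix
\[
\bm{M} \;:=\; \sum_{j=1}^{m}\left|\bm{b}_{j}^{\mathsf{H}}\bm{h}^{\star}\right|^{2}\bm{a}_{j}\bm{a}_{j}^{\mathsf{H}} \;-\; \bm{I}_{K}.
\]
Using Assumption~\ref{assumptions:models}, $\mathbb{E}[\bm{a}_j\bm{a}_j^{\mathsf{H}}]=\bm{I}_K$, together with $\sum_{j=1}^{m}\bm{b}_j\bm{b}_j^{\mathsf{H}}=\bm{I}_K$ and $\|\bm{h}^{\star}\|_2=1$, one checks that $\mathbb{E}[\bm{M}]=\bm{0}$ and that the left-hand side of \eqref{eq:lemma7hoeffding} equals $|\bm{u}^{\mathsf{H}}\bm{M}\bm{u}|$. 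Hence it suffices to prove $\|\bm{M}\|\lesssim \sqrt{\mu^2 K\log m/m}$ with probability $1-O(m^{-100})$.

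For a fixed unit vector $\bm{u}\in\mathbb{C}^K$, write the scalar $\bm{u}^{\mathsf{H}}\bm{M}\bm{u}=\sum_{j=1}^{m}c_j\bigl(|\bm{u}^{\mathsf{H}}\bm{a}_j|^2-1\bigr)$ with weights $c_j:=|\bm{b}_j^{\mathsf{H}}\bm{h}^{\star}|^2$. The incoherence condition \eqref{eq:incoherence-condition} and the identity $\sum_j c_j=\|\bm{h}^{\star}\|_2^2=1$ give
\[
\max_{1\le j\le m}c_j\le \frac{\mu^{2}}{m},\qquad \sum_{j=1}^{m}c_j^{2}\le \frac{\mu^{2}}{m}.
\]
Each summand $c_j(|\bm{u}^{\mathsf{H}}\bm{a}_j|^2-1)$ is a centered sub-exponential with sub-exponential norm $\lesssim c_j$, so Bernstein's inequality \cite[Theorem 2.8.2]{vershynin2018high} yields, for any $\tau>0$,
\[
\mathbb{P}\!\left\{|\bm{u}^{\mathsf{H}}\bm{M}\bm{u}|\ge \tau\right\}
\le 2\exp\!\left(-c\min\!\left(\tfrac{\tau^{2} m}{\mu^{2}},\,\tfrac{\tau m}{\mu^{2}}\right)\right).
\]
Choosing $\tau\asymp\sqrt{\mu^2(K+\log m)/m}$ gives a pointwise failure probability of at most $e^{-C(K+\log m)}$.

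To promote this to a uniform bound, use the standard net argument: let $\mathcal{N}$ be a $1/4$-net of the complex unit sphere in $\mathbb{C}^K$ with $|\mathcal{N}|\le 9^{2K}$, so that $\|\bm{M}\|\le 2\max_{\bm{u}\in\mathcal{N}}|\bm{u}^{\mathsf{H}}\bm{M}\bm{u}|$. A union bound over $\mathcal{N}$ costs a factor $9^{2K}\le e^{cK}$, which is absorbed by the exponential factor above provided the constant in the Bernstein exponent is taken large enough; the resulting uniform probability is at least $1-O(m^{-100})$. This gives
\[
\|\bm{M}\|\;\lesssim\;\sqrt{\frac{\mu^{2}(K+\log m)}{m}}+\frac{\mu^{2}(K+\log m)}{m}\;\lesssim\;\sqrt{\frac{\mu^{2}K\log m}{m}},
\]
where in the last step we absorb the linear Bernstein term using the sample-size hypothesis $m\gtrsim\mu^2 K\log^{4}m$ of Lemma \ref{lemma:T-uniform-mean} (under which the first term dominates). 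Rescaling from unit $\bm{u}$ to general $\bm{u}$ by homogeneity completes the proof. The only mildly delicate point is balancing the exponent in Bernstein against the net cardinality to achieve $m^{-100}$; all other steps are routine.
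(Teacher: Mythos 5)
Your proof is correct and rests on the same two ingredients as the paper's: Bernstein's inequality at a fixed direction, followed by a covering argument over the unit sphere. Where you differ is in how the covering step is finished. You restate the claim as a spectral-norm bound on the centered Hermitian matrix $\bm{M}:=\sum_{j}|\bm{b}_{j}^{\mathsf{H}}\bm{h}^{\star}|^{2}\bm{a}_{j}\bm{a}_{j}^{\mathsf{H}}-\bm{I}_{K}$ and apply the textbook $1/4$-net reduction $\|\bm{M}\|\le 2\max_{\bm{u}\in\mathcal{N}}|\bm{u}^{\mathsf{H}}\bm{M}\bm{u}|$ with $|\mathcal{N}|\le 9^{2K}$; the paper instead takes a far finer $\epsilon_{0}$-net of $\mathcal{S}^{K-1}$ with $\epsilon_{0}=r/(m\log m)$ and bridges off-net points with a Lipschitz estimate on the increment (and, somewhat redundantly, also covers $\bm{h}$ over $\mathcal{B}_{\bm{h}}$ even though $\bm{h}^{\star}$ is fixed in this claim). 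Your version is shorter and more standard for spectral-norm control and reaches the same conclusion at the same probability level. Both arguments hinge on the identical incoherence consequences $\max_{j}|\bm{b}_{j}^{\mathsf{H}}\bm{h}^{\star}|^{2}\le\mu^{2}/m$ and $\sum_{j}|\bm{b}_{j}^{\mathsf{H}}\bm{h}^{\star}|^{2}=1$, and both need $m\gtrsim\mu^{2}(K+\log m)$ so that the sub-exponential branch of Bernstein is dominated; for that it suffices to cite the hypothesis $m\ge C\mu^{2}K\log m$ of Lemma~\ref{lemma:inj}, in whose proof this claim is invoked, rather than the stronger condition from Lemma~\ref{lemma:T-uniform-mean} as you do.
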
\begin{proof}See Appendix
\ref{subsec:Proof-of-Claim-inj-123}.\end{proof}
\item \textcolor{black}{The next term we shall control is}
\begin{align*}
\left\langle \mathcal{A}\left(\text{\ensuremath{\text{\ensuremath{\text{\ensuremath{\bm{h}^{\star}\bm{u}^{\mathsf{H}}}}}}}}\right),\mathcal{A}\left(\ensuremath{\text{\ensuremath{\bm{\Delta_{h}}}}\bm{u}^{\mathsf{H}}}\right)\right\rangle -\left\langle \ensuremath{\text{\ensuremath{\text{\ensuremath{\bm{h}^{\star}\bm{u}^{\mathsf{H}}}}}}},\ensuremath{\text{\ensuremath{\bm{\Delta_{h}}}}\bm{u}^{\mathsf{H}}}\right\rangle  & =\sum_{j=1}^{m}\left(\bm{b}_{j}^{\mathsf{H}}\bm{h}^{\star}\right)\left(\bm{b}_{j}^{\mathsf{H}}\text{\ensuremath{\bm{\Delta_{h}}}}\right)\left(\left|\bm{a}_{j}^{\mathsf{H}}\bm{u}\right|^{2}-\left\Vert \bm{u}\right\Vert _{2}^{2}\right).
\end{align*}
By virtue of the Bernstein inequality \citet[Theorem 2.8.2]{vershynin2018high},
we have 
\begin{align*}
 & \mathbb{P}\left(\left|\left\langle \mathcal{A}\left(\text{\ensuremath{\text{\ensuremath{\text{\ensuremath{\bm{h}^{\star}\bm{u}^{\mathsf{H}}}}}}}}\right),\mathcal{A}\left(\ensuremath{\text{\ensuremath{\bm{\Delta_{h}}}}\bm{u}^{\mathsf{H}}}\right)\right\rangle -\left\langle \ensuremath{\text{\ensuremath{\text{\ensuremath{\bm{h}^{\star}\bm{u}^{\mathsf{H}}}}}}},\ensuremath{\text{\ensuremath{\bm{\Delta_{h}}}}\bm{u}^{\mathsf{H}}}\right\rangle \right|\geq\tau\left\Vert \bm{u}\right\Vert _{2}^{2}\right)\\
 & \qquad\leq2\max\left\{ \exp\left(-\frac{\tau^{2}}{4\left\Vert \bm{B}\text{\ensuremath{\bm{\Delta_{h}}}}\right\Vert _{\infty}^{2}}\right),\exp\left(-\frac{\tau}{4\left\Vert \bm{B}\text{\ensuremath{\bm{\Delta_{h}}}}\right\Vert _{\infty}\left\Vert \bm{B}\bm{h}^{\star}\right\Vert _{\infty}}\right)\right\} 
\end{align*}
for any $\tau\geq0$. Let us choose $\tau$ to be
\begin{align*}
\tau & =2\left\Vert \bm{B}\text{\ensuremath{\bm{\Delta_{h}}}}\right\Vert _{\infty}\sqrt{2K\log m}+8\left\Vert \bm{B}\text{\ensuremath{\bm{\Delta_{h}}}}\right\Vert _{\infty}\left\Vert \bm{B}\bm{h}^{\star}\right\Vert _{\infty}K\log m.
\end{align*}
In view of \eqref{eq:hx-properties2} and \eqref{eq:incoherence-condition},
this quantity is bounded above by
\begin{align*}
\tau & \lesssim2\sigma\sqrt{2K\log m}+8\sigma\frac{\mu}{\sqrt{m}}K\log m\leq\frac{1}{100}.
\end{align*}
It then follows that
\begin{align}
\mathbb{P}\left(\left|\left\langle \mathcal{A}\left(\bm{h}^{\star}\bm{u}^{\mathsf{H}}\right),\mathcal{A}\left(\text{\ensuremath{\bm{\Delta_{h}}}}\bm{u}^{\mathsf{H}}\right)\right\rangle -\left\langle \bm{h}^{\star}\bm{u}^{\mathsf{H}},\ensuremath{\text{\ensuremath{\bm{\Delta_{h}}}}\bm{u}^{\mathsf{H}}}\right\rangle \right|\geq\frac{1}{100}\left\Vert \bm{u}\right\Vert _{2}^{2}\right) & \text{\ensuremath{\leq}}2\exp\left(-2K\log m\right).\label{eq:lem7-gamma3-3}
\end{align}
Additionally, define $r\coloneqq\lambda+\sigma\sqrt{K\log m}$, and
let $\mathcal{N}_{\bm{h}}$ be an $\varepsilon_{1}$-net of $\mathcal{B}_{\bm{h}}\left(\frac{C_{5}}{1-\rho}\eta r\right)\coloneqq\left\{ \bm{h}:\left\Vert \bm{h}-\bm{h}^{\star}\right\Vert _{2}\leq\frac{C_{5}}{1-\rho}\eta r\right\} $
and $\mathcal{N}_{0}$ an $\varepsilon_{2}$-net of the unit sphere
$\mathcal{S}^{K-1}=\left\{ \bm{u}\in\mathbb{C}^{K}:\left\Vert \bm{u}\right\Vert _{2}=1\right\} $.
Let $\varepsilon_{1}=r/\left(m\log m\right)$ and $\varepsilon_{2}=1/\left(m\log m\right)$.
In view of \citet[Corollary 4.2.13]{vershynin2018high}, it is seen
that
\[
\left|\mathcal{N}_{\bm{h}}\right|\leq\left(1+\frac{2C_{5}\eta r}{\left(1-\rho\right)\varepsilon_{1}}\right)^{2K}\quad\mathrm{and}\quad\left|\mathcal{N}_{0}\right|\leq\left(1+\frac{2}{\varepsilon_{2}}\right)^{2K}.
\]
Taking the union bound indicates that with probability at least 
\[
1-\left(1+\frac{2C_{5}\eta r}{\left(1-\rho\right)\varepsilon_{1}}\right)^{2K}\left(1+\frac{2}{\varepsilon_{2}}\right)^{4K}\cdot2e^{-2K\log m}\geq1-O\left(m^{-100}\right),
\]
the following inequality
\[
\left|\left\langle \mathcal{A}\left(\bm{h}^{\star}\bm{u}^{\mathsf{H}}\right),\mathcal{A}\left(\text{\ensuremath{\bm{\Delta_{h}}}}\bm{u}^{\mathsf{H}}\right)\right\rangle -\left\langle \bm{h}^{\star}\bm{u}^{\mathsf{H}},\ensuremath{\text{\ensuremath{\bm{\Delta_{h}}}}\bm{u}^{\mathsf{H}}}\right\rangle \right|\geq\frac{1}{100}\left\Vert \bm{u}\right\Vert _{2}^{2}
\]
holds uniformly for all $(\bm{h},\bm{u})\in\mathcal{N}_{\bm{h}}\times\mathcal{N}_{0}$.
As a result, for any $(\bm{h},\bm{u})\in\mathcal{N}_{\bm{h}}\times\mathcal{N}_{0}$,
there holds 
\begin{align*}
\left|\left\langle \mathcal{A}\left(\bm{h}^{\star}\bm{u}^{\mathsf{H}}\right),\mathcal{A}\left(\text{\ensuremath{\bm{\Delta_{h}}}}\bm{u}^{\mathsf{H}}\right)\right\rangle -\left\langle \bm{h}^{\star}\bm{u}^{\mathsf{H}},\ensuremath{\text{\ensuremath{\bm{\Delta_{h}}}}\bm{u}^{\mathsf{H}}}\right\rangle \right|\geq\frac{1}{100}\left\Vert \bm{u}\right\Vert _{2}^{2}
\end{align*}
with probability exceeding $1-O\left(m^{-100}\right)$. Furthermore,
if we let 
\[
F\left(\bm{h},\bm{u}\right)\coloneqq\left\langle \mathcal{A}\left(\bm{h}^{\star}\bm{u}^{\mathsf{H}}\right),\mathcal{A}\left(\text{\ensuremath{\bm{\Delta_{h}}}}\bm{u}^{\mathsf{H}}\right)\right\rangle -\left\langle \bm{h}^{\star}\bm{u}^{\mathsf{H}},\ensuremath{\text{\ensuremath{\bm{\Delta_{h}}}}\bm{u}^{\mathsf{H}}}\right\rangle ,
\]
then for any $\bm{h}\in\mathcal{B}_{\bm{h}}\left(\frac{C_{5}}{1-\rho}\eta r\right)$
and $\bm{u}\in\mathcal{S}^{K-1}$, we can find a point $(\bm{h}_{0},\bm{u}_{0})\in\mathcal{N}_{\bm{h}}\times\mathcal{N}_{0}$
satisfying $\left\Vert \bm{h}-\bm{h}_{0}\right\Vert _{2}\leq\varepsilon_{1}$
and $\left\Vert \bm{u}-\bm{u}_{0}\right\Vert _{2}\leq\varepsilon_{2}$.
Consequently, one can deduce that
\begin{align*}
 & \left|F\left(\bm{h},\bm{u}\right)-F\left(\bm{h}_{0},\bm{u}_{0}\right)\right|\\
 & \quad\leq\left|\left\langle \mathcal{A}\left(\bm{h}^{\star}\left(\bm{u}-\bm{u}_{0}\right)^{\mathsf{H}}\right),\mathcal{A}\left(\left(\bm{h}-\bm{h}^{\star}\right)\bm{u}^{\mathsf{H}}\right)\right\rangle -\left\langle \bm{h}^{\star}\left(\bm{u}-\bm{u}_{0}\right)^{\mathsf{H}},\left(\bm{h}-\bm{h}^{\star}\right)\bm{u}^{\mathsf{H}}\right\rangle \right|\\
 & \quad\quad+\left|\left\langle \mathcal{A}\left(\bm{h}^{\star}\bm{u}_{0}^{\mathsf{H}}\right),\mathcal{A}\left(\left(\bm{h}-\bm{h}_{0}\right)\bm{u}^{\mathsf{H}}\right)\right\rangle -\left\langle \bm{h}^{\star}\bm{u}_{0}^{\mathsf{H}},\left(\bm{h}-\bm{h}_{0}\right)\bm{u}^{\mathsf{H}}\right\rangle \right|\\
 & \quad\quad+\left|\left\langle \mathcal{A}\left(\bm{h}^{\star}\bm{u}_{0}^{\mathsf{H}}\right),\mathcal{A}\left(\left(\bm{h}-\bm{h}_{0}\right)\left(\bm{u}-\bm{u}_{0}\right)^{\mathsf{H}}\right)\right\rangle -\left\langle \bm{h}^{\star}\bm{u}_{0}^{\mathsf{H}},\left(\bm{h}-\bm{h}_{0}\right)\left(\bm{u}-\bm{u}_{0}\right)^{\mathsf{H}}\right\rangle \right|\\
 & \quad\leq\left(\left\Vert \mathcal{A}\right\Vert ^{2}+1\right)\left\Vert \bm{h}^{\star}\right\Vert \left\Vert \bm{u}\right\Vert _{2}\left\Vert \bm{u}-\bm{u}_{0}\right\Vert _{2}\left\Vert \bm{h}-\bm{h}^{\star}\right\Vert _{2}+\left(\left\Vert \mathcal{A}\right\Vert ^{2}+1\right)\left\Vert \bm{h}^{\star}\right\Vert _{2}\left\Vert \bm{u}\right\Vert _{2}\left\Vert \bm{u}_{0}\right\Vert _{2}\left\Vert \bm{h}-\bm{h}_{0}\right\Vert _{2}\\
 & \quad\quad+\left(\left\Vert \mathcal{A}\right\Vert ^{2}+1\right)\left\Vert \bm{h}^{\star}\right\Vert _{2}\left\Vert \bm{u}_{0}\right\Vert _{2}\left\Vert \bm{u}-\bm{u}_{0}\right\Vert _{2}\left\Vert \bm{h}-\bm{h}_{0}\right\Vert _{2}\\
 & \quad\leq\left(2K\log K+10\log m+1\right)\left(\frac{C_{5}}{1-\rho}\eta r\epsilon_{2}+\epsilon_{1}+\epsilon_{1}\epsilon_{2}\right)\\
 & \quad\leq\frac{1}{100}\left\Vert \bm{u}\right\Vert _{2}^{2}
\end{align*}
as long as $m\gg K$, where the above bound on $\Vert\mathcal{A}\Vert$
relies on Lemma \ref{lemma:normbound}. Hence, with probability exceeding
$1-O\left(m^{-10}\right)$ we have
\begin{align*}
\left|\left\langle \mathcal{A}\left(\text{\ensuremath{\text{\ensuremath{\text{\ensuremath{\bm{h}^{\star}\bm{u}^{\mathsf{H}}}}}}}}\right),\mathcal{A}\left(\ensuremath{\text{\ensuremath{\bm{\Delta_{h}}}}\bm{u}^{\mathsf{H}}}\right)\right\rangle -\left\langle \ensuremath{\text{\ensuremath{\text{\ensuremath{\bm{h}^{\star}\bm{u}^{\mathsf{H}}}}}}},\ensuremath{\text{\ensuremath{\bm{\Delta_{h}}}}\bm{u}^{\mathsf{H}}}\right\rangle \right| & \leq\left|F\left(\bm{h},\bm{u}\right)-F\left(\bm{h}_{0},\bm{u}_{0}\right)\right|+\left|F\left(\bm{h}_{0},\bm{u}_{0}\right)\right|\\
 & \leq\frac{1}{100}\left\Vert \bm{u}\right\Vert _{2}^{2}+\frac{1}{100}\left\Vert \bm{u}\right\Vert _{2}^{2}\leq\frac{1}{50}\left\Vert \bm{u}\right\Vert _{2}^{2},
\end{align*}
which holds uniformly over all $\bm{h}\in\mathcal{B}_{\bm{h}}\left(\frac{C_{5}}{1-\rho}\eta r\right)$
and $\bm{u}\in\mathcal{S}^{K-1}$.
\item \textcolor{black}{The bound on $\left\langle \mathcal{A}\left(\text{\ensuremath{\text{\ensuremath{\bm{v}\bm{x}^{\star\mathsf{H}}}}}}\right),\mathcal{A}\left(\ensuremath{\text{\ensuremath{\bm{\Delta_{h}}}}\bm{u}^{\mathsf{H}}}\right)\right\rangle -\left\langle \ensuremath{\text{\ensuremath{\bm{v}\bm{x}^{\star\mathsf{H}}}}},\ensuremath{\text{\ensuremath{\bm{\Delta_{h}}}}\bm{u}^{\mathsf{H}}}\right\rangle $
can be obtained in a similar manner; we thus omit it here for simplicity. }
\item \textcolor{black}{The above bounds on four terms taken collectively
demonstrate that
\[
\left|\gamma_{3}\right|\leq\frac{1}{100}\left\Vert \bm{v}\right\Vert _{2}^{2}+\frac{1}{100}\left\Vert \bm{u}\right\Vert _{2}\left\Vert \bm{v}\right\Vert _{2}+\frac{1}{50}\left\Vert \bm{u}\right\Vert _{2}^{2}+\frac{1}{100}\left\Vert \bm{u}\right\Vert _{2}\left\Vert \bm{v}\right\Vert _{2}\leq\frac{1}{25}\left(\left\Vert \bm{u}\right\Vert _{2}^{2}+\left\Vert \bm{v}\right\Vert _{2}^{2}\right).
\]
}
\end{enumerate}
\end{itemize}
\textcolor{black}{Combining the above results, we can continue the
relation \eqref{eq:AZ-alpha12} to conclude that
\begin{align*}
\left\Vert \mathcal{A}\left(\boldsymbol{Z}\right)\right\Vert _{2}^{2} & =\alpha_{2}+\alpha_{1}\\
 & \geq\left\Vert \boldsymbol{h}^{\star}\boldsymbol{u}^{\mathsf{H}}\right\Vert _{\text{F}}^{2}+\left\Vert \boldsymbol{v}\boldsymbol{x}^{\mathsf{\star H}}\right\Vert _{\text{F}}^{2}-\frac{1}{50}\left(\left\Vert \boldsymbol{u}\right\Vert _{2}^{2}+\left\Vert \boldsymbol{v}\right\Vert _{2}^{2}\right)-\left|\gamma_{1}\right|-\frac{1}{100}\left(\left\Vert \bm{u}\right\Vert _{2}^{2}+\left\Vert \bm{v}\right\Vert _{2}^{2}\right)-\frac{1}{25}\left(\left\Vert \bm{u}\right\Vert _{2}^{2}+\left\Vert \bm{v}\right\Vert _{2}^{2}\right)\\
 & \geq\frac{1}{2}\left(\left\Vert \bm{u}\right\Vert _{2}^{2}+\left\Vert \bm{v}\right\Vert _{2}^{2}\right)
\end{align*}
as claimed. }

\subsubsection{Proof of Claim \ref{claim:lemma-inj-hoeffding}\label{subsec:Proof-of-Claim-inj-123}}

We start by defining 
\[
\eta\coloneqq\sum_{j=1}^{m}\left|\bm{b}_{j}^{\mathsf{H}}\bm{h}^{\star}\right|^{2}\left(\left|\bm{a}_{j}^{\mathsf{H}}\bm{u}\right|^{2}-\left\Vert \bm{u}\right\Vert _{2}^{2}\right),
\]
which is the sum of sub-exponential variables with zero mean $\mathbb{E}\left[\left|\bm{a}_{j}^{\mathsf{H}}\bm{u}\right|^{2}-\left\Vert \bm{u}\right\Vert _{2}^{2}\right]=0$.

\paragraph{Concentration.} In view of the Bernstein inequality (cf.~\citet[Theorem 2.8.2]{vershynin2018high}),
we have 
\begin{align*}
 & \mathbb{P}\left(\left|\sum_{j=1}^{m}\left|\bm{b}_{j}^{\mathsf{H}}\bm{h}^{\star}\right|^{2}\left(\left|\bm{a}_{j}^{\mathsf{H}}\bm{u}\right|^{2}-\left\Vert \bm{u}\right\Vert _{2}^{2}\right)\right|\geq\tau\left\Vert \bm{u}\right\Vert _{2}^{2}\right)\\
 & \qquad\leq2\max\left\{ \exp\left(-\frac{\tau^{2}}{4\left\Vert \bm{B}\bm{h}^{\star}\right\Vert _{\infty}^{2}\left\Vert \bm{u}\right\Vert _{2}^{2}}\right),\exp\left(-\frac{\tau}{4\left\Vert \bm{B}\text{\ensuremath{\bm{\Delta_{h}}}}\right\Vert _{\infty}^{2}\left\Vert \bm{u}\right\Vert _{2}}\right)\right\} 
\end{align*}
for any $\tau\geq0$. Set 
\[
\tau=4\left\Vert \bm{B}\bm{h}^{\star}\right\Vert _{\infty}\left\Vert \bm{u}\right\Vert _{2}\sqrt{2K\log m}+16\left\Vert \bm{B}\bm{h}^{\star}\right\Vert _{\infty}^{2}\left\Vert \bm{u}\right\Vert _{2}K\log m,
\]
then there holds 
\begin{equation}
\mathbb{P}\left(\left|\sum_{j=1}^{m}\left|\bm{b}_{j}^{\mathsf{H}}\bm{h}^{\star}\right|^{2}\left(\left|\bm{a}_{j}^{\mathsf{H}}\bm{u}\right|^{2}-\left\Vert \bm{u}\right\Vert _{2}^{2}\right)\right|\geq\tau\left\Vert \bm{u}\right\Vert _{2}^{2}\right)\leq2\exp\left(-4K\text{\ensuremath{\log m}}\right).\label{eq:lemma7gamma1}
\end{equation}

\paragraph{Union bound.}
Next, define $\mathcal{N}_{0}$ to be an $\epsilon_{0}$-net of the
unit sphere $\mathcal{S}^{K-1}\coloneqq\left\{ \bm{u}\in\mathbb{C}^{K}:\left\Vert \bm{u}\right\Vert _{2}=1\right\} $,
which can be chosen to obey \citet[Corollary 4.2.13]{vershynin2018high}
\[
\left|\mathcal{N}_{0}\right|\leq\left(1+\frac{2}{\epsilon_{0}}\right)^{2K}.
\]
By taking the union bound over $\mathcal{N}_{0}$, we reach
\[
\left|\sum_{j=1}^{m}\left|\bm{b}_{j}^{\mathsf{H}}\bm{h}^{\star}\right|^{2}\left(\left|\bm{a}_{j}^{\mathsf{H}}\bm{u}\right|^{2}-\left\Vert \bm{u}\right\Vert _{2}^{2}\right)\right|\geq4\left\Vert \bm{B}\bm{h}^{\star}\right\Vert _{\infty}\sqrt{2K\log m}+16\left\Vert \bm{B}\bm{h}^{\star}\right\Vert _{\infty}^{2}K\log m,\qquad\forall\bm{u}\in\mathcal{N}_{0}
\]
with probability at least 
\[
1-\left(1+\frac{2}{\epsilon_{0}}\right)^{2K}e^{-4K\log m}\geq1-O\left(m^{-10}\right).
\]

\paragraph{Approximation.}
Our goal is then to extend the above concentration result to cover
all $\bm{h}\in\mathcal{B}_{\bm{h}}$, $\bm{u}\in\mathcal{S}^{K-1}$
simultaneously, towards which we invoke the standard epsilon-net argument.
For any $\bm{u}\in\mathcal{S}^{K-1}$, let $\bm{u}_{0}\in\mathcal{N}_{0}$
be a point satisfying $\left\Vert \bm{u}-\bm{u}_{0}\right\Vert _{2}\leq\epsilon_{0}$.
Then straightforward calculation gives
\begin{align*}
 & \left|\left(\sum_{j=1}^{m}\left|\bm{b}_{j}^{\mathsf{H}}\bm{h}^{\star}\right|^{2}\left(\left|\bm{a}_{j}^{\mathsf{H}}\bm{u}\right|^{2}-\left\Vert \bm{u}\right\Vert _{2}^{2}\right)\right)-\left(\sum_{j=1}^{m}\left|\bm{b}_{j}^{\mathsf{H}}\bm{h}^{\star}\right|^{2}\left(\left|\bm{a}_{j}^{\mathsf{H}}\bm{u}_{0}\right|^{2}-\left\Vert \bm{u}_{0}\right\Vert _{2}^{2}\right)\right)\right|\\
 & \qquad\overset{(\text{i})}{=}\left|\sum_{j=1}^{m}\left|\bm{b}_{j}^{\mathsf{H}}\bm{h}^{\star}\right|^{2}\left|\bm{a}_{j}^{\mathsf{H}}\bm{u}\right|^{2}-\left\Vert \bm{h}^{\star}\right\Vert _{2}^{2}\left\Vert \bm{u}\right\Vert _{2}^{2}-\sum_{j=1}^{m}\left|\bm{b}_{j}^{\mathsf{H}}\bm{h}^{\star}\right|^{2}\left|\bm{a}_{j}^{\mathsf{H}}\bm{u}_{0}\right|^{2}+\left\Vert \bm{h}^{\star}\right\Vert _{2}^{2}\left\Vert \bm{u}_{0}\right\Vert _{2}^{2}\right|\\
 & \qquad=\left|\left\Vert \mathcal{A}\left(\bm{h}^{\star}\text{\ensuremath{\bm{u}^{\mathsf{H}}}}\right)\right\Vert _{2}^{2}-\left\Vert \mathcal{A}\left(\bm{h}^{\star}\bm{u}_{0}^{\mathsf{H}}\right)\right\Vert _{2}^{2}+\left\Vert \bm{u}_{0}\right\Vert _{2}^{2}-\left\Vert \bm{u}\right\Vert _{2}^{2}\right|\\
 & \qquad\overset{(\text{ii})}{\leq}\left|\left\Vert \mathcal{A}\left(\bm{h}^{\star}\text{\ensuremath{\bm{u}^{\mathsf{H}}}}\right)\right\Vert _{2}^{2}-\left\Vert \mathcal{A}\left(\bm{h}^{\star}\bm{u}_{0}^{\mathsf{H}}\right)\right\Vert _{2}^{2}\right|+\left\Vert \bm{u}_{0}-\bm{u}\right\Vert _{2}\left(\left\Vert \bm{u}_{0}\right\Vert _{2}+\left\Vert \bm{u}\right\Vert _{2}\right)\\
 & \qquad\text{\ensuremath{\overset{(\text{iii})}{\leq}\left|\left(\left\Vert \mathcal{A}\left(\bm{h}^{\star}\text{\ensuremath{\bm{u}^{\mathsf{H}}}}\right)\right\Vert _{2}+\left\Vert \mathcal{A}\left(\bm{h}^{\star}\bm{u}_{0}^{\mathsf{H}}\right)\right\Vert _{2}\right)\left\Vert \mathcal{A}\left(\bm{h}^{\star}\text{\ensuremath{\bm{u}^{\mathsf{H}}}}\right)-\mathcal{A}\left(\bm{h}^{\star}\bm{u}_{0}^{\mathsf{H}}\right)\right\Vert _{2}\right|}}+\epsilon_{0}\\
 & \qquad\lesssim\left\Vert \mathcal{A}\right\Vert ^{2}\left(\left\Vert \bm{h}^{\star}\right\Vert _{2}\left\Vert \bm{u}\right\Vert _{2}+\left\Vert \bm{h}^{\star}\right\Vert _{2}\left\Vert \bm{u}_{0}\right\Vert _{2}\right)\left\Vert \bm{h}^{\star}\right\Vert _{2}\left\Vert \bm{u}-\bm{u}_{0}\right\Vert _{2}+\epsilon_{0}\\
 & \qquad\overset{(\text{iv})}{\leq}\left(4K\log K+20\log m+1\right)\epsilon_{0},
\end{align*}
where (i) comes from $\sum_{j=1}^{m}\left|\bm{b}_{j}^{\mathsf{H}}\bm{h}^{\star}\right|^{2}=\left\Vert \bm{h}^{\star}\right\Vert _{2}^{2}$;
(ii) and (iii) are due to triangle inequality; (iv) follows from the
following bound

\begin{equation}
\left\Vert \mathcal{A}\right\Vert \leq\sqrt{2K\log K+10\log m},\label{eq:a-normbound}
\end{equation}
which holds with probability at least $1-O\left(m^{-10}\right)$ according
to Lemma \ref{lemma:normbound}. Letting $\epsilon_{0}=r/\left(m\log m\right)$
with $r=\lambda+\sigma\sqrt{K\log m}$, we note it satisfies
\[
1-\left(1+\frac{2}{\epsilon_{0}}\right)^{2K}e^{-4K\log m}\geq1-O\left(m^{-10}\right).
\]

\paragraph{Putting all this together.}
 Therefore, we conclude that: with probability at least $1-O\left(m^{-10}\right)$,
one has
\begin{align*}
\left|\text{\ensuremath{\eta}}\right| & \leq4\left\Vert \bm{B}\bm{h}^{\star}\right\Vert _{\infty}\sqrt{2K\log m}+16\left\Vert \bm{B}\bm{h}^{\star}\right\Vert _{\infty}^{2}K\log m+\left(4K\log K+20\log m+1\right)\epsilon_{0}\\
 & \lesssim\sqrt{\frac{\mu^{2}K\log m}{m}}
\end{align*}
uniformly for all $\bm{h}\in\mathcal{B}_{\bm{h}}$ and $\bm{u}\in\mathcal{S}^{K-1}$,
with the proviso that $m\geq C\mu^{2}K\log m$. Here, the second inequality
arises from \eqref{eq:incoherence-condition}.

\section{Analysis: Nonconvex formulation under Gaussian design\label{appendix:gaussian-ncvx}}

We consider the loss function 
\begin{equation}
\underset{\ensuremath{\boldsymbol{Z}\in\mathbb{C}^{K\times K}}}{\mathrm{minimize}}\quad f\left(\bm{h},\bm{x}\right)=\sum_{j=1}^{m}\left|\boldsymbol{b}_{j}^{\mathsf{H}}\bm{h}\bm{x}^{\mathsf{H}}\boldsymbol{a}_{j}-y_{j}\right|^{2}+\lambda\left\Vert \boldsymbol{h}\right\Vert _{2}^{2}+\lambda\left\Vert \boldsymbol{x}\right\Vert _{2}^{2}.\label{eq:gaussian-ncvx-loss}
\end{equation}
The main idea similar to the one presented in Appendix \ref{appendix:noncvx}, although the proof for Gaussian design is easier due to the presence of more randomness.
We shall also assume $\Vert\bm{h}^{\star}\Vert_{2}=\Vert\bm{x}^{\star}\Vert_{2}=1$
for the sake of simplicity and adopt the same notation as (\ref{eq:defn-alignment-alphat})-(\ref{eq:defn-hhat-xhat}).
The main part of the analysis lies in demonstrating the following set
of hypotheses by induction:
\begin{subequations}\label{sec:gaussian-hypotheses-ncvx}
\begin{align}
\mathsf{dist}\left(\boldsymbol{z}^{t},\boldsymbol{z}^{\star}\right) & \leq\big\|\widehat{\boldsymbol{z}}^{t-1/2}-\boldsymbol{z}^{\star}\big\|_{2}\leq\rho\mathsf{dist}\left(\bm{z}^{t-1},\bm{z}^{\star}\right)+C_{11}\eta\left(\lambda+\sigma\sqrt{mK\log m}\right) \label{eq:gaussian-hypothesis-dist}\\
\mathsf{dist}\big(\boldsymbol{z}^{t,\left(l\right)},\widetilde{\bm{z}}^{t}\big) & \leq C_{12}\left(\frac{\sqrt{K\log^{3}m}}{m}+\frac{\sigma\sqrt{K\log^{2}m}}{m}\right) \label{eq:gaussian-hypothesis-loo}\\
\max_{1\leq l\leq m}\left|\boldsymbol{a}_{l}^{\mathsf{H}}\left(\widetilde{\bm{x}}^{t}-\boldsymbol{x}^{\star}\right)\right| & \leq C_{13}\left(\frac{\sqrt{mK\log^{3}m}}{m}+\frac{\sigma\sqrt{mK\log^{2}m}}{m}\right) \label{eq:gaussian-hypothesis-incoherence1}\\
\max_{1\leq l\leq m}\big|\boldsymbol{b}_{l}^{\mathsf{H}}\big(\widetilde{\bm{h}}^{t}-\bm{h}^{\star}\big)\big| & \leq C_{13}\left(\frac{\sqrt{mK\log^{3}m}}{m}+\frac{\sigma\sqrt{mK\log^{2}m}}{m}\right) \label{eq:gaussian-hypothesis-incoherence2}
\end{align}
\end{subequations}
for some constants $C_{11}, C_{12}, C_{13}>0$. 
Additionally, to complete the induction argument for the base case, we are in need of the following
results of initialization: \begin{subequations}\label{sec:gaussian-hypotheses-ncvx-initialization}
\begin{align}
\text{dist}\left(\bm{z}^{0},\bm{z}^{\star}\right) & \lesssim\frac{\sqrt{mK\log^{2}m}}{m}+\frac{\sigma\sqrt{mK\log m}}{m},\label{eq:gaussian-hypothesis-dist-initialization}\\
\mathsf{dist}\big(\boldsymbol{z}^{0,\left(l\right)},\widetilde{\bm{z}}^{0}\big) & \leq C_{13}\left(\frac{\sqrt{K\log^{3}m}}{m}+\frac{\sigma\sqrt{K\log^{2}m}}{m}\right),\label{eq:gaussian-hypothesis-loo-1}\\
\max_{1\leq l\leq m}\left|\boldsymbol{a}_{l}^{\mathsf{H}}\left(\widetilde{\bm{x}}^{0}-\boldsymbol{x}^{\star}\right)\right| & \leq C_{12}\left(\frac{\sqrt{mK\log^{3}m}}{m}+\frac{\sigma\sqrt{mK\log^{2}m}}{m}\right),\label{eq:gaussian-hypothesis-incoherence1-1}\\
\max_{1\leq l\leq m}\big|\boldsymbol{b}_{l}^{\mathsf{H}}\big(\widetilde{\bm{h}}^{0}-\bm{h}^{\star}\big)\big| & \leq C_{13}\left(\frac{\sqrt{mK\log^{3}m}}{m}+\frac{\sigma\sqrt{mK\log^{2}m}}{m}\right).\label{eq:gaussian-hypothesis-incoherence2-1}
\end{align}
\end{subequations}

\subsection{Induction analysis}

Before embarking on the analysis, we state below a useful lemma which is direct
consequence of the hypotheses (\ref{sec:gaussian-hypotheses-ncvx})
and (\ref{sec:gaussian-hypotheses-ncvx-initialization}).

\begin{lemma}\label{lem:gaussian-consequence}Instate the notation
and assumptions in Theorem \ref{theorem:gaussian-cvx}. For $t\geq0$,
suppose that the hypotheses (\ref{sec:gaussian-hypotheses-ncvx})
and (\ref{subeq:Additional-induction-hypotheses}) hold in the first
$t$ iterations. Then there exist some constants $C,C'>0$ such that
for any $1\le l\leq m$, \begin{subequations}
\begin{align}
\mathsf{dist}\left(\boldsymbol{z}^{t},\boldsymbol{z}^{\star}\right) & \leq C\left(\frac{\sqrt{mK\log^{2}m}}{m}+\frac{\lambda+\sigma\sqrt{mK\log m}}{m}\right),\label{eq:gaussian-dist-bound}\\
\left\Vert \bm{h}^{t}\big(\bm{x}^{t}\big)^{\mathsf{H}}-\boldsymbol{h}^{\star}\boldsymbol{x}^{\mathsf{\star H}}\right\Vert  & \leq C'\left(\frac{\sqrt{mK\log^{2}m}}{m}+\frac{\lambda+\sigma\sqrt{mK\log m}}{m}\right),\label{eq:gaussian-hx-normbound}\\
\big\|\widetilde{\boldsymbol{z}}^{t,\left(l\right)}-\boldsymbol{z}^{\star}\big\|_{2} & \leq2C\left(\frac{\sqrt{mK\log^{2}m}}{m}+\frac{\lambda+\sigma\sqrt{mK\log m}}{m}\right),\label{eq:gaussian-conseq}\\
\frac{1}{2}\leq\left\Vert \widetilde{\bm{x}}^{t}\right\Vert _{2}\leq\frac{3}{2}, & \qquad\frac{1}{2}\le\big\|\widetilde{\bm{h}}^{t}\big\|_{2}\leq\frac{3}{2},\label{eq:gaussian-tilde-hx}\\
\frac{1}{2}\le\big\|\widetilde{\bm{x}}^{t,\left(l\right)}\big\|_{2}\leq\frac{3}{2}, & \qquad\frac{1}{2}\le\big\|\widetilde{\bm{h}}^{t,\left(l\right)}\big\|_{2}\leq\frac{3}{2},\label{eq:gaussian-tilde-hx-loo}\\
\frac{1}{2}\le\big\|\widehat{\bm{x}}^{t,\left(l\right)}\big\|_{2}\leq\frac{3}{2}, & \qquad\frac{1}{2}\le\big\|\widehat{\bm{h}}^{t,\left(l\right)}\big\|_{2}\leq\frac{3}{2}.\label{eq:gaussian-hat-hx-loo}
\end{align}
In addition, if $t>0$, then one also has
\begin{align}
\big\|\widehat{\bm{z}}^{t-1/2}-\boldsymbol{z}^{\star}\big\|_{2} & \leq C\left(\frac{\sqrt{mK\log^{2}m}}{m}+\frac{\lambda+\sigma\sqrt{K\log m}}{m}\right).\label{eq:gaussian-conseq-2}
\end{align}
\end{subequations}\end{lemma}This lemma can be proved in the same manner 
as Lemma \ref{lem:consequence} and hence we omit the proof here for brevity. 

\subsubsection{Characterizing local geometry}

Similar to the nonconvex analysis of blind deconvolution, our first
step is to establish some kind of restricted strong convexity and
smoothness as described in the following lemma. The proof can be found in
 Appendix \ref{subsec:Proof-of-Lemma-gaussian-geometry}. 

\begin{lemma}\label{lemma:gaussian-geometry}Let $\delta:=c/\log^{2}m$
for some sufficiently small constant $c>0$. Suppose that $m\geq CK\log^{6}m$
for some sufficiently large constant $C>0$ and that $\sigma\sqrt{K\log^{3}m/m}\leq c_{1}$
for some sufficiently small constant $c_{1}>0$. Then with probability
$1-O\left(m^{-10}+e^{-K}\log m\right)$, one has
\begin{align*}
\boldsymbol{u}^{\mathsf{H}}\left[\bm{D}\nabla^{2}f\left(\boldsymbol{z}\right)+\nabla^{2}f\left(\boldsymbol{z}\right)\bm{D}\right]\boldsymbol{u} & \geq\frac{m}{4}\left\Vert \boldsymbol{u}\right\Vert _{2}^{2}\quad\text{and}\\
\left\Vert \nabla^{2}f\left(\boldsymbol{z}\right)\right\Vert  & \leq3m
\end{align*}
simultaneously for all points
\[
\boldsymbol{z}=\left[\begin{array}{c}
\boldsymbol{h}\\
\boldsymbol{x}
\end{array}\right],\quad\boldsymbol{u}=\left[\begin{array}{c}
\boldsymbol{h}_{1}-\boldsymbol{h}_{2}\\
\boldsymbol{x}_{1}-\boldsymbol{x}_{2}\\
\overline{\boldsymbol{h}_{1}-\boldsymbol{h}_{2}}\\
\overline{\boldsymbol{x}_{1}-\boldsymbol{x}_{2}}
\end{array}\right]\quad\text{and}\quad\bm{D}=\left[\begin{array}{cccc}
\gamma_{1}\bm{I}_{K}\\
 & \gamma_{2}\bm{I}_{K}\\
 &  & \gamma_{1}\bm{I}_{K}\\
 &  &  & \gamma_{2}\bm{I}_{K}
\end{array}\right]
\]
obeying the following properties:
\begin{itemize}
\item $\boldsymbol{z}$ satisfies \begin{subequations}\label{subeq:gaussian-assumptions-geometry}
\begin{align}
\max\left\{ \left\Vert \boldsymbol{h}-\boldsymbol{h}^{\star}\right\Vert _{2},\left\Vert \boldsymbol{x}-\boldsymbol{x}^{\star}\right\Vert _{2}\right\}  & \leq\delta,\\
\max_{1\leq j\leq m}\left\{ \left|\boldsymbol{a}_{j}^{\mathsf{H}}\left(\boldsymbol{x}-\boldsymbol{x}^{\star}\right)\right|,\left|\boldsymbol{b}_{j}^{\mathsf{H}}\left(\boldsymbol{h}-\boldsymbol{h}^{\star}\right)\right|\right\}  & \leq C_{13}\tfrac{1}{\log^{3/2}m},
\end{align}
\item $\bm{z}_{1}:=\left(\boldsymbol{h}_{1},\boldsymbol{x}_{1}\right)$
is aligned with $\bm{z}_{2}:=\left(\boldsymbol{h}_{2},\boldsymbol{x}_{2}\right)$
in the sense that $\|\bm{z}_{1}-\bm{z}_{2}\|_{2}=\mathsf{dist}(\bm{z}_{1},\bm{z}_{2})$;
in addition, they satisfy 
\[
\max\left\{ \left\Vert \boldsymbol{h}_{1}-\boldsymbol{h}^{\star}\right\Vert _{2},\left\Vert \boldsymbol{h}_{2}-\boldsymbol{h}^{\star}\right\Vert _{2},\left\Vert \boldsymbol{x}_{1}-\boldsymbol{x}^{\star}\right\Vert _{2},\left\Vert \boldsymbol{x}_{2}-\boldsymbol{x}^{\star}\right\Vert _{2}\right\} \leq\delta;
\]
\item $\gamma_{1},\gamma_{2}\in\mathbb{R}$ and obey
\[
\max\left\{ \left|\gamma_{1}-1\right|,\left|\gamma_{2}-1\right|\right\} \leq\delta.
\]
\end{subequations}
\end{itemize}
\end{lemma}

\subsubsection{$\ell_{2}$ error contraction}

Next, by employing the established restricted strong convexity and
smoothness in Lemma \ref{lemma:gaussian-geometry}, we can prove the
hypothesis \eqref{eq:gaussian-hypothesis-dist} holds inductively. Our
result is this: \begin{lemma}Set $\lambda=C_{\lambda}\sigma\sqrt{mK\log m}$
for some sufficiently large constant $C_{\lambda}>0$ and the stepsize
$\eta=c_{\eta}/m$ for some sufficiently small constant $c_{\eta}>0$. Suppose the
sample complexity satisfies $m\geq CK\log^{3}m$ for some sufficiently
large constant $C>0$. Then if the hypotheses \eqref{sec:gaussian-hypotheses-ncvx} hold true at $t$th
iteration, we have for some constant $C_{11}>0$,
\[
\mathsf{dist}\left(\bm{z}^{t+1},\bm{z}^{\star}\right)\le\left(1-c_{\eta}/16\right)\mathsf{dist}\left(\bm{z}^{t},\bm{z}^{\star}\right)+C_{11}\eta\left(\lambda+\sigma\sqrt{mK\log m}\right),
\]
holds with probability exceeding $1-O(m^{-100})$.
\end{lemma}
\begin{proof}The proof is the same as the analysis for Lemma \ref{lemma:distance}
with the help of Lemma \ref{lemma:gaussian-geometry} and thus omitted
here for simplicity.  
\end{proof}

Before moving on to the next step, we provide a corollary to guarantee that
the alignment parameters does not change much between adjacent iterates. 

\begin{corollary}\label{corollary:gaussian-alpha}Instate the notation
and assumptions in Theorem \ref{theorem:gaussian-cvx}. For an integer
$t>0$, suppose that the hypotheses (\ref{sec:hypotheses-ncvx}) and
(\ref{subeq:Additional-induction-hypotheses}) hold in the first $t-1$
iterations. Then there exists some constant $C>0$ such that for any
$1\le l\leq m$, one has\begin{subequations}
\begin{align}
\left|\left|\alpha^{t}\right|-1\right| & \lesssim\mathsf{dist}\left(\widetilde{\bm{z}}^{t},\bm{z}^{\star}\right)\lesssim\frac{\sqrt{mK\log^{2}m}}{m}+\frac{\sigma\sqrt{mK\log m}}{m},\label{eq:alpha-asymp1-2}\\
\left|\frac{\alpha^{t-1/2}}{\alpha^{t-1}}-1\right| & \lesssim c_{\eta}\left(\frac{\sqrt{mK\log^{2}m}}{m}+\frac{\sigma\sqrt{mK\log m}}{m}\right),\label{eq:alpharatio-round1-2}\\
\left|\left|\alpha_{\mathrm{mutual}}^{t,\left(l\right)}\right|-1\right| & \lesssim\big\|\widehat{\bm{z}}^{t,\left(l\right)}-\bm{z}^{\star}\big\|_{2}\lesssim\frac{\sqrt{mK\log^{2}m}}{m}+\frac{\sigma\sqrt{mK\log m}}{m},\label{eq:alphaloo-asymp1-1}\\
\frac{1}{2}\leq\left\Vert \bm{x}^{t}\right\Vert _{2}\leq\frac{3}{2}, & \qquad\frac{1}{2}\leq\left\Vert \bm{h}^{t}\right\Vert _{2}\leq\frac{3}{2},\label{eq:hx-asymp1-1}\\
\frac{1}{2}\leq\big\|\bm{x}^{t,\left(l\right)}\big\|_{2}\leq\frac{3}{2}, & \qquad\frac{1}{2}\leq\big\|\bm{h}^{t,\left(l\right)}\big\|_{2}\leq\frac{3}{2}\label{eq:hxloo-asymp1-1}
\end{align}
\end{subequations}with probability at least $1-O\left(m^{-100}+e^{-CK}\log m\right)$.
\end{corollary}This corollary can be proved in the same way as Corollary
\ref{corollary:alpha} and hence we omit it here for simplicity. 

\subsubsection{Leave-one-out proximity}

The next step is to control the discrepancy between the leave-one-out
sequence and the original sequence. The formal statement is given
in the lemma below. 

\begin{lemma}\label{lemma:gaussian-loo}
Suppose the sample size obeys
$m\geq CK\log^{3}m$ for some large enough constant $C>0$. If the
hypotheses (\ref{sec:gaussian-hypotheses-ncvx}) hold true for the
$t$th iteration, then with probability exceeding $1-O(m^{-10})$,
we have
\begin{align}
\max_{1\leq l\leq m}\mathsf{dist}\left(\bm{z}^{t+1,\left(l\right)},\widetilde{\bm{z}}^{t+1}\right) & \le C_{12}\left(\frac{\sqrt{K\log^{3}m}}{m}+\frac{\sigma\sqrt{K}\log m}{m}\right),\label{eq:gaussian-loo-1}\\
\max_{1\leq l\leq m}\left\Vert \widetilde{\bm{z}}^{t+1,\left(l\right)}-\widetilde{\bm{z}}^{t+1}\right\Vert _{2} & \lesssim C_{12}\left(\frac{\sqrt{K\log^{3}m}}{m}+\frac{\sigma\sqrt{K}\log m}{m}\right).\label{eq:gaussian-loo-2}
\end{align}

\end{lemma}The proof can be found in Appendix \ref{subsec:Proof-of-Lemma-gaussian-loo}. 

\subsubsection{Establishing incoherence}

Then we proceed to prove the incoherence hypotheses, i.e. (\ref{eq:gaussian-hypothesis-incoherence1})
and (\ref{eq:gaussian-hypothesis-incoherence2}). They are much easier
to handle than the Fourier designs. We actually only need
to prove the incoherence of $\bm{a}_{l}$ and $\bm{x}^{t+1}$. Then
the other follows immediately by the symmetry between $\{\bm{a}_{j}\}_{j=1}^{m}$
and $\{\bm{b}_{j}\}_{j=1}^{m}$ under Assumption \ref{assumptions:models-gausssian}. Similar
to (\ref{eq:incoherencea-proof}), the triangle inequality and Cauchy-Schwarz
inequality yield

\begin{align}
\left|\boldsymbol{a}_{l}^{\mathsf{H}}\big(\widetilde{\boldsymbol{x}}^{t+1}-\boldsymbol{x}^{\star}\big)\right| & \leq\left|\boldsymbol{a}_{l}^{\mathsf{H}}\big(\widetilde{\boldsymbol{x}}^{t+1}-\widetilde{\boldsymbol{x}}^{t+1,\left(l\right)}\big)\right|+\left|\boldsymbol{a}_{l}^{\mathsf{H}}\big(\widetilde{\boldsymbol{x}}^{t+1,\left(l\right)}-\boldsymbol{x}^{\star}\big)\right|\nonumber \\
 & \leq\left\Vert \boldsymbol{a}_{l}\right\Vert _{2}\big\|\widetilde{\boldsymbol{x}}^{t+1}-\widetilde{\boldsymbol{x}}^{t+1,\left(l\right)}\big\|_{2}+\left|\boldsymbol{a}_{l}^{\mathsf{H}}\big(\widetilde{\boldsymbol{x}}^{t+1,\left(l\right)}-\boldsymbol{x}^{\star}\big)\right|\nonumber \\
 & \leq10\sqrt{K}\cdot C_{12}\left(\frac{\sqrt{K\log^{3}m}}{m}+\frac{\sigma\sqrt{K\log^{2}m}}{m}\right)\nonumber \\
 & \quad\quad+20\sqrt{\log m}\cdot2C_{11}\left(\frac{\sqrt{mK\log^{2}m}}{m}+\frac{\sigma\sqrt{mK\log m}}{m}\right)\nonumber \\
 & \leq C_{13}\left(\frac{\sqrt{mK\log^{3}m}}{m}+\frac{\sigma\sqrt{mK\log^{2}m}}{m}\right),\label{eq:incoherencea-proof-1}
\end{align}
where the penultimate inequality follows from (\ref{eq:useful1}),
(\ref{eq:useful2}) and (\ref{eq:gaussian-loo-2}). This establishes
the hypothesis (\ref{eq:gaussian-hypothesis-incoherence1}) for the
$(t+1)$-th iteration. 

The incoherence of $\bm{b}_{l}$ and $\bm{h}^{t+1}$ (as stated in
the hypothesis (\ref{eq:gaussian-hypothesis-incoherence2})) follows
from the symmetry between $\{\bm{a}_{j}\}_{j=1}^{m}$ and $\{\bm{b}_{j}\}_{j=1}^{m}$.
We summarize the results in the following lemma. 

\begin{lemma}\label{lemma:gaussian-incoherence}Suppose the sample
complexity obeys $m\geq CK\log m$ for some sufficiently large constant
$C>0$ and $\lambda=C_{\lambda}\sigma\sqrt{mK\log m}$ for some absolute
constant $C_{\lambda}>0$. If the hypotheses (\ref{eq:gaussian-hypothesis-dist})-(\ref{eq:gaussian-hypothesis-incoherence2})
hold for the $t$th iteration, then with probability exceeding $1-O\left(m^{-100}\right)$
for some constant $C_{13}>0$, one has 
\begin{align*}
\max_{1\leq l\leq m}\big|\boldsymbol{a}_{l}^{\mathsf{H}}\left(\widetilde{\boldsymbol{x}}^{t+1}-\bm{x}^{\star}\right)\big| & \le C_{13}\left(\frac{\sqrt{mK\log^{3}m}}{m}+\frac{\sigma\sqrt{mK\log^{2}m}}{m}\right),\\
\max_{1\leq l\leq m}\big|\boldsymbol{b}_{l}^{\mathsf{H}}\left(\widetilde{\boldsymbol{h}}^{t+1}-\bm{h}^{\star}\right)\big| & \le C_{13}\left(\frac{\sqrt{mK\log^{3}m}}{m}+\frac{\sigma\sqrt{mK\log^{2}m}}{m}\right),
\end{align*}
as long as $C_{13}>0$ is some sufficiently large constant and $\eta>0$
is taken to be some sufficiently small constant.\end{lemma}

\subsubsection{The base case: Spectral initialization}

The last step of the proof is to establish the induction hypotheses
for the base case. The following three lemmas justify (\ref{eq:gaussian-hypothesis-dist-initialization})-(\ref{eq:gaussian-hypothesis-incoherence2-1})
respectively. 

\begin{lemma}\label{lemma:gaussian-ncvx-initialization}Suppose the
sample size satisfies $m\geq CK\log^{5}m$ for some large enough constant
$C>0$. Then with probability exceeding $1-O(m^{-10})$, one has
\begin{align*}
\mathsf{dist}\left(\bm{z}^{0},\bm{z}^{\star}\right) & \lesssim\sqrt{\frac{K\log^2 m}{m}}+\sigma \sqrt{\frac{K\log m}{m}},\\
\mathsf{dist}\left(\bm{z}^{0,\left(l\right)},\bm{z}^{\star}\right) & \lesssim\sqrt{\frac{K\log^2 m}{m}}+\sigma \sqrt{\frac{K\log m}{m}},\qquad1\leq l\leq m,
\end{align*}
and $\left|\left|\alpha_{0}\right|-1\right|\leq1/4$.

\end{lemma}

\begin{proof}With the aid of Lemma \ref{lemma:gaussian-M}, the proof
is essentially identical to \citet[Eqn (94)]{ma2017implicit} and thus
omitted here for brevity. 
\end{proof}

\begin{lemma}\label{lemma:gaussian-loo-initialization}Suppose $m\geq CK\log^{5}m$
for some sufficiently large constant $C_{12}>0$. Then with probability
at least $1-O(m^{-1})$, one has
\[
\max_{1\leq l\leq m}\mathsf{dist}\left(\bm{z}^{0,\left(l\right)},\widetilde{\bm{z}}^{0}\right)\leq\frac{C_{12}\sqrt{K\log^{3}m}}{m}.
\]

	\end{lemma} \begin{proof} The proof of this lemma is deferred to Appendix \ref{subsec:Proof-of-Lemma-gaussian-loo-initialization}. \end{proof}

\begin{lemma}Suppose that $m\geq CK\log^{6}m$ for some large enough
constant $C>0$. Then with probability at least $1-O(m^{-1})$, we
have
\begin{align*}
\max_{1\leq j\leq m}\left|\bm{a}_{j}^{\mathsf{H}}\left(\widetilde{\bm{x}}^{0}-\bm{x}^{\star}\right)\right| & \leq C_{13}\left(\sqrt{\frac{K\log^3 m}{m}}+\sigma\sqrt{\frac{K\log^2m}{m}}\right),\\
\max_{1\leq j\leq m}\left|\bm{b}_{j}^{\mathsf{H}}\left(\widetilde{\bm{h}}^{0}-\bm{h}^{\star}\right)\right| & \leq C_{13}\left(\sqrt{\frac{K\log^3 m}{m}}+\sigma\sqrt{\frac{K\log^2m}{m}}\right).
\end{align*}

\end{lemma}\begin{proof}The first inequality can be established
by the same derivation as for \citet[Lemma 21]{ma2017implicit}, which is omitted
here for brevity. The second inequality follows immediately since $\{\bm{a}_{j}\}_{j=1}^{m}$
and $\{\bm{b}_{j}\}_{j=1}^{m}$ have the same distributions. \end{proof}

\subsection{Proof of Lemma \ref{lemma:gaussian-geometry}\label{subsec:Proof-of-Lemma-gaussian-geometry}}

To begin with, we decompose $\nabla^{2}f(\bm{z})$ as follows
\[
	\nabla^{2}f\left(\bm{z}\right)=\lambda\bm{I}_{4K}+\mathbb{E}\left[\nabla^{2}f_{\mathsf{reg}\text{-}\mathsf{free}}\left(\bm{z}^{\star}\right)\right]+\left(\nabla^{2}f\left(\bm{z}\right)-\mathbb{E}\left[\nabla^{2}f_{\mathsf{reg}\text{-}\mathsf{free}}\left(\bm{z}^{\star}\right)\right]-\lambda\bm{I}_{4K}\right),
\]
where 
\[
f_{\mathsf{reg}\text{-}\mathsf{free}}\left(\bm{z}\right)=\sum_{j=1}^{m}\left|\boldsymbol{b}_{j}^{\mathsf{H}}\bm{h}\bm{x}^{\mathsf{H}}\boldsymbol{a}_{j}-y_{j}\right|^{2}.
\]
The following two lemmas allow us to control the two terms on the right-hand side of the above identity separately. 
\begin{lemma}\label{lemma:hessian-population}Instate the notation
and conditions of Lemma \ref{lemma:gaussian-geometry}. One has
\[
\left\Vert \mathbb{E}\left[\nabla^{2}f_{\mathsf{reg}\text{-}\mathsf{free}}\left(\bm{z}^{\star}\right)\right]\right\Vert =2m\qquad\text{and}\qquad\bm{u}^{\mathsf{H}}\left[\bm{D}\mathbb{E}\left[\nabla^{2}f_{\mathsf{reg}\text{-}\mathsf{free}}\left(\bm{z}^{\star}\right)\right]+\mathbb{E}\left[\nabla^{2}f_{\mathsf{reg}\text{-}\mathsf{free}}\left(\bm{z}^{\star}\right)\right]\bm{D}\right]\bm{u}\geq m\left\Vert \bm{u}\right\Vert _{2}^{2}.
\]
\end{lemma} 

\begin{proof}Note that the expression of $\mathbb{E}[\nabla^{2}f_{\mathsf{reg}\text{-}\mathsf{free}}(\bm{z}^{\star})]/m$
is the same as that of $\nabla^{2}F(\bm{z}^{\star})$ in \citet[Lemma 26]{ma2017implicit}.
Hence the proof there can be straightforwardly adapted to our case and thus omitted
here.
\end{proof}

\begin{lemma}\label{lemma:hessian-deviation}Suppose the sample size
obeys $m\geq CK\log^{3}m$ for some large enough constant $C>0$.
Then with probability at least $1-O(m^{-10})$, one has 
\[
\left\Vert \nabla^{2}f\left(\bm{z}\right)-\mathbb{E}\left[\nabla^{2}f\left(\bm{z}^{\star}\right)\right]\right\Vert \leq\frac{1}{4}m 
\]
holds uniformly for all $\bm{z}$ satisfying \eqref{subeq:gaussian-assumptions-geometry}. 
\end{lemma}
\begin{proof}See Appendix \ref{subsec:Proof-of-Lemma-hessian-deviation}.
\end{proof}
With these two lemmas in hand, we have, for any $(\bm{h},\bm{x})$
obeying (\ref{subeq:gaussian-assumptions-geometry}), that
\begin{align*}
\left\Vert \nabla^{2}f\left(\bm{z}\right)\right\Vert  & \leq\left\Vert \mathbb{E}\left[\nabla^{2}f\left(\bm{z}^{\star}\right)\right]\right\Vert +\left\Vert \nabla^{2}f\left(\bm{z}\right)-\mathbb{E}\left[\nabla^{2}f\left(\bm{z}^{\star}\right)\right]\right\Vert \\
 & \leq\left\Vert \mathbb{E}\left[\nabla^{2}f_{\mathsf{reg}\text{-}\mathsf{free}}\left(\bm{z}^{\star}\right)\right]\right\Vert +\lambda+\left\Vert \nabla^{2}f\left(\bm{z}\right)-\mathbb{E}\left[\nabla^{2}f\left(\bm{z}^{\star}\right)\right]\right\Vert \\
 & \leq2m+\lambda+\frac{1}{4}m\\
 & \leq3m.
\end{align*}
Furthermore, it is readily seen that
\begin{align*}
 & \boldsymbol{u}^{\mathsf{H}}\left[\bm{D}\nabla^{2}f\left(\boldsymbol{z}\right)+\nabla^{2}f\left(\boldsymbol{z}\right)\bm{D}\right]\boldsymbol{u}\\
&\quad= \boldsymbol{u}^{\mathsf{H}}\left\{ \bm{D}\mathbb{E}\left[\nabla^{2}f_{\mathsf{reg}\text{-}\mathsf{free}}\left(\bm{z}^{\star}\right)\right]+\mathbb{E}\left[\nabla^{2}f_{\mathsf{reg}\text{-}\mathsf{free}}\left(\bm{z}^{\star}\right)\right]\bm{D}\right\} \boldsymbol{u}+2\lambda\boldsymbol{u}^{\mathsf{H}}\bm{D}\boldsymbol{u}\\
 & \quad\qquad+\boldsymbol{u}^{\mathsf{H}}\bm{D}\left\{ \nabla^{2}f\left(\boldsymbol{z}\right)-\mathbb{E}\left[\nabla^{2}f\left(\bm{z}^{\star}\right)\right]\right\} \boldsymbol{u}+\boldsymbol{u}^{\mathsf{H}}\left\{ \nabla^{2}f\left(\boldsymbol{z}\right)-\mathbb{E}\left[\nabla^{2}f\left(\bm{z}^{\star}\right)\right]\right\} \bm{D}\boldsymbol{u}\\
&\quad\overset{(\text{i})}{\geq}  m\left\Vert \boldsymbol{u}\right\Vert _{2}^{2}+2\lambda\left(1-\delta\right)\left\Vert \boldsymbol{u}\right\Vert _{2}^{2}-2\left\Vert \bm{D}\right\Vert \left\Vert \nabla^{2}f\left(\boldsymbol{z}\right)-\mathbb{E}\left[\nabla^{2}f\left(\bm{z}^{\star}\right)\right]\right\Vert \left\Vert \boldsymbol{u}\right\Vert _{2}^{2}\\
&\quad\overset{(\text{ii})}{\geq} m\left\Vert \boldsymbol{u}\right\Vert _{2}^{2}+2\lambda\left(1-\delta\right)\left\Vert \boldsymbol{u}\right\Vert _{2}^{2}-2\left(1+\delta\right)\cdot\frac{1}{4}m\left\Vert \boldsymbol{u}\right\Vert _{2}^{2}\\
&\quad\overset{(\text{iii})}{\geq}  \frac{1}{4}m\left\Vert \boldsymbol{u}\right\Vert _{2}^{2},
\end{align*}
where (i) is due to Lemma \ref{lemma:hessian-population} and the
fact that $\boldsymbol{u}^{\mathsf{H}}\bm{D}\boldsymbol{u}\geq(1-\delta)\Vert\bm{u}\Vert_{2}^{2}$;
(ii) relies on the bound $\Vert\bm{D}\Vert\leq1+\delta$ and Lemma \ref{lemma:hessian-deviation};
and (iii) holds as long as $\delta\leq 1/4$. We have thus finished the proof
for the desired  smoothness and restricted strong convexity conditions. 

\subsubsection{Proof of Lemma \ref{lemma:hessian-deviation}\label{subsec:Proof-of-Lemma-hessian-deviation}}

The idea of the proof is similar to that of \citet[Lemma 27]{ma2017implicit}
except that the design of $\{\bm{b}_{j}\}_{j=1}^{m}$ is different.
By triangle inequality, we can upper bound the quantity of interest
as

\begin{equation}
\left\Vert \nabla^{2}f\left(\bm{z}\right)-\mathbb{E}\left[\nabla^{2}f\left(\bm{z}^{\star}\right)\right]\right\Vert \leq2\alpha_{1}+2\alpha_{2}+4\alpha_{3}+4\alpha_{4},\label{eq:gaussian-alphaall}
\end{equation}
where 
\begin{align*}
\alpha_{1} & =\left\Vert \sum_{j=1}^{m}\left|\bm{a}_{j}^{\mathsf{H}}\bm{x}\right|^{2}\bm{b}_{j}\bm{b}_{j}^{\mathsf{H}}-m\bm{I}_{K}\right\Vert ,\qquad\alpha_{2}=\left\Vert \sum_{j=1}^{m}\left|\bm{b}_{j}^{\mathsf{H}}\bm{h}\right|^{2}\bm{a}_{j}\bm{a}_{j}^{\mathsf{H}}-m\bm{I}_{K}\right\Vert ,\\
\alpha_{3} & =\left\Vert \sum_{j=1}^{m}\left(\bm{b}_{j}^{\mathsf{H}}\bm{h}\bm{x}^{\mathsf{H}}\bm{a}_{j}-y_{j}\right)\bm{b}_{j}\bm{a}_{j}^{\mathsf{H}}\right\Vert ,\qquad\alpha_{4}=\left\Vert \sum_{j=1}^{m}\bm{b}_{j}\bm{b}_{j}^{\mathsf{H}}\bm{h}\left(\bm{a}_{j}\bm{a}_{j}^{\mathsf{H}}\bm{x}\right)^{\mathsf{H}}-m\bm{h}^{\star}\bm{x}^{\star\mathsf{H}}\right\Vert .
\end{align*}
We will control these four terms separately as follows.

\paragraph{Controlling $\alpha_1$.}

 In terms of $\alpha_{1}$, by the triangle inequality, one has 
\[
\alpha_{1}\leq\underbrace{\left\Vert \sum_{j=1}^{m}\left(\left|\bm{a}_{j}^{\mathsf{H}}\bm{x}\right|^{2}-\left|\bm{a}_{j}^{\mathsf{H}}\bm{x}^{\star}\right|^{2}\right)\bm{b}_{j}\bm{b}_{j}^{\mathsf{H}}\right\Vert }_{\eqqcolon\gamma_{1}}+\underbrace{\left\Vert \sum_{j=1}^{m}\left|\bm{a}_{j}^{\mathsf{H}}\bm{x}^{\star}\right|^{2}\bm{b}_{j}\bm{b}_{j}^{\mathsf{H}}-m\bm{I}_{K}\right\Vert }_{\eqqcolon\gamma_{2}}.
\]

\begin{enumerate}
\item Regarding $\gamma_{1}$, we have 
\begin{align}
\gamma_{1} & \leq\left\Vert \sum_{j=1}^{m}\left|\left|\bm{a}_{j}^{\mathsf{H}}\bm{x}\right|^{2}-\left|\bm{a}_{j}^{\mathsf{H}}\bm{x}^{\star}\right|^{2}\right|\bm{b}_{j}\bm{b}_{j}^{\mathsf{H}}\right\Vert \nonumber \\
 & \leq\max_{1\leq j\leq m}\left|\left|\bm{a}_{j}^{\mathsf{H}}\bm{x}\right|^{2}-\left|\bm{a}_{j}^{\mathsf{H}}\bm{x}^{\star}\right|^{2}\right|\left\Vert \sum_{j=1}^{m}\bm{b}_{j}\bm{b}_{j}^{\mathsf{H}}\right\Vert \nonumber \\
 & \leq\max_{1\leq j\leq m}\left(\left|\bm{a}_{j}^{\mathsf{H}}\left(\bm{x}-\bm{x}^{\star}\right)\right|^{2}+2\left|\bm{a}_{j}^{\mathsf{H}}\left(\bm{x}-\bm{x}^{\star}\right)\right|\left|\bm{a}_{j}^{\mathsf{H}}\bm{x}^{\star}\right|\right)\cdot\left\Vert \sum_{j=1}^{m}\bm{b}_{j}\bm{b}_{j}^{\mathsf{H}}\right\Vert .\label{eq:gaussian-gamma1-decompose}
\end{align}
It is first seen that
\begin{align*}
 & \max_{1\leq j\leq m}\left(\left|\bm{a}_{j}^{\mathsf{H}}\left(\bm{x}-\bm{x}^{\star}\right)\right|^{2}+2\left|\bm{a}_{j}^{\mathsf{H}}\left(\bm{x}-\bm{x}^{\star}\right)\right|\left|\bm{a}_{j}^{\mathsf{H}}\bm{x}^{\star}\right|\right)\\
&\quad\leq  \left(C_{13}\tfrac{1}{\log^{3/2}m}\right)^{2}+2\cdot C_{13}\tfrac{1}{\log^{3/2}m}\cdot20\sqrt{\log m}\\
&\quad \lesssim  C_{13}\frac{1}{\log m}.
\end{align*}
When it comes to $\Vert\sum_{j=1}^{m}\bm{b}_{j}\bm{b}_{j}^{\mathsf{H}}\Vert$,
one has 
\begin{equation}
\left\Vert \sum_{j=1}^{m}\bm{b}_{j}\bm{b}_{j}^{\mathsf{H}}\right\Vert \leq\left\Vert \sum_{j=1}^{m}\left(\bm{b}_{j}\bm{b}_{j}^{\mathsf{H}}-\bm{I}_{K}\right)\right\Vert +m.\label{eq:gaussian-gamma1}
\end{equation}
We intend to invoke the matrix Bernstein inequality \citet[Proposition 2]{koltchinskii2011nuclear}
to control $\Vert\sum_{j=1}^{m}(\bm{b}_{j}\bm{b}_{j}^{\mathsf{H}}-\bm{I}_{K})\Vert$.
Observe that 
\[
B_{\bm{Z}}:=\Big\|\big\|\bm{b}_{j}\bm{b}_{j}^{\mathsf{H}}-\bm{I}_{K}\big\|\Big\|_{\psi_{1}}=\Big\|\max\left\{ \left|\left\Vert \bm{b}_{j}\right\Vert _{2}^{2}-1\right|,1\right\} \Big\|_{\psi_{1}}\leq\Big\|\|\boldsymbol{b}_{j}\|_{2}\Big\|_{\psi_{2}}^{2}+1\lesssim K..
\]
Here, we have used $\big\|\Vert\bm{b}_{j}\Vert_{2}\big\|_{\psi_{2}}\lesssim\sqrt{K}$
(cf.~\citet[Theorem 3.1.1]{vershynin2018high}). In addition, simple
calculation yields 
\begin{align*}
\left\Vert \sum_{j=1}^{m}\mathbb{\mathbb{E}}\big[\left(\bm{b}_{j}\bm{b}_{j}^{\mathsf{H}}-\bm{I}_{K}\right)\left(\bm{b}_{j}\bm{b}_{j}^{\mathsf{H}}-\bm{I}_{K}\right)^{\mathsf{H}}\big]\right\Vert  & =\left\Vert \sum_{j=1}^{m}\mathbb{\mathbb{E}}\Big[\bm{b}_{j}\bm{b}_{j}^{\mathsf{H}}\bm{b}_{j}\bm{b}_{j}^{\mathsf{H}}-\bm{I}_{K}\Big]\right\Vert =\left(K+1\right)m,
\end{align*}
and
\begin{align*}
\left\Vert \sum_{j=1}^{m}\mathbb{\mathbb{E}}\big[\left(\bm{b}_{j}\bm{b}_{j}^{\mathsf{H}}-\bm{I}_{K}\right)^{\mathsf{H}}\left(\bm{b}_{j}\bm{b}_{j}^{\mathsf{H}}-\bm{I}_{K}\right)\big]\right\Vert  & =\left\Vert \sum_{j=1}^{m}\mathbb{\mathbb{E}}\big[\left(\bm{b}_{j}\bm{b}_{j}^{\mathsf{H}}-\bm{I}_{K}\right)\left(\bm{b}_{j}\bm{b}_{j}^{\mathsf{H}}-\bm{I}_{K}\right)^{\mathsf{H}}\big]\right\Vert =\left(K+1\right)m.
\end{align*}
As a result, by setting
\begin{align*}
\sigma_{\bm{Z}} & :=\max\left\{ \left\Vert \sum_{j=1}^{m}\mathbb{\mathbb{E}}\big[\left(\bm{b}_{j}\bm{b}_{j}^{\mathsf{H}}-\bm{I}_{K}\right)\left(\bm{b}_{j}\bm{b}_{j}^{\mathsf{H}}-\bm{I}_{K}\right)^{\mathsf{H}}\big]\right\Vert ^{1/2},\left\Vert \sum_{j=1}^{m}\mathbb{\mathbb{E}}\big[\left(\bm{b}_{j}\bm{b}_{j}^{\mathsf{H}}-\bm{I}_{K}\right)^{\mathsf{H}}\left(\bm{b}_{j}\bm{b}_{j}^{\mathsf{H}}-\bm{I}_{K}\right)\big]\right\Vert ^{1/2}\right\} \\
& =\sqrt{\left(K+1\right)m},
\end{align*}
we are ready to apply the matrix Bernstein inequality \citet[Proposition 2]{koltchinskii2011nuclear}
to derive
\begin{equation}
\left\Vert \sum_{j=1}^{m}\left(\bm{b}_{j}\bm{b}_{j}^{\mathsf{H}}-\bm{I}_{K}\right)\right\Vert \lesssim\sigma_{\bm{Z}}\sqrt{\log m}+B_{\bm{Z}}\log\left(\frac{B_{\bm{Z}}\sqrt{m}}{\sigma_{\bm{Z}}}\right)\log m\lesssim\sqrt{mK\log m}\label{eq:gaussian-lemmanoise1}
\end{equation}
with high probability. 
Substitution of (\ref{eq:gaussian-lemmanoise1})  into (\ref{eq:gaussian-gamma1})
yields
\begin{equation}
\left\Vert \sum_{j=1}^{m}\bm{b}_{j}\bm{b}_{j}^{\mathsf{H}}\right\Vert \leq2m,\label{eq:gaussian-sumbbH}
\end{equation}
as long as $m\gg K\log m$. Plugging this inequality into (\ref{eq:gaussian-gamma1-decompose})
gives 
\begin{equation}
\gamma_{1}\lesssim C_{13}\frac{m}{\log m}.\label{eq:gaussian-alpha1-1}
\end{equation}
\item The second term $\gamma_{2}$ can be further decomposed as follows
\begin{equation}
\gamma_{2}\leq\left\Vert \sum_{j=1}^{m}\left(\left|\bm{a}_{j}^{\mathsf{H}}\bm{x}^{\star}\right|^{2}-1\right)\bm{b}_{j}\bm{b}_{j}^{\mathsf{H}}\right\Vert +\left\Vert \sum_{j=1}^{m}\bm{b}_{j}\bm{b}_{j}^{\mathsf{H}}-m\bm{I}_{K}\right\Vert .\label{eq:gaussian-gamma2-decompose}
\end{equation}
The second term on the right-hand side of (\ref{eq:gaussian-gamma2-decompose})
has already been considered in (\ref{eq:gaussian-lemmanoise1}). We
are therefore left to control the first term. Let 
\[
\bm{W}_{j}\coloneqq\left(\left|\bm{a}_{j}^{\mathsf{H}}\bm{x}^{\star}\right|^{2}\ind_{\left\{ \left|\bm{a}_{j}^{\mathsf{H}}\bm{x}^{\star}\right|\leq20\sqrt{\log m}\right\} }-\mathbb{E}\left[\left|\bm{a}_{j}^{\mathsf{H}}\bm{x}^{\star}\right|^{2}\ind_{\left\{ \left|\bm{a}_{j}^{\mathsf{H}}\bm{x}^{\star}\right|\leq20\sqrt{\log m}\right\} }\right]\right)\bm{b}_{j}\bm{b}_{j}^{\mathsf{H}}.
\]
We make the observation that
\begin{align}
 & \left\Vert \sum_{j=1}^{m}\left(\left|\bm{a}_{j}^{\mathsf{H}}\bm{x}^{\star}\right|^{2}-1\right)\bm{b}_{j}\bm{b}_{j}^{\mathsf{H}}\right\Vert \nonumber \\
 & \quad\leq\left\Vert \sum_{j=1}^{m}\left(\left|\bm{a}_{j}^{\mathsf{H}}\bm{x}^{\star}\right|^{2}\ind_{\left\{ \left|\bm{a}_{j}^{\mathsf{H}}\bm{x}^{\star}\right|\leq20\sqrt{\log m}\right\} }-1\right)\bm{b}_{j}\bm{b}_{j}^{\mathsf{H}}\right\Vert +\left\Vert \sum_{j=1}^{m}\left|\bm{a}_{j}^{\mathsf{H}}\bm{x}^{\star}\right|^{2}\ind_{\left\{ \left|\bm{a}_{j}^{\mathsf{H}}\bm{x}^{\star}\right|>20\sqrt{\log m}\right\} }\bm{b}_{j}\bm{b}_{j}^{\mathsf{H}}\right\Vert . \label{eq:gaussian-gamma2-decompose2}
\end{align}
Regarding the second term of \eqref{eq:gaussian-gamma2-decompose2}, due to (\ref{eq:useful2}) we have 
\[
\left\Vert \sum_{j=1}^{m}\left|\bm{a}_{j}^{\mathsf{H}}\bm{x}^{\star}\right|^{2}\ind_{\left\{ \left|\bm{a}_{j}^{\mathsf{H}}\bm{x}^{\star}\right|>20\sqrt{\log m}\right\} }\bm{b}_{j}\bm{b}_{j}^{\mathsf{H}}\right\Vert =0
\]
holds with probability over $1-O(m^{-100})$. For the first term of \eqref{eq:gaussian-gamma2-decompose2},
one can derive
\begin{align*}
 & \left\Vert \sum_{j=1}^{m}\left(\left|\bm{a}_{j}^{\mathsf{H}}\bm{x}^{\star}\right|^{2}\ind_{\left\{ \left|\bm{a}_{j}^{\mathsf{H}}\bm{x}^{\star}\right|\leq20\sqrt{\log m}\right\} }-1\right)\bm{b}_{j}\bm{b}_{j}^{\mathsf{H}}\right\Vert \\
&\quad\leq  \left\Vert \sum_{j=1}^{m}\bm{W}_{j}\right\Vert +\left\Vert \sum_{j=1}^{m}\left(\mathbb{E}\left[\left|\bm{a}_{j}^{\mathsf{H}}\bm{x}^{\star}\right|^{2}\ind_{\left\{ \left|\bm{a}_{j}^{\mathsf{H}}\bm{x}^{\star}\right|\leq20\sqrt{\log m}\right\} }\right]-1\right)\bm{b}_{j}\bm{b}_{j}^{\mathsf{H}}\right\Vert \\
&\quad\leq  \left\Vert \sum_{j=1}^{m}\bm{W}_{j}\right\Vert +\max_{1\leq j\leq m}\left|\mathbb{E}\left[\left|\bm{a}_{j}^{\mathsf{H}}\bm{x}^{\star}\right|^{2}\ind_{\left\{ \left|\bm{a}_{j}^{\mathsf{H}}\bm{x}^{\star}\right|\leq20\sqrt{\log m}\right\} }\right]-1\right|\cdot\left\Vert \sum_{j=1}^{m}\bm{b}_{j}\bm{b}_{j}^{\mathsf{H}}\right\Vert \\
&\quad = \left\Vert \sum_{j=1}^{m}\bm{W}_{j}\right\Vert +\max_{1\leq j\leq m}\mathbb{E}\left[\left|\bm{a}_{j}^{\mathsf{H}}\bm{x}^{\star}\right|^{2}\ind_{\left\{ \left|\bm{a}_{j}^{\mathsf{H}}\bm{x}^{\star}\right|>20\sqrt{\log m}\right\} }\right]\cdot\left\Vert \sum_{j=1}^{m}\bm{b}_{j}\bm{b}_{j}^{\mathsf{H}}\right\Vert,
\end{align*}
where the first inequality holds due to the triangle inequality. 
Invoking the Cauchy-Schwartz inequality yields
\begin{align*}
\mathbb{E}\left[\left|\bm{a}_{j}^{\mathsf{H}}\bm{x}^{\star}\right|^{2}\ind_{\left\{ \left|\bm{a}_{j}^{\mathsf{H}}\bm{x}^{\star}\right|>20\sqrt{\log m}\right\} }\right] & \leq\sqrt{\mathbb{E}\left[\left|\bm{a}_{j}^{\mathsf{H}}\bm{x}^{\star}\right|^{4}\right]\cdot\mathbb{P}\left(\left|\bm{a}_{j}^{\mathsf{H}}\bm{x}^{\star}\right|>20\sqrt{\log m}\right)}\\
 & \leq O\left(m^{-100}\right),
\end{align*}
which taken collectively with (\ref{eq:gaussian-sumbbH}) gives 
\begin{equation}
\max_{1\leq j\leq m}\mathbb{E}\left[\left|\bm{a}_{j}^{\mathsf{H}}\bm{x}^{\star}\right|^{2}\ind_{\left\{ \left|\bm{a}_{j}^{\mathsf{H}}\bm{x}^{\star}\right|>20\sqrt{\log m}\right\} }\right]\cdot\left\Vert \sum_{j=1}^{m}\bm{b}_{j}\bm{b}_{j}^{\mathsf{H}}\right\Vert \leq O\left(m^{-100}\right)\cdot2m=O\left(m^{-98}\right).\label{eq:gaussian-lemma-gamma2}
\end{equation}
We can then invoke the matrix Bernstein inequality \citet[Proposition 2]{koltchinskii2011nuclear}
to control $\Vert\sum_{j=1}^{m}\bm{W}_{j}\Vert$. To this end, note that 
\[
B_{\bm{Z}}:=\Big\|\big\|\bm{W}_{j}\big\|\Big\|_{\psi_{1}}\leq\left(20\sqrt{\log m}\right)^{2}\cdot\Big\|\|\boldsymbol{b}_{j}\|_{2}\Big\|_{\psi_{2}}^{2}\lesssim K\log m,
\]
where we have used $\big\|\Vert\bm{b}_{j}\Vert_{2}\big\|_{\psi_{2}}\lesssim\sqrt{K}$
(cf.~\citet[Theorem 3.1.1]{vershynin2018high}). In addition, simple
calculation yields 
\begin{align*}
\left\Vert \sum_{j=1}^{m}\mathbb{\mathbb{E}}\big[\bm{W}_{j}\bm{W}_{j}^{\mathsf{H}}\big]\right\Vert  & =\left\Vert \sum_{j=1}^{m}\text{Var}\left(\left|\bm{a}_{j}^{\mathsf{H}}\bm{x}^{\star}\right|^{2}\ind_{\left\{ \left|\bm{a}_{j}^{\mathsf{H}}\bm{x}^{\star}\right|\leq20\sqrt{\log m}\right\} }\right)\mathbb{\mathbb{E}}\Big[\bm{b}_{j}\bm{b}_{j}^{\mathsf{H}}\bm{b}_{j}\bm{b}_{j}^{\mathsf{H}}\Big]\right\Vert \leq3\left(K+2\right)m,
\end{align*}
and
\begin{align*}
\left\Vert \sum_{j=1}^{m}\mathbb{\mathbb{E}}\big[\bm{W}_{j}^{\mathsf{H}}\bm{W}_{j}\big]\right\Vert  & =\left\Vert \sum_{j=1}^{m}\mathbb{\mathbb{E}}\big[\bm{W}_{j}\bm{W}_{j}^{\mathsf{H}}\big]\right\Vert \leq3\left(K+2\right)m.
\end{align*}
As a result, by setting
\[
\sigma_{\bm{Z}}:=\max\left\{ \left\Vert \sum_{j=1}^{m}\mathbb{\mathbb{E}}\big[\bm{W}_{j}\bm{W}_{j}^{\mathsf{H}}\big]\right\Vert ^{1/2},\left\Vert \sum_{j=1}^{m}\mathbb{\mathbb{E}}\big[\bm{W}_{j}^{\mathsf{H}}\bm{W}_{j}\big]\right\Vert ^{1/2}\right\} \leq\sqrt{3\left(K+2\right)m},
\]
we can apply the matrix Bernstein inequality \citet[Proposition 2]{koltchinskii2011nuclear}
to derive
\begin{equation}
\left\Vert \sum_{j=1}^{m}\bm{W}_{j}\right\Vert \lesssim\sigma_{\bm{Z}}\sqrt{\log m}+B_{\bm{Z}}\log\left(\frac{B_{\bm{Z}}\sqrt{m}}{\sigma_{\bm{Z}}}\right)\log m\lesssim\sqrt{mK\log m}\label{eq:gaussian-lemmanoise2}
\end{equation}
with high probability, where the last inequality holds as long as $m\gg K\log^{5}m$. Plugging
(\ref{eq:gaussian-lemma-gamma2}) and (\ref{eq:gaussian-lemmanoise2})
into (\ref{eq:gaussian-gamma2-decompose2}) gives
\begin{equation}
\left\Vert \sum_{j=1}^{m}\left(\left|\bm{a}_{j}^{\mathsf{H}}\bm{x}^{\star}\right|^{2}-1\right)\bm{b}_{j}\bm{b}_{j}^{\mathsf{H}}\right\Vert \lesssim\sqrt{mK\log m}.\label{eq:gaussian-gamma2-2}
\end{equation}
Substitution of (\ref{eq:gaussian-lemmanoise1}) and (\ref{eq:gaussian-gamma2-2})
into (\ref{eq:gaussian-gamma2-decompose}) yields
\begin{equation}
\gamma_{2}\lesssim\sqrt{mK\log m}.\label{eq:gaussian-alpha1-2}
\end{equation}
\end{enumerate}
As a consequence, taking (\ref{eq:gaussian-alpha1-1}) and (\ref{eq:gaussian-alpha1-2})
collectively yields
\begin{equation}
\alpha_{1}\lesssim\frac{m}{\log m}+\sqrt{mK\log m}.\label{eq:gaussian-alpha1}
\end{equation}

\paragraph{Controlling $\alpha_2$.}

Regarding $\alpha_{2}$, since the roles played by $\{\bm{a}_{j}\}_{j=1}^{m}$
and $\{\bm{b}_{j}\}_{j=1}^{m}$ are symmetric in this problem, it
is easily seen that $\alpha_{2}$ admits the same bound as that of
$\alpha_{1}$.

\paragraph{Controlling $\alpha_3$.}
When it comes to the third term $\alpha_3$, one makes the observation that
\begin{equation}
\alpha_{3}\leq\left\Vert \sum_{j=1}^{m}\bm{b}_{j}\bm{b}_{j}^{\mathsf{H}}\left(\bm{h}\bm{x}^{\mathsf{H}}-\bm{h}^{\star}\bm{x}^{\star\mathsf{H}}\right)\bm{a}_{j}\bm{a}_{j}^{\mathsf{H}}\right\Vert +\left\Vert \sum_{j=1}^{m}\xi_{j}\bm{b}_{j}\bm{a}_{j}^{\mathsf{H}}\right\Vert .\label{eq:gaussian-alpha3-decompose}
\end{equation}
The second term on the right-hand side of this relation has already been bounded by Lemma \ref{lemma:gaussian-noise}.
Regarding the first term on the right-hand side of \eqref{eq:gaussian-alpha3-decompose}, one can further decompose
\begin{align}
 & \left\Vert \sum_{j=1}^{m}\bm{b}_{j}\bm{b}_{j}^{\mathsf{H}}\left(\bm{h}\bm{x}^{\mathsf{H}}-\bm{h}^{\star}\bm{x}^{\star\mathsf{H}}\right)\bm{a}_{j}\bm{a}_{j}^{\mathsf{H}}\right\Vert \nonumber \\
 &\quad\leq\left\Vert m\left(\bm{h}\bm{x}^{\mathsf{H}}-\bm{h}^{\star}\bm{x}^{\star\mathsf{H}}\right)\right\Vert +\left\Vert \sum_{j=1}^{m}\bm{b}_{j}\bm{b}_{j}^{\mathsf{H}}\left(\bm{h}\bm{x}^{\mathsf{H}}-\bm{h}^{\star}\bm{x}^{\star\mathsf{H}}\right)\bm{a}_{j}\bm{a}_{j}^{\mathsf{H}}-m\left(\bm{h}\bm{x}^{\mathsf{H}}-\bm{h}^{\star}\bm{x}^{\star\mathsf{H}}\right)\right\Vert \nonumber \\
 &\quad\leq\left\Vert m\left(\bm{h}\bm{x}^{\mathsf{H}}-\bm{h}^{\star}\bm{x}^{\star\mathsf{H}}\right)\right\Vert +\left\Vert \sum_{j=1}^{m}\bm{b}_{j}\bm{b}_{j}^{\mathsf{H}}\bm{h}^{\star}\left(\bm{x}-\bm{x}^{\star}\right)^{\mathsf{H}}\bm{a}_{j}\bm{a}_{j}^{\mathsf{H}}-m\bm{h}^{\star}\left(\bm{x}-\bm{x}^{\star}\right)^{\mathsf{H}}\right\Vert \nonumber \\
 & \quad\qquad+\left\Vert \sum_{j=1}^{m}\bm{b}_{j}\bm{b}_{j}^{\mathsf{H}}\left(\bm{h}-\bm{h}^{\star}\right)\bm{x}^{\star\mathsf{H}}\bm{a}_{j}\bm{a}_{j}^{\mathsf{H}}-m\left(\bm{h}-\bm{h}^{\star}\right)\bm{x}^{\star\mathsf{H}}\right\Vert \nonumber \\
 & \quad\qquad+\left\Vert \sum_{j=1}^{m}\bm{b}_{j}\bm{b}_{j}^{\mathsf{H}}\left(\bm{h}-\bm{h}^{\star}\right)\left(\bm{x}-\bm{x}^{\star}\right)^{\mathsf{H}}\bm{a}_{j}\bm{a}_{j}^{\mathsf{H}}-m\left(\bm{h}-\bm{h}^{\star}\right)\left(\bm{x}-\bm{x}^{\star}\right)^{\mathsf{H}}\right\Vert .\label{eq:gaussian-alpha3-decompose2}
\end{align}
To bound the last three terms of \eqref{eq:gaussian-alpha3-decompose2}, we resort to the following two lemmas, whose
proofs can be found in Appendix \ref{subsec:Proof-of-Lemma-geometry-alpha3-1}
and Appendix  \ref{subsec:Proof-of-Lemma-geometry-alpha3-2}. 
\begin{lemma}\label{lemma:geometry-alpha3-1}With
probability at least $1-O(m^{-100}+me^{-CK})$ for some constant
$C>0$, one has
\begin{equation}
\left\Vert \sum_{j=1}^{m}\bm{b}_{j}\bm{b}_{j}^{\mathsf{H}}\bm{h}^{\star}\left(\bm{x}-\bm{x}^{\star}\right)^{\mathsf{H}}\bm{a}_{j}\bm{a}_{j}^{\mathsf{H}}-m\bm{h}^{\star}\left(\bm{x}-\bm{x}^{\star}\right)^{\mathsf{H}}\right\Vert \leq2\delta m \label{eq:gaussian-geometry-alpha3-1}
\end{equation}
holds uniformly for any $\bm{x}$ satisfying (\ref{subeq:gaussian-assumptions-geometry}).
\end{lemma}\begin{lemma}\label{lemma:geometry-alpha3-2}With probability
at least $1-2\exp(-CK\log m)$ for some constant $C>0$, one has
\begin{equation}
\left\Vert \sum_{j=1}^{m}\bm{b}_{j}\bm{b}_{j}^{\mathsf{H}}\left(\bm{h}-\bm{h}^{\star}\right)\left(\bm{x}-\bm{x}^{\star}\right)^{\mathsf{H}}\bm{a}_{j}\bm{a}_{j}^{\mathsf{H}}-m\left(\bm{h}-\bm{h}^{\star}\right)\left(\bm{x}-\bm{x}^{\star}\right)^{\mathsf{H}}\right\Vert \leq \delta^{2}m+4C'\sqrt{mK}
	\label{eq:gaussian-geometry-alpha3-2}
\end{equation}
holds uniformly for any $(\bm{h},\bm{x})$ obeying (\ref{subeq:gaussian-assumptions-geometry})
for some sufficiently large constant $C'>0$. \end{lemma}By the
symmetry between $\{\bm{a}_{j}\}_{j=1}^{m}$ and $\{\bm{b}_{j}\}_{j=1}^{m}$
and Lemma \ref{lemma:geometry-alpha3-1}, one arrives at 
\begin{equation}
\sup_{\bm{x}\in\mathcal{S}}\left\Vert \sum_{j=1}^{m}\bm{b}_{j}\bm{b}_{j}^{\mathsf{H}}\left(\bm{h}-\bm{h}^{\star}\right)\bm{x}^{\star}{}^{\mathsf{H}}\bm{a}_{j}\bm{a}_{j}^{\mathsf{H}}-m\left(\bm{h}-\bm{h}^{\star}\right)\bm{x}^{\star}{}^{\mathsf{H}}\right\Vert \leq2\delta m \label{eq:gaussian-geometry-alpha3-3}
\end{equation}
 with probability over $1-2\exp(-CK\log m)$. Plugging (\ref{eq:gaussian-geometry-alpha3-1}),
(\ref{eq:gaussian-geometry-alpha3-2}) and (\ref{eq:gaussian-geometry-alpha3-3})
into (\ref{eq:gaussian-alpha3-decompose2}) yields
\begin{equation}
\left\Vert \sum_{j=1}^{m}\bm{b}_{j}\bm{b}_{j}^{\mathsf{H}}\left(\bm{h}\bm{x}^{\mathsf{H}}-\bm{h}^{\star}\bm{x}^{\star\mathsf{H}}\right)\bm{a}_{j}\bm{a}_{j}^{\mathsf{H}}\right\Vert \leq6\delta m.\label{eq:gaussian-alpha3-core}
\end{equation}
Substitution of (\ref{eq:gaussian-alpha3-core}) and (\ref{eq:gaussian-noise})
into (\ref{eq:gaussian-alpha3-decompose}) thus gives
\begin{equation}
\alpha_{3}\leq6\delta m+C\sigma\sqrt{mK\log m} \label{eq:gaussian-alpha3}
\end{equation}
for some large enough constant $C>0$.

\paragraph{Controlling $\alpha_4$.} With regards to the last term $\alpha_4$, we have
\[
\alpha_{4}\leq\underbrace{\left\Vert \sum_{j=1}^{m}\bm{b}_{j}\bm{b}_{j}^{\mathsf{H}}\left(\bm{h}\bm{x}^{\mathsf{H}}-\bm{h}^{\star}\bm{x}^{\star\mathsf{H}}\right)\bm{a}_{j}\bm{a}_{j}^{\mathsf{H}}\right\Vert }_{\eqqcolon\theta_{1}}+\underbrace{\left\Vert \sum_{j=1}^{m}\bm{b}_{j}\bm{b}_{j}^{\mathsf{H}}\bm{h}^{\star}\bm{x}^{\star\mathsf{H}}\bm{a}_{j}\bm{a}_{j}^{\mathsf{H}}-m\bm{h}^{\star}\bm{x}^{\star\mathsf{H}}\right\Vert }_{\eqqcolon\theta_{2}}.
\]
These two terms have already been bounded by (\ref{eq:gaussian-alpha3-core})
and (\ref{eq:gaussian-M-second-term}) respectively. Combining this
inequality with (\ref{eq:gaussian-alpha3-core}) gives
\begin{equation}
\alpha_{4}\leq6\delta m+4C_{t}\sqrt{mK}\log m.\label{eq:gaussian-alpha4}
\end{equation}

\paragraph{Putting all this together.}
Finally, by plugging (\ref{eq:gaussian-alpha1}), (\ref{eq:gaussian-alpha3})
and (\ref{eq:gaussian-alpha4}) into (\ref{eq:gaussian-alphaall}),
we arrive at 
\[
\left\Vert \nabla^{2}f\left(\bm{z}\right)-\nabla^{2}F\left(\bm{z}^{\star}\right)\right\Vert \lesssim\sigma\sqrt{mK\log m}+\frac{m}{\log m}\leq\frac{1}{4}m 
\]
holds with probability exceeding $1-O(m^{-10})$.

\subsubsection{Proof of Lemma \ref{lemma:geometry-alpha3-1}\label{subsec:Proof-of-Lemma-geometry-alpha3-1}}

Consider the event 
\begin{equation}
\mathcal{E}\coloneqq\left\{ \max_{1\leq j\leq m}\left|\bm{b}_{j}^{\mathsf{H}}\bm{h}^{\star}\right|\leq20\sqrt{\log m},\max_{1\leq j\leq m}\left\Vert \bm{a}_{j}\right\Vert _{2}\leq10\sqrt{K}\right\} .\label{eq:event-E}
\end{equation}
(\ref{eq:useful1}) and (\ref{eq:useful2}) suggest that event $\mathcal{E}$
holds with probability at least $1-O(m^{-100}+me^{-CK})$. The proof
thereafter will be developed on this event. 

Due to the assumptions (\ref{subeq:gaussian-assumptions-geometry}), we have --- for any given unit
vectors $\bm{u}$, $\bm{v}\in\mathbb{C}^{K}$ --- that
\[
\sum_{j=1}^{m}\bm{u}^{\mathsf{H}}\bm{b}_{j}\bm{b}_{j}^{\mathsf{H}}\bm{h}^{\star}\left(\bm{x}-\bm{x}^{\star}\right)^{\mathsf{H}}\bm{a}_{j}\bm{a}_{j}^{\mathsf{H}}\bm{v}=\sum_{j=1}^{m}\underbrace{\bm{u}^{\mathsf{H}}\bm{b}_{j}\bm{b}_{j}^{\mathsf{H}}\bm{h}^{\star}\left(\bm{x}-\bm{x}^{\star}\right)^{\mathsf{H}}\bm{a}_{j}\bm{a}_{j}^{\mathsf{H}}\bm{v}\ind_{\left\{ \left|\left(\bm{x}-\bm{x}^{\star}\right)^{\mathsf{H}}\bm{a}_{j}\right|\leq20C_{13}\frac{1}{\log^{3/2}m}\right\} }}_{\eqqcolon X_{j}}.
\]
In what follows, we shall first establish concentration inequalities for this quantity for a given point $(\bm{u},\bm{v})$, 
and then establish uniform concentration that holds for simultaneously for all points of interest.

\paragraph{Concentration.} 
Consider any fixed unit vectors $\bm{u}$ and $\bm{v})$. 
We seek to invoke the Bernstein inequality \citet[Theorem 2.8.2]{vershynin2018high} to control $\sum_{j=1}^{m}(X_j-\mathbb{E}[X_j])$. 
We observe that 
\begin{align*}
\left\Vert X_{j}-\mathbb{E}\left[X_{j}\right]\right\Vert _{\psi_{1}} & \leq C\left\Vert X_{j}\right\Vert _{\psi_{1}}\leq C\left|\bm{b}_{j}^{\mathsf{H}}\bm{h}^{\star}\left(\bm{x}-\bm{x}^{\star}\right)^{\mathsf{H}}\bm{a}_{j}\ind_{\left\{ \left|\left(\bm{x}-\bm{x}^{\star}\right)^{\mathsf{H}}\bm{a}_{j}\right|\leq20C_{13}\frac{1}{\log^{3/2}m}\right\} }\right|\left\Vert \bm{u}^{\mathsf{H}}\bm{b}_{j}\right\Vert _{\psi_{2}}\left\Vert \bm{a}_{j}^{\mathsf{H}}\bm{v}\right\Vert _{\psi_{2}}\\
 & =C\left|\bm{b}_{j}^{\mathsf{H}}\bm{h}^{\star}\right|\cdot\left|\left(\bm{x}-\bm{x}^{\star}\right)^{\mathsf{H}}\bm{a}_{j}\ind_{\left\{ \left|\left(\bm{x}-\bm{x}^{\star}\right)^{\mathsf{H}}\bm{a}_{j}\right|\leq20C_{13}\frac{1}{\log^{3/2}m}\right\} }\right|\\
 & \leq400CC_{13}\frac{1}{\log m},
\end{align*}
where the first inequality comes from the fact that $\Vert X-\mathbb{E}[X]\Vert_{\psi_{1}}\leq C\Vert X\Vert_{\psi_{1}}$
(cf.~\citet[Section 2.7]{vershynin2018high}) and the last inequality
is due to the event $\mathcal{E}$. Hence, the Bernstein inequality \citet[Theorem 2.8.2]{vershynin2018high}
reveals that
\[
\mathbb{P}\left(\left|\sum_{j=1}^{m}\left(X_{j}-\mathbb{E}\left[X_{j}\right]\right)\right|\geq t\right)\leq2\exp\left(-c\min\left(\frac{t^{2}\log^{2}m}{m},t\log m\right)\right).
\]
Letting $t=C_{t}\sqrt{mK}$ for some large enough constant $C_{t}>0$,
we obtain 
\begin{equation}
\left|\sum_{j=1}^{m}\left(X_{j}-\mathbb{E}\left[X_{j}\right]\right)\right|\leq C_{t}\sqrt{mK}, \label{eq:lemma28-1}
\end{equation}
 with probability exceeding $1-2\exp(-cC_{t}^{2}K\log m)$. 

\paragraph{Union bound over epsilon-nets.}
Next, we intend to show that \eqref{eq:lemma28-1} holds uniformly for any unit vectors $\bm{u}$ and $\bm{v}$. Define $\mathcal{N}_{\bm{x}}$ to be an $\epsilon_{1}$-net of
$\mathcal{B}_{\bm{x}}(\delta)\coloneqq\{\bm{x}:\Vert\bm{x}-\bm{x}^{\star}\Vert_{2}\leq\delta\}$
and $\mathcal{N}_{0}$ an $\epsilon_{2}$-net of the unit sphere $\mathcal{S}^{K-1}$.
In view of \citet[Corollary 4.2.13]{vershynin2018high}, we can choose these nets to guarantee that 
\[
\left|\mathcal{N}_{\bm{x}}\right|\leq\left(1+\frac{2\delta}{\epsilon_{1}}\right)^{4K}\qquad\text{and}\qquad\left|\mathcal{N}_{0}\right|\leq\left(1+\frac{2}{\epsilon_{2}}\right)^{2K}.
\]
Taking these collectively with the union bound reveals that $\eqref{eq:lemma28-1}$
holds uniformly for all $\bm{x}\in\mathcal{N}_{\bm{x}}$ and $\bm{u}$,
$\bm{v}\in\mathcal{N}_{0}$ with probability exceeding
\[
1-\left(1+\frac{2\delta}{\epsilon_{1}}\right)^{4K}\left(1+\frac{2}{\epsilon_{2}}\right)^{4K}\cdot2\exp\left(-cC_{t}^{2}K\log m\right)\geq1-2\exp\left(-CK\log m\right).
\]

\paragraph{Approximation.}
We then turn to the following quantity
\[
g\left(\bm{u},\bm{v},\bm{x}\right)\coloneqq\sum_{j=1}^{m}\left[\bm{u}^{\mathsf{H}}\bm{b}_{j}\bm{b}_{j}^{\mathsf{H}}\bm{h}^{\star}\left(\bm{x}-\bm{x}^{\star}\right)^{\mathsf{H}}\bm{a}_{j}\bm{a}_{j}^{\mathsf{H}}\ind_{\left\{ \left|\left(\bm{x}-\bm{x}^{\star}\right)^{\mathsf{H}}\bm{a}_{j}\right|\leq20C_{13}\frac{1}{\log^{3/2}m}\right\} }\bm{v}-m\bm{h}^{\star}\left(\bm{x}-\bm{x}^{\star}\right)^{\mathsf{H}}\right].
\]
For any $\bm{x}$ satisfying the assumptions (\ref{subeq:gaussian-assumptions-geometry})
and any $\bm{u}$, $\bm{v}\in\mathcal{S}^{K-1}$, one can choose $\bm{x}_{0}\in\mathcal{N}_{\bm{x}}$,
$\bm{u}_{0}\in\mathcal{N}_{0}$ and $\bm{v}_{0}\in\mathcal{N}_{0}$
satisfying $\Vert\bm{x}-\bm{x}_{0}\Vert_{2}\leq\epsilon_{1}$ and
$\max\{\Vert\bm{u}-\bm{u}_{0}\Vert_{2},\Vert\bm{v}-\bm{v}_{0}\Vert_{2}\}\leq\epsilon_{2}$.
Set $\epsilon_{1}=\delta/K$ and $\epsilon_{2}=1/4$. The triangle
inequality gives
\begin{align}
 & \left|g\left(\bm{u},\bm{v},\bm{x}\right)-g\left(\bm{u}_{0},\bm{v}_{0},\bm{x}_{0}\right)\right| \nonumber\\
 & \quad\leq\left|g\left(\bm{u},\bm{v},\bm{x}\right)-g\left(\bm{u}_{0},\bm{v},\bm{x}\right)\right|+\left|g\left(\bm{u}_{0},\bm{v},\bm{x}\right)-g\left(\bm{u}_{0},\bm{v}_{0},\bm{x}\right)\right| \nonumber\\
 & \quad\qquad+\left|g\left(\bm{u}_{0},\bm{v}_{0},\bm{x}\right)-g\left(\bm{u}_{0},\bm{v}_{0},\bm{x}_{0}\right)\right| \nonumber\\
 & \quad\leq2\left\Vert \sum_{j=1}^{m}\bm{b}_{j}\bm{b}_{j}^{\mathsf{H}}\bm{h}^{\star}\left(\bm{x}-\bm{x}^{\star}\right)^{\mathsf{H}}\bm{a}_{j}\bm{a}_{j}^{\mathsf{H}}\ind_{\left\{ \left|\left(\bm{x}-\bm{x}^{\star}\right)^{\mathsf{H}}\bm{a}_{j}\right|\leq20C_{13}\frac{1}{\log^{3/2}m}\right\} }-m\bm{h}^{\star}\left(\bm{x}-\bm{x}^{\star}\right)^{\mathsf{H}}\right\Vert \epsilon_{2} \nonumber\\
 & \quad\qquad+\left|g\left(\bm{u}_{0},\bm{v}_{0},\bm{x}\right)-g\left(\bm{u}_{0},\bm{v}_{0},\bm{x}_{0}\right)\right|.\label{eq:lemma28-2}
\end{align}
To simplify the second term above, we notice that on event $\mathcal{E}$ (cf.~\eqref{eq:event-E}),
\begin{align}
\left|\left(\bm{x}-\bm{x}_{0}\right)^{\mathsf{H}}\bm{a}_{j}\right| & \leq\max_{1\leq j\leq m}\left\Vert \bm{a}_{j}\right\Vert _{2}\cdot\left\Vert \bm{x}-\bm{x}_{0}\right\Vert _{2}\leq10\sqrt{K}\cdot\epsilon_{1}\leq2C_{13}\frac{1}{\log^{3/2}m},\label{eq:gaussian-s-condition}
\end{align}
and hence
\begin{align}
\left|\left(\bm{x}_{0}-\bm{x}^{\star}\right)^{\mathsf{H}}\bm{a}_{j}\right| & \leq\left|\left(\bm{x}-\bm{x}^{\star}\right)^{\mathsf{H}}\bm{a}_{j}\right|+\left|\left(\bm{x}-\bm{x}_{0}\right)^{\mathsf{H}}\bm{a}_{j}\right|\nonumber\\
 & \leq4C_{13}\frac{1}{\log^{3/2}m}.\label{eq:lemma28-6}
\end{align}
As a result, one has the following identity
\begin{align}
\ind_{\left\{ \left|\left(\bm{x}-\bm{x}_{0}\right)^{\mathsf{H}}\bm{a}_{j}\right|\leq20C_{13}\frac{1}{\log^{3/2}m}\right\} }=\ind_{\left\{ \left|\left(\bm{x}_{0}-\bm{x}^{\star}\right)^{\mathsf{H}}\bm{a}_{j}\right|\leq20C_{13}\frac{1}{\log^{3/2}m}\right\} }=\ind_{\left\{ \left|\left(\bm{x}-\bm{x}^{\star}\right)^{\mathsf{H}}\bm{a}_{j}\right|\leq20C_{13}\frac{1}{\log^{3/2}m}\right\} }=1. \label{eq:lemma28-allone}
\end{align}
It then follows that 
\begin{align}
 & \left|g\left(\bm{u}_{0},\bm{v}_{0},\bm{x}\right)-g\left(\bm{u}_{0},\bm{v}_{0},\bm{x}_{0}\right)\right| \nonumber\\
 & \quad=\left\Vert \sum_{j=1}^{m}\bm{b}_{j}\bm{b}_{j}^{\mathsf{H}}\bm{h}^{\star}\left(\bm{x}-\bm{x}_{0}\right)^{\mathsf{H}}\bm{a}_{j}\bm{a}_{j}^{\mathsf{H}}\ind_{\left\{ \left|\left(\bm{x}-\bm{x}_{0}\right)^{\mathsf{H}}\bm{a}_{j}\right|\leq20C_{13}\frac{1}{\log^{3/2}m}\right\} }-m\bm{h}^{\star}\left(\bm{x}-\bm{x}_{0}\right)^{\mathsf{H}}\right\Vert .\label{eq:lemma28-3}
\end{align}
Plugging \eqref{eq:lemma28-3} into \eqref{eq:lemma28-2} yields
\begin{align}
& \left|g\left(\bm{u},\bm{v},\bm{x}\right)-g\left(\bm{u}_{0},\bm{v}_{0},\bm{x}_{0}\right)\right| \nonumber\\
& \quad\leq2\left\Vert \sum_{j=1}^{m}\bm{b}_{j}\bm{b}_{j}^{\mathsf{H}}\bm{h}^{\star}\left(\bm{x}-\bm{x}^{\star}\right)^{\mathsf{H}}\bm{a}_{j}\bm{a}_{j}^{\mathsf{H}}\ind_{\left\{ \left|\left(\bm{x}-\bm{x}^{\star}\right)^{\mathsf{H}}\bm{a}_{j}\right|\leq20C_{13}\frac{1}{\log^{3/2}m}\right\} }-m\bm{h}^{\star}\left(\bm{x}-\bm{x}^{\star}\right)^{\mathsf{H}}\right\Vert \epsilon_{2} \nonumber\\
& \quad\qquad+\left\Vert \sum_{j=1}^{m}\bm{b}_{j}\bm{b}_{j}^{\mathsf{H}}\bm{h}^{\star}\left(\bm{x}-\bm{x}_{0}\right)^{\mathsf{H}}\bm{a}_{j}\bm{a}_{j}^{\mathsf{H}}\ind_{\left\{ \left|\left(\bm{x}-\bm{x}_{0}\right)^{\mathsf{H}}\bm{a}_{j}\right|\leq20C_{13}\frac{1}{\log^{3/2}m}\right\} }-m\bm{h}^{\star}\left(\bm{x}-\bm{x}_{0}\right)^{\mathsf{H}}\right\Vert .\label{eq:lemma28-4}
\end{align}

Next, we look at $g(\bm{u}_{0},\bm{v}_{0},\bm{x}_{0})$, and notice that \eqref{eq:lemma28-1} holds for \[X_j = \bm{u}_0^{\mathsf{H}}\bm{b}_{j}\bm{b}_{j}^{\mathsf{H}}\bm{h}^{\star}\left(\bm{x}_0-\bm{x}^{\star}\right)^{\mathsf{H}}\bm{a}_{j}\bm{a}_{j}^{\mathsf{H}}\bm{v}_0\ind_{\left\{ \left|\left(\bm{x}_0-\bm{x}^{\star}\right)^{\mathsf{H}}\bm{a}_{j}\right|\leq20C_{13}\frac{1}{\log^{3/2}m}\right\} },\] due to $\bm{x}_{0}\in\mathcal{N}_{\bm{x}}$,
$\bm{u}_{0}\in\mathcal{N}_{0}$ and $\bm{v}_{0}\in\mathcal{N}_{0}$. By virtue of the triangle inequality, one has
\begin{align}
 & \left|g\left(\bm{u}_{0},\bm{v}_{0},\bm{x}_{0}\right)\right|\nonumber\\
 & \quad\leq\left|\sum_{j=1}^{m}\left(X_{j}-\mathbb{E}\left[X_{j}\right]\right)\right|+\left|\sum_{j=1}^{m}\left(\mathbb{E}\left[X_{j}\right]-m\bm{u}_0^{\mathsf{H}}\bm{h}^{\star}\left(\bm{x}_0-\bm{x}^{\star}\right)^{\mathsf{H}}\bm{v}_0\right)\right|\nonumber\\
 &\quad \leq C_{t}\sqrt{mK}+\left|\sum_{j=1}^{m}\mathbb{E}\left[\bm{u}_0^{\mathsf{H}}\bm{b}_{j}\bm{b}_{j}^{\mathsf{H}}\bm{h}^{\star}\left(\bm{x}_0-\bm{x}^{\star}\right)^{\mathsf{H}}\bm{a}_{j}\bm{a}_{j}^{\mathsf{H}}\bm{v}\ind_{\left\{ \left|\left(\bm{x}_0-\bm{x}^{\star}\right)^{\mathsf{H}}\bm{a}_{j}\right|>20C_{13}\frac{1}{\log^{3/2}m}\right\} }\right]\right|\nonumber\\
 &\quad \leq C_{t}\sqrt{mK},\label{eq:lemma28-5}
\end{align}
where
\begin{align*}
 & \left|\mathbb{E}\left[\bm{u}_0^{\mathsf{H}}\bm{b}_{j}\bm{b}_{j}^{\mathsf{H}}\bm{h}^{\star}\left(\bm{x}_0-\bm{x}^{\star}\right)^{\mathsf{H}}\bm{a}_{j}\bm{a}_{j}^{\mathsf{H}}\bm{v}_0\ind_{\left\{ \left|\left(\bm{x}_0-\bm{x}^{\star}\right)^{\mathsf{H}}\bm{a}_{j}\right|>20C_{13}\frac{1}{\log^{3/2}m}\right\} }\right]\right|=0, \\
\end{align*}
which is a consequence of \eqref{eq:lemma28-allone}.

\paragraph{Putting all this together.}
Let us define
\[
\mathcal{S}\coloneqq\left\{ \bm{x}:\left|\left(\bm{x}-\bm{x}^{\star}\right)^{\mathsf{H}}\bm{a}_{j}\right|\leq20C_{13}\frac{1}{\log^{3/2}m},\left\Vert \bm{x}-\bm{x}^{\star}\right\Vert _{2}\leq\delta\right\} .
\]
Taking \eqref{eq:lemma28-4} and \eqref{eq:lemma28-5} collectively gives rise to
\begin{align}
\left|g\left(\bm{u},\bm{v},\bm{x}\right)\right| & \leq\left|g\left(\bm{u}_{0},\bm{v}_{0},\bm{x}_{0}\right)\right|+\left|g\left(\bm{u},\bm{v},\bm{x}\right)-g\left(\bm{u}_{0},\bm{v}_{0},\bm{x}_{0}\right)\right|\nonumber \\
 & \leq C_{t}\sqrt{mK}+2\left\Vert \sum_{j=1}^{m}\bm{b}_{j}\bm{b}_{j}^{\mathsf{H}}\bm{h}^{\star}\left(\bm{x}-\bm{x}^{\star}\right)^{\mathsf{H}}\bm{a}_{j}\bm{a}_{j}^{\mathsf{H}}\ind_{\left\{ \left|\left(\bm{x}-\bm{x}^{\star}\right)^{\mathsf{H}}\bm{a}_{j}\right|\leq20C_{13}\frac{1}{\log^{3/2}m}\right\} }-m\bm{h}^{\star}\left(\bm{x}-\bm{x}^{\star}\right)^{\mathsf{H}}\right\Vert \epsilon_{2}\nonumber \\
 & \qquad+\left\Vert \sum_{j=1}^{m}\bm{b}_{j}\bm{b}_{j}^{\mathsf{H}}\bm{h}^{\star}\left(\bm{x}-\bm{x}_{0}\right)^{\mathsf{H}}\bm{a}_{j}\bm{a}_{j}^{\mathsf{H}}\ind_{\left\{ \left|\left(\bm{x}-\bm{x}_{0}\right)^{\mathsf{H}}\bm{a}_{j}\right|\leq20C_{13}\frac{1}{\log^{3/2}m}\right\} }-m\bm{h}^{\star}\left(\bm{x}-\bm{x}_{0}\right)^{\mathsf{H}}\right\Vert .\label{eq:gaussian-g-1}
\end{align}

A key observation is that $\bm{x}'\coloneqq5(\bm{x}-\bm{x}_{0})+\bm{x}^{\star}\in\mathcal{S}$
by $\Vert\bm{x}-\bm{x}_{0}\Vert_{2}\leq\epsilon_{1}$
and (\ref{eq:lemma28-6}). Hence,  the last term in \eqref{eq:gaussian-g-1} satisfies
\begin{align*}
 & \left\Vert \sum_{j=1}^{m}\bm{b}_{j}\bm{b}_{j}^{\mathsf{H}}\bm{h}^{\star}\left(\bm{x}-\bm{x}_{0}\right)^{\mathsf{H}}\bm{a}_{j}\bm{a}_{j}^{\mathsf{H}}\ind_{\left\{ \left|\left(\bm{x}-\bm{x}_{0}\right)^{\mathsf{H}}\bm{a}_{j}\right|\leq20C_{13}\frac{1}{\log^{3/2}m}\right\} }-m\bm{h}^{\star}\left(\bm{x}-\bm{x}_{0}\right)^{\mathsf{H}}\right\Vert \\
  & \quad=\frac15\left\Vert \sum_{j=1}^{m}\bm{b}_{j}\bm{b}_{j}^{\mathsf{H}}\bm{h}^{\star}\left(\bm{x}'-\bm{x}_{\star}\right)^{\mathsf{H}}\bm{a}_{j}\bm{a}_{j}^{\mathsf{H}}\ind_{\left\{ \left|\left(\bm{x}'-\bm{x}_{\star}\right)^{\mathsf{H}}\bm{a}_{j}\right|\leq20C_{13}\frac{1}{\log^{3/2}m}\right\} }-m\bm{h}^{\star}\left(\bm{x}'-\bm{x}_{0}\right)^{\mathsf{H}}\right\Vert \\
 & \quad\leq\frac{1}{5}\sup_{\widetilde{\bm{x}}\in\mathcal{S}}\left\Vert \sum_{j=1}^{m}\bm{b}_{j}\bm{b}_{j}^{\mathsf{H}}\bm{h}^{\star}\left(\widetilde{\bm{x}}-\bm{x}^{\star}\right)^{\mathsf{H}}\bm{a}_{j}\bm{a}_{j}^{\mathsf{H}}\ind_{\left\{ \left|\left(\widetilde{\bm{x}}-\bm{x}^{\star}\right)^{\mathsf{H}}\bm{a}_{j}\right|\leq20C_{13}\frac{1}{\log^{3/2}m}\right\} }-m\bm{h}^{\star}\left(\widetilde{\bm{x}}-\bm{x}^{\star}\right)^{\mathsf{H}}\right\Vert ,
\end{align*}
where the first equality comes from \eqref{eq:gaussian-s-condition}.
Plugging this inequality into (\ref{eq:gaussian-g-1}), taking the
maximum over $\bm{u}$ and $\bm{v}$ on the left-hand side of (\ref{eq:gaussian-g-1})
and rearranging terms yield
\begin{align*}
 & \left(1-2\epsilon_{2}\right)\left\Vert \sum_{j=1}^{m}\bm{b}_{j}\bm{b}_{j}^{\mathsf{H}}\bm{h}^{\star}\left(\bm{x}-\bm{x}^{\star}\right)^{\mathsf{H}}\bm{a}_{j}\bm{a}_{j}^{\mathsf{H}}\ind_{\left\{ \left|\left(\bm{x}-\bm{x}^{\star}\right)^{\mathsf{H}}\bm{a}_{j}\right|\leq20C_{13}\frac{1}{\log^{3/2}m}\right\} }-m\bm{h}^{\star}\left(\bm{x}-\bm{x}^{\star}\right)^{\mathsf{H}}\right\Vert \\
&\quad\leq  2C_{t}\sqrt{mK}+\frac{1}{5}\sup_{\widetilde{\bm{x}}\in\mathcal{S}}\left\Vert \sum_{j=1}^{m}\bm{b}_{j}\bm{b}_{j}^{\mathsf{H}}\bm{h}^{\star}\left(\widetilde{\bm{x}}-\bm{x}^{\star}\right)^{\mathsf{H}}\bm{a}_{j}\bm{a}_{j}^{\mathsf{H}}\ind_{\left\{ \left|\left(\widetilde{\bm{x}}-\bm{x}^{\star}\right)^{\mathsf{H}}\bm{a}_{j}\right|\leq20C_{13}\frac{1}{\log^{3/2}m}\right\} }-m\bm{h}^{\star}\left(\widetilde{\bm{x}}-\bm{x}^{\star}\right)^{\mathsf{H}}\right\Vert .
\end{align*}
Further, taking the maximum over $\bm{x}\in\mathcal{S}$ on the left-hand side of the above inequality gives 
\[
\left(1-2\epsilon_{2}-\frac{1}{5}\right)\sup_{\bm{x}\in\mathcal{S}}\left\Vert \sum_{j=1}^{m}\bm{b}_{j}\bm{b}_{j}^{\mathsf{H}}\bm{h}^{\star}\left(\bm{x}-\bm{x}^{\star}\right)^{\mathsf{H}}\bm{a}_{j}\bm{a}_{j}^{\mathsf{H}}\ind_{\left\{ \left|\left(\bm{x}-\bm{x}^{\star}\right)^{\mathsf{H}}\bm{a}_{j}\right|\leq20C_{13}\frac{1}{\log^{3/2}m}\right\} }-m\bm{h}^{\star}\left(\bm{x}-\bm{x}^{\star}\right)^{\mathsf{H}}\right\Vert \leq2C_{t}\sqrt{mK},
\]
and, consequently, 
\begin{align*}
& \sup_{\bm{x}\in\mathcal{S}}\left\Vert \sum_{j=1}^{m}\bm{b}_{j}\bm{b}_{j}^{\mathsf{H}}\bm{h}^{\star}\left(\bm{x}-\bm{x}^{\star}\right)^{\mathsf{H}}\bm{a}_{j}\bm{a}_{j}^{\mathsf{H}}\ind_{\left\{ \left|\left(\bm{x}-\bm{x}^{\star}\right)^{\mathsf{H}}\bm{a}_{j}\right|\leq20C_{13}\frac{1}{\log^{3/2}m}\right\} }\right\Vert  \\
&\quad= \sup_{\bm{x}\in\mathcal{S}}\left\Vert \sum_{j=1}^{m}\bm{b}_{j}\bm{b}_{j}^{\mathsf{H}}\bm{h}^{\star}\left(\bm{x}-\bm{x}^{\star}\right)^{\mathsf{H}}\bm{a}_{j}\bm{a}_{j}^{\mathsf{H}}\right\Vert  \\
&\quad \leq m\left\Vert \bm{h}^{\star}\left(\bm{x}-\bm{x}^{\star}\right)^{\mathsf{H}}\right\Vert +4C_{t}\sqrt{mK}\\
 &\quad \leq2\delta m,
\end{align*}
as long as $m\gg K\log^{4}m$.

\subsubsection{Proof of Lemma \ref{lemma:geometry-alpha3-2}\label{subsec:Proof-of-Lemma-geometry-alpha3-2}}
Similar to proof of Lemma \ref{lemma:geometry-alpha3-1}, we consider the event 
\begin{equation}
\mathcal{E}\coloneqq\left\{ \max_{1\leq j\leq m}\left|\bm{b}_{j}^{\mathsf{H}}\bm{h}^{\star}\right|\leq20\sqrt{\log m},\max_{1\leq j\leq m}\left\Vert \bm{a}_{j}\right\Vert _{2}\leq10\sqrt{K}\right\}, \label{eq:event-E-1}
\end{equation}
which
holds with probability at least $1-O(m^{-100}+me^{-CK})$. The proof
thereafter will be developed on this event. For any fixed unit vectors
$\bm{u}$, $\bm{v}\in\mathbb{C}^{K}$ and $(\bm{h},\bm{x})$ obeying
the assumptions (\ref{subeq:gaussian-assumptions-geometry}), one
has 
\begin{align*}
 & \sum_{j=1}^{m}\bm{u}^{\mathsf{H}}\bm{b}_{j}\bm{b}_{j}^{\mathsf{H}}\left(\bm{h}-\bm{h}^{\star}\right)\left(\bm{x}-\bm{x}^{\star}\right)^{\mathsf{H}}\bm{a}_{j}\bm{a}_{j}^{\mathsf{H}}\bm{v}\\
&\quad =  \sum_{j=1}^{m}\underbrace{\bm{u}^{\mathsf{H}}\bm{b}_{j}\bm{b}_{j}^{\mathsf{H}}\left(\bm{h}-\bm{h}^{\star}\right)\left(\bm{x}-\bm{x}^{\star}\right)^{\mathsf{H}}\bm{a}_{j}\bm{a}_{j}^{\mathsf{H}}\bm{v}\ind_{\left\{ \max\left\{ \left|\bm{b}_{j}^{\mathsf{H}}\left(\bm{h}-\bm{h}^{\star}\right)\right|,\left|\left(\bm{x}-\bm{x}^{\star}\right)^{\mathsf{H}}\bm{a}_{j}\right|\right\} \leq20C_{13}\frac{1}{\log^{3/2}m}\right\} }}_{\eqqcolon W_{j}}.
\end{align*}

\paragraph{Concentration.}
Consider any fixed  vectors
$\bm{u}$, $\bm{v}$ and $(\bm{h},\bm{x})$. 
We seek to invoke the Bernstein inequality \citet[Theorem 2.8.2]{vershynin2018high} to control $\sum_{j=1}^m W_j$. We observe that 
\begin{align*}
&\left\Vert W_{j}-\mathbb{E}\left[W_{j}\right]\right\Vert _{\psi_{1}} \leq C\left\Vert W_{j}\right\Vert _{\psi_{1}}\\
 &\quad \leq C\left|\bm{b}_{j}^{\mathsf{H}}\left(\bm{h}-\bm{h}^{\star}\right)\left(\bm{x}-\bm{x}^{\star}\right)^{\mathsf{H}}\bm{a}_{j}\ind_{\left\{ \max\left\{ \left|\bm{b}_{j}^{\mathsf{H}}\left(\bm{h}-\bm{h}^{\star}\right)\right|,\left|\left(\bm{x}-\bm{x}^{\star}\right)^{\mathsf{H}}\bm{a}_{j}\right|\right\} \leq20C_{13}\frac{1}{\log^{3/2}m}\right\} }\right|\left\Vert \bm{u}^{\mathsf{H}}\bm{b}_{j}\right\Vert _{\psi_{2}}\left\Vert \bm{a}_{j}^{\mathsf{H}}\bm{v}\right\Vert _{\psi_{2}}\\
 &\quad =C\left|\bm{b}_{j}^{\mathsf{H}}\left(\bm{h}-\bm{h}^{\star}\right)\left(\bm{x}-\bm{x}^{\star}\right)^{\mathsf{H}}\bm{a}_{j}\ind_{\left\{ \max\left\{ \left|\bm{b}_{j}^{\mathsf{H}}\left(\bm{h}-\bm{h}^{\star}\right)\right|,\left|\left(\bm{x}-\bm{x}^{\star}\right)^{\mathsf{H}}\bm{a}_{j}\right|\right\} \leq20C_{13}\frac{1}{\log^{3/2}m}\right\} }\right|\\
 &\quad \leq400CC_{13}^{2}\frac{1}{\log^{3}m},
\end{align*}
where the first inequality comes from the fact that $\Vert X-\mathbb{E}[X]\Vert_{\psi_{1}}\leq C\Vert X\Vert_{\psi_{1}}$
(cf.~\citet[Section 2.7]{vershynin2018high}), the second one is due to \citet[Lemma 2.7.7]{vershynin2018high} and the last inequality
is due to the event $\mathcal{E}$. 
Hence, the Bernstein inequality \citet[Theorem 2.8.2]{vershynin2018high}
reveals that
\[
\mathbb{P}\left(\left|\sum_{j=1}^{m}\left(W_{j}-\mathbb{E}\left[W_{j}\right]\right)\right|\geq t\right)\leq2\exp\left(-c\min\left(\frac{t^{2}\log^{6}m}{m},t\log^{3}m\right)\right).
\]
Letting $t=C_{t}\sqrt{mK}$ for some large enough constant $C_{t}>0$,
we obtain that 
\begin{equation}
	\left|\sum_{j=1}^{m}\left(W_{j}-\mathbb{E}\left[W_{j}\right]\right)\right|\leq C_{t}\sqrt{mK}, \label{eq:lemma29-1}
\end{equation}
holds with probability exceeding $1-2\exp(-cC_{t}^{2}K\log m)$. 

\paragraph{Union bound.}
Next,
we define $\mathcal{N}_{\bm{z}}$ to be an $\epsilon_{1}$-net of
$\mathcal{B}_{\bm{z}}(\delta)\coloneqq\{(\bm{h},\bm{x}):\max\{\Vert\bm{h}-\bm{h}^{\star}\Vert_{2},\Vert\bm{x}-\bm{x}^{\star}\Vert_{2}\}\leq\delta\}$
and $\mathcal{N}_{0}$ an $\epsilon_{2}$-net of the unit sphere $\mathcal{S}^{K-1}$.
In view of \citet[Corollary 4.2.13]{vershynin2018high}, we have 
\[
\left|\mathcal{N}_{\bm{z}}\right|\leq\left(1+\frac{2\delta}{\epsilon_{1}}\right)^{4K}\qquad\text{and}\qquad\left|\mathcal{N}_{0}\right|\leq\left(1+\frac{2}{\epsilon_{2}}\right)^{2K}.
\]
Taking this collectively with the union bound yields that \eqref{eq:lemma29-1} holds uniformly for any $(\bm{h},\bm{x})\in\mathcal{N}_{\bm{z}}$
and $\bm{u}$, $\bm{v}\in\mathcal{N}_{0}$ with probability over
\[
1-\left(1+\frac{2\delta}{\epsilon_{1}}\right)^{4K}\left(1+\frac{2}{\epsilon_{2}}\right)^{4K}\cdot2\exp\left(-CK\log m\right)\geq1-2\exp(-CK\log m).
\]

\paragraph{Approximation.}
Define 
\[
\bm{H}_{j}\left(\bm{h},\bm{x}\right)\coloneqq\bm{b}_{j}\bm{b}_{j}^{\mathsf{H}}\left(\bm{h}-\bm{h}^{\star}\right)\left(\bm{x}-\bm{x}^{\star}\right)^{\mathsf{H}}\bm{a}_{j}\bm{a}_{j}^{\mathsf{H}}\ind_{\left\{ \max\left\{ \left|\bm{b}_{j}^{\mathsf{H}}\left(\bm{h}-\bm{h}^{\star}\right)\right|,\left|\left(\bm{x}-\bm{x}^{\star}\right)^{\mathsf{H}}\bm{a}_{j}\right|\right\} \leq20C_{13}\frac{1}{\log^{3/2}m}\right\} }.
\]
For any $(\bm{h},\bm{x})$ satisfying the assumptions (\ref{subeq:gaussian-assumptions-geometry})
and any $\bm{u}$, $\bm{v}\in\mathcal{S}^{K-1}$, one can choose $(\bm{h}_{0},\bm{x}_{0})\in\mathcal{N}_{\bm{z}}$,
$\bm{u}_{0}\in\mathcal{N}_{0}$ and $\bm{v}_{0}\in\mathcal{N}_{0}$
satisfying $\max\{\Vert\bm{h}-\bm{h}_{0}\Vert_{2},\Vert\bm{x}-\bm{x}_{0}\Vert_{2}\}\leq\epsilon_{1}$
and $\max\{\Vert\bm{u}-\bm{u}_{0}\Vert_{2},\Vert\bm{v}-\bm{v}_{0}\Vert_{2}\}\leq\epsilon_{2}$.
Let
\[
g\left(\bm{u},\bm{v},\bm{h},\bm{x}\right)\coloneqq\sum_{j=1}^{m}\bm{u}^{\mathsf{H}}\bm{H}_{j}\left(\bm{h},\bm{x}\right)\bm{v}-m\left(\bm{h}-\bm{h}^{\star}\right)\left(\bm{x}-\bm{x}^{\star}\right)^{\mathsf{H}}.
\]
Set $\epsilon_{1}=\delta/K$ and $\epsilon_{2}=1/4$. In view of the triangle
inequality, one has 
\begin{align}
 & \left|g\left(\bm{u},\bm{v},\bm{h},\bm{x}\right)-g\left(\bm{u}_{0},\bm{v}_{0},\bm{h}_{0},\bm{x}_{0}\right)\right|\nonumber \\
 & \quad\leq\left|g\left(\bm{u},\bm{v},\bm{h},\bm{x}\right)-g\left(\bm{u}_{0},\bm{v},\bm{h},\bm{x}\right)\right|+\left|g\left(\bm{u}_{0},\bm{v},\bm{h},\bm{x}\right)-g\left(\bm{u}_{0},\bm{v}_{0},\bm{h},\bm{x}\right)\right|\nonumber \\
 & \quad\qquad+\left|g\left(\bm{u}_{0},\bm{v}_{0},\bm{h},\bm{x}\right)-g\left(\bm{u}_{0},\bm{v}_{0},\bm{h}_{0},\bm{x}\right)\right|+\left|g\left(\bm{u}_{0},\bm{v}_{0},\bm{h}_{0},\bm{x}\right)-g\left(\bm{u}_{0},\bm{v}_{0},\bm{h}_{0},\bm{x}_{0}\right)\right|\nonumber \\
 &\quad \leq2\epsilon_{2}\left\Vert \sum_{j=1}^{m}\bm{H}_{j}\left(\bm{h},\bm{x}\right)-m\left(\bm{h}-\bm{h}^{\star}\right)\left(\bm{x}-\bm{x}^{\star}\right)^{\mathsf{H}}\right\Vert \nonumber \\
 & \quad\qquad+\left|g\left(\bm{u}_{0},\bm{v}_{0},\bm{h},\bm{x}\right)-g\left(\bm{u}_{0},\bm{v}_{0},\bm{h}_{0},\bm{x}\right)\right|+\left|g\left(\bm{u}_{0},\bm{v}_{0},\bm{h}_{0},\bm{x}\right)-g\left(\bm{u}_{0},\bm{v}_{0},\bm{h}_{0},\bm{x}_{0}\right)\right|.\label{eq:geometry-eqn-1}
\end{align}
To simplify the last two terms, we observe that 
\begin{equation}
\left|\left(\bm{x}-\bm{x}_{0}\right)^{\mathsf{H}}\bm{a}_{j}\right|\leq\max_{1\leq j\leq m}\left\Vert \bm{a}_{j}\right\Vert _{2}\cdot\left\Vert \bm{x}-\bm{x}_{0}\right\Vert _{2}\leq10\sqrt{K}\epsilon_{1}\leq C_{13}\frac{1}{\log^{3/2}m},\label{eq:gaussian-s-condition-1}
\end{equation}
and furthermore, 
\begin{align*}
\left|\left(\bm{x}_{0}-\bm{x}^{\star}\right)^{\mathsf{H}}\bm{a}_{j}\right| & \leq\left|\left(\bm{x}-\bm{x}^{\star}\right)^{\mathsf{H}}\bm{a}_{j}\right|+\left|\left(\bm{x}-\bm{x}_{0}\right)^{\mathsf{H}}\bm{a}_{j}\right|\\
 & \leq\left|\left(\bm{x}-\bm{x}^{\star}\right)^{\mathsf{H}}\bm{a}_{j}\right|+C_{13}\frac{1}{\log^{3/2}m}\\
 & \leq3C_{13}\frac{1}{\log^{3/2}m}.
\end{align*}
Similarly the same bounds also hold for $|\bm{b}_{j}^{\mathsf{H}}(\bm{h}_{0}-\bm{h}^{\star})|$.
It follows that
\begin{align}
\ind_{\left\{ \left|\left(\bm{x}-\bm{x}_{0}\right)^{\mathsf{H}}\bm{a}_{j}\right|\leq20C_{13}\frac{1}{\log^{3/2}m}\right\} }=\ind_{\left\{ \left|\left(\bm{x}_{0}-\bm{x}^{\star}\right)^{\mathsf{H}}\bm{a}_{j}\right|\leq20C_{13}\frac{1}{\log^{3/2}m}\right\} }=\ind_{\left\{ \left|\left(\bm{x}-\bm{x}^{\star}\right)^{\mathsf{H}}\bm{a}_{j}\right|\leq20C_{13}\frac{1}{\log^{3/2}m}\right\} }=1,\\ \label{eq:lemma29-2}
\ind_{\left\{ \left|\bm{b}_{j}^{\mathsf{H}}\left(\bm{h}-\bm{h}_{0}\right)\right|\leq20C_{13}\frac{1}{\log^{3/2}m}\right\} }=\ind_{\left\{ \left|\bm{b}_{j}^{\mathsf{H}}\left(\bm{h}_{0}-\bm{h}^{\star}\right)\right|\leq20C_{13}\frac{1}{\log^{3/2}m}\right\} }=\ind_{\left\{ \left|\bm{b}_{j}^{\mathsf{H}}\left(\bm{h}-\bm{h}^{\star}\right)\right|\leq20C_{13}\frac{1}{\log^{3/2}m}\right\} }=1.
\end{align}
Then, we can bound the last two term in (\ref{eq:geometry-eqn-1})
as follows
\begin{align*}
 & \left|g\left(\bm{u}_{0},\bm{v}_{0},\bm{h},\bm{x}\right)-g\left(\bm{u}_{0},\bm{v}_{0},\bm{h}_{0},\bm{x}\right)\right|+\left|g\left(\bm{u}_{0},\bm{v}_{0},\bm{h}_{0},\bm{x}\right)-g\left(\bm{u}_{0},\bm{v}_{0},\bm{h}_{0},\bm{x}_{0}\right)\right|\\
 & \quad\leq\left\Vert \sum_{j=1}^{m}\bm{H}_{j}\left(\bm{h}-\bm{h}_{0}+\bm{h}^{\star},\bm{x}\right)-m\left(\bm{h}-\bm{h}_{0}\right)\left(\bm{x}-\bm{x}^{\star}\right)^{\mathsf{H}}\right\Vert \\
 & \quad\qquad+\left\Vert \sum_{j=1}^{m}\bm{H}_{j}\left(\bm{h},\bm{x}-\bm{x}_{0}+\bm{x}^{\star}\right)-m\left(\bm{h}_0-\bm{h}^{\star}\right)\left(\bm{x}-\bm{x}_{0}\right)^{\mathsf{H}}\right\Vert .
\end{align*}
Considering $g(\bm{u}_{0},\bm{v}_{0},\bm{h}_{0},\bm{x}_{0})$, one
has 
\begin{align}
 & \left|g\left(\bm{u}_{0},\bm{v}_{0},\bm{h}_{0},\bm{x}_{0}\right)\right|\nonumber\\
 & \quad\leq\left|\sum_{j=1}^{m}\left(W_{j}-\mathbb{E}\left[W_{j}\right]\right)\right|+\left|\sum_{j=1}^{m}\left(\mathbb{E}\left[W_{j}\right]-m\bm{u}^{\mathsf{H}}\left(\bm{h}_{0}-\bm{h}^{\star}\right)\left(\bm{x}_{0}-\bm{x}^{\star}\right)^{\mathsf{H}}\bm{v}\right)\right|\nonumber\\
 & \quad\leq C_{t}\sqrt{mK}+\left|\sum_{j=1}^{m}\mathbb{E}\left[\bm{u}^{\mathsf{H}}\bm{b}_{j}\bm{b}_{j}^{\mathsf{H}}\left(\bm{h}_0-\bm{h}^{\star}\right)\left(\bm{x}_0-\bm{x}^{\star}\right)^{\mathsf{H}}\bm{a}_{j}\bm{a}_{j}^{\mathsf{H}}\bm{v}\ind_{\left\{ \max\left\{ \left|\bm{b}_{j}^{\mathsf{H}}\left(\bm{h}_0-\bm{h}^{\star}\right)\right|,\left|\left(\bm{x}_0-\bm{x}^{\star}\right)^{\mathsf{H}}\bm{a}_{j}\right|\right\} >20C_{13}\frac{1}{\log^{3/2}m}\right\} }\right]\right|\nonumber\\
 & \quad= C_{t}\sqrt{mK}, \label{eq:lemma29-3}
\end{align}
where the first inequality is due to triangle inequality; the second comes from \eqref{eq:lemma29-1} and the last is because of \eqref{eq:lemma29-2}. 

\paragraph{Putting all this together.}
Let 
\[
\mathcal{S}'\coloneqq\left\{ \left(\bm{h},\bm{x}\right):\max\left\{\left|\left(\bm{x}-\bm{x}^{\star}\right)^{\mathsf{H}}\bm{a}_{j}\right|,\left|\left(\bm{h}-\bm{h}^{\star}\right)^{\mathsf{H}}\bm{b}_{j}\right|\right\}\leq20C_{13}\frac{1}{\log^{3/2}m},\max\left\{ \left\Vert \bm{h}-\bm{h}^{\star}\right\Vert _{2},\left\Vert \bm{x}-\bm{x}^{\star}\right\Vert _{2}\right\} \leq\delta\right\} .
\]
It is easy to check that $(\bm{h},5(\bm{x}-\bm{x}_{0})+\bm{x}^{\star})\in\mathcal{S}$
by using the facts that $\Vert\bm{x}-\bm{x}_{0}\Vert_{2}\leq\epsilon_{1}$
and (\ref{eq:gaussian-s-condition-1}). Hence, we have
\begin{align*}
 & \left\Vert \sum_{j=1}^{m}\bm{H}_{j}\left(\bm{h},\bm{x}-\bm{x}_{0}+\bm{x}^{\star}\right)-m\left(\bm{h}-\bm{h}^{\star}\right)\left(\bm{x}-\bm{x}_{0}\right)^{\mathsf{H}}\right\Vert \\
 &\quad \leq\frac{1}{5}\sup_{\left(\bm{h},\bm{x}\right)\in\mathcal{S}'}\left\Vert \sum_{j=1}^{m}\bm{H}_{j}\left(\bm{h},\bm{x}\right)-m\left(\bm{h}-\bm{h}^{\star}\right)\left(\bm{x}-\bm{x}^{\star}\right)^{\mathsf{H}}\right\Vert .
\end{align*}
Similarly, one has $(5(\bm{h}-\bm{h}_{0})+\bm{h}^{\star},\bm{x})\in\mathcal{S}$
and therefore,
\begin{align*}
 & \left\Vert \sum_{j=1}^{m}\bm{H}_{j}\left(\bm{h}-\bm{h}_{0}+\bm{h}^{\star},\bm{x}\right)-m\left(\bm{h}-\bm{h}_{0}\right)\left(\bm{x}-\bm{x}^{\star}\right)^{\mathsf{H}}\right\Vert \\
 &\quad \leq\frac{1}{5}\sup_{\left(\bm{h},\bm{x}\right)\in\mathcal{S}'}\left\Vert \sum_{j=1}^{m}\bm{H}_{j}\left(\bm{h},\bm{x}\right)-m\left(\bm{h}-\bm{h}^{\star}\right)\left(\bm{x}-\bm{x}^{\star}\right)^{\mathsf{H}}\right\Vert .
\end{align*}
Hence, combining the above two inequalities with (\ref{eq:geometry-eqn-1})
and (\ref{eq:lemma29-3}) reveals that 
\begin{align*}
\left|g\left(\bm{u},\bm{v},\bm{h},\bm{x}\right)\right| & \leq\left|g\left(\bm{u}_{0},\bm{v}_{0},\bm{h}_{0},\bm{x}_{0}\right)\right|+\left|g\left(\bm{u},\bm{v},\bm{h},\bm{x}\right)-g\left(\bm{u}_{0},\bm{v}_{0},\bm{h}_{0},\bm{x}_{0}\right)\right|\\
 & \leq C_{t}\sqrt{mK}+2\epsilon_{2}\left\Vert \sum_{j=1}^{m}\bm{H}_{j}\left(\bm{h},\bm{x}\right)-m\left(\bm{h}-\bm{h}^{\star}\right)\left(\bm{x}-\bm{x}^{\star}\right)^{\mathsf{H}}\right\Vert \\
 & \qquad+\frac{2}{5}\sup_{\left(\bm{h},\bm{x}\right)\in\mathcal{S}'}\left\Vert \sum_{j=1}^{m}\bm{H}_{j}\left(\bm{h},\bm{x}\right)-m\left(\bm{h}-\bm{h}^{\star}\right)\left(\bm{x}-\bm{x}^{\star}\right)^{\mathsf{H}}\right\Vert .
\end{align*}
Taking the maximum over $\bm{u}$ and $\bm{v}$ on the left-hand side of the above inequality
and rearranging terms yield
\begin{align*}
	&\left(1-2\epsilon_{2}\right)\left\Vert \sum_{j=1}^{m}\bm{H}_{j}\left(\bm{h},\bm{x}\right)-m\left(\bm{h}-\bm{h}^{\star}\right)\left(\bm{x}-\bm{x}^{\star}\right)^{\mathsf{H}}\right\Vert \\
	&\quad \leq C_{t}\sqrt{mK}+\frac{2}{5}\sup_{\left(\bm{h},\bm{x}\right)\in\mathcal{S}'}\left\Vert \sum_{j=1}^{m}\bm{H}_{j}\left(\bm{h},\bm{x}\right)-m\left(\bm{h}-\bm{h}^{\star}\right)\left(\bm{x}-\bm{x}^{\star}\right)^{\mathsf{H}}\right\Vert .
\end{align*}
Further taking the maximum over $(\bm{h},\bm{x})$ on $\mathcal{S}'$ gives
\[
\left(1-2\epsilon_{2}-\frac{2}{5}\right)\sup_{\left(\bm{h},\bm{x}\right)\in\mathcal{S}'}\left\Vert \sum_{j=1}^{m}\bm{H}_{j}\left(\bm{h},\bm{x}\right)-m\left(\bm{h}-\bm{h}^{\star}\right)\left(\bm{x}-\bm{x}^{\star}\right)^{\mathsf{H}}\right\Vert \leq C_{t}\sqrt{mK},
\]
and then rearranging terms yields
\begin{align*}
\sup_{\left(\bm{h},\bm{x}\right)\in\mathcal{S}'}\left\Vert \sum_{j=1}^{m}\bm{H}_{j}\left(\bm{h},\bm{x}\right)\right\Vert  
& \leq\sup_{\left(\bm{h},\bm{x}\right)\in\mathcal{S}'}\left\Vert m\left(\bm{h}-\bm{h}^{\star}\right)\left(\bm{x}-\bm{x}^{\star}\right)^{\mathsf{H}}\right\Vert +4C_{t}\sqrt{mK}\\
 & \leq\delta^{2}m+4C_{t}\sqrt{mK}.
\end{align*}
Recognizing that
\begin{align*}
\sup_{\left(\bm{h},\bm{x}\right)\in\mathcal{S}'}\left\Vert \sum_{j=1}^{m}\bm{H}_{j}\left(\bm{h},\bm{x}\right)\right\Vert  &=\sup_{\left(\bm{h},\bm{x}\right)\in\mathcal{S}'}\left\Vert\bm{b}_{j}\bm{b}_{j}^{\mathsf{H}}\left(\bm{h}-\bm{h}^{\star}\right)\left(\bm{x}-\bm{x}^{\star}\right)^{\mathsf{H}}\bm{a}_{j}\bm{a}_{j}^{\mathsf{H}} \right\Vert
\end{align*} and that the set of all $(\bm{h},\bm{x})$ obeying (\ref{subeq:gaussian-assumptions-geometry})
is a subset of $\mathcal{S}'$, we have established the desired result. 

\subsection{Proof of Lemma \ref{lemma:gaussian-loo}\label{subsec:Proof-of-Lemma-gaussian-loo}}

The proof is very much the same as that of Lemma \ref{lemma:proximity},
except that the contraction coefficient in the expression $\bm{\nu}_{1}$
in (\ref{eq:prox-nu12}) is $(1-c_{\eta})$ rather than $(1-\eta)$
and the bound on $\bm{\nu}_{3}$ is different. In what follows, we shall only describe
how to bound $\bm{\nu}_{3}$ here, for the sake of brevity.

The proof proceeds by bounding
$\bm{\nu}_{3}$ via the four terms as indicated by (\ref{eq:lemma-proximity-nu3-decompose}), which we discuss as follows.
\begin{enumerate}
\item For the first term $\nu_{31}$, one has
\begin{align}
\nu_{31} & \leq\left|\boldsymbol{b}_{l}^{\mathsf{H}}\widehat{\boldsymbol{h}}^{t,(l)}\widehat{\boldsymbol{x}}^{t,(l)\mathsf{H}}\boldsymbol{a}_{l}-\boldsymbol{b}_{l}^{\mathsf{H}}\boldsymbol{h}^{\star}\boldsymbol{x}^{\star\mathsf{H}}\boldsymbol{a}_{l}\right|\left\Vert \boldsymbol{b}_{l}\right\Vert _{2}\left|\boldsymbol{a}_{l}^{\mathsf{H}}\widehat{\boldsymbol{x}}^{t,(l)}\right|\nonumber \\
 & \leq\left|\boldsymbol{b}_{l}^{\mathsf{H}}\widehat{\boldsymbol{h}}^{t,(l)}\widehat{\boldsymbol{x}}^{t,(l)\mathsf{H}}\boldsymbol{a}_{l}-\boldsymbol{b}_{l}^{\mathsf{H}}\boldsymbol{h}^{\star}\boldsymbol{x}^{\star\mathsf{H}}\boldsymbol{a}_{l}\right|\cdot10\sqrt{K}\cdot20\sqrt{\log m}\cdot\big\|\widehat{\boldsymbol{x}}^{t,(l)}\big\|_{2}\nonumber \\
 & \leq400\sqrt{K\log m}\left|\boldsymbol{b}_{l}^{\mathsf{H}}\widehat{\boldsymbol{h}}^{t,(l)}\widehat{\boldsymbol{x}}^{t,(l)\mathsf{H}}\boldsymbol{a}_{l}-\boldsymbol{b}_{l}^{\mathsf{H}}\boldsymbol{h}^{\star}\boldsymbol{x}^{\star\mathsf{H}}\boldsymbol{a}_{l}\right|,\label{eq:nu31-1}
\end{align}
where the penultimate inequality follows from (\ref{eq:useful1})
and (\ref{eq:useful2}); the last inequality is due to (\ref{eq:gaussian-hat-hx-loo}). 
\item Regarding $\nu_{32}$, one has
\begin{align}
\nu_{32} & \leq\left|\boldsymbol{b}_{l}^{\mathsf{H}}\widehat{\boldsymbol{h}}^{t,(l)}\widehat{\boldsymbol{x}}^{t,(l)\mathsf{H}}\boldsymbol{a}_{l}-\boldsymbol{b}_{l}^{\mathsf{H}}\boldsymbol{h}^{\star}\boldsymbol{x}^{\star\mathsf{H}}\boldsymbol{a}_{l}\right|\left\Vert \bm{a}_{l}\right\Vert _{2}\left|\bm{b}_{l}^{\mathsf{H}}\widehat{\bm{h}}^{t,\left(l\right)}\right|\nonumber \\
 & \leq\left|\boldsymbol{b}_{l}^{\mathsf{H}}\widehat{\boldsymbol{h}}^{t,(l)}\widehat{\boldsymbol{x}}^{t,(l)\mathsf{H}}\boldsymbol{a}_{l}-\boldsymbol{b}_{l}^{\mathsf{H}}\boldsymbol{h}^{\star}\boldsymbol{x}^{\star\mathsf{H}}\boldsymbol{a}_{l}\right|\cdot10\sqrt{K}\cdot20\sqrt{\log m}\left\Vert \widehat{\bm{h}}^{t,\left(l\right)}\right\Vert _{2}\nonumber \\
 & \leq400\sqrt{K\log m}\left|\boldsymbol{b}_{l}^{\mathsf{H}}\widehat{\boldsymbol{h}}^{t,(l)}\widehat{\boldsymbol{x}}^{t,(l)\mathsf{H}}\boldsymbol{a}_{l}-\boldsymbol{b}_{l}^{\mathsf{H}}\boldsymbol{h}^{\star}\boldsymbol{x}^{\star\mathsf{H}}\boldsymbol{a}_{l}\right|,\label{eq:nu32-1}
\end{align}
where the second line follows from (\ref{eq:useful1}) and (\ref{eq:useful2});
the last inequality is due to (\ref{eq:gaussian-hat-hx-loo}). Further
for some sufficiently large constant $C>0$, there holds
\begin{align}
\left|\bm{b}_{l}^{\mathsf{H}}\big(\widehat{\boldsymbol{h}}^{t,(l)}-\bm{h}^{\star}\big)\right| & \le20\sqrt{\log m}\left\Vert \widehat{\boldsymbol{h}}^{t,(l)}-\bm{h}^{\star}\right\Vert _{2}\nonumber \\
 & \leq20\sqrt{\log m}\left(\left\Vert \widehat{\boldsymbol{h}}^{t,(l)}-\widetilde{\bm{h}}^{t}\right\Vert _{2}+\left\Vert \widetilde{\bm{h}}^{t}-\bm{h}^{\star}\right\Vert _{2}\right)\nonumber \\
 & \leq C\left(\frac{\sqrt{mK\log^{3}m}}{m}+\frac{\sigma\sqrt{K\log^{2}m}}{m}\right),\label{eq:gaussian-eqn-nu3-1}
\end{align}
where the last inequality comes from (\ref{eq:gaussian-hypothesis-loo})
and (\ref{eq:gaussian-dist-bound}). Similarly we can see this bound
also holds for $|(\widehat{\boldsymbol{x}}^{t,(l)}-\bm{x}^{\star})^{\mathsf{H}}\boldsymbol{a}_{l}|$.
Therefore, 
\begin{align}
 & \left|\boldsymbol{b}_{l}^{\mathsf{H}}\widehat{\boldsymbol{h}}^{t,(l)}\widehat{\boldsymbol{x}}^{t,(l)\mathsf{H}}\boldsymbol{a}_{l}-\boldsymbol{b}_{l}^{\mathsf{H}}\boldsymbol{h}^{\star}\boldsymbol{x}^{\star\mathsf{H}}\boldsymbol{a}_{l}\right|\nonumber \\
 & \quad\leq\left|\boldsymbol{b}_{l}^{\mathsf{H}}\widehat{\boldsymbol{h}}^{t,(l)}\left(\widehat{\boldsymbol{x}}^{t,(l)}-\bm{x}^{\star}\right)^{\mathsf{H}}\boldsymbol{a}_{l}\right|+\left|\boldsymbol{b}_{l}^{\mathsf{H}}\big(\widehat{\boldsymbol{h}}^{t,(l)}-\bm{h}^{\star}\big)\bm{x}^{\star\mathsf{H}}\boldsymbol{a}_{l}\right|\nonumber \\
 & \quad\leq\left(\left|\boldsymbol{b}_{l}^{\mathsf{H}}\big(\widehat{\boldsymbol{h}}^{t,(l)}-\bm{h}^{\star}\big)\right|+\left|\boldsymbol{b}_{l}^{\mathsf{H}}\bm{h}^{\star}\right|\right)\cdot\left|\left(\widehat{\boldsymbol{x}}^{t,(l)}-\bm{x}^{\star}\right)^{\mathsf{H}}\boldsymbol{a}_{l}\right|+\left|\bm{b}_{l}^{\mathsf{H}}\big(\widehat{\boldsymbol{h}}^{t,(l)}-\bm{h}^{\star}\big)\right|\cdot\left|\bm{x}^{\star\mathsf{H}}\boldsymbol{a}_{l}\right|\nonumber \\
 & \quad\leq\left(C\left(\frac{\sqrt{mK\log^{3}m}}{m}+\frac{\sigma\sqrt{mK\log^{2}m}}{m}\right)+20\sqrt{\log m}\right)\cdot C\left(\frac{\sqrt{mK\log^{3}m}}{m}+\frac{\sigma\sqrt{K\log^{2}m}}{m}\right)\nonumber \\
 & \quad\qquad+C\left(\frac{\sqrt{mK\log^{3}m}}{m}+\frac{\sigma\sqrt{K\log^{2}m}}{m}\right)\cdot20\sqrt{\log m}\\
 & \quad\lesssim\frac{\sqrt{mK\log^{4}m}}{m}+\frac{\sigma\sqrt{mK\log^{3}m}}{m},\label{eq:diff-loo-1}
\end{align}
where the penultimate inequality follows from (\ref{eq:useful1})
and (\ref{eq:gaussian-eqn-nu3-1}). Substituting (\ref{eq:diff-loo-1})
into (\ref{eq:nu31-1}) and (\ref{eq:nu32-1}), we reach\begin{subequations}
\begin{align}
\nu_{31}+\nu_{32} & \lesssim\sqrt{K\log m}\cdot\left(\frac{\sqrt{mK\log^{4}m}}{m}+\frac{\sigma\sqrt{mK\log^{3}m}}{m}\right)\nonumber \\
 & \leq\frac{K\sqrt{m\log^{5}m}}{m}+\frac{\sigma K\sqrt{m\log^{4}m}}{m}.\label{eq:lem-loo-2-1}
\end{align}

\item Regarding $\nu_{33}$ and $\nu_{34}$, it can be seen that
\begin{align}
\nu_{33}=\left\Vert \xi_{l}\boldsymbol{b}_{l}\boldsymbol{a}_{l}^{\mathsf{H}}\widehat{\boldsymbol{x}}^{t,(l)}\right\Vert _{2} & \leq\left|\xi_{l}\right|\left\Vert \boldsymbol{b}_{l}\right\Vert _{2}\left|\boldsymbol{a}_{l}^{\mathsf{H}}\widehat{\boldsymbol{x}}^{t,(l)}\right|\overset{\text{(i)}}{\lesssim}\sigma\sqrt{K}\big\|\widehat{\boldsymbol{x}}^{t,(l)}\big\|_{2}\log m\leq2\sigma\sqrt{K}\log m,\label{eq:gaussian-loo-3}\\
\nu_{34}=\left\Vert \overline{\xi_{l}}\bm{a}_{l}\bm{b}_{l}^{\mathsf{H}}\widehat{\bm{h}}^{t,\left(l\right)}\right\Vert _{2} & \leq\left|\xi_{l}\right|\left\Vert \bm{a}_{l}\right\Vert _{2}\left|\bm{b}_{l}^{\mathsf{H}}\widehat{\bm{h}}^{t,\left(l\right)}\right|\overset{\text{(i)}}{\lesssim}\sigma\sqrt{K}\big\|\widehat{\boldsymbol{h}}^{t,(l)}\big\|_{2}\log m\leq2\sigma\sqrt{K}\log m,\label{eq:gaussian-loo-4}
\end{align}
\end{subequations}where (i) holds by (\ref{eq:useful1}), (\ref{eq:useful2})
and the independence between $\xi_{l},\bm{a}_{l}$, $\bm{b}_{l}$
and $\widehat{\boldsymbol{x}}^{t,(l)}$. 

Consequently, by \eqref{eq:nu31-1} and (\ref{eq:lem-loo-2-1})-(\ref{eq:gaussian-loo-4})
we have
\begin{equation}
\left\Vert \boldsymbol{\nu}_{3}\right\Vert _{2}\lesssim\frac{K\sqrt{m\log^{5}m}}{m}+\sigma\sqrt{K}\log m,\label{eq:prox-nu3-1-1-1}
\end{equation}
as long as $m\gg K\log^{2}m$. Then the proof follows the same line
of idea as Appendix \ref{subsec:Proof-of-Lemmaproximity},  resulting in a similar inequality as (\ref{eq:lemmaproximity-3}) as follows:
\begin{align*}
\mathsf{dist}\big(\boldsymbol{z}^{t+1,\left(l\right)},\widetilde{\boldsymbol{z}}^{t+1}\big) & \leq\left(1-c_{\eta}\right)\mathsf{dist}\big(\boldsymbol{z}^{t,\left(l\right)},\widetilde{\boldsymbol{z}}^{t}\big)+\eta C\left(\frac{K\sqrt{m\log^{5}m}}{m}+\sigma\sqrt{K}\log m\right)\\
 & \leq C\left(\frac{\sqrt{K\log^{3}m}}{m}+\frac{\sigma\sqrt{K\log^{2}m}}{m}\right),
\end{align*}
 provided that $\eta=c_{\eta}/m$ with $c_{\eta}>0$ being some sufficiently
small constant. The proof for (\ref{eq:gaussian-loo-2}) follows from the same argument
leading to (\ref{eq:lemproxlast}) and is thus omitted here for simplicity. 
\end{enumerate}

\subsection{Proof of Lemma \ref{lemma:gaussian-loo-initialization}\label{subsec:Proof-of-Lemma-gaussian-loo-initialization}}

Recall the definition of $\bm{M}$ and $\bm{M}^{(l)}$ under the Gaussian
design:
\[
\bm{M}\coloneqq\frac{1}{m}\sum_{j=1}^{m}y_{j}\bm{b}_{j}\bm{a}_{j}^{\mathsf{H}},\qquad\text{and}\qquad\bm{M}^{(l)}\coloneqq\frac{1}{m}\sum_{j\neq l}y_{j}\bm{b}_{j}\bm{a}_{j}^{\mathsf{H}}.
\]
 Applying Wedin's sin$\Theta$ theorem \citet[Theorem 2.1]{dopico2000note}
gives that for some universal constant $C'>0$, there holds
\[
\min_{\alpha\in\mathbb{C},\left|\alpha\right|=1}\left\{ \left\Vert \alpha\check{\bm{h}}^{0}-\check{\bm{h}}^{0,\left(l\right)}\right\Vert _{2}+\left\Vert \alpha\check{\bm{x}}^{0}-\check{\bm{x}}^{0,\left(l\right)}\right\Vert _{2}\right\} \leq C'\frac{\left\Vert \left(\bm{M}-\bm{M}^{\left(l\right)}\right)\check{\bm{x}}^{0,\left(l\right)}\right\Vert _{2}+\left\Vert \check{\bm{h}}^{0,\left(l\right)\mathsf{H}}\left(\bm{M}-\bm{M}^{\left(l\right)}\right)\right\Vert _{2}}{\sigma_{1}\left(\bm{M}^{\left(l\right)}\right)-\sigma_{2}\left(\bm{M}\right)}.
\]
By invoking Weyl's inequality, we obtain
\begin{align*}
\sigma_{1}\left(\bm{M}^{\left(l\right)}\right)-\sigma_{2}\left(\bm{M}\right) & \geq\sigma_{1}\left(\mathbb{E}\left[\bm{M}^{\left(l\right)}\right]\right)-\left\Vert \bm{M}^{\left(l\right)}-\mathbb{E}\left[\bm{M}^{\left(l\right)}\right]\right\Vert -\sigma_{2}\left(\mathbb{E}\left[\bm{M}\right]\right)-\left\Vert \bm{M}-\mathbb{E}\left[\bm{M}\right]\right\Vert \\
 & \overset{\text{(i)}}{\geq}\frac{3}{4}-\left\Vert \bm{M}^{\left(l\right)}-\mathbb{E}\left[\bm{M}^{\left(l\right)}\right]\right\Vert -\left\Vert \bm{M}-\mathbb{E}\left[\bm{M}\right]\right\Vert \overset{\text{(ii)}}{\geq}\frac{1}{2},
\end{align*}
where (i) is due to the facts that 
\begin{align*}
\sigma_{1}\left(\mathbb{E}\left[\bm{M}^{\left(l\right)}\right]\right)=\sigma_{1}\left(\frac{m-1}{m}\bm{h}^{\star}\bm{x}^{\star\mathsf{H}}\right)\geq\frac{3}{4},\qquad\text{and}\qquad\sigma_{2}\left(\mathbb{E}\left[\bm{M}\right]\right)=\sigma_{2}\left(\bm{h}^{\star}\bm{x}^{\star\mathsf{H}}\right)=0,
\end{align*}
and (ii) comes from Lemma \ref{lemma:gaussian-M}. Hence, one has
\begin{equation}
\min_{\alpha\in\mathbb{C},\left|\alpha\right|=1}\left\{ \left\Vert \alpha\check{\bm{h}}^{0}-\check{\bm{h}}^{0,\left(l\right)}\right\Vert _{2}+\left\Vert \alpha\check{\bm{x}}^{0}-\check{\bm{x}}^{0,\left(l\right)}\right\Vert _{2}\right\} \leq2C'\left(\left\Vert \left(\bm{M}-\bm{M}^{\left(l\right)}\right)\check{\bm{x}}^{0,\left(l\right)}\right\Vert _{2}+\left\Vert \check{\bm{h}}^{0,\left(l\right)\mathsf{H}}\left(\bm{M}-\bm{M}^{\left(l\right)}\right)\right\Vert _{2}\right).\label{eq:gaussian-base-norm-1}
\end{equation}
We are left with bounding the two terms on the right-hand side of (\ref{eq:gaussian-base-norm-1}). 
\begin{itemize}
\item Regarding the first term on the right-hand side of (\ref{eq:gaussian-base-norm-1}), we have 
\begin{align*}
\left\Vert \left(\bm{M}-\bm{M}^{\left(l\right)}\right)\check{\bm{x}}^{0,\left(l\right)}\right\Vert _{2} & =\left\Vert \frac{1}{m}\bm{b}_{l}\left(\bm{b}_{l}^{\mathsf{H}}\bm{h}^{\star}\bm{x}^{\star\mathsf{H}}\bm{a}_{l}+\xi_{l}\right)\bm{a}_{l}^{\mathsf{H}}\check{\bm{x}}^{0,\left(l\right)}\right\Vert _{2}\\
 & \leq\left\Vert \frac{1}{m}\bm{b}_{l}\bm{b}_{l}^{\mathsf{H}}\bm{h}^{\star}\bm{x}^{\star\mathsf{H}}\bm{a}_{l}\bm{a}_{l}^{\mathsf{H}}\check{\bm{x}}^{0,\left(l\right)}\right\Vert _{2}+\left\Vert \frac{1}{m}\xi_{l}\bm{b}_{l}\bm{a}_{l}^{\mathsf{H}}\check{\bm{x}}^{0,\left(l\right)}\right\Vert _{2}\\
 & =\frac{1}{m}\left\Vert \bm{b}_{l}\right\Vert _{2}\left|\bm{b}_{l}^{\mathsf{H}}\bm{h}^{\star}\right|\left|\bm{x}^{\star\mathsf{H}}\bm{a}_{l}\right|\left|\bm{a}_{l}^{\mathsf{H}}\check{\bm{x}}^{0,\left(l\right)}\right|+\frac{1}{m}\left|\xi_{l}\right|\left|\bm{a}_{l}^{\mathsf{H}}\check{\bm{x}}^{0,\left(l\right)}\right|\left\Vert \bm{b}_{l}\right\Vert _{2}\\
 & \leq\frac{1}{m}\cdot10\sqrt{K}\cdot\left(20\sqrt{\log m}\right)^{2}\cdot20\sqrt{\log m}+\frac{1}{m}\cdot20\sigma\sqrt{\log m}\cdot20\sqrt{\log m}\cdot10\sqrt{K}\\
 & \lesssim\frac{\sqrt{K\log^{3}m}}{m}+\frac{\sigma\sqrt{K\log^{2}m}}{m},
\end{align*}
where the second inequality is due to the triangle inequality; the
penultimate inequality comes from (\ref{eq:useful1}), (\ref{eq:useful2})
and the fact that with probability exceeding $1-O(m^{-100})$,
\[
\max_{1\leq l\leq m}\left|\bm{a}_{l}^{\mathsf{H}}\check{\bm{x}}^{0,\left(l\right)}\right|\leq20\sqrt{\log m},
\]
due to the independence between $\check{\bm{x}}^{0,\left(l\right)}$
and $\bm{a}_{l}$.
\item The second term on the right-hand side of (\ref{eq:gaussian-base-norm-1}) can be bounded in a similar fashion as follows
\begin{align*}
\left\Vert \check{\bm{h}}^{0,\left(l\right)\mathsf{H}}\left(\bm{M}-\bm{M}^{\left(l\right)}\right)\right\Vert _{2} & =\frac{1}{m}\left\Vert \check{\bm{h}}^{0,\left(l\right)\mathsf{H}}\bm{b}_{l}\left(\bm{b}_{l}^{\mathsf{H}}\bm{h}^{\star}\bm{x}^{\star\mathsf{H}}\bm{a}_{l}+\xi_{l}\right)\bm{a}_{l}^{\mathsf{H}}\right\Vert _{2}\\
 & \leq\frac{1}{m}\left\Vert \check{\bm{h}}^{0,\left(l\right)\mathsf{H}}\bm{b}_{l}\bm{b}_{l}^{\mathsf{H}}\bm{h}^{\star}\bm{x}^{\star\mathsf{H}}\bm{a}_{l}\bm{a}_{l}^{\mathsf{H}}\right\Vert _{2}+\frac{1}{m}\left\Vert \xi_{l}\check{\bm{h}}^{0,\left(l\right)\mathsf{H}}\bm{b}_{l}\bm{a}_{l}^{\mathsf{H}}\right\Vert _{2}\\
 & =\frac{1}{m}\cdot\left|\check{\bm{h}}^{0,\left(l\right)\mathsf{H}}\bm{b}_{l}\right|\left|\bm{b}_{l}^{\mathsf{H}}\bm{h}^{\star}\right|\left|\bm{x}^{\star\mathsf{H}}\bm{a}_{l}\right|\left\Vert \bm{a}_{l}^{\mathsf{H}}\right\Vert _{2}+\frac{1}{m}\left|\xi_{l}\right|\left|\check{\bm{h}}^{0,\left(l\right)\mathsf{H}}\bm{b}_{l}\right|\left\Vert \bm{a}_{l}^{\mathsf{H}}\right\Vert _{2}\\
 & \leq\frac{1}{m}\cdot20\sqrt{\log m}\cdot\left(20\sqrt{\log m}\right)^{2}\cdot10\sqrt{K}+\frac{1}{m}\cdot20\sigma\sqrt{\log m}\cdot20\sqrt{\log m}\cdot10\sqrt{K}\\
 & \lesssim\frac{\sqrt{K\log^{3}m}}{m}+\frac{\sigma\sqrt{K\log^{2}m}}{m},
\end{align*}
where the penultimate inequality comes from (\ref{eq:useful1}), (\ref{eq:useful2})
and the fact that 
\[
\max_{1\leq l\leq m}\left|\check{\bm{h}}^{0,\left(l\right)\mathsf{H}}\bm{b}_{l}\right|\leq20\sqrt{\log m}
\]
		holds with probability exceeding $1-O(m^{-100})$ (due to the independence between $\check{\bm{h}}^{0,\left(l\right)}$
		and $\bm{b}_{l}$).
\end{itemize}
Plugging the above two bounds into (\ref{eq:gaussian-base-norm-1})
leads to
\[
\min_{\alpha\in\mathbb{C},\left|\alpha\right|=1}\left\{ \left\Vert \alpha\check{\bm{h}}^{0}-\check{\bm{h}}^{0,\left(l\right)}\right\Vert _{2}+\left\Vert \alpha\check{\bm{x}}^{0}-\check{\bm{x}}^{0,\left(l\right)}\right\Vert _{2}\right\} \leq\widetilde{C}\left(\frac{\sqrt{K\log^{3}m}}{m}+\frac{\sigma\sqrt{K\log^{2}m}}{m}\right),
\]
for some universal constant $\widetilde{C}>0$. To convert this bound
into the desired version, we can employ the same argument connecting \citet[Eqn (240)]{ma2017implicit}
to \citet[Eqn (245)]{ma2017implicit}. The details are omitted here
for brevity.

\section{Analysis under Gaussian design: connections between convex and nonconvex solutions\label{appendix:Proof-of-Theorem-gaussian-cvx}}

\subsection{Preliminaries}\label{subsec:gaussian-preliminary}
Here, we state below a few elementary technical lemmas that prove useful in the proof. 
%
To begin with, we show that the operator $\mathcal{A}$ is well-controlled
in this case, whose counterpart in the Fourier design is Lemma \ref{lemma:normbound}.

\begin{lemma}\label{lemma:gaussian-operator}For the operator $\mathcal{A}$ defined
under the Gaussian setting, we have, with probability at least $1-O(m^{-10})$, that
\[
\left\Vert \mathcal{A}\right\Vert \leq10\sqrt{mK\log m}.
\]
\end{lemma}

\begin{proof}Denote 
\[
\bm{A}\coloneqq\left[\begin{array}{c}
\bm{a}_{1}^{\top}\\
\vdots\\
\bm{a}_{m}^{\top}
\end{array}\right]\in\mathbb{C}^{m\times K},\qquad\bm{B}\coloneqq\left[\begin{array}{c}
\bm{b}_{1}^{\top}\\
\vdots\\
\bm{b}_{m}^{\top}
\end{array}\right]\in\mathbb{C}^{m\times K}.
\]
We can rewrite $\mathcal{A}$ in matrix form as follows
\[
\mathcal{A}\left(\bm{Z}\right)=\left\{ \bm{b}_{j}^{\mathsf{H}}\bm{Z}\bm{a}_{j}\right\} _{j=1}^{m}=\left[\begin{array}{cccc}
\text{diag}\left(\bm{A}_{:,1}\right)\bm{B} & \text{diag}\left(\bm{A}_{:,2}\right)\bm{B} & \cdots & \text{diag}\left(\bm{A}_{:,K}\right)\bm{B}\end{array}\right]\text{vec}\left(\bm{Z}\right).
\]
This allows one to express and obtain
\begin{align*}
\left\Vert \mathcal{A}\right\Vert ^{2} & =\left\Vert \left[\begin{array}{cccc}
\text{diag}\left(\bm{A}_{:,1}\right)\bm{B} & \text{diag}\left(\bm{A}_{:,2}\right)\bm{B} & \cdots & \text{diag}\left(\bm{A}_{:,K}\right)\bm{B}\end{array}\right]\right\Vert ^{2}\\
 & \leq\left\Vert \bm{B}\right\Vert ^{2}\cdot\sum_{i=1}^{K}\left\Vert \text{diag}\left(\bm{A}_{:,i}\right)\right\Vert ^{2}\\
 & \leq\left\Vert \sum_{j=1}^{m}\bm{b}_{j}\bm{b}_{j}^{\mathsf{H}}\right\Vert \cdot K\max_{1\leq i\leq K}\max_{1\leq j\leq m}\left|\bm{A}_{i,j}\right|^{2}\\
 & \leq2m\cdot K\cdot20\log m
\end{align*}
with probability at least $1-O(m^{-100})$.
\end{proof}

Next, the following lemma corresponds to Lemma \ref{lem:strengthen}
under the Fourier design. Its proof is deferred to Appendix \ref{subsec:Proof-of-Lemma-gaussian-projection}. 

\begin{lemma}\label{lemma:gaussian-projection}Suppose that $T$
is the tangent space of $\bm{h}\bm{x}^{\mathsf{H}}$ with $\Vert\bm{h}\Vert_{2}=\Vert\bm{x}\Vert_{2}=1$
and $m\geq CK\log^{2}m$ for some sufficiently large constant $C>0$.
Then there exists some sufficiently large constant $C'>0$
such that 
\[
\left\Vert \mathcal{P}_{T}\mathcal{A}^{*}\mathcal{A}\mathcal{P}_{T}-m\mathcal{P}_{T}\right\Vert \leq C'\sqrt{mK\log m}
\]
 with probability exceeding $1-O(m^{-10})$.
\end{lemma}

\subsection{Proof of Theorem \ref{theorem:gaussian-cvx}}
In this section, we proceed to prove Theorem \ref{theorem:gaussian-cvx}
by connecting the convex minimizer with nonconvex iterates, in the
same vein as in the Fourier design case (cf.~Appendix \ref{appendix:cvx}). 
To begin with, a lemma
stating the results of Algorithm \ref{alg:gd-bd-ncvx-2} under the
Gaussian design is listed below. 
\begin{lemma}Take $\lambda=C_{\lambda}\sigma\sqrt{mK\log m}$ for
	some sufficiently large constant $C_{\lambda}>0$. Suppose that Assumption \ref{assumptions:models-gausssian} holds. Assume the number
	of measurements obeys $m\geq CK\log^{6}m$ for some sufficiently
	large constant $C>0$ and the noise satisfies $\sigma\sqrt{K\log^5/ m}\leq c$
	for some sufficiently small constant $c>0$. Let stepsize $\eta$ be $c_{\eta}/m$ for some sufficiently small constant $c_{\eta}>0$. Then, with probability
	at least $1-O\left(m^{-100}+me^{-K}\right)$,
	the iterates $\left\{ \boldsymbol{h}^{t},\boldsymbol{x}^{t}\right\} _{0<t\leq t_{0}}$
	of Algorithm \ref{alg:gd-bd-ncvx-2} satisfy \begin{subequations}
		\begin{align}
		\mathsf{dist}\left(\boldsymbol{z}^{t},\boldsymbol{z}^{\star}\right) & \leq\rho\mathsf{dist}\left(\bm{z}^{t-1},\bm{z}^{\star}\right)+C_{11}\eta\left(\lambda+\sigma\sqrt{mK\log m}\right),\label{eq:gaussian-cvx-dist}\\
		\mathsf{dist}\big(\boldsymbol{z}^{t,\left(l\right)},\widetilde{\bm{z}}^{t}\big) & \leq C_{12}\frac{\sigma\sqrt{K\log^{2}m}}{m},\label{eq:gaussian-cvx-loo}\\
		\max_{1\leq l\leq m}\left\Vert \widetilde{\bm{z}}^{t,\left(l\right)}-\widetilde{\bm{z}}^{t}\right\Vert _{2} & \lesssim C_{12}\frac{\sigma\sqrt{K\log^{2}m}}{m},\label{eq:gaussian-cvx-loo-2}\\
		\max_{1\leq l\leq m}\left|\boldsymbol{a}_{l}^{\mathsf{H}}\left(\widetilde{\bm{x}}^{t}-\boldsymbol{x}^{\star}\right)\right| & \leq C_{13}\frac{\sigma\sqrt{mK\log^{2}m}}{m},\label{eq:gaussian-cvx-incoh-a}\\
		\max_{1\leq l\leq m}\big|\boldsymbol{b}_{l}^{\mathsf{H}}\left(\widetilde{\bm{h}}^{t}-\bm{h}^{\star}\right)\big| & \leq C_{13}\frac{\sigma\sqrt{mK\log^{2}m}}{m} \label{eq:gaussian-cvx-incoh-b}
		\end{align}
		for any $0<t\leq t_{0}$, where $\rho=1-c_{\rho}c_{\eta}$
		for some small constant $c_{\rho}>0$, and we take $t_{0}=m^{20}$.
		Here, $C_{11}$, $C_{12}$ and $C_{13}$ are positive constants.
		Additionally, one has 
		\begin{equation}
		\min_{0\leq t\leq t_{0}}\left\Vert \nabla f\left(\boldsymbol{h}^{t},\boldsymbol{x}^{t}\right)\right\Vert _{2}\leq\frac{\lambda}{m^{10}}.\label{eq:gaussian-cvx-small-gradient}
		\end{equation}
		
	\end{subequations}
	
\end{lemma}

(\ref{eq:gaussian-cvx-dist})-(\ref{eq:gaussian-cvx-incoh-b}) can
be seen as direct consequences from our analysis in Appendix \ref{appendix:gaussian-ncvx},
while (\ref{eq:gaussian-cvx-small-gradient}) can be derived by following
the proof in Appendix \ref{subsec:Proof-of-the-small-gradient}.
Hence, we do not repeat the proof here for brevity. 

Similar to Conditions \ref{condition:fourier-lambda} and \ref{condition:fourier-inj}, we single out two critical conditions on the operators under Assumption \ref{assumptions:models-gausssian}. The first condition below requires the regularization parameter $\lambda$ to be large enough, so as to dominate a certain form of noise and the deviation of $\mathcal{T}\left(\boldsymbol{h}\boldsymbol{x}^{\mathsf{H}}-\boldsymbol{h}^{\star}\boldsymbol{x}^{\star\mathsf{H}}\right)$ from its mean $m\left(\boldsymbol{h}\boldsymbol{x}^{\mathsf{H}}-\boldsymbol{h}^{\star}\boldsymbol{x}^{\star\mathsf{H}}\right)$. 
\begin{condition}\label{condition:gaussian-lambda}
	The regularization parameter $\lambda$ satisfies
	\begin{enumerate}
		\item $\left\Vert \mathcal{T}\left(\boldsymbol{h}\boldsymbol{x}^{\mathsf{H}}-\boldsymbol{h}^{\star}\boldsymbol{x}^{\star\mathsf{H}}\right)-m\left(\boldsymbol{h}\boldsymbol{x}^{\mathsf{H}}-\boldsymbol{h}^{\star}\boldsymbol{x}^{\star\mathsf{H}}\right)\right\Vert <\lambda/8.$
		\item $\left\Vert \mathcal{A}^{*}\left(\bm{\xi}\right)\right\Vert \leq c\lambda$ for some small constant $c>0$.
	\end{enumerate}
\end{condition}
The second condition is concerned with the injectivity property of $\mathcal{A}$.
\begin{condition}\label{condition:gaussian-inj}
	Let $T$ be the tangent space of $\boldsymbol{h}\boldsymbol{x}^{\mathsf{H}}$. Then for all $\boldsymbol{Z}\in T$, one has
	$$\left\Vert \mathcal{A}\left(\boldsymbol{Z}\right)\right\Vert _{2}^{2}\geq\frac{m}{16}\left\Vert \boldsymbol{Z}\right\Vert _{\mathrm{F}}^{2}.$$
\end{condition}

Armed with these two conditions, the following lemma reveals how an approximate nonconvex optimizer can serve as a proxy of the convex minimizer. The proof of this lemma can be developed in the same manner as in Appendix \ref{subsec:Proof-of-Lemmaproximity}; the details are omitted here for brevity.
\begin{lemma}\label{lemma:gaussian-proximity-cvx-ncvx}Suppose that $\left(\boldsymbol{h},\boldsymbol{x}\right)$
	obeys\begin{subequations}\label{eq:gaussian-hx-properties}
		\begin{equation}
			\left\Vert \nabla f\left(\boldsymbol{h},\boldsymbol{x}\right)\right\Vert _{2}\leq C\frac{\lambda}{m^{10}}
			\label{eq:gaussian-small-gradient}
		\end{equation}
	\end{subequations}for some constants $C>0$. Then under Conditions \ref{condition:gaussian-lambda} and \ref{condition:gaussian-inj}, 
	any minimizer $\boldsymbol{Z}_{\mathsf{cvx}}$ of the convex problem
	(\ref{eq:objcvx}) satisfies
	\[
	\left\Vert \boldsymbol{h}\boldsymbol{x}^{\mathsf{H}}-\boldsymbol{Z}_{\mathsf{cvx}}\right\Vert _{\mathrm{F}}\lesssim\left\Vert \nabla f\left(\boldsymbol{h},\boldsymbol{x}\right)\right\Vert _{2}.
	\]
\end{lemma}
Consequently, the conclusions in Theorem \ref{theorem:gaussian-cvx} can be easily derived from Lemma \ref{lemma:gaussian-proximity-cvx-ncvx} by similar calculations as proof of Theorem \ref{theorem:cvx} in Appendix \ref{sec:Proof-outline-for-theorem-cvx}, and thus omitted here for brevity. 

It remains to demonstrate that Conditions \ref{condition:gaussian-lambda} and \ref{condition:gaussian-inj} hold with high probability under the sample size and noise level conditions  \eqref{eq:gaussian-assumptions-sample-noise}. We start with the first point in Condition \ref{condition:gaussian-lambda}.  Its proof can be directly adaptated from the proof in Appendix  \ref{subsec:Proof-of-Lemma8innoisy}, and thus omitted here for simplicity.
\begin{lemma}\label{lemma:T-uniform-mean-gaussian}Suppose that the
	sample complexity satisfies $m\geq CK\log^{4}m$ for some sufficiently
	large constant $C>0$. Take $\lambda=C_{\lambda}\sigma\sqrt{mK\log m}$
	for some large enough constant $C_{\lambda}>0$. Then with probability
	at least $1-O\left(m^{-10}+me^{-CK}\right)$, we have
	\[
	\left\Vert \mathcal{T}\left(\boldsymbol{h}\boldsymbol{x}^{\mathsf{H}}-\boldsymbol{h}^{\star}\boldsymbol{x}^{\star\mathsf{H}}\right)-m\left(\boldsymbol{h}\boldsymbol{x}^{\mathsf{H}}-\boldsymbol{h}^{\star}\boldsymbol{x}^{\star\mathsf{H}}\right)\right\Vert <\lambda/8 
	\]
	simultaneously for any $\left(\boldsymbol{h},\boldsymbol{x}\right)$
	obeying (\ref{eq:hx-properties}) and (\ref{eq:hx-properties2}).
\end{lemma}
The next lemma corresponds to the second point in Condition \ref{condition:gaussian-lambda}. 
\begin{lemma}\label{lemma:gaussian-noise}
	Suppose that Assumption \ref{assumptions:models-gausssian} holds and $m\geq CK\log^{5}m$ for some sufficiently large constant
	$C>0$. Then one has 
	\begin{equation}
	\left\Vert \mathcal{A}^{*}\left(\bm{\xi}\right)\right\Vert \lesssim\sigma\sqrt{mK\log m} \label{eq:gaussian-noise}
	\end{equation}
	holds with probability exceeding $1-O(m^{-10})$. 
	
\end{lemma}\begin{proof}See Appendix \ref{subsec:Proof-of-Lemmagaussian-noise}.
\end{proof}
Turning attention to Condition \ref{condition:gaussian-inj}, we have the following lemma. 
\begin{lemma}\label{lemma:gaussian-inj}Suppose that the sample complexity
	satisfies $m\geq CK\log m$ for some sufficiently large constant $C>0$.
	Then with probability at least $1-O\left(m^{-10}\right)$,
	\[
	\left\Vert \mathcal{A}\left(\boldsymbol{Z}\right)\right\Vert _{2}^{2}\geq\frac{m}{16}\left\Vert \boldsymbol{Z}\right\Vert _{\mathrm{F}}^{2},\quad\forall\boldsymbol{Z}\in T
	\]
	holds simultaneously for all $T$ for which the associated point $\left(\boldsymbol{h},\boldsymbol{x}\right)$
	obeys (\ref{eq:hx-properties}) and (\ref{eq:hx-properties2}). Here,
	$T$ denotes the tangent space of $\boldsymbol{h}\boldsymbol{x}^{\mathsf{H}}$.\end{lemma}
The proof is a direct adaptation from Appendix \ref{subsec:Proof-of-Lemma-inj} and thus omitted for brevity.

\subsection{Proof of Lemma \ref{lemma:gaussian-projection}\label{subsec:Proof-of-Lemma-gaussian-projection}}
The framework and notation adopted here are similar to \citet[Section 5.2]{ahmed2013blind}. To facilitate the proof, we introduce an operator for $\bm{x}_{1},\bm{x}_{2},\bm{y}_{1},\bm{y}_{2}\in\mathbb{C}^{K}$ as follows:
\[
\bm{x}_{1}\bm{y}_{1}^{\mathsf{H}}\otimes\bm{x}_{2}\bm{y}_{2}^{\mathsf{H}}\coloneqq\left\{ \overline{y_{1i}}y_{1k}\bm{x}_{1}\bm{x}_{2}^{\mathsf{H}}\right\} _{i,k}\in\mathbb{C}^{K^{2}\times K^{2}}.
\]
Denote by $\bm{v}_{j}=\left\langle \bm{h},\bm{b}_{j}\right\rangle \bm{a}_{j}$
and $\bm{u}_{j}=\left\langle \bm{x},\bm{a}_{j}\right\rangle \left(\bm{I}_{K}-\bm{h}\bm{h}^{\mathsf{H}}\right)\bm{b}_{j}$.
Then we can rewrite the operator $\mathcal{P}_{T}\mathcal{A}^{*}\mathcal{A}\mathcal{P}_{T}:\mathbb{C}^{K\times K}\rightarrow\mathbb{C}^{K\times K}$ as the following matrix
\[
\mathcal{Q}\coloneqq\sum_{j=1}^{m}\left(\bm{h}\bm{v}_{j}^{\mathsf{H}}\otimes\bm{h}\bm{v}_{j}^{\mathsf{H}}+\bm{h}\bm{v}_{j}^{\mathsf{H}}\otimes\bm{u}_{j}\bm{x}^{\mathsf{H}}+\bm{u}_{j}\bm{x}^{\mathsf{H}}\otimes\bm{h}\bm{v}_{j}^{\mathsf{H}}+\bm{u}_{j}\bm{x}^{\mathsf{H}}\otimes\bm{u}_{j}\bm{x}^{\mathsf{H}}\right)\in\mathbb{C}^{K^{2}\times K^{2}},
\]
which satisfies
\[\text{vec}\left(\mathcal{P}_{T}\mathcal{A}^{*}\mathcal{A}\mathcal{P}_{T}(\bm X)\right)=\mathcal{Q}\text{vec}\left(\bm{X}\right) \]
for any $\bm{X}\in\mathbb{C}^{K\times K}$.
This implies that 
\begin{align}
& \left\Vert \mathcal{P}_{T}\mathcal{A}^{*}\mathcal{A}\mathcal{P}_{T}-m\mathcal{P}_{T}\right\Vert \notag\\
& \quad=  \left\Vert \mathcal{P}_{T}\mathcal{A}^{*}\mathcal{A}\mathcal{P}_{T}-\mathbb{E}\left[\mathcal{P}_{T}\mathcal{A}^{*}\mathcal{A}\mathcal{P}_{T}\right]\right\Vert= \left\Vert \mathcal{Q}-\mathbb{E}\left[\mathcal{Q}\right]\right\Vert \notag\\
&\quad\leq\underbrace{\left\Vert\sum_{j=1}^{m}\left(\bm{h}\bm{v}_{j}^{\mathsf{H}}\otimes\bm{h}\bm{v}_{j}^{\mathsf{H}}-\mathbb{E}\left[\bm{h}\bm{v}_{j}^{\mathsf{H}}\otimes\bm{h}\bm{v}_{j}^{\mathsf{H}}\right]\right)\right\Vert }_{\beta_1} +\underbrace{\left\Vert\sum_{j=1}^{m}\left(\bm{h}\bm{v}_{j}^{\mathsf{H}}\otimes\bm{u}_{j}\bm{x}^{\mathsf{H}}-\mathbb{E}\left[\bm{h}\bm{v}_{j}^{\mathsf{H}}\otimes\bm{u}_{j}\bm{x}^{\mathsf{H}}\right]\right)\right\Vert}_{\beta_2} \nonumber \\
&\qquad+\underbrace{\left\Vert\sum_{j=1}^{m}\left(\bm{u}_{j}\bm{x}^{\mathsf{H}}\otimes\bm{h}\bm{v}_{j}^{\mathsf{H}}-\bm{u}_{j}\bm{x}^{\mathsf{H}}\otimes\bm{h}\bm{v}_{j}^{\mathsf{H}}\right)\right\Vert}_{\beta_3}+ +\underbrace{\left\Vert\sum_{j=1}^{m}\left(\bm{u}_{j}\bm{x}^{\mathsf{H}}\otimes\bm{u}_{j}\bm{x}^{\mathsf{H}}-\mathbb{E}\left[\bm{u}_{j}\bm{x}^{\mathsf{H}}\otimes\bm{u}_{j}\bm{x}^{\mathsf{H}}\right]\right)\right\Vert}_{\beta_4} \label{eq:lemma33-1} 
\end{align}
In the sequel, we consider the four terms on the right-hand side of \eqref{eq:lemma33-1} separately. 

\paragraph{Controlling $\beta_1$.}
Regarding the first term $\beta_1$, we denote
\[
\bm{Z}_{j}\coloneqq\bm{h}\bm{v}_{j}^{\mathsf{H}}\otimes\bm{h}\bm{v}_{j}^{\mathsf{H}}-\mathbb{E}\left[\bm{h}\bm{v}_{j}^{\mathsf{H}}\otimes\bm{h}\bm{v}_{j}^{\mathsf{H}}\right].
\]
Then one has 
\begin{align*}
\left\Vert \left\Vert \bm{Z}_{j}\right\Vert \right\Vert _{\psi_{1}} & =\left\Vert \left\Vert \left\{ \left(\left|\left\langle \bm{h},\bm{b}_{j}\right\rangle \right|^{2}\overline{a_{ji}}a_{jk}-\delta_{ik}\right)\bm{h}\bm{h}^{\mathsf{H}}\right\} _{i,k}\right\Vert \right\Vert _{\psi_{1}}\\
 & \leq\left\Vert \bm{h}\bm{h}^{\mathsf{H}}\right\Vert \cdot\left\Vert \left\Vert \left|\left\langle \bm{h},\bm{b}_{j}\right\rangle \right|^{2}\bm{a}_{j}\bm{a}_{j}^{\mathsf{H}}-\bm{I}\right\Vert \right\Vert _{\psi_{1}}\\
 & \overset{(\text{i})}{\leq}\left\Vert \max\left\{ \left|\left\langle \bm{h},\bm{b}_{j}\right\rangle \right|^{2}\cdot\left\Vert \bm{a}_{j}\right\Vert _{2}^{2},1\right\} \right\Vert _{\psi_{1}}\\
 & \leq\left|\left\langle \bm{h},\bm{b}_{j}\right\rangle \right|^{2}\cdot\Big\|\|\boldsymbol{a}_{j}\|_{2}\Big\|_{\psi_{2}}^{2}+1\\
 & \overset{(\text{ii})}{\leq}CK\log m,
\end{align*}
where (i) is due to the fact that $\Vert\bm{h}\bm{h}^{\mathsf{H}}\Vert=\Vert\bm{h}\Vert_{2}^{2}=1$;
(ii) uses (\ref{eq:useful1}) and $\big\|\Vert\bm{b}_{j}\Vert_{2}\big\|_{\psi_{2}}\lesssim\sqrt{K}$
(cf.~\citet[Theorem 3.1.1]{vershynin2018high}). To compute the variance
term $\mathbb{E}[\bm{Z}_{j}^{\mathsf{H}}\bm{Z}_{j}]$ and $\mathbb{E}[\bm{Z}_{j}\bm{Z}_{j}^{\mathsf{H}}]$,
we express the operation of $\bm{Z}_{j}$ on a matrix $\bm{X}$ as
\[
\bm{Z}_{j}\left(\bm{X}\right)=\left|\left\langle \bm{h},\bm{b}_{j}\right\rangle \right|^{2}\bm{h}\bm{h}^{\mathsf{H}}\bm{X}\bm{a}_{j}\bm{a}_{j}^{\mathsf{H}}-\left\Vert \bm{h}\right\Vert _{2}^{2}\bm{h}\bm{h}^{\mathsf{H}}\bm{X},
\]
and hence
\[
\bm{Z}_{j}^{\mathsf{H}}\bm{Z}_{j}\left(\bm{X}\right)=\left|\left\langle \bm{h},\bm{b}_{j}\right\rangle \right|^{4}\left\Vert \bm{h}\right\Vert _{2}^{2}\left\Vert \bm{a}_{j}\right\Vert _{2}^{2}\bm{h}\bm{h}^{\mathsf{H}}\bm{X}\bm{a}_{j}\bm{a}_{j}^{\mathsf{H}}-2\left|\left\langle \bm{h},\bm{b}_{j}\right\rangle \right|^{2}\left\Vert \bm{h}\right\Vert _{2}^{4}\bm{h}\bm{h}^{\mathsf{H}}\bm{X}\bm{a}_{j}\bm{a}_{j}^{\mathsf{H}}+\left\Vert \bm{h}\right\Vert _{2}^{6}\bm{h}\bm{h}^{\mathsf{H}}\bm{X}.
\]
Then one has
\[
\mathbb{E}\left[\bm{Z}_{j}^{\mathsf{H}}\bm{Z}_{j}\left(\bm{X}\right)\right]=3\left(K+2\right)\left\Vert \bm{h}\right\Vert _{2}^{4}\bm{h}\bm{h}^{\mathsf{H}}\bm{X}-2\left\Vert \bm{h}\right\Vert _{2}^{6}\bm{h}\bm{h}^{\mathsf{H}}\bm{X}+\left\Vert \bm{h}\right\Vert _{2}^{6}\bm{h}\bm{h}^{\mathsf{H}}\bm{X}=\left(3K+5\right)\left\Vert \bm{h}\right\Vert _{2}^{4}\bm{h}\bm{h}^{\mathsf{H}}\bm{X}.
\]
Similarly, one can derive that 
\[
\mathbb{E}\left[\bm{Z}_{j}\bm{Z}_{j}^{\mathsf{H}}\left(\bm{X}\right)\right]=\mathbb{E}\left[\bm{Z}_{j}^{\mathsf{H}}\bm{Z}_{j}\left(\bm{X}\right)\right]=\left(3K+5\right)\left\Vert \bm{h}\right\Vert _{2}^{4}\bm{h}\bm{h}^{\mathsf{H}}\bm{X}, 
\]
thus indicating that 
\[
\sigma_{\bm{Z}}:=\max\left\{ \left\Vert \sum_{j=1}^{m}\mathbb{\mathbb{E}}\big[\bm{Z}_{j}^{\mathsf{H}}\bm{Z}_{j}\big]\right\Vert ^{1/2},\left\Vert \sum_{j=1}^{m}\mathbb{\mathbb{E}}\big[\bm{Z}_{j}\bm{Z}_{j}^{\mathsf{H}}\big]\right\Vert ^{1/2}\right\} \leq\sqrt{\left(3K+5\right)m\left\Vert \bm{h}\right\Vert _{2}^{6}}.
\]
By the matrix Bernstein inequality \citet[Proposition 2]{koltchinskii2011nuclear},
one has 
\begin{equation}
\left\Vert \sum_{j=1}^{m}\bm{Z}_{j}\right\Vert \lesssim\sigma_{\bm{Z}}\sqrt{\log m}+B_{\bm{Z}}\log\left(\frac{B_{\bm{Z}}\sqrt{m}}{\sigma_{\bm{Z}}}\right)\log m\lesssim\sqrt{mK\log m}\label{eq:gaussian-projection-1}
\end{equation}
with high probability. 

\paragraph{Controlling $\beta_2$.}
When it comes to the second term $\beta_2$, we first set 
\[
\bm{H}_{j}\coloneqq\bm{h}\bm{v}_{j}^{\mathsf{H}}\otimes\bm{u}_{j}\bm{x}^{\mathsf{H}}-\mathbb{E}\left[\bm{h}\bm{v}_{j}^{\mathsf{H}}\otimes\bm{u}_{j}\bm{x}^{\mathsf{H}}\right],
\]
which satisfies 
\begin{align*}
\left\Vert \bm{H}_{j}\right\Vert  & =\left\Vert \left\{ \overline{\left\langle \bm{h},\bm{b}_{j}\right\rangle }\overline{\left\langle \bm{x},\bm{a}_{j}\right\rangle }\overline{a_{ji}}x_{k}\bm{h}\bm{b}_{j}^{\mathsf{H}}\left(\bm{I}_{K}-\bm{h}\bm{h}^{\mathsf{H}}\right)\right\} _{i,k}\right\Vert \\
 & \leq\left\Vert \left\langle \bm{h},\bm{b}_{j}\right\rangle \left\langle \bm{x},\bm{a}_{j}\right\rangle \bm{a}_{j}\bm{x}^{\mathsf{H}}\right\Vert \cdot\left\Vert \bm{h}\bm{b}_{j}^{\mathsf{H}}\left(\bm{I}_{K}-\bm{h}\bm{h}^{\mathsf{H}}\right)\right\Vert \\
 & \leq\left|\left\langle \bm{h},\bm{b}_{j}\right\rangle \right|\cdot\left\Vert \left\langle \bm{x},\bm{a}_{j}\right\rangle \bm{a}_{j}\right\Vert \cdot\left\Vert \bm{x}\right\Vert _{2}\cdot\left\Vert \bm{h}\right\Vert _{2}\cdot\left\Vert \bm{b}_{j}\right\Vert _{2}\cdot\left\Vert \bm{I}_{K}-\bm{h}\bm{h}^{\mathsf{H}}\right\Vert \\
 & \leq\left|\left\langle \bm{h},\bm{b}_{j}\right\rangle \right|\cdot\left|\left\langle \bm{x},\bm{a}_{j}\right\rangle \right|\cdot\left\Vert \bm{a}_{j}\right\Vert _{2}\cdot\left\Vert \bm{x}\right\Vert _{2}\cdot\left\Vert \bm{h}\right\Vert _{2}\cdot\left\Vert \bm{b}_{j}\right\Vert _{2}\cdot\left\Vert \bm{I}_{K}-\bm{h}\bm{h}^{\mathsf{H}}\right\Vert 
\end{align*}
By employing $\Vert\bm{h}\Vert_{2}=\Vert\bm{x}\Vert_{2}=1$, (\ref{eq:useful1})
and $\big\|\Vert\bm{a}_{j}\Vert_{2}\big\|_{\psi_{2}}=\big\|\Vert\bm{b}_{j}\Vert_{2}\big\|_{\psi_{2}}\lesssim\sqrt{K}$
(cf.~\citet[Theorem 3.1.1]{vershynin2018high}), we obtain
\begin{align*}
\left\Vert \left\Vert \bm{H}_{j}\right\Vert \right\Vert _{\psi_{1}} & \leq CK\log m.
\end{align*}

Next, let us consider the operation of $\bm{H}_{j}$ and $\bm{H}_{j}^{\mathsf{H}}$
on $\bm{X}$, which obeys
\begin{align*}
\bm{H}_{j}\left(\bm{X}\right) & =\overline{\left\langle \bm{h},\bm{b}_{j}\right\rangle }\overline{\left\langle \bm{x},\bm{a}_{j}\right\rangle }\bm{h}\bm{b}_{j}^{\mathsf{H}}\left(\bm{I}_{K}-\bm{h}\bm{h}^{\mathsf{H}}\right)\bm{X}\bm{x}\bm{a}_{j}^{\mathsf{H}},\\
\bm{H}_{j}^{\mathsf{H}}\left(\bm{X}\right) & =\left\langle \bm{h},\bm{b}_{j}\right\rangle \left\langle \bm{x},\bm{a}_{j}\right\rangle \left(\bm{I}_{K}-\bm{h}\bm{h}^{\mathsf{H}}\right)\bm{b}_{j}\bm{h}^{\mathsf{H}}\bm{X}\bm{a}_{j}\bm{x}^{\mathsf{H}}.
\end{align*}
Consequently, one can deduce that
\begin{align*}
\bm{H}_{j}\bm{H}_{j}^{\mathsf{H}}\left(\bm{X}\right) & =\left|\left\langle \bm{h},\bm{b}_{j}\right\rangle \right|^{2}\left|\left\langle \bm{x},\bm{a}_{j}\right\rangle \right|^{2}\left\Vert \bm{x}\right\Vert _{2}^{2}\bm{h}\bm{b}_{j}^{\mathsf{H}}\left(\bm{I}_{K}-\bm{h}\bm{h}^{\mathsf{H}}\right)\bm{b}_{j}\bm{h}^{\mathsf{H}}\bm{X}\bm{a}_{j}\bm{a}_{j}^{\mathsf{H}},
\end{align*}
and 
\begin{align*}
\bm{H}_{j}^{\mathsf{H}}\bm{H}_{j}\left(\bm{X}\right) & =\left|\left\langle \bm{h},\bm{b}_{j}\right\rangle \right|^{2}\left|\left\langle \bm{x},\bm{a}_{j}\right\rangle \right|^{2}\left\Vert \bm{a}_{j}\right\Vert _{2}^{2}\left\Vert \bm{h}\right\Vert _{2}^{2}\left(\bm{I}_{K}-\bm{h}\bm{h}^{\mathsf{H}}\right)\bm{b}_{j}\bm{b}_{j}^{\mathsf{H}}\left(\bm{I}_{K}-\bm{h}\bm{h}^{\mathsf{H}}\right)\bm{X}\bm{x}\bm{x}^{\mathsf{H}}.
\end{align*}
It follows that their expectations are 
\begin{align*}
\mathbb{E}\left[\bm{H}_{j}\bm{H}_{j}^{\mathsf{H}}\left(\bm{X}\right)\right] & =\left[\left(K+2\right)\left\Vert \bm{h}\right\Vert _{2}^{2}-3\left\Vert \bm{h}\right\Vert _{2}^{4}\right]\bm{h}\bm{h}^{\mathsf{H}}\bm{X}\left(2\bm{x}\bm{x}^{\mathsf{H}}+\left\Vert \bm{x}\right\Vert _{2}^{2}\bm{I}_{K}\right),
\end{align*}
and
\begin{align*}
\mathbb{E}\left[\bm{H}_{j}^{\mathsf{H}}\bm{H}_{j}\left(\bm{X}\right)\right] & =\left(K+2\right)\left\Vert \bm{h}\right\Vert _{2}^{2}\left\Vert \bm{x}\right\Vert _{2}^{2}\left(\bm{I}_{K}-\bm{h}\bm{h}^{\mathsf{H}}\right)\left(2\bm{h}\bm{h}^{\mathsf{H}}+\left\Vert \bm{h}\right\Vert _{2}^{2}\bm{I}_{K}\right)\left(\bm{I}_{K}-\bm{h}\bm{h}^{\mathsf{H}}\right)\bm{X}\bm{x}\bm{x}^{\mathsf{H}}\\
 & =\left(K+2\right)\left(\bm{I}_{K}-\bm{h}\bm{h}^{\mathsf{H}}\right)\bm{X}\bm{x}\bm{x}^{\mathsf{H}}.
\end{align*}
Hence, we have 
\[
\sigma_{\bm{Z}}:=\max\left\{ \left\Vert \sum_{j=1}^{m}\mathbb{\mathbb{E}}\big[\bm{H}_{j}^{\mathsf{H}}\bm{H}_{j}\big]\right\Vert ^{1/2},\left\Vert \sum_{j=1}^{m}\mathbb{\mathbb{E}}\big[\bm{H}_{j}\bm{H}_{j}^{\mathsf{H}}\big]\right\Vert ^{1/2}\right\} \leq\sqrt{3mK}.
\]
By the matrix Bernstein inequality \citet[Proposition 2]{koltchinskii2011nuclear},
one has 
\begin{equation}
\left\Vert \sum_{j=1}^{m}\bm{H}_{j}\right\Vert \lesssim\sigma_{\bm{Z}}\sqrt{\log m}+B_{\bm{Z}}\log\left(\frac{B_{\bm{Z}}\sqrt{m}}{\sigma_{\bm{Z}}}\right)\log m\lesssim\sqrt{mK\log m}.\label{eq:gaussian-projection-2}
\end{equation}

\paragraph{Controlling $\beta_3$.}
When being written in matrix form, one has $\bm{u}_{j}\bm{x}^{\mathsf{H}}\otimes\bm{h}\bm{v}_{j}^{\mathsf{H}}-\mathbb{E}[\bm{u}_{j}\bm{x}^{\mathsf{H}}\otimes\bm{h}\bm{v}_{j}^{\mathsf{H}}]$
is the conjugate transpose of $\bm{h}\bm{v}_{j}^{\mathsf{H}}\otimes\bm{u}_{j}\bm{x}^{\mathsf{H}}-\mathbb{E}[\bm{h}\bm{v}_{j}^{\mathsf{H}}\otimes\bm{u}_{j}\bm{x}^{\mathsf{H}}]$,
so that their norms are the same and (\ref{eq:gaussian-projection-2})
holds for $\bm{u}_{j}\bm{x}^{\mathsf{H}}\otimes\bm{h}\bm{v}_{j}^{\mathsf{H}}-\mathbb{E}[\bm{u}_{j}\bm{x}^{\mathsf{H}}\otimes\bm{h}\bm{v}_{j}^{\mathsf{H}}]$
as well.

\paragraph{Controlling $\beta_4$.}
For the last term $\beta_4$, we denote
\[
\bm{W}_{j}\coloneqq\bm{u}_{j}\bm{x}^{\mathsf{H}}\otimes\bm{u}_{j}\bm{x}^{\mathsf{H}}-\mathbb{E}\left[\bm{u}_{j}\bm{x}^{\mathsf{H}}\otimes\bm{u}_{j}\bm{x}^{\mathsf{H}}\right],
\]
which satisfies 
\begin{align*}
\left\Vert \bm{W}_{j}\right\Vert  & =\left\Vert \left\{ \overline{x_{i}}x_{k}\left|\left\langle \bm{x},\bm{a}_{j}\right\rangle \right|^{2}\left(\bm{I}_{K}-\bm{h}\bm{h}^{\mathsf{H}}\right)\bm{b}_{j}\bm{b}_{j}^{\mathsf{H}}\left(\bm{I}_{K}-\bm{h}\bm{h}^{\mathsf{H}}\right)-\overline{x_{i}}x_{k}\left\Vert \bm{x}\right\Vert _{2}^{2}\left(\bm{I}_{K}-\bm{h}\bm{h}^{\mathsf{H}}\right)\right\} _{i,k}\right\Vert \\
 & \overset{(\text{i})}{\leq}\left\Vert \left\{ \overline{x_{i}}x_{k}\left|\left\langle \bm{x},\bm{a}_{j}\right\rangle \right|^{2}\left(\bm{I}_{K}-\bm{h}\bm{h}^{\mathsf{H}}\right)\bm{b}_{j}\bm{b}_{j}^{\mathsf{H}}\left(\bm{I}_{K}-\bm{h}\bm{h}^{\mathsf{H}}\right)\right\} _{i,k}\right\Vert +\left\Vert \left\{ \overline{x_{i}}x_{k}\left\Vert \bm{x}\right\Vert _{2}^{2}\left(\bm{I}_{K}-\bm{h}\bm{h}^{\mathsf{H}}\right)\right\} _{i,k}\right\Vert \\
 & \leq\left\Vert \bm{I}_{K}-\bm{h}\bm{h}^{\mathsf{H}}\right\Vert ^{2}\left\Vert \bm{b}_{j}\bm{b}_{j}^{\mathsf{H}}\right\Vert \left|\left\langle \bm{x},\bm{a}_{j}\right\rangle \right|^{2}\left\Vert \bm{x}\right\Vert _{2}^{2}+\left\Vert \bm{I}_{K}-\bm{h}\bm{h}^{\mathsf{H}}\right\Vert \left\Vert \bm{x}\bm{x}^{\mathsf{H}}\right\Vert \left\Vert \bm{x}\right\Vert _{2}^{2}\\
 & \overset{(\text{ii})}{\leq}\left\Vert \bm{b}_{j}\right\Vert _{2}^{2}\left|\left\langle \bm{x},\bm{a}_{j}\right\rangle \right|^{2}+1. 
\end{align*}
Here, (i) is due to the triangle inequality, and (ii) applies $\Vert\bm{h}\Vert_{2}=\Vert\bm{x}\Vert_{2}=1$
and the fact that $\Vert\bm{I}_{K}-\bm{h}\bm{h}^{\mathsf{H}}\Vert\le1$.
It then follows that
\begin{align*}
\left\Vert \left\Vert \bm{W}_{j}\right\Vert \right\Vert _{\psi_{1}} & \leq\max_{1\leq j\leq m}\left|\left\langle \bm{x},\bm{a}_{j}\right\rangle \right|^{2}\cdot\left\Vert \left\Vert \bm{b}_{j}\right\Vert _{2}\right\Vert _{\psi_{2}}^{2}+1
 & \leq CK\log m,
\end{align*}
where the second inequality uses (\ref{eq:useful1}) and $\big\|\Vert\bm{b}_{j}\Vert_{2}\big\|_{\psi_{2}}\lesssim\sqrt{K}$
(cf.~\citet[Theorem 3.1.1]{vershynin2018high}). To calculate the
variance term, one observes that
\[
\bm{W}_{j}\left(\bm{X}\right)=\bm{W}_{j}^{\mathsf{H}}\left(\bm{X}\right)=\left|\left\langle \bm{h},\bm{b}_{j}\right\rangle \right|^{2}\bm{h}\bm{h}^{\mathsf{H}}\bm{X}\bm{a}_{j}\bm{a}_{j}^{\mathsf{H}}-\left\Vert \bm{h}\right\Vert _{2}^{2}\bm{h}\bm{h}^{\mathsf{H}}\bm{X},
\]
which gives
\[
\bm{W}_{j}^{\mathsf{H}}\bm{W}_{j}\left(\bm{X}\right)=\left|\left\langle \bm{h},\bm{b}_{j}\right\rangle \right|^{4}\left\Vert \bm{h}\right\Vert _{2}^{2}\left\Vert \bm{a}_{j}\right\Vert _{2}^{2}\bm{h}\bm{h}^{\mathsf{H}}\bm{X}\bm{a}_{j}\bm{a}_{j}^{\mathsf{H}}-2\left|\left\langle \bm{h},\bm{b}_{j}\right\rangle \right|^{2}\left\Vert \bm{h}\right\Vert _{2}^{4}\bm{h}\bm{h}^{\mathsf{H}}\bm{X}\bm{a}_{j}\bm{a}_{j}^{\mathsf{H}}+\left\Vert \bm{h}\right\Vert _{2}^{6}\bm{h}\bm{h}^{\mathsf{H}}\bm{X}.
\]
It is then seen that
\[
\mathbb{E}\left[\bm{W}_{j}^{\mathsf{H}}\bm{W}_{j}\left(\bm{X}\right)\right]=3\left(K+2\right)\left\Vert \bm{h}\right\Vert _{2}^{4}\bm{h}\bm{h}^{\mathsf{H}}\bm{X}-2\left\Vert \bm{h}\right\Vert _{2}^{6}\bm{h}\bm{h}^{\mathsf{H}}\bm{X}+\left\Vert \bm{h}\right\Vert _{2}^{6}\bm{h}\bm{h}^{\mathsf{H}}\bm{X}=\left(3K+5\right)\bm{h}\bm{h}^{\mathsf{H}}\bm{X} 
\]
and 
\[
\mathbb{E}\left[\bm{W}_{j}^{\mathsf{H}}\bm{W}_{j}\left(\bm{X}\right)\right]=\mathbb{E}\left[\bm{W}_{j}\bm{W}_{j}^{\mathsf{H}}\left(\bm{X}\right)\right]=\left(3K+5\right)\bm{h}\bm{h}^{\mathsf{H}}\bm{X}.
\]
Therefore, one has 
\[
\sigma_{\bm{Z}}:=\max\left\{ \left\Vert \sum_{j=1}^{m}\mathbb{\mathbb{E}}\big[\bm{W}_{j}^{\mathsf{H}}\bm{W}_{j}\big]\right\Vert ^{1/2},\left\Vert \sum_{j=1}^{m}\mathbb{\mathbb{E}}\big[\bm{W}_{j}\bm{W}_{j}^{\mathsf{H}}\big]\right\Vert ^{1/2}\right\} \leq\sqrt{\left(3K+5\right)m}.
\]
By the matrix Bernstein inequality \citet[Proposition 2]{koltchinskii2011nuclear},
one has 
\begin{equation}
\left\Vert \sum_{j=1}^{m}\bm{W}_{j}\right\Vert \lesssim\sigma_{\bm{Z}}\sqrt{\log m}+B_{\bm{Z}}\log\left(\frac{B_{\bm{Z}}\sqrt{m}}{\sigma_{\bm{Z}}}\right)\log m\lesssim\sqrt{mK\log m}.\label{eq:gaussian-projection-4}
\end{equation}

\paragraph{Putting all this together.}
Plugging (\ref{eq:gaussian-projection-1}), (\ref{eq:gaussian-projection-2})
and (\ref{eq:gaussian-projection-4}) into \eqref{eq:lemma33-1} yields that with probability at least $1-O(m^{-10})$, 
\[
\left\Vert \mathcal{P}_{T}\mathcal{A}^{*}\mathcal{A}\mathcal{P}_{T}-m\mathcal{P}_{T}\right\Vert \leq C\sqrt{mK\log m}
\]
holds for some large enough
constant $C>0$.

\section{Proof of Theorem \ref{thm:LB}\label{appendix:LB}}

The proof of this lower bound is rather standard, and hence we only
provide a proof sketch here. First of all, it suffices to consider
the case where $\bm{h},\bm{x}\in\mathbb{R}^{K}$. We assume that $\bm{h}^{\star}\sim\mathcal{N}(\bm{0},\bm{I}_{K})$
and suppose that there is an oracle informing us of $\bm{h}^{\star}$,
which reduces the problem to estimating $\bm{x}^{\star}$ from linear
measurements
\[
\bm{y}=\widetilde{\bm{A}}\bm{x}^{\star}+\bm{\xi},
\]
where $\widetilde{\bm{A}}\coloneqq[\widetilde{\bm{a}}_{1},\widetilde{\bm{a}}_{2},\cdots,\widetilde{\bm{a}}_{m}]^{\mathsf{H}}$
with $\widetilde{\bm{a}}_{j}=\overline{\bm{b}_{j}^{\mathsf{H}}\bm{h}^{\star}}\bm{a}_{j}$.
Denoting by $\widetilde{\bm{A}}_{\mathsf{re}}$ and $\widetilde{\bm{A}}_{\mathsf{im}}$
the real and the imaginary part of $\widetilde{\bm{A}}$, respectively,
the standard minimax risk results for linear regression (e.g.~\citet[Lemma 3.11]{candes2011tight})
gives
\begin{align}
\inf_{\widehat{\bm{x}}}\sup_{\bm{x}^{\star}\in\mathbb{C}^{K}}\mathbb{E}\left[\left\Vert \widehat{\bm{x}}-\bm{x}^{\star}\right\Vert _{2}^{2}\big|\bm{A}\right] & =\frac{1}{2}\sigma^{2}\left(\mathsf{tr}\left[\big(\widetilde{\bm{A}}_{\mathsf{re}}^{\top}\widetilde{\bm{A}}_{\mathsf{re}}\big)^{-1}\right]+\mathsf{tr}\left[\big(\widetilde{\bm{A}}_{\mathsf{im}}^{\top}\widetilde{\bm{A}}_{\mathsf{im}}\big)^{-1}\right]\right)\nonumber \\
 & \geq K\sigma^{2}/\max\left\{ \big\|\widetilde{\bm{A}}_{\mathsf{re}}\big\|^{2},\big\|\widetilde{\bm{A}}_{\mathsf{im}}\big\|^{2}\right\} ,\label{eq:LB-123}
\end{align}
where the infimum is over all estimator $\widehat{\bm{x}}$. It is
known from standard Gaussian concentration results that, with high
probability,
\[
\max\left\{ \big\|\widetilde{\bm{A}}_{\mathsf{re}}\big\|,\big\|\widetilde{\bm{A}}_{\mathsf{im}}\big\|\right\} \leq\left\{ \max_{1\leq j\leq m}\left|\bm{b}_{j}^{\mathsf{H}}\bm{h}^{\star}\right|\right\} \left\Vert \bm{A}\right\Vert \lesssim\sqrt{\frac{K}{m}\log m}\cdot\sqrt{m}\asymp\sqrt{K\log m},
\]
which together with \eqref{eq:LB-123} gives
\[
\inf_{\widehat{\bm{x}}}\sup_{\bm{x}^{\star}\in\mathbb{C}^{K}}\mathbb{E}\left[\left\Vert \widehat{\bm{x}}-\bm{x}^{\star}\right\Vert _{2}^{2}\big|\bm{A}\right]\gtrsim\sigma^{2}/\log m.
\]
In turn, this oracle lower bound implies that, with high probability,
\begin{align*}
\inf_{\widehat{\bm{Z}}}\sup_{\bm{Z}^{\star}\in\mathcal{M}^{\star}}\mathbb{E}\left[\big\|\widehat{\bm{Z}}-\bm{Z}^{\star}\big\|_{\mathrm{F}}^{2}\mid\bm{A}\right] & \gtrsim\inf_{\widehat{\bm{x}}}\sup_{\bm{x}^{\star}\in\mathbb{C}^{K}}\mathbb{E}\left[\big\|\bm{h}^{\star}\widehat{\bm{x}}^{\mathsf{H}}-\bm{h}^{\star}\bm{x}^{\star\mathsf{H}}\big\|_{\mathrm{F}}^{2}\mid\bm{A}\right]\asymp\inf_{\widehat{\bm{x}}}\sup_{\bm{x}^{\star}\in\mathbb{C}^{K}}\mathbb{E}\left[\left\Vert \widehat{\bm{x}}-\bm{x}^{\star}\right\Vert _{2}^{2}\left\Vert \bm{h}^{\star}\right\Vert _{2}^{2}\mid\bm{A}\right]\\
 & \gtrsim\sigma^{2}K/\log m.
\end{align*}
Similarly, for the second case, we assume that $\bm{h}^{\star}$ is
a unit vector and there is an oracle informing us of $\bm{h}^{\star}$.
Then we again relates the problem to estimating $\bm{x}^{\star}$
from linear measurements
\[
\bm{y}=\check{\bm{A}}\bm{x}^{\star}+\bm{\xi},
\]
where $\check{\bm{A}}\coloneqq[\check{\bm{a}}_{1},\check{\bm{a}}_{2},\cdots,\check{\bm{a}}_{m}]^{\mathsf{H}}$
with $\check{\bm{a}}_{j}=\overline{\bm{b}_{j}^{\mathsf{H}}\bm{h}^{\star}}\bm{a}_{j}$.
Denoting by $\check{\bm{A}}_{\mathsf{re}}$ and $\check{\bm{A}}_{\mathsf{im}}$
the real and the imaginary part of $\check{\bm{A}}$, respectively.
Similar to \eqref{eq:LB-123}, one has 
\begin{align}
\inf_{\widehat{\bm{x}}}\sup_{\bm{x}^{\star}\in\mathbb{C}^{K}}\mathbb{E}\left[\left\Vert \widehat{\bm{x}}-\bm{x}^{\star}\right\Vert _{2}^{2}\big|\bm{A},\bm{B}\right] & =\frac{1}{2}\sigma^{2}\left(\mathsf{tr}\left[\big(\check{\bm{A}}_{\mathsf{re}}^{\top}\check{\bm{A}}_{\mathsf{re}}\big)^{-1}\right]+\mathsf{tr}\left[\big(\check{\bm{A}}_{\mathsf{im}}^{\top}\check{\bm{A}}_{\mathsf{im}}\big)^{-1}\right]\right)\nonumber \\
 & \geq K\sigma^{2}/\max\left\{ \big\|\check{\bm{A}}_{\mathsf{re}}\big\|^{2},\big\|\check{\bm{A}}_{\mathsf{im}}\big\|^{2}\right\} ,\label{eq:gaussian-LB-123}
\end{align}
by the standard minimax risk results for linear regression (e.g.~\citet[Lemma 3.11]{candes2011tight}).
From standard Gaussian concentration, we have, with high probability,
\[
\max\left\{ \big\|\check{\bm{A}}_{\mathsf{re}}\big\|,\big\|\check{\bm{A}}_{\mathsf{im}}\big\|\right\} \leq\left\{ \max_{1\leq j\leq m}\left|\bm{b}_{j}^{\mathsf{H}}\bm{h}^{\star}\right|\right\} \left\Vert \bm{A}\right\Vert \lesssim\sqrt{\log m}\cdot\sqrt{m}\asymp\sqrt{m\log m},
\]
which taken collectively with \eqref{eq:gaussian-LB-123} gives
\[
\inf_{\widehat{\bm{x}}}\sup_{\bm{x}^{\star}\in\mathbb{C}^{K}}\mathbb{E}\left[\left\Vert \widehat{\bm{x}}-\bm{x}^{\star}\right\Vert _{2}^{2}\big|\bm{A},\bm{B}\right]\gtrsim\frac{\sigma^{2}K}{m\log m}.
\]
Hence, this oracle lower bound implies that, 
\begin{align*}
\inf_{\widehat{\bm{Z}}}\sup_{\bm{Z}^{\star}\in\mathcal{M}^{\star}}\mathbb{E}\left[\big\|\widehat{\bm{Z}}-\bm{Z}^{\star}\big\|_{\mathrm{F}}^{2}\mid\bm{A},\bm{B}\right] & \gtrsim\inf_{\widehat{\bm{x}}}\sup_{\bm{x}^{\star}\in\mathbb{C}^{K}}\mathbb{E}\left[\big\|\bm{h}^{\star}\widehat{\bm{x}}^{\mathsf{H}}-\bm{h}^{\star}\bm{x}^{\star\mathsf{H}}\big\|_{\mathrm{F}}^{2}\mid\bm{A},\bm{B}\right]\\
&\asymp\inf_{\widehat{\bm{x}}}\sup_{\bm{x}^{\star}\in\mathbb{C}^{K}}\mathbb{E}\left[\left\Vert \widehat{\bm{x}}-\bm{x}^{\star}\right\Vert _{2}^{2}\left\Vert \bm{h}^{\star}\right\Vert _{2}^{2}\mid\bm{A},\bm{B}\right]\\
 & \gtrsim\frac{\sigma^{2}K}{m\log m},
\end{align*}
with high probability.

\section{Auxiliary lemmas\label{appendix:auxiliary-lemmas}}

In this section, we collect several auxiliary lemmas that are useful
for the proofs of our main theorems. 

\begin{lemma}\label{lemma:useful}Consider any fixed vector $\bm{x}$
independent of $\{\bm{a}_{l}\}_{1\leq l\leq m}$. Then with probability
at least $1-O\left(m^{-100}\right)$, we have
\begin{equation}
\max_{1\leq l\leq m}\left|\bm{a}_{l}^{\mathsf{H}}\bm{x}\right|\leq20\sqrt{\log m}\left\Vert \bm{x}\right\Vert _{2}.\label{eq:useful1}
\end{equation}
Additionally, there exists some constant $C>0$ such that with probability
at least $1-O\left(me^{-CK}\right)$, we have
\begin{equation}
\max_{1\leq l\leq m}\left\Vert \bm{a}_{l}\right\Vert _{2}\leq10\sqrt{K}.\label{eq:useful2}
\end{equation}
\end{lemma}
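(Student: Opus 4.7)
}
Both claims are standard Gaussian concentration statements, so the plan is to establish the per-index tail bounds via direct moment/MGF calculations and then take a union bound over $l\in\{1,\ldots,m\}$.

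For the first inequality, I would begin by noting that since $\bm{a}_l\sim\mathcal{N}(\bm{0},\tfrac12\bm{I}_K)+i\mathcal{N}(\bm{0},\tfrac12\bm{I}_K)$, the inner product $\bm{a}_l^{\mathsf{H}}\bm{x}$ with the fixed vector $\bm{x}$ is a circularly symmetric complex Gaussian whose real and imaginary parts are independent $\mathcal{N}(0,\|\bm{x}\|_2^2/2)$; in particular $|\bm{a}_l^{\mathsf{H}}\bm{x}|^2/\|\bm{x}\|_2^2$ is exponential with mean one. The standard Rayleigh tail bound then gives
\[
\PP\bigl(|\bm{a}_l^{\mathsf{H}}\bm{x}|\geq t\|\bm{x}\|_2\bigr)=\exp(-t^2).
\]
Taking $t=20\sqrt{\log m}$ and a union bound over $l$ yields the claim with probability at least $1-m\cdot e^{-400\log m}=1-O(m^{-100})$.

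For the second inequality, the key observation is that $2\|\bm{a}_l\|_2^2$ is a chi-square random variable with $2K$ degrees of freedom, since each $2|a_{l,k}|^2$ is the sum of two independent squared standard normals. The standard Laurent--Massart tail bound (or a direct sub-exponential Bernstein estimate since $\|\bm{a}_l\|_2^2-K$ is sub-exponential with norm $\lesssim 1$) gives
\[
\PP\bigl(\|\bm{a}_l\|_2^2\geq 100K\bigr)\leq e^{-CK}
\]
for some absolute constant $C>0$. A union bound over $l=1,\ldots,m$ produces the claim with probability exceeding $1-O(me^{-CK})$.

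Neither step presents a serious obstacle, as both are textbook concentration results for complex Gaussian vectors; the only mild care needed is keeping track of the factor $\tfrac12$ in the variance convention so that the real/imaginary decomposition yields the correct exponential distribution for $|\bm{a}_l^{\mathsf{H}}\bm{x}|^2$ and the correct $\chi^2_{2K}$ law for $2\|\bm{a}_l\|_2^2$.
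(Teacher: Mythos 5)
Your proof is correct and follows essentially the same route as the paper, which simply cites "standard Gaussian concentration inequalities plus a union bound" for \eqref{eq:useful1} and Theorem 3.1.1 of Vershynin (sub-Gaussian norm concentration, equivalent to your $\chi^2_{2K}$ tail bound) for \eqref{eq:useful2}. Your explicit Rayleigh/exponential computation for the first bound and chi-square/Laurent--Massart argument for the second are just spelled-out versions of the same textbook facts.
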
\begin{proof}The first result follows from standard Gaussian
concentration inequalities as well as the union bound. The second
claim results from \citet[Theorem 3.1.1]{vershynin2018high}.\end{proof}


\begin{lemma}\label{lem:strengthen}Fix an arbitrarily small constant
$\epsilon>0$. Suppose that Assumption \ref{assumptions:models} holds and $m\geq C\mu^{2}K\log^{2}m/\epsilon^{2}$
for some sufficiently large constant $C>0$. Then one has 
\[
\left\Vert \mathcal{P}_{T}\mathcal{A}^{*}\mathcal{A}\mathcal{P}_{T}-\mathcal{P}_{T}\right\Vert \leq\epsilon,
\]
with probability exceeding $1-O(m^{-10})$.\end{lemma}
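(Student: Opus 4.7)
\medskip

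\noindent\textbf{Proof proposal.} The plan is to apply a matrix Bernstein inequality to a sum of i.i.d.\ centered random operators on the tangent space $T$. Using $\sum_{j=1}^m \bm b_j \bm b_j^{\mathsf H} = \bm I_K$, we first rewrite
\[
\mathcal{P}_T \mathcal{A}^* \mathcal{A} \mathcal{P}_T - \mathcal{P}_T \;=\; \sum_{j=1}^m \mathcal{Y}_j, \qquad \mathcal{Y}_j(\bm Z) \;:=\; \mathcal{P}_T\!\left[\bm b_j \bm b_j^{\mathsf H}\, \mathcal{P}_T(\bm Z)\, \big(\bm a_j \bm a_j^{\mathsf H} - \bm I_K\big)\right],
\]
so that $\mathbb{E}[\mathcal{Y}_j] = 0$ and the $\mathcal{Y}_j$'s are independent (since the $\bm a_j$'s are independent complex Gaussians). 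It then suffices to invoke a suitable Bernstein-type concentration bound for sums of independent random self-adjoint operators acting on the finite-dimensional space $T$.

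For the per-summand bound, note that $\mathcal{Y}_j$ has operator norm controlled by
\[
\|\mathcal{Y}_j\| \;\lesssim\; \|\bm b_j\|_2 \cdot \|\bm b_j^{\mathsf H} \mathcal{P}_T\|_{\mathrm{op}} \cdot \|\bm a_j\bm a_j^{\mathsf H} - \bm I_K\| \;\lesssim\; \frac{\mu}{\sqrt{m}}\cdot \sqrt{\frac{K}{m}}\cdot K,
\]
where the key ingredient is the incoherence bound $|\bm b_j^{\mathsf H} \bm h^\star| \leq \mu/\sqrt{m}$, which (together with $\|\bm b_j\|_2 = \sqrt{K/m}$) gives $\|\bm b_j^{\mathsf H} \mathcal{P}_T\|_{\mathrm{op}} \lesssim \mu/\sqrt{m}$. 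Since $\bm a_j$ is sub-Gaussian rather than bounded, we will first truncate $\bm a_j$ to the event $\{\|\bm a_j\|_2 \lesssim \sqrt{K}\}$ (which holds with probability $1-e^{-\Omega(K)}$ by Lemma~\ref{lemma:useful}) and separately estimate the small contribution from the tail. For the matrix variance, a direct expansion using Gaussian moment identities (e.g.\ $\mathbb{E}[\bm a_j\bm a_j^{\mathsf H} \bm M \bm a_j \bm a_j^{\mathsf H}] = \mathrm{tr}(\bm M)\bm I + \bm M + \bm M^{\mathsf H}$ in the complex case) yields
\[
\Big\|\sum_{j=1}^m \mathbb{E}[\mathcal{Y}_j^2]\Big\| \;\lesssim\; \frac{\mu^2 K}{m},
\]
where again the incoherence estimate $\max_j \|\bm b_j^{\mathsf H} \mathcal{P}_T\|_{\mathrm{op}}^2 \lesssim \mu^2/m$ is what drives the bound.

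Plugging these two estimates into the matrix Bernstein inequality, we obtain that with probability at least $1 - O(m^{-10})$,
\[
\Big\|\sum_{j=1}^m \mathcal{Y}_j\Big\| \;\lesssim\; \sqrt{\frac{\mu^2 K \log m}{m}} \;+\; \frac{\mu^2 K \log m}{m},
\]
which is at most $\epsilon$ as soon as $m \geq C\mu^2 K \log^2 m/\epsilon^2$ for a large enough constant $C$. (The $\log^2 m$ rather than $\log m$ factor in the hypothesis accommodates both the Bernstein tail and the truncation of the Gaussian vectors $\bm a_j$ at level $\sqrt{K\log m}$, which would otherwise enlarge the per-summand norm bound.)

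The main technical obstacle is handling the unboundedness of the sub-Gaussian factor $\bm a_j \bm a_j^{\mathsf H} - \bm I_K$: a naive application of matrix Bernstein for bounded summands fails, so we must either truncate and control the truncation remainder via a Cauchy--Schwarz argument analogous to the one leading to \eqref{eq:lemma8omega2} and \eqref{eq:lem-T-unif-4} in the proof of Lemma~\ref{lemma:T-uniform-mean}, or apply a matrix Bernstein variant for sub-exponential summands. Either route is standard, and the $\mu^2 K \log^2 m / \epsilon^2$ sample complexity arises precisely from balancing the variance term against the (logarithmically inflated) sup-norm term in the Bernstein bound.
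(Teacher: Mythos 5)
Your overall plan---apply matrix Bernstein, after a truncation step, to the centered sum $\sum_j\mathcal{Y}_j$---is the right one; the paper in fact proves nothing here, it simply invokes \cite[Section~5.2]{ahmed2013blind}, which is a matrix-Bernstein argument on the tangent space. The gap is the key estimate $\|\bm b_j^{\mathsf H}\mathcal{P}_T\|_{\mathrm{op}}\lesssim\mu/\sqrt m$, which is false. The tangent space $T$ at $\bm h^\star\bm x^{\star\mathsf H}$ contains every matrix of the form $\bm u\bm x^{\star\mathsf H}$ with $\bm u\in\mathbb{C}^K$ arbitrary, and for such a unit-Frobenius-norm element one has $\|\bm b_j^{\mathsf H}(\bm u\bm x^{\star\mathsf H})\|_2=|\bm b_j^{\mathsf H}\bm u|$, which only admits the Cauchy--Schwarz bound $\|\bm b_j\|_2\|\bm u\|_2=\sqrt{K/m}$; taking $\bm u\propto\bm b_j$ shows this is tight. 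The incoherence condition \eqref{eq:incoherence-condition} controls $|\bm b_j^{\mathsf H}\bm h^\star|$ only, not $|\bm b_j^{\mathsf H}\bm u|$, so the correct bound is $\|\bm b_j^{\mathsf H}\mathcal{P}_T\|_{\mathrm{op}}\asymp\sqrt{K/m}$, not $\mu/\sqrt m$.

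This error propagates in two places. With the honest $\sqrt{K/m}$ in your chain of inequalities, the per-summand bound becomes $\|\mathcal{Y}_j\|\lesssim(K/m)\cdot K\log m$ after truncation, which is too large by a factor of roughly $K/\mu$ and kills the Bernstein tail term. (Even taking your bound at face value, the sup-norm term should read $(\mu K^{3/2}/m)\log m$ rather than the $\mu^2K\log m/m$ you wrote, and in the regime $\mu\ll\sqrt K$ --- e.g.\ random $\bm h^\star$, where $\mu\asymp\sqrt{\log m}$ --- the hypothesis $m\ge C\mu^2K\log^2 m/\epsilon^2$ does not make that term $\le\epsilon$.) The variance bound $\lesssim\mu^2K/m$ that you state is in fact correct, but not for the reason you give: it requires retaining the outer $\mathcal{P}_T$ on the output of $\mathcal{Y}_j$ and splitting it according to its row and column projections. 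The row ($\bm h^\star$-direction) projection carries the incoherence factor $|\bm b_j^{\mathsf H}\bm h^\star|^2\lesssim\mu^2/m$; the column ($\bm x^\star$-direction) projection is where the Gaussian factor $\bm a_j\bm a_j^{\mathsf H}-\bm I_K$ contracts against the \emph{fixed} direction $\bm x^\star$, eliminating the dimensional factor $K$ present in $\mathbb{E}\,\|(\bm a_j\bm a_j^{\mathsf H}-\bm I_K)\bm w\|_2^2\asymp K\|\bm w\|_2^2$; combined with the tight-frame identity $\sum_j\bm b_j\bm b_j^{\mathsf H}=\bm I_K$, the total comes out to $O(\mu^2K/m)$. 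The same row/column split, applied also to the per-summand bound, is what actually makes the argument close at the claimed sample complexity --- this is the substance of the cited Section~5.2 of \cite{ahmed2013blind}, and it is the step your proposal skips.
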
\begin{proof}This
has been established in \citet[Section 5.2]{ahmed2013blind}. \end{proof}

\begin{lemma}\label{lemma:gaussian-M}Under Assumption \ref{assumptions:models-gausssian},
one has 
\[
\left\Vert \frac{1}{m}\sum_{j=1}^{m}y_{j}\bm{b}_{j}\bm{a}_{j}^{\mathsf{H}}-\bm{h}^{\star}\bm{x}^{\star\mathsf{H}}\right\Vert \lesssim\frac{\sqrt{mK\log^{2}m}}{m}+\frac{\sigma\sqrt{mK\log m}}{m},
\]
holds with probability over $1-O(m^{-10})$, as long as $m>CK\log^{5}m$
for some large enough constant $C>0$.\end{lemma}\begin{proof}See
Appendix \ref{subsec:Proof-of-Lemma-gaussian-M}. \end{proof}
\subsection{Proof of Lemma \ref{lemma:gaussian-noise}\label{subsec:Proof-of-Lemmagaussian-noise}}

By the definition of $\mathcal{A}^{*}$, we have
\begin{align*}
\mathcal{A}^{*}\left(\bm{\xi}\right) & =\sum_{j=1}^{m}\underbrace{\xi_{j}\bm{b}_{j}\bm{a}_{j}^{\mathsf{H}}\ind_{\left\{ \left|\xi_{j}\right|\leq C\sigma\log m\right\} }}_{\eqqcolon\bm{X}_{j}}+\sum_{j=1}^{m}\xi_{j}\bm{b}_{j}\bm{a}_{j}^{\mathsf{H}}\ind_{\left\{ \left|\xi_{j}\right|>C\sigma\log m\right\} }.
\end{align*}
Since 
\begin{align*}
\mathbb{P}\left(\min_{1\leq j\leq m}\left|\xi_{j}\right|>C\sigma\log m\right) & \leq\sum_{j=1}^{m}\mathbb{P}\left(\left|\xi_{j}\right|>C\sigma\log m\right)\\
 & \leq O\left(m^{-100}\right),
\end{align*}
 for sufficiently large constant $C>0$, we have with probability
exceeding $1-O(m^{-10})$, that 
\begin{equation}
\left\Vert \mathcal{A}^{*}\left(\bm{\xi}\right)\right\Vert =\left\Vert \sum_{j=1}^{m}\bm{X}_{j}\right\Vert .\label{eq:gaussian-noise-2}
\end{equation}
To bound $\Vert\sum_{j=1}^{m}\bm{X}_{j}\Vert$, we proceed by applying
the matrix Bernstein inequality \citet[Proposition 2]{koltchinskii2011nuclear}.
One has
\begin{align*}
B_{\bm{Z}} & \coloneqq\left\Vert \left\Vert \xi_{j}\bm{b}_{j}\bm{a}_{j}^{\mathsf{H}}\ind_{\left\{ \left|\xi_{j}\right|\leq C\sigma\log m\right\} }\right\Vert \right\Vert _{\psi_{1}}\\
 & =\left\Vert \left|\xi_{j}\ind_{\left\{ \left|\xi_{j}\right|\leq C\sigma\log m\right\} }\right|\left\Vert \bm{b}_{j}\right\Vert _{2}\left\Vert \bm{a}_{j}\right\Vert _{2}\right\Vert _{\psi_{1}}\\
 & \overset{(\text{i})}{\leq}C\sigma\log m\left\Vert \left\Vert \bm{b}_{j}\right\Vert _{2}\right\Vert _{\psi_{2}}\left\Vert \left\Vert \bm{a}_{j}\right\Vert _{2}\right\Vert _{\psi_{2}}\\
 & \overset{(\text{ii})}{\lesssim}C\sigma K\log m,
\end{align*}
where (i) uses \citet[Lemma 2.7.7]{vershynin2018high} and (ii) is
due to the facts that $\Vert\Vert\bm{a}_{j}\Vert_{2}\Vert_{\psi_{2}}\lesssim\sqrt{K}$
and $\Vert\Vert\bm{b}_{j}\Vert_{2}\Vert_{\psi_{2}}\lesssim\sqrt{K}$
(cf.~\citet[Theorem 3.1.1]{vershynin2018high}). Next, we turn to
control the variance term. One has 
\begin{align*}
\left\Vert \sum_{j=1}^{m}\mathbb{E}\left[\bm{X}_{j}\bm{X}_{j}^{\mathsf{H}}\right]\right\Vert  & =\left\Vert \sum_{j=1}^{m}\mathbb{E}\left[\left|\xi_{j}\right|^{2}\bm{b}_{j}\bm{a}_{j}^{\mathsf{H}}\bm{a}_{j}\bm{b}_{j}^{\mathsf{H}}\ind_{\left\{ \left|\xi_{j}\right|\leq C\sigma\log m\right\} }\right]\right\Vert \\
 & =\left\Vert \sum_{j=1}^{m}\mathbb{E}\left[\left|\xi_{j}\right|^{2}\ind_{\left\{ \left|\xi_{j}\right|\leq C\sigma\log m\right\} }\right]\mathbb{E}\left[\bm{b}_{j}\bm{b}_{j}^{\mathsf{H}}\right]\mathbb{E}\left[\bm{a}_{j}^{\mathsf{H}}\bm{a}_{j}\right]\right\Vert \\
 & \leq\sigma^{2}mK.
\end{align*}
Since $\{\bm{a}_{j}\}_{j=1}^{m}$ have the same distribution as $\{\bm{b}_{j}\}_{j=1}^{m}$,
$\Vert\sum_{j=1}^{m}\mathbb{E}[\bm{X}_{j}^{\mathsf{H}}\bm{X}_{j}]\Vert$
can be controlled in the same way as above. Then, we have
\[
\sigma_{\bm{Z}}\coloneqq\max\left\{ \left\Vert \sum_{j=1}^{m}\mathbb{E}\left[\bm{X}_{j}\bm{X}_{j}^{\mathsf{H}}\right]\right\Vert ^{1/2},\left\Vert \sum_{j=1}^{m}\mathbb{E}\left[\bm{X}_{j}^{\mathsf{H}}\bm{X}_{j}\right]\right\Vert ^{1/2}\right\} \leq\sigma\sqrt{mK}.
\]
Now we are ready to invoke \citet[Proposition 2]{koltchinskii2011nuclear}
to derive that with probability over $1-O(m^{-20})$, there holds
\begin{equation}
\left\Vert \sum_{j=1}^{m}\bm{X}_{j}\right\Vert \lesssim\sigma_{\bm{Z}}\sqrt{\log m}+B_{\bm{Z}}\log\left(\frac{B_{\bm{Z}}\sqrt{m}}{\sigma_{\bm{Z}}}\right)\log m\lesssim\sigma\sqrt{mK\log m},\label{eq:gaussian-noise-1}
\end{equation}
where the last inequality holds as long as $m\gg K\log^{5}m$. Taking
(\ref{eq:gaussian-noise-1}) collectively with (\ref{eq:gaussian-noise-2}),
one has

\[
\left\Vert \mathcal{A}^{*}\left(\bm{\xi}\right)\right\Vert =\left\Vert \sum_{j=1}^{m}\bm{X}_{j}\right\Vert \lesssim\sigma\sqrt{mK\log m},
\]
holds with probability exceeding $1-O(m^{-10})$.

\subsection{Proof of Lemma \ref{lemma:gaussian-M}\label{subsec:Proof-of-Lemma-gaussian-M}}

Denote by $\bm{M}=\frac{1}{m}\sum_{j=1}^{m}y_{j}\bm{b}_{j}\bm{a}_{j}^{\mathsf{H}}$.
Then we have
\begin{align}
\left\Vert \bm{M}-\mathbb{E}\left[\bm{M}\right]\right\Vert  & =\left\Vert \frac{1}{m}\sum_{j=1}^{m}y_{j}\bm{b}_{j}\bm{a}_{j}^{\mathsf{H}}-\bm{h}^{\star}\bm{x}^{\star\mathsf{H}}\right\Vert \nonumber \\
 & \leq\frac{1}{m}\left\Vert \sum_{j=1}^{m}\bm{b}_{j}\bm{b}_{j}^{\mathsf{H}}\bm{h}^{\star}\bm{x}^{\star\mathsf{H}}\bm{a}_{j}\bm{a}_{j}^{\mathsf{H}}-m\bm{h}^{\star}\bm{x}^{\star\mathsf{H}}\right\Vert +\frac{1}{m}\left\Vert \sum_{j=1}^{m}\xi_{j}\bm{b}_{j}\bm{a}_{j}^{\mathsf{H}}\right\Vert .\label{eq:gaussian-base}
\end{align}
The second term can be bounded by Lemma \ref{lemma:gaussian-noise}.
We are left to control the first term. 

In view of (\ref{eq:useful2}),
one has
\begin{align}
 & \sum_{j=1}^{m}\bm{b}_{j}\bm{b}_{j}^{\mathsf{H}}\bm{h}^{\star}\bm{x}^{\star\mathsf{H}}\bm{a}_{j}\bm{a}_{j}^{\mathsf{H}}-m\bm{h}^{\star}\bm{x}^{\star\mathsf{H}}=  \sum_{j=1}^{m}\bm{b}_{j}\bm{b}_{j}^{\mathsf{H}}\bm{h}^{\star}\bm{x}^{\star\mathsf{H}}\bm{a}_{j}\bm{a}_{j}^{\mathsf{H}}\ind_{\left\{ \left|\bm{a}_{j}^{\mathsf{H}}\bm{x}^{\star}\bm{b}_{j}^{\mathsf{H}}\bm{h}^{\star}\right|\leq\left(20\sqrt{\log m}\right)^{2}\right\} }-m\bm{h}^{\star}\bm{x}^{\star\mathsf{H}},\label{eq:gaussian-M-truncate}
\end{align}
holds with probability over $1-O(m^{-100})$. 

\paragraph{Concentration.}
For any fixed unit vectors
$\bm{u}$ and $\bm{v}$, define 
\[
Z_{j}\coloneqq\bm{u}^{\mathsf{H}}\bm{b}_{j}\bm{b}_{j}^{\mathsf{H}}\bm{h}^{\star}\bm{x}^{\star\mathsf{H}}\bm{a}_{j}\bm{a}_{j}^{\mathsf{H}}\bm{v}\ind_{\left\{ \left|\bm{a}_{j}^{\mathsf{H}}\bm{x}^{\star}\bm{b}_{j}^{\mathsf{H}}\bm{h}^{\star}\right|\leq\left(20\sqrt{\log m}\right)^{2}\right\} }.
\]
Then we invoke the Bernstein inequality \citet[Theorem 2.8.2]{vershynin2018high}
to control $\Vert\sum_{j=1}^{m}(Z_{j}-\mathbb{E}[Z_{j}])\Vert$. We
have 
\[
\Big\| Z_{j}-\mathbb{E}\left[Z_{j}\right]\Big\|_{\psi_{1}}\leq C\left\Vert Z_{j}\right\Vert _{\psi_{1}}\leq400C\log m\left\Vert \bm{u}^{\mathsf{H}}\bm{b}_{j}\right\Vert _{\psi_{2}}\left\Vert \bm{a}_{j}^{\mathsf{H}}\bm{v}\right\Vert _{\psi_{2}}\lesssim\log m.
\]
Here, we have used $\Vert X-\mathbb{E}[X]\Vert_{\psi_{1}}\leq C\left\Vert X\right\Vert _{\psi_{1}}$
(cf.~\citet[Section 2.7]{vershynin2018high}). Then the Bernstein inequality
\citet[Theorem 2.8.2]{vershynin2018high} allows us to derive that
\[
\mathbb{P}\left(\left|\sum_{j=1}^{m}\left(Z_{j}-\mathbb{E}\left[Z_{j}\right]\right)\right|\geq t\right)\leq2\exp\left(-c\min\left(\frac{t^{2}}{m\log^{2}m},\frac{t}{\log m}\right)\right).
\]
Letting $t=C_{t}\sqrt{mK}\log m$ for some large enough constant $C_{t}>0$,
we obtain that 
\begin{equation}
	\left|\sum_{j=1}^{m}\left(X_{j}-\mathbb{E}\left[X_{j}\right]\right)\right|\leq C_{t}\sqrt{mK}\log m, \label{eq:lemma40-1}
\end{equation}
holds with probability exceeding $1-2\exp(-cC_{t}^{2}K)$. 

\paragraph{Union bound.}
Next, we
define $\mathcal{N}_{0}$ an $\epsilon$-net of the unit sphere $\mathcal{S}^{K-1}$.
In view of \citet[Corollary 4.2.13]{vershynin2018high}, we have 
\[
\left|\mathcal{N}_{0}\right|\leq\left(1+\frac{2}{\epsilon}\right)^{2K}.
\]
Taking this collectively with the union bound yields that \eqref{eq:lemma40-1} holds uniformly for any $\bm{x}\in\mathcal{N}_{\bm{x}}$ and $\bm{u}$,
$\bm{v}\in\mathcal{N}_{0}$ with probability over
\[
1-\left(1+\frac{2}{\epsilon}\right)^{4K}\cdot2\exp\left(-cC_{t}^{2}K\right)\geq1-2\exp\left(-CK\log m\right).
\]

\paragraph{Approximation. }
Then, for any $\bm{u}$, $\bm{v}\in\mathcal{S}^{K-1}$, one can choose $\bm{u}_{0}\in\mathcal{N}_{0}$
and $\bm{v}_{0}\in\mathcal{N}_{0}$ satisfying $\max\{\Vert\bm{u}-\bm{u}_{0}\Vert_{2},\Vert\bm{v}-\bm{v}_{0}\Vert_{2}\}\leq\epsilon_{2}$.
Let
\[
g\left(\bm{u},\bm{v}\right)\coloneqq\sum_{j=1}^{m}\left[\bm{u}^{\mathsf{H}}\bm{b}_{j}\bm{b}_{j}^{\mathsf{H}}\bm{h}^{\star}\bm{x}^{\star}{}^{\mathsf{H}}\bm{a}_{j}\bm{a}_{j}^{\mathsf{H}}\bm{v}\ind_{\left\{ \left|\bm{a}_{j}^{\mathsf{H}}\bm{x}^{\star}\bm{b}_{j}^{\mathsf{H}}\bm{h}^{\star}\right|\leq\left(20\sqrt{\log m}\right)^{2}\right\} }-m\bm{u}^{\mathsf{H}}\bm{h}^{\star}\bm{x}^{\star}{}^{\mathsf{H}}\bm{v}\right].
\]
Set $\epsilon=1/4$. By triangle inequality, one has 
\begin{align*}
\left|g\left(\bm{u},\bm{v}\right)-g\left(\bm{u}_{0},\bm{v}_{0}\right)\right| & \leq\left|g\left(\bm{u},\bm{v}\right)-g\left(\bm{u}_{0},\bm{v}\right)\right|+\left|g\left(\bm{u}_{0},\bm{v}\right)-g\left(\bm{u}_{0},\bm{v}_{0}\right)\right|\\
 & \leq2\epsilon\left\Vert \sum_{j=1}^{m}\bm{b}_{j}\bm{b}_{j}^{\mathsf{H}}\bm{h}^{\star}\bm{x}^{\star}{}^{\mathsf{H}}\bm{a}_{j}\bm{a}_{j}^{\mathsf{H}}\bm{v}\ind_{\left\{ \left|\bm{a}_{j}^{\mathsf{H}}\bm{x}^{\star}\bm{b}_{j}^{\mathsf{H}}\bm{h}^{\star}\right|\leq\left(20\sqrt{\log m}\right)^{2}\right\} }-m\bm{h}^{\star}\bm{x}^{\star}{}^{\mathsf{H}}\right\Vert .
\end{align*}

Considering $g(\bm{u}_{0},\bm{v}_{0})$, let 
\[Z_j\coloneqq\bm{u}^{\mathsf{H}}\bm{b}_{j}\bm{b}_{j}^{\mathsf{H}}\bm{h}^{\star}\bm{x}^{\star}{}^{\mathsf{H}}\bm{a}_{j}\bm{a}_{j}^{\mathsf{H}}\bm{v}\ind_{\left\{ \left|\bm{a}_{j}^{\mathsf{H}}\bm{x}^{\star}\bm{b}_{j}^{\mathsf{H}}\bm{h}^{\star}\right|\leq\left(20\sqrt{\log m}\right)^{2}\right\} }.\] One has
\begin{align*}
 & \left|g\left(\bm{u}_{0},\bm{v}_{0}\right)\right|\\
 & \quad\leq\left|\sum_{j=1}^{m}\left(Z_{j}-\mathbb{E}\left[Z_{j}\right]\right)\right|+\left|\sum_{j=1}^{m}\left(\mathbb{E}\left[Z_{j}\right]-m\bm{u}^{\mathsf{H}}\bm{h}^{\star}\bm{x}^{\star}{}^{\mathsf{H}}\bm{v}\right)\right|\\
 &\quad \leq C_{t}\sqrt{mK}\log m+\left|\sum_{j=1}^{m}\mathbb{E}\left[\bm{u}^{\mathsf{H}}\bm{b}_{j}\bm{b}_{j}^{\mathsf{H}}\bm{h}^{\star}\bm{x}^{\star}{}^{\mathsf{H}}\bm{a}_{j}\bm{a}_{j}^{\mathsf{H}}\bm{v}\ind_{\left\{ \left|\bm{a}_{j}^{\mathsf{H}}\bm{x}^{\star}\bm{b}_{j}^{\mathsf{H}}\bm{h}^{\star}\right|\leq\left(20\sqrt{\log m}\right)^{2}\right\} }\right]\right|\\
 &\quad \leq C_{t}\sqrt{mK}\log m+\sum_{j=1}^{m}\left|\mathbb{E}\left[\bm{u}^{\mathsf{H}}\bm{b}_{j}\bm{b}_{j}^{\mathsf{H}}\bm{h}^{\star}\bm{x}^{\star}{}^{\mathsf{H}}\bm{a}_{j}\bm{a}_{j}^{\mathsf{H}}\bm{v}\ind_{\left\{ \left|\bm{a}_{j}^{\mathsf{H}}\bm{x}^{\star}\bm{b}_{j}^{\mathsf{H}}\bm{h}^{\star}\right|\leq\left(20\sqrt{\log m}\right)^{2}\right\} }\right]\right| \\
 & \quad \leq 2C_{t}\sqrt{mK}\log m, 
\end{align*}
where we use \eqref{eq:lemma40-1} and 
\begin{align*}
 & \left|\mathbb{E}\left[\bm{u}^{\mathsf{H}}\bm{b}_{j}\bm{b}_{j}^{\mathsf{H}}\bm{h}^{\star}\bm{x}^{\star}{}^{\mathsf{H}}\bm{a}_{j}\bm{a}_{j}^{\mathsf{H}}\bm{v}\ind_{\left\{ \left|\bm{a}_{j}^{\mathsf{H}}\bm{x}^{\star}\bm{b}_{j}^{\mathsf{H}}\bm{h}^{\star}\right|\leq\left(20\sqrt{\log m}\right)^{2}\right\} }\right]\right|\\
&\quad\leq  \sqrt{\mathbb{E}\left[\left(\bm{u}^{\mathsf{H}}\bm{b}_{j}\bm{b}_{j}^{\mathsf{H}}\bm{h}^{\star}\bm{x}^{\star}{}^{\mathsf{H}}\bm{a}_{j}\bm{a}_{j}^{\mathsf{H}}\bm{v}\right)^{2}\right]\mathbb{P}\left(\left|\bm{a}_{j}^{\mathsf{H}}\bm{x}^{\star}\bm{b}_{j}^{\mathsf{H}}\bm{h}^{\star}\right|\leq\left(20\sqrt{\log m}\right)^{2}\right)}\\
&\quad\leq  O\left(m^{-100}\right).
\end{align*}
Hence we have 
\[
\left|g\left(\bm{u}_{0},\bm{v}_{0}\right)\right|\leq2C_{t}\sqrt{mK}\log m.
\]

\paragraph{Putting all this together.}
It then follows that
\begin{align*}
\left|g\left(\bm{u},\bm{v}\right)\right| & \leq\left|g\left(\bm{u}_{0},\bm{v}_{0}\right)\right|+\left|g\left(\bm{u},\bm{v}\right)-g\left(\bm{u}_{0},\bm{v}_{0}\right)\right|\\
 & \leq2C_{t}\sqrt{mK}\log m+2\epsilon\left\Vert \sum_{j=1}^{m}\bm{b}_{j}\bm{b}_{j}^{\mathsf{H}}\bm{h}^{\star}\bm{x}^{\star}{}^{\mathsf{H}}\bm{a}_{j}\bm{a}_{j}^{\mathsf{H}}\bm{v}\ind_{\left\{ \left|\bm{a}_{j}^{\mathsf{H}}\bm{x}^{\star}\bm{b}_{j}^{\mathsf{H}}\bm{h}^{\star}\right|\leq\left(20\sqrt{\log m}\right)^{2}\right\} }-m\bm{h}^{\star}\bm{x}^{\star}{}^{\mathsf{H}}\right\Vert .
\end{align*}
Taking maximum over $\bm{u}$ and $\bm{v}$ on the left side yields
that 
\begin{align*}
\max_{\bm{u},\bm{v}\in\mathcal{S}^{K-1}}\left|g\left(\bm{u},\bm{v}\right)\right| & =\left\Vert \sum_{j=1}^{m}\bm{b}_{j}\bm{b}_{j}^{\mathsf{H}}\bm{h}^{\star}\bm{x}^{\star\mathsf{H}}\bm{a}_{j}\bm{a}_{j}^{\mathsf{H}}\ind_{\left\{ \left|\bm{a}_{j}^{\mathsf{H}}\bm{x}^{\star}\bm{b}_{j}^{\mathsf{H}}\bm{h}^{\star}\right|\leq\left(20\sqrt{\log m}\right)^{2}\right\} }-m\bm{h}^{\star}\bm{x}^{\star\mathsf{H}}\right\Vert \\
 & \leq2C_{t}\sqrt{mK}\log m+2\epsilon\left\Vert \sum_{j=1}^{m}\bm{b}_{j}\bm{b}_{j}^{\mathsf{H}}\bm{h}^{\star}\bm{x}^{\star\mathsf{H}}\bm{a}_{j}\bm{a}_{j}^{\mathsf{H}}\ind_{\left\{ \left|\bm{a}_{j}^{\mathsf{H}}\bm{x}^{\star}\bm{b}_{j}^{\mathsf{H}}\bm{h}^{\star}\right|\leq\left(20\sqrt{\log m}\right)^{2}\right\} }-m\bm{h}^{\star}\bm{x}^{\star\mathsf{H}}\right\Vert .
\end{align*}
Rearranging terms and recalling $\epsilon=1/4$ give rise to
\begin{equation}
\left\Vert \sum_{j=1}^{m}\bm{b}_{j}\bm{b}_{j}^{\mathsf{H}}\bm{h}^{\star}\bm{x}^{\star}{}^{\mathsf{H}}\bm{a}_{j}\bm{a}_{j}^{\mathsf{H}}\bm{v}\ind_{\left\{ \left|\bm{a}_{j}^{\mathsf{H}}\bm{x}^{\star}\bm{b}_{j}^{\mathsf{H}}\bm{h}^{\star}\right|\leq\left(20\sqrt{\log m}\right)^{2}\right\} }-m\bm{h}^{\star}\bm{x}^{\star}{}^{\mathsf{H}}\right\Vert \leq4C_{t}\sqrt{mK}\log m.\label{eq:gaussian-M-truncate-2}
\end{equation}
Taking (\ref{eq:gaussian-M-truncate}) with (\ref{eq:gaussian-M-truncate-2})
collectively yields that
\begin{equation}
\left\Vert \sum_{j=1}^{m}\bm{b}_{j}\bm{b}_{j}^{\mathsf{H}}\bm{h}^{\star}\bm{x}^{\star}{}^{\mathsf{H}}\bm{a}_{j}\bm{a}_{j}^{\mathsf{H}}-m\bm{h}^{\star}\bm{x}^{\star}{}^{\mathsf{H}}\right\Vert \leq4C_{t}\sqrt{mK}\log m,\label{eq:gaussian-M-second-term}
\end{equation}
holds with probability at least $1-O(\exp(-CK\log m)+m^{-100})$.
 Plugging (\ref{eq:gaussian-M-second-term}) and (\ref{eq:gaussian-noise})
into (\ref{eq:gaussian-base}) gives the desired conclusion.

\end{document}